\def\eqref#1{equation~\ref{#1}}
\def\1{\bm{1}}
\def\rvepsilon{{\bm{\epsilon}}}
\def\vone{{\bm{1}}}
\def\vtheta{{\bm{\theta}}}
\def\va{{\bm{a}}}
\def\vd{{\bm{d}}}
\def\mA{{\bm{A}}}
\def\mB{{\bm{B}}}
\def\mC{{\bm{C}}}
\def\mD{{\bm{D}}}
\def\mE{{\bm{E}}}
\def\mH{{\bm{H}}}
\def\mI{{\bm{I}}}
\def\mK{{\bm{K}}}
\def\mL{{\bm{L}}}
\def\mM{{\bm{M}}}
\def\mN{{\bm{N}}}
\def\mP{{\bm{P}}}
\def\mR{{\bm{R}}}
\def\mS{{\bm{S}}}
\def\mW{{\bm{W}}}
\def\mX{{\bm{X}}}
\def\mY{{\bm{Y}}}
\def\mZ{{\bm{Z}}}
\def\mBeta{{\bm{\beta}}}
\def\malpha{{\bm{\alpha}}}
\DeclareMathAlphabet{\mathsfit}{\encodingdefault}{\sfdefault}{m}{sl}
\SetMathAlphabet{\mathsfit}{bold}{\encodingdefault}{\sfdefault}{bx}{n}
\def\gA{{\mathcal{A}}}
\def\gF{{\mathcal{F}}}
\def\gG{{\mathcal{G}}}
\def\gH{{\mathcal{H}}}
\def\gL{{\mathcal{L}}}
\def\gN{{\mathcal{N}}}
\def\gU{{\mathcal{U}}}
\def\gW{{\mathcal{W}}}
\def\gX{{\mathcal{X}}}
\def\gY{{\mathcal{Y}}}
\def\gZ{{\mathcal{Z}}}
\newcommand{\E}{\mathbb{E}}
\newcommand{\R}{\mathbb{R}}
\newcommand{\bboz}[1]{{\color{blue} #1}}
\DeclareMathOperator*{\argmin}{arg\,min}
\newcommand{\dimitri}[1]{%
\ifmmode
\text{\textcolor{olive}{[Dim: #1]}}
\else
\textcolor{olive}{[Dim: #1]}
\fi
}
\newtheorem{theorem}{Theorem}[section]
\newtheorem{lemma}[theorem]{Lemma}
\newtheorem{definition}[theorem]{Definition}
\newtheorem{proposition}[theorem]{Proposition}
\newtheorem{corollary}[theorem]{Corollary}
\newtheorem{remark}[theorem]{Remark}
\newtheorem{assumption}[theorem]{Assumption}
\newtheorem{algorithm_state}[theorem]{Algorithm}
\newtheorem*{algorithm_state*}{Algorithm}
\begin{document}

%

%

\twocolumn[

\aistatstitle{Density Ratio-based Proxy Causal Learning Without Density Ratios}


\aistatsauthor{ Bariscan Bozkurt \And Ben Deaner \And  Dimitri Meunier}

\aistatsaddress{ Gatsby Computational Neuroscience Unit\And  University College London\And Gatsby Computational Neuroscience Unit } 

\aistatsauthor{  Liyuan Xu \And Arthur Gretton}

\aistatsaddress{ Gatsby Computational Neuroscience Unit\And Gatsby Computational Neuroscience Unit } 

\runningauthor{Bariscan Bozkurt, Ben Deaner, Dimitri Meunier, Liyuan Xu, Arthur Gretton}
]

\begin{abstract}
We address the setting of Proxy Causal Learning (PCL), which has the goal of estimating causal effects from observed data in the presence of hidden confounding. Proxy methods accomplish this task using two proxy variables related to the latent confounder: a treatment proxy (related to the treatment) and an outcome proxy (related to the outcome). Two approaches have been proposed to perform causal effect estimation given proxy variables; however only one of these has found mainstream acceptance, since the other was understood to require density ratio estimation - a challenging task in high dimensions. In the present work, we propose a practical and effective implementation of the second approach, which bypasses explicit density ratio estimation and is suitable for continuous and high-dimensional treatments. We employ kernel ridge regression to derive estimators, resulting in simple closed-form solutions for dose-response and conditional dose-response curves, along with consistency guarantees. Our methods empirically demonstrate superior or comparable performance to existing frameworks on synthetic and real-world datasets.  \looseness=-1 
\end{abstract}

\section{INTRODUCTION}

\begin{figure}[ht!]
\vspace{.3in}
\centering
\includegraphics[trim = {0cm 0cm 0cm 0cm},clip,width=0.37\textwidth]{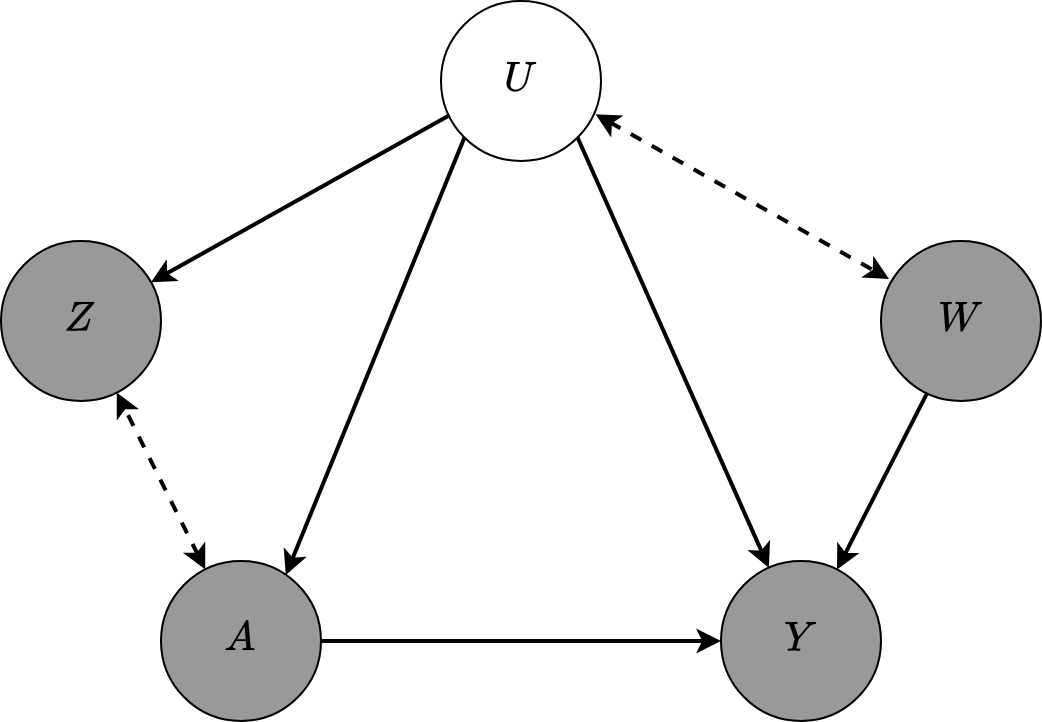}
\vspace{.3in}
\caption{An instance of a Directed Acyclic Graph (DAG) for the PCL setting, which satisfies the required Assumption (\ref{assumption:proxy}). In this graph, the gray circles denote the observed variables: $A$ denotes the treatment, $Y$ denotes the outcome, $Z$ denotes the treatment proxy, and $W$ denotes the outcome proxy. The white circle denotes the unobserved confounding variable $U$. 
Bi-directional dotted arrows indicate that either direction in the DAG is possible, or that both variables may share a common ancestor.}
\label{fig:ProxyCausalDAG}
\vspace{-0.5cm}
\end{figure}

Causal inference aims to measure the impact of interventions on real-world outcomes, a crucial task across various scientific disciplines.  Examples include assessing how changes in flight ticket prices will alter consumer demand \citep{Blundell_measuringPriceResponsiveness}, the consequences of grade retention on students' cognitive development \citep{Fruehwirth_GradeRetentions}, the impact of medical treatments on patient health \citep{The_effectiveness_of_right_heart_catheterization, CHOI20021173}, and evaluating policies such as Job Corps \citep{Schochet_JobCorps}. In this context, the intervention is referred to as a \emph{treatment}, which affects the \emph{outcome}. However, estimating the causal relationship between treatment and outcome is a challenging task due to confounding variables - factors that influence both the treatment and the outcome - potentially leading to spurious correlations.

One widely used assumption is that no unobserved confounding variable exists \citep{peaRob95}, which allows causal effect to be estimated via regression or backdoor adjustment \citep{Pearl_2009}. Although there are numerous methods relying on this assumption, e.g. \citep{Hill_BayesianNonparametric, pmlr_v48_johansson16, NEURIPS2018_a50abba8, RahulKernelCausalFunctions}, it is often restrictive since it requires measuring all the covariates that account for the confounding variables. Recently, a growing literature has explored a milder assumption: the availability of proxy variables to the latent confounding variables. \citet{Miao2018Identifying} demonstrated in the \emph{Proxy Causal Learning} (PCL) framework that two proxy variables - a treatment proxy (possibly directly causally linked to the treatment) and an outcome proxy (possibly directly causally linked to the outcome) - are sufficient for recovering the underlying causal relation by utilizing an \emph{outcome bridge function}, without the need to explicitly recover the confounder \citep[unlike][who explicitly recover the confounder in the discrete-valued categorical setting]{Kuroki2014Mesurement}. The corresponding causal graph is illustrated in Figure \ref{fig:ProxyCausalDAG}. An alternative line of research has proposed methods for causal effect estimation by leveraging a \emph{treatment bridge function} for discrete treatment \citep{semiparametricProximalCausalInference} and continuous treatment \citep{deaner2023proxycontrolspaneldata,wu2024doubly}, and our approach follows this setting. Specifically, \citet{semiparametricProximalCausalInference,deaner2023proxycontrolspaneldata} introduced a bridge function $\varphi_0(Z, a)$ that satisfies $\E[\varphi_0(Z, a) | W, A = a] = 1/p(A = a | W)$, where $Z$ denotes the treatment proxy, $A$ denotes the treatment, and $W$ denotes the outcome proxy, as illustrated in Figure \ref{fig:ProxyCausalDAG}. They showed that the dose-response can be identified through the expectation $\E[Y \varphi_0(Z, A) \mathbf{1}[A = a]]$. 
While the method of \citet{semiparametricProximalCausalInference} is limited to discrete treatments,
 \cite{wu2024doubly} address the case of continuous treatments by replacing the indicator function with a kernel function $K(A - a)$, yielding a dose-response estimator of the form $\E[Y \varphi_0(Z, a) K(A-a)]$; and by using a kernel density estimation {or conditional normalizing flows} to approximate $p(A = a | W)$. Kernel density estimates can converge slowly when the treatment and proxy variables are high-dimensional, however \citep[see e.g.][]{Wasserman06}. 
 
 In the present work, we propose a treatment proxy approach that eliminates the need for explicit density estimation. By leveraging a slightly different bridge function, we simplify the loss function {by decomposing it into terms that depend on distinct distributions, akin to the approach of \cite{Kanamori_LeastSquareImportance}}, allowing us to express all quantities of interest in terms of inner products in \emph{reproducing kernel Hilbert Spaces} (RKHS), removing the need for density ratio estimation. We further extend our approach to the conditional dose-response curve. In summary, our main contributions are as follows:
\begin{itemize}
    \item We propose a novel family of kernel-based algorithms to estimate causal functions in the PCL setting, using a treatment bridge function;
    \item Our RKHS formulation  allows us to provide closed-form expressions for causal effect estimation, including for continuous and high-dimensional treatments;
    \item We prove the consistency of proposed estimators;
    \item We demonstrate that our treatment proxy approach matches or outperforms existing PCL algorithms.
\end{itemize}

The paper is organized as follows: Section (\ref{sec:RelatedWorks}) reviews related work, Section (\ref{sec:AlternativeProxyIdentification}) presents the problem definition and identification results, and Section (\ref{sec:AlternativeProxyAlgorithms}) outlines our estimation algorithms. Consistency results are in Section (\ref{sec:Consistency}), followed by experiments in Section (\ref{sec:NumericalExperiments}).

\section{RELATED WORK}
\label{sec:RelatedWorks}

Proxy causal learning  \citep{Miao2018Identifying,deaner2023proxycontrolspaneldata}, building on the seminal work of \citet{Kuroki2014Mesurement}, tackles the problem of unmeasured confounding  by using two types of proxy variables, namely \emph{the treatment proxy} and \emph{the outcome proxy}. Given these proxies, there are two approaches to obtain the causal effect. One approach is to estimate an \emph{outcome bridge function}. This is a function of the outcome proxy, and we can obtain the causal effect by integrating the outcome bridge over the (observed) outcome proxy. \citet{Miao2018Identifying} show that the outcome bridge function is the solution of an inverse problem known as a Fredholm integral equation of the first kind \citep{linearIntegralEquationskress2013}. Although it is ill-posed in general, a number of methods have been proposed to solve it by limiting the functional space, including sieve bases \citep{deaner2023proxycontrolspaneldata}, RKHSs \citep{Mastouri2021ProximalCL,singh2023kernelmethodsunobservedconfounding} and neural networks \citep{xu2021deep,kompa2022deep,Kallus2021Causal}. The other approach, known as \emph{alternative PCL}, considers a \emph{treatment bridge function}. This is a function of the treatment proxy, which can be used as the adjustment weights in estimating causal effects, similar to the inverse propensity score  \citep{rosRub83}. Such a function can be obtained by solving another Fredholm integral equation \citep{deaner2023proxycontrolspaneldata,semiparametricProximalCausalInference}, but it is more challenging since it involves the conditional density function.  Recently, \citet{wu2024doubly} proposed a ``plug-in'' approach, which explicitly performs density estimation in obtaining a treatment bridge. 
However, conditional density estimation is costly and suffers from slow convergence when the treatment is high-dimensional.
Instead, we propose to bypass the need for density estimation using the conditional kernel mean embedding \citep{HilbertSpaceEmbeddingforConditionalDistributions,grunewalder2012conditional,park2020measure,klebanov2020rigorous,li2022optimal}.

Proxy methods are also used in domain adaptation under distribution shifts \citep{pmlr-v206-alabdulmohsin23a, pmlr-v238-tsai24b}, where the underlying DAG resembles ours. Domain adaptation focuses on transferring models across domains with shifting unobserved confounders (e.g., patients in different hospitals). \cite{pmlr-v238-tsai24b} use kernel-based outcome-bridge function similar to \citep{Mastouri2021ProximalCL}, whereas we employ treatment-bridge functions for causal effect estimations under unmeasured confounding. While both they and we use kernel methods, their objectives and estimands differ, highlighting distinct but complementary approaches. \looseness=-1

\section{PROBLEM SETTING AND IDENTIFICATION}
\label{sec:AlternativeProxyIdentification}
\subsection{Problem Setting for Treatment Effects}
In this section, we establish the problem setting for learning causal effects, which are statements about the counterfactual outcomes that arise from hypothetical interventions. Consider the treatment $A \in \gA$ and its associated outcome $Y \in \R$ that we observe. We assume the existence of an unobserved confounding variable $U \in \gU$ that affects both $A$ and $Y$. Our aim is to estimate the causal effects presented in the following definition.

\begin{definition} The treatment effects are: 
    \vspace{-0.3cm}
    \begin{enumerate}
        \item[i-)] {Dose-response:} $f_{\text{ATE}}({a}) = \E[\E[Y | A = a, U]]$ quantifies the counterfactual mean outcome across the entire population given the intervention where everyone receives the treatment $a$. ATE signifies that its semiparametric analogue is the average treatment effect. \looseness=-1
        \vspace{-0.2cm}
        \item[ii-)] {Conditional dose-response:} $f_{\text{ATT}}(a, a') = \E[\E[Y | A = a, U] | A = a']$ quantifies the counterfactual mean outcome for individuals who actually received treatment $A = a'$ given the intervention where they receive treatment $a$. ATT signifies that its semiparametric analogue is the average treatment effect on the treated. \looseness=-1
    \end{enumerate}
    \label{def:structuralFunctions}
\end{definition}
The primary challenge in estimating these functions is that the confounder $U$ is not directly observable. To address this issue, we assume access to two proxy variables: $Z$, the treatment proxy, and $W$, the outcome proxy. We list the assumptions below that will be used throughout the development of our method. The following conditional independence assumption is implied by the graphical model in Figure (\ref{fig:ProxyCausalDAG}). \looseness=-1

\begin{assumption}\label{assumption:proxy}
    We assume the following conditional independence statements: i-) $Y \perp Z  | U, A$ (Conditional Independence for $Y$), ii-) $ W \perp Z  | U, A$ and $W \perp A | U$ (Conditional Independence for $W$).
    \label{assum:ProxyCausalAssumptions1}
\end{assumption}
 We also make the following completeness assumption.
 \begin{assumption}[Completeness]
    For any square integrable function $\ell : \gU \rightarrow \R$, for all $a \in \gA$: $\E[\ell(U) | W, A = a] = 0$  $p(W) - $almost everywhere (a.e.) if and only if $\ell(U) = 0$ $p(U) - $a.e.\looseness=-1
\label{assum:AlternativeProxyAssumptionCompleteness1}
\end{assumption}

Both Assumptions (\ref{assum:ProxyCausalAssumptions1}) and (\ref{assum:AlternativeProxyAssumptionCompleteness1}) are used in the alternative PCL frameworks \citep{deaner2023proxycontrolspaneldata,semiparametricProximalCausalInference, wu2024doubly}. The completeness condition ensures that the proxy variable $W$ has sufficient variability relative to the unobserved confounder $U$, thereby making it possible to identify the treatment effects.

\subsection{Identification of the Structural Functions}
In this section, we establish the identifiability  of the structural functions presented in Definition (\ref{def:structuralFunctions}). 
To obtain the ATE function given in Definition (\ref{def:structuralFunctions}-i), we consider the \emph{bridge function} $\varphi_0^{\text{ATE}}(z, a)$, defined as a solution of the functional equation
\begin{align}
    \E[\varphi_0^{\text{ATE}}(Z, a) | W, A = a] = \frac{p(W) p(a)}{p(W, a)}.
\label{eq:AlternativeProxyATEBridgeFunction}
\end{align}
The densities $p(W)$, $p(A)$, and $p(W, A)$ denote the marginal distributions of $W$ and $A$, and the joint distribution of $(W,A)$, respectively. Similar to ATE, we consider a bridge function $\varphi_0^{\text{ATT}}(z, a, a')$ to identify $f_{ATT}$ that satisfies the functional equation
 \begin{align}
    \E[\varphi_0^{\text{ATT}}(Z, a, a') | W, A = a] = \frac{p(W, a') p(a)}{p(W, a) p(a')}.\label{eq:AlternativeProxyATTBridgeFunction}
\end{align}
In the following theorem, we establish the identifiability of the structural functions $f_{\text{ATE}}$ and $f_{\text{ATT}}$.
\begin{theorem}
Let Assumptions (\ref{assum:ProxyCausalAssumptions1}) and (\ref{assum:AlternativeProxyAssumptionCompleteness1}) hold. Furthermore, suppose that there exist square integrable functions $\varphi_0^{\text{ATE}}$ and $\varphi_0^{\text{ATT}}$ that satisfy Equations (\ref{eq:AlternativeProxyATEBridgeFunction}) and (\ref{eq:AlternativeProxyATTBridgeFunction}), respectively. Then,
\begin{enumerate}
    \item The dose-response curve is given by $f_{\text{ATE}}(a) = \E[Y \varphi_0^{\text{ATE}}(Z, a) | A = a]$.\looseness=-1
    \item The conditional dose-response curve is given by $f_{\text{ATT}}(a, a') = \E[Y \varphi_0^{\text{ATT}}(Z, a, a') | A = a]$.
\end{enumerate}
\label{thm:AlternativeProxyATEandATTIdentification}
\end{theorem}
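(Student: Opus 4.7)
The plan is to handle both claims by the same three-step template: rewrite the proposed estimand using the conditional independencies, use the completeness condition together with the bridge equation to identify a certain conditional expectation of $\varphi_0$ given $U$, and then substitute back.

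\textbf{Step 1 (reduction to $U$).} First I would define $g(u,a) := \E[Y \mid A=a, U=u]$, so that $f_{\text{ATE}}(a) = \int g(u,a)\, p(u)\,du$ and $f_{\text{ATT}}(a,a') = \int g(u,a)\, p(u\mid a')\,du$. Applying the tower property and the assumption $Y \perp Z \mid U, A$ (Assumption \ref{assum:ProxyCausalAssumptions1}, part i) gives
\begin{align*}
\E[Y\,\varphi_0(Z,\cdot)\mid A=a] = \int g(u,a)\,\E[\varphi_0(Z,\cdot)\mid U=u,A=a]\,p(u\mid a)\,du,
\end{align*}
for either bridge. So everything reduces to computing $h(u,\cdot) := \E[\varphi_0(Z,\cdot)\mid U=u, A=a]$.

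\textbf{Step 2 (guess and verify via completeness).} By $W \perp Z \mid U, A$, the LHS of the bridge equation factors as
\begin{align*}
\E[\varphi_0(Z,\cdot)\mid W,A=a] = \E\!\left[h(U,\cdot)\,\middle|\, W, A=a\right].
\end{align*}
So $h$ solves a Fredholm equation in $U$ with the same RHS as the bridge equation. I would guess the candidate solutions
\begin{align*}
\psi^{\text{ATE}}(u,a) = \frac{p(u)}{p(u\mid a)}, \qquad \psi^{\text{ATT}}(u,a,a') = \frac{p(u\mid a')}{p(u\mid a)},
\end{align*}
and verify directly that $\E[\psi^{\text{ATE}}(U,a)\mid W,A=a] = p(W)p(a)/p(W,a)$ and analogously for ATT, using Bayes' rule and $W \perp A \mid U$ to write $p(u,W,a) = p(u)p(a\mid u)p(W\mid u)$ and collapse the integrals. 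The completeness Assumption \ref{assum:AlternativeProxyAssumptionCompleteness1} applied to $\ell(U) = h(U,\cdot) - \psi(U,\cdot)$ (which is square-integrable by assumption on $\varphi_0$) then forces $h = \psi$ $p(U)$-a.e.

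\textbf{Step 3 (substitute back).} Plugging $h(u,a) = p(u)/p(u\mid a)$ into the display in Step~1 cancels $p(u\mid a)$ against its reciprocal and yields $\int g(u,a)p(u)\,du = f_{\text{ATE}}(a)$. Similarly $h(u,a,a') = p(u\mid a')/p(u\mid a)$ yields $\int g(u,a)p(u\mid a')\,du = f_{\text{ATT}}(a,a')$.

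\textbf{Main obstacle.} The only nontrivial step is the verification that $\psi^{\text{ATE}}$ and $\psi^{\text{ATT}}$ solve the bridge equations in Step~2; this is a short but careful manipulation that relies on both conditional independencies for $W$ (in particular $W \perp A \mid U$ is what makes $\int p(u)p(a'\mid u)p(W\mid u)\,du$ collapse to $p(W,a')$). Everything else is the tower property and Bayes' rule, and completeness is invoked exactly once, to convert the bridge equation (in $W$) into pointwise equality (in $U$).
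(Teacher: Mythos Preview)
Your proposal is correct and follows essentially the same approach as the paper: both reduce the bridge equation through $U$ via $W\perp Z\mid U,A$, verify that $\psi(u,a)=p(u)p(a)/p(u,a)$ (equivalently your $p(u)/p(u\mid a)$) satisfies the same equation using $W\perp A\mid U$, invoke completeness once to pin down $\E[\varphi_0(Z,\cdot)\mid U,A=a]$, and then combine with $Y\perp Z\mid U,A$ to recover the target functional. The only difference is presentational order---the paper starts from the bridge equation and works toward $f_{\text{ATE}}$, whereas you start from $\E[Y\varphi_0(Z,\cdot)\mid A=a]$ and work backward---but the ingredients and logic are identical.
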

Theorem (\ref{thm:AlternativeProxyATEandATTIdentification}) is proved in the supplementary material S.M. (Section \ref{sec:Identification_Appendix}). The extension to settings with additional observable confounders is in S.M. (Sec. \ref{sec:KernelAlternativeProxyWithAdditionalCovariates}).
\begin{remark}
Our ATE identification result differs from previous works \citep{wu2024doubly, semiparametricProximalCausalInference}. Specifically, \citep{wu2024doubly} extends \citep{semiparametricProximalCausalInference} by replacing the indicator function with a kernel for continuous treatments, identifying ATE via expectations over $p(Y, Z, A)$. By contrast, our approach uses conditional expectations over $p(Y, Z | A = a)$, enabled by a modified bridge function definition with an additional $p(A = a)$ term. This distinction highlights the novelty of our approach to ATE identification. Furthermore, unlike \citep{wu2024doubly}, our approach does not require density ratio estimation for bridge function estimation, as outlined in Section (\ref{sec:AlternativeProxyAlgorithms}).
\end{remark}
\begin{remark}
For the simpler case where $\{\gA, \gY, \gW, \gZ, \gU\}$ are discrete, we present the identification result in Theorem (\ref{thm:AlternativeProxyATEIdentificationDiscrete}). Specifically, in this setting, the dose-response can be identified through matrix-vector multiplication of probability matrices, which can be estimated from the variables $(A, Y, W, Z)$, allowing for the estimation of the dose-response curve. \looseness=-1
\end{remark}

In order to ensure that solutions exist for both ATE and ATT bridge functions, we require the following completeness assumption:
\begin{assumption}
    For any square integrable function $\ell: \gU \rightarrow \R$, for all $a \in \gA$: $\E[\ell(U) | Z, A = a] = 0$ $p(Z)-$a.e. if and only if $\ell(U) = 0 \quad p(U)$-a.e.
\label{assum:ExistenceCompletenessAssumption}
\end{assumption}
The assumption above, along with mild regularity and integrability conditions, suffices for the existence of solutions to Equations (\ref{eq:AlternativeProxyATEBridgeFunction}) and (\ref{eq:AlternativeProxyATTBridgeFunction}). Further discussions about the existence are provided in S.M. (Sec. \ref{sec:existenceOfBridgeFunction}).\looseness=-1

\section{METHODS}
\label{sec:AlternativeProxyAlgorithms}
With the identification results of ATE and ATT functions in hand, we are prepared to develop algorithms to estimate these structural functions. To achieve this, we must solve Equations (\ref{eq:AlternativeProxyATEBridgeFunction}) and (\ref{eq:AlternativeProxyATTBridgeFunction}). We assume that the bridge functions reside within RKHSs. We then use a two-stage regression approach to solve for the bridge functions. Ultimately, we derive closed-form solutions to estimate the structural functions.

\subsection{Reproducing Kernel Hilbert Space}
Consider any space $\gF \in \{\gA, \gW, \gZ\}$. We denote the positive semi-definite kernel on $\gF$ as $k_\gF: \gF \times \gF \rightarrow \R$. The corresponding canonical feature map is  $\phi_\gF(f)$, where $\phi_\gF(f) = k_\gF(f, \cdot) \in \gH_\gF$, and $\gH_\gF$ refers to the RKHS of real-valued functions defined on  $\gF$. The inner product and the norm of the RKHS are denoted by $\langle .,. \rangle_{\gH_\gF}$ and $\|.\|_{\gH_\gF}$, respectively. When it is clear from the context, we drop the subscript $\gH_\gF$ from the inner product notation. For notational convenience, we use $\gH_{\gF\gG}$ to denote the tensor product space $\gH_\gF \otimes \gH_\gG$ that is isometrically isomorphic to $S_2(\gH_\gG, \gH_\gF)$ the Hilbert space of Hilbert-Schmidt operators from $\gH_\gG$ to $\gH_\gF$ \citep{aubin2011applied}. Correspondingly, $\phi_{\gF\gG}(f, g)$ denotes the tensor product feature map $\phi_\gF(f) \otimes \phi_\gG(g)$. Given any distribution $p(F)$ on $\gF$ and a kernel $k_\gF$ for which $\E[k_\gF(F, F)] < \infty$, $\mu_F = \int \phi_\gF(f) p(f) d f \in \gH_\gF$ is known as the kernel mean embedding of $p$ \citep{HilbertSpaceEmbeddingforDistributions,gretton2013introduction}. 
Similarly, for a conditional distribution $p(F | g)$ for each $g \in \gG$, the operator $ \mu_{F | G}(g) = \int \phi_\gF(f) p(f | g) d f \in \gH_\gF$ is called the conditional mean embedding (CME) of $p(F| g)$ \citep{HilbertSpaceEmbeddingforConditionalDistributions, grunewalder2012conditional,park2020measure, klebanov2020rigorous,li2022optimal}. 
\looseness=-1

\subsection{Algorithms to Estimate Causal Functions} \label{sec:main_algo}
\subsubsection{Dose-Response Estimation} 
To estimate the dose-response curve, we first approximate the bridge function $\varphi_0^{\text{ATE}}$, which is defined as the solution of Equation (\ref{eq:AlternativeProxyATEBridgeFunction}).  Let $r(W, A)$ denote $p(W) p(A) / p(W, A)$. Our objective is to find the optimal solution to the least-squares loss $\E[(r(W, A) - \E[\varphi(Z, A) | W, A])^2]$. Minimizing this loss is challenging for two reasons: (i) it requires knowledge of the density ratios; 
(ii) it involves a conditional expectation. 
For the first challenge, one could perform density ratio estimation, but this is particularly difficult in high-dimensional settings. We avoid the need for density ratio estimation by simplifying the least-squares objective as follows:\looseness=-1
\begin{align}
    &\E\big[\big(r(W, A) - \E[{\varphi}(Z, A) | W, A]\big)^2\big]\nonumber\\
    &=\E\big[ \E[{\varphi}(Z, A) | W, A]^2 \big]\nonumber \\&-\int \frac{2p(w) p(a)}{p(w, a)} \E[{\varphi}(Z, a) | w, a] p(w, a) d w d a + \text{const.}\nonumber\\
    &=\E\big[ \E[{\varphi}(Z, A) | W, A]^2 \big] \nonumber\\&-\enspace2 \E_{W} \E_{A} \big[ \E[{\varphi}(Z, A) | W, A]\big] + \text{const.}
\label{eq:MainTextSimplifiedLossATEPopulation}
\end{align}
Here, $\E_W\E_A[.]$ denotes decoupled expectations with respect to $p(w)$ and $p(a)$, i.e., for any function function $\ell$, $\E_W \E_A[\ell(W, A)] = \int \ell(w, a) p(w) p(a) dw da$, and $'\text{const.}'$ represents terms independent of $\varphi$. The above simplification allows us to avoid the need to estimate a density ratio, since minimizing Equation (\ref{eq:MainTextSimplifiedLossATEPopulation}) does not require knowing $r(W, A)$. To overcome the second challenge, we assume that $\varphi_0^{\text{ATE}}$ resides in the RKHS $\gH_{\gZ\gA}$. Observe that for any $\varphi \in \gH_{\gZ\gA}$,
\begin{align*}
    &\E[\varphi(Z, a) | W = w, A = a] \\&= \E[\langle \varphi, \phi_\gZ(Z) \otimes \phi_\gA(a) \rangle_{\gH_{\gZ\gA}} | W = w, A = a]\\
    &= \langle \varphi, \E[\phi_\gZ(Z) | W = w, A = a] \otimes \phi_\gA(a) \rangle_{\gH_{\gZ\gA}} \\&= \langle \varphi, \mu_{Z|W, A} (w, a)\otimes \phi_\gA(a) \rangle_{\gH_{\gZ\gA}},
\end{align*}
where $\mu_{Z|W, A} (w, a)$ denotes the CME $\E[\phi_\gZ(Z) | W = w, A = a]$. With this further simplification, we can write Equation (\ref{eq:MainTextSimplifiedLossATEPopulation}) as
\begin{align*}
    &\E\big[ \langle {\varphi}, \mu_{Z|W, A} (W, A)\otimes \phi_\gA(A) \rangle_{\gH_{\gZ\gA}}^2 \big] \\&- 2 \E_{W} \E_{A} \big[ \langle \varphi, \mu_{Z|W, A} (W, A)\otimes \phi_\gA(A) \rangle_{\gH_{\gZ\gA}}\big] + \text{const.}
\end{align*}
As a result, our approach to estimate the bridge function will be a \emph{two-stage procedure}: (i) The first stage estimates the CME $\mu_{Z|W, A} (W, A)$; (ii) the second stage minimizes the modified regression loss using the approximated CME. Since we identify the dose-response through a conditional expectation, our approach requires learning an additional conditional mean embedding, as outlined below, which we consider as a third-stage regression. 

\textbf{First Stage Regression:} Under the regularity condition that $\E[\ell(Z) | W = \cdot, A = \cdot ] \in \gH_{\gW\gA}$ for all $\ell\in \gH_\gZ$, there exist an operator $C_{Z | W, A} \in S_2(\gH_{\gW\gA}, \gH_\gZ)$ such that $\mu_{Z|W, A} (w, a) = C_{Z | W, A} \phi_{\gW\gA}(w, a)$ \citep{HilbertSpaceEmbeddingforConditionalDistributions, li2024towards}. $C_{Z | W, A}$ is called the CME operator and can be estimated by vector-valued regression \citep{grunewalder2012conditional,mollenhauer2020nonparametric,li2022optimal,li2024towards}. The estimation of the CME operator is referred to as the \emph{first-stage regression}. Given first-stage samples $\{w_i, a_i, z_i\}_{i=1}^n$, $C_{Z | W, A}$ is learned by minimizing the regularized vector-valued least-squares cost
$$
    \hat{\gL}^{c}(C) = \frac{1}{n} \sum_{i = 1}^n \|\phi_\gZ(z_i) - C\phi_{\gW \gA}(w_i, a_i) \|_{\gH_\gZ}^2 + \lambda_1 \|C\|_{S_2}^2
$$
i.e., $\hat{C}_{Z | W, A} = \argmin_{C \in S_2(\gH_{\gW\gA}, \gH_\gZ)} \hat{\gL}^{c}(C)$. The solution to this problem is given by $\hat{C}_{Z | W, A} = \hat{C}_{Z, (W, A)} (\hat{C}_{W, A} + \lambda_1 I)^{-1}$ \citep{GruLevBalPonetal12},
where $\hat{C}_{Z, (W, A)} = \frac{1}{n} \sum_{i = 1}^n \phi_\gZ(z_i) \otimes \phi_{\gW\gA}(w_i, a_i)$ and $\hat{C}_{W, A} = \frac{1}{n} \sum_{i = 1}^n \phi_{\gW\gA}(w_i, a_i) \otimes  \phi_{\gW\gA}(w_i, a_i)$. 

\textbf{Second Stage Regression:} Given the CME estimation $\hat{\mu}_{Z|W, A} (w, a) = \hat{C}_{Z | W, A} \phi_{\gW\gA}(w, a)$ from the first-stage, we aim to minimize the modified least-squares loss. We minimize the empirical counterpart of this loss with Tikhonov regularization, using the CME estimate $\hat{\mu}_{Z|W, A}$ and the second-stage data $\{\Tilde{z}_i, \Tilde{w}_i, \Tilde{a}_i\}_{i = 1}^m$. The empirical loss $\hat{\gL}^{2SR}_m(\varphi)$ is expressed as
\begin{align*}
    &\hat{\gL}^{2SR}_m(\varphi)=\frac{1}{m} \sum_{i = 1}^m \langle \varphi, \hat{\mu}_{Z|W, A} (\Tilde{w}_i, \Tilde{a}_i) \otimes \phi_\gA(\Tilde{a}_i) \rangle_{\gH_{\gZ\gA}}^2 -\\& \frac{2}{m (m-1)} \sum_{\substack{i,j = 1 \\ j \ne i}}^m \Big\langle \varphi, \hat{\mu}_{Z|W, A} (\Tilde{w}_j, \Tilde{a}_i) \otimes \phi_\gA(\Tilde{a}_i) \Big\rangle_{\gH_{\gZ\gA}} \\&+ \lambda_2 \|\varphi\|^2_{\gH_{\gZ\gA}}
\end{align*}
We denote its minimizer on $\gH_{\gZ\gA}$ as $\hat{\varphi}_{\lambda_2, m}$. This is referred to as the \emph{second-stage regression}. 

\textbf{Third Stage Regression:} We can then estimate the ATE function as $f_{\text{ATE}}(a) \approx \E[Y \hat{\varphi}_{\lambda_2, m}(Z, a) | A = a]$ in closed-form via kernel matrices.
We note that:
\begin{align*}
    \E[Y &\hat{\varphi}_{\lambda_2, m}(Z, a) | A = a] \\&= \E[Y \langle \hat{\varphi}_{\lambda_2, m}, \phi_\gZ(Z) \otimes \phi_\gA(a) \rangle | A = a]\\
    &=\langle \hat{\varphi}_{\lambda_2, m}, \E[Y\phi_\gZ(Z) | A = a] \otimes \phi_\gA(a) \rangle.
\end{align*}
The evaluation of the above inner product requires the estimation of $\E[Y\phi_\gZ(Z) | A = a]$. We obtain this via kernel ridge regression using data from either first-stage, second-stage, or a combination of both. This can be considered as a \emph{third-stage regression}. Similar to the first-stage regression, we estimate the conditional mean operator $C_{YZ|A}$ such that $C_{YZ|A}\phi_\gA(a) = \E[Y\phi_\gZ(Z) | A = a]$. For simplicity in our algorithm derivations, we use first-stage data to estimate this conditional mean. Replacing this conditional mean with its estimate, we can estimate the dose-response with the inner product $\hat{f}_{\text{ATE}}(a) = \langle \hat{\varphi}_{\lambda_2, m}, \hat{\E}[Y\phi_\gZ(Z) | A = a] \otimes \phi_\gA(a) \rangle$.
The algorithm below provides the closed-form solution for ATE estimation as a result of the three stages, and its derivation can be found in S.M. (Sec. \ref{sec:ATE_algorithm_proof}).\looseness=-1

\begin{algorithm_state}
Let the first and second-stage data be denoted by $\{z_i, w_i, a_i\}_{i = 1}^n$ and $\{\Tilde{w}_i, \Tilde{a}_i\}_{i = 1}^m$, respectively, and $(\lambda_1, \lambda_2, \lambda_3)$ be the regularization parameters. For $F \in \{A, W, Z\}$ with domain $\gF$, the first-stage kernel matrices are denoted as $\mK_{FF} = [k_\gF(f_i, f_j)]_{ij} \in \R^{n \times n}$, $\mK_{Ff} = [k_\gF(f_i, f)]_{i} \in \R^{n}$, where $\{f_i\}_{i = 1}^n$ denotes the first-stage data samples. For the second-stage variables $\tilde{F} \in \{\Tilde{A}, \Tilde{W}\}$, the kernel matrices are denoted as: $\mK_{\Tilde{F} \Tilde{F}} = [k_\gF(\Tilde{f}_i, \Tilde{f}_j)]_{ij} \in \R^{m \times m}$, $\mK_{F \Tilde{F}} = [k_\gF(f_i, \Tilde{f}_j)]_{ij} \in \R^{n \times m}$, $\mK_{F \Tilde{f}} = [k_\gF(f_i, \Tilde{f})]_i \in \R^n$, and $\mK_{\Tilde{F} f} = [k_\gF(\Tilde{f}_j, f)]_j \in \R^m$. Define the following matrices: i-) $\mB = (\mK_{W W} \odot \mK_{A A} + n \lambda_1 \mI)^{-1}(\mK_{W \Tilde{W}} \odot \mK_{A \Tilde{A}})\in \R^{n \times m}$, ii-) $\bar{\mB} \in \R^{n \times m}$ is the matrix, where $j$-th column is given by $\Bar{\mB}_{:, j} = \frac{1}{m} \sum_{\substack{l = 1 \\ l \ne j}}^m (\mK_{W W} \odot \mK_{A A} + n \lambda_1 \mI)^{-1}(\mK_{W \Tilde{w}_l} \odot \mK_{A \Tilde{a}_j})$,
where $\mI \in \R^{n \times n}$ is the identity matrix.
Furthermore, let $\{\alpha_i\}_{i = 1}^{m + 1}$ be the minimizer of the cost function $\hat{\gL}^{2SR}_m(\malpha) = \frac{1}{m} \malpha^T \mL^T \mL \malpha -2 \malpha^T \mM + \lambda_2 \malpha^T \mN \malpha$ where\looseness=-1
\begin{align*}
    \mL &= \begin{bmatrix}
        \mB^T \mK_{Z Z}\mB \odot \mK_{\Tilde{A}\Tilde{A}} \nonumber\\ (\frac{\vone}{m})^T \big[{\mB}^T \mK_{Z Z} \bar{\mB} \odot \mK_{\Tilde{A}\Tilde{A}}\big]^T
    \end{bmatrix}^T \in \R^{m \times (m+1)},\\
    \mM &= \begin{bmatrix}
     [ \mB^T \mK_{Z Z} \Bar{\mB} \odot \mK_{\Tilde{A} \Tilde{A}} ] \frac{\vone}{m}\\
     (\frac{\vone}{m})^T \Big[ \Bar{\mB}^T \mK_{Z Z} \Bar{\mB} \odot \mK_{\Tilde{A} \Tilde{A}}\Big] \frac{\vone}{m}
    \end{bmatrix} \in \R^{(m+1)},
\end{align*}
$\mN=\begin{bmatrix}\mL & \mM \end{bmatrix} \in \R^{(m+1)\times (m + 1)}$, $\vone \in \R^m$ is vector of ones, and $\malpha = \begin{bmatrix} \alpha_1 & \alpha_2 & \ldots & \alpha_m & \alpha_{m+1}\end{bmatrix}^T \in \R^{m + 1}$.
Then the dose-response estimation can be written in  closed-form as $\hat{f}_{\text{ATE}}(a) = \malpha^T \mE$, where
\begin{align*}
    &\mE = \begin{bmatrix}
        \mB^T \mD \mK_{A a} \odot \mK_{\Tilde{A} a}\\
        \big( \Bar{\mB}^T \mD \mK_{A a} \odot \mK_{\Tilde{A} a}\big) \frac{\vone}{m}
    \end{bmatrix} \in \R^{m + 1},
\end{align*}
$\mD = \mK_{Z Z} \text{diag}(\mY) [\mK_{A A} + n \lambda_3 \mI]^{-1} \in \R^{n \times n}$ and $\mY = \begin{bmatrix}
    y_1 & y_2 & \ldots & y_n
\end{bmatrix}^T \in \R^n$.
\label{algo:ATE_algorithm_with_confounders}
\end{algorithm_state}

\subsubsection{Conditional Dose-Response Estimation} As with the dose-response, we aim to minimize the least-squares loss $\E_{W, A}[(r(W, A, a') - \E[\varphi(Z, A, a') | W, A])^2]$ for conditional dose-response. Our method allows for similar simplifications in the second-stage regression as before, thereby bypassing explicit density estimation in minimizing the loss function $\gL^{\text{2SR}}$. Let $r(W, A, a') = \frac{p(W, a') p(A) }{p(W, A) p (a')}$ and suppose that the bridge function $\varphi$ lies in the RKHS $\gH_{\gZ\gA\gA}$ (which denotes $\gH_\gZ \otimes \gH_\gA \otimes \gH_\gA$). Specifically, the loss function can be simplified as (see S.M., Sec. \ref{sec:ATT_algorithm_proof}):
 \begin{align*}
     &\E[(r(W, A, a') - \E[\varphi(Z, A, a') | W, A])^2]\\
     &=\E[\langle \varphi, \mu_{Z|W, A} (W, A) \otimes \phi_\gA(A) \otimes \phi_\gA(a') \rangle^2]\\
     &-2 \E_{A}[\langle \varphi, C_{Z|W, A}(\E_{W | A = a'}[\phi_\gW(W)] \otimes \phi_\gA(A)) \\&\otimes \phi_\gA(A) \otimes \phi_\gA(a') \rangle] + \text{const.}
 \end{align*}
 The notation $\E_{W|A = a'}$ denotes the expectation with respect to the conditional distribution $p(W|A = a')$, i.e., for any function $\ell$, $\E_{W|A = a'}[\ell(W)] = \int \ell(w) p (w|a') dw$. We need to estimate the CME $\E[\phi_\gW(W) | A = a']$, which can be obtained using kernel ridge regression on second-stage data, and expressed in  closed form as $\hat{\E}[\phi_\gW(W) | A = a'] = \sum_{i = 1}^m \theta_i \phi_\gW(\Tilde{w}_i) = \Phi_{\gW} \vtheta$,
where $\vtheta = (\mK_{\Tilde{A} \Tilde{A}} + m \zeta \mI)^{-1} \mK_{\Tilde{A} a'}$, $\Phi_{\gW} = \begin{bmatrix}
    \phi_\gW(\Tilde{w}_1) & \ldots & \phi_\gW(\Tilde{w}_m)
\end{bmatrix}$, and $\zeta$ is the regularization parameter for this CME estimate. Hence, we can write the sample-based loss, $\hat{\mathcal{L}}^{\text{2SR}}_m(\varphi)$, of the second-stage regression with Tikhonov regularization,
\begin{align*}
    &\frac{1}{m} \sum_{i = 1}^m \langle \varphi, \hat{\mu}_{Z|W, A} (\Tilde{w}_i, \Tilde{a}_i) \otimes \phi_\gA(\Tilde{a}_i) \otimes \phi_\gA(a') \rangle^2 \nonumber\\ &- \frac{2}{m} \sum_{\substack{i,j = 1 \\ i \ne j}}^m \langle \varphi,  \theta_i \hat{\mu}_{Z | W, A}(\Tilde{w}_i, \Tilde{a}_j) \otimes \phi_\gA(\Tilde{a}_j) \otimes \phi_\gA(a') \rangle \\&+ \lambda_2 \|\varphi\|_{\gH_{\gZ\gA\gA}}^2.
\end{align*}
Using the estimate $\hat{\varphi}_{\lambda_2, m}$ for $\varphi_0^{\text{ATT}}$ from the  second stage regression, the conditional dose-response curve is given by the inner product $\hat{f}_{\text{ATT}}(a, a') = \langle \hat{\varphi}_{\lambda_2, m}, \hat{\E}[Y\phi_\gZ(Z) | A = a] \otimes \phi_\gA(a) \otimes \phi_\gA(a')\rangle$, where we again need the conditional mean $ \hat{\E}[Y\phi_\gZ(Z) | A = a]$ as for the dose-response algorithm.
The algorithm below provides a closed-form solution for the conditional dose-response estimate,  with proof in S.M. (Sec. \ref{sec:ATT_algorithm_proof}).

\begin{algorithm_state}
Denote the first- and second-stage data as $\{z_i, w_i, a_i\}_{i = 1}^n$ and $\{\Tilde{w}_i, \Tilde{a}_i\}_{i = 1}^m$, respectively, and let $(\lambda_1, \lambda_2, \lambda_3, \zeta)$ be the regularization parameters. Define the kernel matrices, the matrix $\mD \in \R^{n \times n}$, and the matrix $\mB \in \R^{n \times m}$ as in Algorithm (\ref{algo:ATE_algorithm_with_confounders}). Furthermore, define $\Tilde{\mB}$, where $j$-th column is given by $\Tilde{\mB}_{:, j} = \sum_{\substack{l = 1 \\ l \ne j}}^m (\mK_{W W} \odot \mK_{A A} + n \lambda_1 \mI)^{-1}(\theta_l \mK_{W \Tilde{w}_l} \odot \mK_{A \Tilde{a}_j})$ with $\theta_i = [(\mK_{\Tilde{A} \Tilde{A}} + m \zeta \mI)^{-1} \mK_{\Tilde{A} a'}]_i$. For a given $a'$, let $\{\alpha_i\}_{i = 1}^{m + 1}$ be the minimizer of $\hat{\gL}^{2SR}_m(\malpha) = k_\gA(a', a')\big(\frac{k_\gA(a', a')}{m} \malpha^T \mL^T \mL \malpha -2 \malpha^T \mM + \lambda_2 \malpha^T \mN \malpha \big)$ where\looseness=-1
\begin{align*}
    \mL &= \begin{bmatrix}
        \mB^T \mK_{Z Z}\mB \odot \mK_{\Tilde{A}\Tilde{A}} \nonumber\\ (\frac{\vone}{m})^T \big[{\mB}^T \mK_{Z Z} \Tilde{\mB} \odot \mK_{\Tilde{A}\Tilde{A}}\big]^T
    \end{bmatrix}^T \in \R^{m \times (m+1)},\\
    \mM &= \begin{bmatrix}
     [ \mB^T \mK_{Z Z} \Tilde{\mB} \odot \mK_{\Tilde{A} \Tilde{A}} ] \frac{\vone}{m}\\
     (\frac{\vone}{m})^T \Big[ \Tilde{\mB}^T \mK_{Z Z} \Tilde{\mB} \odot \mK_{\Tilde{A} \Tilde{A}}\Big] \frac{\vone}{m}
    \end{bmatrix} \in \R^{(m+1)},
\end{align*}
and $\mN=\begin{bmatrix}\mL & \mM \end{bmatrix} \in \R^{(m+1)\times (m + 1)}$. Then, the conditional dose-response estimate can be written in  closed-form as $\hat{f}_{\text{ATT}}(a) = k_\gA(a', a') \malpha^T \mE$, where
\begin{align*}
    \mE = \begin{bmatrix}
        \mB^T \mD \mK_{A a} \odot \mK_{\Tilde{A} a}\\
        \big( \Tilde{\mB}^T \mD \mK_{A a} \odot \mK_{\Tilde{A} a}\big) \frac{\vone}{m}
    \end{bmatrix}\in \R^{m + 1}.
\end{align*}
\label{algo:ATT_algorithm_with_confounders}
\end{algorithm_state}

\section{CONSISTENCY}
\label{sec:Consistency}
We present non-asymptotic uniform consistency guarantees for the dose-response curve; similar guarantees for the conditional dose-response curve are provided in S.M. (Sec. \ref{sec:consistency_appendix}). We recall that for the third-stage regression, we can re-use data from the first and second stages, and we denote by $t$ the number of samples used for that stage; thus $n, m, t$ are the number of samples used in stages 1, 2, and 3, respectively. Likewise  $\lambda_1, \lambda_2, \lambda_3$ are the regularization parameters for their respective stages. 
We assume that each regression stage is well specified, as follows:

\begin{assumption} \label{asst:well_specifiedness}
    (1) There exists $C_{Z | W, A} \in S_2(\gH_{\gW\gA}, \gH_\gZ)$ such that $\mu_{Z|W, A}(W, A) = C_{Z | W, A} \phi_{\gW\gA}(W, A)$; (2) There exists a solution $\varphi_0 \in \gH_{\gZ\gA}$ of Equation (\ref{eq:AlternativeProxyATEBridgeFunction}); (3)  There exists $C_{YZ | A} \in S_2(\gH_{\gA}, \gH_\gZ)$ such that $\E[Y\phi_\gZ(Z) | A] = C_{YZ | A} \phi_{\gA}(A)$.
\end{assumption}

We impose the following additional conditions. 
\begin{assumption} 
For $\gF \in \{\gA, \gW, \gZ\}$, we assume
    \begin{itemize}
        \item[i.]  $\gF$ is a Polish space;
        \item[ii.] $k_\gF(f, .)$, is continuous for almost every $f \in \gF $ and is also bounded by $\kappa$ for almost every $f \in \gF $, i.e., $\sup_{f \in \gF} \|k_\gF(f, .)\|_{\gH_\gF} \le \kappa$;
        \item[iii.] There exists $R,\sigma > 0$ such that $\forall \enspace q \geq 2$, $P_A-$almost surely, $\E[(Y - \E[Y \mid A])^q \mid A] \leq \frac{1}{2}q!\sigma^2R^{q-2}.$ 
    \end{itemize}
\label{assum:Stage1ConsistencyKernelAssumptions_main}
\end{assumption}

\begin{assumption} Let $\bar{\varphi}_0$ be the minimum RKHS norm bridge function solution from Definition (\ref{def:min_norm_bridge}), and let $\Sigma_1$, $\Sigma_2$, and $\Sigma_3$ be covariance operators associated with first, second, and third-stage regressions, respectively, as defined in Definition (\ref{definition:CovarianceOperators_ATE}). We assume that the following conditions hold:\label{asst:src_all_stages_main}
\begin{itemize}
    \item[i.] There exists a constant $B_1 < \infty$ such that for a given $\beta_1 \in (1, 3]$,
    $
    \|{C}_{Z|W,A}\Sigma_1^{-\frac{\beta_1-1}{2}}\|_{S_2(\gH_{\gW\gA}, \gH_\gZ)} \le B_1
    $   
    \item[ii.] There exists a constant $B_2 < \infty$ such that for a given $\beta_2 \in (1, 3]$, 
    $
        \|\Sigma_2^{-\frac{\beta_2-1}{2}}\bar{\varphi}_0\|_{\gH_{\gZ\gA}} \le B_2.
    $
    \item[iii.] There exists a constant $B_3 < \infty$ such that for a given $\beta_3 \in (1,3]$,
    $
    \|C_{YZ \mid A}\Sigma_3^{-\frac{\beta_3-1}{2}}\|_{S_2(\gH_{\gA}, \gH_{\gZ})} \leq B_3.
    $
\end{itemize}
\end{assumption}


Note that Assumption~(\ref{assum:ExistenceCompletenessAssumption}) implies that there exists a solution of Equation (\ref{eq:AlternativeProxyATEBridgeFunction}), Assumption~(\ref{asst:well_specifiedness}-2) further requires that at least one solution lies in the RKHS. The Bernstein condition in Assumption (\ref{assum:Stage1ConsistencyKernelAssumptions_main}-iii) regulates observation noise, while the source condition in Assumption (\ref{asst:src_all_stages_main}) links regression smoothness to covariance operators \citep{caponnetto2007optimal,fischer2020sobolev}. Theorem~(\ref{th:final_rate_ate_main}) also assumes an eigenvalue decay condition in Assumption (\ref{asst:evd_all_stages}) to characterize RKHSs' effective dimension.

\begin{theorem} \label{th:final_rate_ate_main}
Let Assumptions~(\ref{assum:ProxyCausalAssumptions1}), (\ref{assum:AlternativeProxyAssumptionCompleteness1}), (\ref{assum:ExistenceCompletenessAssumption}), (\ref{asst:well_specifiedness}), (\ref{assum:Stage1ConsistencyKernelAssumptions_main}), (\ref{asst:src_all_stages_main}) and (\ref{asst:evd_all_stages}) hold with parameters $\beta_1, \beta_2, \beta_3 \in (1,3]$ and $p_1, p_2, p_3 \in (0,1]$. Set $\lambda_1 = n^{-\frac{1}{\beta_1 + p_1}}$ and $\lambda_3 = t^{-\frac{1}{\beta_3 + p_3}}$. Fix $\iota > 0 $ and $n = m^{\iota\frac{\beta_1+p_1}{\beta_1 - 1}}$.
    \begin{itemize}
        \item[i.]If $\iota \le \frac{\beta_2+1}{\beta_2 + p_2}$, let $\lambda_2 = m^{-\frac{\iota}{\beta_2+1}}$, then \\
        $
        \|\hat{f}_{ATE} - f_{ATE}\|_\infty = O_p \left(t^{-\frac{1}{2}\frac{\beta_3-1}{\beta_3 + p_3}} + m^{-\frac{\iota}{2}\frac{\beta_2-1}{\beta_2+1}}\right)
        $
        \item[ii.] If $\iota \geq \frac{\beta_2+1}{\beta_2 + p_2}$, let $\lambda_2 = m^{-\frac{1}{\beta_2 + p_2}}$, then \\
        $
        \|\hat{f}_{ATE} - f_{ATE}\|_\infty = O_p \left(t^{-\frac{1}{2}\frac{\beta_3-1}{\beta_3 + p_3}} + m^{-\frac{1}{2}\frac{\beta_2-1}{\beta_2 + p_2}} \right)
        $
    \end{itemize}
\end{theorem}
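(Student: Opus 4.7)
The plan is to first decompose $\hat{f}_{ATE}(a)-f_{ATE}(a)$ into a bridge-function error and a conditional-mean error, then control each with its own three-stage regression rate. Writing
\begin{align*}
\hat{f}_{ATE}(a) - f_{ATE}(a)
&= \langle \hat{\varphi}_{\lambda_2,m}-\varphi_0,\, \hat{C}_{YZ|A}\phi_\gA(a)\otimes\phi_\gA(a)\rangle\\
&\quad + \langle \varphi_0,\, (\hat{C}_{YZ|A}-C_{YZ|A})\phi_\gA(a)\otimes\phi_\gA(a)\rangle,
\end{align*}
and applying Cauchy--Schwarz together with the uniform kernel bound from Assumption (\ref{assum:Stage1ConsistencyKernelAssumptions_main}-ii), and a high-probability bound on $\|\hat{C}_{YZ|A}\|_{op}$, yields
$$
\|\hat{f}_{ATE}-f_{ATE}\|_\infty \;\le\; C_1\|\hat{\varphi}_{\lambda_2,m}-\varphi_0\|_{\gH_{\gZ\gA}} + C_2\|\hat{C}_{YZ|A}-C_{YZ|A}\|_{S_2(\gH_\gA,\gH_\gZ)},
$$
so consistency reduces to controlling the two norms on the right in terms of $t$ and $m$ respectively.

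For the third-stage term, I would invoke standard kernel ridge regression bounds for vector-valued regression \citep{caponnetto2007optimal,fischer2020sobolev,li2024towards}, which under Assumption (\ref{asst:src_all_stages_main}-iii) with source parameter $\beta_3$, eigenvalue decay $p_3$, and the Bernstein moment condition in Assumption (\ref{assum:Stage1ConsistencyKernelAssumptions_main}-iii), together with $\lambda_3=t^{-1/(\beta_3+p_3)}$, deliver
$$
\|\hat{C}_{YZ|A}-C_{YZ|A}\|_{S_2} = O_p\bigl(t^{-\frac{1}{2}\frac{\beta_3-1}{\beta_3+p_3}}\bigr).
$$
This exactly matches the first summand in the stated rate.

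The substantive work lies in the stage-2 bound, because the loss $\hat{\mathcal{L}}^{2SR}_m$ is built from the plug-in operator $\hat{C}_{Z|W,A}$ \emph{and} from a decoupled double sum $\tfrac{1}{m(m-1)}\sum_{i\neq j}$ over the stage-2 sample. I would introduce a population regularized target $\varphi_{\lambda_2}$ and an ``oracle'' empirical estimator $\hat{\varphi}^\star_{\lambda_2,m}$ that uses the true $C_{Z|W,A}$ with the stage-2 data, and split
$$
\hat{\varphi}_{\lambda_2,m}-\varphi_0 = \bigl(\hat{\varphi}_{\lambda_2,m}-\hat{\varphi}^\star_{\lambda_2,m}\bigr) + \bigl(\hat{\varphi}^\star_{\lambda_2,m}-\varphi_{\lambda_2}\bigr) + \bigl(\varphi_{\lambda_2}-\bar{\varphi}_0\bigr).
$$
The third term is deterministic bias, controlled by Assumption (\ref{asst:src_all_stages_main}-ii) giving $O(\lambda_2^{(\beta_2-1)/2})$. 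The middle term is pure stage-2 sample noise, bounded via a Bernstein-in-Hilbert-space argument after rewriting the double sum as an i.i.d. sum plus a $O(1/m)$ U-statistic residual, yielding the usual variance $\sqrt{\mathcal N_2(\lambda_2)/(m\lambda_2)}$. The first term, the stage-1 propagation, is handled by an operator perturbation argument on the regularized normal equation that involves $\hat{C}_{Z|W,A}$ as a feature map: its size scales like $\|\hat{C}_{Z|W,A}-C_{Z|W,A}\|_{op}/\lambda_2$, which under the scaling $n=m^{\iota(\beta_1+p_1)/(\beta_1-1)}$ reduces to $O_p(m^{-\iota/2}/\lambda_2)$. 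Balancing these three contributions against $\lambda_2$ produces precisely the two regimes of the theorem: for $\iota\le(\beta_2+1)/(\beta_2+p_2)$ the stage-1 propagation dominates the variance and one balances it against the bias, choosing $\lambda_2=m^{-\iota/(\beta_2+1)}$ for the rate $m^{-\iota(\beta_2-1)/(2(\beta_2+1))}$, whereas for $\iota\ge(\beta_2+1)/(\beta_2+p_2)$ the stage-2 variance dominates and the classical $\lambda_2=m^{-1/(\beta_2+p_2)}$ gives $m^{-(\beta_2-1)/(2(\beta_2+p_2))}$.

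The main obstacle is the stage-1 propagation term: one must show uniformly in the stage-2 sample that replacing $C_{Z|W,A}$ by $\hat{C}_{Z|W,A}$ inside the empirical covariance operator $\hat{\Sigma}_2$ and inside the target vector perturbs the regularized inverse by at most $O(\|\hat{C}_{Z|W,A}-C_{Z|W,A}\|_{op}/\lambda_2)$ in $\gH_{\gZ\gA}$-norm, while simultaneously ensuring the U-statistic-type residual from the $i\neq j$ sum contributes only lower-order noise. I expect this to follow from adapting the two-stage perturbation lemmas of \citet{Mastouri2021ProximalCL,li2024towards} to the present loss; the remaining pieces (Bernstein in Hilbert space, source-condition bias, effective-dimension bookkeeping) are then routine, and combining the stage-2 and stage-3 rates via the initial decomposition delivers both cases of the theorem.
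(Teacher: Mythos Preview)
Your proposal is correct and follows essentially the same route as the paper: the same outer decomposition into a bridge-function error and a stage-3 conditional-mean error (the paper's Eq.~(\ref{eq:ate_decomposition})), the same three-term split $\hat{\varphi}_{\lambda_2,m}-\bar{\varphi}_0$ into stage-1 propagation, stage-2 variance, and bias (your $\hat{\varphi}^\star_{\lambda_2,m}$ is the paper's $\bar{\varphi}_{\lambda_2,m}$), and the same balancing of $r_1(n)/\lambda_2$, $\sqrt{\mathcal N(\lambda_2)/(m\lambda_2)}$, and $\lambda_2^{(\beta_2-1)/2}$ to obtain the two regimes. The only cosmetic differences are that the paper uses a three-way outer split (avoiding your high-probability bound on $\|\hat{C}_{YZ|A}\|_{op}$), and handles the $i\neq j$ double sum by a direct Hoeffding bound on $\|g_2-\bar{g}_{2,m}\|$ (Lemma~\ref{lemma:g_2Bound}) rather than a U-statistic decomposition, but the resulting contributions and rates are identical.
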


The proof and details on the assumptions are given in S.M. (Sec. \ref{sec:consistency_appendix}). Parameters $\{p_i\}_{i=1}^3$ control the effective dimension of the RKHSs used in the three stages \citep{caponnetto2007optimal}. A value $p_i \to 0$ corresponds to a finite dimensional RKHS, while larger $p_i$ means slower decay of eigenvalues of the covariance operator and hence a larger effective dimension. Parameters $\{\beta_i\}_{i=1}^3$ control the smoothness of $C_{Z \mid W,A}$, $\varphi_0$ and $C_{YZ \mid A}$ respectively. Larger $\beta_i$ corresponds to a smoother operator or function. $\iota$ controls the ratio between stage 1 and stage 2 samples to achieve a fast rate in the setting (ii). Indeed, at $\iota = (\beta_2+1)/(\beta_2 + p_2)$, the convergence rate ($m^{-\frac{1}{2}\frac{\beta_2-1}{\beta_2 + p_2}}$) is minimax optimal in $m$ while requiring the fewest observations from stage 1 \citep{caponnetto2007optimal}.  

\textbf{Comparison to outcome bridge function.} Convergence guarantees for ATE when the outcome bridge technique is used have  rate $t^{-1/2}$ instead of our rate $t^{-\frac{1}{2}\frac{\beta_3-1}{\beta_3 + p_3}}$ (\citet[Proposition 1][]{Mastouri2021ProximalCL}, \citet[Theorem 4][]{singh2023kernelmethodsunobservedconfounding}). This is a consequence of the fact that our treatment bridge algorithm requires a third regression for stage 3, while the outcome bridge approach only requires an averaging for stage 3. 
While this poses a disadvantage in principle, it may be less important in practice than the ease of estimation of the treatment bridge vs the outcome bridge, in the same way that IPW and direct estimates may each be advantageous in different regimes \citep{BanRob05}. This can be observed in our experiments. 
Moreover, in \citet{Mastouri2021ProximalCL,singh2023kernelmethodsunobservedconfounding}, the rates stop improving at  smoothness  $\beta_i = 2$ while our rates improve up to $\beta_i=3$. This improvement is obtained from a tighter control of the approximation error in RKHS norm, as observed by \citet[][Remark 7]{meunier2023nonlinear}. This improvement can also be applied to previous works with an outcome bridge. Last, we emphasize that for consistency in the well-specified case, as treated by \citet{Mastouri2021ProximalCL,singh2023kernelmethodsunobservedconfounding} and in our work, the kernels are not required to be characteristic (contrary to assumptions made in the earlier works), nor is $Y$ required to be bounded ($Y$ need only be sub-exponential).

\textbf{Saturation effect.} Benefits from high smoothness beyond the saturation point at $\beta_i = 3$ can be obtained by using alternative spectral regularization techniques \citep{engl1996regularization}. Results were recently obtained by \cite{meunier2025optimal} for conditional mean embedding learning. The application  to the proxy setting is an interesting topic of future study. 


\section{NUMERICAL EXPERIMENTS}
\label{sec:NumericalExperiments}
\begin{figure*}[ht!]
\centering
\subfloat[]
{\includegraphics[trim = {0cm 0cm 0cm 0.0cm},clip,width=0.245\textwidth]{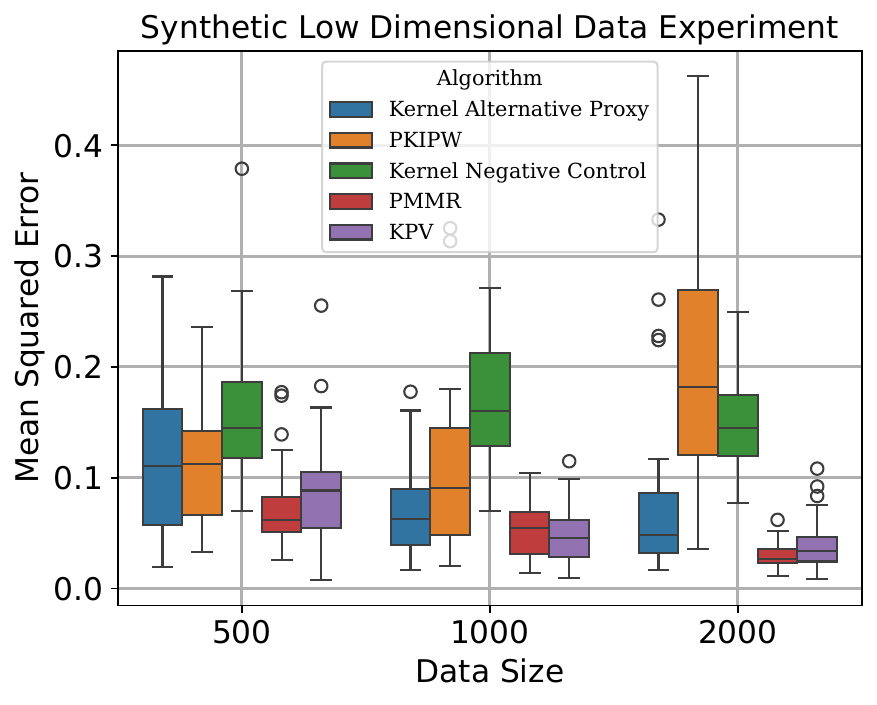} \label{fig:alternativeProxyMethodATEComparisonLowDim}}
\subfloat[]
{\includegraphics[trim = {0cm 0cm 0cm 0.0cm},clip,width=0.245\textwidth]{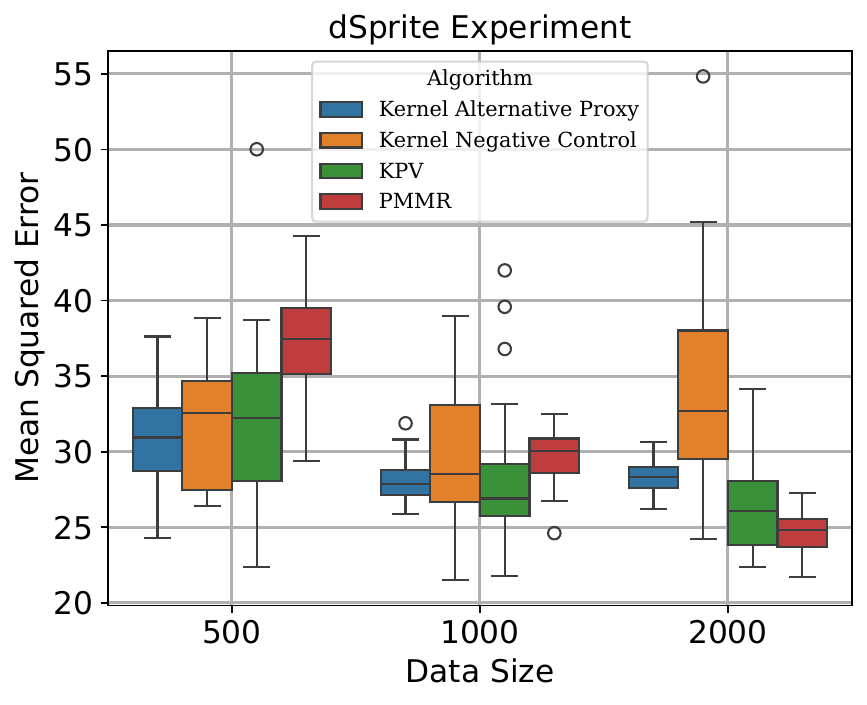}
\label{fig:alternativeProxyMethodATEComparisonDSprite}}
\subfloat[]
{\includegraphics[trim = {0cm 0cm 0cm 0.0cm},clip,width=0.245\textwidth]{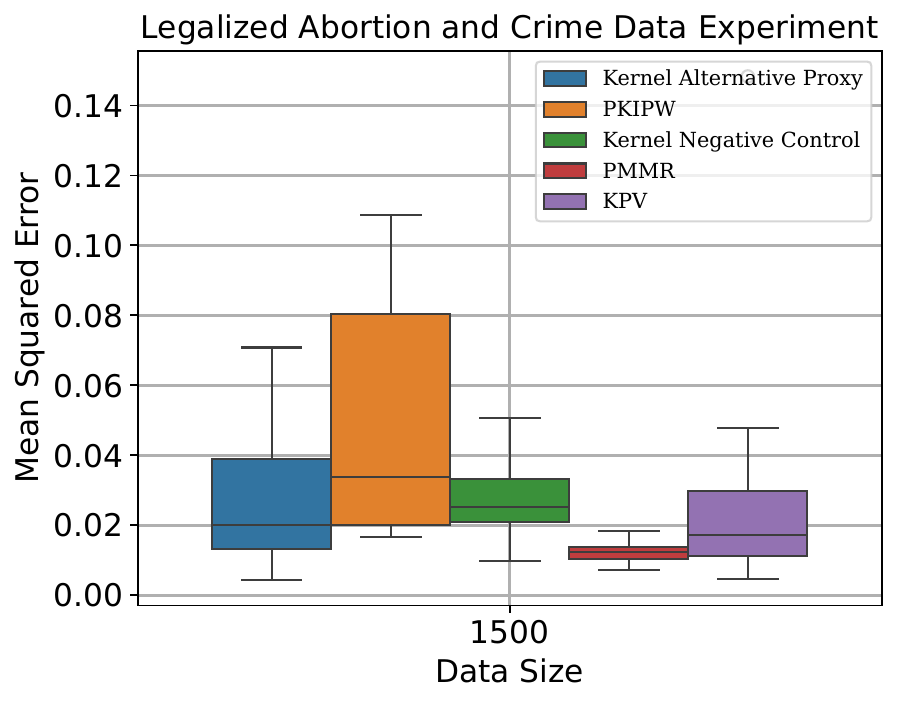}
\label{fig:alternativeProxyMethodATEComparisonAbortion}}
\subfloat[]
{\includegraphics[trim = {0cm 0cm 0cm 0.0cm},clip,width=0.253\textwidth]{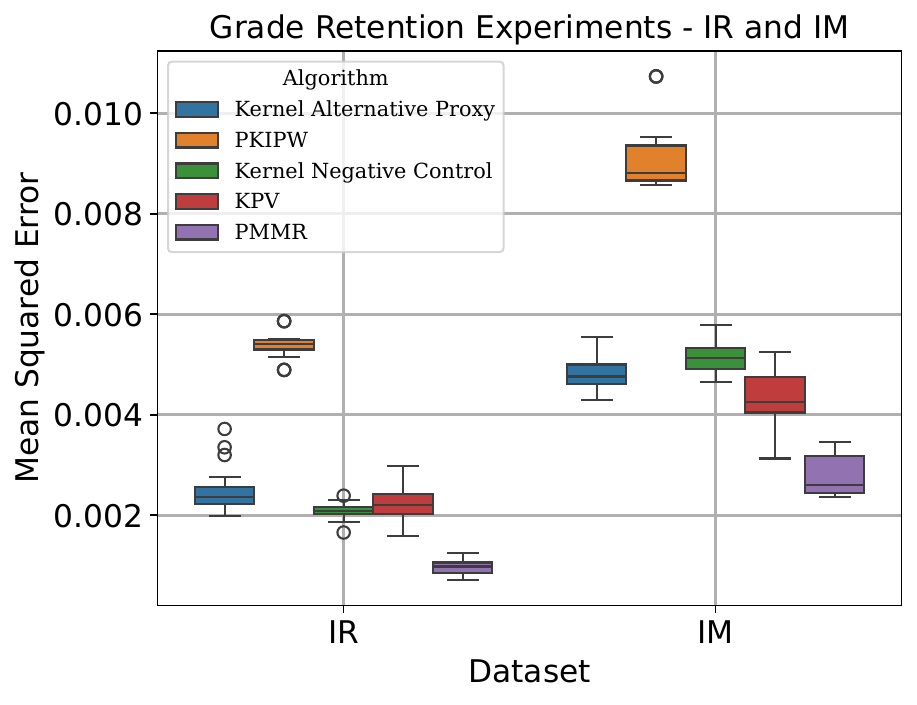}
\label{fig:alternativeProxyMethodATEComparisonDeaner}}
\newline\caption{Dose-response curve estimation across various datasets and algorithms: \emph{Kernel Alternative Proxy} (Ours), PKIPW \citep{wu2024doubly}, Kernel Negative Control \citep{singh2023kernelmethodsunobservedconfounding}, KPV \citep{Mastouri2021ProximalCL}, and PMMR \citep{Mastouri2021ProximalCL}. (a) Synthetic low-dimensional setting, (b) dSprite dataset, (c) legalized abortion and crime dataset, and (d) grade retention and cognitive outcome datasets.}
\label{fig:alternativeProxyMethodATEComparison}
\end{figure*}
In this section, we assess the empirical performance of our proposed framework for estimating causal structural functions using both synthetic data and real-world tasks. We compare our method to other PCL frameworks, including Proximal Kernel Inverse Probability Weighted (PKIPW) \citep{wu2024doubly}, Kernel Negative Control \citep{singh2023kernelmethodsunobservedconfounding}, Proxy Maximum Moment Restriction (PMMR) \citep{Mastouri2021ProximalCL} and Kernel Proxy Variable (KPV) \citep{Mastouri2021ProximalCL}. Additionally, for an ATT experiment, we compare our method to the Kernel-ATT algorithm proposed in \citep{RahulKernelCausalFunctions}, which assumes access to the confounding variable $U$. For each experiment (except those involving PKIPW), we employed a Gaussian kernel $k_\gF(f_i, f_j) = \exp(- \|f_i - f_j\|^2_2/(2 l^2))$ unless otherwise stated, where $l$ is the length-scale of the kernel. The Gaussian kernel's length scale was selected using the median interpoint distance heuristic, if not specified otherwise. In the PKIPW experiments, we used the Epanechnikov kernel, consistent with the original implementation of \citet{wu2024doubly}. To select the regularization parameters $\lambda_1$ and $\lambda_3$ (and $\zeta$ for the ATT) in our proposed methods, we employed leave-one-out cross-validation (LOOCV), which has a closed-form expression in the case of kernel ridge regression. For tuning the second-stage regularization parameter $\lambda_2$, we used the first-stage data as a held-out set to measure the validation loss, with added complexity regularization to avoid overfitting \citep{pmlr-v151-meanti22a}. Additional details, including ablation studies, hyperparameter selection procedures, and a GitHub link to our implementation can be found in S.M. (Sec. \ref{sec:SupplementaryNumericalExperiments}). \looseness=-1
\subsection{Dose-Response Experiments}
\label{sec:DoseResponseExperiments}
We assess the performance of our proposed ATE algorithm on four datasets that are described below.\looseness=-1

\textbf{{Low-Dimensional Setting:}} We use the data generation process outlined by \citep{wu2024doubly}, 
which incorporates a non-linear relationship between treatment and outcome:
\begin{align*}
    &U_1 \sim \mathcal{U}[-1, 2], \enspace U_2 \sim \mathcal{U}[0, 1] - \mathbf{1}[0 \le U_1 \le 1],\\ &W = [U_2 + \mathcal{U}[-1, 1], U_1 + \mathcal{N}(0, 1)]\\
    &Z = [U_2 + \mathcal{N}(0, 1), U_1 + \mathcal{U}[-1, 1]], A := U_1 + \mathcal{N}(0, 1)\\
    &Y := 3 \cos(2(0.3U_2 + 0.3U_1 + 0.2) + 1.5A) + \mathcal{N}(0, 1),
\end{align*}
where $\mathcal{U}[a, b]$ denotes the uniform distribution over the interval $[a, b]$, and $\mathcal{N}(\mu, \sigma^2)$ denotes Gaussian distribution with mean $\mu$ and variance $\sigma^2$. We use training sets of size $500$, $1000$, and $2000$ in our experiments. Figure (\ref{fig:alternativeProxyMethodATEComparisonLowDim}) illustrates the mean squared error results, averaged over 30 realizations, comparing our method to other approaches. Our proposed method outperforms PKIPW and Kernel Negative Control, particularly as the size of the training data increases. 

\textbf{dSprite:} 
The \emph{Disentanglement testing Sprite dataset (dSprite)} contains images of size $64 \times 64$, described by latent parameters: \emph{scale}, \emph{rotation}, \emph{posX}, and \emph{posY} \citep{dsprites17}. This dataset has been used to test the disentenglament properties of unsupervised models \citep{higgins2017betavae}. \cite{xu_dSprite} introduced benchmark dataset for PCL setting, where they treat the \emph{dSprite} images as the high-dimensional treatments. Specifically, they consider the flattened image that is corrupted by Gaussian noise as the treatment, and the structural function of interest is defined as $f_{\text{ATE}}(A) = (\|B A\|_2^2 - 3000) / 500$, where $A \in \R^{4096}$ and $B \in \R^{10 \times 4096}$, with entries of $B$ given by $B_{ij} = |32 - j| / 32$. The outcome is defined by the expression $Y = 12 (\text{\emph{posY}} - 0.5)^2f_{\text{ATE}}(A) + \epsilon$ where $\epsilon \sim \mathcal{N}(0, 0.5)$. The treatment inducing proxy $Z \in \R^3$ includes three variables: \emph{scale}, \emph{rotation}, \emph{posX}. The outcome-inducing proxy is another \emph{dSprite} image that shares the same \emph{posY} variable as the treatment while the remaining variables are fixed as $\text{\emph{scale}} = 0.8$. $\text{\emph{rotation}} = 0$, and $\text{\emph{posX}} = 0.5$.  We use training sets of size $500$, $1000$, and $2000$ in our experiments. Figure (\ref{fig:alternativeProxyMethodATEComparisonDSprite}) shows the mean squared error results averaged over 30 realizations, comparing our method to other approaches. As the code of \citep{wu2024doubly} does not support the high dimensional treatment, we do not report results for PKIPW in this experiment. Our method outperforms Kernel Negative Control and achieves comparable performance to KPV and PMMR. 

\textbf{Legalized Abortion and Crime Dataset:} We analyze the data from \citep{LegalizedAbortionData}, as preprocessed by \citep{woody2020estimatingheterogeneouseffectscontinuous}, following a similar approach to \citep{Mastouri2021ProximalCL, wu2024doubly}. The data is sourced from the GitHub repository of the code published by \citep{Mastouri2021ProximalCL}\footnotemark{}. The key variables in the causal graph are summarized as follows: (i) treatment variable ($A$): effective abortion rate, (ii) outcome varible ($Y$): murder rate, (iii) treatment proxy variable ($Z$): generosity to aid families with dependent children, and (iv) outcome proxy variable ($W$): beer consumption per capita, log-prisoner population per capita, and concealed weapons law. 
The remaining variables are captured by the unobserved confounding variable set $U$. Figure (\ref{fig:alternativeProxyMethodATEComparisonAbortion}) demonstrates the mean squared error results averaged over $30$ realizations and compares our method to other approaches (we tested each of the 10 data files\footnotemark[\value{footnote}] with three different data splits). Our method outperforms PKIPW and delivers comparable results to other kernel-based methods. Compared to Kernel Negative Control, our method achieves a lower mean squared error but exhibits a higher variance.\footnotetext{\url{https://github.com/yuchen-zhu/kernel_proxies}}

\textbf{Grade Retention and Cognitive Outcome:}
We apply our proposed method to study the effect of grade retention on the long-term cognitive outcome using data from the ECLS-K panel study \citep{Fruehwirth_GradeRetentions, deaner2023proxycontrolspaneldata}. We obtain the data from \citep{Mastouri2021ProximalCL},\footnotemark[\value{footnote}] where the key variables are as follows: (i) treatment variable ($A$): grade retention, (ii) outcome variable ($Y$): cognitive test scores in Maths and Readings at age $11$, (iii) treatment proxy variable ($Z$): the average of $1$st/$2$nd and $3$rd/$4$th year elementary scores, and (iv) outcome proxy variable ($W$): the cognitive and behavioral test scores from kindergarten.
Figure (\ref{fig:alternativeProxyMethodATEComparisonDeaner}) presents the mean squared error results averaged over $30$ realizations ($3$ realizations for each of the $10$ data files), along with a comparison to other methods. In both of the datasets  (IR: Reading grade retention; IM: Math grade retention), our proposed method performs better than PKIPW. 


\textbf{Further Comparison of Our Approach With Outcome Bridge-Based Methods:}
A key question as a result of our experiments is whether our method outperforms outcome bridge-based methods under certain conditions. To address this, we conduct an ablation study, detailed in S.M. (Sec. \ref{sec:ComparisonOfTreatmentVsOutcomeBridge_Appendix}), comparing both approaches across six synthetic settings that vary the informativeness of two proxy variables, $Z$ and $W$, relative to the confounders. The results, summarized in Table (\ref{tab:ATE_TreatmentProxy_vs_Outcome_Proxy_Table1}), indicate that our method performs better in settings where $W$ is more informative, while outcome bridge-based methods excel when $Z$ is more informative. Our analysis suggests that our method is more robust to violations of the completeness Assumption (\ref{assum:ExistenceCompletenessAssumption})—which ensures the existence of treatment bridge functions—while outcome bridge-based methods depend on this assumption for identifiability. We hypothesize that our method is more sensitive to violations of the identifiability condition (as defined by Assumption \ref{assum:AlternativeProxyAssumptionCompleteness1}) than to violations of the bridge function existence condition. Further experiments with the Job Corps dataset in S.M. (Sec. \ref{sec:Appendix_JobCorpsExperiments}) support these findings and highlight the complementary strengths of treatment and outcome bridge-based methods. Future work will explore these trade-offs in greater depth.
\subsection{Synthetic Data Experiment for ATT}
To demonstrate the effectiveness of our proposed method in conditional dose-response curve estimation, we use the low-dimensional synthetic data setting from the previous section. In particular, we train  our method to estimate $f_{\text{ATT}}(a, a')$ for different values of $a'$. Figure (\ref{fig:alternativeProxyMethodATTComparison}) shows dose-response estimates for $a' \in \{-1, -0.5, 0.25, 0.5\}$ using data size of $2000$. We compare with Kernel-ATT \citep{RahulKernelCausalFunctions} and Kernel Negative Control \citep{singh2023kernelmethodsunobservedconfounding}. The Kernel-ATT algorithm assumes access to the confounding variables $U$ so we used the variables $(A, Y, U)$ for this algorithm, making it an oracle method. Notably, our method produces results closer to Kernel-ATT than achieved by Kernel Negative Control. S.M. (Sec. \ref{sec:Appendix_JobCorpsExperiments}) provides additional experimental results on the Job Corps dataset for conditional dose-response.
\begin{figure}[ht!]
\centering
\subfloat[]
{\includegraphics[trim = {0cm 0cm 0cm 0.0cm},clip,width=0.225\textwidth]{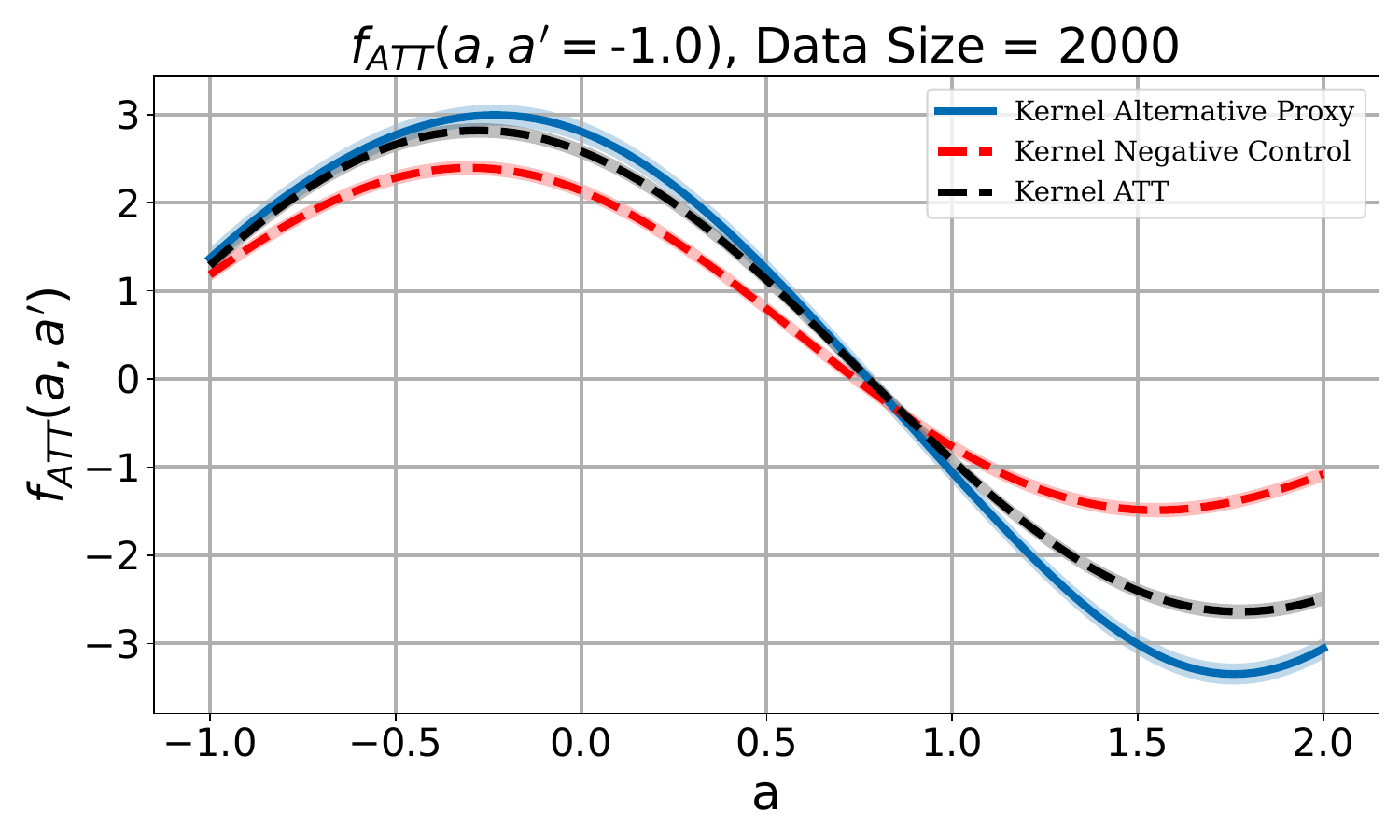}}
\subfloat[]
{\includegraphics[trim = {0cm 0cm 0cm 0.0cm},clip,width=0.225\textwidth]{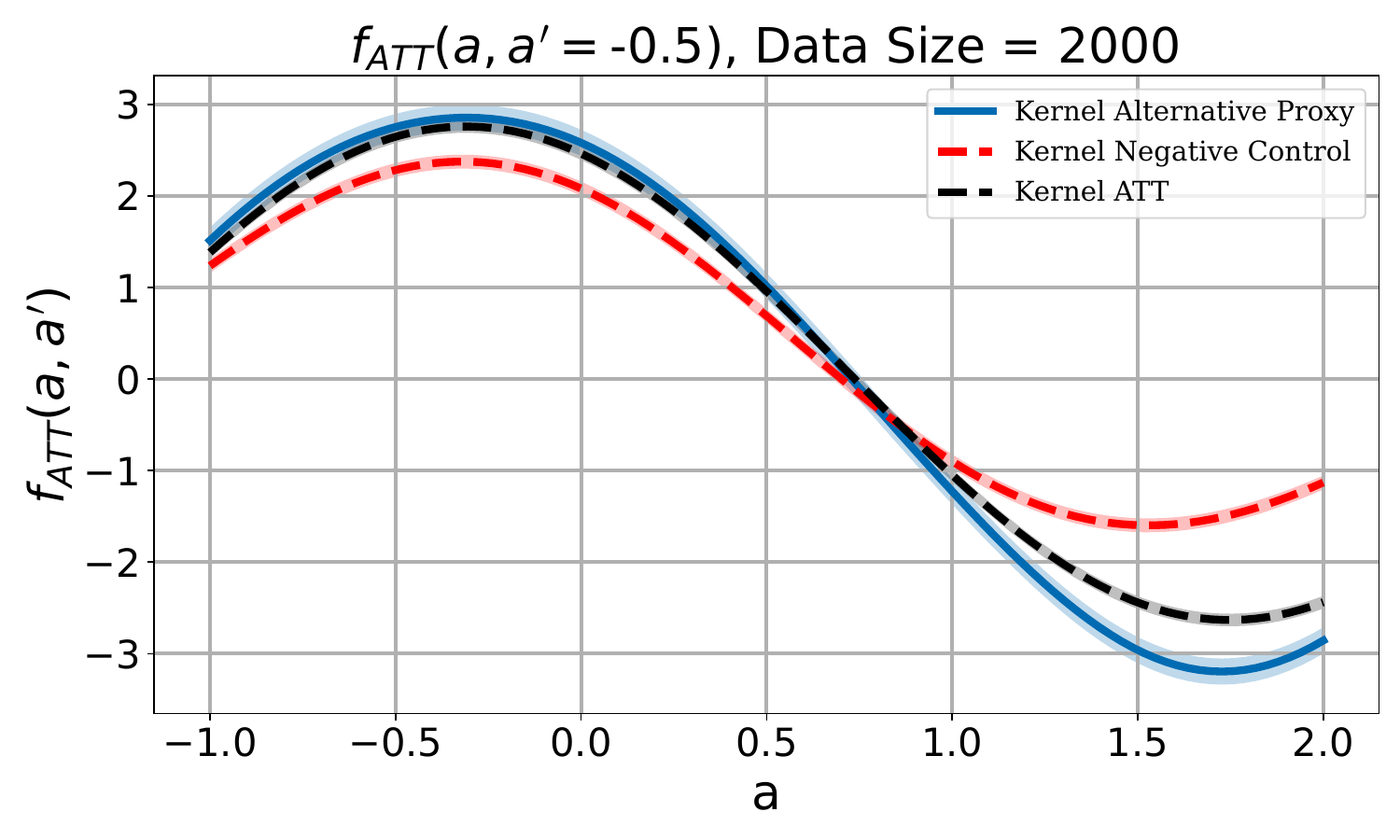}}

\centering 
\hspace{0.03cm}
\subfloat[]
{\includegraphics[trim = {0cm 0cm 0cm 0.0cm},clip,width=0.225\textwidth]{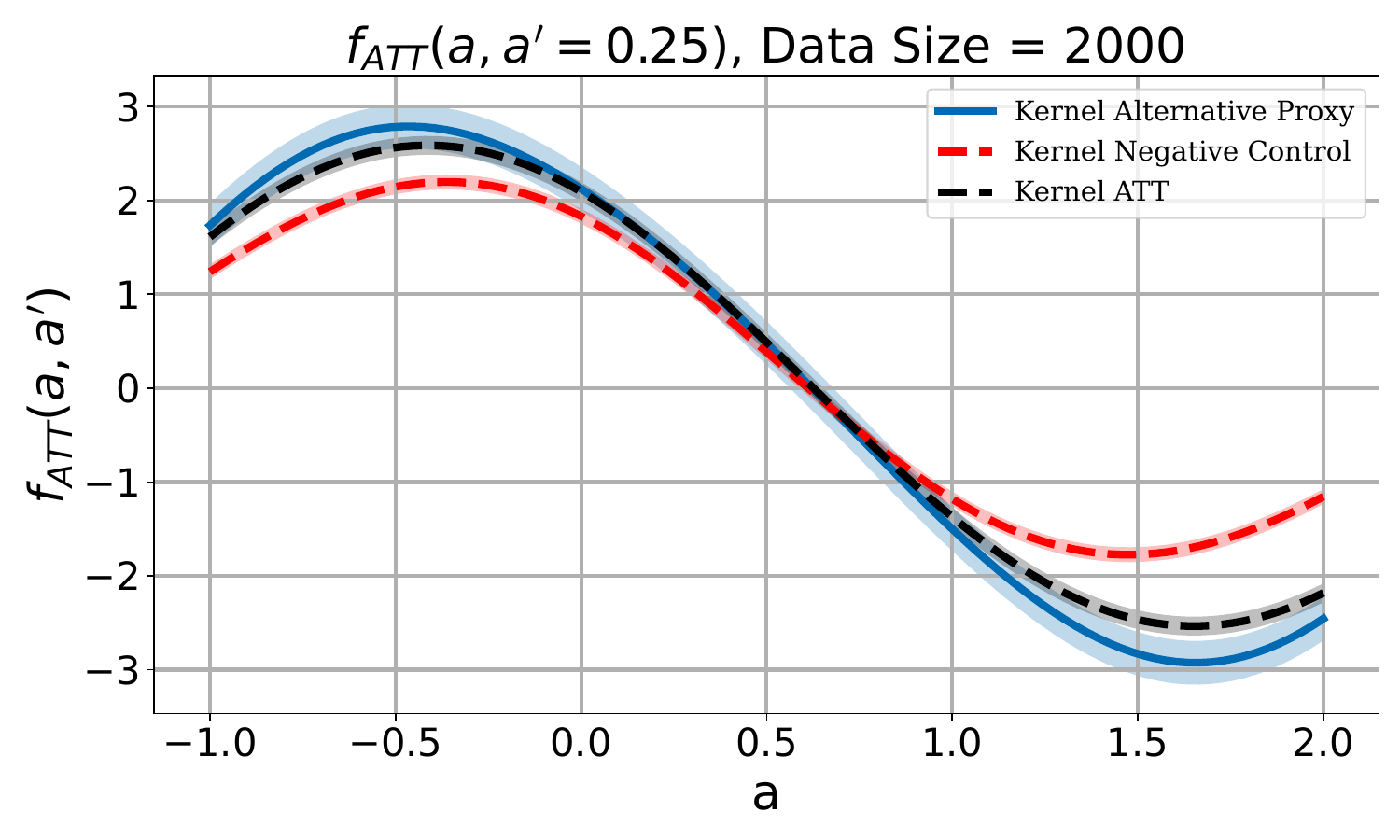}}
\subfloat[]
{\includegraphics[trim = {0cm 0cm 0cm 0.0cm},clip,width=0.225\textwidth]{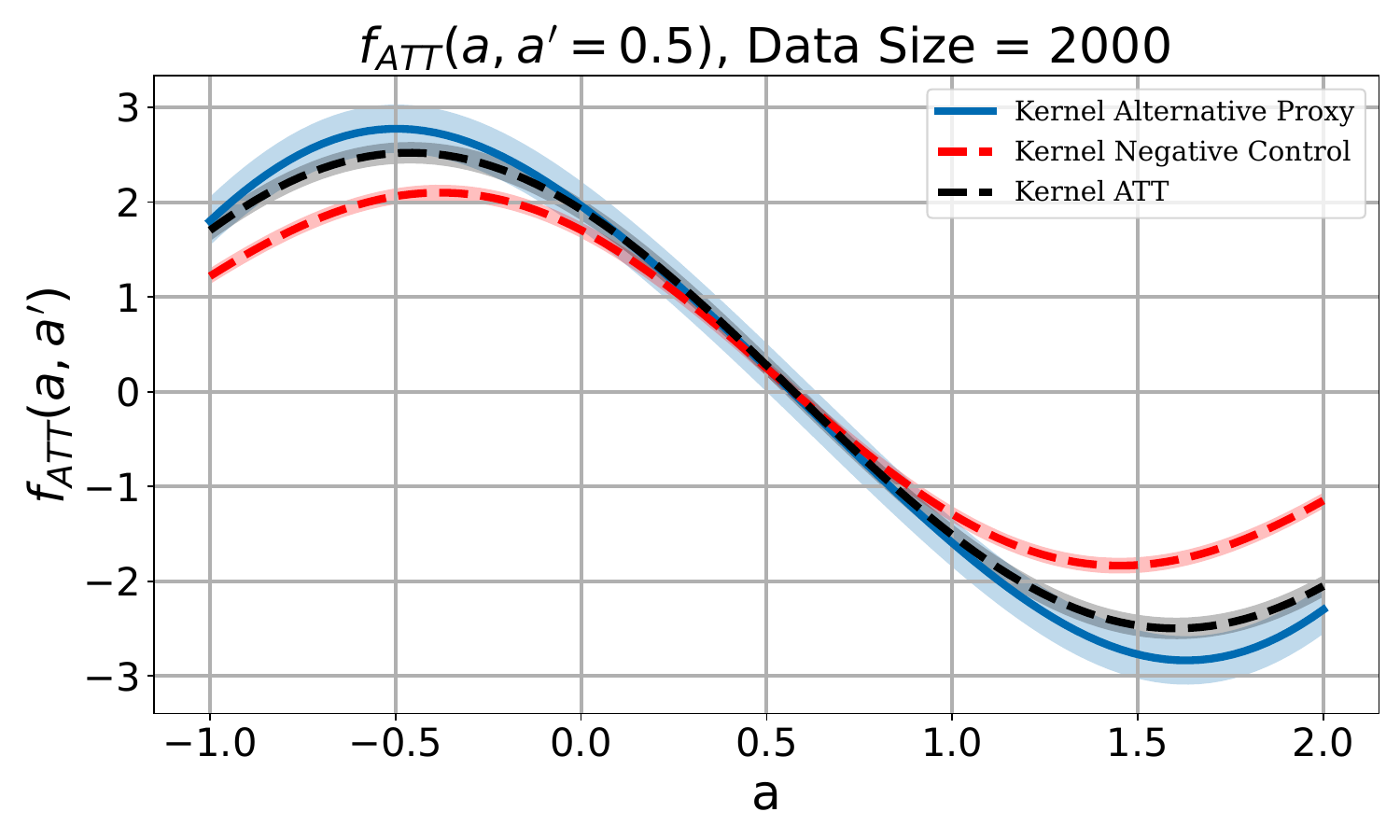}}
\newline\caption{Conditional dose-response curve estimation for synthetic low-dimensional data across $a'$ values and algorithms (averaged over 30 different runs) - mean solid line and standard deviation envelopes.\looseness=-1}
\label{fig:alternativeProxyMethodATTComparison}
\end{figure}
\section{CONCLUSION}
\label{sec:Conclusion}
We propose a methodology for proxy causal learning that leverages a treatment bridge function. Our method enables the recovery of causal effects in the graphical model illustrated in Figure (\ref{fig:ProxyCausalDAG}). It requires access to two proxy variables to the latent confounding variable, along with widely used completeness assumptions in the PCL setting. Our approach is practical for two key reasons. First, it avoids density ratio estimation - a challenging task in high dimensions. This enables strong performance even for high dimensional treatments, as demonstrated in the dSprite experiment. Second, the RKHS formulation of the problem allows for strong consistency guarantees, and closed form solutions that are easily implemented via matrix operations. 

\subsubsection*{Acknowledgements}
Bariscan Bozkurt, Dimitri Meunier, Liyuan Xu, and Arthur Gretton are supported by the Gatsby Charitable Foundation. We also thank the AISTATS 2025 reviewers for their valuable feedback and discussions.

\bibliography{aistats_bibliography}

\begin{thebibliography}{}

\bibitem[Alabdulmohsin et~al., 2023]{pmlr-v206-alabdulmohsin23a}
Alabdulmohsin, I., Chiou, N., D'Amour, A., Gretton, A., Koyejo, S., Kusner, M.~J., Pfohl, S.~R., Salaudeen, O., Schrouff, J., and Tsai, K. (2023).
\newblock Adapting to latent subgroup shifts via concepts and proxies.
\newblock In Ruiz, F., Dy, J., and van~de Meent, J.-W., editors, {\em Proceedings of The 26th International Conference on Artificial Intelligence and Statistics}, volume 206 of {\em Proceedings of Machine Learning Research}, pages 9637--9661. PMLR.

\bibitem[Aubin, 2011]{aubin2011applied}
Aubin, J.-P. (2011).
\newblock {\em Applied functional analysis}.
\newblock John Wiley \& Sons.

\bibitem[Bang and Robins, 2005]{BanRob05}
Bang, H. and Robins, J. (2005).
\newblock Doubly robust estimation in missing data and causal inference models.
\newblock {\em Biometrics}, 61:962–--972.

\bibitem[Ben-Israel and Greville, 2006]{ben2006generalized}
Ben-Israel, A. and Greville, T.~N. (2006).
\newblock {\em Generalized inverses: theory and applications}.
\newblock Springer Science \& Business Media.

\bibitem[Blanchard and M{\"u}cke, 2018]{blanchard2018optimal}
Blanchard, G. and M{\"u}cke, N. (2018).
\newblock Optimal rates for regularization of statistical inverse learning problems.
\newblock {\em Foundations of Computational Mathematics}, 18(4):971--1013.

\bibitem[Blundell et~al., 2012]{Blundell_measuringPriceResponsiveness}
Blundell, R., Horowitz, J., and Parey, M. (2012).
\newblock Measuring the price responsiveness of gasoline demand: Economic shape restrictions and nonparametric demand estimation.
\newblock {\em Quantitative Economics}, 3:29--51.

\bibitem[Buldygin and Kozachenko, 2000]{buldygin2000metric}
Buldygin, V.~V. and Kozachenko, I.~V. (2000).
\newblock {\em Metric characterization of random variables and random processes}, volume 188.
\newblock American Mathematical Soc.

\bibitem[Caponnetto and De~Vito, 2007]{caponnetto2007optimal}
Caponnetto, A. and De~Vito, E. (2007).
\newblock Optimal rates for the regularized least-squares algorithm.
\newblock {\em Foundations of Computational Mathematics}, 7:331--368.

\bibitem[Choi et~al., 2002]{CHOI20021173}
Choi, H.~K., Hernán, M.~A., Seeger, J.~D., Robins, J.~M., and Wolfe, F. (2002).
\newblock Methotrexate and mortality in patients with rheumatoid arthritis: a prospective study.
\newblock {\em The Lancet}, 359(9313):1173--1177.

\bibitem[Connors et~al., 1996]{The_effectiveness_of_right_heart_catheterization}
Connors, A., Speroff, T., Dawson, N., Thomas, C., Harrell, F., Wagner, D., Desbiens, N., Goldman, L., Wu, A., Califf, R., Fulkerson, W., Vidaillet, H., Broste, S., Bellamy, P., Lynn, J., and Knaus, W. (1996).
\newblock The effectiveness of right heart catheterization in the initial care of critically ill patients.
\newblock {\em Journal of the American Medical Association}, 276(11):889--897.

\bibitem[Cui et~al., 2024]{semiparametricProximalCausalInference}
Cui, Y., Pu, H., Shi, X., Miao, W., and Tchetgen~Tchetgen, E. (2024).
\newblock Semiparametric proximal causal inference.
\newblock {\em Journal of the American Statistical Association}, 119(546):1348--1359.

\bibitem[Deaner, 2023]{deaner2023proxycontrolspaneldata}
Deaner, B. (2023).
\newblock Proxy controls and panel data.

\bibitem[Donohue and Levitt, 2001]{LegalizedAbortionData}
Donohue, John~J., I. and Levitt, S.~D. (2001).
\newblock {The Impact of Legalized Abortion on Crime*}.
\newblock {\em The Quarterly Journal of Economics}, 116(2):379--420.

\bibitem[Engl et~al., 1996]{engl1996regularization}
Engl, H.~W., Hanke, M., and Neubauer, A. (1996).
\newblock {\em Regularization of inverse problems}, volume 375.
\newblock Springer Science \& Business Media.

\bibitem[Fischer and Steinwart, 2020]{fischer2020sobolev}
Fischer, S. and Steinwart, I. (2020).
\newblock Sobolev norm learning rates for regularized least-squares algorithms.
\newblock {\em Journal of Machine Learning Research}, 21(205):1--38.

\bibitem[Flores et~al., 2012]{Flores_JobCorps}
Flores, C.~A., Flores-Lagunes, A., Gonzalez, A., and Neumann, T.~C. (2012).
\newblock Estimating the effects of length of exposure to instruction in a training program: The case of job corps.
\newblock {\em The Review of Economics and Statistics}, 94(1):153--171.

\bibitem[Fruehwirth et~al., 2016]{Fruehwirth_GradeRetentions}
Fruehwirth, J.~C., Navarro, S., and Takahashi, Y. (2016).
\newblock {How the Timing of Grade Retention Affects Outcomes: Identification and Estimation of Time-Varying Treatment Effects}.
\newblock {\em Journal of Labor Economics}, 34(4):979--1021.

\bibitem[Gretton, 2013]{gretton2013introduction}
Gretton, A. (2013).
\newblock Introduction to {RKHS}, and some simple kernel algorithms.
\newblock Advanced Topics in Machine Learning lecture, University College London.

\bibitem[Gr{\"u}new{\"a}lder et~al., 2012a]{grunewalder2012conditional}
Gr{\"u}new{\"a}lder, S., Lever, G., Baldassarre, L., Patterson, S., Gretton, A., and Pontil, M. (2012a).
\newblock Conditional mean embeddings as regressors.
\newblock In {\em International Conference on Machine Learningg}.

\bibitem[Gr{\"u}new{\"a}lder et~al., 2012b]{GruLevBalPonetal12}
Gr{\"u}new{\"a}lder, S., Lever, G., Baldassarre, L., Pontil, M., and Gretton, A. (2012b).
\newblock Modelling transition dynamics in mdps with rkhs embeddings.
\newblock In {\em International Conference on Machine Learning}.

\bibitem[Higgins et~al., 2017]{higgins2017betavae}
Higgins, I., Matthey, L., Pal, A., Burgess, C., Glorot, X., Botvinick, M., Mohamed, S., and Lerchner, A. (2017).
\newblock beta-{VAE}: Learning basic visual concepts with a constrained variational framework.
\newblock In {\em International Conference on Learning Representations}.

\bibitem[Hill, 2011]{Hill_BayesianNonparametric}
Hill, J.~L. (2011).
\newblock Bayesian nonparametric modeling for causal inference.
\newblock {\em Journal of Computational and Graphical Statistics}, 20:217--240.

\bibitem[Johansson et~al., 2016]{pmlr_v48_johansson16}
Johansson, F., Shalit, U., and Sontag, D. (2016).
\newblock Learning representations for counterfactual inference.
\newblock In {\em International Conference on Machine Learning}.

\bibitem[Kallus et~al., 2021]{Kallus2021Causal}
Kallus, N., Mao, X., and Uehara, M. (2021).
\newblock Causal inference under unmeasured confounding with negative controls: A minimax learning approach.

\bibitem[Kanamori et~al., 2009]{Kanamori_LeastSquareImportance}
Kanamori, T., Hido, S., and Sugiyama, M. (2009).
\newblock A least-squares approach to direct importance estimation.
\newblock {\em J. Mach. Learn. Res.}, 10:1391–1445.

\bibitem[Klebanov et~al., 2020]{klebanov2020rigorous}
Klebanov, I., Schuster, I., and Sullivan, T.~J. (2020).
\newblock A rigorous theory of conditional mean embeddings.
\newblock {\em SIAM Journal on Mathematics of Data Science}, 2(3):583--606.

\bibitem[Kompa et~al., 2022]{kompa2022deep}
Kompa, B., Bellamy, D., Kolokotrones, T., Beam, A., et~al. (2022).
\newblock Deep learning methods for proximal inference via maximum moment restriction.
\newblock {\em Advances in Neural Information Processing Systems}.

\bibitem[Kress, 2013]{linearIntegralEquationskress2013}
Kress, R. (2013).
\newblock {\em Linear Integral Equations}.
\newblock Applied Mathematical Sciences. Springer New York.

\bibitem[Kuroki and Pearl, 2014]{Kuroki2014Mesurement}
Kuroki, M. and Pearl, J. (2014).
\newblock {Measurement bias and effect restoration in causal inference}.
\newblock {\em Biometrika}, 101(2):423--437.

\bibitem[Li et~al., 2022]{li2022optimal}
Li, Z., Meunier, D., Mollenhauer, M., and Gretton, A. (2022).
\newblock Optimal rates for regularized conditional mean embedding learning.
\newblock {\em Advances in Neural Information Processing Systems}.

\bibitem[Li et~al., 2024]{li2024towards}
Li, Z., Meunier, D., Mollenhauer, M., and Gretton, A. (2024).
\newblock Towards optimal sobolev norm rates for the vector-valued regularized least-squares algorithm.
\newblock {\em Journal of Machine Learning Research}, 25(181):1--51.

\bibitem[Mastouri et~al., 2021]{Mastouri2021ProximalCL}
Mastouri, A., Zhu, Y., Gultchin, L., Korba, A., Silva, R., Kusner, M.~J., Gretton, A., and Muandet, K. (2021).
\newblock Proximal causal learning with kernels: Two-stage estimation and moment restriction.
\newblock In {\em International Conference on Machine Learning}.

\bibitem[Matthey et~al., 2017]{dsprites17}
Matthey, L., Higgins, I., Hassabis, D., and Lerchner, A. (2017).
\newblock dsprites: Disentanglement testing sprites dataset.
\newblock https://github.com/deepmind/dsprites-dataset/.

\bibitem[Meanti et~al., 2022]{pmlr-v151-meanti22a}
Meanti, G., Carratino, L., De~Vito, E., and Rosasco, L. (2022).
\newblock Efficient hyperparameter tuning for large scale kernel ridge regression.
\newblock In Camps-Valls, G., Ruiz, F. J.~R., and Valera, I., editors, {\em Proceedings of The 25th International Conference on Artificial Intelligence and Statistics}, volume 151 of {\em Proceedings of Machine Learning Research}, pages 6554--6572. PMLR.

\bibitem[Meunier et~al., 2024]{meunier2024nonparametric}
Meunier, D., Li, Z., Christensen, T., and Gretton, A. (2024).
\newblock Nonparametric instrumental regression via kernel methods is minimax optimal.
\newblock {\em arXiv preprint arXiv:2411.19653}.

\bibitem[Meunier et~al., 2023]{meunier2023nonlinear}
Meunier, D., Li, Z., Gretton, A., and Kpotufe, S. (2023).
\newblock Nonlinear meta-learning can guarantee faster rates.
\newblock {\em arXiv preprint arXiv:2307.10870}.

\bibitem[Meunier et~al., 2025]{meunier2025optimal}
Meunier, D., Shen, Z., Mollenhauer, M., Gretton, A., and Li, Z. (2025).
\newblock Optimal rates for vector-valued spectral regularization learning algorithms.
\newblock {\em Advances in Neural Information Processing Systems}, 37:82514--82559.

\bibitem[Miao et~al., 2018]{Miao2018Identifying}
Miao, W., Geng, Z., and {Tchetgen Tchetgen}, E. (2018).
\newblock Identifying causal effects with proxy variables of an unmeasured confounder.
\newblock {\em Biometrika}, 105(4):987—993.

\bibitem[Mollenhauer and Koltai, 2020]{mollenhauer2020nonparametric}
Mollenhauer, M. and Koltai, P. (2020).
\newblock Nonparametric approximation of conditional expectation operators.
\newblock {\em arXiv preprint arXiv:2012.12917}.

\bibitem[Mollenhauer et~al., 2022]{mollenhauer2022learning}
Mollenhauer, M., M{\"u}cke, N., and Sullivan, T. (2022).
\newblock Learning linear operators: Infinite-dimensional regression as a well-behaved non-compact inverse problem.
\newblock {\em arXiv preprint arXiv:2211.08875}.

\bibitem[Muandet et~al., 2020]{Muandet2020KernelCM}
Muandet, K., Jitkrittum, W., and K{\"u}bler, J.~M. (2020).
\newblock Kernel conditional moment test via maximum moment restriction.
\newblock In {\em Conference on Uncertainty in Artificial Intelligence}.

\bibitem[Park and Muandet, 2020]{park2020measure}
Park, J. and Muandet, K. (2020).
\newblock A measure-theoretic approach to kernel conditional mean embeddings.
\newblock {\em Advances in Neural Information Processing Systems}.

\bibitem[Pearl, 2009]{Pearl_2009}
Pearl, J. (2009).
\newblock {\em Causality}.
\newblock Cambridge University Press, 2 edition.

\bibitem[Pearl and Robins, 1995]{peaRob95}
Pearl, J. and Robins, J. (1995).
\newblock Probabilistic evaluation of sequential plans from causal models with hidden variables.
\newblock In {\em Uncertainty in Artificial Intelligence: Proceedings of the Eleventh Conference on Artificial Intelligence}, pages 444--453.

\bibitem[Pinelis, 1994]{pinelis1994optimum}
Pinelis, I. (1994).
\newblock Optimum bounds for the distributions of martingales in banach spaces.
\newblock {\em The Annals of Probability}, pages 1679--1706.

\bibitem[Rosenbaum and Rubin, 1983]{rosRub83}
Rosenbaum, P. and Rubin, D. (1983).
\newblock The central role of the propensity score in observational studies for causal effects.
\newblock {\em Biometrika}, 70:41--55.

\bibitem[Schochet et~al., 2008]{Schochet_JobCorps}
Schochet, P.~Z., Burghardt, J., and McConnell, S. (2008).
\newblock Does job corps work? impact findings from the national job corps study.
\newblock {\em American Economic Review}, 98(5):1864–86.

\bibitem[Singh, 2023]{singh2023kernelmethodsunobservedconfounding}
Singh, R. (2023).
\newblock Kernel methods for unobserved confounding: Negative controls, proxies, and instruments.

\bibitem[Singh et~al., 2023]{RahulKernelCausalFunctions}
Singh, R., Xu, L., and Gretton, A. (2023).
\newblock {Kernel methods for causal functions: dose, heterogeneous and incremental response curves}.
\newblock {\em Biometrika}, 111(2):497--516.

\bibitem[Smola et~al., 2007]{HilbertSpaceEmbeddingforDistributions}
Smola, A., Gretton, A., Song, L., and Sch{\"o}lkopf, B. (2007).
\newblock A hilbert space embedding for distributions.
\newblock In Hutter, M., Servedio, R.~A., and Takimoto, E., editors, {\em Algorithmic Learning Theory}, pages 13--31, Berlin, Heidelberg. Springer Berlin Heidelberg.

\bibitem[Song et~al., 2009]{HilbertSpaceEmbeddingforConditionalDistributions}
Song, L., Huang, J., Smola, A., and Fukumizu, K. (2009).
\newblock Hilbert space embeddings of conditional distributions with applications to dynamical systems.
\newblock In {\em International Conference on Machine Learning}.

\bibitem[Steinwart and Christmann, 2008]{Ingo_support_vector_machines}
Steinwart, I. and Christmann, A. (2008).
\newblock {\em Support Vector Machines}.
\newblock Springer Publishing Company, Incorporated, 1st edition.

\bibitem[Tsai et~al., 2024]{pmlr-v238-tsai24b}
Tsai, K., R~Pfohl, S., Salaudeen, O., Chiou, N., Kusner, M., D'Amour, A., Koyejo, S., and Gretton, A. (2024).
\newblock Proxy methods for domain adaptation.
\newblock In Dasgupta, S., Mandt, S., and Li, Y., editors, {\em Proceedings of The 27th International Conference on Artificial Intelligence and Statistics}, volume 238 of {\em Proceedings of Machine Learning Research}, pages 3961--3969. PMLR.

\bibitem[Wang~Miao and Tchetgen, 2024]{Miao01102024}
Wang~Miao, Xu~Shi, Y.~L. and Tchetgen, E. J.~T. (2024).
\newblock A confounding bridge approach for double negative control inference on causal effects.
\newblock {\em Statistical Theory and Related Fields}, 8(4):262--273.

\bibitem[Wasserman, 2006]{Wasserman06}
Wasserman, L. (2006).
\newblock {\em All of Nonparametric Statistics}.
\newblock Springer.

\bibitem[Woody et~al., 2020]{woody2020estimatingheterogeneouseffectscontinuous}
Woody, S., Carvalho, C.~M., Hahn, P.~R., and Murray, J.~S. (2020).
\newblock Estimating heterogeneous effects of continuous exposures using bayesian tree ensembles: revisiting the impact of abortion rates on crime.

\bibitem[Wu et~al., 2024]{wu2024doubly}
Wu, Y., Fu, Y., Wang, S., and Sun, X. (2024).
\newblock Doubly robust proximal causal learning for continuous treatments.
\newblock In {\em International Conference on Learning Representations}.

\bibitem[Xu and Gretton, 2023]{xu_dSprite}
Xu, L. and Gretton, A. (2023).
\newblock Causal benchmark based on disentangled image dataset.

\bibitem[Xu and Gretton, 2024]{xu2024kernelsingleproxycontrol}
Xu, L. and Gretton, A. (2024).
\newblock Kernel single proxy control for deterministic confounding.

\bibitem[Xu et~al., 2021]{xu2021deep}
Xu, L., Kanagawa, H., and Gretton, A. (2021).
\newblock Deep proxy causal learning and its application to confounded bandit policy evaluation.
\newblock In {\em Advances in Neural Information Processing Systems}.

\bibitem[Yao et~al., 2018]{NEURIPS2018_a50abba8}
Yao, L., Li, S., Li, Y., Huai, M., Gao, J., and Zhang, A. (2018).
\newblock Representation learning for treatment effect estimation from observational data.
\newblock In {\em Advances in Neural Information Processing Systems}.

\end{thebibliography}

\clearpage

\section*{Checklist}

 \begin{enumerate}

 \item For all models and algorithms presented, check if you include:
 \begin{enumerate}
   \item A clear description of the mathematical setting, assumptions, algorithm, and/or model. \bboz{[Yes]}. We provide a detailed account of the necessary assumptions and mathematical settings for the algorithms presented in Sections (\ref{sec:AlternativeProxyIdentification}) and (\ref{sec:AlternativeProxyAlgorithms}). The derivations of the algorithms are elaborated in S.M. (Sec. \ref{sec:structural_funcs_algorithm_proof}).
   \item An analysis of the properties and complexity (time, space, sample size) of any algorithm. \bboz{[Yes]}. We include ablation studies, hyperparameter tuning procedures, and discussions on complexity in the S.M. (Sec. \ref{sec:SupplementaryNumericalExperiments}).
   \item (Optional) Anonymized source code, with specification of all dependencies, including external libraries. \bboz{[Yes]}. We provide an anonymous GitHub URL for our implementation in the S.M. (Sec. \ref{sec:NumericalExperiments}). (For the camera-ready version of the paper, we include a non-anonymized GitHub URL in S.M.(Sec. \ref{sec:FurtherNotesOnNumericalExperiments}).)
 \end{enumerate}

 \item For any theoretical claim, check if you include:
 \begin{enumerate}
   \item Statements of the full set of assumptions of all theoretical results. \bboz{[Yes]}.
   \item Complete proofs of all theoretical results. \bboz{[Yes]}. For identifiability and existence proofs, see the S.M. (Sec. \ref{sec:Identification_Appendix}) and (Sec. \ref{sec:existenceOfBridgeFunction}). Consistency proofs are provided S.M.  (\ref{sec:consistency_appendix}). For algorithm derivations, refer to the S.M. (\ref{sec:SupplementaryNumericalExperiments})
   \item Clear explanations of any assumptions. \bboz{[Yes]}.   
 \end{enumerate}

 \item For all figures and tables that present empirical results, check if you include:
 \begin{enumerate}
   \item The code, data, and instructions needed to reproduce the main experimental results (either in the supplemental material or as a URL). \bboz{[Yes]}. See the anonymous GitHub URL for the implementation code, which includes a README.md file with instructions for reproducing the results.
   \item All the training details (e.g., data splits, hyperparameters, how they were chosen). \bboz{[Yes]}. Refer to the S.M. (\ref{sec:SupplementaryNumericalExperiments}) for hyperparameter tuning procedures, ablation studies, and further details on numerical experiments.
     \item A clear definition of the specific measure or statistics and error bars (e.g., with respect to the random seed after running experiments multiple times). \bboz{[Yes]}. Our experimental results are based on multiple realizations, with box plots and/or standard deviation envelopes illustrating the variability.
     \item A description of the computing infrastructure used. (e.g., type of GPUs, internal cluster, or cloud provider). \bboz{[No]}. We did not include details on computational resources since our method does not require advanced computing infrastructure. Each causal learning experiment can be run on a standard computer.
 \end{enumerate}

 \item If you are using existing assets (e.g., code, data, models) or curating/releasing new assets, check if you include:
 \begin{enumerate}
   \item Citations of the creator If your work uses existing assets. \bboz{[Yes]}. For 'Legalized Abortion and Crime Dataset', we cite \citep{LegalizedAbortionData, woody2020estimatingheterogeneouseffectscontinuous, Mastouri2021ProximalCL} in Section (\ref{sec:NumericalExperiments}). For 'Grade Retention Experiment', we cite \citep{Fruehwirth_GradeRetentions, deaner2023proxycontrolspaneldata} in Section (\ref{sec:NumericalExperiments}). The other experiments are based on synthetic data generation processes.
   \item The license information of the assets, if applicable. \bboz{[Not Applicable]}.
   \item New assets either in the supplemental material or as a URL, if applicable. \bboz{[Not Applicable]}
   \item Information about consent from data providers/curators. \bboz{[Not Applicable]}
   \item Discussion of sensible content if applicable, e.g., personally identifiable information or offensive content. \bboz{[Not Applicable]}
 \end{enumerate}

 \item If you used crowdsourcing or conducted research with human subjects, check if you include:
 \begin{enumerate}
   \item The full text of instructions given to participants and screenshots. \bboz{[Not Applicable]}
   \item Descriptions of potential participant risks, with links to Institutional Review Board (IRB) approvals if applicable. \bboz{[Not Applicable]}
   \item The estimated hourly wage paid to participants and the total amount spent on participant compensation. \bboz{[Not Applicable]}
 \end{enumerate}

 \end{enumerate}

\onecolumn
\aistatstitle{Supplementary Materials for Density Ratio-based Proxy Causal Learning Without Density Ratios}

Section (\ref{sec:Comparsion_to_Baselines_Appendix}) reviews the previous work on treatment bridge function-based and outcome bridge function-based proxy causal learning frameworks. In Section~(\ref{sec:Identification_Appendix}), we provide the proofs for identifiability of the dose-response and the conditional dose-response curves. Section~(\ref{sec:existenceOfBridgeFunction}) contains the proofs for existence of the bridge function. In Section (\ref{sec:structural_funcs_algorithm_proof}), we derive the algorithms for estimating the dose-response and the conditional dose-response curves. The proofs for the consistency results of our approach are provided in Section~(\ref{sec:consistency_appendix}). Section (\ref{sec:SupplementaryNumericalExperiments}) offers the additional details on the numerical experiments and ablation studies while Section (\ref{sec:IdentifiabilityOfDoseResponseDiscreteCase}) discusses  the identifiability of the dose-response in the discrete case as a specific example. Finally, Section (\ref{sec:KernelAlternativeProxyWithAdditionalCovariates}) discusses the extension of our method to settings with additional observed confounding variables.

\section{COMPARISON WITH OTHER METHODS}
\label{sec:Comparsion_to_Baselines_Appendix}
In Section (\ref{sec:NumericalExperiments}) and S.M. (Sec. \ref{sec:AdditionalNumericalExperiments}), we compare our method with other proxy causal learning algorithms: Proximal Kernel Inverse Probability Weighted (PKIPW) \citep{wu2024doubly}, Kernel Proxy Variable (KPV) \citep{Mastouri2021ProximalCL}, Proximal Maximum Moment Restriction (PMMR) \citep{Mastouri2021ProximalCL}, and Kernel Negative Control (KNC) \citep{singh2023kernelmethodsunobservedconfounding}. Here, we further examine these baselines, focusing on their assumptions, bridge functions, and suitability for high-dimensional settings.

\paragraph{Proximal Kernel Inverse Probability Weighted (PKIPW):}  PKIPW extends the binary treatment framework of \citep{semiparametricProximalCausalInference} to continuous treatments \citep{wu2024doubly}. \cite{semiparametricProximalCausalInference} introduced a treatment bridge function $\varphi_0(z, a)$ satisfying $$\E[\varphi_0(Z, a) | W, A = a] = \frac{1}{p(A = a| W)}$$ to identify the ATE in the binary case via
\begin{align*}
    f_{ATE}(a) = \E[\mathbf{1}[A = a] Y \varphi_0(Z, A)].
\end{align*}
This result relies on the following completeness assumptions (Assumptions (10) and (11) in \citep{semiparametricProximalCausalInference}):
\begin{assumption}
    Let $\ell_1 : \gU \rightarrow \R$ and $\ell_2 : \gW \rightarrow \R$ be square integrable functions. Assume that the following conditions hold for all $a \in \gA$,:
    \begin{itemize}
        \item[i.] $\E[\ell_1(U) | W = w, A = a] = 0 \quad \forall w \in \gW$ if and only if $\ell_1(U) = 0 \quad p(U) - $almost everywhere
        \item[ii.] $\E[\ell_2(W) | Z = z, A = a] = 0 \quad \forall z \in \gZ$ if and only if $\ell_2(W) = 0 \quad p(W) - $almost everywhere
    \end{itemize}
    \label{assum:PKIPW_CompletenessAssumptions}
\end{assumption}

Furthermore, they leverage the following completeness assumption in order to obtain the uniqueness of the treatment bridge function:
\begin{assumption}
    Let $\ell : \gZ \rightarrow \R$ be any square integrable function. Assume that the following conditions hold for all $a \in \gA$:
    $\E[\ell(Z) | W = w, A = a] = 0 \quad \forall w \in \gW$ if and only if $\ell(Z) = 0 \quad p(Z) - $almost everywhere.
    \label{assum:PKIPW_CompletenessAssumptions2}
\end{assumption}

PKIPW adapts this to continuous treatments by replacing the indicator function with a kernel function:
\begin{align}
    f_{ATE}(a) = \E[\mathbf{1}[A = a] Y \varphi_0(Z, A)] \approx \frac{1}{n} \sum_{i = 1}^n K_l(a_i - a) \varphi_0(z_i, a) y_i,
\end{align}
where $K_l$ is a kernel function with the bandwidth variable $l$. In particular, their approximation relies on their result
\begin{align*}
\E[\mathbf{1}[A = a] Y \varphi_0(Z, A)] = \lim_{l \rightarrow 0}[K_l(A - a) Y \varphi_0(Z, A)]    
\end{align*}
\newpage
under the condition that $\E[\mathbf{1}[A = a] Y \varphi_0(Z, A)]$ is continuous and bounded uniformly with respect to $a$ (see Theorem 4.2 in \citep{wu2024doubly}).

PKIPW requires estimating the policy function $\frac{1}{p(A = a| W)}$ via kernel density estimation or conditional normalizing flows (CNFs) which adds computational complexity. In contrast, our approach bypasses density ratio estimation by modifying the bridge function (see Section (\ref{sec:AlternativeProxyAlgorithms})). Moreover, while PKIPW assumes uniqueness of the treatment bridge function with Assumption (\ref{assum:PKIPW_CompletenessAssumptions2}), we show that convergence to the minimum RKHS norm solution suffices for consistency. Additionally, PKIPW’s completeness condition on $Z$ (Assumption (\ref{assum:PKIPW_CompletenessAssumptions}-ii)) is stronger than our Assumption (\ref{assum:ExistenceCompletenessAssumption}). Thus, our approach is more feasible and computationally desirable as reinforced by the experimental results shown in Figures (\ref{fig:alternativeProxyMethodATEComparisonLowDim}), (\ref{fig:alternativeProxyMethodATEComparisonAbortion}), and (\ref{fig:alternativeProxyMethodATEComparisonDeaner}). 
In Figure (\ref{fig:alternativeProxyMethodATEComparisonDSprite}) for dSprite experiment with high dimensional treatment, we could not produce result for PKIPW due to its published code being limited to univariate treatments. However, our method successfully handles high-dimensional treatments, as evidenced by the results in the dSprite experiment, showcasing another advantage.

\paragraph{Outcome Bridge Function-Based Approaches with Kernel Methods:} \citep{Mastouri2021ProximalCL} and \citep{singh2023kernelmethodsunobservedconfounding} introduced kernel-based PCL approaches leveraging an outcome bridge function $h(w, a)$ that satisfies
\begin{align*}
    \E[Y | A = a, Z = z] = \int_{\gW} h(a, w) p (w | a, z) dw.
\end{align*}
Given this outcome bridge function, \cite{Miao01102024} showed that the dose-response can be identified as
\begin{align*}
    f_{ATE}(a) = \int h(a, w) p(w) dw.
\end{align*}
Both \citep{Mastouri2021ProximalCL} and \citep{singh2023kernelmethodsunobservedconfounding} rely on the following completeness conditions:
\begin{assumption}
    Let $\ell_1 : \gU \rightarrow \R$ and $\ell_2 : \gW \rightarrow \R$ be square integrable functions. Assume that the following conditions hold for all $a \in \gA$, $x \in \gX$:
    \begin{itemize}
        \item[i.] $\E[\ell_1(U) | Z = z, A = a] = 0 \quad \forall z \in \gZ$ if and only if $\ell_1(U) = 0 \quad p(U) - $almost everywhere
        \item[ii.] $\E[\ell_2(Z) | W = w, A = a] = 0 \quad \forall w \in \gW$ if and only if $\ell_2(Z) = 0 \quad p(Z) - $almost everywhere.
    \end{itemize}
    \label{assum:KPV_CompletenessAssumptions}
\end{assumption}
\citep{xu2021deep, deaner2023proxycontrolspaneldata} later showed that Assumption (\ref{assum:KPV_CompletenessAssumptions}-ii) can be replaced by the weaker condition:
\begin{assumption}
    Let $\ell : \gU \rightarrow \R$ be any square integrable function. Assume that the following conditions hold for all $a \in \gA$:
    $\E[\ell(U) | W = w, A = a] = 0 \quad \forall w \in \gW$ if and only if $\ell(U) = 0 \quad p(U) - $almost everywhere.
    \label{assum:KPV_CompletenessAssumptions2}
\end{assumption}

Notably, these completeness conditions align with those in our approach: Assumption (\ref{assum:AlternativeProxyAssumptionCompleteness1}) corresponds to Assumption (\ref{assum:KPV_CompletenessAssumptions2}), while Assumption (\ref{assum:ExistenceCompletenessAssumption}) matches Assumption (\ref{assum:KPV_CompletenessAssumptions}-i). However, they serve different purposes. We use Assumption (\ref{assum:AlternativeProxyAssumptionCompleteness1}) to ensure identifiability of the causal structural function with treatment bridge function, while outcome bridge-based methods use it to guarantee the existence of the outcome bridge function. Similarly, we employ Assumption (\ref{assum:ExistenceCompletenessAssumption}) to establish the existence of the treatment bridge function, whereas outcome bridge-based methods use it for causal function identifiability.

\cite{Mastouri2021ProximalCL} introduced two estimation algorithms: KPV, which learns the bridge function in two stages, and PMMR, which leverages a closed-form solution under the Maximum Moment Restriction framework \citep{Muandet2020KernelCM}. \cite{singh2023kernelmethodsunobservedconfounding} proposed KNC, an alternative kernel-based approach with a different outcome bridge function representation. Since both methods employ kernel techniques, they are well-suited for high-dimensional settings. For algorithmic details, we refer readers to the respective papers.
 
\section{IDENTIFICATION PROOFS} \label{sec:Identification_Appendix}
\subsection{Identifiability of Dose-Response Curve}
\label{sec:IdentificationATE_Appendix}
In this section, we prove Theorem (\ref{thm:AlternativeProxyATEandATTIdentification}-1.) for $f_{\text{ATE}}$. For completeness, we state the theorem here again.
\begin{theorem}
Suppose that there exists a square integrable bridge function $\varphi_0: \gZ \times \gA \to \R$ that satisfies the following functional equation
\begin{align*}
    \E[\varphi_0(Z, a) | W, A = a] = \frac{p(W) p(A)}{p(W, A)}, \quad a \in \gA.
\end{align*}
Furthermore, suppose that Assumptions (\ref{assum:ProxyCausalAssumptions1}) and (\ref{assum:AlternativeProxyAssumptionCompleteness1}) hold. Then, the dose-response curve is given by 
\begin{align*}
    f_{\text{ATE}}(a) = \E[Y \varphi_0(Z, a) | A = a], \quad a \in \gA.
\end{align*} 
\end{theorem}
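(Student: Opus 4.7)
The plan is to unpack $\E[Y\varphi_0(Z,a)\mid A=a]$ into an integral against $p(u)\,du$, via a two-step application of the conditional-independence assumptions plus the completeness assumption to pin down the inner conditional expectation of $\varphi_0$. I would proceed in three steps.

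\textbf{Step 1 (strip $Y$ down to $U$).} Write
\begin{align*}
\E[Y\varphi_0(Z,a)\mid A=a] = \int y\,\varphi_0(z,a)\,p(y,z,u\mid a)\,dy\,dz\,du,
\end{align*}
and use Assumption~(\ref{assum:ProxyCausalAssumptions1}-i), namely $Y\perp Z\mid U,A$, to replace $\int y\,p(y\mid z,u,a)\,dy$ by $\E[Y\mid U=u,A=a]$. This yields
\begin{align*}
\E[Y\varphi_0(Z,a)\mid A=a]=\int \E[Y\mid U=u,A=a]\,\E[\varphi_0(Z,a)\mid U=u,A=a]\,p(u\mid a)\,du.
\end{align*}

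\textbf{Step 2 (identify $\E[\varphi_0(Z,a)\mid U,A=a]$).} Using $W\perp Z\mid U,A$ (Assumption~\ref{assum:ProxyCausalAssumptions1}-ii), I would rewrite the defining equation for $\varphi_0$ as
\begin{align*}
\E\bigl[\E[\varphi_0(Z,a)\mid U,A=a]\,\bigm|\, W=w,A=a\bigr]=\frac{p(w)p(a)}{p(w,a)}.
\end{align*}
Separately, using $W\perp A\mid U$ (Assumption~\ref{assum:ProxyCausalAssumptions1}-ii) and Bayes' rule, a direct computation shows
\begin{align*}
\E\!\left[\tfrac{p(U)}{p(U\mid a)}\,\bigm|\, W=w,A=a\right]
=\frac{1}{p(w\mid a)}\int p(w\mid u)\,p(u)\,du
=\frac{p(w)}{p(w\mid a)}=\frac{p(w)p(a)}{p(w,a)}.
\end{align*}
Hence the functions $u\mapsto \E[\varphi_0(Z,a)\mid U=u,A=a]$ and $u\mapsto p(u)/p(u\mid a)$ have the same conditional expectation given $W,A=a$; their difference $\ell(U)$ thus satisfies $\E[\ell(U)\mid W,A=a]=0$ almost surely in $W$, so by the completeness Assumption~(\ref{assum:AlternativeProxyAssumptionCompleteness1}), $\ell(U)=0$ $p(U)$-a.e., giving
\begin{align*}
\E[\varphi_0(Z,a)\mid U=u,A=a]=\frac{p(u)}{p(u\mid a)}\quad p(U)\text{-a.e.}
\end{align*}

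\textbf{Step 3 (assemble).} Substituting back into Step~1, the factor $p(u\mid a)$ cancels and we obtain
\begin{align*}
\E[Y\varphi_0(Z,a)\mid A=a]=\int \E[Y\mid U=u,A=a]\,p(u)\,du=\E\bigl[\E[Y\mid A=a,U]\bigr]=f_{\text{ATE}}(a).
\end{align*}

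\emph{Anticipated obstacle.} The main subtlety is Step~2: carefully applying $W\perp Z\mid U,A$ to move conditioning inside the bridge equation, and then applying $W\perp A\mid U$ to evaluate the candidate closed-form expression on the right. Invoking completeness requires checking that both functions of $u$ under comparison are square integrable so that the assumption bites; this should follow from the square integrability of $\varphi_0$ together with the implicit regularity of the densities. Once this identification is in place, Steps~1 and~3 are essentially bookkeeping.
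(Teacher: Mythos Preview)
Your proposal is correct and follows essentially the same approach as the paper: the paper also uses $W\perp Z\mid U,A$ together with the identity $\frac{p(W)p(a)}{p(W,a)}=\E\bigl[\frac{p(U)p(a)}{p(U,a)}\mid W,A=a\bigr]$ (derived via $W\perp A\mid U$) and then applies completeness to conclude $\E[\varphi_0(Z,a)\mid U,A=a]=\frac{p(U)p(a)}{p(U,a)}$, after which $Y\perp Z\mid U,A$ yields the result. The only cosmetic difference is direction---the paper starts from $f_{\text{ATE}}(a)=\E[\E[Y\mid A=a,U]]$ and works toward $\E[Y\varphi_0(Z,a)\mid A=a]$, whereas you start from the latter---but the ingredients and logic are identical.
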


\begin{proof}
Suppose that 
\begin{align*}
    \E[\varphi_0(Z, A) | W, A] = \frac{p(W) p(A)}{p(W, A)}.
\end{align*}
Then, note the following, for $a \in \gA$,
$$
\begin{aligned}
\E[\varphi_0(Z, a) | W, A = a] &= \E_{U|W, A = a}[\E[\varphi_0(Z, a) | U, W, A = a]]  \quad \text{(by Law of Total Expectation)}\\
&= \E_{U|W, A = a}[\E[\varphi_0(Z, a) | U, A = a]] \quad \text{(since $Z \perp W | U$, Assumption (\ref{assum:ProxyCausalAssumptions1}))}
\end{aligned}
$$
On the other hand,
\begin{align*}
    p(w) &= \int p(w| u) p(u) d u = \int p(w| a, u) p(u) d u \quad \text{(since $W \perp A | U$, Assumption (\ref{assum:ProxyCausalAssumptions1}))}\\
    &= \int \frac{p(u | w, a) p(w | a)}{p(u | a)} p(u) d u \quad (\text{Baye's Rule})\\
    &= \int \frac{p(u | w, a) p(w , a)}{p(u , a)} p(u) d u = p(w, a) \int \frac{p(u)}{p(u, a)} p(u | w, a) d u\\
    &= p(w, a) \E \Bigg[  \frac{p(U)}{p(U, a)} \Bigg| W = w, A = a \Bigg]
\end{align*}
As a result,
\begin{equation} \label{eq:bridge_W_to_U}
    \frac{p(W)}{p(W, a)} = \E \Bigg[  \frac{p(U)}{p(U, a)} \Bigg| W, A = a \Bigg] \Rightarrow \frac{p(W) p(a)}{p(W, a)} = \E \Bigg[  \frac{p(U) p(a)}{p(U, a)} \Bigg|W, A = a\Bigg].
\end{equation}

For every value $a \in \gA$, it therefore holds $p(W)$-a.e. that 
$$
\E_{U|W, A = a}[\E[\varphi_0(Z, a) | U, A = a]] = \E \Bigg[  \frac{p(U) p(a)}{p(U, a)} \Bigg|W, A = a\Bigg]. 
$$
Therefore, due to Assumption~(\ref{assum:AlternativeProxyAssumptionCompleteness1}), we have for all $a \in \gA$,
\begin{align}
    \E[\varphi_0(Z, a) | U, A = a ] = \frac{p(U) p(a)}{p(U, a)} \quad p(U)-\text{a.e.}
\label{eq:AlternativeProxyBridgeFuncProposition2}
\end{align}
Next, we observe that
\begin{align*}
    &\E[ \E[Y | A = a, U]] = \int \E[Y | A = a, u] p(u) d u = \int \E[Y | A = a, u] \frac{p(u) p(a)}{p(u, a)} \frac{p(u, a)}{p(a)} d u\\
    &= \int \E[Y | A = a, u] \frac{p(u) p(a)}{p(u, a)} p(u| a) d u = \E_{U | A = a} \Bigg[ \E[Y | A = a, U] \frac{p(U) p(a)}{p(U, a)} \Bigg]\\
    &= \E_{U| A = a} \Bigg[ \E[Y | A = a, U] \E[\varphi_0(Z, a) | U, A = a ] \Bigg] \quad \text{(by Equation \ref{eq:AlternativeProxyBridgeFuncProposition2})}\\
    &= \E_{U | A = a} \Bigg[ \int y p(y | a, U) d y \int \varphi_0(z, a) p(z| U, a) d z \Bigg]\\
    &= \E_{U | A = a} \Bigg[\int \varphi_0(z, a) \Bigg( \int y p(y | a, U) d y \Bigg) p(z| U, a) d z \Bigg]\\
    &= \E_{U | A = a} \Bigg[\int \varphi_0(z, a) \Bigg( \int y p(y | a, U, {z}) d y \Bigg) p(z| U, a) d z \Bigg] \quad \text{(since $Y \perp Z | A, U $, Assump. (\ref{assum:ProxyCausalAssumptions1}))}\\
    &= \E_{U| A = a} \Bigg[\int \int\varphi_0(z, a)  y \underbrace{p(y | a, U, z)   p(z| U,  a)}_{p(y, z|  a, U)} d y  d z \Bigg] \\
    &= \int \int \int \varphi_0(z, a)  y \underbrace{p(y, z| a, u) p(u | a)}_{p(u, y, z | a)} d y  d z du\\
    &= \int \int \varphi_0(z, a)  y \underbrace{\int p(u, y, z | a) d u}_{p( y, z | a)} d y  d z \\
    &= \int \int \varphi_0(z,  a)  y p( y, z | a) d y  d z = \E[Y \varphi_0(Z, a) | A = a].
\end{align*}
As a result, we obtain
\begin{align*}
    \E[Y\varphi_0(Z, a) | A = a] = \E[ \E[Y | A = a, U]],
\end{align*}
which indicates that $f_{\text{ATE}}(a) = \E[Y \varphi_0(Z, a) | A = a]$.
\end{proof}

\subsection{Identifiability of Conditional Dose-Response Curve}
\label{sec:IdentificationATT_Appendix}
In this section, we prove Theorem (\ref{thm:AlternativeProxyATEandATTIdentification}-2.) for $f_{\text{ATT}}$. For completeness, we state the theorem here again.

\begin{theorem}
    Suppose there exists a square integrable bridge function $\varphi_0: \gZ \times \gA \times \gA \to \R$ that satisfies the following functional equation  
\begin{align*}
    \E[\varphi_0(Z, a, a') | W, A = a] = \frac{p(W, a') p(a)}{p(W, a) p(a')}, \qquad (a,a') \in \gA^2.
\end{align*} 
Furthermore, suppose that Assumptions (\ref{assum:ProxyCausalAssumptions1}) and (\ref{assum:AlternativeProxyAssumptionCompleteness1}) hold. Then, the conditional dose-response is given by 
\begin{align*}
    f_{\text{ATT}}(a, a') = \E[Y \varphi_0(Z, a, a') | A = a], \qquad (a,a') \in \gA^2.
\end{align*}
\end{theorem}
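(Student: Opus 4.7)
The proof plan mirrors the ATE identification argument from the preceding subsection, with the densities $p(W)/p(W,a)$ replaced by $p(W,a')/p(W,a)$. The plan is to proceed in three steps: (i) transfer the bridge equation from a conditional-on-$(W,A)$ statement to a conditional-on-$(U,A)$ statement using completeness; (ii) rewrite the target $f_{\text{ATT}}(a,a')$ as an expectation over $U$ weighted by the appropriate density ratio; and then (iii) use the bridge equation together with $Y \perp Z \mid U, A$ to swap the weight for $\varphi_0(Z, a, a')$ and collapse the integral over $U$.

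First, by the law of total expectation and Assumption~(\ref{assum:ProxyCausalAssumptions1}) (specifically $Z \perp W \mid U, A$), the left-hand side of the bridge equation reads
\begin{align*}
\E[\varphi_0(Z, a, a') \mid W, A = a] = \E_{U \mid W, A = a}\bigl[\E[\varphi_0(Z, a, a') \mid U, A = a]\bigr].
\end{align*}
For the right-hand side, I would compute $\E\!\left[\frac{p(U, a')}{p(U, a)} \,\middle|\, W, A = a\right]$ by writing $p(u \mid W, a) = p(W \mid u)\, p(u \mid a)/p(W \mid a)$ (using $W \perp A \mid U$), simplifying the integrand, and recognising the resulting integral as $p(W, a')/p(W, a)$ via $p(W, a') = \int p(W \mid u) p(a' \mid u) p(u)\, du$. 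Multiplying by $p(a)/p(a')$ yields
\begin{align*}
\E\!\left[\frac{p(U, a')\, p(a)}{p(U, a)\, p(a')} \,\middle|\, W, A = a\right] = \frac{p(W, a')\, p(a)}{p(W, a)\, p(a')}.
\end{align*}
Matching the two displays and invoking Assumption~(\ref{assum:AlternativeProxyAssumptionCompleteness1}) (completeness of $W$ relative to $U$ given $A = a$) gives the pointwise identity
\begin{align*}
\E[\varphi_0(Z, a, a') \mid U, A = a] = \frac{p(U, a')\, p(a)}{p(U, a)\, p(a')} \quad p(U)\text{-a.e.}
\end{align*}

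Having this, step (iii) is essentially the same manipulation as in the ATE proof. I would expand $\E[\E[Y \mid A = a, U] \mid A = a']$ by writing $p(u \mid a') = \frac{p(U, a')\, p(a)}{p(U, a)\, p(a')} \cdot p(u \mid a)$, substitute the pointwise bridge identity, then use $Y \perp Z \mid U, A$ (Assumption~(\ref{assum:ProxyCausalAssumptions1})) to fuse $p(y \mid a, U)$ and $p(z \mid U, a)$ into $p(y, z \mid a, U)$, and finally marginalise $U$ out against $p(u \mid a)$ to obtain $\E[Y \varphi_0(Z, a, a') \mid A = a]$.

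The only genuinely new computation relative to the ATE case is the density ratio identity for $p(W, a')/p(W, a)$, so that is the step I would write out most carefully; everything else is a direct replay of the ATE argument with the new weight function. No deep obstacle is expected, since Assumption~(\ref{assum:AlternativeProxyAssumptionCompleteness1}) is exactly designed to permit the transfer from a $W$-conditional bridge equation to a $U$-conditional one, and the conditional independences in Assumption~(\ref{assum:ProxyCausalAssumptions1}) are precisely the ones needed to make the density-ratio and $Y$-integration manipulations go through.
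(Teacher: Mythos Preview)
Your proposal is correct and follows essentially the same approach as the paper's own proof: both transfer the bridge equation from $(W,A)$ to $(U,A)$ via the law of total expectation, the conditional independences in Assumption~(\ref{assum:ProxyCausalAssumptions1}), and completeness (Assumption~(\ref{assum:AlternativeProxyAssumptionCompleteness1})), establish the density-ratio identity $\frac{p(W,a')p(a)}{p(W,a)p(a')}=\E\!\left[\frac{p(U,a')p(a)}{p(U,a)p(a')}\,\middle|\,W,A=a\right]$ using $W\perp A\mid U$, and then replay the ATE manipulation using $Y\perp Z\mid U,A$ to collapse the $U$-integral. The only cosmetic difference is that the paper derives the ratio identity by expanding $p(w,a')$ directly, whereas you compute the conditional expectation of $p(U,a')/p(U,a)$; these are the same computation traversed in opposite directions.
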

\begin{proof}
First, observe the following, for $(a,a') \in \gA^2$,
\begin{align}
    &\E[\varphi_0(Z, a, a') | W, A = a] = \E_{U| W, A = a}\big[\E[\varphi_0(Z, a, a') | U, W, A = a] \big] \quad \text{(by Law of Total Expectation)} \nonumber\\
    &= \E_{U| W, A = a}\big[\E[\varphi_0(Z, a, a') | U, A = a] \big] \quad \text{(since $Z \perp W | U, A = a$, Assumption (\ref{assum:ProxyCausalAssumptions1}))} \label{eq:AlternativeProxyATTPropositionImportantEquation1}
\end{align}
Furthermore, note that
\begin{align*}
    p(w, a') &= \int p(w, a' | u) p(u) d u = \int p(w | u) p(a' | u) p(u) d u \quad \text{(since $W \perp A | U$, Assumption (\ref{assum:ProxyCausalAssumptions1}))}\\
    &= \int p(w | a, u) p(a' | u) p(u) d u \quad \text{(again due to $W \perp A | U$, Assumption (\ref{assum:ProxyCausalAssumptions1}))}\\
    &= \int\frac{ p(u | w, a) p(w | a)}{p(u | a)} p(a' | u) p(u) d u \quad \text{(Baye's Rule)}\\
    &= \int\frac{ p(u | w, a) p(w , a)}{p(u , a)} p(u, a' )  d u  = p(w , a) \int \frac{p(u, a')}{p(u, a)} p(u | w, a) d u.
\end{align*}
As a result,
\begin{align}
    \frac{p(W, a')}{p(W , a)} = \E\left[\frac{p(U, a')}{p(U, a)} \Bigg|  W, A=a \right] \Rightarrow     \frac{p(W, a') p(a)}{p(W , a) p(a')} = \E\left[\frac{p(U, a')p(a)}{p(U, a)p(a')} \Bigg|  W, A=a \right].
\label{eq:AlternativeProxyATTPropositionImportantEquation2}
\end{align}
Recall that our assumption was
\begin{align*}
\E[\varphi_0(Z, a, a') | W, A = a] = \frac{p(W, a') p(a)}{p(W, a) p(a)}.
\end{align*}
Combining Equation (\ref{eq:AlternativeProxyATTPropositionImportantEquation1}) and (\ref{eq:AlternativeProxyATTPropositionImportantEquation2}), for every value $a,a' \in \gA$, it therefore holds $p(W)$-a.e. that 
\begin{align*}
    \E_{U | W, A = a} \Big[ \E[\varphi_0(Z, a, a') | U, A = a] \Big] = \E_{U | W, A = a} \Bigg[ \frac{p(U, a') p(a)}{p(U, a) p(a')} \Bigg].
\end{align*}
Using the completeness Assumption (\ref{assum:AlternativeProxyAssumptionCompleteness1}), we obtain, for all  $a,a' \in \gA$
\begin{align}
    \E[\varphi_0(Z, a, a') | U, A = a] = \frac{p(U, a') p(a)}{p(U, a) p(a')} \quad p(U)\text{-a.e.} \label{eq:AlternativeProxyATTPropositionImportantEquation3}
\end{align}
Next, to obtain the ATT function, consider
\begin{align*}
    &f_{\text{ATT}}(a, a') = \E_{U| A = a'}[\E[Y | A = a, U]] =\int \E[Y | A = a, u] p(u| a') d u\\
    &=\int \E[Y | A = a, u] \frac{p(u, a')}{p(a')} \frac{p(a)}{p(u, a)} \frac{p(u, a)}{p(a)} d u\\
    &=\int \E[Y | A = a, u] \frac{p(u, a') p(a)}{p(u, a ) p(a')}p(u| a) d u\\
    &= \E_{U| A = a} \Big[ \E[Y | A = a, U] \E[\varphi_0(Z, a, a') | U, A = a] \Big] \quad \text{(by Equation (\ref{eq:AlternativeProxyATTPropositionImportantEquation3}))}\\
    &= \E_{U| A = a} \Bigg[ \int y p(y| a, U) d y \int \varphi_0(z, a, a') p(z | a, U) d z
 \Bigg]\\
 &= \E_{U | A = a} \Bigg[ \int \int y \varphi_0(z, a, a') p(y| a, U)  p(z | a, U) d y d z
 \Bigg]\\
  &= \E_{U| A = a} \Bigg[ \int \int y \varphi_0(z, a, a') \underbrace{p(y| a, U, z)  p(z | a, U)}_{p(y, z|a, U)} d y d z
 \Bigg] \quad \text{($Y \perp Z | U, A$, Assumption (\ref{assum:ProxyCausalAssumptions1}) )}\\
 &= \E_{U| A=a} \Bigg[ \int \int y \varphi_0(z, a, a') p(y, z|a, U) d y d z
 \Bigg]\\
 &= \int \int \int y \varphi_0(z, a, a') \underbrace{p(y, z|a, u) p(u| a)}_{p(u, y, z |  a)}  d y d z d u\\
 &= \int \int y \varphi_0(z, a, a') \underbrace{\int  p(u, y, z | a) d u}_{p(y, z|a)} d y d z \\
 &=\int \int y \varphi_0(z, a, a') p(y, z|a) d y d z = \E [Y \varphi_0(Z, a, a') | A = a].
\end{align*}
Hence, we have shown that $f_{\text{ATT}}(a, a') = \E_{U| A = a'}[\E[Y | A = a, U]] = \E [Y \varphi_0(Z, a, a') | A = a]$.
\end{proof}

\section{EXISTENCE OF BRIDGE FUNCTIONS}
\label{sec:existenceOfBridgeFunction}

Our proofs follow the strategy of \citep{deaner2023proxycontrolspaneldata}. We note that Assumption~(\ref{assum:ExistenceCompletenessAssumption}) is weaker that the assumption used in \citep{semiparametricProximalCausalInference, wu2024doubly}. Namely they use the stronger assumption that for any square integrable function $\ell: \gW \rightarrow \R$, for all $a \in \gA$: $\E[\ell(W) | Z, A = a] = 0$ $p(Z)-$a.e. if and only if $\ell(W) = 0$ $p(W)$-a.e.

\subsection{Existence of Bridge Function for Dose-Response Curve}
\label{sec:existenceOfBridgeFunctionATE}
We will discuss conditions that will guarantee the existence of the bridge function $\varphi_0$ for the functional Equation (\ref{eq:AlternativeProxyATEBridgeFunction}). To this end, we consider the following conditional mean operator
\begin{align*}
    E_a: &\gL_2(\gW, p_{W|A = a}) \rightarrow \gL_2(\gZ, p_{Z|A=a}),\\
\end{align*}
such that
\begin{align*}
    E_a \ell(z) &= \E[\ell(W) | Z = z, A = a], \qquad z \in \gZ\\
\end{align*}
whose adjoint is given by 
\begin{align*}
    E_a^*: &\gL_2(\gZ, p_{Z|A=a}) \rightarrow \gL_2(\gW, p_{W|A = a})
\end{align*}
such that
\begin{align*}
    E_a^* \ell(w) &= \E[\ell(Z) | W = w, A = a], \qquad w \in \gW,
\end{align*}
where $\gL_2(\gW, p_{W|A = a})$ denotes the square integrable functions of $w \in \gW$ with respect to the distribution $p(W | A = a)$ and $\gL_2(\gZ, p_{Z|A = a})$ denotes the square integrable functions of $z \in \gZ$ with respect to the distribution $p(Z | A = a)$.
Our result will rely on Picard's Theorem as stated below.

\begin{theorem}[Picard's Theorem; Theorem 15.8 in \citep{linearIntegralEquationskress2013}]
    Let $\gX, \gY$ be Hilbert spaces and let $A : \gX \rightarrow \gY$ be a compact linear operator with singular system $(\mu_n, \varphi_n, g_n)_{n=1}^{\infty}$. The equation of the first kind
    \begin{align*}
        A \varphi = f
    \end{align*}
    admits a solution if and only if $f \in \operatorname{null} (A^*) ^\perp$ and 
    \begin{align*}
        \sum_{n = 1}^\infty \frac{1}{\mu_n^2} |\langle f, g_n  \rangle_\gY|^2 < \infty.
    \end{align*}
    Here, $A^*$ is the adjoint of the operator $A$ and $\operatorname{null}(A^*)$ represents the null-space of the operator $A^*$. Then a solution is given by 
    \begin{align*}
        \varphi = \sum_{n=1}^{\infty} \frac{1}{\mu_n} \langle  f, g_n \rangle_\gY \varphi_n.
    \end{align*}
    \label{thm:PicardsTheorem}
\end{theorem}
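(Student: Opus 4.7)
The plan is to derive Picard's theorem from the singular value decomposition of the compact operator $A$. The starting point is the spectral theorem applied to $A^*A : \gX \to \gX$, which is compact, self-adjoint, and positive semi-definite; this yields an at-most countable family of eigenvectors $\{\varphi_n\}$ with strictly positive eigenvalues $\mu_n^2$, forming an orthonormal basis of $\overline{\operatorname{range}(A^*)} = \operatorname{null}(A)^\perp$. Defining $g_n := A\varphi_n / \mu_n$ then produces an orthonormal basis of $\overline{\operatorname{range}(A)} = \operatorname{null}(A^*)^\perp$, and one checks that $A\varphi_n = \mu_n g_n$ and $A^* g_n = \mu_n \varphi_n$, so $(\mu_n, \varphi_n, g_n)$ is the singular system of $A$.

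For the necessity direction, I would take any solution $\varphi \in \gX$ of $A\varphi = f$ and use the orthogonal decomposition $\gX = \operatorname{null}(A) \oplus \operatorname{null}(A)^\perp$ to write $\varphi = \varphi^{0} + \sum_n \langle \varphi, \varphi_n \rangle_\gX \varphi_n$ with $\varphi^0 \in \operatorname{null}(A)$. Applying $A$ kills $\varphi^0$ and gives $f = \sum_n \mu_n \langle \varphi, \varphi_n \rangle_\gX g_n$, so taking inner products against each $g_m$ yields $\langle \varphi, \varphi_m \rangle_\gX = \mu_m^{-1} \langle f, g_m \rangle_\gY$. Parseval's identity applied to $\varphi$ then gives the summability condition $\sum_n \mu_n^{-2} |\langle f, g_n \rangle_\gY|^2 \leq \|\varphi\|_\gX^2 < \infty$, while $f = A\varphi$ automatically lies in $\overline{\operatorname{range}(A)} = \operatorname{null}(A^*)^\perp$.

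For sufficiency, I would define the candidate solution $\varphi := \sum_n \mu_n^{-1} \langle f, g_n \rangle_\gY \varphi_n$, which has $\|\varphi\|_\gX^2 = \sum_n \mu_n^{-2} |\langle f, g_n \rangle_\gY|^2 < \infty$ by hypothesis, so the series converges in $\gX$. Continuity of $A$ then gives $A\varphi = \sum_n \langle f, g_n \rangle_\gY g_n$. The assumption $f \in \operatorname{null}(A^*)^\perp = \overline{\operatorname{span}}\{g_n\}$ means this is precisely the orthogonal expansion of $f$ in the basis $\{g_n\}$, so $A\varphi = f$.

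The main technical obstacle is establishing the SVD itself, which rests on the spectral theorem for compact self-adjoint operators: one must show that $A^*A$ has a pure point spectrum accumulating only at zero, with finite-dimensional eigenspaces. Once this structural result is in hand, the remainder of the argument is essentially orthogonal-expansion bookkeeping. Since Picard's theorem is a classical functional-analytic result (cited here as Theorem 15.8 in Kress, 2013), the paper is expected to import it as a black box rather than reproduce the proof, and then invoke it in the following existence argument for $\varphi_0$ by verifying the two conditions for $f(w) = p(w)p(a)/p(w,a)$ viewed as an element of $\gL_2(\gW, p_{W|A=a})$ against the compact operator $E_a^*$.
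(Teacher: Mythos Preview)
Your proof sketch is the standard and correct approach to Picard's theorem via the singular value decomposition of a compact operator. You also correctly anticipate the paper's treatment: the paper does not prove this result but imports it verbatim as Theorem~15.8 of Kress (2013), and then applies it to the operator $E_a^*$ by verifying the two Picard conditions for the density-ratio target $r_a$.
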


\begin{lemma} Suppose Assumptions (\ref{assum:ProxyCausalAssumptions1}) and (\ref{assum:ExistenceCompletenessAssumption}) hold. Then,
    $$\operatorname{null}(E_a) \subseteq \big\{\ell \in \gL_2(\gW, p_{W|A = a}) \enspace \big| \enspace \E[\ell(W) | U, A = a] = 0\big\}.$$
    \label{lemma:NullSpaceOfE_a}
\end{lemma}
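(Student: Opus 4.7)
Fix $\ell \in \operatorname{null}(E_a)$, so by definition $\mathbb{E}[\ell(W) \mid Z = z, A = a] = 0$ for $p_{Z|A=a}$-almost every $z$. The goal is to transfer this annihilation condition from $Z$ to $U$, and I will do this by sandwiching a conditional expectation over $U$ inside the one over $Z$, invoking the conditional independence $W \perp Z \mid U, A$ from Assumption~(\ref{assum:ProxyCausalAssumptions1}-ii), and then peeling off $Z$ using the completeness Assumption~(\ref{assum:ExistenceCompletenessAssumption}).

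Concretely, the plan is: by the tower property and Assumption~(\ref{assum:ProxyCausalAssumptions1}-ii),
\begin{align*}
\mathbb{E}[\ell(W) \mid Z = z, A = a]
&= \mathbb{E}\bigl[\mathbb{E}[\ell(W) \mid U, Z = z, A = a] \bigm| Z = z, A = a \bigr]\\
&= \mathbb{E}\bigl[\mathbb{E}[\ell(W) \mid U, A = a] \bigm| Z = z, A = a \bigr].
\end{align*}
Introducing $h(u) := \mathbb{E}[\ell(W) \mid U = u, A = a]$, this says $\mathbb{E}[h(U) \mid Z = z, A = a] = 0$ for $p_{Z|A=a}$-almost every $z$. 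Since $\ell \in \mathcal{L}_2(\mathcal{W}, p_{W|A=a})$, Jensen's inequality gives $\mathbb{E}[h(U)^2 \mid A = a] \le \mathbb{E}[\ell(W)^2 \mid A = a] < \infty$, so $h$ is square integrable with respect to $p_{U|A=a}$ (and hence with respect to $p_U$ under the mild implicit assumption that $p(a) > 0$, which is needed throughout the identification argument).

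With $h$ square integrable, Assumption~(\ref{assum:ExistenceCompletenessAssumption}) applies and yields $h(U) = 0$ $p(U)$-almost everywhere, which is exactly $\mathbb{E}[\ell(W) \mid U, A = a] = 0$, proving the claimed inclusion. The one subtle point, and the main obstacle I foresee, is the bookkeeping between ``$p(Z)$-a.e.'' as written in Assumption~(\ref{assum:ExistenceCompletenessAssumption}) and the ``$p(Z \mid A = a)$-a.e.'' that naturally comes out of the null-space of $E_a$; this will need a brief remark that, conditional on $A = a$ (with positive density), these agree on the relevant support, after which the completeness implication can be invoked as stated. Modulo this measure-theoretic bookkeeping, the proof is a direct combination of the tower law, the DAG-induced independence $W \perp Z \mid U, A$, and the completeness condition.
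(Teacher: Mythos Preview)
Your proposal is correct and follows essentially the same route as the paper's proof: tower over $U$, drop $Z$ from the inner expectation via $W \perp Z \mid U, A$, then invoke completeness on $h(u) = \E[\ell(W)\mid U=u, A=a]$. If anything, your version is slightly more careful than the paper's, which applies Assumption~(\ref{assum:ExistenceCompletenessAssumption}) directly without explicitly checking the square-integrability of $h$ or commenting on the $p(Z)$-a.e.\ versus $p(Z\mid A=a)$-a.e.\ discrepancy you flag.
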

\begin{proof}
    Observe that $\text{null}(E_a) =  \big\{\ell \in \gL_2(\gW, p_{W|A = a}) \enspace \big| \enspace \E[\ell(W) | Z, A = a] = 0\big\}$. Let $\ell \in \operatorname{null}(E_a)$, then,
    \begin{align*}
        0 &= E_a \ell = \E[\ell(W) | Z= \cdot, A = a]\\
        &= \E_{U|Z= \cdot, A = a}\big[\E[\ell(W) | U, Z= \cdot, A = a]\big]\\
        &= \E_{U|Z= \cdot, A = a}\big[\E[\ell(W) | U, A = a]\big] \quad \text{(since $W \perp Z|U, A$, Assumption (\ref{assum:ProxyCausalAssumptions1}))}\\
    \end{align*}
    Assumption (\ref{assum:ExistenceCompletenessAssumption}) implies that $\E[\ell(W) | U, A = a] = 0$ almost surely. Hence,
    \begin{align*}
        \ell \in \big\{\ell \in \gL_2(\gW, p_{W|A = a}) \enspace \big| \enspace \E[\ell(W) | U, A = a] = 0\big\}.
    \end{align*}
\end{proof}
Let us introduce the notation $\varphi_{0}^{a}:=\varphi_0(\cdot, a)$ and $r_{a}(W) := \frac{p(W) p(a)}{p(W, a)}$. The bridge function is then solution of 
$$
E_{a}^*\varphi_{0}^{a}=r_{a}.
$$
$E_{a}$ is well-defined, linear and bounded with $\|E_{a}\| \leq 1$. In order to apply Theorem (\ref{thm:PicardsTheorem}), we make the following additional assumptions.
\begin{assumption}
    For each $a \in \gA$, the operator $E_a^*$ is compact with singular system $\{\mu_{a, n}, \varphi_{a, n}, g_{a, n}\}_{n = 1}^\infty$.
    \label{assum:F_aIsCompact}
\end{assumption}
\begin{assumption}
    The density ratio function $r_a$ satisfies
    \begin{align*}
        \sum_{n = 1}^\infty \frac{1}{\mu_{a, n}^2} \big|\langle r_a, g_{a, n}\rangle_{\gL_2(\gW, p_{W|A = a})}\big|^2 < \infty.
    \end{align*}
    \label{assum:DensityRatioExistenceAssumption}
\end{assumption}
\begin{lemma}
    Suppose that Assumptions (\ref{assum:ProxyCausalAssumptions1}) and (\ref{assum:ExistenceCompletenessAssumption}) hold. Then, $r_a \in \operatorname{null}(E_a)^\perp$.
    \label{lemma:r_aInTheNullSpace}
\end{lemma}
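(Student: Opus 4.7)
The plan is to show that for any $\ell \in \operatorname{null}(E_a)$, the inner product $\langle r_a, \ell \rangle_{\gL_2(\gW, p_{W|A=a})}$ vanishes. The key observation is that reweighting by $r_a$ precisely converts the conditional measure $p(W \mid A=a)$ back to the marginal $p(W)$, which reduces the problem to showing $\E[\ell(W)]=0$.

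First, I would write out the inner product explicitly and perform the density simplification:
\begin{align*}
\langle r_a, \ell \rangle_{\gL_2(\gW, p_{W|A=a})}
= \int \frac{p(w) p(a)}{p(w,a)} \ell(w)\, p(w \mid a)\, dw
= \int \ell(w)\, p(w)\, dw
= \E[\ell(W)].
\end{align*}
This step is routine but pivotal: the factor $p(a)/p(w,a)$ cancels against $p(w \mid a)=p(w,a)/p(a)$, leaving only the marginal density $p(w)$.

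Second, I would use the tower property together with Assumption~(\ref{assum:ProxyCausalAssumptions1}), specifically the conditional independence $W \perp A \mid U$, to rewrite
\begin{align*}
\E[\ell(W)] = \E_U\big[\E[\ell(W)\mid U]\big] = \E_U\big[\E[\ell(W) \mid U, A=a]\big].
\end{align*}
Now Lemma~(\ref{lemma:NullSpaceOfE_a}), whose hypotheses are exactly Assumptions~(\ref{assum:ProxyCausalAssumptions1}) and (\ref{assum:ExistenceCompletenessAssumption}), guarantees that for $\ell \in \operatorname{null}(E_a)$ we have $\E[\ell(W)\mid U, A=a]=0$ almost surely, so the outer expectation vanishes.

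Combining the two displays gives $\langle r_a, \ell \rangle_{\gL_2(\gW, p_{W|A=a})} = 0$ for every $\ell \in \operatorname{null}(E_a)$, establishing $r_a \in \operatorname{null}(E_a)^\perp$. I do not anticipate a serious obstacle here: the only subtle point is justifying the density manipulation, which is immediate provided $p(w,a)>0$ on the support of $p_{W|A=a}$ (so that $r_a$ is well-defined), an implicit integrability condition already needed for $r_a \in \gL_2(\gW, p_{W|A=a})$.
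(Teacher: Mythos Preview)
Your proof is correct and takes a somewhat different, more direct route than the paper. The paper first invokes the identity $r_a(W)=\E\!\big[\tfrac{p(U)p(a)}{p(U,a)}\,\big|\,W,A=a\big]$ (derived earlier in the identification section), substitutes it into the inner product, and then swaps the order of integration to arrive at $\E\big[\E[\ell(W)\mid U,A=a]\,\tfrac{p(U)p(a)}{p(U,a)}\,\big|\,A=a\big]=0$ via Lemma~(\ref{lemma:NullSpaceOfE_a}). You instead observe directly that multiplying by $r_a$ converts $p(w\mid a)$ back to $p(w)$, reducing the inner product to $\E[\ell(W)]$, and then apply the tower property together with $W\perp A\mid U$ before invoking Lemma~(\ref{lemma:NullSpaceOfE_a}). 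Your argument is shorter and avoids the detour through the $U$-density-ratio representation of $r_a$; the paper's route, on the other hand, makes explicit the structural link between $r_a$ and the latent ratio $\tfrac{p(U)p(a)}{p(U,a)}$, which is thematically consistent with how the identification proof proceeds. Both are valid and ultimately rest on the same lemma and the same conditional independence.
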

\begin{proof}
    Recall that we previously proved that (Eq.~\ref{eq:bridge_W_to_U}),
    \begin{align*}
        \frac{p(W) p(a)}{p(W, a)} = \E \Big[\frac{p(U) p(a)}{p(U, a)} \Big| W, A = a\Big].
    \end{align*}
    Let $\ell \in \operatorname{null}(E_a)$. Then,
    \begin{align*}
        \big\langle \ell, r_a  \big\rangle_{\gL_2(\gW, p_{W|A = a})} &= \E[\ell(W) r_a(W) | A = a]\\
        &= \E\Big[\ell(W) \E\Big[\frac{p(U)p(a)}{p(U, a)} \Big| W, A = a\Big] \Big| A = a\Big]\\
        &= \iint \ell(w) \frac{p(u)p(a)}{p(u, a)} p(u | w, a) p(w | a) du dw\\
        &= \iint \ell(w) \frac{p(u)p(a)}{p(u, a)} p(u, w | a) du dw\\
        &= \iint \ell(w) \frac{p(u)p(a)}{p(u, a)} p(w | u, a) p(u | a) du dw\\
        &= \E\Big[\E[\ell(W) | U, A = a] \frac{p(U)p(a)}{p(U, a)} \Big| A = a\Big] = 0,
    \end{align*}
    where the last equality is due to Lemma (\ref{lemma:NullSpaceOfE_a}). Therefore, $r_a \in \operatorname{null}(E_a)^\perp$.
\end{proof}

\begin{theorem}
    Suppose that Assumptions (\ref{assum:ProxyCausalAssumptions1}), (\ref{assum:ExistenceCompletenessAssumption}), (\ref{assum:F_aIsCompact}), and (\ref{assum:DensityRatioExistenceAssumption}) hold. Then, there exists a solution to the functional equation 
\begin{align*}
    \E[\varphi_0(Z, a) | W, A = a] = \frac{p(W) p(a)}{p(W, a)}, \qquad a \in \gA.
\end{align*}
\end{theorem}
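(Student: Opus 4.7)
The plan is to reformulate the functional equation as an operator equation of the first kind and then invoke Picard's Theorem (Theorem \ref{thm:PicardsTheorem}). Writing $\varphi_0^a(\cdot) := \varphi_0(\cdot,a)$ and $r_a(W) := p(W)p(a)/p(W,a)$, Equation (\ref{eq:AlternativeProxyATEBridgeFunction}) becomes
\begin{equation*}
    E_a^* \varphi_0^a = r_a,
\end{equation*}
so existence reduces to showing that $r_a$ lies in the range of $E_a^*$. I would apply Picard's Theorem with the role of $A$ played by $E_a^*$, whose adjoint is $E_a$, and with singular system $\{\mu_{a,n}, \varphi_{a,n}, g_{a,n}\}_{n=1}^\infty$ guaranteed by Assumption (\ref{assum:F_aIsCompact}).

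Picard's Theorem then requires two things: first, that the right-hand side lie in $\operatorname{null}(E_a)^\perp$; second, the summability condition on the singular coefficients of $r_a$. For the first requirement, I would cite Lemma (\ref{lemma:r_aInTheNullSpace}), which was established precisely to show $r_a \in \operatorname{null}(E_a)^\perp$ using the DAG-implied conditional independences from Assumption (\ref{assum:ProxyCausalAssumptions1}) together with the completeness condition in Assumption (\ref{assum:ExistenceCompletenessAssumption}) (via Lemma (\ref{lemma:NullSpaceOfE_a})). For the second requirement, Assumption (\ref{assum:DensityRatioExistenceAssumption}) is tailored to give exactly
\begin{equation*}
    \sum_{n=1}^\infty \frac{1}{\mu_{a,n}^2}\bigl|\langle r_a, g_{a,n}\rangle_{\gL_2(\gW, p_{W|A=a})}\bigr|^2 < \infty.
\end{equation*}

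With both hypotheses of Picard's Theorem in hand, one concludes that a solution $\varphi_0^a \in \gL_2(\gZ, p_{Z|A=a})$ exists and can be written explicitly as
\begin{equation*}
    \varphi_0^a = \sum_{n=1}^\infty \frac{1}{\mu_{a,n}} \langle r_a, g_{a,n}\rangle_{\gL_2(\gW, p_{W|A=a})}\, \varphi_{a,n}.
\end{equation*}
Since this construction is carried out for each $a \in \gA$, the function $\varphi_0(z,a) := \varphi_0^a(z)$ solves the required integral equation. The proof is essentially a bookkeeping exercise: the substantive content has already been packaged into the assumptions and into Lemmas (\ref{lemma:NullSpaceOfE_a}) and (\ref{lemma:r_aInTheNullSpace}). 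The only subtlety I would be careful about is the identification of adjoints — Picard is applied to $E_a^*$, so the orthogonality constraint is with respect to $\operatorname{null}((E_a^*)^*) = \operatorname{null}(E_a)$, matching precisely what Lemma (\ref{lemma:r_aInTheNullSpace}) delivers. No serious obstacle is expected beyond this notational care.
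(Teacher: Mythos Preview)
Your proposal is correct and follows exactly the paper's approach: rewrite the bridge equation as $E_a^* \varphi_0^a = r_a$, invoke Lemma (\ref{lemma:r_aInTheNullSpace}) (which rests on Assumptions (\ref{assum:ProxyCausalAssumptions1}) and (\ref{assum:ExistenceCompletenessAssumption})) for the null-space condition, and then apply Picard's Theorem using Assumptions (\ref{assum:F_aIsCompact}) and (\ref{assum:DensityRatioExistenceAssumption}). Your extra remarks on the adjoint identification and the explicit series solution are accurate elaborations, but the paper's proof is the same three-line argument you outline.
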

\begin{proof}
    Fix $a \in \gA$. The functional equation can be written as $E_a^* \varphi_0^a = r_a$. In Lemma (\ref{lemma:r_aInTheNullSpace}), we proved that $r_a \in \operatorname{null}(E_a)^{\perp}$. Furthermore, combined with Assumptions (\ref{assum:F_aIsCompact}) and (\ref{assum:DensityRatioExistenceAssumption}), Theorem (\ref{thm:PicardsTheorem}) implies the existence.
\end{proof}

\subsection{Existence of the Bridge Function for Conditional Dose-Response}
Let us fix $a,a' \in \gA$. In this section, we introduce the notations:
\begin{align*}
    \varphi_0^{a, a'} &= \varphi_0(\cdot, a, a'),\\
    r_{a, a'}(W) &= \frac{p(W, a') p(a)}{p(W, a) p(a')}.
\end{align*}
Then, the functional equation of the ATT bridge function
\begin{align*}
    \E[\varphi_0(Z, a, a') | W, A = a] = \frac{p(W, a') p(a)}{p(W, a) p(a')},
\end{align*} 
can be written as $E_a^* \varphi_0^{a, a'} = r_{a, a'}$, where $E_a^*$ is the conditional expectation operator that we defined in S.M. (\ref{sec:existenceOfBridgeFunctionATE}). Our result will again rely on Theorem (\ref{thm:PicardsTheorem}), so we make the following assumption.
\begin{assumption}
    The density ratio function $r_{a,a'}$ satisfies
    \begin{align*}
        \sum_{n = 1}^\infty \frac{1}{\mu_{a, n}^2} \big|\langle r_{a, a'}, g_{a, n}\rangle_{\gL_2(\gW, p_{W|A = a})}\big|^2 < \infty.
    \end{align*}
    \label{assum:DensityRatioExistenceAssumptionATT}
\end{assumption}

\begin{lemma}
    Suppose that Assumptions (\ref{assum:ProxyCausalAssumptions1}) and (\ref{assum:ExistenceCompletenessAssumption}) hold. Then, $r_{a, a'} \in \operatorname{null}(E_a)^\perp$.
    \label{lemma:r_aaPrimeInTheNullSpace}
\end{lemma}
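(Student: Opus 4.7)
The plan is to mirror the argument of Lemma (\ref{lemma:r_aInTheNullSpace}) using the ATT analogue of the key conditional expectation identity. Specifically, Equation (\ref{eq:AlternativeProxyATTPropositionImportantEquation2}) already tells us that
\[
r_{a,a'}(W) \;=\; \frac{p(W,a')\,p(a)}{p(W,a)\,p(a')} \;=\; \E\!\left[\frac{p(U,a')\,p(a)}{p(U,a)\,p(a')}\,\Bigg|\, W,\,A=a\right],
\]
so $r_{a,a'}$ is obtained from an integrand that only depends on $U$ (and on the fixed constants $a,a'$). This is the same structural form that made the ATE argument work, with $p(U)/p(U,a)$ replaced by $p(U,a')/(p(U,a)p(a'))$ times $p(a)$.

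The steps I would carry out are as follows. First, fix $\ell \in \operatorname{null}(E_a)$; by Lemma (\ref{lemma:NullSpaceOfE_a}), Assumptions (\ref{assum:ProxyCausalAssumptions1}) and (\ref{assum:ExistenceCompletenessAssumption}) give $\E[\ell(W)\mid U, A=a]=0$ $p(U\mid A=a)$-a.e. Second, write the inner product explicitly as
\[
\langle \ell, r_{a,a'}\rangle_{\gL_2(\gW, p_{W|A=a})} \;=\; \E\big[\ell(W)\,r_{a,a'}(W)\,\big|\,A=a\big],
\]
substitute the conditional-expectation representation of $r_{a,a'}$ above, and apply the tower property to bring $\ell(W)$ inside the outer conditioning on $U$. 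Third, rewrite the resulting joint density via $p(w,u\mid a) = p(w\mid u,a)\,p(u\mid a)$ to obtain
\[
\langle \ell, r_{a,a'}\rangle_{\gL_2(\gW, p_{W|A=a})} \;=\; \E\!\left[\E[\ell(W)\mid U, A=a]\,\frac{p(U,a')\,p(a)}{p(U,a)\,p(a')}\,\Bigg|\,A=a\right].
\]
Finally, the inner conditional expectation vanishes by the first step, so the whole expression is zero, which yields $r_{a,a'}\in \operatorname{null}(E_a)^\perp$.

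I do not anticipate a serious obstacle; the only point requiring care is justifying the interchange of integration/conditioning when inserting the representation of $r_{a,a'}$, which is fine under the square-integrability hypothesis implicit in working in $\gL_2(\gW, p_{W|A=a})$ together with Assumption (\ref{assum:DensityRatioExistenceAssumptionATT}), and verifying that the rewrite $p(w\mid u,a)\,p(u\mid a) = p(u,w\mid a)$ is used correctly. Once the Lemma is in hand, combining it with Picard's Theorem (\ref{thm:PicardsTheorem}), Assumption (\ref{assum:F_aIsCompact}) (compactness of $E_a^*$), and the Picard summability condition in Assumption (\ref{assum:DensityRatioExistenceAssumptionATT}) will give the ATT existence theorem in the same one-line manner as in the ATE case.
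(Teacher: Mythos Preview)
Your proposal is correct and follows essentially the same argument as the paper's own proof: both invoke Equation~(\ref{eq:AlternativeProxyATTPropositionImportantEquation2}) to represent $r_{a,a'}$ as a conditional expectation in $U$, then unfold the inner product via the density factorization $p(u,w\mid a)=p(w\mid u,a)\,p(u\mid a)$ and conclude using Lemma~(\ref{lemma:NullSpaceOfE_a}). The only minor remark is that Assumption~(\ref{assum:DensityRatioExistenceAssumptionATT}) is not needed for this lemma itself (only Assumptions~(\ref{assum:ProxyCausalAssumptions1}) and~(\ref{assum:ExistenceCompletenessAssumption}) are invoked, as the statement says); that summability condition enters only later in the existence theorem.
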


\begin{proof}
    Recall that we previously proved that (Eq.~(\ref{eq:AlternativeProxyATTPropositionImportantEquation2}),
    \begin{align*}
        \frac{p(W, a') p(a)}{p(W, a) p(a')} = \E \Big[\frac{p(U, a') p(a)}{p(U, a) p(a')} \Big| W, A = a\Big].
    \end{align*}
    Let $\ell \in \operatorname{null}(E_a)$. Then,
    \begin{align*}
        \big\langle \ell, r_{a, a'}  \big\rangle_{\gL_2(\gW, p_{W|A = a})} &= \E[\ell(W) r_{a, a'}(W) | A = a]\\
        &= \E\Big[\ell(W) \E\Big[\frac{p(U, a') p(a)}{p(U, a) p(a')} \Big| W, A = a\Big] \Big| A = a\Big]\\
        &= \iint \ell(w) \frac{p(u, a') p(a)}{p(u, a) p(a')} p(u | w, a) p(w | a) du dw\\
        &= \iint \ell(w) \frac{p(u, a') p(a)}{p(u, a) p(a')} p(u, w | a) du dw\\
        &= \iint \ell(w) \frac{p(u, a') p(a)}{p(u, a) p(a')} p(w | u, a) p(u | a) du dw\\
        &= \E\Big[\E[\ell(W) | U, A = a] \frac{p(U, a') p(a)}{p(U, a) p(a')} \Big| A = a\Big] = 0,
    \end{align*}
    where the last equality is due to Lemma (\ref{lemma:NullSpaceOfE_a}). Therefore, $r_{a,a'} \in \operatorname{null}(E_a)^\perp$.
\end{proof}

\begin{theorem}
    Suppose that Assumptions (\ref{assum:ProxyCausalAssumptions1}), (\ref{assum:ExistenceCompletenessAssumption}), (\ref{assum:F_aIsCompact}), and (\ref{assum:DensityRatioExistenceAssumptionATT}) hold. Then, there exists a solution to the functional equation 
\begin{align*}
    \E[\varphi_0(Z, a, a') | W, A = a] = \frac{p(W, a') p(a)}{p(W, a) p(a')}.
\end{align*}
\end{theorem}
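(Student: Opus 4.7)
The plan is to reduce the existence claim to a direct application of Picard's Theorem (Theorem~\ref{thm:PicardsTheorem}), exactly paralleling the ATE existence argument in S.M.~Section~\ref{sec:existenceOfBridgeFunctionATE}. Fix $a,a' \in \gA$ and recall that the functional equation can be rewritten as the operator equation $E_a^* \varphi_0^{a,a'} = r_{a,a'}$, where $E_a^*$ is the adjoint conditional-expectation operator between $\gL_2(\gZ, p_{Z|A=a})$ and $\gL_2(\gW, p_{W|A=a})$ introduced in that section, and $r_{a,a'}(W) = p(W,a')p(a)/(p(W,a)p(a'))$. So the task is simply to verify the two hypotheses of Picard's Theorem for this particular right-hand side.

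First, I would invoke Assumption~(\ref{assum:F_aIsCompact}), which gives compactness of $E_a^*$ together with a singular system $\{\mu_{a,n}, \varphi_{a,n}, g_{a,n}\}_{n=1}^\infty$. Next, for the orthogonality condition $r_{a,a'} \in \operatorname{null}(E_a)^\perp = \operatorname{null}((E_a^*)^*)^\perp$, I would simply cite Lemma~(\ref{lemma:r_aaPrimeInTheNullSpace}), which is the ATT analogue of Lemma~(\ref{lemma:r_aInTheNullSpace}) and already supplies this fact under Assumptions~(\ref{assum:ProxyCausalAssumptions1}) and~(\ref{assum:ExistenceCompletenessAssumption}). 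Finally, the Picard summability condition
\[
\sum_{n=1}^\infty \mu_{a,n}^{-2}\,|\langle r_{a,a'}, g_{a,n}\rangle_{\gL_2(\gW, p_{W|A=a})}|^2 < \infty
\]
is precisely Assumption~(\ref{assum:DensityRatioExistenceAssumptionATT}).

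With these three ingredients in hand, Picard's Theorem yields a solution $\varphi_0^{a,a'} \in \gL_2(\gZ, p_{Z|A=a})$, explicitly given by $\varphi_0^{a,a'} = \sum_{n=1}^\infty \mu_{a,n}^{-1}\langle r_{a,a'}, g_{a,n}\rangle \varphi_{a,n}$. Since $a, a'$ were arbitrary, this establishes existence for every pair, completing the proof. There is essentially no obstacle here: all the nontrivial work was done in proving Lemma~(\ref{lemma:r_aaPrimeInTheNullSpace}) (which uses the identity $r_{a,a'}(W) = \E[p(U,a')p(a)/(p(U,a)p(a')) \mid W, A = a]$ from Equation~(\ref{eq:AlternativeProxyATTPropositionImportantEquation2}) together with Assumption~(\ref{assum:ExistenceCompletenessAssumption}) via Lemma~(\ref{lemma:NullSpaceOfE_a})); the present theorem is just the assembly step.
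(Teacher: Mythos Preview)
Your proposal is correct and follows essentially the same approach as the paper: fix $a,a'$, rewrite the equation as $E_a^*\varphi_0^{a,a'}=r_{a,a'}$, invoke Lemma~(\ref{lemma:r_aaPrimeInTheNullSpace}) for the null-space orthogonality and Assumptions~(\ref{assum:F_aIsCompact}) and~(\ref{assum:DensityRatioExistenceAssumptionATT}) for compactness and summability, then apply Picard's Theorem~(\ref{thm:PicardsTheorem}). Your additional remarks (the explicit series formula and the provenance of Lemma~(\ref{lemma:r_aaPrimeInTheNullSpace})) are accurate and do not alter the argument.
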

\begin{proof}
    Fix $a,a' \in \gA$. The functional equation can be written as $E_a^* \varphi_0^{a, a'} = r_{a, a'}$. In Lemma (\ref{lemma:r_aaPrimeInTheNullSpace}), we proved that $r_{a, a'} \in \operatorname{null}(E_a)^{\perp}$. Furthermore, combined with Assumptions (\ref{assum:F_aIsCompact}) and (\ref{assum:DensityRatioExistenceAssumptionATT}), Theorem (\ref{thm:PicardsTheorem}) implies the existence.
\end{proof}

\section{ALGORITHM DERIVATIONS}
\label{sec:structural_funcs_algorithm_proof}
\subsection{Dose-Response Curve Algorithm}
\label{sec:ATE_algorithm_proof}

Here, we will derive the Algorithm (\ref{algo:ATE_algorithm_with_confounders}). For completeness, we restate the algorithm first.

\begin{algorithm_state*}[Algorithm (\ref{algo:ATE_algorithm_with_confounders})]
Let $\{z_i, w_i, a_i\}_{i = 1}^n$ and $\{\Tilde{w}_i, \Tilde{a}_i\}_{i = 1}^m$ be the first-stage and second-stage data, respectively, and $(\lambda_1, \lambda_2, \lambda_3)$ be the regularization parameters of first, second, and third stage regressions. Furthermore, let $\{\alpha_i\}_{i = 1}^{m + 1}$ be the minimizer of the following cost function:
\begin{align*}
    &\hat{\gL}^{\text{2SR}}(\malpha) = \frac{1}{m} \begin{bmatrix}
        \alpha_{1:m} \\ \alpha_{m+1}
    \end{bmatrix}^T 
    \begin{bmatrix}
        \mB^T \mK_{Z Z}\mB \odot \mK_{\Tilde{A}\Tilde{A}} \nonumber\\ (\frac{\vone}{m})^T \big[{\mB}^T \mK_{Z Z} \bar{\mB} \odot \mK_{\Tilde{A}\Tilde{A}}\big]^T
    \end{bmatrix} 
    \begin{bmatrix}
        \mB^T \mK_{Z Z}\mB \odot \mK_{\Tilde{A}\Tilde{A}} & \big[{\mB}^T \mK_{Z Z} \bar{\mB} \odot \mK_{\Tilde{A}\Tilde{A}}\big] \frac{\vone}{m}
    \end{bmatrix} 
    \begin{bmatrix}
        \alpha_{1:m}\\ \alpha_{m+1}
    \end{bmatrix}\nonumber\\
    &-2 \begin{bmatrix}
    \alpha_{1:m } \\ \alpha_{m + 1}
    \end{bmatrix}^T \begin{bmatrix}
     [ \mB^T \mK_{Z Z} \Bar{\mB} \odot \mK_{\Tilde{A} \Tilde{A}} ] \frac{\vone}{m}\\
     (\frac{\vone}{m})^T \Big[ \Bar{\mB}^T \mK_{Z Z} \Bar{\mB} \odot \mK_{\Tilde{A} \Tilde{A}}\Big] \frac{\vone}{m}
    \end{bmatrix} \nonumber\\
    &+ \lambda_2 \begin{bmatrix}
        \alpha_{1:m } \\ \alpha_{m + 1}
    \end{bmatrix}^T 
    \begin{bmatrix}
        \mB^T \mK_{Z Z} \mB \odot \mK_{\Tilde{A} \Tilde{A}} & [ \mB^T \mK_{Z Z} \Bar{\mB} \odot \mK_{\Tilde{A} \Tilde{A}} ] \frac{\vone}{m}\\
        (\frac{\vone}{m})^T [ \mB^T \mK_{Z Z} \Bar{\mB} \odot \mK_{\Tilde{A} \Tilde{A}} ]^T & (\frac{\vone}{m})^T \Big[ \Bar{\mB}^T \mK_{Z Z} \Bar{\mB} \odot \mK_{\Tilde{A} \Tilde{A}}\Big] \frac{\vone}{m}
    \end{bmatrix} 
    \begin{bmatrix}
        \alpha_{1:m } \\ \alpha_{m + 1}
    \end{bmatrix} 
\end{align*}
where $\mI \in \R^{n \times n}$ is the identity matrix, $\vone$ is vector of ones, $\alpha_{1:m} = \begin{bmatrix}
        \alpha_1 & \alpha_2 & \ldots & \alpha_m
    \end{bmatrix}^T \in \R^{m}$, $\mB = (\mK_{W W} \odot \mK_{A A} + n \lambda_1 \mI)^{-1}(\mK_{W \Tilde{W}} \odot \mK_{A \Tilde{A}})\in \R^{n \times m},$
and $\bar{\mB}$ is the matrix whose $j$-th column is given by
\begin{align*}
    \Bar{\mB}_{:, j} &= \frac{1}{m} \sum_{\substack{l = 1 \\ l \ne j}}^m (\mK_{W W} \odot \mK_{A A} + n \lambda_1 \mI)^{-1}(\mK_{W \Tilde{w}_l} \odot \mK_{A \Tilde{a}_j}).
\end{align*}
Then, the dose-response curve estimation $f_{\text{ATE}}(a)$ is given by
\begin{align*}
    f_{\text{ATE}}(a) &= \alpha_{1:m}^T \Big( \mB^T \big(\mK_{Z Z} \text{diag}(\mY) [\mK_{A A} + n \lambda_3 \mI]^{-1} \mK_{A a}\big) \odot \mK_{\Tilde{A} a}\Big)\\
    &+ \alpha_{m + 1} \Big( \Bar{\mB}^T \big( \mK_{Z Z}  \text{diag}(\mY) [\mK_{A A} + n \lambda_3 \mI]^{-1} \mK_{A a} \big) \odot \mK_{\Tilde{A} a}\Big) \frac{\vone}{m}
\end{align*}
Here, for $F \in \{A, W, Z\}$ with domain $\gF$, the first-stage kernel matrices are denoted as $\mK_{FF} = [k_\gF(f_i, f_j)]_{ij} \in \R^{n \times n}$, $\mK_{Ff} = [k_\gF(f_i, f)]_{i} \in \R^{n}$, where $\{f_i\}_{i = 1}^n$ denotes the first-stage data samples. Similarly, with second-stage variables $\tilde{F} \in \{\Tilde{A}, \Tilde{W}\}$, the kernel matrices are denoted as follows: $\mK_{\Tilde{F} \Tilde{F}} = [k_\gF(\Tilde{f}_i, \Tilde{f}_j)]_{ij} \in \R^{m \times m}$, $\mK_{F \Tilde{F}} = [k_\gF(f_i, \Tilde{f}_j)]_{ij} \in \R^{n \times m}$, $\mK_{F \Tilde{f}} = [k_\gF(f_i, \Tilde{f})]_i \in \R^n$, and $\mK_{\Tilde{F} f} = [k_\gF(\Tilde{f}_j, f)]_j \in \R^m$.
\end{algorithm_state*}

\begin{proof}[Derivation of Algorithm (\ref{algo:ATE_algorithm_with_confounders})]
Let $r(W, a)$ denote $p(W) p(a) / p(W, a)$. We would like to find the optimum of the following loss function
\begin{align*}
    \gL^{2SR}(\varphi) = \E_{W, A}[(r(W, A) - \E[\varphi(Z, A) | W, A])^2] + \lambda_2 \|\varphi_0\|_{\gH_\gZ \otimes \gH_\gA}^2.
\end{align*}
This loss cannot be directly optimized as it involves the conditional mean $\E[\varphi(Z, A) | W, A]$. A similar problem in \citep{Mastouri2021ProximalCL, singh2023kernelmethodsunobservedconfounding, xu2024kernelsingleproxycontrol} is addressed using a two-stage regression approach: the first stage approximates the conditional expectation, and the second stage minimizes the loss. We build on the approach of \citet{xu2024kernelsingleproxycontrol} in our algorithm derivation, which is shown to be more numerically stable, and to require optimizing fewer parameters, than the earlier approaches: see \citet[Appendix F]{xu2024kernelsingleproxycontrol}

\emph{First Stage:} Assume that the bridge function $\varphi$ is the RKHS $\gH_\gZ \otimes \gH_\gA$. Then,
\begin{align*}
    \E[\varphi(Z, a) | W = w, A = a] &= \E[\langle \varphi, \phi_\gZ(Z) \otimes \phi_\gA(a) \rangle_{\gH_\gZ  \otimes \gH_\gA} | W = w, A = a]\\
    &= \langle \varphi, \E[\phi_\gZ(Z) | W = w, A = a] \otimes \phi_\gA(a) \rangle_{\gH_\gZ \otimes \gH_\gA} \\&= \langle \varphi, \mu_{Z|W, A} (w, a) \otimes \phi_\gA(a) \rangle_{\gH_\gZ  \otimes \gH_\gA}\\
\end{align*}
where $\mu_{Z|W, A} (w, a) = \E[\phi_\gZ(Z) | W = w, A = a]$ is the CME. Considering the sample-based first-stage regression with given data $\{z_i, w_i, a_i\}_{i = 1}^n$
\begin{align*}
    \hat{\gL}^{c}(C) = \frac{1}{n} \sum_{i = 1}^n \|\phi_\gZ(z_i) - C(\phi_\gW(w_i) \otimes \phi_\gA(a_i))\|_{\gH_\gZ}^2 + \lambda_1 \|C\|_{S_2(\gH_\gW \otimes \gH_\gA, \gH_\gZ)}^2
\end{align*}
The minimizer $\hat{C}_{Z|W, A}$ of the loss function $\hat{\gL}^{c}(C)$ is given by:
\begin{align*}
    \hat{C}_{Z|W, A} (\phi_\gW(w) \otimes \phi_\gA(a)) = \hat{\mu}_{Z| W, A} (w, a) = \sum_{i = 1}^n \beta_i(w, a) \phi_\gZ(z_i) = \Phi_\gZ {\mBeta}(w, a)
\end{align*}
where 
\begin{align*}
    {\mBeta}(w, a) &= (\mK_{W W} \odot \mK_{A A} + n \lambda \mI)^{-1}(\mK_{W w} \odot \mK_{A a})\\
    \Phi_\gZ &= \begin{bmatrix}
        \phi_\gZ(z_1) & \ldots & \phi_\gZ(z_n)
    \end{bmatrix}
\end{align*}

\emph{Second-Stage:} We first consider the simplification of the population loss for two-stage regression:
\begin{align*}
    &{\gL}^{2SR}(\varphi) = \E\big[\big(r(W, A) - \E[\varphi(Z, A) | W, A]\big)^2\big] + \lambda_2 \|\varphi\|^2_{\gH_\gZ \otimes \gH_\gA}\\
    &\propto\E\big[ \E[\varphi(Z, A) | W, A]^2 \big] - 2 \int \frac{p(w) p(a)}{p(w, a)} \E[\varphi(Z, a) | w, a] p(w, a) d w d a + \lambda_2 \|\varphi\|^2_{\gH_\gZ \otimes \gH_\gA} \\
    &=\E\big[ \E[\varphi(Z, A) | W, A]^2 \big] - 2 \E_{W} \E_{A} \big[ \E[\varphi(Z, A) | W, A]\big] + \lambda_2 \|\varphi\|^2_{\gH_\gZ \otimes \gH_\gA}\\
    &= \E\big[ \langle \varphi, \mu_{Z|W, A} (W, A)\otimes \phi_\gA(A) \rangle_{\gH_\gZ \otimes \gH_\gA}^2 \big] \\&- 2 \E_{W} \E_{A} \big[ \langle \varphi, \mu_{Z|W, A} (W, A)\otimes \phi_\gA(A) \rangle_{\gH_\gZ \otimes \gH_\gA}\big] + \lambda_2 \|\varphi\|^2_{\gH_\gZ \otimes \gH_\gA}.
\end{align*}
Recall that the set $\{\Tilde{z}_i, \Tilde{w}_i, \Tilde{a}_i\}_{i = 1}^m$ denotes the second-stage data. Then, the empirical objective can be written as
\begin{align}
    \hat{\gL}^{2SR}_m(\varphi) &= \frac{1}{m} \sum_{i = 1}^m \langle \varphi, \hat{\mu}_{Z|W, A} (\Tilde{w}_i, \Tilde{a}_i) \otimes \phi_\gA(\Tilde{a}_i) \rangle_{\gH_\gZ \otimes \gH_\gA}^2 \nonumber \\&-2 \frac{1}{m (m-1)} \sum_{i = 1}^m \sum_{\substack{j = 1 \\ j \ne i}}^m \Big\langle \varphi, \hat{\mu}_{Z|W, A} (\Tilde{w}_j, \Tilde{a}_i) \otimes \phi_\gA(\Tilde{a}_i) \Big\rangle_{\gH_\gZ \otimes \gH_\gA} + \lambda_2 \|\varphi\|^2_{\gH_\gZ \otimes \gH_\gA}
\label{eq:2StageRegressionObjectiveWithConfoundersFinal}
\end{align}
The minimizer of this objective should be in the span of the following set,
\begin{align*}
    \varphi \in \text{span} \Bigg\{ \{\hat{\mu}_{Z|W, A} (\Tilde{w}_i, \Tilde{a}_i) \otimes  \phi_\gA(\Tilde{a}_i) \}_{i = 1}^m \cup \Big\{ \frac{1}{m (m - 1)} \sum_{i = 1}^m \sum_{\substack{j = 1 \\ j \ne i}}^m \hat{\mu}_{Z|W, A} (\Tilde{w}_j, \Tilde{a}_i) \otimes \phi_\gA(\Tilde{a}_i) \Big\} \Bigg\}.
\end{align*}
Hence, we write
\begin{align*}
    {\varphi = \sum_{i = 1}^m \alpha_i  \hat{\mu}_{Z|W, A} (\Tilde{w}_i, \Tilde{a}_i) \otimes  \phi_\gA(\Tilde{a}_i) + \frac{\alpha_{m + 1}}{m (m - 1)} \sum_{j = 1}^m \sum_{\substack{l = 1 \\ l \ne j}}^m \hat{\mu}_{Z|W, A} (\Tilde{w}_l, \Tilde{a}_j) \otimes \phi_\gA(\Tilde{a}_j)}
\end{align*}

Let us notice the result of the following inner product which will come up a lot in our derivations,
\begin{align}
    &\Big\langle  \hat{\mu}_{Z|W, A} (\Tilde{w}_j, \Tilde{a}_i) \otimes  \phi_\gA(\Tilde{a}_i), \hat{\mu}_{Z|W, A} (\Tilde{w}_p, \Tilde{a}_l) \otimes \phi_\gA(\Tilde{a}_l) \Big\rangle_{\gH_\gZ \otimes \gH_\gA} \nonumber = \langle  \hat{\mu}_{Z|W, A} (\Tilde{w}_j, \Tilde{a}_i), \hat{\mu}_{Z|W, A} (\Tilde{w}_p, \Tilde{a}_l) \rangle 
    \langle \phi_\gA(\Tilde{a}_i), \phi_\gA(\Tilde{a}_l)\rangle \nonumber\\
    &= \mBeta(\Tilde{w}_j, \Tilde{a}_i)^T \Phi_\gZ^T \Phi_\gZ \mBeta(\Tilde{w}_p, \Tilde{a}_l) k_\gA(\Tilde{a}_i, \Tilde{a}_l) = \mBeta(\Tilde{w}_j, \Tilde{a}_i)^T \mK_{Z Z} \mBeta(\Tilde{w}_p, \Tilde{a}_l) k_\gA(\Tilde{a}_i, \Tilde{a}_l) \label{eq:AlternativeProxyMethodProofImportantEquation2}
\end{align}
Next, we will calculate the terms in our second-stage regression loss individually. We begin with the squared norm of $\varphi_0$:\looseness=-1
\begin{align*}
    \|\varphi\|^2  &= \langle \varphi, \varphi \rangle\nonumber\\
    &= \Bigg\langle 
    \sum_{i = 1}^m \alpha_i  \hat{\mu}_{Z|W, A} (\Tilde{w}_i, \Tilde{a}_i) \otimes \phi_\gA(\Tilde{a}_i) + \frac{\alpha_{m + 1}}{m (m - 1)} \sum_{j = 1}^m \sum_{\substack{l = 1 \\ l \ne j}}^m \hat{\mu}_{Z|W, A} (\Tilde{w}_l, \Tilde{a}_j) \otimes \phi_\gA(\Tilde{a}_j), \nonumber
    \\&\sum_{p = 1}^m \alpha_p  \hat{\mu}_{Z|W, A} (\Tilde{w}_p, \Tilde{a}_p)  \otimes \phi_\gA(\Tilde{a}_p) + \frac{\alpha_{m + 1}}{m (m - 1)}  \sum_{r = 1}^m \sum_{\substack{s = 1 \\ s \ne r}}^m \hat{\mu}_{Z|W, A} (\Tilde{w}_s, \Tilde{a}_r) \otimes  \phi_\gA(\Tilde{a}_r)
    \Bigg\rangle\nonumber\\
    &=\sum_{i = 1}^m \sum_{p = 1}^m \alpha_i \alpha_p \langle \hat{\mu}_{Z|W, A} (\Tilde{w}_i, \Tilde{a}_i)  \otimes \phi_\gA(\Tilde{a}_i), \hat{\mu}_{Z|W, A} (\Tilde{w}_p, \Tilde{a}_p)  \otimes \phi_\gA(\Tilde{a}_p)\rangle \\
    &+2 \alpha_{m + 1} \frac{1}{m (m - 1)} \sum_{i = 1}^m \sum_{j = 1}^m \sum_{\substack{l = 1 \\ l \ne j}}^m \alpha_i \langle \hat{\mu}_{Z|W, A} (\Tilde{w}_i, \Tilde{a}_i) \otimes \phi_\gA(\Tilde{a}_i),  \hat{\mu}_{Z|W, A} (\Tilde{w}_l, \Tilde{a}_j)  \otimes \phi_\gA(\Tilde{a}_j)\rangle\\
    &+ \alpha_{m + 1}^2 \frac{1}{m^2 (m - 1)^2} \sum_{j = 1}^m \sum_{\substack{l = 1 \\ l \ne j}}^m \sum_{r = 1}^m \sum_{\substack{s = 1 \\ s \ne r}}^m  \langle \hat{\mu}_{Z|W, A} (\Tilde{w}_l, \Tilde{a}_j)  \otimes \phi_\gA(\Tilde{a}_j), \hat{\mu}_{Z|W, A} (\Tilde{w}_s, \Tilde{a}_r) \otimes \phi_\gA(\Tilde{a}_r)\rangle
\end{align*}
Using Equation (\ref{eq:AlternativeProxyMethodProofImportantEquation2}), we can write
\begin{align}
&\|\varphi\|^2 = \langle \varphi, \varphi \rangle\nonumber\\
&=\sum_{i = 1}^m \sum_{p = 1}^m \alpha_i \alpha_p \mBeta(\Tilde{w}_i, \Tilde{a}_i)^T \mK_{Z Z} \mBeta(\Tilde{w}_p, \Tilde{a}_p) k_\gA(\Tilde{a}_i, \Tilde{a}_p) \label{eq:AlternativeProxyMethodVarphiNormWithConfounder1}\\&+2 \alpha_{m + 1} \frac{1}{m (m-1)} \sum_{i = 1}^m \sum_{j = 1}^m \sum_{\substack{l = 1 \\ l \ne j}}^m \alpha_i \mBeta(\Tilde{w}_i,\Tilde{a}_i)^T \mK_{Z Z} \mBeta(\Tilde{w}_l, \Tilde{a}_j) k_\gA(\Tilde{a}_i, \Tilde{a}_j)\label{eq:AlternativeProxyMethodVarphiNormWithConfounder2}\\
&+ \alpha_{m + 1}^2 \frac{1}{m^2 (m-1)^2} \sum_{j = 1}^m \sum_{\substack{l = 1 \\ l \ne j}}^m \sum_{r = 1}^m \sum_{\substack{s = 1 \\ s \ne r}}^m \mBeta(\Tilde{w}_l, \Tilde{a}_j)^T \mK_{Z Z} \mBeta(\Tilde{w}_s, \Tilde{a}_r) k_\gA(\Tilde{a}_j, \Tilde{a}_r) \label{eq:AlternativeProxyMethodVarphiNormWithConfounder3}
\end{align}
The component in Equation (\ref{eq:AlternativeProxyMethodVarphiNormWithConfounder1}):
\begin{align*}
    \sum_{i = 1}^m \sum_{p = 1}^m \alpha_i \alpha_p \mBeta(\Tilde{w}_i, \Tilde{a}_i)^T \mK_{Z Z} \mBeta(\Tilde{w}_p, \Tilde{a}_p)  k_\gA(\Tilde{a}_i, \Tilde{a}_p) = \alpha_{1:m}^T \Big[ \mB^T \mK_{Z Z} \mB \odot \mK_{\Tilde{A} \Tilde{A}} \Big] \alpha_{1:m}
\end{align*}
where 
\begin{align*}
    \alpha_{1:m} &= \begin{bmatrix}
        \alpha_1 & \alpha_2 & \ldots & \alpha_m
    \end{bmatrix}^T\\
    \mB &= (\mK_{W W} \odot \mK_{A A} + n \lambda_1 \mI)^{-1}(\mK_{W \Tilde{W}} \odot \mK_{A \Tilde{A}})
\end{align*}
The component in Equation (\ref{eq:AlternativeProxyMethodVarphiNormWithConfounder2}):
\begin{align*}
    &2 \alpha_{m + 1} \frac{1}{m (m - 1)} \sum_{i = 1}^m \sum_{j = 1}^m \sum_{\substack{l = 1 \\ l \ne j}}^m \alpha_i \mBeta(\Tilde{w}_i, \Tilde{a}_i)^T \mK_{Z Z} \mBeta(\Tilde{w}_l, \Tilde{a}_j) k_\gA(\Tilde{a}_i, \Tilde{a}_j)\\
    &= 2 \alpha_{m + 1} \frac{1}{m} \sum_{i = 1}^m \sum_{j = 1}^m \alpha_i \mBeta(\Tilde{w}_i, \Tilde{a}_i)^T \mK_{Z Z} \Big( \frac{1}{m - 1} \sum_{\substack{l = 1 \\ l \ne j}}^m \mBeta(\Tilde{w}_l, \Tilde{a}_j) \Big) k_\gA(\Tilde{a}_i, \Tilde{a}_j)\\
    &=2 \alpha_{m + 1} \alpha_{1:m}^T\Big[ \mB^T \mK_{Z Z} \Bar{\mB} \odot \mK_{\Tilde{A} \Tilde{A}} \Big] (\vone / m)
\end{align*}
where $\bar{\mB}$ is the matrix whose $j$-th column is given by
\begin{align*}
    \Bar{\mB}_{:, j} &= \frac{1}{m - 1} \sum_{\substack{l = 1 \\ l \ne j}}^m \mBeta(\Tilde{w}_l, \Tilde{a}_j) = \frac{1}{m - 1} \sum_{\substack{l = 1 \\ l \ne j}}^m (\mK_{W W} \odot \mK_{A A} + n \lambda_1 \mI)^{-1}(\mK_{W \Tilde{w}_l} \odot \mK_{A \Tilde{a}_j})
\end{align*}
Finally, the third component in Equation (\ref{eq:AlternativeProxyMethodVarphiNormWithConfounder3}):
\begin{align*}
    &\alpha_{m + 1}^2 \frac{1}{m^2 (m-1)^2} \sum_{j = 1}^m \sum_{\substack{l = 1 \\ l \ne j}}^m \sum_{r = 1}^m \sum_{\substack{s = 1 \\ s \ne r}}^m \mBeta(\Tilde{w}_l, \Tilde{a}_j)^T \mK_{Z Z} \mBeta(\Tilde{w}_s, \Tilde{a}_r) k_\gA(\Tilde{a}_j, \Tilde{a}_r) \\
    &=\alpha_{m + 1}^2  \frac{1}{m^2} \sum_{j = 1}^m \sum_{r = 1}^m \Big( \frac{1}{m - 1} \sum_{\substack{l = 1 \\ l \ne j}}^m \mBeta(\Tilde{w}_l, \Tilde{a}_j) \Big)^T \mK_{Z Z} \Big(\frac{1}{m - 1}  \sum_{\substack{s = 1 \\ s \ne r}}^m \mBeta(\Tilde{w}_s, \Tilde{a}_r) \Big) k_\gA(\Tilde{a}_j, \Tilde{a}_r)\\
    &= \alpha_{m + 1}^2 (\vone / m)^T \Big( \Bar{\mB}^T \mK_{Z Z} \Bar{\mB} \odot \mK_{\Tilde{A} \Tilde{A}} \Big) (\vone / m)
\end{align*}
As a result, 
\begin{align}
    &\|\varphi\|^2 = \langle \varphi, \varphi \rangle \nonumber \\&= \alpha_{1:m}^T \Big( \mB^T \mK_{Z Z} \mB \odot \mK_{\Tilde{A} \Tilde{A}}\Big) \alpha_{1:m} + 2 \alpha_{m + 1} \alpha_{1:m}^T\Big( \mB^T \mK_{Z Z} \Bar{\mB} \odot \mK_{\Tilde{A} \Tilde{A}} \Big) (\vone / m) \nonumber \\&+ \alpha_{m + 1}^2 (\vone / m)^T \Big( \Bar{\mB}^T \mK_{Z Z} \Bar{\mB} \odot \mK_{\Tilde{A} \Tilde{A}}\Big) (\vone / m)\nonumber
    \\&= \begin{bmatrix}
        \alpha_{1:m }^T & \alpha_{m + 1}
    \end{bmatrix} 
    \begin{bmatrix}
        \mB^T \mK_{Z Z} \mB \odot \mK_{\Tilde{A} \Tilde{A}} & ( \mB^T \mK_{Z Z} \Bar{\mB} \odot \mK_{\Tilde{A} \Tilde{A}} ) (\vone / m)\\
        (\vone / m)^T ( \mB^T \mK_{Z Z} \Bar{\mB} \odot \mK_{\Tilde{A} \Tilde{A}} )^T & (\vone / m)^T \Big( \Bar{\mB}^T \mK_{Z Z} \Bar{\mB} \odot \mK_{\Tilde{A} \Tilde{A}}\Big) (\vone / m)
    \end{bmatrix} 
    \begin{bmatrix}
        \alpha_{1:m } \\ \alpha_{m + 1}
    \end{bmatrix} \label{eq:AlternativeProxyMethodsVarphiNormWithConfoundersFinal}
\end{align}
Next, to derive the matrix-vector multiplication form for the first component of the objective given in Equation (\ref{eq:2StageRegressionObjectiveWithConfoundersFinal}), consider the following:.
\begin{align*}
    &\Big\langle \varphi,  \hat{\mu}_{Z| W, A}(\Tilde{w}_i, \Tilde{a}_i) \otimes \phi_\gA(\Tilde{a}_i)\Big\rangle\\
    &= \Bigg\langle \sum_{l = 1}^m \alpha_l  \hat{\mu}_{Z|W,A} (\Tilde{w}_l, \Tilde{a}_l)\otimes \phi_\gA(\Tilde{a}_l) + \frac{\alpha_{m + 1}}{m (m-1)}  \sum_{j = 1}^m \sum_{\substack{l = 1 \\ l \ne j}}^m \hat{\mu}_{Z|W, A} (\Tilde{w}_l, \Tilde{a}_j) \otimes \phi_\gA(\Tilde{a}_j), 
    \hat{\mu}_{Z| W, A}(\Tilde{w}_i, \Tilde{a}_i) \otimes \phi_\gA(\Tilde{a}_i)\Bigg\rangle\\
    &=\sum_{l = 1}^m \alpha_l \Big\langle  \hat{\mu}_{Z|W, A} (\Tilde{w}_l, \Tilde{a}_l)\otimes \phi_\gA(\Tilde{a}_l), \hat{\mu}_{Z| W, A}(\Tilde{w}_i, \Tilde{a}_i) \otimes \phi_\gA(\Tilde{a}_i)\Big\rangle\\
    &+ \frac{\alpha_{m + 1}}{m (m-1)} \sum_{j = 1}^m \sum_{\substack{l = 1 \\ l \ne j}}^m \Big\langle \hat{\mu}_{Z|W, A} (\Tilde{w}_l, \Tilde{a}_j) \otimes \phi_\gA(\Tilde{a}_j), 
    \hat{\mu}_{Z| W, A}(\Tilde{w}_i, \Tilde{a}_i) \otimes \phi_\gA(\Tilde{a}_i) \Big\rangle\\
    &= \sum_{l = 1}^m \alpha_l \mBeta(\Tilde{w}_l, \Tilde{a}_l)^T \mK_{Z Z} \mBeta(\Tilde{w}_i, \Tilde{a}_i) k_\gA(\Tilde{a}_l, \Tilde{a}_i)+ \frac{\alpha_{m + 1}}{m(m-1)} \sum_{j = 1}^m \sum_{\substack{l = 1 \\ l \ne j}}^m \mBeta(\Tilde{w}_l, \Tilde{a}_j)^T \mK_{Z Z} \mBeta(\Tilde{w}_i, \Tilde{a}_i) k_\gA(\Tilde{a}_j, \Tilde{a}_i)\\
    &= \sum_{l = 1}^m \alpha_l \mBeta(\Tilde{w}_l, \Tilde{a}_l)^T \mK_{Z Z} \mBeta(\Tilde{w}_i, \Tilde{a}_i) k_\gA(\Tilde{a}_l, \Tilde{a}_i)+ \frac{\alpha_{m + 1}}{m} \sum_{j = 1}^m \Big( \frac{1}{m - 1} \sum_{\substack{l = 1 \\ l \ne j}}^m \mBeta(\Tilde{w}_l, \Tilde{a}_j)\Big)^T \mK_{Z Z} \mBeta(\Tilde{w}_i, \Tilde{a}_i) k_\gA(\Tilde{a}_j, \Tilde{a}_i)\\
    &=\Big[\big(\mB^T \mK_{Z Z}\mB \odot \mK_{\Tilde{A}\Tilde{A}}\big) \alpha_{1:m}\Big]_i + \alpha_{m+1}\Big[\big({\mB}^T \mK_{Z Z}\bar{\mB} \odot \mK_{\Tilde{A}\Tilde{A}}\big) (\vone / m)\Big]_i\\
    &= \Bigg[ \begin{bmatrix}
        \mB^T \mK_{Z Z} \mB \odot \mK_{\Tilde{A}\Tilde{A}} & \big({\mB}^T \mK_{Z Z} \bar{\mB} \odot \mK_{\Tilde{A}\Tilde{A}}\big) (\vone / m)
    \end{bmatrix} \begin{bmatrix}
        \alpha_{1:m}\\ \alpha_{m+1}
    \end{bmatrix} \Bigg]_i
\end{align*}
As a result, the first component in Equation (\ref{eq:2StageRegressionObjectiveWithConfoundersFinal}) is given by
\begin{align}
    &\frac{1}{m} \sum_{i = 1}^m \Big\langle \varphi,  \hat{\mu}_{Z| W,A}(\Tilde{w}_i, \Tilde{a}_i) \otimes \phi_\gA(\Tilde{a}_i)\Big\rangle^2 \nonumber\\
    &= \frac{1}{m} \begin{bmatrix}
        \alpha_{1:m}^T & \alpha_{m+1}
    \end{bmatrix} 
    \begin{bmatrix}
        \mB^T \mK_{Z Z}\mB \odot \mK_{\Tilde{A}\Tilde{A}} \\ (\frac{\vone}{m})^T \big({\mB}^T \mK_{Z Z} \bar{\mB} \odot \mK_{\Tilde{A}\Tilde{A}}\big)^T
    \end{bmatrix} 
    \begin{bmatrix}
        \mB^T \mK_{Z Z}\mB \odot \mK_{\Tilde{A}\Tilde{A}} & \big({\mB}^T \mK_{Z Z} \bar{\mB} \odot \mK_{\Tilde{A}\Tilde{A}}\big) \frac{\vone}{m}
    \end{bmatrix} 
    \begin{bmatrix}
        \alpha_{1:m}\\ \alpha_{m+1}
    \end{bmatrix}
\label{eq:AlternativeProxyMethodsProofLossWithConfoundersFirstTerm}
\end{align}
Lastly, for the second component in Equation (\ref{eq:2StageRegressionObjectiveWithConfoundersFinal}), we note that
\begin{align}
    &\frac{1}{m(m-1)} \sum_{i = 1}^m \sum_{\substack{j = 1 \\ j \ne i}}^m \Big\langle \varphi, \hat{\mu}_{Z|W, A} (\Tilde{w}_j, \Tilde{a}_i) \otimes \phi_\gA(\Tilde{a}_i) \Big\rangle_{\gH_\gZ \otimes \gH_\gA} \nonumber\\
    &= \frac{1}{m(m-1)} \sum_{i = 1}^m \sum_{\substack{j = 1 \\ j \ne i}}^m \sum_{l = 1}^m \alpha_l \Big\langle 
     \hat{\mu}_{Z|W,A} (\Tilde{w}_l, \Tilde{a}_l)\otimes \phi_\gA(\Tilde{a}_l), \hat{\mu}_{Z|W, A} (\Tilde{w}_j, \Tilde{a}_i) \otimes \phi_\gA(\Tilde{a}_i)
    \Big\rangle_{\gH_\gZ \otimes \gH_\gA} \nonumber\\
    &+ \frac{\alpha_{m + 1} }{m^2 (m-1)^2} \sum_{i = 1}^m \sum_{\substack{j = 1 \\ j \ne i}}^m \sum_{r = 1}^m \sum_{\substack{s = 1 \\ s \ne r}}^m  \Big\langle \hat{\mu}_{Z|W, A} (\Tilde{w}_s, \Tilde{a}_r) \otimes \phi_\gA(\Tilde{a}_r), 
    \hat{\mu}_{Z|W,A} (\Tilde{w}_j, \Tilde{a}_i) \otimes \phi_\gA(\Tilde{a}_i)\Big\rangle_{\gH_\gZ \otimes \gH_\gA} \nonumber\\
    &= \frac{1}{m(m-1)} \sum_{i = 1}^m \sum_{\substack{j = 1 \\ j \ne i}}^m \sum_{l = 1}^m \alpha_l \mBeta(\Tilde{w}_l, \Tilde{a}_l)^T\mK_{Z Z}\mBeta(\Tilde{w}_j, \Tilde{a}_i) k_\gA(\Tilde{a}_l, \Tilde{a}_i) \nonumber\\
    &+ \frac{\alpha_{m + 1}}{m^2 (m-1)^2} \sum_{i = 1}^m \sum_{\substack{j = 1 \\ j \ne i}}^m \sum_{r = 1}^m \sum_{\substack{s = 1 \\ s \ne r}}^m \mBeta(\Tilde{w}_s, \Tilde{a}_r)^T \mK_{Z Z} \mBeta(\Tilde{w}_j, \Tilde{a}_i) k_\gA(\Tilde{a}_r, \Tilde{a}_i) \nonumber\\
    &= \frac{1}{m} \sum_{i = 1}^m \sum_{l = 1}^m \alpha_l \mBeta(\Tilde{w}_l, \Tilde{a}_l)^T\mK_{Z Z}\Big(\frac{1}{m - 1} \sum_{\substack{j = 1 \\ j \ne i}}^m  \mBeta(\Tilde{w}_j, \Tilde{a}_i)\Big) k_\gA(\Tilde{a}_l, \Tilde{a}_i) \nonumber\\
    &+ \frac{\alpha_{m + 1}}{m^2} \sum_{i = 1}^m  \sum_{r = 1}^m \Big(\frac{1}{m - 1} \sum_{\substack{s = 1 \\ s \ne r}}^m  \mBeta(\Tilde{w}_s, \Tilde{a}_r) \Big)^T \mK_{Z Z} \Big(\frac{1}{m - 1} \sum_{\substack{j = 1 \\ j \ne i}}^m \mBeta(\Tilde{w}_j, \Tilde{a}_i) \Big) k_\gA(\Tilde{a}_r, \Tilde{a}_i) \nonumber\\
    &=\frac{1}{m} \alpha_{1:m}^T \Big( \mB^T \mK_{Z Z} \Bar{\mB} \odot \mK_{\Tilde{A}\Tilde{A}}\Big) \vone + \alpha_{m+1} \frac{1}{m^2} \vone^T \Big( \bar{\mB} \mK_{Z Z} \Bar{\mB} \odot \mK_{\Tilde{A}\Tilde{A}} \Big) \vone \nonumber \\
&=  \begin{bmatrix}
    \alpha_{1:m } \\ \alpha_{m + 1}
\end{bmatrix}^T \begin{bmatrix}
 ( \mB^T \mK_{Z Z} \Bar{\mB} \odot \mK_{\Tilde{A} \Tilde{A}} ) \frac{\vone}{m}\\
 (\frac{\vone}{m})^T ( \Bar{\mB}^T \mK_{Z Z} \Bar{\mB} \odot \mK_{\Tilde{A} \Tilde{A}}) \frac{\vone}{m}
\end{bmatrix} \label{eq:AlternativeProxyMethodsProofLossWithConfounderSecondTerm}
\end{align}
Using Equation (\ref{eq:AlternativeProxyMethodsVarphiNormWithConfoundersFinal}), (\ref{eq:AlternativeProxyMethodsProofLossWithConfoundersFirstTerm}) and (\ref{eq:AlternativeProxyMethodsProofLossWithConfounderSecondTerm}), we can write
\begin{align}
    &\hat{\gL}^{2SR}_m(\varphi) = \frac{1}{m} \sum_{i = 1}^m \langle \varphi, \hat{\mu}_{Z|W, A} (\Tilde{w}_i, \Tilde{a}_i) \otimes \phi_\gA(\Tilde{a}_i) \rangle_{\gH_\gZ \otimes \gH_\gA}^2 \nonumber \\&-2 \frac{1}{m(m-1)} \sum_{i = 1}^m \sum_{\substack{j = 1 \\ j \ne i}}^m \Big\langle \varphi, \hat{\mu}_{Z|W, A} (\Tilde{w}_j, \Tilde{a}_i) \otimes \phi_\gA(\Tilde{a}_i)\Big\rangle_{\gH_\gZ \otimes \gH_\gA} + \lambda_2 \|\varphi_0\|^2_{\gH_\gZ \otimes \gH_\gA} \nonumber\\
    &= \frac{1}{m} \begin{bmatrix}
        \alpha_{1:m}^T & \alpha_{m+1}
    \end{bmatrix} 
    \begin{bmatrix}
        \mB^T \mK_{Z Z}\mB \odot \mK_{\Tilde{A}\Tilde{A}} \nonumber\\ (\frac{\vone}{m})^T \big({\mB}^T \mK_{Z Z} \bar{\mB} \odot \mK_{\Tilde{A}\Tilde{A}}\big)^T
    \end{bmatrix} 
    \begin{bmatrix}
        \mB^T \mK_{Z Z}\mB \odot \mK_{\Tilde{A}\Tilde{A}} & \big({\mB}^T \mK_{Z Z} \bar{\mB} \odot \mK_{\Tilde{A}\Tilde{A}}\big) \frac{\vone}{m}
    \end{bmatrix} 
    \begin{bmatrix}
        \alpha_{1:m}\\ \alpha_{m+1}
    \end{bmatrix}\nonumber\\
    &-2 \begin{bmatrix}
    \alpha_{1:m } \\ \alpha_{m + 1}
    \end{bmatrix}^T \begin{bmatrix}
     ( \mB^T \mK_{Z Z} \Bar{\mB} \odot \mK_{\Tilde{A} \Tilde{A}} ) \frac{\vone}{m}\\
     (\frac{\vone}{m})^T \Big( \Bar{\mB}^T \mK_{Z Z} \Bar{\mB} \odot \mK_{\Tilde{A} \Tilde{A}}\Big) \frac{\vone}{m}
    \end{bmatrix} \nonumber\\
    &+ \lambda_2 \begin{bmatrix}
        \alpha_{1:m }^T & \alpha_{m + 1}
    \end{bmatrix} 
    \begin{bmatrix}
        \mB^T \mK_{Z Z} \mB \odot \mK_{\Tilde{A} \Tilde{A}} & ( \mB^T \mK_{Z Z} \Bar{\mB} \odot \mK_{\Tilde{A} \Tilde{A}} ) \frac{\vone}{m}\\
        (\frac{\vone}{m})^T ( \mB^T \mK_{Z Z} \Bar{\mB} \odot \mK_{\Tilde{A} \Tilde{A}} )^T & (\frac{\vone}{m})^T \Big( \Bar{\mB}^T \mK_{Z Z} \Bar{\mB} \odot \mK_{\Tilde{A} \Tilde{A}}\Big) \frac{\vone}{m}
    \end{bmatrix} 
    \begin{bmatrix}
        \alpha_{1:m } \\ \alpha_{m + 1}
    \end{bmatrix} 
\label{eq:AlternativeProxyMethodProofLossMatrixVectorProductFormWithConfounder}
\end{align}
The optimal coefficients $\{\alpha_{1:m}, \alpha_{m+1}\}$ can be found by setting the derivative of Equation (\ref{eq:AlternativeProxyMethodProofLossMatrixVectorProductFormWithConfounder}) to zero. With these optimal coefficients, let $\hat{\varphi}_{\lambda_2, m}$ denote the minimizer of $\hat{\gL}^{2SR}_m(\varphi)$. Using $\hat{\varphi}_{\lambda_2, m}$, we can estimate $\E[Y \hat{\varphi}_{\lambda_2, m}(Z, a) | A = a]$, thus obtaining the desired ATE function estimation. First, observe that
\begin{align*}
    \E[Y \hat{\varphi}_{\lambda_2, m}(Z, a) | A = a] &= \E [Y \langle \hat{\varphi}_{\lambda_2, m}, \phi_\gZ(Z) \otimes \phi_\gA(a) \rangle | A = a]\\
    &= \Big\langle  \hat{\varphi}_{\lambda_2, m}, \E [Y \phi_\gZ(Z) | A = a] \otimes \phi_\gA(a) \Big\rangle \\
    &= \langle \hat{\varphi}_{\lambda_2, m}, C_{YZ|A} \phi_\gA(a) \otimes \phi_\gA(a)\rangle
\end{align*}
where $C_{YZ|A}$ is the conditional mean operator, i.e., $C_{YZ|A} \phi_\gA(a) =  \E [Y \phi_\gZ(Z) | A = a]$ and it is estimated by kernel ridge regression:
\begin{align}
    \hat{C}_{YZ|A}  &= \argmin_{C} \frac{1}{n} \sum_{i = 1}^n \|y_i \phi_\gZ(z_i) - C_{YZ|A} \phi_\gA(a_i)\|^2 + \lambda_3 \|C_{YZ|A}\|^2 \nonumber\\
    &= \argmin_{C} \frac{1}{n}\|\Phi_\gZ \text{diag}(\mY) - C_{YZ|A} \Phi_\gA\|^2 + \lambda_3 \|C_{YZ|A}\|^2.
    \label{eq:AlternativeProxyThirdStage}
\end{align}
The solution for Equation (\ref{eq:AlternativeProxyThirdStage}) is given by
\begin{align*}
    \hat{C}_{YZ|A} \phi_\gA(a) = \Phi_\gZ \text{diag}(\mY) [\mK_{A A} + n \lambda_3 \mI]^{-1} \mK_{A a}
\end{align*}
As a result,
\begin{align*}
    \E[Y \hat{\varphi}_{\lambda_2, m}(Z, a) | A = a] &= \langle \hat{\varphi}_{\lambda_2, m}, \hat{C}_{YZ|A} \phi_\gA(a) \otimes \phi_\gA(a)\rangle \\
    &= \Big\langle \sum_{l = 1}^m \alpha_l  \hat{\mu}_{Z|W, A} (\Tilde{w}_l, \Tilde{a}_l)\otimes \phi_\gA(\Tilde{a}_l),  \hat{C}_{YZ|A}  \phi_\gA(a) \otimes \phi_\gA(a)\Big\rangle \\
    &+ \Big\langle  \alpha_{m + 1} \frac{1}{m (m-1)} \sum_{j = 1}^m \sum_{\substack{l = 1 \\ l \ne j}}^m \hat{\mu}_{Z|W, A} (\Tilde{w}_l, \Tilde{a}_j) \otimes \phi_\gA(\Tilde{a}_j), \hat{C}_{YZ|A}  \phi_\gA(a) \otimes \phi_\gA(a)\Big\rangle \\
    &= \sum_{l = 1}^m \alpha_l \langle \hat{\mu}_{Z|W, A} (\Tilde{w}_l, \Tilde{a}_l)\otimes \phi_\gA(\Tilde{a}_l) ,  \hat{C}_{YZ|A}  \phi_\gA(a) \otimes \phi_\gA(a)\rangle\\&+ \alpha_{m + 1} \frac{1}{m (m - 1)} \sum_{j = 1}^m \sum_{\substack{l = 1 \\ l \ne j}}^m \langle \hat{\mu}_{Z|W, A} (\Tilde{w}_l, \Tilde{a}_j) \otimes \phi_\gA(\Tilde{a}_j),\hat{C}_{YZ|A}  \phi_\gA(a) \otimes \phi_\gA(a) \rangle \\
    &= \sum_{l = 1}^m \alpha_l \langle \hat{\mu}_{Z|W, A} (\Tilde{w}_l, \Tilde{a}_l),  \hat{C}_{YZ|A}  \phi_\gA(a) \rangle \langle \phi_\gA(\Tilde{a}_l), \phi_\gA(a)\rangle\\&+ \alpha_{m + 1} \frac{1}{m (m-1)} \sum_{j = 1}^m \sum_{\substack{l = 1 \\ l \ne j}}^m \langle \hat{\mu}_{Z|W, A} (\Tilde{w}_l, \Tilde{a}_j), \hat{C}_{YZ|A}  \phi_\gA(a) \rangle \langle \phi_\gA(\Tilde{a}_j), \phi_\gA(a)\rangle\\
    &= \sum_{l = 1}^m \alpha_l \mBeta(\Tilde{w}_l, \Tilde{a}_l)^T \Phi_\gZ^T \Phi_\gZ \text{diag}(\mY)[\mK_{A A} + n \lambda_3 \mI]^{-1} \mK_{A a} k_\gA(\Tilde{a}_l, a)\\
    &+ \alpha_{m + 1} \frac{1}{m(m-1)} \sum_{j = 1}^m \sum_{\substack{l = 1 \\ l \ne j}}^m \mBeta(\Tilde{w}_l, \Tilde{a}_j)^T \Phi_\gZ^T \Phi_\gZ \text{diag}(\mY) [\mK_{A A} + n \lambda_3 \mI]^{-1} \mK_{A a} k_\gA(\Tilde{a}_j, a)\\
    &= \sum_{l = 1}^m \alpha_l \mBeta(\Tilde{w}_l, \Tilde{a}_l)^T \mK_{Z Z} \text{diag}(\mY)[\mK_{A A} + n \lambda_3 \mI]^{-1} \mK_{A a} k_\gA(\Tilde{a}_l, a)\\
    &+ \alpha_{m + 1} \frac{1}{m(m-1)} \sum_{j = 1}^m \sum_{\substack{l = 1 \\ l \ne j}}^m \mBeta(\Tilde{w}_l, \Tilde{a}_j)^T \mK_{Z Z} \text{diag}(\mY) [\mK_{A A} + n \lambda_3 \mI]^{-1} \mK_{A a} k_\gA(\Tilde{a}_j, a)\\
    &= \alpha_{1:m}^T \Big( \mB^T \big(\mK_{Z Z} \text{diag}(\mY) [\mK_{A A} + n \lambda_3 \mI]^{-1} \mK_{A a}\big) \odot \mK_{\Tilde{A} a}\Big)\\
    &+ \alpha_{m + 1} \Big( \Bar{\mB}^T \big( \mK_{Z Z}  \text{diag}(\mY) [\mK_{A A} + n \lambda_3 \mI]^{-1} \mK_{A a} \big) \odot \mK_{\Tilde{A} a}\Big) \frac{\vone}{m}
\end{align*}
As a result, 
\begin{align*}
    f_{\text{ATE}}(a) &= \alpha_{1:m}^T \Big( \mB^T \big(\mK_{Z Z} \text{diag}(\mY) [\mK_{A A} + n \lambda_3 \mI]^{-1} \mK_{A a}\big) \odot \mK_{\Tilde{A} a}\Big)\\
    &+ \alpha_{m + 1} \Big( \Bar{\mB}^T \big( \mK_{Z Z}  \text{diag}(\mY) [\mK_{A A} + n \lambda_3 \mI]^{-1} \mK_{A a} \big) \odot \mK_{\Tilde{A} a}\Big) \frac{\vone}{m}
\end{align*}
\end{proof}

One observation that is a result of the Algorithm (\ref{algo:ATE_algorithm_with_confounders}) is that we can compute the bridge function in the closed-form as stated in the following remark.
\begin{remark}
    Given the optimal coefficients $\{\alpha_{1:m}, \alpha_{m+1}\}$ from Algorithm (\ref{algo:ATE_algorithm_with_confounders}), the bridge function can be written in closed-form as
    \begin{align*}
        \hat{\varphi}_{\lambda_2, m}(z, a) = \alpha_{1:m}^T [(\mB^T \mK_{Z z}) \odot \mK_{\Tilde{A}a}] + \alpha_{m + 1} \Big(\frac{\vone}{m}\Big)^T [(\Bar{\mB}^T \mK_{Z z}) \odot \mK_{\Tilde{A} a}]
    \end{align*}
    \label{remark:ClosedFormForBridgeFunction}
\end{remark}
\begin{proof}
    \begin{align*}
    &\hat{\varphi}_{\lambda_2, m}(z, a) =  \langle \hat{\varphi}_{\lambda_2, m}, \phi_\gZ(z) \otimes \phi_\gA(a) \rangle\\
    &= \Bigg\langle \sum_{l = 1}^m \alpha_l  \hat{\mu}_{Z|W,A} (\Tilde{w}_l, \Tilde{a}_l) \otimes \phi_\gA(\Tilde{a}_l) + \frac{\alpha_{m + 1}}{m(m-1)} \sum_{j = 1}^m \sum_{\substack{l = 1 \\ l \ne j}}^m \hat{\mu}_{Z|W, A} (\Tilde{w}_l, \Tilde{a}_j) \otimes \phi_\gA(\Tilde{a}_j), \phi_\gZ(z) \otimes \phi_\gA(a)\Bigg\rangle\\
    &= \sum_{l = 1}^m \alpha_l  \langle \hat{\mu}_{Z|W, A} (\Tilde{w}_l, \Tilde{a}_l)\otimes \phi_\gA(\Tilde{a}_l),\phi_\gZ(z) \otimes \phi_\gA(a) \rangle\\
    &+ \frac{\alpha_{m + 1} }{m(m-1)} \sum_{j = 1}^m \sum_{\substack{l = 1 \\ l \ne j}}^m \langle \hat{\mu}_{Z|W, A} (\Tilde{w}_l, \Tilde{a}_j) \otimes \phi_\gA(\Tilde{a}_j),\phi_\gZ(z) \otimes \phi_\gA(a) \rangle\\
    &= \sum_{l = 1}^m \alpha_l  \langle \hat{\mu}_{Z|W, A} (\Tilde{w}_l, \Tilde{a}_l),\phi_\gZ(z) \rangle k_\gA(\Tilde{a}_l, a) + \frac{\alpha_{m + 1}}{m(m-1)} \sum_{j = 1}^m \sum_{\substack{l = 1 \\ l \ne j}}^m \langle \hat{\mu}_{Z|W, A} (\Tilde{w}_l, \Tilde{a}_j),\phi_\gZ(z) \rangle k_\gA(\Tilde{a}_j, a)\\
    &= \sum_{l = 1}^m \alpha_l  \langle \Phi_\gZ {\mBeta}(\Tilde{w}_l, \tilde{a}_l),\phi_\gZ(z) \rangle k_\gA(\Tilde{a}_l, a) + \frac{\alpha_{m + 1}}{m(m-1)} \sum_{j = 1}^m \sum_{\substack{l = 1 \\ l \ne j}}^m \langle \Phi_\gZ {\mBeta}(\Tilde{w}_l, \tilde{a}_j),\phi_\gZ(z) \rangle k_\gA(\Tilde{a}_j, a)\\
    &= \sum_{l = 1}^m \alpha_l \mK_{Z z}^T {\mBeta}(\Tilde{w}_l, \tilde{a}_l) k_\gA(\Tilde{a}_l, a) + \frac{\alpha_{m + 1}}{m(m-1)} \sum_{j = 1}^m \sum_{\substack{l = 1 \\ l \ne j}}^m \mK_{Z z}^T \mBeta(\Tilde{w}_l, \tilde{a}_j)k_\gA(\Tilde{a}_j, a)\\
    &= \sum_{l = 1}^m \alpha_l \mK_{Z z}^T (\mK_{W W} \odot \mK_{A A} + n \lambda_1 \mI)^{-1}(\mK_{W \Tilde{w}_l} \odot \mK_{A \tilde{a}_l}) k_\gA(\Tilde{a}_l, a)\\
    &+ \frac{\alpha_{m + 1}}{m^2} \sum_{j = 1}^m \sum_{\substack{l = 1 \\ l \ne j}}^m \mK_{Z z}^T (\mK_{W W} \odot \mK_{A A} + n \lambda_1 \mI)^{-1}(\mK_{W \Tilde{w}_l} \odot \mK_{A \tilde{a}_j}) k_\gA(\Tilde{a}_j, a)\\
    &= \sum_{l = 1}^m \alpha_l \mK_{Z z}^T (\mK_{W W} \odot \mK_{A A} + n \lambda_1 \mI)^{-1}(\mK_{W \Tilde{w}_l} \odot \mK_{A \tilde{a}_l}) k_\gA(\Tilde{a}_l, a)\\
    &+ \frac{\alpha_{m + 1}}{m}  \sum_{j = 1}^m  \mK_{Z z}^T (\mK_{W W} \odot \mK_{A A} + n \lambda_1 \mI)^{-1}\Big(\frac{1}{m - 1}\sum_{\substack{l = 1 \\ l \ne j}}^m \mK_{W \Tilde{w}_l}  \odot \mK_{A \tilde{a}_j} \Big) k_\gA(\Tilde{a}_j, a)\\
    &= \alpha_{1:m}^T [(\mB^T \mK_{Z z}) \odot \mK_{\Tilde{A} a}] + \alpha_{m + 1} \Big(\frac{\vone}{m} \Big)^T [(\Bar{\mB}^T \mK_{Z z}) \odot \mK_{\Tilde{A} a}]
\end{align*}
\end{proof}

\subsection{Conditional Dose-Response Curve Algorithm}
\label{sec:ATT_algorithm_proof}
In this section, we provide the derivation of Algorithm (\ref{algo:ATT_algorithm_with_confounders}). For the completeness, we write the algorithm first.

\begin{algorithm_state*}[Algorithm (\ref{algo:ATT_algorithm_with_confounders})]
Let $\{z_i, w_i, a_i\}_{i = 1}^n$ and $\{\Tilde{w}_i, \Tilde{a}_i\}_{i = 1}^m$ be the first-stage and second-stage data, respectively, and $(\lambda_1, \lambda_2, \lambda_3, \zeta)$ be the regularization parameters. Furthermore, let $\{\alpha_i\}_{i = 1}^{m + 1}$ be the minimizer of the following cost function for a given $a'$: 
\begin{align*}
    &\hat{\gL}^{2SR}_m(\malpha) =\\
    & \frac{1}{m} \begin{bmatrix}
        \alpha_{1:m} \\ \alpha_{m+1}
    \end{bmatrix}^T 
    \begin{bmatrix}
        \mB^T \mK_{Z Z}\mB \odot \mK_{\Tilde{A}\Tilde{A}} \nonumber\\ (\frac{\vone}{m})^T \big[{\mB}^T \mK_{Z Z} \Tilde{\mB} \odot \mK_{\Tilde{A}\Tilde{A}}\big]^T
    \end{bmatrix} 
    \begin{bmatrix}
        \mB^T \mK_{Z Z}\mB \odot \mK_{\Tilde{A}\Tilde{A}} & \big[{\mB}^T \mK_{Z Z} \Tilde{\mB} \odot \mK_{\Tilde{A}\Tilde{A}}\big] \frac{\vone}{m}
    \end{bmatrix} 
    \begin{bmatrix}
        \alpha_{1:m}\\ \alpha_{m+1}
    \end{bmatrix} k_\gA(a', a')^2\nonumber\\
    &-2 \begin{bmatrix}
    \alpha_{1:m } \\ \alpha_{m + 1}
    \end{bmatrix}^T \begin{bmatrix}
     [ \mB^T \mK_{Z Z} \Tilde{\mB} \odot \mK_{\Tilde{A} \Tilde{A}} ] \frac{\vone}{m}\\
     (\frac{\vone}{m})^T \Big[ \Tilde{\mB}^T \mK_{Z Z} \Tilde{\mB} \odot \mK_{\Tilde{A} \Tilde{A}}\Big] \frac{\vone}{m}
    \end{bmatrix} k_\gA(a', a')\nonumber\\
    &+ \lambda_2 \begin{bmatrix}
        \alpha_{1:m } \\ \alpha_{m + 1}
    \end{bmatrix}^T 
    \begin{bmatrix}
        \mB^T \mK_{Z Z} \mB \odot \mK_{\Tilde{A} \Tilde{A}} & [ \mB^T \mK_{Z Z} \Tilde{\mB} \odot \mK_{\Tilde{A} \Tilde{A}} ] \frac{\vone}{m}\\
        (\frac{\vone}{m})^T [ \mB^T \mK_{Z Z} \Tilde{\mB} \odot \mK_{\Tilde{A} \Tilde{A}} ]^T & (\frac{\vone}{m})^T \Big[ \Tilde{\mB}^T \mK_{Z Z} \Tilde{\mB} \odot \mK_{\Tilde{A} \Tilde{A}}\Big] \frac{\vone}{m}
    \end{bmatrix} 
    \begin{bmatrix}
        \alpha_{1:m } \\ \alpha_{m + 1}
    \end{bmatrix} k_\gA(a', a')
\end{align*}
where $\Tilde{\mB}$ is the matrix whose $j$-th column is given by $\Tilde{\mB}_{:, j} = \sum_{\substack{l = 1 \\ l \ne j}}^m (\mK_{W W} \odot \mK_{A A} + n \lambda_1 \mI)^{-1}(\theta_l \mK_{W \Tilde{w}_l} \odot \mK_{A \Tilde{a}_j})$ 
and $\theta_l = [(\mK_{\Tilde{A} \Tilde{A}} + m \zeta \mI)^{-1} \mK_{\Tilde{A} a'}]_l$.
Furthermore, the kernel matrices and the matrix $\mB$ are as defined in Algorithm (\ref{algo:ATE_algorithm_with_confounders}). Then, the conditional dose-response curve estimation can be written in the closed-form as
\begin{align*}
    f_{\text{ATT}}(a, a') &= \alpha_{1:m}^T \Big( \mB^T \big(\mK_{Z Z} \text{diag}(\mY) [\mK_{A A} + n \lambda_3 \mI]^{-1} \mK_{A a}\big) \odot \mK_{\Tilde{A} a}\Big) k_\gA(a', a')\\
    &+ \alpha_{m + 1} \Big( \Tilde{\mB}^T \big( \mK_{Z Z}  \text{diag}(\mY) [\mK_{A A} + n \lambda_3 \mI]^{-1} \mK_{A a} \big) \odot \mK_{\Tilde{A} a}\Big) \frac{\vone}{m} k_\gA(a', a')
\end{align*}
\label{algo:ATT_algorithm_with_confounders_appendix}
\end{algorithm_state*}

\begin{proof}[Derivation of Algorithm (\ref{algo:ATT_algorithm_with_confounders})]
Let $r(W, a, a')$ denote $\frac{p(W, a') p(a) }{p(W, a) p (a')}$. We aim to find the optimum of the following loss function:
\begin{align*}
    \gL^{\text{2SR}}(\varphi) = \E[(r(W, A, a') - \E[\varphi(Z, A, a') | W, A])^2] + \lambda_2 \|\varphi\|_{\gH_\gZ \otimes \gH_\gA \otimes \gH_\gA}^2
\end{align*}
in which we assume that the bridge function $\varphi$ is the RKHS $\gH_\gZ \otimes \gH_\gA \otimes \gH_\gA$. Similar to the dose-response algorithm, this loss function cannot be directly optimized as it involves the conditional expectation $\E[\varphi(Z, A, a') | W, A]$. Following \citep{xu2024kernelsingleproxycontrol}, we similarly employ a two-stage regression approach. The first-stage is identical to the one used for the dose-response curve as described in S.M. (\ref{sec:ATE_algorithm_proof}). In this stage, we find the conditional mean embedding:
\begin{align*}
    \hat{\mu}_{Z|W, A}(w, a) = \sum_{i = 1}^n \beta_i(w, a) \phi_\gZ(z_i) = \Phi_\gZ \mBeta(w, a)
\end{align*}
where 
\begin{align*}
    {\mBeta}(w, a) &= (\mK_{W W} \odot \mK_{A A} + n \lambda_1 \mI)^{-1}(\mK_{W w} \odot \mK_{A a})\\
    \Phi_\gZ &= \begin{bmatrix}
        \phi_\gZ(z_1) & \ldots & \phi_\gZ(z_n)
    \end{bmatrix}
\end{align*}
For the second-stage, we note the following:
\begin{align*}
    &\E[(r(W, A, a') - \E[\varphi(Z, A, a') | W, A])^2] \\&= \E[ \E[\varphi(Z, A, a') | W, A]^2] - 2\E[r(W, A, a') \E[\varphi(Z, A, a') |W, A] ] + \text{const.}\\
    &= \E[ \E[\varphi(Z, A, a') | A, W]^2] - 2 \int \frac{p(w, a') p(a) }{p(w, a) p (a')} \E[\varphi(Z, A, a') | A=a, W=w] p(w, a) d w d a + \text{const.}\\
    &= \E[ \E[\varphi(Z, A, a') | W=w, A=a]^2] - 2 \int p(w | a') p(a) \E[\varphi(Z, A, a') | W=w, A=a] d w d a + \text{const.}\\
    &= \E[ \E[\varphi(Z, A, a') | W=w, A=a]^2] - 2 \E_{W| A' = a'} \E_{A}[ \E[\varphi(Z, A, a') | W=w, A=a] ] + \text{const.}
\end{align*}
Recall that under Assumption~\ref{asst:well_specifiedness}-(1), $\mu_{Z|W, A}(w, a) = C_{Z|W, A} (\phi_\gW(w) \otimes \phi_\gA(a))$. Hence, we can write,
\begin{align*}
     &\E[(r(W, A, a') - \E[\varphi(Z, A, a') | W, A])^2] \\&=\E[ \langle \varphi, \mu_{Z|W, A} (W, A) \otimes \phi_\gA(A) \otimes \phi_\gA(a') \rangle^2]\\
     &-2 \E_{W | A' = a'} \E_{A}[\langle \varphi, C_{Z|W, A}(\phi_\gW(W) \otimes \phi_\gA(A)) \otimes \phi_\gA(A) \otimes \phi_\gA(a') \rangle] + \text{const.}\\
     &=\E[ \langle \varphi, \mu_{Z|W, A} (W, A) \otimes \phi_\gA(A) \otimes \phi_\gA(a') \rangle^2]\\
     &-2 \E_{A}[\langle \varphi, C_{Z|W, A}(\E_{W| A = a'}[\phi_\gW(W)] \otimes \phi_\gA(A)) \otimes \phi_\gA(A) \otimes \phi_\gA(a') \rangle] + \text{const.}
\end{align*}
This expectation can be estimated from the second stage data $\{\Tilde{w}_i, \Tilde{z}_i, \Tilde{a}_i\}$ using the following expression, up to a constant factor:
\begin{align*}
    &\E[(r(W, A, a') - \E[\varphi(Z, A, a') | W, A])^2] \\&\approx \frac{1}{m} \sum_{i = 1}^m \langle \varphi, \hat{\mu}_{Z|W, A} (\Tilde{w}_i, \Tilde{a}_i) \otimes \phi_\gA(\Tilde{a}_i) \otimes \phi_\gA(a') \rangle^2\\
    &- \frac{2}{m} \sum_{j = 1}^m \langle \varphi, \hat{C}_{Z|W, A}(\hat{\E}_{W| A = a'}[\phi_\gW(W)] \otimes \phi_\gA(\Tilde{a}_j)) \otimes \phi_\gA(\Tilde{a}_j) \otimes \phi_\gA(a') \rangle.
\end{align*}
Here, $\hat{\E}_{W| A' = a'}[\phi_\gW(W)]$ is the estimation of conditional mean embedding for $p(W| A = a')$, and can be expressed in the closed-form as the result of kernel ridge regression with regularization parameter $\zeta$:
\begin{align*}
    \hat{\E}[\phi_\gW(W) | A = a'] = \Phi_{\gW}(\mK_{\Tilde{A} \Tilde{A}} + m \zeta \mI)^{-1} \mK_{\Tilde{A} a'} = \sum_{i = 1}^m \theta_i \phi_\gW(\Tilde{w}_i) = \Phi_{\gW} \vtheta
\end{align*}
where $\vtheta = (\mK_{\Tilde{A} \Tilde{A}} + m \zeta \mI)^{-1} \mK_{\Tilde{A} a'}$ and $\Phi_{\gW} = \begin{bmatrix}
    \phi_\gW(\Tilde{w}_1) & \ldots & \phi_\gW(\Tilde{w}_m)
\end{bmatrix}$. Hence, the least-squares loss can be estimated by second-stage data $\{\Tilde{w}_i, \Tilde{a}_i\}_{i = 1}^m$ as follows:
\begin{align*}
    &\E[(r(W, A, a') - \E[\varphi(Z, A, a') | W, A])^2] \\&\approx \frac{1}{m} \sum_{i = 1}^m \langle \varphi, \hat{\mu}_{Z|W, A} (\Tilde{w}_i, \Tilde{a}_i) \otimes \phi_\gA(\Tilde{a}_i) \otimes \phi_\gA(a') \rangle^2\\
    &-2 \frac{1}{m} \sum_{j = 1}^m \langle \varphi, \hat{C}_{Z|W, A}(\sum_{\substack{i = 1 \\ i \ne j}}^m \theta_i \phi_\gW(\Tilde{w}_i) \otimes \phi_\gA(\Tilde{a}_j)) \otimes \phi_\gA(\Tilde{a}_j) \otimes \phi_\gA(a') \rangle\\
    &= \frac{1}{m} \sum_{i = 1}^m \langle \varphi, \hat{\mu}_{Z|W, A} (\Tilde{w}_i, \Tilde{a}_i) \otimes \phi_\gA(\Tilde{a}_i) \otimes \phi_\gA(a') \rangle^2\\
    &-2 \frac{1}{m} \sum_{j = 1}^m \sum_{\substack{i = 1 \\ i \ne j}}^m \langle \varphi, \hat{C}_{Z|W, A}( \theta_i \phi_\gW(\Tilde{w}_i) \otimes \phi_\gA(\Tilde{a}_j)) \otimes \phi_\gA(\Tilde{a}_j) \otimes \phi_\gA(a') \rangle\\
    &= \frac{1}{m} \sum_{i = 1}^m \langle \varphi, \hat{\mu}_{Z|W,A} (\Tilde{w}_i, \Tilde{a}_i) \otimes \phi_\gA(\Tilde{a}_i) \otimes \phi_\gA(a') \rangle^2\\
    &-2 \frac{1}{m} \sum_{j = 1}^m \sum_{\substack{i = 1 \\ i \ne j}}^m \langle \varphi,  \theta_i \hat{\mu}_{Z | W, A}(\Tilde{w}_i, \Tilde{a}_j) \otimes \phi_\gA(\Tilde{a}_j) \otimes \phi_\gA(a') \rangle.
\end{align*}
As a result, we can write the sample loss for two-stage regression with Tikhonov regularization as
\begin{align}
    \hat{\mathcal{L}}^{\text{2SR}}_m(\varphi) &= \frac{1}{m} \sum_{i = 1}^m \langle \varphi, \hat{\mu}_{Z|W, A} (\Tilde{w}_i, \Tilde{a}_i) \otimes \phi_\gA(\Tilde{a}_i) \otimes \phi_\gA(a') \rangle^2 \nonumber\\ &-2 \frac{1}{m} \sum_{j = 1}^m \sum_{\substack{i = 1 \\ i \ne j}}^m \langle \varphi,  \theta_i \hat{\mu}_{Z | W, A}(\Tilde{w}_i, \Tilde{a}_j) \otimes \phi_\gA(\Tilde{a}_j) \otimes \phi_\gA(a') \rangle + \lambda_2 \|\varphi\|_{\gH_\gZ \otimes \gH_\gA \otimes \gH_\gA}^2.
\label{eq:2StageRegressionATTObjectiveFinal}
\end{align}
We can see that the minimizer of this objective should be in the span of the following set,
\begin{align*}
    \varphi \in \text{span} \Bigg\{ \{\hat{\mu}_{Z|W, A} (\Tilde{w}_i, \Tilde{a}_i)\otimes \phi_\gA(\Tilde{a}_i) \otimes \phi_\gA(a') \}_{i = 1}^m \cup \Big\{ \frac{1}{m} \sum_{i = 1}^m \sum_{\substack{j = 1 \\ j \ne i}}^m \theta_j \hat{\mu}_{Z|W, A} (\Tilde{w}_j, \Tilde{a}_i) \otimes \phi_\gA(\Tilde{a}_i) \otimes \phi_\gA(a') \Big\} \Bigg\}.
\end{align*}
Hence, we write
\begin{align*}
    {\varphi = \sum_{i = 1}^m \alpha_i  \hat{\mu}_{Z|W, A} (\Tilde{w}_i, \Tilde{a}_i)\otimes \phi_\gA(\Tilde{a}_i) \otimes \phi_\gA(a') + \frac{\alpha_{m + 1}}{m} \sum_{j = 1}^m \sum_{\substack{l = 1 \\ l \ne j}}^m \theta_l \hat{\mu}_{Z|W,A} (\Tilde{w}_l, \Tilde{a}_j) \otimes \phi_\gA(\Tilde{a}_j) \otimes \phi_\gA(a')}
\end{align*}
Let us notice the result of the following inner product which will come up a lot
\begin{align}
    &\Big\langle  \hat{\mu}_{Z|W, A} (\Tilde{w}_j, \Tilde{a}_i)\otimes \phi_\gA(\Tilde{a}_i) \otimes \phi_\gA(a'), \hat{\mu}_{Z|W, A} (\Tilde{w}_p, \Tilde{a}_l)\otimes \phi_\gA(\Tilde{a}_l) \otimes \phi_\gA(a') \Big\rangle_{\gH_\gZ \otimes \gH_\gA \otimes \gH_\gA}\nonumber\\
    &= \langle  \hat{\mu}_{Z|W, A} (\Tilde{w}_j, \Tilde{a}_i), \hat{\mu}_{Z|W,A} (\Tilde{w}_p, \Tilde{a}_l) \rangle \langle \phi_\gA(\Tilde{a}_i), \phi_\gA(\Tilde{a}_l)\rangle \langle \phi_\gA(a'), \phi_\gA(a')\rangle\nonumber\\
    &= \mBeta(\Tilde{w}_j, \Tilde{a}_i)^T \Phi_\gZ^T \Phi_\gZ \mBeta(\Tilde{w}_p, \Tilde{a}_l) k_\gA(\Tilde{a}_i, \Tilde{a}_l) k_\gA(a', a')\nonumber\\
    &= \mBeta(\Tilde{w}_j, \Tilde{a}_i)^T \mK_{Z Z} \mBeta(\Tilde{w}_p, \Tilde{a}_l) k_\gA(\Tilde{a}_i, \Tilde{a}_l) k_\gA(a', a')
\label{eq:AlternativeProxyMethodATTProofImportantEquation1}
\end{align}
We will now compute the individual terms in $\gL^{\text{2SR}}_m(\varphi)$ one by one. We start by the squared norm of $\varphi$:
\begin{align*}
    &\|\varphi\|^2 = \langle \varphi, \varphi \rangle\nonumber\\
    &= \Bigg\langle 
    \sum_{i = 1}^m \alpha_i  \hat{\mu}_{Z|W, A} (\Tilde{w}_i, \Tilde{a}_i) \otimes \phi_\gA(\Tilde{a}_i) \otimes \phi_\gA(a') + \frac{\alpha_{m + 1}}{m} \sum_{j = 1}^m \sum_{\substack{l = 1 \\ l \ne j}}^m \theta_l \hat{\mu}_{Z|W, A} (\Tilde{w}_l, \Tilde{a}_j) \otimes \phi_\gA(\Tilde{a}_j) \otimes \phi_\gA(a'), \nonumber
    \\&\sum_{p = 1}^m \alpha_p  \hat{\mu}_{Z|W, A} (\Tilde{w}_p, \Tilde{a}_p)\otimes \phi_\gA(\Tilde{a}_p) \otimes \phi_\gA(a') + \frac{\alpha_{m + 1}}{m}  \sum_{r = 1}^m \sum_{\substack{s = 1 \\ s \ne r}}^m \theta_s \hat{\mu}_{Z|W, A} (\Tilde{w}_s, \Tilde{a}_r) \otimes \phi_\gA(\Tilde{a}_r) \otimes \phi_\gA(a')
    \Bigg\rangle\nonumber\\
    &=\sum_{i = 1}^m \sum_{p = 1}^m \alpha_i \alpha_p \langle \hat{\mu}_{Z|W, A} (\Tilde{w}_i, \Tilde{a}_i)\otimes \phi_\gA(\Tilde{a}_i) \otimes \phi_\gA(a'), \hat{\mu}_{Z|W, A} (\Tilde{w}_p, \Tilde{a}_p)\otimes \phi_\gA(\Tilde{a}_p) \otimes \phi_\gA(a')\rangle \\
    &+2 \frac{\alpha_{m + 1}}{m} \sum_{i = 1}^m \sum_{j = 1}^m \sum_{\substack{l = 1 \\ l \ne j}}^m \alpha_i \langle \hat{\mu}_{Z|W, A} (\Tilde{w}_i, \Tilde{a}_i)\otimes \phi_\gA(\Tilde{a}_i) \otimes \phi_\gA(a'),  \theta_l \hat{\mu}_{Z|W, A} (\Tilde{w}_l, \Tilde{a}_j) \otimes \phi_\gA(\Tilde{a}_j) \otimes \phi_\gA(a')\rangle\\
    &+ \frac{\alpha_{m + 1}^2}{m^2} \sum_{j = 1}^m \sum_{\substack{l = 1 \\ l \ne j}}^m \sum_{r = 1}^m \sum_{\substack{s = 1 \\ s \ne r}}^m \langle \theta_l \hat{\mu}_{Z|W, A} (\Tilde{w}_l, \Tilde{a}_j) \otimes \phi_\gA(\Tilde{a}_j) \otimes \phi_\gA(a'), \theta_s \hat{\mu}_{Z|W, A} (\Tilde{w}_s, \Tilde{a}_r) \otimes \phi_\gA(\Tilde{a}_r) \otimes \phi_\gA(a')\rangle
\end{align*}
Using Equation (\ref{eq:AlternativeProxyMethodATTProofImportantEquation1}), we can write
\begin{align}
&\|\varphi\|^2 = \langle \varphi, \varphi \rangle\nonumber\\
&=\sum_{i = 1}^m \sum_{p = 1}^m \alpha_i \alpha_p \mBeta(\Tilde{w}_i, \Tilde{a}_i)^T \mK_{Z Z} \mBeta(\Tilde{w}_p, \Tilde{a}_p) k_\gA(\Tilde{a}_i, \Tilde{a}_p) k_\gA(a', a')\label{eq:AlternativeProxyMethodATTVarphiNorm1}\\&+2 \frac{\alpha_{m + 1}}{m} \sum_{i = 1}^m \sum_{j = 1}^m \sum_{\substack{l = 1 \\ l \ne j}}^m \alpha_i \mBeta(\Tilde{w}_i, \Tilde{a}_i)^T \mK_{Z Z} \theta_l \mBeta(\Tilde{w}_l, \Tilde{a}_j) k_\gA(\Tilde{a}_i, \Tilde{a}_j) k_\gA(a', a')\label{eq:AlternativeProxyMethodATTVarphiNorm2}\\
&+ \frac{\alpha_{m + 1}^2}{m^2} \sum_{j = 1}^m \sum_{\substack{l = 1 \\ l \ne j}}^m \sum_{r = 1}^m \sum_{\substack{s = 1 \\ s \ne r}}^m \theta_l \mBeta(\Tilde{w}_l, \Tilde{a}_j)^T \mK_{Z Z} \theta_s \mBeta(\Tilde{w}_s, \Tilde{a}_r) k_\gA(\Tilde{a}_j, \Tilde{a}_r) k_\gA(a', a')\label{eq:AlternativeProxyMethodATTVarphiNorm3}
\end{align}
The component in Equation (\ref{eq:AlternativeProxyMethodATTVarphiNorm1}) is equal to:
\begin{align*}
    \sum_{i = 1}^m \sum_{p = 1}^m \alpha_i \alpha_p \mBeta(\Tilde{w}_i, \Tilde{a}_i)^T \mK_{Z Z} \mBeta(\Tilde{w}_p, \Tilde{a}_p) k_\gA(\Tilde{a}_i, \Tilde{a}_p) k_\gA(a', a') = \alpha_{1:m}^T \Big[ \mB^T \mK_{Z Z} \mB \odot \mK_{\Tilde{A} \Tilde{A}}\Big] \alpha_{1:m} k_\gA(a', a')
\end{align*}
where 
\begin{align*}
    \alpha_{1:m} &= \begin{bmatrix}
        \alpha_1 & \alpha_2 & \ldots & \alpha_m
    \end{bmatrix}^T\\
    \mB &= (\mK_{W W} \odot \mK_{A A} + n \lambda_1 \mI)^{-1}(\mK_{W \Tilde{W}} \odot \mK_{A \Tilde{A}})
\end{align*}
The component in Equation (\ref{eq:AlternativeProxyMethodATTVarphiNorm2}) is equal to:
\begin{align*}
    &2 \frac{\alpha_{m + 1}}{m} \sum_{i = 1}^m \sum_{j = 1}^m \sum_{\substack{l = 1 \\ l \ne j}}^m \alpha_i \mBeta(\Tilde{w}_i, \Tilde{a}_i)^T \mK_{Z Z} \theta_l \mBeta(\Tilde{w}_l, \Tilde{a}_j) k_\gA(\Tilde{a}_i, \Tilde{a}_j) k_\gA(a', a')\\
    &= 2 \frac{\alpha_{m + 1}}{m} \sum_{i = 1}^m \sum_{j = 1}^m \alpha_i  \mBeta(\Tilde{w}_i, \Tilde{a}_i)^T \mK_{Z Z} \Big( \sum_{\substack{l = 1 \\ l \ne j}}^m \theta_l \mBeta(\Tilde{w}_l, \Tilde{a}_j) \Big) k_\gA(\Tilde{a}_i, \Tilde{a}_j) k_\gA(a', a')\\
    &=2 \alpha_{m + 1} \alpha_{1:m}^T\Big[ \mB^T \mK_{Z Z} \Tilde{\mB} \odot \mK_{\Tilde{A} \Tilde{A}} \Big] (\vone / m) k_\gA(a', a')
\end{align*}
where
\begin{align*}
    \Tilde{\mB}_{:, j} &=  \sum_{\substack{l = 1 \\ l \ne j}}^m \theta_l \mBeta(\Tilde{w}_l, \Tilde{a}_j) = \sum_{l = 1}^m (\mK_{W W} \odot \mK_{A A} + n \lambda \mI)^{-1}(\theta_l \mK_{W \Tilde{w}_l} \odot \mK_{A \Tilde{a}_j})\\&=(\mK_{W W} \odot \mK_{A A} + n \lambda_1 \mI)^{-1}\Big( \sum_{\substack{l = 1 \\ l \ne j}}^m \theta_l \mK_{W \Tilde{w}_l} \odot \mK_{A \Tilde{a}_j} \Big)\\
\end{align*}
Finally, the third component in Equation (\ref{eq:AlternativeProxyMethodATTVarphiNorm3}):
\begin{align*}
    &\frac{\alpha_{m + 1}^2}{m^2} \sum_{j = 1}^m \sum_{\substack{l = 1 \\ l \ne j}}^m \sum_{r = 1}^m \sum_{\substack{s = 1 \\ s \ne r}}^m \theta_l \mBeta(\Tilde{w}_l, \Tilde{a}_j)^T \mK_{Z Z} \theta_s \mBeta(\Tilde{w}_s, \Tilde{a}_r) k_\gA(\Tilde{a}_j, \Tilde{a}_r) k_\gA(a', a')\\
    &=\frac{\alpha_{m + 1}^2}{m^2} \sum_{j = 1}^m \sum_{r = 1}^m \Big( \sum_{\substack{l = 1 \\ l \ne j}}^m \theta_l \mBeta(\Tilde{w}_l, \Tilde{a}_j) \Big)^T \mK_{Z Z} \Big( \sum_{\substack{s = 1 \\ s \ne r}}^m \theta_s \mBeta(\Tilde{w}_s, \Tilde{a}_r) \Big) k_\gA(\Tilde{a}_j, \Tilde{a}_r) k_\gA(a', a')\\
    &= \alpha_{m + 1}^2 (\vone / m)^T \Big[ \Tilde{\mB}^T \mK_{Z Z} \Tilde{\mB} \odot \mK_{\Tilde{A} \Tilde{A}} \Big] (\vone / m) k_\gA(a', a')
\end{align*}
As a result, 
\begin{align}
    &\|\varphi\|^2 = \langle \varphi, \varphi \rangle \nonumber \\&= \alpha_{1:m}^T \Big[ \mB^T \mK_{Z Z} \mB \odot \mK_{\Tilde{A} \Tilde{A}}\Big] \alpha_{1:m} k_\gA(a', a') + 2 \alpha_{m + 1} \alpha_{1:m}^T\Big[ \mB^T \mK_{Z Z} \Bar{\mB} \odot \mK_{\Tilde{A} \Tilde{A}} \Big] (\vone / m) k_\gA(a', a') \nonumber \\&+ \alpha_{m + 1}^2 (\vone / m)^T \Big[ \Tilde{\mB}^T \mK_{Z Z} \Tilde{\mB} \odot \mK_{\Tilde{A} \Tilde{A}}\Big] (\vone / m) k_\gA(a', a')\nonumber
    \\&= \begin{bmatrix}
        \alpha_{1:m } \\ \alpha_{m + 1}
    \end{bmatrix}^T 
    \begin{bmatrix}
        \mB^T \mK_{Z Z} \mB \odot \mK_{\Tilde{A} \Tilde{A}} & [ \mB^T \mK_{Z Z} \Tilde{\mB} \odot \mK_{\Tilde{A} \Tilde{A}} ] (\vone / m)\\
        (\vone / m)^T [ \mB^T \mK_{Z Z} \Tilde{\mB} \odot \mK_{\Tilde{A} \Tilde{A}} ]^T & (\vone / m)^T \Big[ \Tilde{\mB}^T \mK_{Z Z} \Tilde{\mB} \odot \mK_{\Tilde{A} \Tilde{A}}\Big] (\vone / m)
    \end{bmatrix} 
    \begin{bmatrix}
        \alpha_{1:m } \\ \alpha_{m + 1}
    \end{bmatrix} k_\gA(a', a')\label{eq:AlternativeProxyMethodsATTVarphiNormFinal}
\end{align}
Next, to derive the matrix-vector multiplication form for the first component of the objective given in Equation (\ref{eq:2StageRegressionATTObjectiveFinal}), consider the following:
\begin{align*}
    &\Big\langle \varphi,  \hat{\mu}_{Z| W, A}(\Tilde{w}_i, \Tilde{a}_i) \otimes \phi_\gA(\Tilde{a}_i) \otimes \phi_\gA(a') \Big\rangle\\
    &= \Bigg\langle \sum_{l = 1}^m \alpha_l  \hat{\mu}_{Z|W, A} (\Tilde{w}_l, \Tilde{a}_l)\otimes \phi_\gA(\Tilde{a}_l) \otimes \phi_\gA(a') + \frac{\alpha_{m + 1}}{m}  \sum_{j = 1}^m \sum_{\substack{l = 1 \\ l \ne j}}^m \theta_l \hat{\mu}_{Z|W, A} (\Tilde{w}_l, \Tilde{a}_j) \otimes \phi_\gA(\Tilde{a}_j) \otimes \phi_\gA(a'),\\ 
    &\hat{\mu}_{Z| W, A}(\Tilde{w}_i, \Tilde{a}_i) \otimes \phi_\gA(\Tilde{a}_i) \otimes \phi_\gA(a')\Bigg\rangle\\
    &=\sum_{l = 1}^m \alpha_l \Big\langle  \hat{\mu}_{Z|W, A} (\Tilde{w}_l, \Tilde{a}_l)\otimes \phi_\gA(\Tilde{a}_l) \otimes \phi_\gA(a'), \hat{\mu}_{Z| W, A}(\Tilde{w}_i, \Tilde{a}_i) \otimes \phi_\gA(\Tilde{a}_i) \otimes \phi_\gA(a')\Big\rangle\\
    &+ \frac{\alpha_{m + 1}}{m} \sum_{j = 1}^m \sum_{\substack{l = 1 \\ l \ne j}}^m \Big\langle \theta_l \hat{\mu}_{Z|W, A} (\Tilde{w}_l, \Tilde{a}_j) \otimes \phi_\gA(\Tilde{a}_j) \otimes \phi_\gA(a'), 
    \hat{\mu}_{Z| W, A}(\Tilde{w}_i, \Tilde{a}_i) \otimes \phi_\gA(\Tilde{a}_i) \otimes \phi_\gA(a') \Big\rangle\\
    &= \sum_{l = 1}^m \alpha_l \mBeta(\Tilde{w}_l, \Tilde{a}_l)^T \mK_{Z Z} \mBeta(\Tilde{w}_i, \Tilde{a}_i) k_\gA(\Tilde{a}_l, \Tilde{a}_i) k_\gA(a', a') \\&+ \frac{\alpha_{m + 1}}{m} \sum_{j = 1}^m \Big(\sum_{\substack{l = 1 \\ l \ne j}}^m \theta_l \mBeta(\Tilde{w}_l, \Tilde{a}_j)\Big)^T \mK_{Z Z} \mBeta(\Tilde{w}_i, \Tilde{a}_i) k_\gA(\Tilde{a}_j, \Tilde{a}_i) k_\gA(a', a')\\
    &=\Big[\big[\mB^T \mK_{Z Z}\mB \odot \mK_{\Tilde{A}\Tilde{A}}\big] \alpha_{1:m}\Big]_i k_\gA(a', a') + \alpha_{m+1}\Big[\big[{\mB}^T \mK_{Z Z} \Tilde{\mB} \odot \mK_{\Tilde{A}\Tilde{A}}\big] (\vone / m)\Big]_i k_\gA(a', a')\\
    &= \Bigg[\begin{bmatrix}
        \mB^T \mK_{Z Z} \mB \odot \mK_{\Tilde{A}\Tilde{A}} & \big[{\mB}^T \mK_{Z Z} \Tilde{\mB} \odot \mK_{\Tilde{A}\Tilde{A}}\big] (\vone / m)
    \end{bmatrix} \begin{bmatrix}
        \alpha_{1:m}\\ \alpha_{m+1}
    \end{bmatrix} \Bigg]_i k_\gA(a', a')
\end{align*}
As a result, the first component in Equation (\ref{eq:2StageRegressionATTObjectiveFinal}) is given by
\begin{align}
    &\frac{1}{m} \sum_{i = 1}^m \Big\langle \varphi,  \hat{\mu}_{Z| W, A}(\Tilde{w}_i, \Tilde{a}_i) \otimes \phi_\gA(\Tilde{a}_i) \otimes \phi_\gA(a') \Big\rangle^2 \nonumber\\
    &= \frac{1}{m} \begin{bmatrix}
        \alpha_{1:m} \\ \alpha_{m+1}
    \end{bmatrix}^T 
    \begin{bmatrix}
        \mB^T \mK_{Z Z}\mB \odot \mK_{\Tilde{A}\Tilde{A}} \\ (\frac{\vone}{m})^T \big[{\mB}^T \mK_{Z Z} \Tilde{\mB} \odot \mK_{\Tilde{A}\Tilde{A}}\big]^T
    \end{bmatrix} 
    \begin{bmatrix}
        \mB^T \mK_{Z Z}\mB \odot \mK_{\Tilde{A}\Tilde{A}} & \big[{\mB}^T \mK_{Z Z} \Tilde{\mB} \odot \mK_{\Tilde{A}\Tilde{A}}\big] \frac{\vone}{m}
    \end{bmatrix} 
    \begin{bmatrix}
        \alpha_{1:m}\\ \alpha_{m+1}
    \end{bmatrix}k_\gA(a', a')^2
\label{eq:AlternativeProxyMethodsATTProofLossFirstTerm}
\end{align}
Lastly, for the second component in Equation (\ref{eq:2StageRegressionATTObjectiveFinal}), we note that
\begin{align}
    &\frac{1}{m} \sum_{i = 1}^m \sum_{\substack{j = 1 \\ j \ne i}}^m \Big\langle \varphi, \theta_j \hat{\mu}_{Z|W, A} (\Tilde{w}_j, \Tilde{a}_i) \otimes \phi_\gA(\Tilde{a}_i) \otimes \phi_\gA(a') \Big\rangle \nonumber\\
    &= \frac{1}{m} \sum_{i = 1}^m \sum_{\substack{j = 1 \\ j \ne i}}^m \sum_{l = 1}^m \alpha_l \Big\langle 
     \hat{\mu}_{Z|W, A} (\Tilde{w}_l, \Tilde{a}_l)\otimes \phi_\gA(\Tilde{a}_l) \otimes \phi_\gA(a'), \theta_j \hat{\mu}_{Z|W, A} (\Tilde{w}_j, \Tilde{a}_i) \otimes \phi_\gA(\Tilde{a}_i) \otimes \phi_\gA(a')
    \Big\rangle \nonumber\\
    &+ \frac{\alpha_{m + 1}}{m^2} \sum_{i = 1}^m \sum_{\substack{j = 1 \\ j \ne i}}^m \sum_{r = 1}^m \sum_{\substack{s = 1 \\ s \ne r}}^m  \Big\langle \theta_s \hat{\mu}_{Z|W, A} (\Tilde{w}_s, \Tilde{a}_r) \otimes \phi_\gA(\Tilde{a}_r) \otimes \phi_\gA(a'), 
    \theta_j \hat{\mu}_{Z|W, A} (\Tilde{w}_j, \Tilde{a}_i) \otimes \phi_\gA(\Tilde{a}_i) \otimes \phi_\gA(a') \Big\rangle \nonumber\\
    &= \frac{1}{m} \sum_{i = 1}^m \sum_{\substack{j = 1 \\ j \ne i}}^m \sum_{l = 1}^m \alpha_l \mBeta(\Tilde{w}_l, \Tilde{a}_l)^T\mK_{Z Z} \theta_j \mBeta(\Tilde{w}_j, \Tilde{a}_i) k_\gA(\Tilde{a}_l, \Tilde{a}_i) k_\gA(a', a') \nonumber\\
    &+ \frac{\alpha_{m + 1}}{m^2} \sum_{i = 1}^m \sum_{\substack{j = 1 \\ j \ne i}}^m \sum_{r = 1}^m \sum_{\substack{s = 1 \\ s \ne r}}^m \theta_s \mBeta(\Tilde{w}_s, \Tilde{a}_r)^T \mK_{Z Z} \theta_j \mBeta(\Tilde{w}_j, \Tilde{a}_i) k_\gA(\Tilde{a}_r, \Tilde{a}_i) k_\gA(a', a') \nonumber\\
    &= \frac{1}{m} \sum_{i = 1}^m \sum_{l = 1}^m \alpha_l \mBeta(\Tilde{w}_l, \Tilde{a}_l)^T\mK_{Z Z}\Big(\sum_{\substack{j = 1 \\ j \ne i}}^m  \theta_j \mBeta(\Tilde{w}_j, \Tilde{a}_i)\Big) k_\gA(\Tilde{a}_l, \Tilde{a}_i) k_\gA(a', a') \nonumber\\
    &+ \frac{\alpha_{m + 1}}{m^2} \sum_{i = 1}^m  \sum_{r = 1}^m \Big( \sum_{\substack{s = 1 \\ s \ne r}}^m \theta_s \mBeta(\Tilde{w}_s, \Tilde{a}_r) \Big)^T \mK_{Z Z} \Big( \sum_{\substack{j = 1 \\ j \ne i}}^m \theta_j \mBeta(\Tilde{w}_j, \Tilde{a}_i) \Big) k_\gA(\Tilde{a}_r, \Tilde{a}_i) k_\gA(a', a') \nonumber\\
    &=\frac{1}{m} \alpha_{1:m}^T \Big[ \mB^T \mK_{Z Z} \Tilde{\mB} \odot \mK_{\Tilde{A}\Tilde{A}}\Big] \vone k_\gA(a', a') + \alpha_{m+1} \frac{1}{m^2} \vone^T \Big[ \Tilde{\mB} \mK_{Z Z} \Tilde{\mB} \odot \mK_{\Tilde{A}\Tilde{A}} \Big] \vone k_\gA(a', a') \nonumber \\
&=  \begin{bmatrix}
    \alpha_{1:m } \\ \alpha_{m + 1}
\end{bmatrix}^T \begin{bmatrix}
 [ \mB^T \mK_{Z Z} \Tilde{\mB} \odot \mK_{\Tilde{A} \Tilde{A}} ] \frac{\vone}{m}\\
 (\frac{\vone}{m})^T \Big[ \Tilde{\mB}^T \mK_{Z Z} \Tilde{\mB} \odot \mK_{\Tilde{A} \Tilde{A}}\Big] \frac{\vone}{m}
\end{bmatrix} k_\gA(a', a') \label{eq:AlternativeProxyMethodsATTProofLossSecondTerm}
\end{align}
Now, we are ready to combine our findings and write the loss function in terms of matrix-vector multiplications. Using Equation (\ref{eq:AlternativeProxyMethodsATTVarphiNormFinal}),  (\ref{eq:AlternativeProxyMethodsATTProofLossFirstTerm}) and (\ref{eq:AlternativeProxyMethodsATTProofLossSecondTerm}), the loss function can be expressed as 
\begin{align}
    &\hat{\gL}^{2SR}_m(\varphi) = \frac{1}{m} \sum_{i = 1}^m \langle \varphi, \hat{\mu}_{Z|W, A} (\Tilde{w}_i, \Tilde{a}_i)\otimes \phi_\gA(\Tilde{a}_i) \otimes \phi_\gA(a') \rangle_{\gH_\gZ \otimes \gH_\gA \otimes \gH_\gA}^2 \nonumber \\&-2 \frac{1}{m} \sum_{i = 1}^m \sum_{\substack{j = 1 \\ j \ne i}}^m \Big\langle \varphi, \theta_j \hat{\mu}_{Z|W, A} (\Tilde{w}_j, \Tilde{a}_i) \otimes \phi_\gA(\Tilde{a}_i) \otimes \phi_\gA(a') \Big\rangle_{\gH_\gZ \otimes \gH_\gA \otimes \gH_\gA} + \lambda_2 \|\varphi_0\|^2_{\gH_\gZ \otimes \gH_\gA \otimes \gH_\gA} \nonumber\\
    &= \frac{1}{m} \begin{bmatrix}
        \alpha_{1:m} \\ \alpha_{m+1}
    \end{bmatrix}^T 
    \begin{bmatrix}
        \mB^T \mK_{Z Z}\mB \odot \mK_{\Tilde{A}\Tilde{A}} \nonumber\\ (\frac{\vone}{m})^T \big[{\mB}^T \mK_{Z Z} \Tilde{\mB} \odot \mK_{\Tilde{A}\Tilde{A}}\big]^T
    \end{bmatrix} 
    \begin{bmatrix}
        \mB^T \mK_{Z Z}\mB \odot \mK_{\Tilde{A}\Tilde{A}} & \big[{\mB}^T \mK_{Z Z} \Tilde{\mB} \odot \mK_{\Tilde{A}\Tilde{A}}\big] \frac{\vone}{m}
    \end{bmatrix} 
    \begin{bmatrix}
        \alpha_{1:m}\\ \alpha_{m+1}
    \end{bmatrix} k_\gA(a', a')^2\nonumber\\
    &-2 \begin{bmatrix}
    \alpha_{1:m } \\ \alpha_{m + 1}
    \end{bmatrix}^T \begin{bmatrix}
     [ \mB^T \mK_{Z Z} \Tilde{\mB} \odot \mK_{\Tilde{A} \Tilde{A}} ] \frac{\vone}{m}\\
     (\frac{\vone}{m})^T \Big[ \Tilde{\mB}^T \mK_{Z Z} \Tilde{\mB} \odot \mK_{\Tilde{A} \Tilde{A}}\Big] \frac{\vone}{m}
    \end{bmatrix} k_\gA(a', a')\nonumber\\
    &+ \lambda_2 \begin{bmatrix}
        \alpha_{1:m } \\ \alpha_{m + 1}
    \end{bmatrix}^T 
    \begin{bmatrix}
        \mB^T \mK_{Z Z} \mB \odot \mK_{\Tilde{A} \Tilde{A}} & [ \mB^T \mK_{Z Z} \Tilde{\mB} \odot \mK_{\Tilde{A} \Tilde{A}} ] \frac{\vone}{m}\\
        (\frac{\vone}{m})^T [ \mB^T \mK_{Z Z} \Tilde{\mB} \odot \mK_{\Tilde{A} \Tilde{A}} ]^T & (\frac{\vone}{m})^T \Big[ \Tilde{\mB}^T \mK_{Z Z} \Tilde{\mB} \odot \mK_{\Tilde{A} \Tilde{A}}\Big] \frac{\vone}{m}
    \end{bmatrix} 
    \begin{bmatrix}
        \alpha_{1:m } \\ \alpha_{m + 1}
    \end{bmatrix} k_\gA(a', a')
\label{eq:AlternativeProxyMethodATTProofLossMatrixVectorProductForm}
\end{align}
The optimal coefficients $\{\alpha_{1:m}, \alpha_{m+1}\}$ can be found by setting the derivative of Equation (\ref{eq:AlternativeProxyMethodATTProofLossMatrixVectorProductForm}) to zero. With these optimal coefficients, let $\hat{\varphi}_{\lambda_2, m}$ denote the minimizer of $\hat{\gL}^{2SR}_m(\varphi)$. Using $\hat{\varphi}_{\lambda_2, m}$, we can estimate $\E[Y \hat{\varphi}_{\lambda_2, m}(Z, a, a') | A = a]$. First, observe that
\begin{align*}
    \E[Y \hat{\varphi}_{\lambda_2, m}(Z, a, a') | A = a] &= \E [Y \langle \hat{\varphi}_{\lambda_2, m}, \phi_\gZ(Z) \otimes \phi_\gA(a) \otimes \phi_\gA(a') \rangle | A = a]\\
    &= \Big\langle  \hat{\varphi}_{\lambda_2, m}, \E [Y \phi_\gZ(Z) | A = a] \otimes \phi_\gA(a) \otimes \phi_\gA(a') \Big\rangle \\
    &\approx \langle \hat{\varphi}_{\lambda_2, m}, \hat{C}_{YZ|A}  \phi_\gA(a) \otimes \phi_\gA(a) \otimes \phi_\gA(a') \rangle
\end{align*}
where $\hat{C}_{YZ|A} $ is the estimation of the conditional mean $\E [Y \phi_\gZ(Z) | A = \cdot]$, as used in the dose-response curve algorithm. It is found by kernel ridge regression: 
\begin{align*}
    \hat{C}_{YZ|A} \phi_\gA(a) = \Phi_\gZ \text{diag}(\mY) [\mK_{A A} + n \lambda_3 \mI]^{-1} \mK_{A a}
\end{align*}
Thus,
\begin{align*}
    \hat{\E}[Y \hat{\varphi}_{\lambda_2, m}(Z, a, a') | A = a] &= \langle \hat{\varphi}_{\lambda_2, m}, \hat{C}_{YZ|A} \phi_\gA(a) \otimes \phi_\gA(a) \otimes \phi_\gA(a') \rangle \\
    &= \Big\langle \sum_{l = 1}^m \alpha_l  \hat{\mu}_{Z|W, A} (\Tilde{w}_l, \Tilde{a}_l)\otimes \phi_\gA(\Tilde{a}_l) \otimes \phi_\gA(a'),  \hat{C}_{YZ|A} \phi_\gA(a) \otimes \phi_\gA(a) \otimes \phi_\gA(a') \Big\rangle \\
    &+ \Big\langle  \frac{\alpha_{m + 1}}{m} \sum_{j = 1}^m \sum_{\substack{l = 1 \\ l \ne j}}^m \theta_l \hat{\mu}_{Z|W, A} (\Tilde{w}_l, \Tilde{a}_j) \otimes \phi_\gA(\Tilde{a}_j) \otimes \phi_\gA(a'), \hat{C}_{YZ|A} \phi_\gA(a) \otimes \phi_\gA(a) \otimes \phi_\gA(a') \Big\rangle \\
    &= \sum_{l = 1}^m \alpha_l \langle \hat{\mu}_{Z|W, A} (\Tilde{w}_l, \Tilde{a}_l)\otimes \phi_\gA(\Tilde{a}_l) \otimes \phi_\gA(a') ,  \hat{C}_{YZ|A}\phi_\gA(a) \otimes \phi_\gA(a) \otimes \phi_\gA(a') \rangle\\&+ \frac{\alpha_{m + 1}}{m} \sum_{j = 1}^m \sum_{\substack{l = 1 \\ l \ne j}}^m \theta_l \langle \hat{\mu}_{Z|W, A} (\Tilde{w}_l, \Tilde{a}_j) \otimes \phi_\gA(\Tilde{a}_j) \otimes \phi_\gA(a'), \hat{C}_{YZ|A} \phi_\gA(a) \otimes \phi_\gA(a) \otimes \phi_\gA(a') \rangle \\
    &= \sum_{l = 1}^m \alpha_l \langle \hat{\mu}_{Z|W, A} (\Tilde{w}_l, \Tilde{a}_l),  \hat{C}_{YZ|A} \phi_\gA(a) \rangle \langle \phi_\gA(\Tilde{a}_l), \phi_\gA(a)\rangle \langle \phi_\gA(a'), \phi_\gA(a') \rangle\\&+ \frac{\alpha_{m + 1}}{m} \sum_{j = 1}^m \sum_{\substack{l = 1 \\ l \ne j}}^m \langle \theta_l \hat{\mu}_{Z|W, A} (\Tilde{w}_l, \Tilde{a}_j), \hat{C}_{YZ|A} \phi_\gA(a) \rangle \langle \phi_\gA(\Tilde{a}_j), \phi_\gA(a)\rangle \langle \phi_\gA(a'), \phi_\gA(a') \rangle\\
    &= \sum_{l = 1}^m \alpha_l \mBeta(\Tilde{w}_l, \Tilde{a}_l)^T \Phi_\gZ^T \Phi_\gZ \text{diag}(\mY)[\mK_{A A} + n \lambda_3 \mI]^{-1} \mK_{A a} k_\gA(\Tilde{a}_l, a) k_\gA(a', a')\\
    &+ \frac{\alpha_{m + 1} }{m} \sum_{j = 1}^m \sum_{\substack{l = 1 \\ l \ne j}}^m \theta_l \mBeta(\Tilde{w}_l, \Tilde{a}_j)^T \Phi_\gZ^T \Phi_\gZ \text{diag}(\mY) [\mK_{A A} + n \lambda_3 \mI]^{-1} \mK_{A a} k_\gA(\Tilde{a}_j, a) k_\gA(a', a')\\
    &= \sum_{l = 1}^m \alpha_l \mBeta(\Tilde{w}_l, \Tilde{a}_l)^T \mK_{Z Z} \text{diag}(\mY)[\mK_{A A} + n \lambda_3 \mI]^{-1} \mK_{A a} k_\gA(\Tilde{a}_l, a) k_\gA(a', a')\\
    &+ \frac{\alpha_{m + 1}}{m} \sum_{j = 1}^m \sum_{\substack{l = 1 \\ l \ne j}}^m \theta_l \mBeta(\Tilde{w}_l, \Tilde{a}_j)^T \mK_{Z Z} \text{diag}(\mY) [\mK_{A A} + n \lambda_3 \mI]^{-1} \mK_{A a} k_\gA(\Tilde{a}_j, a) k_\gA(a', a')\\
    &= \alpha_{1:m}^T \Big( \mB^T \big(\mK_{Z Z} \text{diag}(\mY) [\mK_{A A} + n \lambda_3 \mI]^{-1} \mK_{A a}\big) \odot \mK_{\Tilde{A} a}\Big) k_\gA(a', a')\\
    &+ \alpha_{m + 1} \Big( \Tilde{\mB}^T \big( \mK_{Z Z}  \text{diag}(\mY) [\mK_{A A} + n \lambda_3 \mI]^{-1} \mK_{A a} \big) \odot \mK_{\Tilde{A} a}\Big) \frac{\vone}{m} k_\gA(a', a')
\end{align*}
The conditional dose-response curve can therefore be expressed in the closed-form as
\begin{align*}
    f_{\text{ATT}}(a, a') &= \alpha_{1:m}^T \Big( \mB^T \big(\mK_{Z Z} \text{diag}(\mY) [\mK_{A A} + n \lambda_2 \mI]^{-1} \mK_{A a}\big) \odot \mK_{\Tilde{A} a}\Big) k_\gA(a', a')\\
    &+ \alpha_{m + 1} \Big( \Tilde{\mB}^T \big( \mK_{Z Z}  \text{diag}(\mY) [\mK_{A A} + n \lambda_2 \mI]^{-1} \mK_{A a} \big) \odot \mK_{\Tilde{A} a}\Big) \frac{\vone}{m} k_\gA(a', a')
\end{align*}
\end{proof}

\begin{remark}
    Given the optimal coefficients $\{\alpha_{1:m}, \alpha_{m+1}\}$ from Algorithm (\ref{algo:ATT_algorithm_with_confounders}), the bridge function can be written in the closed-form as
    \begin{align*}
        \hat{\varphi}_{\lambda_2, m}(z, a, a') = k_\gA(a', a')\alpha_{1:m}^T [(\mB^T \mK_{Z z}) \odot \mK_{\Tilde{A} a}] + k_\gA(a', a')\alpha_{m + 1} \Big(\frac{\vone}{m} \Big)^T [(\Tilde{\mB}^T \mK_{Z z}) \odot \mK_{\Tilde{A} a}]
    \end{align*}
    \label{remark:ClosedFormForBridgeFunctionATT}
\end{remark}
\begin{proof}
    \begin{align*}
    &\hat{\varphi}_{\lambda_2, m}(z, a, a') =  \langle \hat{\varphi}_{\lambda_2, m}, \phi_\gZ(z) \otimes \phi_\gA(a) \otimes \phi_\gA(a') \rangle\\
    &= \Bigg\langle \sum_{l = 1}^m \alpha_l  \hat{\mu}_{Z|W,A} (\Tilde{w}_l, \Tilde{a}_l) \otimes \phi_\gA(\Tilde{a}_l) \otimes \phi_\gA(a') + \frac{\alpha_{m + 1}}{m} \sum_{j = 1}^m \sum_{\substack{l = 1 \\ l \ne j}}^m \theta_l \hat{\mu}_{Z|W, A} (\Tilde{w}_l, \Tilde{a}_j) \otimes \phi_\gA(\Tilde{a}_j)  \otimes \phi_\gA(a'),\\& \phi_\gZ(z) \otimes \phi_\gA(a)  \otimes \phi_\gA(a') \Bigg\rangle\\
    &= \sum_{l = 1}^m \alpha_l  \langle \hat{\mu}_{Z|W, A} (\Tilde{w}_l, \Tilde{a}_l)\otimes \phi_\gA(\Tilde{a}_l)  \otimes \phi_\gA(a'),\phi_\gZ(z) \otimes \phi_\gA(a)  \otimes \phi_\gA(a') \rangle\\
    &+ \frac{\alpha_{m + 1} }{m} \sum_{j = 1}^m \sum_{\substack{l = 1 \\ l \ne j}}^m \theta_l\langle \hat{\mu}_{Z|W, A} (\Tilde{w}_l, \Tilde{a}_j) \otimes \phi_\gA(\Tilde{a}_j) \otimes \phi_\gA(a'),\phi_\gZ(z) \otimes \phi_\gA(a)  \otimes \phi_\gA(a') \rangle\\
    &= \sum_{l = 1}^m \alpha_l  \langle \hat{\mu}_{Z|W, A} (\Tilde{w}_l, \Tilde{a}_l),\phi_\gZ(z) \rangle k_\gA(\Tilde{a}_l, a) k_\gA(a', a') \\&+ \frac{\alpha_{m + 1}}{m(m-1)} \sum_{j = 1}^m \sum_{\substack{l = 1 \\ l \ne j}}^m \theta_l \langle \hat{\mu}_{Z|W, A} (\Tilde{w}_l, \Tilde{a}_j),\phi_\gZ(z) \rangle k_\gA(\Tilde{a}_j, a) k_\gA(a', a')\\
    &= \sum_{l = 1}^m \alpha_l  \langle \Phi_\gZ {\mBeta}(\Tilde{w}_l, \tilde{a}_l),\phi_\gZ(z) \rangle k_\gA(\Tilde{a}_l, a) k_\gA(a', a') \\&+ \frac{\alpha_{m + 1}}{m} \sum_{j = 1}^m \sum_{\substack{l = 1 \\ l \ne j}}^m \theta_l \langle \Phi_\gZ {\mBeta}(\Tilde{w}_l, \tilde{a}_j),\phi_\gZ(z) \rangle k_\gA(\Tilde{a}_j, a) k_\gA(a', a')\\
    &= \sum_{l = 1}^m \alpha_l \mK_{Z z}^T {\mBeta}(\Tilde{w}_l, \tilde{a}_l) k_\gA(\Tilde{a}_l, a) k_\gA(a', a') + \frac{\alpha_{m + 1}}{m} \sum_{j = 1}^m \sum_{\substack{l = 1 \\ l \ne j}}^m \theta_l \mK_{Z z}^T \mBeta(\Tilde{w}_l, \tilde{a}_j)k_\gA(\Tilde{a}_j, a) k_\gA(a', a')\\
    &= \sum_{l = 1}^m \alpha_l \mK_{Z z}^T (\mK_{W W} \odot \mK_{A A} + n \lambda_1 \mI)^{-1}(\mK_{W \Tilde{w}_l} \odot \mK_{A \tilde{a}_l}) k_\gA(\Tilde{a}_l, a) k_\gA(a', a')\\
    &+ \frac{\alpha_{m + 1}}{m} \sum_{j = 1}^m \sum_{\substack{l = 1 \\ l \ne j}}^m \theta_l \mK_{Z z}^T (\mK_{W W} \odot \mK_{A A} + n \lambda_1 \mI)^{-1}(\mK_{W \Tilde{w}_l} \odot \mK_{A \tilde{a}_j}) k_\gA(\Tilde{a}_j, a) k_\gA(a', a')\\
    &= \sum_{l = 1}^m \alpha_l \mK_{Z z}^T (\mK_{W W} \odot \mK_{A A} + n \lambda_1 \mI)^{-1}(\mK_{W \Tilde{w}_l} \odot \mK_{A \tilde{a}_l}) k_\gA(\Tilde{a}_l, a) k_\gA(a', a')\\
    &+ \frac{\alpha_{m + 1}}{m}  \sum_{j = 1}^m  \mK_{Z z}^T (\mK_{W W} \odot \mK_{A A} + n \lambda_1 \mI)^{-1}\Big(\sum_{\substack{l = 1 \\ l \ne j}}^m \theta_l \mK_{W \Tilde{w}_l}  \odot \mK_{A \tilde{a}_j} \Big) k_\gA(\Tilde{a}_j, a) k_\gA(a', a')\\
    &= k_\gA(a', a')\alpha_{1:m}^T [(\mB^T \mK_{Z z}) \odot \mK_{\Tilde{A} a}] + k_\gA(a', a')\alpha_{m + 1} \Big(\frac{\vone}{m} \Big)^T [(\Tilde{\mB}^T \mK_{Z z}) \odot \mK_{\Tilde{A} a}]
\end{align*}
\end{proof}

\section{CONSISTENCY RESULTS} \label{sec:consistency_appendix}

In this section, we provide the consistency result of our proposed method. We make the following assumptions on the kernels and on the noise between the outcome and the treatment.

\begin{assumption}[Replication of Assumption (\ref{assum:Stage1ConsistencyKernelAssumptions_main})] 
For $\gF \in \{\gA, \gW, \gZ\}$, we assume that
    \begin{itemize}
        \item  $\gF$ is a Polish space;
        \item $k_\gF(f, .)$, is continuous for almost every $f \in \gF $;
        \item 
        $k_\gF(f, .)$, is bounded by $\kappa$ for almost every $f \in \gF $, i.e., 
        \begin{align*}
            \sup_{f \in \gF} \|k_\gF(f, .)\|_{\gH_\gF} \le \kappa.
        \end{align*}
        \item There exists $R,\sigma > 0$ such that for all $q \geq 2$, $P_A-$almost surely, \begin{equation}\label{eq:mom_stage_3}
    \E[(Y - \E[Y \mid A])^q \mid A] \leq \frac{1}{2}q!\sigma^2R^{q-2}.
    \end{equation}
    \end{itemize}

    \label{assum:Stage1ConsistencyKernelAssumptions}
\end{assumption}

The last assumption is a Bernstein moment condition used to control the noise of the observations
(see \citet{caponnetto2007optimal,fischer2020sobolev} for more details). If $Y$
is almost surely bounded, then the condition is automatically satisfied. It is possible to prove that the Bernstein condition is equivalent to
sub-exponentiality, see \citet[][Remark 4.9]{mollenhauer2022learning}.

\subsection{Consistency Results for Dose-Response Curve}

\subsubsection{Assumptions for ATE}

Under Assumption~(\ref{asst:well_specifiedness}-2), there exists a solution to the bridge equation within the RKHS $\gH_{\gZ \gA}$. However, this solution might not be unique. We therefore introduce the following minimum norm solution in the set of valid bridge functions. 


\begin{definition}[Bridge solution with minimum RKHS norm] \label{def:min_norm_bridge}
We define 
\begin{align*}
\bar{\varphi}_0 = \argmin_{\varphi 
    \in \gH_{\gZ\gA}} \|\varphi\|_{\gH_{\gZ\gA}} \quad \text{ s.t. } \quad \E[\varphi(Z, A) | W, A] = r(W, A),
\end{align*}
with $r(W, A) = \frac{p(W) p(A)}{p(W, A)}$.
\end{definition}

Under Assumption~(\ref{asst:well_specifiedness}-2), $\bar{\varphi}_0$ is well-defined. We will show that the estimator from stage 2 converges to $\bar{\varphi}_0$ in RKHS norm and we will therefore be able to obtain consistency guarantees for the dose-response function. 

\begin{remark}[Uniqueness of the bridge function]
    We notice that previous works on Proximal Causal Learning \citep{Mastouri2021ProximalCL,xu2021deep, singh2023kernelmethodsunobservedconfounding, semiparametricProximalCausalInference,wu2024doubly} require the bridge solution to be unique. However, we show that this assumption is not needed and convergence to the minimum norm bridge solution is enough to obtain consistency on the estimation of the dose-response curve. Our results build upon recent advances in instrumental variable regression with kernel methods \cite{meunier2024nonparametric}.
\end{remark}

We now introduce the covariance operators associated to stages 1, 2 and 3.

\begin{definition}
The covariance operators are defined as 
\begin{enumerate}
    \item (Stage 1) $\Sigma_1 := \E\left[\phi_{\gW\gA}(W, A) \otimes \phi_{\gW\gA}(W, A)\right], \qquad \phi_{\gW\gA}(W, A) = \phi_{\gW}(W) \otimes \phi_{\gA}(A)$;
    \item (Stage 2) 
    $\Sigma_2 := \E_{W, A}\left[ \left(\big(\mu_{Z|W, A} (W, A) \otimes \phi_\gA(A)\big) \otimes \big(\mu_{Z|W, A} (W, A)\otimes \phi_\gA(A) \big)\right)\right];$
    \item (Stage 3) $\Sigma_3 := \E[\phi_\gA(A) \otimes \phi_\gA(A)]$.
\end{enumerate}
\label{definition:CovarianceOperators_ATE}
\end{definition}
$\Sigma_1, \Sigma_2, \Sigma_3$ are self-adjoint and positive semi-definite operators. Under Assumption~(\ref{assum:Stage1ConsistencyKernelAssumptions}), they are trace class, and therefore compact, which implies that they have a countable spectrum \citep{Ingo_support_vector_machines}.

The next proposition relates $\bar{\varphi}_0$ to $\Sigma_2$.

\begin{proposition} \label{prop:min_norm_charac}
Under Assumption~(\ref{asst:well_specifiedness}-2), $\bar{\varphi}_0$ is well-defined and is the unique element of $\gH_{\gZ\gA}$ satisfying $\E[\bar{\varphi}_0(Z, A) | W, A] = r(W, A)$ and such that $\bar{\varphi}_0 \in \operatorname{null}(\Sigma_2)^{\perp}$.
\end{proposition}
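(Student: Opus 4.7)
The plan is to view the bridge equation as a linear equation in a Hilbert space, identify $\bar{\varphi}_0$ as the minimum norm element of a closed affine set of solutions, and then identify the relevant orthogonality condition with $\operatorname{null}(\Sigma_2)^\perp$ by exhibiting $\Sigma_2$ as $T^\star T$ for a suitable bounded conditional expectation operator $T$.

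First, I would define the linear operator $T: \gH_{\gZ\gA} \to L_2(P_{W,A})$ by $T\varphi(w,a) := \E[\varphi(Z,A) \mid W=w, A=a] = \langle \varphi, \mu_{Z|W,A}(w,a) \otimes \phi_\gA(a)\rangle_{\gH_{\gZ\gA}}$, where the second equality uses the bounded-kernel reproducing property. Boundedness of $T$ follows from Jensen and the kernel bound in Assumption~(\ref{assum:Stage1ConsistencyKernelAssumptions_main}), giving $\|T\varphi\|_{L_2}^2 \le \kappa^4 \|\varphi\|_{\gH_{\gZ\gA}}^2$. The bridge equation is then $T\varphi = r$, and by Assumption~(\ref{asst:well_specifiedness}-2) the solution set $\mathcal{S} = \{\varphi \in \gH_{\gZ\gA} : T\varphi = r\}$ is nonempty; since $T$ is bounded it is a closed affine subspace of the form $\varphi^{\dagger} + \operatorname{null}(T)$ for any fixed $\varphi^{\dagger} \in \mathcal{S}$.

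Next I would invoke the standard projection theorem: $\bar{\varphi}_0$ is the projection of $0$ onto the closed convex set $\mathcal{S}$, which exists and is unique, and is characterized as the unique element of $\mathcal{S}$ lying in $\operatorname{null}(T)^\perp$. (If $\varphi \in \mathcal{S} \cap \operatorname{null}(T)^\perp$, then $\varphi - \bar{\varphi}_0 \in \operatorname{null}(T) \cap \operatorname{null}(T)^\perp = \{0\}$.) This gives existence, uniqueness, and the orthogonality characterization with respect to $\operatorname{null}(T)$.

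The remaining step is to identify $\operatorname{null}(T)$ with $\operatorname{null}(\Sigma_2)$. For this I would compute the adjoint $T^\star: L_2(P_{W,A}) \to \gH_{\gZ\gA}$, $T^\star f = \E[f(W,A) \, (\mu_{Z|W,A}(W,A) \otimes \phi_\gA(A))]$, and show directly from Definition~(\ref{definition:CovarianceOperators_ATE}) that
\begin{equation*}
T^\star T \, \varphi = \E\!\left[\langle \varphi, \mu_{Z|W,A}(W,A) \otimes \phi_\gA(A)\rangle \,(\mu_{Z|W,A}(W,A) \otimes \phi_\gA(A))\right] = \Sigma_2 \varphi.
\end{equation*}
Since for any bounded operator one has $\operatorname{null}(T^\star T) = \operatorname{null}(T)$ (from $\langle \varphi, T^\star T\varphi\rangle = \|T\varphi\|^2$), we conclude $\operatorname{null}(\Sigma_2) = \operatorname{null}(T)$, and hence $\operatorname{null}(\Sigma_2)^\perp = \operatorname{null}(T)^\perp$, completing the proof.

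I do not anticipate a substantial obstacle: the main care is to verify that $T$ is a well-defined bounded operator into $L_2(P_{W,A})$ (using the kernel boundedness) so that $\mathcal{S}$ is genuinely closed and projection is legitimate, and to make the adjoint computation rigorous in the Bochner-integral sense. Both are routine consequences of Assumption~(\ref{assum:Stage1ConsistencyKernelAssumptions_main}-ii).
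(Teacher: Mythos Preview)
Your proposal is correct and follows essentially the same approach as the paper: both define the conditional-expectation operator (the paper calls it $A$, you call it $T$), use the orthogonal decomposition of $\gH_{\gZ\gA}$ relative to its null space to obtain the minimum-norm solution, and conclude by observing $\Sigma_2 = T^\star T$ so that $\operatorname{null}(\Sigma_2) = \operatorname{null}(T)$. Your treatment is slightly more careful in explicitly verifying boundedness of $T$ and computing the adjoint, but the logical structure is identical.
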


\begin{proof}
Note that the bridge equation, for an element $\varphi \in \gH_{\gZ\gA}$, can be written as
$$
r(W, A) = E[\varphi(Z, A) | W, A] = \langle \varphi, \mu_{Z \mid W,A}(W,A) \otimes \phi_{\gA}(A) \rangle_{\gH_{\gZ\gA}} = A\varphi,
$$
by using the reproducing property and introducing the operator $A: \gH_{\gZ\gA} \to \gL_2(\gW \times \gA, p_{W, A}), \varphi \mapsto \langle \varphi, \mu_{Z \mid W,A}(W,A) \otimes \phi_{\gA}(A) \rangle_{\gH_{\gZ\gA}}$ where $\gL_2(\gW \times \gA, p_{W, A})$ denotes the square integrable functions with respect to the measure $p(W, A)$. By Assumption~(\ref{asst:well_specifiedness}-2), $A^{-1}(\{r\}) \subseteq  \gH_{\gZ\gA}$ is not empty. Fix $\varphi$ an element of $A^{-1}(\{r\})$. Since $\gH_{\gZ\gA} = \operatorname{null}(A) \oplus \operatorname{null}(A)^{\perp}$ there exists a unique pair $(\varphi',\varphi'') \in \operatorname{null}(A)^{\perp} \times \operatorname{null}(A)$ such that $\varphi = \varphi' + \varphi''$. Since $\varphi \in A^{-1}(\{r\})$ and $\varphi'' \in \operatorname{null}(A)$, we have:
$$
r = A \varphi = A \varphi' + A \varphi'' = A \varphi'.
$$
Therefore $\varphi' \in A^{-1}(\{r\})$. Furthermore, $\|\varphi\|^2_{\gH_{\gZ\gA}} = \|\varphi'\|^2_{\gH_{\gZ\gA}} + \|\varphi''\|^2_{\gH_{\gZ\gA}} \geq \|\varphi'\|^2_{\gH_{\gZ\gA}}$. This proves that the minimum norm solution in $\gH_{\gZ\gA}$ exists and is uniquely defined as $\varphi'$ and belongs to $\operatorname{null}(A)^{\perp} \cap A^{-1}(\{r\})$. We then show that $\operatorname{null}(A)^{\perp} \cap A^{-1}(\{r\})$ contains only one element. Assume that there exists $\varphi,\tilde{\varphi} \in \operatorname{null}(A)^{\perp} \cap A^{-1}(\{r\})$, then $A (\varphi-\tilde{\varphi}) = r - r = 0$, therefore $\varphi-\tilde{\varphi} \in \operatorname{null}(A)$. But since we also have $\varphi-\tilde{\varphi}  \in \operatorname{null}(A)^{\perp}$, it implies $\varphi = \tilde{\varphi}$. To conclude, observe that $A$ is such that $\Sigma_2 = A^*A$, therefore $\operatorname{null}(A) = \operatorname{null}(A^*A) = \operatorname{null}(\Sigma_2
)$.
\end{proof}

To characterize the smoothness of the target functions for each respective stage we employ the following source assumption.


\begin{assumption}[Replication of Assumption (\ref{asst:src_all_stages_main})] We assume that the following conditions hold:\label{asst:src_all_stages}
\begin{enumerate}
    \item  There exists a constant $B_1 < \infty$ such that for a given $\beta_1 \in (1, 3]$,
    $$
    \|{C}_{Z|W,A}\Sigma_1^{-\frac{\beta_1-1}{2}}\|_{S_2(\gH_{\gW\gA}, \gH_\gZ)} \le B_1
    $$   
    \item There exists a constant $B_2 < \infty$ such that for a given $\beta_2 \in (1, 3]$, 
    $$
        \|\Sigma_2^{-\frac{\beta_2-1}{2}}\bar{\varphi}_0\|_{\gH_{\gZ\gA}} \le B_2.
    $$
    \item There exists a constant $B_3 < \infty$ such that for a given $\beta_3 \in (1,3]$,
    $$
    \|C_{YZ \mid A}\Sigma_3^{-\frac{\beta_3-1}{2}}\|_{S_2(\gH_{\gA}, \gH_{\gZ})} \leq B_3.
    $$
\end{enumerate}
\end{assumption}

This assumption is referred to as the source condition in the literature \citep{caponnetto2007optimal, fischer2020sobolev}. It measures the smoothness of the regression functions with respect to the covariance operators. The inverse covariance operators have to be understood as Moore–Penrose pseudoinverses \citep{ben2006generalized}. In particular, $\Sigma_2^{\frac{\beta_2-1}{2}}\Sigma_2^{-\frac{\beta_2-1}{2}} = P_2$, with $P_2$ the orthogonal projection onto $\operatorname{null}\left(\Sigma_2^{\frac{\beta_2-1}{2}}\right)^{\perp} = \operatorname{null}(\Sigma_2)^{\perp}.$ Combined with Proposition~(\ref{prop:min_norm_charac}), we obtain the following result.

\begin{proposition} \label{prop:min_norm_charac2}
    Under Assumption~(\ref{asst:well_specifiedness}-2), 
    $\bar{\varphi}_0 = \Sigma_2^{\frac{\beta_2-1}{2}}\Sigma_2^{-\frac{\beta_2-1}{2}}\bar{\varphi}_0.$
\end{proposition}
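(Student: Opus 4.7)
The plan is to combine the characterization of $\bar{\varphi}_0$ from Proposition~\ref{prop:min_norm_charac} with the standard identity for Moore--Penrose pseudoinverses of positive self-adjoint compact operators. Specifically, I will show that the claimed operator $\Sigma_2^{\frac{\beta_2-1}{2}}\Sigma_2^{-\frac{\beta_2-1}{2}}$ coincides with the orthogonal projection $P_2$ onto $\operatorname{null}(\Sigma_2)^{\perp}$, and then apply $P_2$ to $\bar{\varphi}_0$.

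Step 1. By Assumption~\ref{assum:Stage1ConsistencyKernelAssumptions} the kernels are bounded, hence $\Sigma_2$ is a self-adjoint, positive, trace-class (in particular compact) operator on $\gH_{\gZ\gA}$. The spectral theorem yields a countable eigendecomposition $\Sigma_2 = \sum_{k} \sigma_k\, e_k \otimes e_k$ with $\sigma_k \geq 0$ and an orthonormal basis $(e_k)$ of $\overline{\operatorname{ran}(\Sigma_2)} = \operatorname{null}(\Sigma_2)^{\perp}$, completed by a basis of $\operatorname{null}(\Sigma_2)$. For any $r > 0$, functional calculus defines $\Sigma_2^{r}$ via $\sigma_k \mapsto \sigma_k^{r}$, while the Moore--Penrose pseudoinverse $\Sigma_2^{-r}$ is defined by $\sigma_k \mapsto \sigma_k^{-r}$ on the indices $k$ with $\sigma_k > 0$ and by $0$ on $\operatorname{null}(\Sigma_2)$.

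Step 2. Applying this to the exponent $r = (\beta_2-1)/2$, which is strictly positive since $\beta_2 \in (1,3]$, gives
\begin{equation*}
\Sigma_2^{\frac{\beta_2-1}{2}}\,\Sigma_2^{-\frac{\beta_2-1}{2}} \;=\; \sum_{k:\sigma_k>0} e_k \otimes e_k \;=\; P_2,
\end{equation*}
the orthogonal projection onto $\operatorname{null}(\Sigma_2)^{\perp}$. (Equivalently, on $\operatorname{null}(\Sigma_2)^{\perp}$ the composition acts as $\sigma_k^{r}\sigma_k^{-r} = 1$, and on $\operatorname{null}(\Sigma_2)$ it is zero by definition of the pseudoinverse.)

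Step 3. By Proposition~\ref{prop:min_norm_charac}, under Assumption~\ref{asst:well_specifiedness}-2 the minimum-norm bridge function $\bar{\varphi}_0$ is well-defined and satisfies $\bar{\varphi}_0 \in \operatorname{null}(\Sigma_2)^{\perp}$. Hence $P_2 \bar{\varphi}_0 = \bar{\varphi}_0$, which combined with Step~2 yields the claim $\Sigma_2^{\frac{\beta_2-1}{2}}\Sigma_2^{-\frac{\beta_2-1}{2}} \bar{\varphi}_0 = \bar{\varphi}_0$.

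There is no real obstacle here; the only point to handle carefully is the interpretation of the pseudoinverse (so that the product is a projection rather than the identity on all of $\gH_{\gZ\gA}$), and the fact that $\beta_2 > 1$ is needed for the fractional power to be well defined as written. The source condition in Assumption~\ref{asst:src_all_stages}-2 implicitly requires $\bar{\varphi}_0 \in \operatorname{ran}(\Sigma_2^{(\beta_2-1)/2}) \subseteq \operatorname{null}(\Sigma_2)^{\perp}$, which is consistent with Step~3.
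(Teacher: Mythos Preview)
Your proof is correct and follows essentially the same argument as the paper: the paper states (in the text immediately preceding the proposition) that $\Sigma_2^{\frac{\beta_2-1}{2}}\Sigma_2^{-\frac{\beta_2-1}{2}} = P_2$, the orthogonal projection onto $\operatorname{null}(\Sigma_2)^{\perp}$, and then combines this with Proposition~\ref{prop:min_norm_charac}. Your Steps~1--3 spell out the spectral-calculus justification for this identity in more detail than the paper does, but the route is the same.
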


\begin{remark}[Smoothness and minimum norm solution] $\bar{\varphi}_0$ is the unique RKHS solution to the bridge equation such that Proposition~(\ref{prop:min_norm_charac2}) holds. Indeed by Proposition~(\ref{prop:min_norm_charac}), $\bar{\varphi}_0$ is the unique RKHS solution to the bridge equation such that $\bar{\varphi}_0 \in \operatorname{null}(\Sigma_2)^{\perp}$ and therefore such that $\bar{\varphi}_0 = P_2\bar{\varphi}_0$. This crucial property will allow us to show that our bridge estimator converges to $\bar{\varphi}_0$.

\end{remark}

The next assumption characterize the effective dimension of the RKHSs associated to each stage. It is a standard assumption on the eigenvalue decay of the covariance operators (see
more details in \citet{caponnetto2007optimal,fischer2020sobolev}).

\begin{assumption} We assume the following conditions hold\label{asst:evd_all_stages}
    \begin{enumerate}
        \item  Let $(\lambda_{1,i})_{i\geq 1}$ be the eigenvalues of $\Sigma_1$. For some constant $c_1 >0$ and parameter $p_1 \in (0,1]$ and for all $i \geq 1$,
    \begin{equation*}
        \lambda_{1,i} \leq c_1i^{-1/p_1}.
    \end{equation*}    
    \item 
    Let $(\lambda_{2,i})_{i\geq 1}$ be the eigenvalues of $\Sigma_2$. For some constant $c_2 >0$ and parameter $p_2 \in (0,1]$ and for all $i \geq 1$,
    \begin{equation*}
        \lambda_{2,i} \leq c_2i^{-1/p_2}.
    \end{equation*}   
    \item Let $(\lambda_{3,i})_{i \geq 1}$ be the eigenvalues of $\Sigma_3$. For some constant $c_3 >0$ and parameter $p_3 \in (0,1]$ and for all $i \geq 1$,
    \begin{equation*}
        \lambda_{3,i} \leq c_3i^{-1/p_3}.
    \end{equation*}    
    \end{enumerate}
\end{assumption}

\subsubsection{Proof sketch for ATE}

We provide non-asymptotic uniform consistency guarantees for the dose-response curve. Below we provide a proof sketch. 

\paragraph{First Stage regression.} The estimator from stage 1 aims at estimating the conditional mean embedding $\mu_{Z|W,A} = \E[\phi_\gZ(Z) \mid W, A]$. We recall that under the well-specifiedness assumption (Assumption~(\ref{asst:well_specifiedness}-1)), we have $\mu_{Z|W,A}(\cdot, \cdot) = C_{Z|W, A}(\phi_\gW(\cdot) \otimes \phi_\gA(\cdot))$, with $C_{Z|W, A} \in S_2(\gH_{\gW\gA}, \gH_\gZ)$. We point out that Assumption~(\ref{asst:well_specifiedness}-1) is equivalent to the assumption that $\mu_{Z|W,A}(\cdot, \cdot)$ belong to the vector-valued RKHS associated to the vector-valued kernel $K((w,a),(w',a')) = \langle \phi_{\gW\gA}(w,a), \phi_{\gW\gA}(w',a') \rangle_{\gH_{\gW\gA}} \operatorname{Id}_{\gH_{\gZ}},$ where $\operatorname{Id}_{\gH_{\gZ}}$ denotes the identity operator in $\gH_{\gZ}$ (see \cite{li2022optimal} for a detailed discussion).

Given the regularization parameter $\lambda_1 > 0$, we recall that the objective to learn the conditional mean embedding operator is,
\begin{align*}
    \hat{\gL}^{c}(C) = \frac{1}{n} \sum_{i = 1}^n \|\phi_\gZ(z_i) - C(\phi_\gW(w_i) \otimes \phi_\gA(a_i))\|_{\gH_\gZ}^2 + \lambda_1 \|C\|_{S_2(\gH_{\gW\gA}, \gH_\gZ)}^2, \qquad C \in S_2(\gH_{\gW\gA}, \gH_\gZ),
\end{align*}
whose minimizer is denoted as,
\begin{align*}
    \hat{C}_{Z|W, A} = \argmin_{C \in S_2(\gH_{\gW\gA}, \gH_\gZ)} \hat{\gL}^{c}(C).
\end{align*}
The conditional mean embedding is then approximated as
\begin{align*}
    \hat{\mu}_{Z|W,A}(w, a) = \hat{C}_{Z|W, A}(\phi_\gW(w) \otimes \phi_\gA(a)), \qquad w \in \gW, \quad a \in \gA.
\end{align*}
We bound $\|\hat{C}_{Z|W, A} - C_{Z|W, A}\|_{S_2}$ in S.M. (Sec. ~\ref{sec:first_stage_ate}), using the main result from \citep{li2022optimal}. We then convert this bound to a bound on $\|\hat{\mu}_{Z|W, A} - \mu_{Z|W, A}\|_{\infty}$, that will allow us to obtain the uniform consistency of the dose-response function.

\paragraph{Second Stage regression.}
 We recall that for the second stage we have the following loss at the population level:
$$
    \gL^{2SR}(\varphi)=\E\big[ (r(W,A) - \E[\varphi(Z,A) \mid W,A])^2 ], \qquad \varphi \in \gH_{\gZ \gA}.
$$
We showed in Eq.~(\ref{eq:MainTextSimplifiedLossATEPopulation}) that $\gL^{2SR}$ can be equivalently written as 
$$
    \gL^{2SR}(\varphi) = \E\big[\E[\varphi(Z,A) \mid W,A]^2 \big] - 2 \E_{W} \E_{A} \big[\E[\varphi(Z,A) \mid W,A]\big] + \text{const.}
$$
We introduce the regularized version of the population loss, for $\lambda_2 > 0$,
$$
 \gL^{2SR}_{\lambda_2}(\varphi) =  \gL^{2SR}(\varphi) + \lambda_2 \|\varphi\|^2_{\gH_{\gZ\gA}}.
$$
Let us introduce $g_{2} = \E_{W} \E_{A} \left[\mu_{Z \mid W,A} (W, A) \otimes \phi_{\gA}(A)\right]$.
 
\begin{proposition} \label{prop:pop_estim_stage_2}
    $g_2$ can be alternatively written as 
    $$
    g_{2} = \E[r(W,A)\mu_{Z \mid W,A} (W, A) \otimes \phi_{\gA}(A)].
    $$
    Furthermore, for any element $\varphi_0 \in \gH_{\gZ \gA}$ solution to the bridge equation, we have $g_2 = \Sigma_2 \varphi_0.$ Finally,
    $$
    \varphi_{\lambda_2} := \argmin_{\varphi \in \gH_{\gZ \gA}} \gL^{2SR}_{\lambda_2}(\varphi) = \left(\Sigma_2 + \lambda_2 \text{Id}_{\gH_{\gZ\gA}} \right)^{-1} g_2.
    $$
\end{proposition}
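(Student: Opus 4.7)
The plan is to prove the three claims in order, each relying on simple manipulations of RKHS inner products and changes of measure.

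For the first claim, the identity is a pure change of measure. Since $r(w,a) = p(w)p(a)/p(w,a)$, writing out the decoupled expectation gives
$$
g_2 = \iint \mu_{Z \mid W,A}(w,a) \otimes \phi_{\gA}(a)\, p(w)p(a)\, dw\, da = \iint \mu_{Z \mid W,A}(w,a) \otimes \phi_{\gA}(a)\, r(w,a)\, p(w,a)\, dw\, da,
$$
which is exactly $\E[r(W,A)\mu_{Z \mid W,A}(W,A) \otimes \phi_{\gA}(A)]$. The only subtlety is making sure the Bochner integral is well-defined, which follows from boundedness of the kernels (Assumption~(\ref{assum:Stage1ConsistencyKernelAssumptions_main}-ii)) giving $\|\mu_{Z \mid W,A}(w,a) \otimes \phi_{\gA}(a)\| \le \kappa^2$ uniformly.

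For the second claim, let $\varphi_0 \in \gH_{\gZ\gA}$ be any solution of the bridge equation. By the reproducing property and the fact that $\mu_{Z \mid W,A}(w,a) = \E[\phi_{\gZ}(Z) \mid W=w, A=a]$, we have for any $(w,a)$ that
$$
\langle \varphi_0, \mu_{Z \mid W,A}(w,a) \otimes \phi_{\gA}(a) \rangle_{\gH_{\gZ\gA}} = \E[\varphi_0(Z,a) \mid W=w, A=a] = r(w,a),
$$
using the bridge equation. Applying $\Sigma_2$ to $\varphi_0$ and using the definition of $\Sigma_2$ as an outer product expectation,
$$
\Sigma_2 \varphi_0 = \E\big[\langle \varphi_0, \mu_{Z \mid W,A}(W,A) \otimes \phi_{\gA}(A)\rangle\, \mu_{Z \mid W,A}(W,A) \otimes \phi_{\gA}(A)\big] = \E[r(W,A)\mu_{Z \mid W,A}(W,A) \otimes \phi_{\gA}(A)],
$$
which equals $g_2$ by the first claim. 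Note this holds for any solution $\varphi_0$, since all solutions share the same image under the bridge equation.

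For the third claim, I rewrite $\gL^{2SR}$ in operator form. Using the reproducing property,
$$
\E[\varphi(Z,A) \mid W,A]^2 = \langle \varphi, \mu_{Z \mid W,A}(W,A) \otimes \phi_{\gA}(A)\rangle^2,
$$
so taking expectation and recognizing the outer-product structure of $\Sigma_2$ yields $\E[ \E[\varphi(Z,A) \mid W,A]^2] = \langle \varphi, \Sigma_2 \varphi\rangle_{\gH_{\gZ\gA}}$. Similarly, the linear term is $\langle \varphi, g_2\rangle_{\gH_{\gZ\gA}}$. Adding Tikhonov regularization gives, up to an additive constant,
$$
\gL^{2SR}_{\lambda_2}(\varphi) = \langle \varphi, (\Sigma_2 + \lambda_2 \text{Id}_{\gH_{\gZ\gA}})\varphi\rangle_{\gH_{\gZ\gA}} - 2\langle \varphi, g_2\rangle_{\gH_{\gZ\gA}} + \text{const.}
$$
This is a strictly convex quadratic in $\varphi$ since $\Sigma_2 + \lambda_2 \text{Id}$ is strictly positive (as $\lambda_2 > 0$), hence invertible. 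Setting the Fréchet derivative to zero gives $(\Sigma_2 + \lambda_2 \text{Id})\varphi = g_2$, i.e. $\varphi_{\lambda_2} = (\Sigma_2 + \lambda_2 \text{Id}_{\gH_{\gZ\gA}})^{-1} g_2$. No step should pose a real obstacle; the main care is in verifying Bochner integrability and clean identification of the quadratic form with $\Sigma_2$.
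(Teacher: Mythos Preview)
Your proposal is correct and follows essentially the same approach as the paper: a change of measure for the first claim, the reproducing property plus the bridge equation for the second, and rewriting the regularized loss as a strictly convex quadratic in operator form and setting the Fr\'echet derivative to zero for the third. You include a few extra remarks (Bochner integrability, strict convexity) that the paper leaves implicit, but the arguments are otherwise identical.
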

\begin{proof}
The first part follows from the same derivations as in  Eq.~(\ref{eq:MainTextSimplifiedLossATEPopulation}). For the second part, if $\varphi_0$ is such that $r(W,A) = \langle \varphi_0, \mu_{Z|W, A} (W, A)\otimes \phi_\gA(A) \rangle_{\gH_{\gZ\gA}} $, then,
$$
g_2 = \E_{W,A}[r(W, A) \mu_{Z|W, A} (W, A)\otimes \phi_\gA(A)] =  \Sigma_2 \varphi_0.
$$
For the final part, notice that for $\varphi \in \gH_{\gZ \gA}$, by the reproducing property,
$$
\begin{aligned}
 \gL^{2SR}_{\lambda_2}(\varphi) &= \E_{W, A}\big[ \langle {\varphi}, \mu_{Z|W, A} (W, A)\otimes \phi_\gA(A) \rangle_{\gH_{\gZ\gA}}^2 \big] - 2 \E_{W} \E_{A} \big[ \langle \varphi, \mu_{Z|W, A} (W, A)\otimes \phi_\gA(A) \rangle_{\gH_{\gZ\gA}}\big] + \lambda_2 \|\varphi\|^2_{\gH_{\gZ\gA}} \\  &+ \text{const} \\ &=   \langle \varphi, \Sigma_2 \varphi \rangle_{\gH_{\gZ\gA}} - 2 \langle \varphi, g_2 \rangle_{\gH_{\gZ\gA}} + \lambda_2 \|\varphi\|^2_{\gH_{\gZ\gA}}  + \text{const} \\ &= \langle \varphi, (\Sigma_2 + \lambda_2 \operatorname{Id}) \varphi \rangle_{\gH_{\gZ\gA}} - 2 \langle \varphi, g_2 \rangle_{\gH_{\gZ\gA}}  + \text{const.}
 \end{aligned}
$$
We conclude by setting the Fréchet derivative to $0$.
\end{proof}

We would now like to consider the empirical version of the previous loss. The standard kernel ridge regression estimator would obtain an estimator by replacing both the population covariance $\Sigma_2$ and the term $g_2$ by their empirical counterpart in Proposition~(\ref{prop:pop_estim_stage_2}). However, as $g_2$ takes a different form, we need to specify how we build its empirical counterpart.

\begin{itemize}
    \item Option 1: take the empirical counterpart of $ g_2 = \Sigma_2 \varphi_0$; this is not feasible as it would require the knowledge of a bridge function.
    \item Option 2: take the empirical counterpart of $g_2 =  \E[r(W,A)\mu_{Z \mid W,A} (W, A) \otimes \phi_{\gA}(A)]$; this would require the estimation of the density ratio $r(W,A)$ and would be inefficient in high dimension.
    \item Option 3 (\textbf{ours}): take the empirical counterpart of $g_2 = \E_{W} \E_{A} \left[\mu_{Z \mid W,A} \otimes \phi_{\gA}(A)\right]$; this is the estimator suggested in Section~(\ref{sec:main_algo}) that allows us to by-pass density ratio estimation.
\end{itemize}

We therefore introduce 

\begin{align*}
    &\bar{\Sigma}_{2,m} = \frac{1}{m} \sum_{i = 1}^m \Big( \mu_{Z|W, A}(\Tilde{w}_i, \Tilde{a}_i) \otimes \phi_\gA(\Tilde{a}_i)\Big) \otimes \Big( \mu_{Z|W, A}(\Tilde{w}_i, \Tilde{a}_i) \otimes \phi_\gA(\Tilde{a}_i)\Big),\\
    &\bar{g}_{2,m} = \frac{1}{m(m-1)}\sum_{\substack{i,j \\ i\ne j}}^m  \mu_{Z| W, A}(\Tilde{w}_j, \Tilde{a}_i) \otimes \phi_\gA(\Tilde{a}_i).
\end{align*}
However, as we do not directly observe the conditional mean embedding $\mu_{Z|W, A}$, we plug-in its approximation obtained in the first stage regression. This leads us to, 
\begin{align*}
    &\hat{\Sigma}_{2,m} = \frac{1}{m} \sum_{i = 1}^m \Big( \hat{\mu}_{Z|W, A}(\Tilde{w}_i, \Tilde{a}_i) \otimes \phi_\gA(\Tilde{a}_i)\Big) \otimes \Big( \hat{\mu}_{Z|W, A}(\Tilde{w}_i, \Tilde{a}_i) \otimes \phi_\gA(\Tilde{a}_i)\Big),\\
    &\hat{g}_{2, m} = \frac{1}{m(m-1)}\sum_{\substack{i,j \\ i\ne j}}^m  \hat{\mu}_{Z|W, A}(\Tilde{w}_j, \Tilde{a}_i) \otimes \phi_\gA(\Tilde{a}_i).
\end{align*}
Let us then introduce the following empirical losses, 
$$
    \bar{\gL}^{2SR}_m(\varphi) = \frac{1}{m} \sum_{i = 1}^m \langle \varphi, \mu_{Z|W, A} (\Tilde{w}_i, \Tilde{a}_i) \otimes \phi_\gA(\Tilde{a}_i) \rangle_{\gH_{\gZ\gA}}^2 - \frac{2}{m(m-1)} \sum_{\substack{i,j = 1 \\ j \ne i}}^m \langle \varphi, \mu_{Z|W, A} (\Tilde{w}_j, \Tilde{a}_i) \otimes \phi_\gA(\Tilde{a}_i) \rangle_{\gH_{\gZ\gA}} + \lambda_2 \|\varphi\|^2_{\gH_{\gZ\gA}},
$$
$$
    \hat{\gL}^{2SR}_{m}(\varphi) = \frac{1}{m} \sum_{i = 1}^m \langle \varphi, \hat{\mu}_{Z|W, A} (\Tilde{w}_i, , \Tilde{a}_i) \otimes \phi_\gA(\Tilde{a}_i) \rangle_{\gH_{\gZ\gA}}^2 - \frac{2}{m(m-1)} \sum_{\substack{i,j = 1 \\ j \ne i}}^m \langle \varphi, \hat{\mu}_{Z|W, A} (\Tilde{w}_j, \Tilde{a}_i) \otimes \phi_\gA(\Tilde{a}_i) \rangle_{\gH_{\gZ\gA}} + \lambda_2 \|\varphi_0\|^2_{\gH_{\gZ\gA}}.
$$
We can observe that the minimizers of the objective functions are given by
\begin{align*}
    \varphi_{\lambda_2} &= (\Sigma_2 + \lambda_2 I)^{-1} g_2 = \argmin_{\varphi \in \gH_{\gZ\gA}} {\gL}^{2SR}_{\lambda_2}(\varphi),\\
    \bar{\varphi}_{\lambda_2,m} &= (\bar{\Sigma}_{2,m} + \lambda_2 I)^{-1} \bar{g}_{2,m} = \argmin_{\varphi \in \gH_{\gZ\gA}} \bar{\gL}^{2SR}_m(\varphi),\\
    \hat{\varphi}_{\lambda_2,m} &= (\hat{\Sigma}_{2,m} + \lambda_2 I)^{-1} \hat{g}_{2,m} = \argmin_{\varphi \in \gH_{\gZ\gA}} \hat{\gL}_{m}^{2SR}(\varphi).\\
\end{align*}
$\hat{\varphi}_{\lambda_2,m}$ is the final estimator presented in Section~(\ref{sec:main_algo}). In S.M. (Sec. ~\ref{sec:second_stage_ate}), we show the convergence of $\hat{\varphi}_{\lambda_2,m}$ to the minimum norm bridge $\bar{\varphi}_0$ introduced in Definition~(\ref{def:min_norm_bridge}). $\bar{\varphi}_{\lambda_2,m}$ and $\varphi_{\lambda_2}$ are introduced for theoretical reasons. Indeed, we will consider the following decomposition, 
\begin{align*}
    \|\hat{\varphi}_{\lambda_2, m} - \bar{\varphi}_0\|_{\gH_{\gZ\gA}} \le \|\hat{\varphi}_{\lambda_2, m} - \bar{\varphi}_{\lambda_2,m}\|_{\gH_{\gZ\gA}} + \|\bar{\varphi}_{\lambda_2,m} -  \varphi_{\lambda_2}\|_{\gH_{\gZ\gA}}+  \| \varphi_{\lambda_2} - \bar{\varphi}_0\|_{\gH_{\gZ\gA}}.
\end{align*}

\paragraph{Third Stage regression.} The estimator from stage 3 aims at estimating $\Psi(A) := \E[Y \phi_\gZ(Z) | A]$. We recall that under the well-specifiedness assumption (Assumption~(\ref{asst:well_specifiedness}-3)), we have $\Psi(\cdot) = C_{YZ|A}\phi_\gA(\cdot)$, with $C_{YZ|A} \in S_2(\gH_{\gA}, \gH_\gZ)$.

Given the regularization parameter $\lambda_3 > 0$, we recall that the objective to learn $C_{YZ|A}$ is,
\begin{align*}
    \hat{\gL}_3(C) = \frac{1}{t} \sum_{i = 1}^t \|\bar{y}_i\phi_\gZ(\bar{z}_i) - C\phi_\gA(\bar{a}_i))\|_{\gH_\gZ}^2 + \lambda_3 \|C\|_{S_2(\gH_{\gA}, \gH_\gZ)}^2, \qquad C \in S_2(\gH_{\gA}, \gH_\gZ),
\end{align*}
where $\{(\bar{a}_i, \bar{z}_i, \bar{y}_i)\}_{i=1}^t$ can re-use data from stage 1 or 2 or both. The minimizer is denoted as,
\begin{align*}
    \hat{C}_{YZ|A} = \argmin_{C \in S_2(\gH_{\gA}, \gH_\gZ)} \hat{\gL}_{3}(C),
\end{align*}
leading to 
\begin{align*}
    \hat{\Psi}(a) = \hat{C}_{YZ|A}\phi_\gA(a), \qquad a \in \gA.
\end{align*}

We note that $\Psi(A)$ can be interpreted as the conditional kernel mean embedding of the random variable $(Y,Z)$ given $A$ with the following kernel: $k_{YZ}((y,z),(y',z')) = yy' k_{\gZ}(z,z')$, $(y,z), (y',z') \in \R \times \gZ$. Indeed, the canonical feature map of $k_{YZ}$ is $y\phi_{\gZ}(z)$ for $(y,z) \in \R \times \gZ$. We could therefore proceed as for stage 1 and apply results from \citep{li2022optimal}. However, the analysis of \citep{li2022optimal} would require $k_{YZ}$ to be bounded on the support of $(Y,Z)$ and therefore requires $Y$ to be almost surely bounded. Instead, in Section~(\ref{sec:third_stage_ate}), we apply results from \citep{li2024towards} which generalize consistency guarantees for conditional mean operator learning to general vector-valued regression. The results applies with the weaker assumption that $Y$ is sub-exponential (Assumption~(\ref{assum:Stage1ConsistencyKernelAssumptions}), Equation ~(\ref{eq:mom_stage_3})).

\paragraph{Uniform consistency guarantees for ATE.}

We recall that after obtaining the estimators $\hat{\varphi}_{\lambda_2,m}$ from stage 2 and $\hat \Psi$ from stage 3, we have
$$
\hat{f}_{ATE}(\cdot) = \langle \hat{\varphi}_{\lambda_2,m}, \hat{\Psi}(\cdot) \otimes \phi_\gA(\cdot)\rangle_{\gH_{\gZ\gA}}.
$$
On the other hand, under Assumption~(\ref{asst:well_specifiedness}),
$$
f_{ATE}(\cdot) = \langle \bar{\varphi}_{0}, \Psi(\cdot) \otimes \phi_\gA(\cdot)\rangle_{\gH_{\gZ\gA}}.
$$
For any $a \in \gA$, we apply the following decomposition,
    \begin{align}
        |&\hat{f}_{ATE}(a) - f_{ATE}(a)| = |\langle \hat{\varphi}_{\lambda_2,m}, \hat{\Psi}(a) \otimes \phi_\gA(a)\rangle_{\gH_{\gZ\gA}} - \langle \bar{\varphi}_{0}, \Psi(a) \otimes \phi_\gA(a) \rangle_{\gH_{\gZ\gA}}| \nonumber\\
        &= |\langle \hat{\varphi}_{\lambda_2,m}, (\hat{\Psi} - \Psi)(a) \otimes \phi_\gA(a)\rangle_{\gH_{\gZ\gA}} + \langle (\hat{\varphi}_{\lambda_2,m} - \bar{\varphi}_{0}), \Psi(a) \otimes \phi_\gA(a) \rangle_{\gH_{\gZ\gA}}| \nonumber\\
        &= |\langle \hat{\varphi}_{\lambda_2,m} - \bar{\varphi}_{0}, (\hat{\Psi} - \Psi)(a) \otimes \phi_\gA(a)\rangle_{\gH_{\gZ\gA}} + \langle \bar{\varphi}_{0}, (\hat{\Psi} - \Psi)(a) \otimes \phi_\gA(a) \rangle_{\gH_{\gZ\gA}} + \langle (\hat{\varphi}_{\lambda_2,m} - \bar{\varphi}_{0}), \Psi(a) \otimes \phi_\gA(a) \rangle_{\gH_{\gZ\gA}}| \nonumber\\
         &\le \kappa\left(\|\hat{\varphi}_{\lambda_2,m} - \bar{\varphi}_{0}\|_{\gH_{\gZ\gA}} \|\hat{\Psi}(a) - \Psi(a)\|_{\gH_{\gZ}} + \|\bar{\varphi}_{0}\| \| \hat{\Psi}(a) - \Psi(a) \|_{\gH_{\gZ\gA}} + \| \hat{\varphi}_{\lambda_2,m} - \bar{\varphi}_{0} \|_{\gH_{\gZ\gA}} \|\Psi(a) \|_{\gH_{\gZ}}\right) \label{eq:ate_decomposition}
    \end{align}

By plugging the consistency results from stage 2 and 3, we obtain the final bound that leads to Theorem~(\ref{th:final_rate_ate_main}). See S.M. (Sec. ~\ref{sec:final_stage_ate}) for details. In the next sections, we detail each step of the proof. 

\subsubsection{First-Stage Regression Consistency Result}  \label{sec:first_stage_ate}

We adapt \citet[][Theorem~2]{li2022optimal} to our setting.

\begin{theorem}[Theorem~2 \cite{li2022optimal}] \label{th:CME_rate}
    Suppose Assumptions~(\ref{asst:well_specifiedness}-1), (\ref{assum:Stage1ConsistencyKernelAssumptions}), 
    (\ref{asst:src_all_stages}-1) and (\ref{asst:evd_all_stages}-1) hold and take $\lambda_{1} = \Theta \left(n^{-\frac{1}{\beta_1 + p_1}}\right)$. There is a constant $J_1 > 0$ independent of $n \geq 1$ and $\delta \in (0,1)$ such that \[\left\|\hat{C}_{Z | W, A} - {C}_{Z | W, A}\right\|_{S_2(\gH_{\gW\gA}, \gH_\gZ)} \leq  J_1 \log(4/\delta) \left(\frac{1}{\sqrt{n}}\right)^{\frac{\beta_1-1}{\beta_1 + p_1}} =: r_1(\delta, n, \beta_1, p_1),\] is satisfied for sufficiently large $n \geq 1$ with probability at least $1-\delta$.
\end{theorem}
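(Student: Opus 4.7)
The plan is to obtain this result as a direct corollary of \citet[Theorem~2]{li2022optimal}, by showing that our setting fits into their framework and that their assumptions are implied by ours. The first step is to rewrite our first-stage problem as a vector-valued kernel ridge regression in their sense. Specifically, the input space is $\gW \times \gA$ with product kernel $k_{\gW\gA}((w,a),(w',a')) = k_\gW(w,w')k_\gA(a,a')$ and RKHS $\gH_{\gW\gA}$, the output Hilbert space is $\gH_\gZ$, the observed features are $\phi_\gZ(Z)$, and the target is the conditional mean embedding $\mu_{Z\mid W,A}(w,a) = \E[\phi_\gZ(Z) \mid W=w, A=a]$. Under Assumption~(\ref{asst:well_specifiedness}-1), $\mu_{Z\mid W,A}$ lies in the vector-valued RKHS induced by the operator-valued kernel $K((w,a),(w',a')) = k_{\gW\gA}((w,a),(w',a'))\,\mathrm{Id}_{\gH_\gZ}$, which is the well-specified regime of \citep{li2022optimal}.

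Second, I would check the remaining assumptions of \citet[Theorem~2]{li2022optimal} one by one. The boundedness and continuity of $k_\gW, k_\gA, k_\gZ$ in Assumption~(\ref{assum:Stage1ConsistencyKernelAssumptions}) together with $\gW,\gA,\gZ$ being Polish spaces imply the measurability and boundedness conditions on the scalar and operator-valued kernels (boundedness of the product kernel follows from $\sup_{w,a}\|\phi_{\gW\gA}(w,a)\|_{\gH_{\gW\gA}} \le \kappa^2$, and $\sup_z\|\phi_\gZ(z)\|_{\gH_\gZ}\le \kappa$ provides the uniform bound on the outputs). Our source condition Assumption~(\ref{asst:src_all_stages}-1) with parameter $\beta_1 \in (1,3]$ corresponds exactly to the smoothness parameter $\beta$ of \citep{li2022optimal} (their range also covers $(1,3]$, since they obtain improved rates beyond $\beta=2$ via their refined approximation analysis). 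Finally, our eigenvalue decay Assumption~(\ref{asst:evd_all_stages}-1) with parameter $p_1 \in (0,1]$ matches their effective dimension assumption with $b = 1/p_1$.

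Third, with all assumptions transferred, I would invoke their result with the prescribed choice of the regularization parameter $\lambda_1 = \Theta(n^{-1/(\beta_1+p_1)})$, which is the optimal scaling derived from balancing bias and variance terms in their bound. This yields the claimed rate
\begin{equation*}
\|\hat{C}_{Z\mid W,A} - C_{Z\mid W,A}\|_{S_2(\gH_{\gW\gA},\gH_\gZ)} \le J_1 \log(4/\delta)\, n^{-\frac{1}{2}\frac{\beta_1-1}{\beta_1+p_1}}
\end{equation*}
with probability at least $1-\delta$, for $n$ large enough (the threshold on $n$ comes from concentration requirements in \citep{li2022optimal}), and with a constant $J_1$ that depends only on $\kappa$, $B_1$, $\beta_1$, $c_1$, $p_1$ but not on $n$ or $\delta$.

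The main obstacle is purely bookkeeping: reconciling the notation between our setting and \citep{li2022optimal}, in particular checking that their source condition on the (operator-valued) regression function translates into our Hilbert--Schmidt source condition on $C_{Z\mid W,A}$ with respect to $\Sigma_1$. Since $\Sigma_1 = \E[\phi_{\gW\gA}(W,A)\otimes \phi_{\gW\gA}(W,A)]$ is precisely the input covariance operator in their setup and the target is $C_{Z\mid W,A}$ viewed as an element of $S_2(\gH_{\gW\gA},\gH_\gZ)$, the two source conditions coincide. No genuinely new technical work is required beyond this translation.
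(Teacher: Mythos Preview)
Your approach is essentially the same as the paper's: both obtain the result by directly invoking \citet[Theorem~2]{li2022optimal} after verifying that the first-stage regression fits their vector-valued regression framework and that the assumptions transfer. One factual point in your proposal needs correction, however. You write that \citet{li2022optimal} already cover the smoothness range $\beta_1 \in (1,3]$; the paper's proof explicitly states the opposite, namely that \citet[Theorem~2]{li2022optimal} is valid only for $\beta_1 \in (1,2]$, and that the extension to $(1,3]$ requires the separate observation (Remark~\ref{ref:higher_saturation}, drawing on \citet[Remark~7]{meunier2023nonlinear}) that the Tikhonov approximation error measured in RKHS norm saturates at $\beta=3$ rather than at $\beta=2$ as in the $L_2$ norm. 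So your bookkeeping is right in spirit, but you should not attribute the extended range to \citet{li2022optimal} themselves; instead, you must supply or cite the refined approximation-error bound to justify the full interval $(1,3]$.
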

\begin{proof}
    We apply \citet[][Theorem~2-case 2.]{li2022optimal}, with $\gamma = 1$ which corresponds to the Hilbert-Schmidt norm $S_2(\gH_{\gW\gA}, \gH_\gZ)$. As we focus on the well-specified setting with $\beta_1 \geq 1,$ we can apply case 2. of \citet[][Theorem~2]{li2022optimal} as in their paper $\alpha \leq 1$, hence $\beta_1 + p_1 \geq \alpha$. Note that \citet[][Theorem~2]{li2022optimal} applies under the assumption that $k_{\gZ}$ is bounded, which is the case under Assumption~\ref{assum:Stage1ConsistencyKernelAssumptions}. We note that in \citet[][Theorem~2]{li2022optimal}, the bound is valid for $\beta_1 \in (1,2]$ while we allow for $\beta_1 \in (1,3]$ (see Remark~\ref{ref:higher_saturation} below).      
\end{proof}


\begin{corollary} \label{cor:CME_rate_pointwise}
Under the same assumptions as Theorem~\ref{th:CME_rate}, with $\lambda_{1} = \Theta \left(n^{-\frac{1}{\beta_1 + p_1}}\right)$, for any $\delta \in (0, 1)$, the following holds with probability at least $1 - \delta$:
    \begin{align*}
        \sup_{(w, a) \in \gW \times \gA}\|\hat{\mu}_{Z|W, A}(w, a) - {\mu}_{Z|W, A}(w, a)\|_{\gH_\gZ} \le \kappa^2 r_1(\delta, n, \beta_1, p_1).
    \end{align*}
\end{corollary}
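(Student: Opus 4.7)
The plan is to convert the operator-norm (Hilbert-Schmidt) consistency bound from Theorem~\ref{th:CME_rate} into a pointwise bound on the conditional mean embeddings by exploiting the well-specifiedness assumption (Assumption~(\ref{asst:well_specifiedness}-1)) together with boundedness of the kernels.

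First, I would recall that under Assumption~(\ref{asst:well_specifiedness}-1), both the estimator and the population object admit an operator representation: for every $(w,a) \in \gW \times \gA$,
\begin{align*}
\hat{\mu}_{Z|W,A}(w,a) &= \hat{C}_{Z|W,A}\big(\phi_\gW(w) \otimes \phi_\gA(a)\big), \\
\mu_{Z|W,A}(w,a) &= C_{Z|W,A}\big(\phi_\gW(w) \otimes \phi_\gA(a)\big).
\end{align*}
Subtracting and using linearity gives the pointwise identity
\begin{align*}
\hat{\mu}_{Z|W,A}(w,a) - \mu_{Z|W,A}(w,a) = \big(\hat{C}_{Z|W,A} - C_{Z|W,A}\big)\big(\phi_\gW(w) \otimes \phi_\gA(a)\big).
\end{align*}

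Next, I would apply the standard inequality that the operator norm is dominated by the Hilbert-Schmidt norm, and that $\|T x\|_{\gH_\gZ} \le \|T\|_{S_2(\gH_{\gW\gA},\gH_\gZ)} \|x\|_{\gH_{\gW\gA}}$ for any $T \in S_2(\gH_{\gW\gA}, \gH_\gZ)$. Combined with the factorization $\|\phi_\gW(w) \otimes \phi_\gA(a)\|_{\gH_{\gW\gA}} = \|\phi_\gW(w)\|_{\gH_\gW}\|\phi_\gA(a)\|_{\gH_\gA}$ (from the tensor-product Hilbert space structure) and the kernel boundedness in Assumption~(\ref{assum:Stage1ConsistencyKernelAssumptions}-ii), this yields, for every $(w,a)$,
\begin{align*}
\|\hat{\mu}_{Z|W,A}(w,a) - \mu_{Z|W,A}(w,a)\|_{\gH_\gZ} \le \kappa^2 \, \|\hat{C}_{Z|W,A} - C_{Z|W,A}\|_{S_2(\gH_{\gW\gA},\gH_\gZ)}.
\end{align*}

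Finally, I would invoke Theorem~\ref{th:CME_rate} with the prescribed choice $\lambda_1 = \Theta\big(n^{-1/(\beta_1+p_1)}\big)$: with probability at least $1-\delta$ the right-hand side is bounded by $\kappa^2 r_1(\delta, n, \beta_1, p_1)$. Since the right-hand side does not depend on $(w,a)$, I can take the supremum on the left, which gives exactly the claimed uniform bound. There is no real obstacle here; the proof is essentially a two-line operator-norm calculation followed by an application of the prior theorem, and the only subtlety worth flagging is that the tensor-product norm bound uses the isometric isomorphism $\gH_\gW \otimes \gH_\gA \cong \gH_{\gW\gA}$ already invoked in Section~\ref{sec:AlternativeProxyAlgorithms}.
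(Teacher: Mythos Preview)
Your proposal is correct and follows essentially the same approach as the paper's own proof: write the pointwise difference as the difference of operators applied to the tensor feature, bound by the Hilbert--Schmidt norm times the feature norm, use the tensor factorization and kernel boundedness to get $\kappa^2$, then invoke Theorem~\ref{th:CME_rate}. The only cosmetic difference is that the paper first records the $\kappa^2$ bound on $\|\phi_\gW(w)\otimes\phi_\gA(a)\|$ and then does the operator step, whereas you reverse the order.
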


\begin{proof}
    For any $(w, a) \in \gW \times \gA$, under Assumption~(\ref{assum:Stage1ConsistencyKernelAssumptions}), we have 
    \begin{align*}
        \|\phi_\gW(w) \otimes \phi_\gA(a)\|_{\gH_{\gW\gA}} = \|\phi_\gW(w)\|_{\gH_\gW} \|\phi_\gA(a)\|_{\gH_\gA} \le \kappa^2.
    \end{align*}
    As a result, we observe that,
    \begin{align*}
        \|\hat{\mu}_{Z|W, A}(w, a) - {\mu}_{Z|W, A}(w, a)\|_{\gH_\gZ} &= \|(\hat{C}_{Z | W, A} - {C}_{Z | W, A})\phi_\gW(w) \otimes \phi_\gA(a)\|_{\gH_\gZ}\\
        & \le \|\hat{C}_{Z | W, A} - {C}_{Z | W, A}\|_{S_2(\gH_{\gW\gA}, \gH_\gZ)} \|\phi_\gW(w) \otimes \phi_\gA(a)\|_{\gH_{\gW\gA}}\\
        &\le \kappa^2 \|\hat{C}_{Z | W, A} - {C}_{Z | W, A}\|_{S_2(\gH_{\gW\gA}, \gH_\gZ)},\\
    \end{align*}
    and the conclusion follows from Theorem~(\ref{th:CME_rate}).
\end{proof}

\subsubsection{Second-Stage Regression Consistency Results} \label{sec:second_stage_ate}
We recall that we consider the following decomposition,
\begin{align*}
    \|\hat{\varphi}_{\lambda_2, m} - \bar{\varphi}_0\|_{\gH_{\gZ\gA}} \le \|\hat{\varphi}_{\lambda_2, m} - \bar{\varphi}_{\lambda_2,m}\|_{\gH_{\gZ\gA}} + \|\bar{\varphi}_{\lambda_2,m} -  \varphi_{\lambda_2}\|_{\gH_{\gZ\gA}}+  \| \varphi_{\lambda_2} - \bar{\varphi}_0\|_{\gH_{\gZ\gA}}.
\end{align*}
We first consider an upper bound for $\|\varphi_{\lambda_2} - \bar{\varphi}_0\|_{\gH_{\gZ\gA}}$. 
\begin{lemma}
    Suppose that Assumption~(\ref{asst:src_all_stages}-2.) holds with parameter $\beta_2 \in (1,3]$. Then, for any $\lambda_2 > 0$,
    \begin{align*}
        \|\varphi_{\lambda_2} - \bar{\varphi}_0\|_{\gH_{\gZ\gA}} \le B_2\lambda_2^{\frac{\beta_2-1}{2}}.
    \end{align*}
    \label{lemma:SecondStageBiasBound}
\end{lemma}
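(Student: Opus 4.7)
The plan is to reduce everything to spectral calculus on the self-adjoint positive operator $\Sigma_2$. First I would invoke Proposition~(\ref{prop:pop_estim_stage_2}), applied to the minimum-norm solution $\bar\varphi_0$, to rewrite the population minimiser as
\begin{align*}
\varphi_{\lambda_2} \;=\; (\Sigma_2 + \lambda_2 I)^{-1} g_2 \;=\; (\Sigma_2 + \lambda_2 I)^{-1}\Sigma_2\,\bar\varphi_0.
\end{align*}
A direct algebraic manipulation,
\begin{align*}
(\Sigma_2 + \lambda_2 I)^{-1}\Sigma_2 - I \;=\; (\Sigma_2 + \lambda_2 I)^{-1}\bigl(\Sigma_2 - (\Sigma_2+\lambda_2 I)\bigr) \;=\; -\lambda_2 (\Sigma_2 + \lambda_2 I)^{-1},
\end{align*}
then yields the clean identity
\begin{align*}
\varphi_{\lambda_2} - \bar\varphi_0 \;=\; -\lambda_2 (\Sigma_2 + \lambda_2 I)^{-1}\bar\varphi_0.
\end{align*}

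Next I would use Proposition~(\ref{prop:min_norm_charac2}), which tells us $\bar\varphi_0 = \Sigma_2^{(\beta_2-1)/2}\,\Sigma_2^{-(\beta_2-1)/2}\bar\varphi_0$, to factor the right-hand side and expose the source condition:
\begin{align*}
\varphi_{\lambda_2} - \bar\varphi_0 \;=\; -\lambda_2 (\Sigma_2 + \lambda_2 I)^{-1}\Sigma_2^{(\beta_2-1)/2}\bigl(\Sigma_2^{-(\beta_2-1)/2}\bar\varphi_0\bigr).
\end{align*}
Taking norms and using sub-multiplicativity together with Assumption~(\ref{asst:src_all_stages}-ii) gives
\begin{align*}
\|\varphi_{\lambda_2} - \bar\varphi_0\|_{\gH_{\gZ\gA}} \;\le\; B_2\,\bigl\|\,\lambda_2(\Sigma_2+\lambda_2 I)^{-1}\Sigma_2^{(\beta_2-1)/2}\bigr\|_{\mathrm{op}}.
\end{align*}

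The only remaining step, which is the technical crux of the saturation phenomenon at $\beta_2 = 3$, is the scalar spectral bound
\begin{align*}
\sup_{t\ge 0}\;\frac{\lambda_2\, t^{(\beta_2-1)/2}}{t+\lambda_2} \;\le\; \lambda_2^{(\beta_2-1)/2}, \qquad \beta_2 \in (1,3].
\end{align*}
Setting $\alpha = (\beta_2-1)/2 \in (0,1]$, this follows by splitting cases: for $t\le \lambda_2$, the quotient is at most $t^\alpha \le \lambda_2^\alpha$, and for $t \ge \lambda_2$ it is at most $\lambda_2 t^{\alpha-1}\le \lambda_2^\alpha$ since $\alpha - 1 \le 0$. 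Applying the spectral theorem to $\Sigma_2$ then transfers this scalar bound to the operator norm, and combining with the previous display yields $\|\varphi_{\lambda_2} - \bar\varphi_0\|_{\gH_{\gZ\gA}} \le B_2\lambda_2^{(\beta_2-1)/2}$.

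The main (mild) obstacle is simply to justify the functional calculus carefully when $\Sigma_2$ is only positive semi-definite with a non-trivial null space; this is handled by the Moore–Penrose convention already adopted in the paper, and by the fact that $\bar\varphi_0 \in \operatorname{null}(\Sigma_2)^{\perp}$ (Proposition~(\ref{prop:min_norm_charac})), so $\Sigma_2^{-(\beta_2-1)/2}\bar\varphi_0$ is well-defined as an element of $\gH_{\gZ\gA}$ and the factorisation above is valid. No other delicate estimates are needed.
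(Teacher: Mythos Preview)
Your proof is correct and follows essentially the same route as the paper's: both use Proposition~(\ref{prop:pop_estim_stage_2}) together with Proposition~(\ref{prop:min_norm_charac2}) to obtain $\varphi_{\lambda_2}-\bar\varphi_0=-\lambda_2(\Sigma_2+\lambda_2 I)^{-1}\Sigma_2^{(\beta_2-1)/2}\bigl(\Sigma_2^{-(\beta_2-1)/2}\bar\varphi_0\bigr)$, then bound the operator norm via the scalar inequality $\sup_{t\ge 0} t^\alpha/(t+\lambda_2)\le\lambda_2^{\alpha-1}$ for $\alpha\in(0,1]$. The only cosmetic difference is that the paper cites this last scalar bound as Lemma~(\ref{lma:steinwart_lemma}) rather than proving it inline as you do.
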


\begin{proof}
    We saw in Proposition~(\ref{prop:pop_estim_stage_2}) that
    $$
         \varphi_{\lambda_2} = \left(\Sigma_2 + \lambda_2 \operatorname{Id} \right)^{-1}g_2 = \left(\Sigma_2 + \lambda_2 \operatorname{Id} \right)^{-1}\Sigma_2\bar{\varphi}_0 = \bar{\varphi}_0 - \lambda_2 \left(\Sigma_2 + \lambda_2 \operatorname{Id} \right)^{-1} \bar{\varphi}_0.
    $$
    Therefore, under Assumption~(\ref{asst:src_all_stages}-2.), 
    $$
    \left\|\varphi_{\lambda_2} - \bar{\varphi}_0\right\|_{\gH_{\gZ\gA}} = \lambda_2  \left\|\left(\Sigma_2 + \lambda_2\operatorname{Id} \right)^{-1} \bar{\varphi}_0\right\|_{\gH_{\gZ\gA}} \leq B_2 \lambda_2  \left\|\left(\Sigma_2 + \lambda_2\operatorname{Id} \right)^{-1}\Sigma_2^{\frac{\beta_2-1}{2}} \right\|_{op},
    $$
    where we used Proposition~(\ref{prop:min_norm_charac2}) and $\|.\|_{op}$ denotes the \emph{operator norm}.. Note that, by Lemma~(\ref{lma:steinwart_lemma}),
    $$
    \left\|\left(\Sigma_2 + \lambda_2\operatorname{Id} \right)^{-1}\Sigma_2^{\frac{\beta_2-1}{2}} \right\|_{op} = \sup_{i \geq 1} \frac{\lambda_{2,i}^{\frac{\beta_2-1}{2}}}{\lambda_{2,i} + \lambda_2} \leq \lambda_2^{\frac{\beta_2-1}{2} - 1},
    $$
    as long as $\frac{\beta_2-1}{2} \in (0,1]$, i.e. $\beta_2 \in (1,3]$. By merging the bounds, we obtain the final result.  
\end{proof}

\begin{remark} \label{ref:higher_saturation}
    The commonly known saturation effect of Tikhonov regularization comes for the approximation error bound. As demonstrated above, the range of smoothness is limited to $\beta_2 \leq 3$. However, past works on kernel ridge regression (e.g. \cite{fischer2020sobolev}) or kernel PCL (e.g. \citet{Mastouri2021ProximalCL, singh2023kernelmethodsunobservedconfounding}) observed a saturation effect at $\beta_2 = 2$. It was observed in \citet[][Remark 7 \& Proposition 7]{meunier2023nonlinear} -- see also \cite{blanchard2018optimal} -- that saturation happens at $\beta_2=2$ when we measure the error in the $L_2-$norm while saturation happens at $\beta_2=3$ when we measure the error in the RKHS norm, as seen in the previous proof. As the error is measured in RKHS norm in both works \citet{Mastouri2021ProximalCL, singh2023kernelmethodsunobservedconfounding}, they can apply the same reasoning to extend their results from the range $\beta_2 \in (1,2]$ to the range $\beta_2 \in (1,3]$.
\end{remark}

We will need the following result to pursue our proof. 

\begin{lemma}
For any $\lambda_2 > 0$, $\|\varphi_{\lambda_2}\|_{\gH_{\gZ\gA}} \le \|\bar{\varphi}_0\|_{\gH_{\gZ\gA}}$.
    \label{lemma:varphi0lessthanvarphi}
\end{lemma}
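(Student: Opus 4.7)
The plan is to exploit the closed-form expression $\varphi_{\lambda_2} = (\Sigma_2 + \lambda_2 \operatorname{Id})^{-1} \Sigma_2 \bar{\varphi}_0$ from Proposition~(\ref{prop:pop_estim_stage_2}), together with the fact that $\bar{\varphi}_0 \in \operatorname{null}(\Sigma_2)^{\perp}$ from Proposition~(\ref{prop:min_norm_charac}). The operator $(\Sigma_2 + \lambda_2 \operatorname{Id})^{-1} \Sigma_2$ is a spectral filter that shrinks each component and therefore cannot increase the norm of any vector lying in the range-closure of $\Sigma_2$.

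The main step is a spectral argument. Since $\Sigma_2$ is a self-adjoint, positive, trace-class operator on $\gH_{\gZ\gA}$, it admits a spectral decomposition $\Sigma_2 = \sum_{i \geq 1} \lambda_{2,i} e_i \otimes e_i$ with orthonormal eigenvectors $(e_i)_{i \geq 1}$ spanning $\operatorname{null}(\Sigma_2)^{\perp}$. Because $\bar{\varphi}_0 \in \operatorname{null}(\Sigma_2)^{\perp}$, I would expand $\bar{\varphi}_0 = \sum_{i \geq 1} c_i e_i$ with $c_i = \langle \bar{\varphi}_0, e_i \rangle_{\gH_{\gZ\gA}}$, so that $\|\bar{\varphi}_0\|_{\gH_{\gZ\gA}}^2 = \sum_{i \geq 1} c_i^2$. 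Applying the functional calculus,
\begin{equation*}
\varphi_{\lambda_2} = (\Sigma_2 + \lambda_2 \operatorname{Id})^{-1} \Sigma_2 \bar{\varphi}_0 = \sum_{i \geq 1} \frac{\lambda_{2,i}}{\lambda_{2,i} + \lambda_2} c_i e_i,
\end{equation*}
which is well-defined since each filter factor lies in $[0,1)$.

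From here, I would conclude by noting that for every $i \geq 1$, the eigenvalues satisfy $\lambda_{2,i} \geq 0$ and $\lambda_2 > 0$, so $\lambda_{2,i}/(\lambda_{2,i} + \lambda_2) \leq 1$. Parseval's identity then gives
\begin{equation*}
\|\varphi_{\lambda_2}\|_{\gH_{\gZ\gA}}^2 = \sum_{i \geq 1} \left(\frac{\lambda_{2,i}}{\lambda_{2,i} + \lambda_2}\right)^2 c_i^2 \leq \sum_{i \geq 1} c_i^2 = \|\bar{\varphi}_0\|_{\gH_{\gZ\gA}}^2,
\end{equation*}
which yields the claim after taking square roots.

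There is no real obstacle here: the proof is essentially a one-line spectral calculus bound once one invokes the two earlier propositions. The only subtlety to be careful about is that we must use $\bar{\varphi}_0 \in \operatorname{null}(\Sigma_2)^{\perp}$ (rather than an arbitrary bridge solution) so that the spectral expansion in the range of $\Sigma_2$ captures $\bar{\varphi}_0$ in full; otherwise the inequality could fail on the kernel component of a generic bridge. Equivalently, one could invoke Lemma~(\ref{lma:steinwart_lemma}) to bound $\|(\Sigma_2 + \lambda_2 \operatorname{Id})^{-1}\Sigma_2\|_{op} \leq 1$ restricted to $\operatorname{null}(\Sigma_2)^{\perp}$, mirroring the style used in the proof of Lemma~(\ref{lemma:SecondStageBiasBound}).
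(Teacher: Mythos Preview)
Your proof is correct and is essentially the same as the paper's, which records the one-line operator-norm bound
\[
\|\varphi_{\lambda_2}\|_{\gH_{\gZ\gA}} = \|(\Sigma_2 + \lambda_2 \operatorname{Id})^{-1}\Sigma_2\,\bar{\varphi}_0\|_{\gH_{\gZ\gA}} \le \|(\Sigma_2 + \lambda_2 \operatorname{Id})^{-1}\Sigma_2\|_{op}\,\|\bar{\varphi}_0\|_{\gH_{\gZ\gA}} \le \|\bar{\varphi}_0\|_{\gH_{\gZ\gA}},
\]
i.e.\ exactly the alternative you mention at the end. One small remark: the caution about needing $\bar{\varphi}_0 \in \operatorname{null}(\Sigma_2)^{\perp}$ is unnecessary here, since $\|(\Sigma_2 + \lambda_2 \operatorname{Id})^{-1}\Sigma_2\|_{op} \le 1$ holds on the whole space (any kernel component is annihilated by $\Sigma_2$ and can only enlarge the right-hand side), so the inequality cannot fail even for a generic bridge.
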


\begin{proof}
We saw in Proposition~\ref{prop:pop_estim_stage_2} that
    $$
         \left\|\varphi_{\lambda_2}\right\|_{\gH_{\gZ\gA}}  = \left\|\left(\Sigma_2 + \lambda_2 \operatorname{Id} \right)^{-1}\Sigma_2\bar{\varphi}_0\right\|_{\gH_{\gZ\gA}} \leq  \left\|\left(\Sigma_2 + \lambda_2 \operatorname{Id} \right)^{-1}\Sigma_2\right\|_{op}\left\|\bar{\varphi}_0\right\|_{\gH_{\gZ\gA}}\leq \left\|\bar{\varphi}_0\right\|_{\gH_{\gZ\gA}}.
    $$

\end{proof}

Our proof of the convergence result relies on an Hoeffding concentration inequality (Corollary~\ref{corr:hoeffding}) and a Bernstein concentration inequality (Theorem~\ref{theo:bernstein}) for Hilbert space-valued random variables. We introduce the effective dimension for the stage 2 error: for $\lambda_2 > 0$, $\gN(\lambda_2) := \operatorname{Tr}((\Sigma_2 + \lambda_2 \operatorname{Id})^{-1}\Sigma_2)$ \cite{caponnetto2007optimal}.

\begin{proposition}[Lemma 11 \& Lemma 13 \cite{fischer2020sobolev}]  \label{prop:eff_dim_properties}
Under Assumption~(\ref{asst:evd_all_stages}-2), there is a constant $D>0$ such that the following inequality is satisfied, for $\lambda_2 > 0$, $\gN(\lambda_2) \leq D\lambda_2^{-p_2}$. Furthermore, we have the equality,
$$
\E\left[\left\|(\Sigma_2 + \lambda_2 \operatorname{Id})^{-1/2} \mu_{Z|W, A} (W, A) \otimes \phi_\gA(A)\right\|^2_{\gH_{\gZ\gA}}\right] = \gN(\lambda_2).
$$
\end{proposition}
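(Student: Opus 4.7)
The plan is to handle the two assertions separately, since they are essentially independent.

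For the effective dimension bound, I would first diagonalize $\Sigma_2$. Since $\Sigma_2$ is a positive, self-adjoint, trace-class operator on $\gH_{\gZ\gA}$ under Assumption (\ref{assum:Stage1ConsistencyKernelAssumptions}), it admits a spectral decomposition with eigenvalues $(\lambda_{2,i})_{i \geq 1}$ and an orthonormal eigenbasis. The function $x \mapsto x/(x + \lambda_2)$ is well defined and continuous on $[0, \infty)$, so by the spectral theorem
\begin{align*}
    \gN(\lambda_2) = \operatorname{Tr}\bigl((\Sigma_2 + \lambda_2 \operatorname{Id})^{-1}\Sigma_2\bigr) = \sum_{i \geq 1} \frac{\lambda_{2,i}}{\lambda_{2,i} + \lambda_2}.
\end{align*}
The task then reduces to bounding this series under Assumption (\ref{asst:evd_all_stages}-2), namely $\lambda_{2,i} \le c_2 i^{-1/p_2}$.

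To bound the series, I would choose a cutoff index $i^\star = \lceil (c_2/\lambda_2)^{p_2} \rceil$. For $i \le i^\star$, I bound each summand crudely by $1$, contributing $O((c_2/\lambda_2)^{p_2})$. For $i > i^\star$, the eigenvalue satisfies $\lambda_{2,i} \le \lambda_2$, so each summand is bounded by $\lambda_{2,i}/\lambda_2 \le (c_2/\lambda_2) i^{-1/p_2}$, and the tail is controlled by an integral comparison $\int_{i^\star}^\infty t^{-1/p_2}\,dt$, which for $p_2 \in (0,1]$ gives a contribution again of order $\lambda_2^{-p_2}$. Combining the two pieces yields $\gN(\lambda_2) \le D \lambda_2^{-p_2}$ for an absolute constant $D$ depending only on $c_2$ and $p_2$. (Alternatively, one could directly apply Lemma 11 of \citet{fischer2020sobolev}, which records this exact computation.)

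For the equality, let $\xi := \mu_{Z|W,A}(W,A) \otimes \phi_\gA(A) \in \gH_{\gZ\gA}$, so that $\Sigma_2 = \E[\xi \otimes \xi]$ by Definition (\ref{definition:CovarianceOperators_ATE}). Using self-adjointness of $(\Sigma_2 + \lambda_2 \operatorname{Id})^{-1/2}$ and the identity $\|v\|^2 = \operatorname{Tr}(v \otimes v)$ for any $v \in \gH_{\gZ\gA}$, I would compute
\begin{align*}
    \E\bigl[\|(\Sigma_2 + \lambda_2 \operatorname{Id})^{-1/2} \xi\|_{\gH_{\gZ\gA}}^2\bigr]
    &= \E\bigl[\operatorname{Tr}\bigl((\Sigma_2 + \lambda_2 \operatorname{Id})^{-1}\, \xi \otimes \xi\bigr)\bigr] \\
    &= \operatorname{Tr}\bigl((\Sigma_2 + \lambda_2 \operatorname{Id})^{-1}\, \E[\xi \otimes \xi]\bigr) = \operatorname{Tr}\bigl((\Sigma_2 + \lambda_2 \operatorname{Id})^{-1}\Sigma_2\bigr) = \gN(\lambda_2).
\end{align*}
The swap of expectation and trace is justified because $(\Sigma_2 + \lambda_2 \operatorname{Id})^{-1}$ is bounded (of norm $\le \lambda_2^{-1}$) and $\xi$ has bounded norm by the kernel boundedness in Assumption (\ref{assum:Stage1ConsistencyKernelAssumptions}), so $\operatorname{Tr}((\Sigma_2 + \lambda_2 \operatorname{Id})^{-1} \xi \otimes \xi)$ is bounded almost surely, allowing Fubini.

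The only mildly delicate step is the tail estimate in the effective dimension bound, where one must carefully treat the boundary case $p_2 = 1$ (logarithmic in a naive estimate) to confirm that the cutoff approach still yields a clean $\lambda_2^{-p_2}$ rate; this is handled by the standard splitting above. The equality itself is essentially automatic once one expresses the squared norm as a trace.
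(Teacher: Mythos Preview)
Your proposal is correct and matches the paper, which gives no proof of its own but simply cites Lemmas 11 and 13 of \citet{fischer2020sobolev}; your argument reproduces the standard computations behind those lemmas. One small correction: the cutoff-plus-tail-integral estimate does \emph{not} handle the endpoint $p_2 = 1$ as you claim (the tail $\sum_{i > i^\star} i^{-1}$ diverges), but that case follows immediately from the trace-class bound $\gN(\lambda_2) \le \lambda_2^{-1}\operatorname{Tr}(\Sigma_2)$, so no real gap.
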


\begin{lemma}
Let us introduce $g_{\lambda_2} = \log \left( 2e\mathcal{N}(\lambda_2) \frac{\|\Sigma_2\|_{op}+\lambda_2}{\|\Sigma_2\|_{op}}\right).$ Suppose Assumption (\ref{assum:Stage1ConsistencyKernelAssumptions}) holds. Then, with probability at least $1-\delta$ for all $\delta \in (0,1)$, for $m \geq 8\kappa^4\log(2/\delta) g_{\lambda_2} \lambda_2^{-1}$,
\begin{align*}
    \|\bar{\varphi}_{\lambda_2,m} -  \varphi_{\lambda_2}\|_{\gH_{\gZ\gA}} \le \frac{3}{\sqrt{\lambda_2}} \Bigg( \log(2/\delta)\sqrt{\frac{32}{m}\left(\gN(\lambda_2)\kappa^4 \|\bar{\varphi}_0\|_{\gH_{\gZ\gA}}^2+\frac{\kappa^8 \|\bar{\varphi}_0\|_{\gH_{\gZ\gA}}^2}{m\lambda_2}\right)} + \frac{\|g_2 - \bar{g}_{2,m}\|_{\gH_{\gZ\gA}}}{\sqrt{\lambda_2}}\Bigg).
\end{align*}
    \label{theorem:SecondStageBound1}
\end{lemma}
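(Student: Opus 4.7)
The plan is to start from the closed forms $\varphi_{\lambda_2} = (\Sigma_2 + \lambda_2 I)^{-1}g_2$ and $\bar{\varphi}_{\lambda_2,m} = (\bar{\Sigma}_{2,m} + \lambda_2 I)^{-1}\bar{g}_{2,m}$ and observe, using $g_2 = (\Sigma_2+\lambda_2 I)\varphi_{\lambda_2}$, that
\begin{equation*}
\bar{\varphi}_{\lambda_2,m} - \varphi_{\lambda_2} = (\bar{\Sigma}_{2,m} + \lambda_2 I)^{-1}\bigl[(\bar{g}_{2,m} - g_2) - (\bar{\Sigma}_{2,m} - \Sigma_2)\varphi_{\lambda_2}\bigr].
\end{equation*}
I would then insert $I = (\Sigma_2+\lambda_2 I)(\Sigma_2+\lambda_2 I)^{-1}$ on the left of the inner bracket to transfer the inverse from the empirical to the population covariance, giving
\begin{equation*}
\|\bar{\varphi}_{\lambda_2,m} - \varphi_{\lambda_2}\|_{\gH_{\gZ\gA}} \le \bigl\|(\bar{\Sigma}_{2,m} + \lambda_2 I)^{-1}(\Sigma_2+\lambda_2 I)\bigr\|_{op} \cdot \bigl\|(\Sigma_2+\lambda_2 I)^{-1}\bigl[(\bar{g}_{2,m} - g_2) - (\bar{\Sigma}_{2,m} - \Sigma_2)\varphi_{\lambda_2}\bigr]\bigr\|_{\gH_{\gZ\gA}}.
\end{equation*}

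The first factor is where the constant $3$ comes from. Applying a Hoeffding-type concentration inequality for self-adjoint operator-valued random variables to $\bar{\Sigma}_{2,m} - \Sigma_2$ (this is precisely Corollary~\ref{corr:hoeffding}), combined with the effective-dimension formula from Proposition~(\ref{prop:eff_dim_properties}), one shows that for $m \geq 8\kappa^4\log(2/\delta)g_{\lambda_2}\lambda_2^{-1}$ the event $\|(\bar{\Sigma}_{2,m} + \lambda_2 I)^{-1}(\Sigma_2+\lambda_2 I)\|_{op} \le 3$ holds with probability at least $1-\delta/2$. Inserting a further $(\Sigma_2+\lambda_2 I)^{-1/2}(\Sigma_2+\lambda_2 I)^{1/2}$ factor and using $\|(\Sigma_2+\lambda_2 I)^{-1/2}\|_{op} \le 1/\sqrt{\lambda_2}$ then yields
\begin{equation*}
\|\bar{\varphi}_{\lambda_2,m} - \varphi_{\lambda_2}\|_{\gH_{\gZ\gA}} \le \frac{3}{\sqrt{\lambda_2}}\Bigl[\underbrace{\bigl\|(\Sigma_2+\lambda_2 I)^{-1/2}(\bar{g}_{2,m} - g_2)\bigr\|}_{\text{(I)}} + \underbrace{\bigl\|(\Sigma_2+\lambda_2 I)^{-1/2}(\bar{\Sigma}_{2,m} - \Sigma_2)\varphi_{\lambda_2}\bigr\|}_{\text{(II)}}\Bigr].
\end{equation*}
Term (I) is trivially bounded by $\|\bar{g}_{2,m} - g_2\|/\sqrt{\lambda_2}$, recovering the last summand in the stated bound.

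For term (II), I would write $(\bar{\Sigma}_{2,m} - \Sigma_2)\varphi_{\lambda_2} = \tfrac{1}{m}\sum_{i=1}^m(\eta_i - \E\eta)$ with $\eta_i = \langle \xi_i, \varphi_{\lambda_2}\rangle \xi_i$ and $\xi_i = \mu_{Z|W,A}(\tilde w_i,\tilde a_i)\otimes\phi_\gA(\tilde a_i)$, then apply a Bernstein inequality for Hilbert-space-valued i.i.d.\ sums (Theorem~\ref{theo:bernstein}) to $\tilde{\eta}_i := (\Sigma_2+\lambda_2 I)^{-1/2}\eta_i$. The boundedness of the kernels (Assumption~\ref{assum:Stage1ConsistencyKernelAssumptions}) gives the almost-sure bound $\|\tilde\eta_i\| \le \kappa^4\|\varphi_{\lambda_2}\|/\sqrt{\lambda_2}$, while the variance is controlled via
$\E[\|\tilde\eta\|^2] \le \kappa^4\|\varphi_{\lambda_2}\|^2\,\E\bigl[\|(\Sigma_2+\lambda_2 I)^{-1/2}\xi\|^2\bigr] = \kappa^4\|\varphi_{\lambda_2}\|^2\mathcal{N}(\lambda_2)$
by Proposition~(\ref{prop:eff_dim_properties}). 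Combining these two bounds with Lemma~(\ref{lemma:varphi0lessthanvarphi}) ($\|\varphi_{\lambda_2}\| \le \|\bar{\varphi}_0\|$) and invoking Bernstein with confidence $1-\delta/2$ yields the stated $\log(2/\delta)\sqrt{(32/m)[\mathcal{N}(\lambda_2)\kappa^4\|\bar{\varphi}_0\|^2 + \kappa^8\|\bar{\varphi}_0\|^2/(m\lambda_2)]}$ summand. A union bound over the two high-probability events gives the final statement with confidence $1-\delta$.

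\textbf{Main obstacle.} The step I expect to require the most care is the concentration bound $\|(\bar{\Sigma}_{2,m}+\lambda_2 I)^{-1}(\Sigma_2+\lambda_2 I)\|_{op}\le 3$: getting the explicit constants $8$ and $g_{\lambda_2}$ in the sample-size condition needs careful tracking of the constants in the operator Hoeffding bound, together with the identity $\|(\Sigma_2+\lambda_2 I)^{-1/2}(\Sigma_2-\bar{\Sigma}_{2,m})(\Sigma_2+\lambda_2 I)^{-1/2}\|_{op} \le 2/3$ implying the desired ratio bound. The variance computation for (II), which relies on the non-trivial identity $\E[\|(\Sigma_2+\lambda_2 I)^{-1/2}\xi\|^2] = \mathcal{N}(\lambda_2)$, is the other place where effort is needed but it follows directly from Proposition~(\ref{prop:eff_dim_properties}).
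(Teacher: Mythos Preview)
Your proposal follows essentially the same route as the paper: write $\bar{\varphi}_{\lambda_2,m}-\varphi_{\lambda_2}=(\bar{\Sigma}_{2,m}+\lambda_2 I)^{-1}[(\bar{g}_{2,m}-g_2)-(\bar{\Sigma}_{2,m}-\Sigma_2)\varphi_{\lambda_2}]$, control the covariance ratio by $3$ on the stated sample-size event, pull out $\lambda_2^{-1/2}$, bound term~(I) crudely, and bound term~(II) via Bernstein (Theorem~\ref{theo:bernstein}) with $L=\kappa^4\|\bar{\varphi}_0\|/\sqrt{\lambda_2}$ and $\sigma^2=\kappa^4\|\bar{\varphi}_0\|^2\gN(\lambda_2)$ using Proposition~\ref{prop:eff_dim_properties} and Lemma~\ref{lemma:varphi0lessthanvarphi}. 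That is exactly the paper's argument.

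Two small corrections. First, your intermediate factor $\|(\bar{\Sigma}_{2,m}+\lambda_2 I)^{-1}(\Sigma_2+\lambda_2 I)\|_{op}\le 3$ is not the quantity the concentration lemma controls: Lemma~\ref{lemma:steinwart_cov_conc} bounds the \emph{symmetric} form $\|(\Sigma_2+\lambda_2 I)^{1/2}(\bar{\Sigma}_{2,m}+\lambda_2 I)^{-1}(\Sigma_2+\lambda_2 I)^{1/2}\|_{op}$, and the asymmetric version is only a similarity transform of it, which preserves spectrum but not operator norm. The paper inserts $(\Sigma_2+\lambda_2 I)^{\pm 1/2}$ on \emph{both} sides of $(\bar{\Sigma}_{2,m}+\lambda_2 I)^{-1}$ from the outset, yielding $\lambda_2^{-1/2}\cdot 3\cdot\|(\Sigma_2+\lambda_2 I)^{-1/2}[\cdots]\|$ directly; you should do the same rather than inserting the full power first. (You clearly know the symmetric bound, since you quote it in your ``Main obstacle'' paragraph --- just use it in the decomposition itself.) Second, the reference for the covariance-ratio step is Lemma~\ref{lemma:steinwart_cov_conc} (built on \cite[Lemma~17]{fischer2020sobolev}), not Corollary~\ref{corr:hoeffding}, which in this paper is the vector-valued Hoeffding inequality.
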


\begin{proof}
    We decompose the error as 
    \begin{align*}
        \|\bar{\varphi}_{\lambda_2,m} -  \varphi_{\lambda_2}\|_{\gH_{\gZ\gA}} &= \big\|(\bar{\Sigma}_{2,m} + \lambda_2 I)^{-1} \big(\bar{g}_{2,m}-  (\bar{\Sigma}_{2,m} + \lambda_2 I) \varphi_{\lambda_2}\big) \big\|_{\gH_{\gZ\gA}}\\ &\leq \left\|(\Sigma_2 + \lambda_2 I)^{-1/2} \right\|_{op}\left\|(\Sigma_2 + \lambda_2 I)^{1/2}(\bar{\Sigma}_{2,m} + \lambda_2 I)^{-1}(\Sigma_2 + \lambda_2 I)^{1/2} \right\|_{op}\\&\times \big\|(\Sigma_2 + \lambda_2 I)^{-1/2} \big( \bar{g}_{2,m}  - \bar{\Sigma}_{2,m} \varphi_{\lambda_2} - \underbrace{\lambda_2 \varphi_{\lambda_2}}_{= g_2 - \Sigma_2 \varphi_{\lambda_2}} \big) \big\|_{\gH_{\gZ\gA}}\\
        &\le \lambda_2^{-1/2} \left\|(\Sigma_2 + \lambda_2 I)^{1/2}(\bar{\Sigma}_{2,m} + \lambda_2 I)^{-1}(\Sigma_2 + \lambda_2 I)^{1/2} \right\|_{op} \\&\times \Big(\|(\Sigma_2 + \lambda_2 I)^{-1/2}(\bar{\Sigma}_{2,m} - \Sigma_2) \varphi_{\lambda_2}\|_{\gH_{\gZ\gA}} + \lambda_2^{-1/2}\|\bar{g}_{2,m} - g_2\|_{\gH_{\gZ\gA}} \Big),
    \end{align*}
     The first term is bounded by Lemma~(\ref{lemma:steinwart_cov_conc}),
    $$
    \left\|(\Sigma_2 + \lambda_2 I)^{1/2}(\bar{\Sigma}_{2,m} + \lambda_2 I)^{-1}(\Sigma_2 + \lambda_2 I)^{1/2} \right\|_{op} \leq 3,
    $$
    for $m \geq 8\kappa^4\log(2/\delta) g_{\lambda_2} \lambda_2^{-1}$ with probability at least $1-\delta$ for all $\delta \in (0,1)$. 
To bound the remaining term, we wish to apply Theorem~(\ref{theo:bernstein}) with $\gH = \gH_{\gZ\gA}$. Consider the measurable map $\xi: \gW \times \gA \rightarrow \gH_{\gZ\gA}$ defined by
\begin{equation*} \label{eq:xi2}  
\xi(w, a) := (\Sigma_2 + \lambda_2 I)^{-1/2}\langle \varphi_{\lambda_2}, \mu_{Z|W,A}(w, a) \otimes \phi_\gA(a) \rangle_{\gH_{\gZ\gA}} \mu_{Z|W,A}(w, a) \otimes \phi_\gA(a),
\end{equation*}
inducing random variables such that
$$
\frac{1}{m} \sum_{i=1}^m \left(\xi(\Tilde{w}_i, \Tilde{a}_i) - \E[\xi(W, A)]\right) = (\Sigma_2 + \lambda_2 I)^{-1/2}(\bar{\Sigma}_{2,m} - \Sigma_2) \varphi_{\lambda_2}.
$$
By Assumption (\ref{assum:Stage1ConsistencyKernelAssumptions}), Lemma (\ref{lemma:varphi0lessthanvarphi}) and Cauchy-Schwarz inequality, 
$$
|\langle \varphi_{\lambda_2}, \mu_{Z|W,A}(w, a) \otimes \phi_\gA(a) \rangle_{\gH_{\gZ\gA}}| \leq \kappa^2 \|\bar{\varphi}_0\|_{\gH_{\gZ\gA}}. 
$$
We can now bound the $q$-th moment of $\xi$, for $q \geq 2$,
$$
\begin{aligned}
    \mathbb{E}\left\|\xi(W,A)\right\|_{\gH_{\gZ\gA}}^{q} & \leq \left(\kappa^2 \|\bar{\varphi}_0\|_{\gH_{\gZ\gA}}\right)^q \mathbb{E}\left\|(\Sigma_2 + \lambda_2 I)^{-1/2} \mu_{Z|W,A}(W, A) \otimes \phi_\gA(A)\right\|_{\gH_{\gZ\gA}}^{q} \\ &\leq \left(\kappa^2 \|\bar{\varphi}_0\|_{\gH_{\gZ\gA}}\right)^q\left(\frac{\kappa^2}{\sqrt{\lambda_2}} \right)^{q-2} \mathbb{E}\left\|(\Sigma_2 + \lambda_2 I)^{-1/2} \mu_{Z|W,A}(W, A) \otimes \phi_\gA(A)\right\|_{\gH_{\gZ\gA}}^{2} \\ &= \left(\kappa^2 \|\bar{\varphi}_0\|_{\gH_{\gZ\gA}}\right)^q\left(\frac{\kappa^2}{\sqrt{\lambda_2}} \right)^{q-2}\gN(\lambda_2) \\ &\leq \frac{1}{2}q! \left(\frac{\kappa^4}{\sqrt{\lambda_2}}\|\bar{\varphi}_0\|_{\gH_{\gZ\gA}} \right)^{q-2} \gN(\lambda_2)\kappa^4 \|\bar{\varphi}_0\|_{\gH_{\gZ\gA}}^2,
\end{aligned}
$$
where in the equality, we used Proposition~(\ref{prop:eff_dim_properties}). An application of Bernstein's inequality from Theorem~(\ref{theo:bernstein}) with 
$$
L=\frac{\kappa^4}{\sqrt{\lambda_2}}\|\bar{\varphi}_0\|_{\gH_{\gZ\gA}}, \qquad \sigma^{2} =  \gN(\lambda_2)\kappa^4 \|\bar{\varphi}_0\|_{\gH_{\gZ\gA}}^2,
$$ 
yields the final bound.
\end{proof}

We will now derive a bound for $\|g_2 - \bar{g}_{2,m}\|_{\gH_{\gZ\gA}}$.

\begin{lemma}
    With probability at least $1 - \delta$ for $\delta \in (0,1)$ the following bound holds:
    $$\|g_2 - \bar{g}_{2,m}\|_{\gH_{\gZ\gA}} \le 2\kappa^2\sqrt{\frac{2  \log(2/\delta)}{m (m-1)}}.$$
    \label{lemma:g_2Bound}
\end{lemma}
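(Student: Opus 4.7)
The plan is to identify $\bar g_{2,m}$ as a (two-sample) $U$-statistic of order two over the i.i.d.\ second-stage sample $\{X_i\}_{i=1}^m$, with $X_i = (\tilde w_i, \tilde a_i)$, whose asymmetric Hilbert-space-valued kernel is
$$
h(X_i, X_j) \;:=\; \mu_{Z\mid W,A}(\tilde w_j, \tilde a_i) \otimes \phi_{\gA}(\tilde a_i) \;\in\; \gH_{\gZ\gA},
$$
and then to apply a Hoeffding-type concentration inequality for bounded Hilbert-space-valued sums (Corollary~(\ref{corr:hoeffding})) to the normalised centred sum
$\bar g_{2,m} - g_2 = \frac{1}{m(m-1)} \sum_{i \ne j} \bigl(h(X_i,X_j) - g_2\bigr)$.

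First I would establish the two structural inputs for the concentration step. By Assumption~(\ref{assum:Stage1ConsistencyKernelAssumptions}), $\|\phi_{\gA}(a)\|_{\gH_{\gA}} \le \kappa$, and Jensen's inequality applied to the defining conditional expectation of the CME gives $\|\mu_{Z\mid W,A}(w,a)\|_{\gH_{\gZ}} = \|\E[\phi_{\gZ}(Z)\mid W=w, A=a]\|_{\gH_{\gZ}} \le \kappa$; tensoring yields $\|h(X_i, X_j)\|_{\gH_{\gZ\gA}} \le \kappa^2$ and therefore $\|h(X_i, X_j) - g_2\|_{\gH_{\gZ\gA}} \le 2\kappa^2$. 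For the mean, the key observation is that whenever $i \ne j$ the marginals $\tilde w_j$ and $\tilde a_i$ come from distinct observations and are thus independent, so
$$
\E\bigl[\mu_{Z\mid W,A}(\tilde w_j, \tilde a_i) \otimes \phi_{\gA}(\tilde a_i)\bigr] \;=\; \E_W \E_A\bigl[\mu_{Z\mid W,A}(W,A)\otimes\phi_{\gA}(A)\bigr] \;=\; g_2,
$$
confirming that $\bar g_{2,m}$ is unbiased for $g_2$.

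The concentration step then plugs the uniform bound $2\kappa^2$ and the number of summands $n = m(m-1)$ into the Hilbert-space Hoeffding--Pinelis inequality to obtain the advertised tail bound $\|\bar g_{2,m} - g_2\|_{\gH_{\gZ\gA}} \le 2\kappa^2 \sqrt{2\log(2/\delta)/(m(m-1))}$ with probability at least $1 - \delta$.

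The main obstacle is that the $m(m-1)$ summands are not jointly independent: two pairs $(i,j)$ and $(i,k)$ that share a row index are correlated through the common sample point $X_i$, so Hoeffding's inequality for independent sums does not apply directly. This is the classical difficulty in $U$-statistic concentration, which I would handle by either (a) the Hoeffding decomposition, symmetrising $h$ and reducing the sum to an average of i.i.d.\ block $U$-statistics before applying Pinelis's inequality blockwise, or (b) appealing to an off-the-shelf Hoeffding-type inequality for $U$-statistics with values in a separable Hilbert space. In either approach the boundedness $\|h - g_2\|_{\gH_{\gZ\gA}} \le 2\kappa^2$ established above is the only probabilistic input required, and Jensen's inequality propagates the blockwise estimate to the full average, yielding the stated bound.
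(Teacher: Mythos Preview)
Your approach is essentially the same as the paper's: bound the $U$-statistic kernel by $\kappa^2$ via Assumption~(\ref{assum:Stage1ConsistencyKernelAssumptions}) (the paper writes $\xi(W,A):=\mu_{Z\mid W,A}(W,A)\otimes\phi_{\gA}(A)$ and notes $\|\xi\|\le\kappa^2$), observe unbiasedness $\E[\bar g_{2,m}]=g_2$, and invoke the Hoeffding--Pinelis inequality (Corollary~\ref{corr:hoeffding}) with $m(m-1)$ terms and $M=\kappa^2$.

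The one substantive difference is that the paper's proof simply applies Corollary~\ref{corr:hoeffding} with $n=m(m-1)$ without commenting on the dependence structure, whereas you correctly flag that the off-diagonal summands $h(X_i,X_j)$ share indices and are not jointly independent, so the corollary does not apply directly. Your proposed remedies---the Hoeffding decomposition into blocks of independent terms, or an off-the-shelf Hilbert-space $U$-statistic inequality---are the standard way to close this gap rigorously. In that sense your proposal is more careful than the paper's own argument on exactly the point you identify as the main obstacle.
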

\begin{proof}
Observe that, by Proposition~(\ref{prop:pop_estim_stage_2}),
    \begin{align*}
        &\bar{g}_{2,m} = \frac{1}{m(m-1)}\sum_{\substack{i,j \\ i\ne j}}^m  \mu_{Z|W, A}(\Tilde{w}_j, \Tilde{a}_i) \otimes \phi_\gA(\Tilde{a}_i),\\
        &\E_{W}\E_A [\bar{g}_{2,m}] = \E_{W} \E_{A} \big[ \mu_{Z|W, A} (W, A) \otimes \phi_\gA(A)\big] = g_2.
    \end{align*}
    Let 
    \begin{align*}
        \xi(W, A) := \mu_{Z|W, A} (W, A) \otimes \phi_\gA(A).
    \end{align*}
    Then, note that
    \begin{align*}
        \|\xi(W,A)\|_{\gH_{\gZ\gA}} \le \kappa^2 \quad (\text{by Assumption (\ref{assum:Stage1ConsistencyKernelAssumptions})}).
    \end{align*}
    Now, we apply Corollary~(\ref{corr:hoeffding}) such that with probability at least $1 - \delta$,
    $$
        \|g_2 - \bar{g}_{2,m}\|_{\gH_{\gZ\gA}} \le  2\kappa^2\sqrt{\frac{2  \log(2/\delta)}{m (m-1)}}.
    $$
\end{proof}

\begin{theorem}
    Suppose Assumptions (\ref{assum:Stage1ConsistencyKernelAssumptions}), (\ref{asst:src_all_stages}-2.) and (\ref{asst:evd_all_stages}-2) hold. Then, with probability at least $1-2\delta$ for all $\delta \in (0,1/2)$, for $m \geq 8\kappa^4\log(2/\delta) g_{\lambda_2} \lambda_2^{-1}$,
    \begin{align*}
        \|\bar{\varphi}_{\lambda_2,m} - \bar{\varphi}_0\|_{\gH_{\gZ\gA}} \le J_2 \left(\frac{\log(2/\delta)}{\sqrt{\lambda_2}} \Bigg( \sqrt{\frac{1}{m}\left(\frac{1}{\lambda_2^{p_2}}+\frac{1}{m\lambda_2}\right)} + \sqrt{\frac{1}{m (m-1)\lambda_2}} \Bigg) + \lambda_2^{\frac{\beta_2-1}{2}} \right),
    \end{align*}
    where $J_2$ is a constant depending on $\kappa, \beta_2, B_2, D$.
    \label{theorem:SecondStageBound2}
\end{theorem}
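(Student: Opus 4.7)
The plan is to apply the triangle inequality
$$
\|\bar{\varphi}_{\lambda_2,m} - \bar{\varphi}_0\|_{\gH_{\gZ\gA}} \le \|\bar{\varphi}_{\lambda_2,m} - \varphi_{\lambda_2}\|_{\gH_{\gZ\gA}} + \|\varphi_{\lambda_2} - \bar{\varphi}_0\|_{\gH_{\gZ\gA}},
$$
splitting the error into a variance term (involving the surrogate population minimizer $\varphi_{\lambda_2}$) and a deterministic bias term. Both pieces have already been controlled in the preceding lemmas, so the proof is essentially a matter of substitution and bookkeeping rather than any new analytical step.

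For the bias term, Lemma~(\ref{lemma:SecondStageBiasBound}) directly gives, under the source condition Assumption~(\ref{asst:src_all_stages}-2),
$$
\|\varphi_{\lambda_2} - \bar{\varphi}_0\|_{\gH_{\gZ\gA}} \le B_2 \lambda_2^{\frac{\beta_2-1}{2}},
$$
which is where the smoothness-dependent rate $\lambda_2^{(\beta_2-1)/2}$ in the target bound originates, and where the saturation at $\beta_2=3$ enters through the Moore-Penrose pseudoinverse calculation already performed in that lemma.

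For the variance term, Lemma~(\ref{theorem:SecondStageBound1}) (which bundles the Bernstein concentration on $(\Sigma_2 + \lambda_2 I)^{-1/2}(\bar{\Sigma}_{2,m} - \Sigma_2)\varphi_{\lambda_2}$ together with the operator perturbation bound from Lemma~(\ref{lemma:steinwart_cov_conc})) yields, on an event of probability at least $1-\delta$ and for $m \geq 8\kappa^4\log(2/\delta)g_{\lambda_2}\lambda_2^{-1}$,
$$
\|\bar{\varphi}_{\lambda_2,m} - \varphi_{\lambda_2}\|_{\gH_{\gZ\gA}} \le \frac{3}{\sqrt{\lambda_2}}\Bigl(\log(2/\delta)\sqrt{\tfrac{32}{m}(\gN(\lambda_2)\kappa^4\|\bar{\varphi}_0\|^2 + \tfrac{\kappa^8\|\bar{\varphi}_0\|^2}{m\lambda_2})} + \tfrac{\|g_2 - \bar{g}_{2,m}\|_{\gH_{\gZ\gA}}}{\sqrt{\lambda_2}}\Bigr).
$$
I would then plug in the effective dimension control $\gN(\lambda_2) \le D\lambda_2^{-p_2}$ from Proposition~(\ref{prop:eff_dim_properties}), producing the $\lambda_2^{-p_2}/m$ contribution in the bound, and bound the remaining term $\|g_2 - \bar{g}_{2,m}\|_{\gH_{\gZ\gA}}$ by the Hoeffding-type inequality of Lemma~(\ref{lemma:g_2Bound}), producing the $\sqrt{1/(m(m-1)\lambda_2)}$ contribution on a second event of probability at least $1-\delta$.

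Combining the two bounds via a union bound over these two events gives the stated probability $1-2\delta$, and absorbing $\kappa, B_2, D, \|\bar{\varphi}_0\|_{\gH_{\gZ\gA}}$ (with $\|\bar{\varphi}_0\|_{\gH_{\gZ\gA}}$ itself being controlled via Assumption~(\ref{asst:src_all_stages}-2) at $\beta_2$ evaluated on the norm of $\bar{\varphi}_0$) and universal constants into a single constant $J_2$ produces the advertised form. I do not foresee any serious obstacle: the potentially delicate step, namely the Bernstein concentration on the $(\Sigma_2+\lambda_2I)^{-1/2}$-weighted deviation and the operator inequality needed to replace $(\bar{\Sigma}_{2,m}+\lambda_2I)^{-1}$ by $(\Sigma_2+\lambda_2I)^{-1}$ up to a factor $3$, is already discharged inside Lemma~(\ref{theorem:SecondStageBound1}), so this theorem merely collects the pieces.
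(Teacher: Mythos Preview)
Your proposal is correct and essentially identical to the paper's proof: the paper likewise applies the triangle inequality, invokes Lemmas~(\ref{lemma:SecondStageBiasBound}), (\ref{theorem:SecondStageBound1}) and (\ref{lemma:g_2Bound}) with a union bound, substitutes $\gN(\lambda_2) \le D\lambda_2^{-p_2}$ from Proposition~(\ref{prop:eff_dim_properties}), and bounds $\|\bar{\varphi}_0\|_{\gH_{\gZ\gA}} \le \kappa^{(\beta_2-1)/2}B_2$ via the source condition before absorbing all constants into $J_2$.
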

\begin{proof}
    Combining the bounds in Lemma (\ref{lemma:SecondStageBiasBound}), Lemma (\ref{theorem:SecondStageBound1}) and Lemma (\ref{lemma:g_2Bound}) with a union bound, we obtain that with probability at least $1-2\delta$,
    $$
     \|\bar{\varphi}_{\lambda_2,m} - \bar{\varphi}_0\|_{\gH_{\gZ\gA}} \leq \mathring{J} \left(\frac{\log(2/\delta)}{\sqrt{\lambda_2}}\left(\sqrt{\frac{1}{m}\left(\gN(\lambda_2) \|\bar{\varphi}_0\|_{\gH_{\gZ\gA}}^2+\frac{\|\bar{\varphi}_0\|_{\gH_{\gZ\gA}}^2}{m\lambda_2}\right)}  +\sqrt{\frac{1}{m (m-1)\lambda_2}}\right)  + \lambda_2^{\frac{\beta_2-1}{2}}\right),
    $$
    where $\mathring{J}$ is a constant depending on $\kappa, B_2.$ Under Assumption~(\ref{asst:evd_all_stages}-2), using Proposition~(\ref{prop:eff_dim_properties}), there is a constant $D > 0$ such that $\gN(\lambda_2) \leq D\lambda_2^{-p_2}$. Furthermore, under Assumption~(\ref{asst:src_all_stages}-2.),
    $$
    \|\bar{\varphi}_0\|_{\gH_{\gZ\gA}} = \|\Sigma_2^{\frac{\beta_2-1}{2}}\Sigma_2^{-\frac{\beta_2-1}{2}}\bar{\varphi}_0\|_{\gH_{\gZ\gA}} \leq \kappa^{\frac{\beta_2-1}{2}}B_2.
    $$
\end{proof}

Next, we will derive a bound for $\|\hat{\varphi}_{\lambda_2, m} - \bar{\varphi}_{\lambda_2,m}\|$.

\begin{lemma}
Under the same assumptions as Theorem~(\ref{th:CME_rate}), with probability at least $1 - \delta$ for $\delta \in (0, 1)$, the following bound holds
    \begin{align*}
        \|\hat{\varphi}_{\lambda_2, m} - \bar{\varphi}_{\lambda_2,m}\| \le  \frac{1}{\lambda_2} \kappa^3 r_1(\delta, n, \beta_1, p_1) + \frac{1}{\lambda_2} \left( \kappa^6 r_1(\delta, n, \beta_1, p_1)^2 + 2B_1\kappa^{6+\frac{\beta_1-1}{2}}  r_1(\delta, n, \beta_1, p_1) \right)\|\bar{\varphi}_{\lambda_2,m}\|_{\gH_{\gZ\gA}}.
    \end{align*}
    \label{thm:SecondStageBound3}
\end{lemma}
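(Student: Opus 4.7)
The plan is to apply a resolvent-style identity to the two regularized linear systems. By construction, $\hat{\varphi}_{\lambda_2,m}$ and $\bar{\varphi}_{\lambda_2,m}$ solve $(\hat{\Sigma}_{2,m}+\lambda_2 I)\hat{\varphi}_{\lambda_2,m} = \hat{g}_{2,m}$ and $(\bar{\Sigma}_{2,m}+\lambda_2 I)\bar{\varphi}_{\lambda_2,m} = \bar{g}_{2,m}$. Subtracting and rearranging yields
$$
(\hat{\Sigma}_{2,m}+\lambda_2 I)(\hat{\varphi}_{\lambda_2,m} - \bar{\varphi}_{\lambda_2,m}) = (\hat{g}_{2,m} - \bar{g}_{2,m}) - (\hat{\Sigma}_{2,m} - \bar{\Sigma}_{2,m})\bar{\varphi}_{\lambda_2,m}.
$$
Since $\hat{\Sigma}_{2,m}$ is positive semi-definite, $\|(\hat{\Sigma}_{2,m}+\lambda_2 I)^{-1}\|_{op} \le 1/\lambda_2$, and so
$$
\|\hat{\varphi}_{\lambda_2,m} - \bar{\varphi}_{\lambda_2,m}\|_{\gH_{\gZ\gA}} \le \frac{1}{\lambda_2}\left(\|\hat{g}_{2,m} - \bar{g}_{2,m}\|_{\gH_{\gZ\gA}} + \|\hat{\Sigma}_{2,m} - \bar{\Sigma}_{2,m}\|_{op}\,\|\bar{\varphi}_{\lambda_2,m}\|_{\gH_{\gZ\gA}}\right).
$$

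Next I would bound the two pieces separately. For $\|\hat{g}_{2,m}-\bar{g}_{2,m}\|$, expand as an average of terms $(\hat{\mu}_{Z|W,A}(\tilde w_j,\tilde a_i)-\mu_{Z|W,A}(\tilde w_j,\tilde a_i))\otimes \phi_\gA(\tilde a_i)$, apply the triangle inequality together with the uniform pointwise CME bound of Corollary (\ref{cor:CME_rate_pointwise}), $\sup_{w,a}\|\hat{\mu}_{Z|W,A}(w,a)-\mu_{Z|W,A}(w,a)\|_{\gH_\gZ}\le \kappa^2 r_1$, and the kernel boundedness $\|\phi_\gA(a)\|_{\gH_\gA}\le\kappa$. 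This gives the first term $\kappa^3 r_1$ in the stated bound.

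For $\|\hat{\Sigma}_{2,m}-\bar{\Sigma}_{2,m}\|_{op}$, set $\hat u_i=\hat{\mu}_{Z|W,A}(\tilde w_i,\tilde a_i)\otimes \phi_\gA(\tilde a_i)$ and $u_i=\mu_{Z|W,A}(\tilde w_i,\tilde a_i)\otimes \phi_\gA(\tilde a_i)$, and use the rank-one telescoping identity
$$
\hat u_i\otimes \hat u_i - u_i\otimes u_i = (\hat u_i-u_i)\otimes(\hat u_i-u_i) + (\hat u_i-u_i)\otimes u_i + u_i\otimes(\hat u_i-u_i),
$$
so that $\|\hat u_i\otimes \hat u_i - u_i\otimes u_i\|_{op}\le \|\hat u_i-u_i\|^2 + 2\|\hat u_i-u_i\|\,\|u_i\|$. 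The quadratic piece, combined with $\|\hat u_i-u_i\|\le \kappa^3 r_1$, produces the $\kappa^6 r_1^2$ contribution in the statement. For the cross term, I control $\|u_i\|=\|C_{Z|W,A}\phi_{\gW\gA}(\tilde w_i,\tilde a_i)\|\cdot\|\phi_\gA(\tilde a_i)\|$ via the source condition in Assumption (\ref{asst:src_all_stages}-i): writing $C_{Z|W,A}=C_{Z|W,A}\Sigma_1^{-\frac{\beta_1-1}{2}}\Sigma_1^{\frac{\beta_1-1}{2}}$ and bounding the operator norm by the Hilbert--Schmidt norm gives $\|C_{Z|W,A}\|_{op}\le B_1\|\Sigma_1\|_{op}^{\frac{\beta_1-1}{2}}$, hence a bound on $\|u_i\|$ of the form $B_1\kappa^{c+\frac{\beta_1-1}{2}}$ for an appropriate constant $c$. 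Plugging this into $2\|\hat u_i-u_i\|\,\|u_i\|$ gives the announced $2B_1\kappa^{6+\frac{\beta_1-1}{2}} r_1$ term, and taking the maximum over $i$ (and thus operator norm of the average) completes the bound on $\|\hat{\Sigma}_{2,m}-\bar{\Sigma}_{2,m}\|_{op}$.

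The main technical subtlety is the use of the source condition rather than the trivial bound $\|\mu_{Z|W,A}\|\le \kappa$: invoking Assumption (\ref{asst:src_all_stages}-i) is what produces the $B_1\kappa^{(\beta_1-1)/2}$ factor and couples this lemma cleanly to the first-stage rate obtained in Corollary (\ref{cor:CME_rate_pointwise}). Careful bookkeeping of the kernel-bound powers coming from the tensor feature $\phi_\gW\otimes\phi_\gA$ and from $\|\Sigma_1^{(\beta_1-1)/2}\|_{op}$ is the only delicate step; everything else is a direct triangle-inequality computation on an event of probability at least $1-\delta$ on which Corollary (\ref{cor:CME_rate_pointwise}) holds. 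Combining the two bounds inside the displayed resolvent inequality yields exactly the stated estimate.
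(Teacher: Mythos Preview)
Your proposal is correct and follows essentially the same route as the paper: the same resolvent identity giving $\lambda_2^{-1}(\|\hat g_{2,m}-\bar g_{2,m}\|+\|\hat\Sigma_{2,m}-\bar\Sigma_{2,m}\|_{op}\|\bar\varphi_{\lambda_2,m}\|)$, the same triangle-inequality bound on $\hat g_{2,m}-\bar g_{2,m}$ via Corollary~(\ref{cor:CME_rate_pointwise}), the same rank-one telescoping $\hat u_i\otimes\hat u_i-u_i\otimes u_i$ for the covariance difference, and the same appeal to the source condition on $C_{Z|W,A}$ to bound $\|\mu_{Z|W,A}(w,a)\|$ by $B_1\kappa^{2+\frac{\beta_1-1}{2}}$. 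The only thing you left slightly implicit is the precise power of $\kappa$ in the cross term, but your outline makes clear you know where each factor comes from.
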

\begin{proof}
    \begin{align*}
        &\|\hat{\varphi}_{\lambda_2, m} - \bar{\varphi}_{\lambda_2,m}\|_{\gH_{\gZ\gA}} = \|(\hat{\Sigma}_{2,m} + \lambda_2 I)^{-1} (\hat{g}_{2, m} - \bar{g}_{2,m} + \bar{g}_{2,m}) - \bar{\varphi}_{\lambda_2,m}\|_{\gH_{\gZ\gA}}\\
        &= \|(\hat{\Sigma}_{2,m} + \lambda_2 I)^{-1} (\hat{g}_{2, m} - \bar{g}_{2,m}) +  (\hat{\Sigma}_{2,m} + \lambda_2 I)^{-1}\bar{g}_{2,m} - \bar{\varphi}_{\lambda_2,m}\|_{\gH_{\gZ\gA}}\\
        &= \|(\hat{\Sigma}_{2,m} + \lambda_2 I)^{-1} (\hat{g}_{2, m} - \bar{g}_{2,m}) + (\hat{\Sigma}_{2,m} + \lambda_2 I)^{-1}\bar{g}_{2,m}- (\hat{\Sigma}_{2,m} + \lambda_2 I)^{-1} (\hat{\Sigma}_{2,m} + \lambda_2 I)\bar{\varphi}_{\lambda_2,m}\|_{\gH_{\gZ\gA}}\\
        &= \|(\hat{\Sigma}_{2,m} + \lambda_2 I)^{-1} (\hat{g}_{2, m} - \bar{g}_{2,m}) + (\hat{\Sigma}_{2,m} + \lambda_2 I)^{-1}\bar{g}_{2,m}- (\hat{\Sigma}_{2,m} + \lambda_2 I)^{-1} (\hat{\Sigma}_{2,m}\bar{\varphi}_{\lambda_2,m} + \underbrace{\lambda_2 \bar{\varphi}_{\lambda_2,m}}_{\bar{g}_{2,m} - \bar{\Sigma}_{2,m} \bar{\varphi}_{\lambda_2,m}})\|_{\gH_{\gZ\gA}}\\
        &= \|(\hat{\Sigma}_{2,m} + \lambda_2 I)^{-1} (\hat{g}_{2, m} - \bar{g}_{2,m}) + (\hat{\Sigma}_{2,m} + \lambda_2 I)^{-1}\bar{g}_{2,m}- (\hat{\Sigma}_{2,m} + \lambda_2 I)^{-1} \bar{g}_{2,m} \\ &- (\hat{\Sigma}_{2,m} + \lambda_2 I)^{-1}(\hat{\Sigma}_{2,m}\bar{\varphi}_{\lambda_2,m} - \bar{\Sigma}_{2,m} \bar{\varphi}_{\lambda_2,m})\|_{\gH_{\gZ\gA}}\\
        &= \|(\hat{\Sigma}_{2,m} + \lambda_2 I)^{-1} (\hat{g}_{2, m} - \bar{g}_{2,m})  -(\hat{\Sigma}_{2,m} + \lambda_2 I)^{-1}(\hat{\Sigma}_{2,m} - \bar{\Sigma}_{2,m} )\bar{\varphi}_{\lambda_2,m}\|_{\gH_{\gZ\gA}}\\
        &\le \|(\hat{\Sigma}_{2,m} + \lambda_2 I)^{-1} \|_{op} \|\hat{g}_{2, m} - \bar{g}_{2,m} \|_{\gH_{\gZ\gA}} + \|(\hat{\Sigma}_{2,m} + \lambda_2 I)^{-1} \|_{op} \|\hat{\Sigma}_{2,m} - \bar{\Sigma}_{2,m} 
 \|_{op} \|\bar{\varphi}_{\lambda_2,m}\|_{\gH_{\gZ\gA}} \\ &\le \lambda_2^{-1}\left( \|\hat{g}_{2, m} - \bar{g}_{2,m} \|_{\gH_{\gZ\gA}} + \|\hat{\Sigma}_{2,m} - \bar{\Sigma}_{2,m} 
 \|_{op} \|\bar{\varphi}_{\lambda_2,m}\|_{\gH_{\gZ\gA}} \right).
    \end{align*}
    We have two terms to bound. First, we observe that
    \begin{align}
        \|\hat{g}_{2, m} - \bar{g}_{2,m} \|_{\gH_{\gZ\gA}} &= \Bigg\| \frac{1}{m(m-1)} \sum_{\substack{i, j = 1 \\ j \ne i}} \hat{\mu}_{Z| W, A} (\Tilde{w}_j, \Tilde{a}_i) \otimes \phi_\gA(\Tilde{a}_i) - \frac{1}{m(m-1)} \sum_{\substack{i, j = 1 \\ j \ne i}} {\mu}_{Z| W, A} (\Tilde{w}_j, \Tilde{a}_i) \otimes \phi_\gA(\Tilde{a}_i)\Bigg\|_{\gH_{\gZ\gA}}\nonumber\\
        &= \Bigg\| \frac{1}{m(m-1)} \sum_{\substack{i, j = 1 \\ j \ne i}}( \hat{\mu}_{Z| W, A} (\Tilde{w}_j, \Tilde{a}_i)  -  {\mu}_{Z| W, A} (\Tilde{w}_j, \Tilde{a}_i)) \otimes \phi_\gA(\Tilde{a}_i)\Bigg\|_{\gH_{\gZ\gA}}\nonumber\\
        &\le  \frac{\kappa}{m(m-1)} \sum_{\substack{i, j = 1 \\ j \ne i}} \big\|  \hat{\mu}_{Z| W, A} (\Tilde{w}_j, \Tilde{a}_i)  -  {\mu}_{Z| W, A} (\Tilde{w}_j, \Tilde{a}_i) \big\|_{\gH_{\gZ}} \quad \text{(by Assumption (\ref{assum:Stage1ConsistencyKernelAssumptions}))}\nonumber\\
        &\le \kappa^3 r_1(\delta, n, \beta_1, p_1) \quad \text{(with probability $1 - \delta$, Corollary~(\ref{cor:CME_rate_pointwise}))}.  \nonumber
    \end{align}
    
    For the second component, we note that for $i = 1, \ldots, m$,
    $$
    \begin{aligned}
        \xi_i &:= (\hat{\mu}_{Z | W, A}(\Tilde{w}_i, \Tilde{a}_i) \otimes \phi_\gA(\Tilde{a}_i)) \otimes (\hat{\mu}_{Z | W, A}(\Tilde{w}_i, \Tilde{a}_i) \otimes \phi_\gA(\Tilde{a}_i)) -  ({\mu}_{Z | W, A}(\Tilde{w}_i, \Tilde{a}_i) \otimes \phi_\gA(\Tilde{a}_i)) \otimes ({\mu}_{Z | W, A}(\Tilde{w}_i, \Tilde{a}_i) \otimes \phi_\gA(\Tilde{a}_i)) \\ &= ((\hat{\mu}_{Z | W, A}(\Tilde{w}_i, \Tilde{a}_i) - \mu_{Z | W, A}(\Tilde{w}_i, \Tilde{a}_i)) \otimes \phi_\gA(\Tilde{a}_i)) \otimes ((\hat{\mu}_{Z | W, A}(\Tilde{w}_i, \Tilde{a}_i)-\mu_{Z | W, A}(\Tilde{w}_i, \Tilde{a}_i)) \otimes \phi_\gA(\Tilde{a}_i)) \\ &+ ((\hat{\mu}_{Z | W, A}(\Tilde{w}_i, \Tilde{a}_i) - \mu_{Z | W, A}(\Tilde{w}_i, \Tilde{a}_i)) \otimes \phi_\gA(\Tilde{a}_i)) \otimes (\mu_{Z | W, A}(\Tilde{w}_i, \Tilde{a}_i) \otimes \phi_\gA(\Tilde{a}_i)) \\ &+ (\mu_{Z | W, A}(\Tilde{w}_i, \Tilde{a}_i) \otimes \phi_\gA(\Tilde{a}_i)) \otimes ((\hat{\mu}_{Z | W, A}(\Tilde{w}_i, \Tilde{a}_i)-\mu_{Z | W, A}(\Tilde{w}_i, \Tilde{a}_i)) \otimes \phi_\gA(\Tilde{a}_i)),
    \end{aligned}
    $$
    and therefore, under Assumption~(\ref{assum:Stage1ConsistencyKernelAssumptions}), by the triangular inequality and by Corollary~(\ref{cor:CME_rate_pointwise}),
    $$
    \begin{aligned}   
        \|\xi_i\|_{op} &\leq \kappa^2 \big\|  \hat{\mu}_{Z| W, A} (\Tilde{w}_j, \Tilde{a}_i)  -  {\mu}_{Z| W, A} (\Tilde{w}_j, \Tilde{a}_i) \big\|_{\gH_{\gZ}}^2 + 2 \kappa^2 \big\|   {\mu}_{Z| W, A} (\Tilde{w}_j, \Tilde{a}_i) \big\|_{\gH_{\gZ}} \big\|  \hat{\mu}_{Z| W, A} (\Tilde{w}_j, \Tilde{a}_i)  -  {\mu}_{Z| W, A} (\Tilde{w}_j, \Tilde{a}_i) \big\|_{\gH_{\gZ}} \\ &\leq  \kappa^6 r_1(\delta, n, \beta_1, p_1)^2 + 2\kappa^4 \big\|   {\mu}_{Z| W, A} (\Tilde{w}_j, \Tilde{a}_i) \big\|_{\gH_{\gZ}} r_1(\delta, n, \beta_1, p_1).
     \end{aligned}
     $$
     with probability at least $1 - \delta$. We also note that under Assumptions~(\ref{asst:well_specifiedness}), (\ref{assum:Stage1ConsistencyKernelAssumptions}) and (\ref{asst:src_all_stages}-1), for all $(w,a) \in \gW \times \gA$,
    \begin{align*}
        \|{\mu}_{Z|W, A}(w, a)\|_{\gH_\gZ} &= \|{C}_{Z | W, A}(\phi_\gW(w) \otimes \phi_\gA(a))\|_{\gH_\gZ}\\
        & \le \|{C}_{Z | W, A}\|_{S_2(\gH_{\gW\gA}, \gH_\gZ)} \|\phi_\gW(w) \otimes \phi_\gA(a)\|_{\gH_{\gW\gA}}\\
        &\le \kappa^2 \|{C}_{Z | W, A}\|_{S_2(\gH_{\gW\gA}, \gH_\gZ)}\\
        &\le B_1 \kappa^{2+\frac{\beta_1-1}{2}}.
    \end{align*}
     Finally, we obtain that with probability at least $1-\delta$,
     \begin{align}
        \|\hat{\Sigma}_{2,m} - \bar{\Sigma}_{2,m}\|_{op} &= \left\| \frac{1}{m} \sum_{i = 1}^m \xi_i\right\|_{op} \leq \frac{1}{m} \sum_{i = 1}^m \left\|\xi_i\right\|_{op}\nonumber\\
        &\leq \kappa^6 r_1(\delta, n, \beta_1, p_1)^2 + 2B_1\kappa^{6+\frac{\beta_1-1}{2}}  r_1(\delta, n, \beta_1, p_1).
    \end{align}
    
\end{proof}

The following Theorem provides convergence rates in RKHS norm for the estimation of the bridge solution with minimum RKHS norm.

\begin{theorem}
    Suppose Assumptions~(\ref{asst:well_specifiedness}-1 \& 2), (\ref{assum:Stage1ConsistencyKernelAssumptions}), 
    (\ref{asst:src_all_stages}-1 \& 2) and (\ref{asst:evd_all_stages}-1 \& 2) hold and set $\lambda_{1} = \Theta \left(n^{-\frac{1}{\beta_1 + p_1}}\right)$ and $n = m^{\iota\frac{\beta_1+p_1}{\beta_1 - 1}}$ where $\iota > 0$. Then,
    \begin{itemize}
        \item[i.]If $\iota \le \frac{\beta_2+1}{\beta_2 + p_2}$ then $ \|\hat{\varphi}_{\lambda_2, m} - \bar{\varphi}_0\|_{\gH_{\gZ\gA}} = O_p \bigg( m^{-\frac{\iota}{2}\frac{\beta_2-1}{\beta_2+1}}\bigg)$ with $\lambda_{2} = \Theta \left(m^{-\frac{\iota}{\beta_2+1}}\right)$;
        \item[ii.] If $\iota \ge \frac{\beta_2+1}{\beta_2 + p_2}$ then $ \|\hat{\varphi}_{\lambda_2, m} - \bar{\varphi}_0\|_{\gH_{\gZ\gA}} = O_p \bigg(m^{-\frac{1}{2}\frac{\beta_2-1}{\beta_2 + p_2}} \bigg)$ with $\lambda_{2} = \Theta \left(m^{-\frac{1}{\beta_2 + p_2}}\right)$.
    \end{itemize}
    \label{theorem:ATEBridgeFunctionFinalBound}
\end{theorem}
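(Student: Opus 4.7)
The plan is to combine the three ingredients already established: the approximation error from Lemma~(\ref{lemma:SecondStageBiasBound}), the sample error of the idealized (oracle CME) estimator from Theorem~(\ref{theorem:SecondStageBound2}), and the plug-in error from Lemma~(\ref{thm:SecondStageBound3}) that accounts for replacing the true CME with its first-stage estimate. I would start by applying the triangle inequality
$$\|\hat{\varphi}_{\lambda_2, m} - \bar{\varphi}_0\|_{\gH_{\gZ\gA}} \le \|\hat{\varphi}_{\lambda_2, m} - \bar{\varphi}_{\lambda_2,m}\|_{\gH_{\gZ\gA}} + \|\bar{\varphi}_{\lambda_2,m} - \bar{\varphi}_0\|_{\gH_{\gZ\gA}},$$
and bound the two terms separately.

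For the second term, I invoke Theorem~(\ref{theorem:SecondStageBound2}) directly; absorbing the sub-dominant $(m(m-1)\lambda_2)^{-1/2}$ term (since it is smaller than the leading variance contribution for the stage-2-admissible regime of $\lambda_2$) and using $\gN(\lambda_2) \le D\lambda_2^{-p_2}$, this delivers a bound of the form $O_p\!\left(\lambda_2^{-1/2}(m\lambda_2^{p_2})^{-1/2} + \lambda_2^{(\beta_2-1)/2}\right)$. For the first term, I use Lemma~(\ref{thm:SecondStageBound3}), which scales as $\lambda_2^{-1} r_1(\delta, n, \beta_1, p_1)$ up to constants depending on $\|\bar{\varphi}_{\lambda_2,m}\|_{\gH_{\gZ\gA}}$; the latter is uniformly bounded by a triangle-inequality argument combining $\|\bar{\varphi}_0\|_{\gH_{\gZ\gA}} \le \kappa^{(\beta_2-1)/2}B_2$ (which follows from Proposition~(\ref{prop:min_norm_charac2}) and Assumption~(\ref{asst:src_all_stages}-2)) with the bound on $\|\bar{\varphi}_{\lambda_2,m} - \bar{\varphi}_0\|_{\gH_{\gZ\gA}}$, and the quadratic $r_1^2$ term is negligible relative to $r_1$ for $n$ large.

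Plugging $\lambda_1 = n^{-1/(\beta_1+p_1)}$ into $r_1$ and using the choice $n = m^{\iota(\beta_1+p_1)/(\beta_1-1)}$ yields $r_1 = \tilde{O}(m^{-\iota/2})$. Substituting back, the plug-in term becomes $O_p(m^{-\iota/2}\lambda_2^{-1})$, which I need to balance against the oracle stage-2 rate. A union bound over the two high-probability events and a standard case split on $\lambda_2$ finishes the argument:
\begin{itemize}
    \item In regime (i), choosing $\lambda_2 = m^{-\iota/(\beta_2+1)}$ makes the plug-in error $m^{-\iota/2}\lambda_2^{-1}$ match the bias term $\lambda_2^{(\beta_2-1)/2}$, both giving $m^{-\iota(\beta_2-1)/(2(\beta_2+1))}$, and the condition $\iota \le (\beta_2+1)/(\beta_2+p_2)$ ensures the variance term $(m\lambda_2^{1+p_2})^{-1/2}$ is dominated;
    \item In regime (ii), choosing $\lambda_2 = m^{-1/(\beta_2+p_2)}$ balances bias against variance in the standard Caponnetto--De~Vito fashion, yielding $m^{-(\beta_2-1)/(2(\beta_2+p_2))}$, and the condition $\iota \ge (\beta_2+1)/(\beta_2+p_2)$ guarantees that the plug-in term $m^{-\iota/2}\lambda_2^{-1}$ is of smaller order.
\end{itemize}

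The main obstacle is verifying the balance between the first-stage plug-in contribution and the second-stage error under both choices of $\lambda_2$: the exponent arithmetic must be carried out carefully so that the condition $\iota \lessgtr (\beta_2+1)/(\beta_2+p_2)$ arises naturally as the threshold at which the plug-in error transitions from being dominated by the bias to being dominated by the variance. Once this balancing is done, the statement follows immediately by taking $\delta$ to be a small constant in each concentration event.
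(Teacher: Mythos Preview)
Your proposal is correct and follows essentially the same route as the paper: the paper likewise combines Lemma~(\ref{thm:SecondStageBound3}) and Theorem~(\ref{theorem:SecondStageBound2}) via the triangle inequality, bounds $\|\bar{\varphi}_{\lambda_2,m}\|$ by $\|\bar{\varphi}_{\lambda_2,m}-\bar{\varphi}_0\|+\|\bar{\varphi}_0\|$, discards higher-order terms, and then carries out the identical exponent balancing for the two regimes of $\iota$. The only step you leave implicit is the routine check that the sample-size condition $m \gtrsim g_{\lambda_2}\lambda_2^{-1}$ from Theorem~(\ref{theorem:SecondStageBound2}) is satisfied for each choice of $\lambda_2$; the paper verifies this explicitly, but it follows immediately from $\lambda_2^{-1}m^{-1}\log\lambda_2^{-1}\to 0$ under both prescriptions.
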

\begin{proof}
Let us abbreviate $r_1(n) = r_1(\delta, n, \beta_1, p_1)$. From Lemma~(\ref{thm:SecondStageBound3}), we obtain with high probability that
\begin{align*}
    \|\hat{\varphi}_{\lambda_2, m} - \bar{\varphi}_0\|_{\gH_{\gZ\gA}} 
    &\le  \|\bar{\varphi}_{\lambda_2,m} - \bar{\varphi}_0\|_{\gH_{\gZ\gA}} + \frac{\bar{J}}{\lambda_2} r_1(n)\left(1 +  (r_1(n)+1) \|\bar{\varphi}_{\lambda_2,m}\|_{\gH_{\gZ\gA}}\right) \\ &\leq  \|\bar{\varphi}_{\lambda_2,m} - \bar{\varphi}_0\|_{\gH_{\gZ\gA}} + \frac{\bar{J}}{\lambda_2} r_1(n) + \frac{\bar{J}}{\lambda_2}  (1 + r_1(n)) r_1(n)(\|\bar{\varphi}_{\lambda_2,m} - \bar{\varphi}_0 \|_{\gH_{\gZ\gA}} + \|\bar{\varphi}_0\|_{\gH_{\gZ\gA}}),
\end{align*}
    where $\bar{J}$ is a constant depending on $\kappa, B_1, \beta_1$. Furthermore, from Theorem~(\ref{theorem:SecondStageBound2}),
    \begin{align*}
        \|\bar{\varphi}_{\lambda_2,m} - \bar{\varphi}_0\|_{\gH_{\gZ\gA}} = O_p\Bigg( r_2(m)\Bigg) \quad \text{ with } \quad r_2(m) =  \frac{1}{\sqrt{\lambda_2}} \Bigg( \sqrt{\frac{1}{m}\left(\frac{1}{\lambda_2^{p_2}}+\frac{1}{m\lambda_2}\right)}  + \sqrt{\frac{1}{m (m-1)}}\Bigg) + \lambda_2^{\frac{\beta_2-1}{2}}.
    \end{align*}
    as long as $m \geq 8\kappa^4\log(2/\delta) g_{\lambda_2} \lambda_2$. Note that the term $[m(m-1)]^{-1/2}$ is of {faster order} and can be removed. Putting it together, we obtain,
    \begin{align*}
        \|\hat{\varphi}_{\lambda_2, m} - \bar{\varphi}_0\|_{\gH_{\gZ\gA}} &= O_p\Bigg( \sqrt{\frac{1}{m\lambda_2^{1+p_2}}\left(1+\frac{1}{m\lambda_2^{1-p_2}}\right)} + \lambda_2^{\frac{\beta_2-1}{2}} + \frac{r_1(n)}{\lambda_2} + \frac{r_1(n)(1 + r_1(n)) (\|\bar \varphi_{\lambda_2,m} - \bar{\varphi}_0\| + \|\bar{\varphi}_0\|)}{\lambda_2}\Bigg)\\
        &= O \Bigg( \sqrt{\frac{1}{m\lambda_2^{1+p_2}}\left(1+\frac{1}{m\lambda_2^{1-p_2}}\right)} + \lambda_2^{\frac{\beta_2-1}{2}} + \frac{1}{\lambda_2}\left(\frac{1}{\sqrt{n}}\right)^{\frac{\beta_1-1}{\beta_1 + p_1}} \Bigg),\\
    \end{align*}
    where in the second equation we removed the last term that is of {faster order} with respect to $r_1(n)\lambda_2^{-1}$ and we plugged the expression for $r_1(n)$ given in Theorem~(\ref{th:CME_rate}). We are now ready to prove cases i. and ii. For each choice of $\lambda_2$ we are required to check the condition $g_{\lambda_2} \lambda_2^{-1} m^{-1} = O(1)$ where $g_{\lambda_2} = \log \left( 2e\mathcal{N}(\lambda_2) \frac{\|\Sigma_2\|_{op}+\lambda_2}{\|\Sigma_2\|_{op}}\right)$. Let us fix a lower bound $0 < c \leq 1$ with $c \leq \|\Sigma_{2}\|_{op}$. Since $\lambda_2 \rightarrow 0$ we choose $m_0 \geq 1$ such that $\lambda_2 \leq c \leq \min\{1, \|\Sigma_2\|_{op}\}$ for all $m \geq m_0$. We get for $m \geq m_0$, by Proposition~(\ref{prop:eff_dim_properties}),
    \begin{equation*}
    \begin{aligned}
    \frac{g_{\lambda_{2}}}{m \lambda_{2}} &= \frac{1}{m \lambda_{2}} \cdot \log \left(2 e \mathcal{N}\left(\lambda_{2}\right) \frac{\|\Sigma_2\|_{op}+\lambda_2}{\|\Sigma_2\|_{op}}\right) \\
    & \leq \frac{1}{m \lambda_{2}} \cdot \log \left(4 D e \lambda_{2}^{-p_2}\right) \\
    &= \frac{\log\left(4 D e\right) }{m \lambda_{2}}+\frac{p_2\log \lambda_{2}^{-1}}{m \lambda_{2}}.
    \end{aligned}
    \end{equation*}
    Therefore, to check $g_{\lambda_2} \lambda_2^{-1} m^{-1} = O(1)$, it is sufficient to check $ \lambda_2^{-1} m^{-1}\log \lambda_{2}^{-1} = O(1)$.
    \paragraph{Case i.} Let $n = m^{\iota \frac{\beta_1+p_1}{\beta_1 - 1}}$ with $\iota \leq \frac{\beta_2+1}{\beta_2 + p_2}$ and $\lambda_{2} = m^{-\frac{\iota}{\beta_2 + 1}}$. We first check $ \lambda_2^{-1} m^{-1}\log \lambda_{2}^{-1} = O(1)$, with this choice of $\lambda_{2}$, we have,
    $$
    \frac{\log \lambda_{2}^{-1}}{m \lambda_{2}}=\frac{\iota}{\beta_2 + 1}\frac{\log(m)}{m}m^{\frac{\iota}{\beta_2 + 1}}  =  \frac{\iota}{\beta_2 + 1}\frac{\log(m)}{m^{1-\frac{\iota}{\beta_2 + 1}}}.
    $$
    As $\iota \leq (\beta_2+1)(\beta_2 + p_2)^{-1}$ implies $1-\iota(\beta_2 + 1)^{-1} > 0$, we have $\log(\lambda_{2}^{-1})/(m \lambda_{2}) \rightarrow 0,$ as $m \rightarrow \infty$. Next note that with this choice of $\lambda_{2}$, we have,
    $$
    \lambda_{2}^{\frac{\beta_2-1}{2}} = m^{-\frac{\iota}{2}\frac{\beta_2-1}{\beta_2+1}} = \frac{1}{\lambda_{2}}\left(\frac{1}{\sqrt{n}}\right)^{\frac{\beta_1-1}{\beta_1 + p_1}}.
    $$
    Furthermore,
    $$
    \frac{1}{m\lambda_{2}^{1+p_2}} \leq \lambda_{2}^{\beta_2-1} \iff \iota \leq \frac{\beta_2+1}{\beta_2 + p_2},
    $$
    and
    $$
    \frac{1}{m\lambda_{2}^{1-p_2}} \leq \frac{1}{m\lambda_{2}^{1+p_2}} \leq 1.
    $$
    Therefore, 
    $$
        \|\hat{\varphi}_{\lambda_2, m} - \bar{\varphi}_0\|_{\gH_{\gZ\gA}} = O\Bigg(  \lambda_{2}^{\frac{\beta_2-1}{2}} \Bigg) = O\Bigg(  m^{-\frac{\iota}{2}\frac{\beta_2-1}{\beta_2+1}} \Bigg).
    $$

    \paragraph{Case ii.} Let $n = m^{\iota \frac{\beta_1+p_1}{\beta_1 - 1}}$ with $\iota \geq \frac{\beta_2+1}{\beta_2 + p_2}$ and  $\lambda_{2} = m^{-\frac{1}{\beta_2 + p_2}}$.  We first check $ \lambda_2^{-1} m^{-1}\log \lambda_{2}^{-1} = O(1)$, with this choice of $\lambda_{2}$, we have,
    $$
    \frac{\log \lambda_{2}^{-1}}{m \lambda_{2}}=\frac{1}{\beta_2 + p_2}\frac{\log(m)}{m}m^{\frac{1}{\beta_2 + p_2}}  =  \frac{1}{\beta_2 + p_2}\frac{\log(m)}{m^{1-\frac{1}{\beta_2 + p_2}}}.
    $$
    As $\beta_2+p_2 > 1$, we have $\log(\lambda_{2}^{-1})/(m \lambda_{2}) \rightarrow 0,$ as $m \rightarrow \infty$. Next note that with this choice of $\lambda_{2}$, we have,
    $$
    \frac{1}{\lambda_{2}}\left(\frac{1}{\sqrt{n}}\right)^{\frac{\beta_1-1}{\beta_1 + p_1}} = \lambda_{2}^{\frac{\beta_2-1}{2}}\left(\lambda_{2}^{-\frac{\beta_2+1}{2}}\left(\frac{1}{\sqrt{n}}\right)^{\frac{\beta_1-1}{\beta_1 + p_1}} \right),
    $$ 
    and 
    $$
    \lambda_{2}^{-\frac{\beta_2+1}{2}}\left(\frac{1}{\sqrt{n}}\right)^{\frac{\beta_1-1}{\beta_1 + p_1}} =  \sqrt{m}^{\frac{\beta_2+1}{\beta_2 + p_2}}\left(\frac{1}{\sqrt{n}}\right)^{\frac{\beta_1-1}{\beta_1 + p_1}} \leq 1 \iff n \geq  m^{\frac{\beta_1+p_1}{\beta_1 - 1}\frac{\beta_2+1}{\beta_2 + p_2}}.
    $$
    Therefore, $\lambda_{2}^{-1}\sqrt{n}^{-\frac{\beta_1-1}{\beta_1 + p_1}} \leq \lambda_{2}^{\frac{\beta_2-1}{2}}$ since we have assumed $n = m^{\iota \frac{\beta_1+p_1}{\beta_1 - 1}} \geq m^{\frac{\beta_1+p_1}{\beta_1 - 1}\frac{\beta_2+1}{\beta_2 + p_2}}.$ Furthermore,
    $$
    \frac{1}{m\lambda_{2}^{1+p_2}} = \left(\frac{1}{m} \right)^{\frac{\beta_2-1}{\beta_2 + p_2}} = \lambda_{2}^{\beta_2 - 1},
    $$
    and
    $$
    \frac{1}{m\lambda_{2}^{1-p_2}} = \left(\frac{1}{m}\right)^{\frac{\beta_2-1+2p_2}{\beta_2+p_2}} \leq \frac{1}{m\lambda_{2}^{1+p_2}}.
    $$
    Therefore, 
    $$
    \|\hat{\varphi}_{\lambda_2, m} - \bar{\varphi}_0\|_{\gH_{\gZ\gA}} = O\Bigg(  \lambda_{2}^{\frac{\beta_2-1}{2}} \Bigg) = O\Bigg(  \sqrt{m}^{-\frac{\beta_2-1}{\beta_2 + p_2}} \Bigg).
    $$
\end{proof}

\subsubsection{Third-Stage Regression Consistency Results} \label{sec:third_stage_ate}

The following theorem is obtained from \cite{li2024towards} that provides convergence guarantees for vector-valued regression with Tikhonov regularization. 

\begin{theorem}[Theorem~3 \cite{li2024towards}] \label{th:CME_rate_JMLR}
    Suppose Assumptions~(\ref{asst:well_specifiedness}-3), (\ref{assum:Stage1ConsistencyKernelAssumptions}), 
    (\ref{asst:src_all_stages}-3) and (\ref{asst:evd_all_stages}-3) hold and take $\lambda_{3} = \Theta \left(t^{-\frac{1}{\beta_3 + p_3}}\right)$. There is a constant $J_3 > 0$ independent of $t \geq 1$ and $\delta \in (0,1)$ such that \[\left\|\hat{C}_{YZ | A} - {C}_{YZ | A}\right\|_{S_2(\gH_{\gA}, \gH_\gZ)} \leq  J_3 \log(5/\delta) \left(\frac{1}{\sqrt{t}}\right)^{\frac{\beta_3-1}{\beta_3 + p_3}} =: r_3(\delta, t, \beta_3, p_3),\] is satisfied for sufficiently large $t \geq 1$ with probability at least $1-\delta$. 
\end{theorem}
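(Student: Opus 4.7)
The statement is a direct application of Theorem~3 of \cite{li2024towards}; my plan is to sketch the blueprint that produces it, which mirrors the vector-valued kernel ridge regression analysis of \cite{caponnetto2007optimal} extended to sub-exponential outputs in the style of \cite{fischer2020sobolev,mollenhauer2022learning}. First I would put the estimator in closed operator form. Introduce the empirical second moments $\hat\Sigma_3 = t^{-1}\sum_i \phi_\gA(\bar a_i) \otimes \phi_\gA(\bar a_i)$ and $\hat T_3 = t^{-1}\sum_i \bar y_i \phi_\gZ(\bar z_i) \otimes \phi_\gA(\bar a_i)$, with population counterparts $\Sigma_3$ and $T_3 = \E[Y \phi_\gZ(Z) \otimes \phi_\gA(A)]$; then $\hat C_{YZ|A} = \hat T_3(\hat\Sigma_3+\lambda_3 I)^{-1}$ and, by the tower property combined with well-specifiedness (Assumption~(\ref{asst:well_specifiedness}-3)), $T_3 = C_{YZ|A}\Sigma_3$. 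Setting the deterministic regularized operator $C_{\lambda_3} := T_3(\Sigma_3+\lambda_3 I)^{-1}$, I decompose
\[
\hat C_{YZ|A}- C_{YZ|A} = \underbrace{(\hat C_{YZ|A} - C_{\lambda_3})}_{\text{variance}} + \underbrace{(C_{\lambda_3} - C_{YZ|A})}_{\text{bias}}.
\]

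The bias term equals $-\lambda_3 C_{YZ|A}(\Sigma_3+\lambda_3 I)^{-1}$. Factoring $C_{YZ|A} = (C_{YZ|A}\Sigma_3^{-(\beta_3-1)/2})\,\Sigma_3^{(\beta_3-1)/2}$ using Assumption~(\ref{asst:src_all_stages}-iii) and invoking the elementary spectral inequality $\lambda\sigma^{s}(\sigma+\lambda)^{-1} \le \lambda^{(s+2)/2}$ for $s \in [0,2]$ (via Lemma~\ref{lma:steinwart_lemma}), I obtain $\|C_{\lambda_3}-C_{YZ|A}\|_{S_2} \le B_3\lambda_3^{(\beta_3-1)/2}$ for $\beta_3 \in (1,3]$; the fact that the saturation point is $3$ rather than the usual $2$ is precisely the Hilbert--Schmidt-norm phenomenon noted in Remark~\ref{ref:higher_saturation}. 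For the variance term, the standard algebraic identity $\hat C_{YZ|A}-C_{\lambda_3} = [(\hat T_3-T_3) - C_{\lambda_3}(\hat\Sigma_3-\Sigma_3)](\hat\Sigma_3+\lambda_3 I)^{-1}$ allows me to insert $(\Sigma_3+\lambda_3 I)^{\pm 1/2}$ and apply the resolvent-perturbation bound of Lemma~\ref{lemma:steinwart_cov_conc} to absorb $(\Sigma_3+\lambda_3 I)^{1/2}(\hat\Sigma_3+\lambda_3 I)^{-1}(\Sigma_3+\lambda_3 I)^{1/2}$ into a constant, provided $t\lambda_3$ exceeds an effective-dimension-dependent threshold.

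This reduces the problem to controlling $\|(\hat T_3-T_3)(\Sigma_3+\lambda_3 I)^{-1/2}\|_{S_2}$ and $\|(\Sigma_3+\lambda_3 I)^{-1/2}(\hat\Sigma_3-\Sigma_3)\|_{op}$, both averages of i.i.d.\ centered Hilbert-space-valued random variables that I would bound by the Bernstein inequality of Theorem~\ref{theo:bernstein}. The second uses only boundedness of $k_\gA$ (Assumption~(\ref{assum:Stage1ConsistencyKernelAssumptions})); the first uses the Bernstein moment condition~\eqref{eq:mom_stage_3} together with boundedness of $k_\gZ,k_\gA$ to verify $\E\|\xi\|_{S_2}^q \le \tfrac{1}{2}q!\sigma_*^2 L_*^{q-2}$ with $\sigma_*^2 \asymp \gN_3(\lambda_3)$ and $L_* \asymp \lambda_3^{-1/2}$, where $\gN_3(\lambda_3) := \operatorname{Tr}((\Sigma_3+\lambda_3 I)^{-1}\Sigma_3) \le D\lambda_3^{-p_3}$ by Assumption~(\ref{asst:evd_all_stages}-iii). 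This delivers a variance rate of order $\log(5/\delta)\bigl(\sqrt{1/(t\lambda_3^{p_3})} + 1/(t\lambda_3)\bigr)$. Balancing bias $\lambda_3^{(\beta_3-1)/2}$ against the leading variance term $(t\lambda_3^{p_3})^{-1/2}$ gives the optimal choice $\lambda_3 = \Theta(t^{-1/(\beta_3+p_3)})$ and the rate $t^{-(\beta_3-1)/(2(\beta_3+p_3))}$, with the secondary variance term of faster order as soon as $t\lambda_3 \to \infty$, which produces the ``sufficiently large $t$'' qualifier.

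The main technical obstacle is the Bernstein moment bound for $(\hat T_3 - T_3)(\Sigma_3+\lambda_3 I)^{-1/2}$ under unbounded $Y$: unlike the Caponnetto--De Vito analysis, which assumes $|Y|$ bounded, verifying the sub-exponential moment structure of the corresponding centered Hilbert--Schmidt-valued random variable requires decomposing $Y = \E[Y\mid A] + (Y-\E[Y\mid A])$, applying \eqref{eq:mom_stage_3} to the residual part, and combining this with Cauchy--Schwarz and kernel boundedness on the conditional-mean part to reconstruct a Bernstein-type moment sequence with the correct $\sigma_*$ and $L_*$. This is precisely the extension carried out in \cite{li2024towards} that upgrades the bounded-output result to sub-exponential outputs while preserving the optimal Caponnetto--De Vito rate.
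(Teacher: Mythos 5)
Your proposal is correct in substance, but it takes a different route from the paper: you re-derive the full vector-valued kernel ridge regression analysis (closed operator form, the bias identity $C_{\lambda_3}-C_{YZ|A}=-\lambda_3 C_{YZ|A}(\Sigma_3+\lambda_3 I)^{-1}$, the variance decomposition, Bernstein concentration with effective dimension $\gN_3(\lambda_3)\le D\lambda_3^{-p_3}$, and the bias--variance balance), whereas the paper's actual proof is a short verification of the hypotheses of Theorem~3 of \cite{li2024towards} followed by a black-box application of that theorem with $\gamma=1$ (Hilbert--Schmidt norm) and the $\beta_3\le 3$ saturation extension of Remark~\ref{ref:higher_saturation}. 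The only genuinely non-routine content in the paper's proof is exactly the point you flag in your last paragraph: showing that the Bernstein condition of Assumption~(\ref{assum:Stage1ConsistencyKernelAssumptions}), Eq.~(\ref{eq:mom_stage_3}), on the scalar residual $Y-\E[Y\mid A]$ yields the required vector-valued noise condition $\E[\|Y\phi_\gZ(Z)-\Psi(A)\|^q_{\gH_\gZ}\mid A]\le \tfrac12 q!\tilde\sigma^2\tilde R^{q-2}$. The paper does this by a chain of equivalences (Bernstein condition $\Leftrightarrow$ sub-exponentiality via \citet[Remark 4.9]{mollenhauer2022learning}, then boundedness of $\|\phi_\gZ(Z)\|_{\gH_\gZ}\le\kappa$ to pass from $|Y|$ to $\|Y\phi_\gZ(Z)\|_{\gH_\gZ}$), while you propose a direct moment computation after splitting $Y=\E[Y\mid A]+(Y-\E[Y\mid A])$; these are essentially equivalent, though both routes implicitly use that the conditional-mean part is uniformly controlled (in the paper via $\|\Psi(A)\|_{\gH_\gZ}\le\kappa\|C_{YZ\mid A}\|_{op}$ from Assumption~(\ref{asst:well_specifiedness}-3)). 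What your approach buys is a self-contained argument that makes visible where each assumption enters; what the paper's approach buys is brevity and a clean delegation of the concentration machinery to \cite{li2024towards}, which is appropriate since the theorem is stated as an import of that result.
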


\begin{proof}
    To apply Theorem 3 \cite{li2024towards}, we prove that the following noise condition is satisfied, for $q \geq 2$, $P_A-$almost surely, 
    \begin{equation} \label{eq:MOM_proof}
    \E[\|Y\phi_\gZ(Z) - \Psi(A)\|^q_{\gH_{\gZ}} \mid A] \leq \frac{1}{2}q!\tilde{\sigma}^2\tilde{R}^{q-2},
    \end{equation}
    for some $\tilde{\sigma} > 0$ and $\tilde{R} > 0$. We first notice that the Bernstein condition given by Assumption~\ref{assum:Stage1ConsistencyKernelAssumptions} Eq.~(\ref{eq:mom_stage_3}) implies that conditionally on $A$, $|Y - \E[Y \mid A]|$ is sub-exponential \citep[Section 1.4][]{buldygin2000metric}, which then implies sub-exponentiality of $|Y|$ given $A$. As, $\|\phi_\gZ(Z)\|_{\gH_{\gZ}} \leq \kappa$ is almost surely bounded, we obtain that $\|Y\phi_\gZ(Z)\|_{\gH_{\gZ}}$ given $A$ is sub-exponential. Finally, \citet[][Remark 4.9 and Appendix A.2]{mollenhauer2022learning} shows that $\|Y\phi_\gZ(Z)\|_{\gH_{\gZ}}$ given $A$ being sub-exponential implies Eq.~(\ref{eq:MOM_proof}). We can then apply Theorem 3 \cite{li2024towards} with $\gamma=1$ which corresponds to the Hilbert-Schmidt norm of $S_2(\gH_{\gA}, \gH_\gZ)$. Similarly to the proof of Theorem~\ref{th:CME_rate}, we can take the smoothness up to $\beta_3=3$ as we work with the RKHS norm. 
\end{proof}

\begin{corollary} \label{cor:CME_rate_pointwise_stage3}
Under the same assumptions as Theorem~(\ref{th:CME_rate_JMLR}), with $\lambda_{3} = \Theta \left(n^{-\frac{1}{\beta_3 + p_3}}\right)$, for any $\delta \in (0, 1)$, the following holds with probability at least $1 - \delta$:
    \begin{align*}
        \sup_{a \in \gA}\|\hat{\mu}_{YZ|A}(a) - {\mu}_{YZ|A}(a)\|_{\gH_\gZ} \le \kappa r_3(\delta, t, \beta_3, p_3).
    \end{align*}
\end{corollary}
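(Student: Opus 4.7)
The plan is to derive this corollary from Theorem~(\ref{th:CME_rate_JMLR}) in the same spirit as Corollary~(\ref{cor:CME_rate_pointwise}) was derived from Theorem~(\ref{th:CME_rate}), simply trading the Hilbert-Schmidt bound on the operator for a uniform bound on its action on bounded feature vectors. By Assumption~(\ref{asst:well_specifiedness}-3), the target $\mu_{YZ|A}(a) := \E[Y\phi_\gZ(Z) \mid A=a]$ equals $C_{YZ|A}\phi_\gA(a)$, and by construction $\hat{\mu}_{YZ|A}(a) = \hat{C}_{YZ|A}\phi_\gA(a)$, so the pointwise error is the action of a single operator:
\begin{equation*}
\hat{\mu}_{YZ|A}(a) - \mu_{YZ|A}(a) = \bigl(\hat{C}_{YZ|A} - C_{YZ|A}\bigr)\phi_\gA(a).
\end{equation*}

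I would then bound the $\gH_\gZ$-norm via the elementary operator inequality $\|T v\|_{\gH_\gZ} \le \|T\|_{S_2(\gH_\gA,\gH_\gZ)} \|v\|_{\gH_\gA}$, valid for any Hilbert-Schmidt operator $T$. Combined with the kernel boundedness in Assumption~(\ref{assum:Stage1ConsistencyKernelAssumptions}), which gives $\|\phi_\gA(a)\|_{\gH_\gA} \le \kappa$ for every $a \in \gA$, this yields
\begin{equation*}
\sup_{a \in \gA}\bigl\|\hat{\mu}_{YZ|A}(a) - \mu_{YZ|A}(a)\bigr\|_{\gH_\gZ} \le \kappa \,\bigl\|\hat{C}_{YZ|A} - C_{YZ|A}\bigr\|_{S_2(\gH_\gA,\gH_\gZ)}.
\end{equation*}
The right-hand side does not depend on $a$, so the supremum passes through without additional loss.

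Finally, I would invoke Theorem~(\ref{th:CME_rate_JMLR}), which under the stated assumptions and choice $\lambda_3 = \Theta(t^{-1/(\beta_3+p_3)})$ gives $\|\hat{C}_{YZ|A} - C_{YZ|A}\|_{S_2(\gH_\gA,\gH_\gZ)} \le r_3(\delta,t,\beta_3,p_3)$ with probability at least $1-\delta$. Multiplying by $\kappa$ produces the claimed bound. There is no real obstacle here: this is a short deterministic reduction to the Hilbert-Schmidt consistency result, and the only place where assumptions enter nontrivially is in the boundedness of $k_\gA$ (to control $\|\phi_\gA(a)\|$ uniformly) and in the preceding Theorem~(\ref{th:CME_rate_JMLR}) itself, whose sub-exponential noise analysis is the substantive ingredient already established.
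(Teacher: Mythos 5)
Your argument is correct and is essentially the same reduction the paper uses (it mirrors the proof of Corollary~(\ref{cor:CME_rate_pointwise}) for stage 1, with the single feature $\phi_\gA(a)$ of norm at most $\kappa$ in place of the tensor feature of norm at most $\kappa^2$, hence the factor $\kappa$ rather than $\kappa^2$). Nothing further is needed.
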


\subsubsection{Consistency for Dose-Response Curve}\label{sec:final_stage_ate}

\begin{theorem}
Suppose Assumptions~(\ref{asst:well_specifiedness}), (\ref{assum:Stage1ConsistencyKernelAssumptions}), (\ref{asst:src_all_stages}) and (\ref{asst:evd_all_stages}) hold and set $\lambda_1 = \Theta \left(n^{-\frac{1}{\beta_1 + p_1}}\right)$, $\lambda_{3} = \Theta \left(t^{-\frac{1}{\beta_3 + p_3}}\right)$ and $n = m^{\iota\frac{\beta_1+p_1}{\beta_1 - 1}}$ where $\iota > 0$. Then,
    \begin{itemize}
        \item[i.]If $\iota \le \frac{\beta_2+1}{\beta_2 + p_2}$ then $\|\hat{f}_{ATE} - f_{ATE}\|_\infty = O_p \bigg(\sqrt{t}^{-\frac{\beta_3-1}{\beta_3 + p_3}} + m^{-\frac{\iota}{2}\frac{\beta_2-1}{\beta_2+1}}\bigg)$ with $\lambda_2 = \Theta \left(m^{-\frac{\iota}{\beta_2+1}}\right)$;
        \item[ii.] If $\iota \ge \frac{\beta_2+1}{\beta_2 + p_2}$ then $ \|\hat{f}_{ATE} - f_{ATE}\|_\infty = O_p \bigg(\sqrt{t}^{-\frac{\beta_3-1}{\beta_3 + p_3}} + m^{-\frac{1}{2}\frac{\beta_2-1}{\beta_2 + p_2}} \bigg)$ with $\lambda_2 = \Theta \left(m^{-\frac{1}{\beta_2 + p_2}}\right)$.
    \end{itemize}
    \label{theorem:ATEB_FINAL}
\end{theorem}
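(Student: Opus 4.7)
The plan is to take the per-point decomposition already written out in Eq.~(\ref{eq:ate_decomposition}) at the start of the proof sketch, take the supremum over $a \in \gA$, and then substitute the two main ingredients that have already been established: the stage-2 rate for $\|\hat{\varphi}_{\lambda_2,m} - \bar{\varphi}_0\|_{\gH_{\gZ\gA}}$ from Theorem~(\ref{theorem:ATEBridgeFunctionFinalBound}), and the uniform stage-3 rate for $\|\hat{\Psi}(a) - \Psi(a)\|_{\gH_\gZ}$ from Corollary~(\ref{cor:CME_rate_pointwise_stage3}).

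First, I would verify that the two ``nuisance'' quantities in the decomposition are deterministically bounded. For $\|\bar{\varphi}_0\|_{\gH_{\gZ\gA}}$, Proposition~(\ref{prop:min_norm_charac2}) combined with Assumption~(\ref{asst:src_all_stages}-2) gives $\|\bar{\varphi}_0\|_{\gH_{\gZ\gA}} = \|\Sigma_2^{(\beta_2-1)/2}\Sigma_2^{-(\beta_2-1)/2}\bar{\varphi}_0\|_{\gH_{\gZ\gA}} \leq \|\Sigma_2\|_{op}^{(\beta_2-1)/2} B_2 \leq \kappa^{\beta_2-1}B_2$. For $\sup_{a\in\gA}\|\Psi(a)\|_{\gH_\gZ}$, the well-specification Assumption~(\ref{asst:well_specifiedness}-3) together with Assumption~(\ref{assum:Stage1ConsistencyKernelAssumptions}) and Assumption~(\ref{asst:src_all_stages}-3) yields $\|\Psi(a)\|_{\gH_\gZ} = \|C_{YZ|A}\phi_\gA(a)\|_{\gH_\gZ} \leq \kappa \|C_{YZ|A}\|_{S_2} \leq \kappa \cdot B_3 \kappa^{\beta_3-1}$. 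Both are $O(1)$ constants.

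Second, I would substitute into the supremum of Eq.~(\ref{eq:ate_decomposition}):
\begin{align*}
\|\hat{f}_{ATE} - f_{ATE}\|_\infty
&\leq \kappa\Bigl( \|\hat{\varphi}_{\lambda_2,m}-\bar{\varphi}_0\|_{\gH_{\gZ\gA}}\,\sup_{a}\|\hat{\Psi}(a)-\Psi(a)\|_{\gH_\gZ} \\
&\quad+\|\bar{\varphi}_0\|_{\gH_{\gZ\gA}}\sup_{a}\|\hat{\Psi}(a)-\Psi(a)\|_{\gH_\gZ}+\|\hat{\varphi}_{\lambda_2,m}-\bar{\varphi}_0\|_{\gH_{\gZ\gA}}\sup_{a}\|\Psi(a)\|_{\gH_\gZ}\Bigr).
\end{align*}
By Corollary~(\ref{cor:CME_rate_pointwise_stage3}), with the choice $\lambda_3 = t^{-1/(\beta_3+p_3)}$, the stage-3 supremum is $O_p\bigl(t^{-\frac12\frac{\beta_3-1}{\beta_3+p_3}}\bigr)$. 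Theorem~(\ref{theorem:ATEBridgeFunctionFinalBound}) gives the stage-2 rate $O_p\bigl(m^{-\frac{\iota}{2}\frac{\beta_2-1}{\beta_2+1}}\bigr)$ in case (i) and $O_p\bigl(m^{-\frac12\frac{\beta_2-1}{\beta_2+p_2}}\bigr)$ in case (ii). Since both error rates are $o_p(1)$, the cross term is of strictly faster order than the two leading terms and can be absorbed, leaving the advertised rate in each case.

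The main technical obstacle is essentially absent here because the heavy lifting (stage-2 convergence in RKHS norm to the \emph{minimum-norm} bridge, and a Bernstein-based stage-3 rate under only sub-exponential $Y$) has already been pushed through in the preceding sections; the only point requiring care is making sure the supremum over $a$ can be pulled inside, which is why Corollary~(\ref{cor:CME_rate_pointwise_stage3}) was stated in the uniform form and why we needed uniform boundedness of $\|\Psi(a)\|_{\gH_\gZ}$. Applying a union bound over the two high-probability events from stages 2 and 3 and collecting the two leading terms then yields Theorem~(\ref{theorem:ATEB_FINAL}).
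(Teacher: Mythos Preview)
Your proposal is correct and follows essentially the same route as the paper: start from the pointwise decomposition~(\ref{eq:ate_decomposition}), bound $\|\bar{\varphi}_0\|_{\gH_{\gZ\gA}}$ and $\sup_a\|\Psi(a)\|_{\gH_\gZ}$ by constants via Proposition~(\ref{prop:min_norm_charac2}) and Assumptions~(\ref{asst:well_specifiedness}-3), (\ref{asst:src_all_stages}), then plug in Theorem~(\ref{theorem:ATEBridgeFunctionFinalBound}) and Corollary~(\ref{cor:CME_rate_pointwise_stage3}) and drop the faster cross term. The only differences are cosmetic (your powers of $\kappa$ in the constants differ slightly from the paper's, but this does not affect the rate).
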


\begin{proof} We recall that we showed in Equation~(\ref{eq:ate_decomposition}) that for any $a \in \gA$,
$$
\begin{aligned}
    |&\hat{f}_{ATE}(a) - f_{ATE}(a)| \\ &\leq \kappa\left(\|\hat{\varphi}_{\lambda_2,m} - \bar{\varphi}_0\|_{\gH_{\gZ\gA}} \|\hat{\Psi}(a) - \Psi(a)\|_{\gH_{\gZ}} + \|\bar{\varphi}_0\|_{\gH_{\gZ\gA}} \| \hat{\Psi}(a) - \Psi(a) \|_{\gH_{\gZ\gA}} + \| \hat{\varphi}_{\lambda_2,m} - \varphi_0 \|_{\gH_{\gZ\gA}} \|\Psi(a) \|_{\gH_{\gZ}}\right).
\end{aligned}    
$$
Note that, under Assumption~(\ref{asst:well_specifiedness}-3),  Assumption~(\ref{assum:Stage1ConsistencyKernelAssumptions}) and Assumption~(\ref{asst:src_all_stages}-3),
    \begin{align*}
        \|\Psi(a)\|_{\gH_{\gZ}} = \|C_{YZ \mid A}\phi_\gA(a)\|_{\gH_{\gZ}} \leq \kappa \|C_{YZ \mid A}\Sigma_3^{-\frac{\beta_3-1}{2}}\Sigma_3^{\frac{\beta_3-1}{2}}\|_{S_2(\gH_{\gA}, \gH_{\gZ})} \leq B_3 \kappa^{1 + \frac{\beta_3-1}{2}} =: \alpha_1.
    \end{align*}
    Furthermore, under Assumption~(\ref{asst:src_all_stages}-2), by Proposition~\ref{prop:min_norm_charac2},
    $$
    \|\bar{\varphi}_0\|_{\gH_{\gZ\gA}} = \|\Sigma_2^{\frac{\beta_2-1}{2}}\Sigma_2^{-\frac{\beta_2-1}{2}}\bar{\varphi}_0\|_{\gH_{\gZ\gA}}  \leq B_2 \kappa^{\frac{\beta_2-1}{2}} =: \alpha_2.
    $$
    
    Therefore,
    $$
|\hat{f}_{ATE}(a) - f_{ATE}(a)| \leq \kappa\left(\|\hat{\varphi}_{\lambda_2,m} - \bar{\varphi}_0\|_{\gH_{\gZ\gA}} \|\hat{\Psi}(a) - \Psi(a)\|_{\gH_{\gZ}} + \alpha_2 \| \hat{\Psi}(a) - \Psi(a) \|_{\gH_{\gZ\gA}} + \alpha_1\| \hat{\varphi}_{\lambda_2,m} - \varphi_0 \|_{\gH_{\gZ\gA}}\right) .
$$
As the first term is faster than the two last terms, plugging the results of Theorem~\ref{theorem:ATEBridgeFunctionFinalBound} and Corollary~\ref{cor:CME_rate_pointwise_stage3}, we obtain the final bound.
\end{proof}

\subsection{Consistency Results for Conditional Dose-Response}
In this section, we present the consistency result for the estimation of $f_{\text{ATT}}$. We note that the first-stage and third-stage regressions are identical to the ones for $f_{\text{ATE}}$. Hence, we first derive the consistency of the second-stage regression of conditional dose-response curve. Then, by incorporating the first and third-stage consistency results for dose-response curve from previous sections, we prove the non-asymptotic uniform consistency guarantees for conditional dose-response curve estimation.

Throughout the whole section $a' \in \gA$ is fixed.

\subsubsection{Second-Stage Regression Consistency Results}

First, we will assume that the problem is well-defined. 

\begin{assumption}[RKHS bridge for ATT]
    There exists $\varphi_0 \in \gH_{\gZ\gA\gA}$ that is a solution of Equation (\ref{eq:AlternativeProxyATTBridgeFunction}). 
    \label{asst:well_specifiedness_ATT}
\end{assumption}

Similarly to ATE we define the minimum norm bridge solution for ATT.

\begin{definition}[Bridge solution with minimum RKHS norm] \label{def:min_norm_bridge_ATT}
We define 
\begin{align*}
\bar{\varphi}_0 = \argmin_{\varphi 
    \in \gH_{\gZ\gA \gA}} \|\varphi\|_{\gH_{\gZ\gA \gA}} \quad \text{ s.t. } \quad \E[\varphi(Z, A, a') | W, A] = r(W, A, a'),
\end{align*}
where $r(W, A, a') = \frac{p(W, a') p(a)}{p(W, a) p(a')}$.
\end{definition}

\begin{definition}
We define the second stage covariance operator for the conditional dose-response: 
$$
\Sigma_4 := \E_{W, A}\left[ \left(\big(\mu_{Z|W, A} (W, A) \otimes \phi_\gA(A) \otimes \phi_\gA(a')\big) \otimes \big(\mu_{Z|W, A} (W, A)\otimes \phi_\gA(A) \otimes \phi_\gA(a') \big)\right)\right].
$$
\end{definition}

\begin{proposition} \label{prop:min_norm_charac_ATT} Under Assumption~(\ref{asst:well_specifiedness_ATT}),
 $\bar{\varphi}_0$ is well-defined and is the unique element of $\gH_{\gZ\gA\gA}$ satisfying Equation (\ref{eq:AlternativeProxyATTBridgeFunction}) and such that $\bar{\varphi}_0 \in \operatorname{null}(\Sigma_4)^{\perp}$.
\end{proposition}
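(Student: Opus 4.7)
The plan is to mirror the proof of Proposition~(\ref{prop:min_norm_charac}) essentially verbatim, with the only modification being an extra tensor factor $\phi_\gA(a')$ appearing in the feature map used to represent the bridge equation. The pivot is to introduce the bounded linear operator
$$
A : \gH_{\gZ\gA\gA} \to \gL_2(\gW \times \gA, p_{W,A}), \quad \varphi \mapsto \langle \varphi, \mu_{Z|W,A}(\cdot,\cdot) \otimes \phi_\gA(\cdot) \otimes \phi_\gA(a') \rangle_{\gH_{\gZ\gA\gA}},
$$
so that by the reproducing property the bridge Equation~(\ref{eq:AlternativeProxyATTBridgeFunction}) reads $A\varphi = r(\cdot,\cdot,a')$, with $r(W,A,a') = p(W,a')p(A)/[p(W,A)p(a')]$. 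Under Assumption~(\ref{asst:well_specifiedness_ATT}), the pre-image $A^{-1}(\{r\}) \subseteq \gH_{\gZ\gA\gA}$ is non-empty.

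Next I would use the orthogonal decomposition $\gH_{\gZ\gA\gA} = \operatorname{null}(A) \oplus \operatorname{null}(A)^{\perp}$: for any $\varphi \in A^{-1}(\{r\})$, writing $\varphi = \varphi' + \varphi''$ with $\varphi' \in \operatorname{null}(A)^{\perp}$ and $\varphi'' \in \operatorname{null}(A)$, linearity of $A$ gives $A\varphi' = r$, while Pythagoras yields $\|\varphi\|^2 \geq \|\varphi'\|^2$. This shows the minimum norm solution exists and lies in $\operatorname{null}(A)^{\perp} \cap A^{-1}(\{r\})$. Uniqueness on this intersection follows because if two such elements $\varphi,\tilde\varphi$ existed, then $\varphi - \tilde\varphi$ would be both in $\operatorname{null}(A)$ and in $\operatorname{null}(A)^{\perp}$, hence zero.

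The final step is to identify $\operatorname{null}(A)$ with $\operatorname{null}(\Sigma_4)$. A direct computation of the adjoint $A^*$ (analogous to the ATE case) gives $A^*A = \Sigma_4$, because
$$
\Sigma_4 = \E_{W,A}\bigl[ (\mu_{Z|W,A}(W,A) \otimes \phi_\gA(A) \otimes \phi_\gA(a')) \otimes (\mu_{Z|W,A}(W,A) \otimes \phi_\gA(A) \otimes \phi_\gA(a')) \bigr].
$$
Therefore $\operatorname{null}(A) = \operatorname{null}(A^*A) = \operatorname{null}(\Sigma_4)$, which converts the characterization in $\operatorname{null}(A)^{\perp}$ to the claimed characterization in $\operatorname{null}(\Sigma_4)^{\perp}$.

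I do not anticipate any real obstacle: the argument is structurally identical to Proposition~(\ref{prop:min_norm_charac}), the fixed factor $\phi_\gA(a')$ simply passes through all manipulations since $a'$ is fixed throughout, and the well-definedness of $A$ as a bounded operator into $\gL_2(\gW \times \gA, p_{W,A})$ follows from the boundedness of the kernels in Assumption~(\ref{assum:Stage1ConsistencyKernelAssumptions}). The only mild care needed is to verify that $A$ indeed maps into $\gL_2$ (which is immediate by Cauchy--Schwarz combined with the bounded kernel assumption), so that the adjoint $A^*$ and the identity $A^*A = \Sigma_4$ are meaningful.
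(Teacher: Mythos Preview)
Your proposal is correct and matches the paper's approach exactly: the paper introduces the same operator $A: \gH_{\gZ\gA\gA} \to \gL_2(\gW \times \gA, p_{W,A})$ via the reproducing property with the extra tensor factor $\phi_\gA(a')$, and then explicitly states that the rest follows from the steps of Proposition~(\ref{prop:min_norm_charac}). Your write-up is in fact more detailed than the paper's, which simply defers to the earlier proof.
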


\begin{proof}
Note that the bridge equation, for an element $\varphi \in \gH_{\gZ\gA\gA}$, can be written as
$$
r(W, A, a') = E[\varphi(Z, A, a') | W, A] = \langle \varphi, \mu_{Z \mid W,A}(W,A) \otimes \phi_{\gA}(A) \otimes \phi_\gA(a')\rangle_{\gH_{\gZ\gA}} = A\varphi,
$$
by using the reproducing property and introducing the operator $A: \gH_{\gZ\gA\gA} \to \gL_2(\gW \times \gA, p_{W, A}), \varphi \mapsto \langle \varphi, \mu_{Z \mid W,A}(W,A) \otimes \phi_{\gA}(A) \otimes \phi_\gA(a') \rangle_{\gH_{\gZ\gA\gA}}$. The rest of the proof follows from the steps of the proof of Proposition~(\ref{prop:min_norm_charac}).
\end{proof}

Recall that we have the following loss at the population level:
\begin{align*}
    \gL^{2SR}(\varphi) = \E[(r(W, A, a') - \E[\varphi(Z, A, a') | W, A])^2].
\end{align*} This loss can be equivalently written as
\begin{align*}
    \gL^{2SR}(\varphi) = \E[ \E[\varphi(Z, A, a') | W, A]^2] - 2 \E_{W| A' = a'} \E_{A}[ \E[\varphi(Z, A, a') | W, A] ] + \text{const.}
\end{align*}
We introduce the regularized version of the population loss, for $\lambda_2 > 0$,
$$
 \gL^{2SR}_{\lambda_2}(\varphi) =  \gL^{2SR}(\varphi) + \lambda_2 \|\varphi\|^2_{\gH_{\gZ\gA\gA}}.
$$

We modify Assumptions~(\ref{asst:src_all_stages}) and (\ref{asst:evd_all_stages}) for ATT as follows.

\begin{assumption}\label{asst:src_all_stages_ATT}  We assume that the following condition holds: there exists a constant $B_4 < \infty$ such that for a given $\beta_4 \in (1, 3]$, 
$$
    \|\Sigma_4^{-\frac{\beta_4-1}{2}}\bar{\varphi}_0\|_{\gH_{\gZ\gA\gA}} \le B_4.
$$
\end{assumption}

\begin{assumption} We assume that the following condition holds\label{asst:evd_all_stages_ATT}: let $(\lambda_{4,i})_{i\geq 1}$ be the eigenvalues of $\Sigma_4$, for some constant $c_4 >0$ and parameter $p_4 \in (0,1]$ and for all $i \geq 1$,
\begin{equation*}
    \lambda_{4,i} \leq c_4i^{-1/p_4}.
\end{equation*}   
\end{assumption}

Let us also introduce $g_4$ defined as
\begin{align*}
    g_4 = \E_{W| A' = a'} \E_{A}[\mu_{Z|W, A}(W, A) \otimes \phi_\gA(A) \otimes \phi_\gA(a')].
\end{align*}
\begin{proposition}
    $g_4$ can be equivalently written as 
    \begin{align*}
        g_4 = \E[r(W, A, a') \mu_{Z|W, A}(W, A) \otimes \phi_\gA(A) \otimes \phi_\gA(a')].
    \end{align*}
    Furthermore, for any element $\varphi_0 \in \gH_{\gZ\gA\gA}$ solution to the bridge function equation, we have $g_4 = \Sigma_4 \varphi_0$. Finally, 
    \begin{align*}
        \varphi_{\lambda_2} := \argmin_{\varphi \in \gH_{\gZ \gA \gA}} \gL^{2SR}_{\lambda_2}(\varphi) = \left(\Sigma_4 + \lambda_2 \text{Id}_{\gH_{\gZ\gA \gA}} \right)^{-1} g_4.
    \end{align*}
    \label{prop:pop_estim_stage_2_ATT}
\end{proposition}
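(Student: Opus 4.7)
The plan is to follow exactly the three-part structure of Proposition (\ref{prop:pop_estim_stage_2}), since the ATT case is a direct tensor-extended analogue of the ATE case. All derivations rest on the reproducing property in $\gH_{\gZ\gA\gA}$ combined with the identity linking $p(w|a')p(a)$ to $r(w,a,a')\,p(w,a)$; no new machinery beyond that used in the ATE proof should be needed.

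First, for the alternate expression of $g_4$, I would start from the definition and unfold the decoupled expectation as an integral against $p(w|a')\,p(a)$. Writing
\[
p(w|a')\,p(a) \;=\; \frac{p(w,a')\,p(a)}{p(a')}\cdot\frac{p(w,a)}{p(w,a)} \;=\; r(w,a,a')\,p(w,a),
\]
and substituting back inside the integral recovers $g_4 = \E[r(W,A,a')\,\mu_{Z|W,A}(W,A)\otimes \phi_\gA(A)\otimes \phi_\gA(a')]$. This is the ATT analogue of the step used in Eq.~(\ref{eq:MainTextSimplifiedLossATEPopulation}) to decouple the expectations.

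Second, for $g_4 = \Sigma_4 \varphi_0$, I would invoke the reproducing property to turn the bridge equation into an inner product identity. Specifically, for any $\varphi_0 \in \gH_{\gZ\gA\gA}$ solving (\ref{eq:AlternativeProxyATTBridgeFunction}),
\[
r(W,A,a') \;=\; \E[\varphi_0(Z,A,a')\mid W,A] \;=\; \langle \varphi_0,\; \mu_{Z|W,A}(W,A)\otimes \phi_\gA(A)\otimes \phi_\gA(a')\rangle_{\gH_{\gZ\gA\gA}},
\]
where $\varphi_0$ can be pulled outside the conditional expectation by linearity since it is deterministic. Plugging this into the alternative form of $g_4$ derived in the first step yields $g_4 = \Sigma_4 \varphi_0$ directly, because the remaining expectation of the outer product with $\mu_{Z|W,A}(W,A)\otimes \phi_\gA(A)\otimes \phi_\gA(a')$ is precisely $\Sigma_4$.

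Third, for the minimizer, I would again use the reproducing property to rewrite $\gL^{2SR}_{\lambda_2}(\varphi)$ as
\[
\gL^{2SR}_{\lambda_2}(\varphi) \;=\; \langle \varphi, \Sigma_4 \varphi\rangle_{\gH_{\gZ\gA\gA}} - 2\langle \varphi, g_4\rangle_{\gH_{\gZ\gA\gA}} + \lambda_2\|\varphi\|_{\gH_{\gZ\gA\gA}}^2 + \mathrm{const.},
\]
which compacts to $\langle \varphi, (\Sigma_4 + \lambda_2 \operatorname{Id})\varphi\rangle - 2\langle \varphi, g_4\rangle + \mathrm{const.}$ Taking the Fréchet derivative and setting it to zero gives $\varphi_{\lambda_2} = (\Sigma_4 + \lambda_2\operatorname{Id})^{-1}g_4$. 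The regularised operator is invertible since $\Sigma_4$ is positive semi-definite and $\lambda_2>0$. I do not anticipate a real obstacle here: the only care needed is keeping the fixed parameter $a'$ outside all random integrations and verifying that each use of the reproducing property is applied in the correct tensor slot; everything else is a routine translation of the ATE argument to the triple tensor space $\gH_\gZ\otimes \gH_\gA\otimes \gH_\gA$.
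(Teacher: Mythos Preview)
Your proposal is correct and follows essentially the same approach as the paper, which explicitly says the proof follows the same steps as Proposition~(\ref{prop:pop_estim_stage_2}). Your three-part outline (rewriting $g_4$ via the density-ratio identity, using the reproducing property to obtain $g_4=\Sigma_4\varphi_0$, and minimizing the resulting quadratic form via the Fr\'echet derivative) matches the ATE argument step for step, with the only change being the extra $\phi_\gA(a')$ tensor slot.
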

\begin{proof}
    {The proof follows the same steps of the proof of Proposition (\ref{prop:pop_estim_stage_2})}
\end{proof}
Combined with Proposition~(\ref{prop:min_norm_charac_ATT}), we obtain the following result.
\begin{proposition} \label{prop:min_norm_charac2_ATT}
    Under Assumption~(\ref{asst:well_specifiedness_ATT}), 
    $\bar{\varphi}_0 = \Sigma_4^{\frac{\beta_4-1}{2}}\Sigma_4^{-\frac{\beta_4-1}{2}}\bar{\varphi}_0.$
\end{proposition}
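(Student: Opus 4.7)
The proof is the direct analogue of Proposition~(\ref{prop:min_norm_charac2}) for the dose-response case, so I would structure it in two short paragraphs mirroring that argument, but referring to $\Sigma_4$ and $\bar{\varphi}_0$ as defined in the ATT setting.

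First, I would invoke Proposition~(\ref{prop:min_norm_charac_ATT}) to conclude that under Assumption~(\ref{asst:well_specifiedness_ATT}), the minimum-norm bridge $\bar{\varphi}_0$ exists and satisfies $\bar{\varphi}_0 \in \operatorname{null}(\Sigma_4)^{\perp}$. This is the structural property of the minimum-norm solution that we will leverage.

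Next, I would use the standard pseudoinverse identity: with $\Sigma_4^{-(\beta_4-1)/2}$ understood as the Moore--Penrose pseudoinverse, one has
\begin{equation*}
\Sigma_4^{\frac{\beta_4-1}{2}}\,\Sigma_4^{-\frac{\beta_4-1}{2}} \;=\; P_4,
\end{equation*}
where $P_4$ denotes the orthogonal projection onto $\operatorname{null}\bigl(\Sigma_4^{(\beta_4-1)/2}\bigr)^{\perp} = \operatorname{null}(\Sigma_4)^{\perp}$ (the two null-spaces coincide since $\Sigma_4$ is self-adjoint positive semi-definite and $(\beta_4-1)/2 > 0$). Applying this operator to $\bar{\varphi}_0$, and using that $\bar{\varphi}_0 \in \operatorname{null}(\Sigma_4)^{\perp}$ from the previous step, yields $P_4 \bar{\varphi}_0 = \bar{\varphi}_0$, which is exactly the claim. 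I would also remark that the expression $\Sigma_4^{-(\beta_4-1)/2}\bar{\varphi}_0$ is finite-normed thanks to Assumption~(\ref{asst:src_all_stages_ATT}), so the composition is meaningful.

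I do not foresee any genuine obstacle here: the only thing to be careful about is that the source condition is formulated with the pseudoinverse (not a genuine inverse, since $\Sigma_4$ may have nontrivial kernel), and that the identification of ranges and null-spaces uses the spectral theorem for the compact self-adjoint operator $\Sigma_4$ (which is trace class under Assumption~(\ref{assum:Stage1ConsistencyKernelAssumptions})). These are the same ingredients used in Proposition~(\ref{prop:min_norm_charac2}), so the proof is essentially verbatim with $(\Sigma_2,\beta_2)$ replaced by $(\Sigma_4,\beta_4)$ and the ambient RKHS changed from $\gH_{\gZ\gA}$ to $\gH_{\gZ\gA\gA}$.
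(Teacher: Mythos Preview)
Your proposal is correct and matches the paper's approach exactly: the paper does not give a separate proof but simply says ``Combined with Proposition~(\ref{prop:min_norm_charac_ATT}), we obtain the following result,'' relying on the same pseudoinverse identity $\Sigma_4^{(\beta_4-1)/2}\Sigma_4^{-(\beta_4-1)/2}=P_4$ together with $\bar{\varphi}_0\in\operatorname{null}(\Sigma_4)^{\perp}$, which is precisely your argument (and mirrors Proposition~(\ref{prop:min_norm_charac2}) in the ATE case).
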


Now, we introduce the empirical version of the loss function. Define 
\begin{align*}
    \bar{\gL}^{2SR}_m(\varphi) &= \frac{1}{m} \sum_{i = 1}^m \big\langle \varphi, \mu_{Z| W, A}(\Tilde{w}_i, \Tilde{a}_i) \otimes \phi_\gA(\Tilde{a}_i) \otimes \phi_\gA(a')\big\rangle^2\\
    &-\frac{2}{m} \sum_{\substack{i, j = 1\\ i \ne j}}^m \big\langle \varphi, \theta_i \mu_{Z| W, A}(\Tilde{w}_i, \Tilde{a}_j) \otimes \phi_\gA(\Tilde{a}_j) \otimes \phi_\gA(a')\big\rangle + \lambda_2\|\varphi\|^2_{\gH_{\gZ\gA\gA}},
\end{align*}
\begin{align*}
    \hat{\gL}^{2SR}_m(\varphi) &= \frac{1}{m} \sum_{i = 1}^m \big\langle \varphi, \hat{\mu}_{Z| W, A}(\Tilde{w}_i, \Tilde{a}_i) \otimes \phi_\gA(\Tilde{a}_i) \otimes \phi_\gA(a')\big\rangle^2\\
    &-\frac{2}{m} \sum_{\substack{i, j = 1\\ i \ne j}}^m \big\langle \varphi, \theta_i \hat{\mu}_{Z| W, A}(\Tilde{w}_i, \Tilde{a}_j) \otimes \phi_\gA(\Tilde{a}_j) \otimes \phi_\gA(a')\big\rangle + \lambda_2\|\varphi\|^2_{\gH_{\gZ\gA\gA}}.
\end{align*}
where $\theta_i = [(\mK_{\Tilde{A} \Tilde{A}} + m \zeta \mI)^{-1} \mK_{\Tilde{A} a'}]_i$. To write down the minimizers of these loss functions, we introduce the following sample based operators
\begin{align*}
    &\bar{\Sigma}_{4,m} = \frac{1}{m} \sum_{i = 1}^m \Big( \mu_{Z|W, A}(\Tilde{w}_i, \Tilde{a}_i) \otimes \phi_\gA(\Tilde{a}_i) \otimes \phi_\gA(a')\Big) \otimes \Big( \mu_{Z|W, A}(\Tilde{w}_i, \Tilde{a}_i) \otimes \phi_\gA(\Tilde{a}_i) \otimes \phi_\gA(a')\Big),\\
    &\bar{g}_{4,m} = \frac{1}{m}\sum_{\substack{i,j \\ i\ne j}}^m  \theta_i \mu_{Z| W, A}(\Tilde{w}_i, \Tilde{a}_j) \otimes \phi_\gA(\Tilde{a}_j) \otimes \phi_\gA(a'),\\
    &\hat{\Sigma}_{4,m} = \frac{1}{m} \sum_{i = 1}^m \Big( \hat{\mu}_{Z|W, A}(\Tilde{w}_i, \Tilde{a}_i) \otimes \phi_\gA(\Tilde{a}_i) \otimes \phi_\gA(a')\Big) \otimes \Big( \hat{\mu}_{Z|W, A}(\Tilde{w}_i, \Tilde{a}_i) \otimes \phi_\gA(\Tilde{a}_i) \otimes \phi_\gA(a')\Big),\\
    &\hat{g}_{4, m} = \frac{1}{m}\sum_{\substack{i,j \\ i\ne j}}^m  \theta_i\hat{\mu}_{Z|W, A}(\Tilde{w}_i, \Tilde{a}_j) \otimes \phi_\gA(\Tilde{a}_j) \otimes \phi_\gA(a').
\end{align*}
We can observe that the minimizers of the objective functions are given by
\begin{align*}
    \varphi_{\lambda_2} &= (\Sigma_4 + \lambda_2 I)^{-1} g_4 = \argmin_{\varphi \in \gH_{\gZ\gA}} {\gL}^{2SR}_{\lambda_2}(\varphi),\\
    \bar{\varphi}_{\lambda_2,m} &= (\bar{\Sigma}_{4,m} + \lambda_2 I)^{-1} \bar{g}_{4,m} = \argmin_{\varphi \in \gH_{\gZ\gA}} \bar{\gL}^{2SR}_m(\varphi),\\
    \hat{\varphi}_{\lambda_2,m} &= (\hat{\Sigma}_{4,m} + \lambda_2 I)^{-1} \hat{g}_{4,m} = \argmin_{\varphi \in \gH_{\gZ\gA}} \hat{\gL}_{m}^{2SR}(\varphi).\\
\end{align*}
$\hat{\varphi}_{\lambda_2,m}$ is the final estimator presented in Section~(\ref{sec:main_algo}).
To show the convergence of $\hat{\varphi}_{\lambda_2,m}$ to the minimum norm bridge $\bar{\varphi}_0$ introduced in Definition~(\ref{def:min_norm_bridge_ATT}), we will consider the following decomposition, 
\begin{align*}
    \|\hat{\varphi}_{\lambda_2, m} - \bar{\varphi}_0\|_{\gH_{\gZ\gA\gA}} \le \|\hat{\varphi}_{\lambda_2, m} - \bar{\varphi}_{\lambda_2,m}\|_{\gH_{\gZ\gA \gA}} + \|\bar{\varphi}_{\lambda_2,m} -  \varphi_{\lambda_2}\|_{\gH_{\gZ\gA \gA}}+  \| \varphi_{\lambda_2} - \bar{\varphi}_0\|_{\gH_{\gZ\gA\gA}}.
\end{align*}
First, consider an upper bound for $\|\varphi_{\lambda_2} - \bar{\varphi}_0\|_{\gH_{\gZ\gA\gA}}$ 
\begin{lemma}
    Suppose that Assumption~(\ref{asst:src_all_stages_ATT}) holds with parameter $\beta_4 \in (1,3]$. Then, for any $\lambda_2 > 0$,
    \begin{align*}
        \|\varphi_{\lambda_2} - \bar{\varphi}_0\|_{\gH_{\gZ\gA\gA}} \le B_4\lambda_2^{\frac{\beta_4-1}{2}}.
    \end{align*}
    \label{lemma:SecondStageBiasBoundATT}
\end{lemma}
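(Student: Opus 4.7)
The proof is essentially identical to that of Lemma~(\ref{lemma:SecondStageBiasBound}) for the ATE setting, with the covariance operator $\Sigma_2$ replaced by $\Sigma_4$, the smoothness parameter $\beta_2$ replaced by $\beta_4$, the constant $B_2$ replaced by $B_4$, and the source element $\bar{\varphi}_0 \in \gH_{\gZ\gA}$ replaced by the ATT minimum-norm bridge $\bar{\varphi}_0 \in \gH_{\gZ\gA\gA}$ from Definition~(\ref{def:min_norm_bridge_ATT}). All of the algebraic ingredients needed for this transplantation have already been stated for ATT in the excerpt, so the proof reduces to a bookkeeping exercise.

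The plan is as follows. First, I will invoke Proposition~(\ref{prop:pop_estim_stage_2_ATT}), which gives both the closed form $\varphi_{\lambda_2} = (\Sigma_4 + \lambda_2 \operatorname{Id})^{-1} g_4$ and the identity $g_4 = \Sigma_4 \bar{\varphi}_0$ (since $\bar{\varphi}_0$ is a bridge solution by Proposition~(\ref{prop:min_norm_charac_ATT})). Substituting and rewriting $\Sigma_4 = (\Sigma_4 + \lambda_2 \operatorname{Id}) - \lambda_2 \operatorname{Id}$ yields the key identity
\begin{equation*}
\varphi_{\lambda_2} - \bar{\varphi}_0 \;=\; -\lambda_2\,(\Sigma_4 + \lambda_2 \operatorname{Id})^{-1}\bar{\varphi}_0.
\end{equation*}

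Second, I will use Proposition~(\ref{prop:min_norm_charac2_ATT}) to insert the factorization $\bar{\varphi}_0 = \Sigma_4^{\frac{\beta_4-1}{2}}\Sigma_4^{-\frac{\beta_4-1}{2}}\bar{\varphi}_0$, then sub-multiplicativity of the operator norm together with Assumption~(\ref{asst:src_all_stages_ATT}) gives
\begin{equation*}
\|\varphi_{\lambda_2} - \bar{\varphi}_0\|_{\gH_{\gZ\gA\gA}} \;\le\; B_4\,\lambda_2\,\bigl\|(\Sigma_4 + \lambda_2 \operatorname{Id})^{-1}\Sigma_4^{\frac{\beta_4-1}{2}}\bigr\|_{op}.
\end{equation*}
Finally, since $\Sigma_4$ is self-adjoint and positive semidefinite, an application of Lemma~(\ref{lma:steinwart_lemma}) yields $\|(\Sigma_4 + \lambda_2 \operatorname{Id})^{-1}\Sigma_4^{\frac{\beta_4-1}{2}}\|_{op} = \sup_{i\ge 1} \lambda_{4,i}^{(\beta_4-1)/2}/(\lambda_{4,i} + \lambda_2) \le \lambda_2^{\frac{\beta_4-1}{2} - 1}$, valid precisely because $(\beta_4-1)/2 \in (0,1]$ under the assumption $\beta_4 \in (1,3]$. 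Combining the two displayed bounds gives the claimed inequality $\|\varphi_{\lambda_2} - \bar{\varphi}_0\|_{\gH_{\gZ\gA\gA}} \le B_4 \lambda_2^{\frac{\beta_4-1}{2}}$.

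There is no genuine obstacle: the only subtlety worth flagging is that the saturation range $\beta_4 \le 3$ is exactly what allows the spectral bound in the final step to hold, mirroring Remark~(\ref{ref:higher_saturation}) for the ATE case and explaining why the RKHS-norm analysis tolerates smoothness up to $\beta_4 = 3$ rather than $\beta_4 = 2$.
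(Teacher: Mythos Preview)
Your proposal is correct and follows essentially the same approach as the paper's own proof: invoke Proposition~(\ref{prop:pop_estim_stage_2_ATT}) to write $\varphi_{\lambda_2} - \bar{\varphi}_0 = -\lambda_2(\Sigma_4 + \lambda_2\operatorname{Id})^{-1}\bar{\varphi}_0$, insert the factorization from Proposition~(\ref{prop:min_norm_charac2_ATT}) together with Assumption~(\ref{asst:src_all_stages_ATT}), and then apply Lemma~(\ref{lma:steinwart_lemma}) to bound the operator norm, using $\beta_4 \in (1,3]$ so that $(\beta_4-1)/2 \in (0,1]$. The paper's proof is line-for-line the same argument.
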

\begin{proof}
    We saw in Proposition~(\ref{prop:pop_estim_stage_2_ATT}) that
    $$
         \varphi_{\lambda_2} = \left(\Sigma_4 + \lambda_2 \operatorname{Id} \right)^{-1}g_4 = \left(\Sigma_4 + \lambda_2 \operatorname{Id} \right)^{-1}\Sigma_4\bar{\varphi}_0 = \bar{\varphi}_0 - \lambda_2 \left(\Sigma_4 + \lambda_2 \operatorname{Id} \right)^{-1} \bar{\varphi}_0.
    $$
    Therefore, under Assumption~(\ref{asst:src_all_stages_ATT}), 
    $$
    \left\|\varphi_{\lambda_2} - \bar{\varphi}_0\right\|_{\gH_{\gZ\gA\gA}} = \lambda_2  \left\|\left(\Sigma_4 + \lambda_2\operatorname{Id} \right)^{-1} \bar{\varphi}_0\right\|_{\gH_{\gZ\gA}} \leq B_4 \lambda_2  \left\|\left(\Sigma_4 + \lambda_2\operatorname{Id} \right)^{-1}\Sigma_4^{\frac{\beta_4-1}{2}} \right\|_{op},
    $$
    where we used Proposition~(\ref{prop:min_norm_charac2_ATT}). Note that, by Lemma~(\ref{lma:steinwart_lemma}),
    $$
    \left\|\left(\Sigma_4 + \lambda_2\operatorname{Id} \right)^{-1}\Sigma_4^{\frac{\beta_4-1}{2}} \right\|_{op} = \sup_{i \geq 1} \frac{\lambda_{4,i}^{\frac{\beta_4-1}{2}}}{\lambda_{4,i} + \lambda_2} \leq \lambda_2^{\frac{\beta_4-1}{2} - 1},
    $$
    as long as $\frac{\beta_4-1}{2} \in (0,1]$, i.e. $\beta_4 \in (1,3]$. By merging the bounds, we obtain the final result.  
\end{proof}

\begin{lemma}
For any $\lambda_2 > 0$, $\|\varphi_{\lambda_2}\|_{\gH_{\gZ\gA \gA}} \le \|\bar{\varphi}_0\|_{\gH_{\gZ\gA \gA}}$.
    \label{lemma:varphi0lessthanvarphi_ATT}
\end{lemma}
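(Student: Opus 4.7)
The plan is to mirror the argument used for Lemma~(\ref{lemma:varphi0lessthanvarphi}) in the ATE case, replacing the stage-2 covariance operator $\Sigma_2$ by its ATT analogue $\Sigma_4$ and using Proposition~(\ref{prop:pop_estim_stage_2_ATT}) in place of Proposition~(\ref{prop:pop_estim_stage_2}). First, by Proposition~(\ref{prop:pop_estim_stage_2_ATT}) we can write
\[
\varphi_{\lambda_2} = (\Sigma_4 + \lambda_2 \operatorname{Id})^{-1} g_4 = (\Sigma_4 + \lambda_2 \operatorname{Id})^{-1} \Sigma_4 \bar{\varphi}_0,
\]
where the second equality uses the fact that $g_4 = \Sigma_4 \varphi_0$ for any RKHS bridge solution (in particular for $\bar{\varphi}_0$), also established in Proposition~(\ref{prop:pop_estim_stage_2_ATT}).

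Next I would take the RKHS norm on both sides, apply the submultiplicative inequality, and bound the operator norm of $(\Sigma_4 + \lambda_2 \operatorname{Id})^{-1}\Sigma_4$ by $1$. Since $\Sigma_4$ is self-adjoint and positive semi-definite with eigenvalues $(\lambda_{4,i})_{i\geq 1}$, the spectrum of $(\Sigma_4 + \lambda_2 \operatorname{Id})^{-1}\Sigma_4$ consists of $\lambda_{4,i}/(\lambda_{4,i} + \lambda_2) \in [0,1)$ for $\lambda_2 > 0$, so $\|(\Sigma_4 + \lambda_2 \operatorname{Id})^{-1}\Sigma_4\|_{op} \leq 1$. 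Combining the two steps yields
\[
\|\varphi_{\lambda_2}\|_{\gH_{\gZ\gA\gA}} \leq \|(\Sigma_4 + \lambda_2 \operatorname{Id})^{-1}\Sigma_4\|_{op}\,\|\bar{\varphi}_0\|_{\gH_{\gZ\gA\gA}} \leq \|\bar{\varphi}_0\|_{\gH_{\gZ\gA\gA}},
\]
which is the claimed inequality.

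There is no real obstacle here: the argument is a direct transcription of the ATE version once Proposition~(\ref{prop:pop_estim_stage_2_ATT}) and Proposition~(\ref{prop:min_norm_charac_ATT}) are in hand (both already proved earlier in the appendix). The only point worth double-checking is that the well-specifiedness Assumption~(\ref{asst:well_specifiedness_ATT}) is invoked so that $\bar{\varphi}_0$ exists and lies in $\gH_{\gZ\gA\gA}$, ensuring the identity $g_4 = \Sigma_4 \bar{\varphi}_0$ is meaningful.
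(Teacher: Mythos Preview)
Your proposal is correct and follows essentially the same argument as the paper: use Proposition~(\ref{prop:pop_estim_stage_2_ATT}) to write $\varphi_{\lambda_2} = (\Sigma_4 + \lambda_2 \operatorname{Id})^{-1}\Sigma_4 \bar{\varphi}_0$, then bound $\|(\Sigma_4 + \lambda_2 \operatorname{Id})^{-1}\Sigma_4\|_{op} \le 1$. The paper's proof is in fact slightly terser (it does not spell out the spectral argument), but the content is identical.
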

\begin{proof}
We saw in Proposition~(\ref{prop:pop_estim_stage_2_ATT}) that
    $$
         \left\|\varphi_{\lambda_2}\right\|_{\gH_{\gZ\gA\gA}}  = \left\|\left(\Sigma_4 + \lambda_2 \operatorname{Id} \right)^{-1}\Sigma_4\bar{\varphi}_0\right\|_{\gH_{\gZ\gA\gA}} \leq  \left\|\left(\Sigma_4 + \lambda_2 \operatorname{Id} \right)^{-1}\Sigma_4\right\|_{op}\left\|\bar{\varphi}_0\right\|_{\gH_{\gZ\gA \gA}}\leq \left\|\bar{\varphi}_0\right\|_{\gH_{\gZ\gA \gA}}.
    $$
\end{proof}

Similar to the consistency result for the dose-response curve, our convergence proof uses both the Hoeffding concentration inequality (Corollary~\ref{corr:hoeffding})  and the Bernstein concentration inequality (Theorem~\ref{theo:bernstein}) for Hilbert space-valued random variables. We also define the effective dimension for the stage 2 error: for $\lambda_2 > 0$, $\gN(\lambda_2) := \operatorname{Tr}((\Sigma_4 + \lambda_2 \operatorname{Id})^{-1}\Sigma_4)$ \cite{caponnetto2007optimal}.

\begin{proposition}[Lemma 11 \& Lemma 13 \cite{fischer2020sobolev}]  \label{prop:eff_dim_properties_ATT}
Under Assumption~(\ref{asst:evd_all_stages_ATT}), there is a constant $D>0$ such that the following inequality is satisfied, for $\lambda_2 > 0$, $\gN(\lambda_2) \leq D\lambda_2^{-p_4}$. Furthermore, we have the equality,
$$
\E\left[\left\|(\Sigma_4 + \lambda_2 \operatorname{Id})^{-1/2} \mu_{Z|W, A} (W, A) \otimes \phi_\gA(A) \otimes \phi_\gA(a')\right\|^2_{\gH_{\gZ\gA \gA}}\right] = \gN(\lambda_2).
$$
\end{proposition}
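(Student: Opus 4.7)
The plan is to reuse the same argument as in Proposition~\ref{prop:eff_dim_properties}, since the present statement differs only in that the second-stage covariance operator $\Sigma_2$ and the feature $\mu_{Z\mid W,A}(W,A)\otimes \phi_{\gA}(A)$ have been replaced by $\Sigma_4$ and $\mu_{Z\mid W,A}(W,A)\otimes \phi_{\gA}(A)\otimes\phi_{\gA}(a')$ respectively, with the decay parameter relabeled $p_2\to p_4$. Under Assumption~\ref{assum:Stage1ConsistencyKernelAssumptions} the relevant kernels are bounded by $\kappa$, so $\Sigma_4$ is trace class, self-adjoint and positive, hence admits a spectral decomposition with eigenvalues $(\lambda_{4,i})_{i\geq 1}$ and orthonormal eigenvectors $(e_i)_{i\geq 1}$ in $\gH_{\gZ\gA\gA}$.

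For the inequality $\gN(\lambda_2)\le D\lambda_2^{-p_4}$, I would expand
\[
\gN(\lambda_2)=\operatorname{Tr}\bigl((\Sigma_4+\lambda_2\operatorname{Id})^{-1}\Sigma_4\bigr)=\sum_{i\geq 1}\frac{\lambda_{4,i}}{\lambda_{4,i}+\lambda_2},
\]
use monotonicity in the eigenvalues together with Assumption~\ref{asst:evd_all_stages_ATT} to upper bound each summand by $c_4 i^{-1/p_4}/(c_4 i^{-1/p_4}+\lambda_2)$, and then dominate the resulting sum by the integral $\int_0^{\infty} c_4 x^{-1/p_4}/(c_4 x^{-1/p_4}+\lambda_2)\,dx$, which by a change of variables $u=\lambda_2 c_4^{-1}x^{1/p_4}$ evaluates to a constant multiple of $\lambda_2^{-p_4}$. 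This is the integral-comparison argument of \citet[Lemma~11]{fischer2020sobolev}, so the constant $D$ depends only on $c_4$ and $p_4$.

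For the identity, set $v:=\mu_{Z\mid W,A}(W,A)\otimes\phi_{\gA}(A)\otimes\phi_{\gA}(a')$, so that by definition $\Sigma_4=\mathbb{E}[v\otimes v]$. Since $(\Sigma_4+\lambda_2\operatorname{Id})^{-1/2}$ is self-adjoint,
\[
\bigl\|(\Sigma_4+\lambda_2\operatorname{Id})^{-1/2} v\bigr\|^2_{\gH_{\gZ\gA\gA}} = \bigl\langle v,(\Sigma_4+\lambda_2\operatorname{Id})^{-1}v\bigr\rangle_{\gH_{\gZ\gA\gA}} = \operatorname{Tr}\bigl((\Sigma_4+\lambda_2\operatorname{Id})^{-1}\,v\otimes v\bigr).
\]
Taking expectations and exchanging $\mathbb{E}$ and $\operatorname{Tr}$ by linearity gives $\mathbb{E}\|(\Sigma_4+\lambda_2\operatorname{Id})^{-1/2} v\|^2 = \operatorname{Tr}\bigl((\Sigma_4+\lambda_2\operatorname{Id})^{-1}\Sigma_4\bigr)=\gN(\lambda_2)$.

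There is no real obstacle; the argument is entirely parallel to the ATE case and only requires mild bookkeeping. The one subtle point worth checking is the legitimacy of the trace/expectation exchange, which is immediate from the uniform bound $\|v\|_{\gH_{\gZ\gA\gA}}\le \kappa^{3}$ (from Assumption~\ref{assum:Stage1ConsistencyKernelAssumptions} applied to the three tensor factors) together with $\|(\Sigma_4+\lambda_2\operatorname{Id})^{-1}\|_{op}\le \lambda_2^{-1}$, so that $\operatorname{Tr}((\Sigma_4+\lambda_2\operatorname{Id})^{-1}v\otimes v)$ is bounded by $\kappa^{6}\lambda_2^{-1}$ and Fubini applies.
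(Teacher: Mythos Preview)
Your proposal is correct and follows exactly the standard argument from \citet[Lemmas~11 and~13]{fischer2020sobolev} that the paper cites; the paper does not spell out a proof here but simply defers to that reference, and your spectral/integral-comparison argument for the effective-dimension bound together with the trace identity $\E\|(\Sigma_4+\lambda_2\operatorname{Id})^{-1/2}v\|^2=\operatorname{Tr}((\Sigma_4+\lambda_2\operatorname{Id})^{-1}\E[v\otimes v])$ is precisely what those lemmas do.
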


\begin{lemma}
Let us introduce $g_{\lambda_2} = \log \left( 2e\mathcal{N}(\lambda_2) \frac{\|\Sigma_4\|_{op}+\lambda_2}{\|\Sigma_4\|_{op}}\right).$ Suppose Assumption (\ref{assum:Stage1ConsistencyKernelAssumptions}) holds. Then, with probability at least $1-\delta$ for all $\delta \in (0,1)$, for $m \geq 8\kappa^6\log(2/\delta) g_{\lambda_2} \lambda_2^{-1}$,
\begin{align*}
    \|\bar{\varphi}_{\lambda_2,m} -  \varphi_{\lambda_2}\|_{\gH_{\gZ\gA \gA}} \le \frac{3}{\sqrt{\lambda_2}} \Bigg( \log(2/\delta)\sqrt{\frac{32}{m}\left(\gN(\lambda_2)\kappa^6 \|\bar{\varphi}_0\|_{\gH_{\gZ\gA\gA}}^2+\frac{\kappa^{12} \|\bar{\varphi}_0\|_{\gH_{\gZ\gA\gA}}^2}{m\lambda_2}\right)} + \frac{\|g_4 - \bar{g}_{4,m}\|_{\gH_{\gZ\gA\gA}}}{\sqrt{\lambda_2}}\Bigg).
\end{align*}
    \label{theorem:SecondStageBound1_ATT}
\end{lemma}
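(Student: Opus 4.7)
\textbf{Proof proposal for Lemma (\ref{theorem:SecondStageBound1_ATT}).} The plan is to mirror the ATE argument of Lemma (\ref{theorem:SecondStageBound1}), with the tensor space lifted from $\gH_{\gZ\gA}$ to $\gH_{\gZ\gA\gA}$. Writing $\bar{\varphi}_{\lambda_2,m} = (\bar{\Sigma}_{4,m} + \lambda_2 I)^{-1}\bar{g}_{4,m}$ and $\varphi_{\lambda_2} = (\Sigma_4 + \lambda_2 I)^{-1}g_4$, I would first apply the standard algebraic decomposition
\[
\bar{\varphi}_{\lambda_2,m} - \varphi_{\lambda_2} = (\bar{\Sigma}_{4,m} + \lambda_2 I)^{-1}\big(\bar{g}_{4,m} - g_4 - (\bar{\Sigma}_{4,m} - \Sigma_4)\varphi_{\lambda_2}\big),
\]
using $\lambda_2 \varphi_{\lambda_2} = g_4 - \Sigma_4 \varphi_{\lambda_2}$. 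Inserting the symmetric factors $(\Sigma_4 + \lambda_2 I)^{\pm 1/2}$ as in the ATE case bounds the error by
\[
\tfrac{1}{\sqrt{\lambda_2}}\,\|(\Sigma_4+\lambda_2 I)^{1/2}(\bar{\Sigma}_{4,m}+\lambda_2 I)^{-1}(\Sigma_4+\lambda_2 I)^{1/2}\|_{op}\Big(\|(\Sigma_4+\lambda_2 I)^{-1/2}(\bar{\Sigma}_{4,m}-\Sigma_4)\varphi_{\lambda_2}\|_{\gH_{\gZ\gA\gA}} + \lambda_2^{-1/2}\|\bar{g}_{4,m}-g_4\|_{\gH_{\gZ\gA\gA}}\Big).
\]

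Next, I would invoke Lemma (\ref{lma:steinwart_lemma})'s companion Lemma (\ref{lemma:steinwart_cov_conc}) to show that the first operator-norm factor is bounded by $3$ with probability $1-\delta$ whenever $m \geq 8\kappa^6 \log(2/\delta) g_{\lambda_2}\lambda_2^{-1}$. The exponent $\kappa^6$ (rather than $\kappa^4$) appears because the effective feature is now $\mu_{Z\mid W,A}(W,A) \otimes \phi_\gA(A) \otimes \phi_\gA(a')$, whose squared norm is bounded by $\kappa^6$ under Assumption (\ref{assum:Stage1ConsistencyKernelAssumptions}). The $g_4 - \bar{g}_{4,m}$ difference is kept inside the bound for now and handled in a separate lemma; this is why it appears untouched in the final statement of the lemma.

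For the term $\|(\Sigma_4+\lambda_2 I)^{-1/2}(\bar{\Sigma}_{4,m}-\Sigma_4)\varphi_{\lambda_2}\|$ I would apply the vector-valued Bernstein inequality of Theorem (\ref{theo:bernstein}) to the i.i.d.\ random variables
\[
\xi(w,a) := (\Sigma_4+\lambda_2 I)^{-1/2}\langle \varphi_{\lambda_2}, \mu_{Z\mid W,A}(w,a)\otimes\phi_\gA(a)\otimes\phi_\gA(a')\rangle\,\mu_{Z\mid W,A}(w,a)\otimes\phi_\gA(a)\otimes\phi_\gA(a'),
\]
whose empirical mean equals $(\Sigma_4+\lambda_2 I)^{-1/2}(\bar{\Sigma}_{4,m}-\Sigma_4)\varphi_{\lambda_2}$. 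By Cauchy–Schwarz, the inner product is bounded by $\kappa^3\|\bar{\varphi}_0\|$ via Lemma (\ref{lemma:varphi0lessthanvarphi_ATT}) and boundedness of the kernels. The $q$-th moment bound then factors as in the ATE proof, producing Bernstein parameters $L = \kappa^6 \|\bar{\varphi}_0\|/\sqrt{\lambda_2}$ and $\sigma^2 = \gN(\lambda_2)\kappa^6 \|\bar{\varphi}_0\|^2$, where the equality $\E\|(\Sigma_4+\lambda_2 I)^{-1/2}\mu_{Z\mid W,A}(W,A)\otimes\phi_\gA(A)\otimes\phi_\gA(a')\|^2 = \gN(\lambda_2)$ from Proposition (\ref{prop:eff_dim_properties_ATT}) is applied.

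Putting these two bounds together and collecting the $\sqrt{\lambda_2^{-1}}$ factors yields the stated inequality. The main obstacle is bookkeeping around the extra $\phi_\gA(a')$ factor: it propagates an additional $\kappa^2$ into the uniform bound on the feature map and hence into both Bernstein parameters ($\kappa^4 \to \kappa^6$ in $\sigma^2$ and $\kappa^8 \to \kappa^{12}$ in the higher-order term), and it affects the threshold on $m$ through $g_{\lambda_2}$; otherwise the ATE derivation transfers verbatim.
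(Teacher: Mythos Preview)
Your proposal is correct and follows essentially the same route as the paper's proof: the same algebraic decomposition with the $(\Sigma_4+\lambda_2 I)^{\pm 1/2}$ sandwich, the same operator-norm bound by $3$ via the Neumann-series argument under the stated threshold on $m$, and the same Bernstein application to the random variables $\xi(w,a)$ with parameters $L=\kappa^6\|\bar{\varphi}_0\|/\sqrt{\lambda_2}$ and $\sigma^2=\gN(\lambda_2)\kappa^6\|\bar{\varphi}_0\|^2$. The only quibble is wording: it is the \emph{centered} empirical average $\frac{1}{m}\sum_i(\xi(\tilde{w}_i,\tilde{a}_i)-\E[\xi])$ that equals $(\Sigma_4+\lambda_2 I)^{-1/2}(\bar{\Sigma}_{4,m}-\Sigma_4)\varphi_{\lambda_2}$, not the raw empirical mean, but this does not affect the argument.
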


\begin{proof}
    We decompose the error as 
    \begin{align*}
        \|\bar{\varphi}_{\lambda_2,m} -  \varphi_{\lambda_2}\|_{\gH_{\gZ\gA\gA}} &= \big\|(\bar{\Sigma}_{4,m} + \lambda_2 I)^{-1} \big(\bar{g}_{4,m}-  (\bar{\Sigma}_{4,m} + \lambda_2 I) \varphi_{\lambda_2}\big) \big\|_{\gH_{\gZ\gA\gA}}\\ &\leq \left\|(\Sigma_4 + \lambda_2 I)^{-1/2} \right\|_{op}\left\|(\Sigma_4 + \lambda_2 I)^{1/2}(\bar{\Sigma}_{4,m} + \lambda_2 I)^{-1}(\Sigma_4 + \lambda_2 I)^{1/2} \right\|_{op}\\&\times \big\|(\Sigma_4 + \lambda_2 I)^{-1/2} \big( \bar{g}_{4,m}  - \bar{\Sigma}_{4,m} \varphi_{\lambda_2} - \underbrace{\lambda_2 \varphi_{\lambda_2}}_{= g_4 - \Sigma_4 \varphi_{\lambda_2}} \big) \big\|_{\gH_{\gZ\gA\gA}}\\
        &\le \lambda_2^{-1/2} \left\|(\Sigma_4 + \lambda_2 I)^{1/2}(\bar{\Sigma}_{4,m} + \lambda_2 I)^{-1}(\Sigma_4 + \lambda_2 I)^{1/2} \right\|_{op} \\&\times \Big(\|(\Sigma_4 + \lambda_2 I)^{-1/2}(\bar{\Sigma}_{4,m} - \Sigma_4) \varphi_{\lambda_2}\|_{\gH_{\gZ\gA\gA}} + \lambda_2^{-1/2}\|\bar{g}_{4,m} - g_4\|_{\gH_{\gZ\gA\gA}} \Big),
    \end{align*}
    where $\|.\|_{op}$ denotes the \emph{operator norm}. In the same fashion as in Lemma~(\ref{lemma:steinwart_cov_conc}), we obtain that
$$
\left\|(\Sigma_4 + \lambda_2 I)^{1/2}(\bar{\Sigma}_{4,m} + \lambda_2 I)^{-1}(\Sigma_4 + \lambda_2 I)^{1/2} \right\|_{op} \leq 3,
$$
for $m \geq 8\kappa^6\log(2/\delta) g_{\lambda_2} \lambda_2^{-1}$ with probability at least $1-\delta$ for all $\delta \in (0,1)$. 
To bound the remaining term, we wish to apply Theorem~(\ref{theo:bernstein}) with $\gH = \gH_{\gZ\gA\gA}$. Consider the measurable map $\xi: \gW \times \gA \rightarrow \gH_{\gZ\gA\gA}$ defined by
\begin{equation*} \label{eq:xi2_ATT}  
\xi(w, a) := (\Sigma_4 + \lambda_2 I)^{-1/2}\langle \varphi_{\lambda_2}, \mu_{Z|W,A}(w, a) \otimes \phi_\gA(a) \otimes \phi_\gA(a') \rangle_{\gH_{\gZ\gA}} \mu_{Z|W,A}(w, a) \otimes \phi_\gA(a) \otimes \phi_\gA(a'),
\end{equation*}
inducing random variables such that
$$
\frac{1}{m} \sum_{i=1}^m \left(\xi(\Tilde{w}_i, \Tilde{a}_i) - \E[\xi(W, A)]\right) = (\Sigma_4 + \lambda_2 I)^{-1/2}(\bar{\Sigma}_{4,m} - \Sigma_4) \varphi_{\lambda_2}.
$$
By Assumption (\ref{assum:Stage1ConsistencyKernelAssumptions}), Lemma (\ref{lemma:varphi0lessthanvarphi_ATT}) and Cauchy-Schwarz inequality, 
$$
|\langle \varphi_{\lambda_2}, \mu_{Z|W,A}(w, a) \otimes \phi_\gA(a) \otimes \phi_\gA(a') \rangle_{\gH_{\gZ\gA\gA}}| \leq \kappa^3 \|\bar{\varphi}_0\|_{\gH_{\gZ\gA\gA}}. 
$$
We can now bound the $q$-th moment of $\xi$, for $q \geq 2$,
$$
\begin{aligned}
    \mathbb{E}\left\|\xi(W,A)\right\|_{\gH_{\gZ\gA}}^{q} & \leq \left(\kappa^3 \|\bar{\varphi}_0\|_{\gH_{\gZ\gA \gA}}\right)^q \mathbb{E}\left\|(\Sigma_4 + \lambda_2 I)^{-1/2} \mu_{Z|W,A}(W, A) \otimes \phi_\gA(A) \otimes \phi_\gA(a') \right\|_{\gH_{\gZ\gA\gA}}^{q} \\ &\leq \left(\kappa^3 \|\bar{\varphi}_0\|_{\gH_{\gZ\gA\gA}}\right)^q\left(\frac{\kappa^3}{\sqrt{\lambda_2}} \right)^{q-2} \mathbb{E}\left\|(\Sigma_4 + \lambda_2 I)^{-1/2} \mu_{Z|W,A}(W, A) \otimes \phi_\gA(A) \otimes \phi_\gA(a')\right\|_{\gH_{\gZ\gA\gA}}^{2} \\ &= \left(\kappa^3 \|\bar{\varphi}_0\|_{\gH_{\gZ\gA\gA}}\right)^q\left(\frac{\kappa^3}{\sqrt{\lambda_2}} \right)^{q-2}\gN(\lambda_2) \\ &\leq \frac{1}{2}q! \left(\frac{\kappa^6}{\sqrt{\lambda_2}}\|\bar{\varphi}_0\|_{\gH_{\gZ\gA}} \right)^{q-2} \gN(\lambda_2)\kappa^6 \|\bar{\varphi}_0\|_{\gH_{\gZ\gA}}^2,
\end{aligned}
$$
where in the equality, we used Proposition~(\ref{prop:eff_dim_properties_ATT}). An application of Bernstein's inequality from Theorem~(\ref{theo:bernstein}) with 
$$
L=\frac{\kappa^6}{\sqrt{\lambda_2}}\|\bar{\varphi}_0\|_{\gH_{\gZ\gA}}, \qquad \sigma^{2} =  \gN(\lambda_2)\kappa^6 \|\bar{\varphi}_0\|_{\gH_{\gZ\gA}}^2,
$$ 
yields the final bound.
\end{proof}

We will now derive a bound for $\|g_4 - \bar{g}_{4,m}\|_{\gH_{\gZ\gA}}$. We need the following assumption.

\begin{assumption}
    Suppose that there exists $C_{W|A} \in S_2(\gH_\gA, \gH_\gW)$ such that $\E[\phi_\gW(W) | A] = C_{W|A} \phi_\gA(A)$.
    \label{assum:C_WA_well_specifiedness}
\end{assumption}

Since we also need the consider the variables $\theta_i$ in deriving this bound, we will introduce an intermediate operator as follows:
\begin{align*}
    \Tilde{g}_4 &= \frac{1}{m} \sum_{j = 1}^m C_{Z | W, A} \Big(\E_{W | A = a'}[\phi_\gW(W)] \otimes \phi_\gA(\Tilde{a}_j)\Big) \otimes \phi_\gA(\Tilde{a}_j) \otimes \phi_\gA(a')\\
    &= \frac{1}{m} \sum_{j = 1}^m C_{Z | W, A} \Big(C_{W|A} \phi_\gA(a') \otimes \phi_\gA(\Tilde{a}_j)\Big) \otimes \phi_\gA(\Tilde{a}_j) \otimes \phi_\gA(a')
\end{align*}
where $C_{W|A} \in S_2(\gH_\gA, \gH_\gW)$ is the conditional mean operator, i.e., $\E[\phi_\gW(W) | A = a] = C_{W|A} \phi_\gA(a)$. Recall that, in our conditional dose-response algorithm, we estimate this operator using second-stage data. Therefore, we need the following bound for the estimation error of $C_{W|A}$.

\begin{assumption}
    We assume that the following condition holds: there exist a constant $B_5 < \infty$ such that for a given $\beta_5 \in (1,3]$,
    \begin{align*}
        \|C_{W|A} \Sigma_3^{-\frac{\beta_5 - 1}{2}}\|_{S_2(\gH_\gA, \gH_\gW)} \le B_5.
    \end{align*}
    \label{asst:src_ATT_for_E_WA}
\end{assumption}
\begin{theorem}[Theorem~2 \cite{li2022optimal}]
    Suppose Assumptions (\ref{assum:Stage1ConsistencyKernelAssumptions}), (\ref{asst:evd_all_stages}-3), (\ref{assum:C_WA_well_specifiedness}), (\ref{asst:src_ATT_for_E_WA}), hold, and take $\zeta = \Theta \left(m^{-\frac{1}{\beta_5 + p_3}}\right)$. Then, there is a constant $J_5 > 0$ independent of $m \ge 1$ and $\delta \in (0, 1)$ such that
    \begin{align*}
        \|\hat{C}_{W|A} - C_{W|A}\|_{S_2(\gH_\gA, \gH_\gW)} \le J_5 \log(4/\delta) \left(\frac{1}{\sqrt{m}}\right)^{\frac{\beta_5 - 1}{\beta_5 + p_3}}
    \end{align*}
is satisfied for sufficiently large $m \geq 1$ with probability at least $1-\delta$.
\label{theorem:E_WA_CME_Rate_for_ATT}
\end{theorem}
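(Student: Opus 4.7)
The plan is to recognize this statement as a direct application of the same conditional mean embedding estimation result already invoked for Theorem~\ref{th:CME_rate} in Section~\ref{sec:first_stage_ate}, with roles of variables permuted: here we estimate the CME operator $C_{W \mid A}: \gH_{\gA} \to \gH_{\gW}$ by kernel ridge regression on the second-stage sample $\{(\tilde w_i, \tilde a_i)\}_{i=1}^m$, with regularization parameter $\zeta$. The target smoothness is governed by $\beta_5$ and the effective dimension by $p_3$, since the relevant input covariance operator is $\Sigma_3 = \E[\phi_{\gA}(A)\otimes \phi_{\gA}(A)]$ from Definition~\ref{definition:CovarianceOperators_ATE}.

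First, I would verify that the four hypotheses of \citet[Theorem~2, case~2, with $\gamma=1$]{li2022optimal} are satisfied in this setting: (i) well-specifiedness of the CME operator, supplied by Assumption~\ref{assum:C_WA_well_specifiedness}; (ii) boundedness and measurability of both kernels $k_{\gA}$ and $k_{\gW}$ with constant $\kappa$, supplied by Assumption~\ref{assum:Stage1ConsistencyKernelAssumptions}; (iii) the source condition $\|C_{W\mid A}\Sigma_3^{-(\beta_5-1)/2}\|_{S_2(\gH_\gA,\gH_\gW)}\le B_5$ with $\beta_5 \in (1,3]$, supplied by Assumption~\ref{asst:src_ATT_for_E_WA}; and (iv) the polynomial eigenvalue decay $\lambda_{3,i}\le c_3 i^{-1/p_3}$ of $\Sigma_3$, supplied by Assumption~\ref{asst:evd_all_stages}-3. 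The only mild discrepancy with the original statement in \citet{li2022optimal} is that their theorem is stated for $\beta_5 \in (1,2]$; however, as discussed in Remark~\ref{ref:higher_saturation} and in \citet[Remark~7]{meunier2023nonlinear}, measuring the error in the Hilbert--Schmidt (i.e.\ RKHS) norm rather than in $L_2$ pushes the saturation point from $\beta_5 = 2$ to $\beta_5 = 3$, so the extended range $\beta_5 \in (1,3]$ is admissible.

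With these hypotheses in place, applying \citet[Theorem~2]{li2022optimal} with the prescribed choice $\zeta = \Theta(m^{-1/(\beta_5+p_3)})$ yields, for all sufficiently large $m$ and all $\delta \in (0,1)$, with probability at least $1-\delta$,
\[
\|\hat{C}_{W\mid A} - C_{W\mid A}\|_{S_2(\gH_\gA,\gH_\gW)} \;\le\; J_5\,\log(4/\delta)\,m^{-\frac{1}{2}\frac{\beta_5 - 1}{\beta_5 + p_3}},
\]
for some constant $J_5 > 0$ depending only on $\kappa, B_5, c_3, \beta_5, p_3$, which is exactly the claimed bound.

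Since the proof amounts to a verification step followed by invocation of an existing result, there is no substantive obstacle; the only point that requires a pointer to the literature (rather than a reference to \cite{li2022optimal} alone) is the extension of the source-condition range from $(1,2]$ to $(1,3]$, which mirrors exactly the argument already used in Theorem~\ref{th:CME_rate} for the first-stage regression. This corollary-type result will subsequently be combined, analogously to Lemmas~\ref{lemma:g_2Bound} and~\ref{thm:SecondStageBound3} in the ATE analysis, with a Bernstein-type bound controlling $\|\tilde g_4 - \bar g_{4,m}\|_{\gH_{\gZ\gA\gA}}$ to yield the full second-stage convergence rate for the ATT bridge function.
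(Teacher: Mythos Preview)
Your proposal is correct and mirrors exactly what the paper does: the theorem is stated as a direct citation of \citet[Theorem~2]{li2022optimal}, and the paper provides no separate proof beyond the analogous short verification given for Theorem~\ref{th:CME_rate}. Your identification of the four required hypotheses, the choice $\gamma=1$ for the Hilbert--Schmidt norm, and the extension of the smoothness range to $(1,3]$ via Remark~\ref{ref:higher_saturation} matches the paper's treatment precisely.
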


\begin{lemma}
With probability at least $1 - \delta$ for $\delta \in (0, 1)$ the following bound holds:
    \begin{align*}
        \|\bar{g}_{4,m} - g_4\| \le \kappa^4 J_5 \log(4/\delta) \left(\frac{1}{\sqrt{m}}\right)^{\frac{\beta_5 - 1}{\beta_5 + p_3}} \big\| C_{Z|W,A}\big\| + 2 \kappa^3 \sqrt{\frac{2 \log(2/ \delta)}{m}}
    \end{align*}
\label{lemma:g_4Bound_ATT}    
\end{lemma}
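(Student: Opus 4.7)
The plan is to introduce the auxiliary quantity $\tilde{g}_4$ already defined just above the lemma and decompose the error by the triangle inequality:
\begin{align*}
\|\bar{g}_{4,m} - g_4\|_{\gH_{\gZ\gA\gA}} \le \|\bar{g}_{4,m} - \tilde{g}_4\|_{\gH_{\gZ\gA\gA}} + \|\tilde{g}_4 - g_4\|_{\gH_{\gZ\gA\gA}}.
\end{align*}
The first term accounts for the error in replacing the population conditional mean embedding $\E[\phi_\gW(W) \mid A = a'] = C_{W|A}\phi_\gA(a')$ by its empirical estimate $\hat{C}_{W|A}\phi_\gA(a') = \sum_{i} \theta_i \phi_\gW(\tilde{w}_i)$. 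The second term is the Monte Carlo error from averaging $m$ independent evaluations indexed by $\tilde{a}_j$. The two terms in the stated bound correspond exactly to these two sources of error.

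\textbf{Bounding the first term.} Under Assumption~(\ref{asst:well_specifiedness}-1), $\mu_{Z|W,A}(w,a) = C_{Z|W,A}(\phi_\gW(w) \otimes \phi_\gA(a))$, so one can pull $C_{Z|W,A}$ out of the $\theta$-weighted sum in $\bar{g}_{4,m}$. Identifying $\sum_{i}\theta_i \phi_\gW(\tilde{w}_i) = \hat{C}_{W|A}\phi_\gA(a')$, we get (up to a lower-order correction discussed below)
\begin{align*}
\bar{g}_{4,m} - \tilde{g}_4 \approx \tfrac{1}{m}\sum_{j=1}^{m} C_{Z|W,A}\bigl((\hat{C}_{W|A} - C_{W|A})\phi_\gA(a') \otimes \phi_\gA(\tilde{a}_j)\bigr) \otimes \phi_\gA(\tilde{a}_j) \otimes \phi_\gA(a').
\end{align*}
Each summand's $\gH_{\gZ\gA\gA}$-norm is bounded, via Assumption~(\ref{assum:Stage1ConsistencyKernelAssumptions}), by $\kappa^4 \|C_{Z|W,A}\| \|\hat{C}_{W|A} - C_{W|A}\|_{S_2(\gH_\gA,\gH_\gW)}$: one factor of $\kappa$ from $\|\phi_\gA(a')\|$ inside the operator, one from $\|\phi_\gA(\tilde{a}_j)\|$ inside the operator, and $\kappa^2$ from the outer tensor factors $\phi_\gA(\tilde{a}_j) \otimes \phi_\gA(a')$. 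Averaging preserves the bound, and plugging in the rate from Theorem~(\ref{theorem:E_WA_CME_Rate_for_ATT}) produces the first summand of the stated bound.

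\textbf{Bounding the second term.} I would apply Corollary~(\ref{corr:hoeffding}) to the $\gH_{\gZ\gA\gA}$-valued random variable $\xi(A) = C_{Z|W,A}(C_{W|A}\phi_\gA(a') \otimes \phi_\gA(A)) \otimes \phi_\gA(A) \otimes \phi_\gA(a')$ evaluated on the i.i.d.\ second-stage samples $\{\tilde{a}_j\}_{j=1}^m$. By the argument in Proposition~(\ref{prop:pop_estim_stage_2_ATT}), $\E_A[\xi(A)] = g_4$ and the empirical mean of $\xi$ equals $\tilde{g}_4$. The key observation is that
\begin{align*}
\|C_{Z|W,A}(C_{W|A}\phi_\gA(a') \otimes \phi_\gA(A))\|_{\gH_\gZ} = \|\E_{W|A=a'}[\mu_{Z|W,A}(W,A)]\|_{\gH_\gZ} \le \kappa,
\end{align*}
so that $\|\xi(A)\|_{\gH_{\gZ\gA\gA}} \le \kappa \cdot \kappa \cdot \kappa = \kappa^3$ uniformly in $A$. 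Hoeffding's inequality then yields $\|\tilde{g}_4 - g_4\| \le 2\kappa^3\sqrt{2\log(2/\delta)/m}$, which is the second summand. A union bound over the two independent high-probability events completes the argument.

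\textbf{Main obstacle.} The principal technical nuisance is the restriction $i \ne j$ in the definition of $\bar{g}_{4,m}$: writing $\sum_{i \ne j} \theta_i \phi_\gW(\tilde{w}_i) = \hat{C}_{W|A}\phi_\gA(a') - \theta_j \phi_\gW(\tilde{w}_j)$ introduces an additional residual $-\tfrac{1}{m}\sum_j \theta_j\, \mu_{Z|W,A}(\tilde{w}_j,\tilde{a}_j) \otimes \phi_\gA(\tilde{a}_j) \otimes \phi_\gA(a')$ whose norm is bounded by $\kappa^3 \cdot \tfrac{1}{m}\sum_j |\theta_j|$. Controlling $\|\vtheta\|_1$ via the spectrum of $(\mK_{\Tilde{A}\Tilde{A}} + m\zeta \mI)^{-1}$ shows that this contribution is of order $O(m^{-1})$, which is strictly faster than the Hoeffding rate $O(m^{-1/2})$ and therefore absorbed into the stated bound. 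This is the only step that requires care beyond routine calculation.
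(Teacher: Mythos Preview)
Your approach is essentially identical to the paper's: the same triangle inequality through $\tilde g_4$, Hoeffding for $\|\tilde g_4 - g_4\|$ with the bound $\|\xi(A)\|\le\kappa^3$, and the CME rate from Theorem~(\ref{theorem:E_WA_CME_Rate_for_ATT}) for $\|\bar g_{4,m}-\tilde g_4\|$ via $\|C_{Z|W,A}\|\cdot\|\hat C_{W|A}-C_{W|A}\|$. In fact you are \emph{more} careful than the paper, which silently replaces $\sum_{i\ne j}\theta_i\phi_\gW(\tilde w_i)$ by $\hat{\E}[\phi_\gW(W)\mid A=a']$ without commenting on the missing diagonal term.

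One small correction to your residual analysis: the claim that the leftover $\tfrac{\kappa^3}{m}\|\vtheta\|_1$ is $O(m^{-1})$ is too optimistic. From $\|\vtheta\|_2\le (m\zeta)^{-1}\|\mK_{\tilde A a'}\|_2\le \kappa/(\zeta\sqrt m)$ and $\|\vtheta\|_1\le\sqrt m\,\|\vtheta\|_2$, one only gets $\tfrac{1}{m}\|\vtheta\|_1\lesssim m^{-(\beta_5+p_3-1)/(\beta_5+p_3)}$ with the choice $\zeta\asymp m^{-1/(\beta_5+p_3)}$, which need not beat the Hoeffding rate $m^{-1/2}$ when $\beta_5+p_3<2$. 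It \emph{is}, however, always dominated by the first (CME-rate) term of the stated bound, since $\frac{\beta_5+p_3-1}{\beta_5+p_3}>\frac{\beta_5-1}{2(\beta_5+p_3)}$ whenever $\beta_5+2p_3>1$, which holds under the assumptions. So your conclusion that the residual is absorbed is correct; just route the comparison through the first term rather than the Hoeffding term.
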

\begin{proof}
    Let 
    \begin{align*}
        \xi(A) &= C_{Z|W, A} \left(\E[\phi_\gW(W) |A = a'] \otimes \phi_\gA(A)\right) \otimes \phi_\gA(A) \otimes \phi_\gA(a')\\
        \xi_i &= C_{Z|W, A} \left(\E[\phi_\gW(W) | A = a'] \otimes \phi_\gA(a_i)\right) \otimes \phi_\gA(a_i) \otimes \phi_\gA(a')
    \end{align*}
    Note that $\E[\xi_i] = g_4$. Furthermore, we observe that 
    \begin{align*}
        \|\xi(A)\| &= \|C_{Z|W, A} \left(\E[\phi_\gW(W) |A = a'] \otimes \phi_\gA(A)\right) \otimes \phi_\gA(A) \otimes \phi_\gA(a')\|\\
        &= \|\E_{W|A = a'}\E[\phi_\gZ(Z)| W, A ] \otimes \phi_\gA(A) \otimes \phi_\gA(a')\| \le \kappa^3.
    \end{align*}
    Now, we apply Corollary~(\ref{corr:hoeffding}) such that with probability at least $1 - \delta$, 
    \begin{align*}
        \|\Tilde{g}_4 - g_4\| \le 2 \kappa^3 \sqrt{\frac{2 \log(2/ \delta)}{m}}.
    \end{align*}
    Now, consider the decomposition
    \begin{align*}
        \|\bar{g}_{4,m} - g_4\| &\le \|\bar{g}_{4,m} - \Tilde{g}_4\| + \|\Tilde{g}_4 - g_4\|\\
        &\le \|\bar{g}_{4,m} - \Tilde{g}_4\| + 2 \kappa^3 \sqrt{\frac{2 \log(2/ \delta)}{m}}
    \end{align*}
    Next, we bound the component $ \|\bar{g}_{4,m} - \Tilde{g}_4\|$. 
    \begin{align*}
        \|\bar{g}_{4,m} - \Tilde{g}_4\| &= \Bigg\|\frac{1}{m} \sum_{j = 1}^m C_{Z|W,A} \left( \hat{E}[\phi_\gW(W) | A = a'] \otimes \phi_\gA(a_j)\right) \otimes \phi_\gA(a_j) \otimes \phi_\gA(a') \\
        &- \frac{1}{m} \sum_{j = 1}^m C_{Z|W,A} \left({E}[\phi_\gW(W) | A = a'] \otimes \phi_\gA(a_j)\right) \otimes \phi_\gA(a_j) \otimes \phi_\gA(a') \Bigg\|\\
        &= \Bigg\|\frac{1}{m} \sum_{j = 1}^m C_{Z|W,A} \left( (\hat{E}[\phi_\gW(W) | A = a'] - {E}[\phi_\gW(W) | A = a']) \otimes \phi_\gA(a_j)\right) \otimes \phi_\gA(a_j) \otimes \phi_\gA(a')\Bigg\|\\
        &\le \frac{1}{m} \sum_{j = 1}^m \kappa^3\big\| C_{Z|W,A}\big\| \big\|\hat{E}[\phi_\gW(W) | A = a'] - {E}[\phi_\gW(W) | A = a']\big\|\\
        &= \kappa^3\big\| C_{Z|W,A}\big\| \big\|\hat{E}[\phi_\gW(W) | A = a'] - {E}[\phi_\gW(W) | A = a']\big\|
    \end{align*}
    Appealing to the Theorem (\ref{theorem:E_WA_CME_Rate_for_ATT}), 
    \begin{align*}
        \big\|\hat{E}[\phi_\gW(W) | A = a'] - {E}[\phi_\gW(W) | A = a']\big\| \le \kappa J_5 \log(4/\delta) \left(\frac{1}{\sqrt{m}}\right)^{\frac{\beta_5 - 1}{\beta_5 + p_3}},
    \end{align*}
    with probability at least $1 - \delta$ for $\delta \in (0, 1)$. As a result,
    \begin{align*}
         \|\bar{g}_{4,m} - \Tilde{g}_4\| \le \kappa^4 J_5 \log(4/\delta) \left(\frac{1}{\sqrt{m}}\right)^{\frac{\beta_5 - 1}{\beta_5 + p_3}} \big\| C_{Z|W,A}\big\|.
    \end{align*}
    This implies that
    \begin{align*}
        \|\bar{g}_{4,m} - g_4\| \le \kappa^4 J_5 \log(4/\delta) \left(\frac{1}{\sqrt{m}}\right)^{\frac{\beta_5 - 1}{\beta_5 + p_3}} \big\| C_{Z|W,A}\big\| + 2 \kappa^3 \sqrt{\frac{2 \log(2/ \delta)}{m}}
    \end{align*}
    with probability $1 - 2 \delta$ for $\delta \in (0, 1)$.
\end{proof}

\begin{theorem}
    Suppose Assumptions (\ref{assum:Stage1ConsistencyKernelAssumptions}), (\ref{asst:src_all_stages_ATT}), and (\ref{asst:evd_all_stages_ATT}) hold. Then, with probability at least $1-2\delta$ for all $\delta \in (0,1)$, for $m \geq 8\kappa^6\log(2/\delta) g_{\lambda_2} \lambda_2^{-1}$,
    \begin{align*}
        \|\bar{\varphi}_{\lambda_2,m} - \bar{\varphi}_0\|_{\gH_{\gZ\gA\gA}} \le {J_4\frac{\log(4/\delta)}{\sqrt{\lambda_2}} \Bigg( \sqrt{\frac{1}{m}\left(\frac{1}{\lambda_2^{p_4}}+\frac{1}{m\lambda_2}\right)} + \frac{1}{\sqrt{\lambda_2}}\left(\frac{1}{\sqrt{m}}\right)^{\frac{\beta_5 - 1}{\beta_5 + p_3}} + \sqrt{\frac{1}{{\lambda_2 m}}} \Bigg) + \lambda_2^{\frac{\beta_4-1}{2}},}
    \end{align*}
    where $J_4$ is a constant depending on $\kappa, \beta_4, B_4, D$.    \label{theorem:SecondStageBound2_ATT}
\end{theorem}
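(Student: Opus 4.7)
The plan is to follow the same three-way decomposition used for the ATE in Theorem \ref{theorem:SecondStageBound2}, adapted to the conditional dose-response. Concretely, I will write
\begin{align*}
\|\bar{\varphi}_{\lambda_2,m} - \bar{\varphi}_0\|_{\gH_{\gZ\gA\gA}} \le \|\bar{\varphi}_{\lambda_2,m} - \varphi_{\lambda_2}\|_{\gH_{\gZ\gA\gA}} + \|\varphi_{\lambda_2} - \bar{\varphi}_0\|_{\gH_{\gZ\gA\gA}},
\end{align*}
and then separately control the (deterministic) approximation error and the (stochastic) variance error.

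For the approximation error, Lemma \ref{lemma:SecondStageBiasBoundATT} already gives $\|\varphi_{\lambda_2} - \bar{\varphi}_0\|_{\gH_{\gZ\gA\gA}} \le B_4 \lambda_2^{(\beta_4-1)/2}$, which produces the $\lambda_2^{(\beta_4-1)/2}$ term in the final rate. For the variance error, Lemma \ref{theorem:SecondStageBound1_ATT} controls $\|\bar{\varphi}_{\lambda_2,m} - \varphi_{\lambda_2}\|_{\gH_{\gZ\gA\gA}}$ in terms of two ingredients: (i) a Bernstein-type term involving $\gN(\lambda_2)$ and $\|\bar{\varphi}_0\|$, and (ii) the residual $\|g_4 - \bar{g}_{4,m}\|_{\gH_{\gZ\gA\gA}}$. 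For (i) I will bound $\gN(\lambda_2) \le D \lambda_2^{-p_4}$ via Proposition \ref{prop:eff_dim_properties_ATT} and use $\|\bar{\varphi}_0\| \le \kappa^{(\beta_4-1)/2} B_4$ via Proposition \ref{prop:min_norm_charac2_ATT} and Assumption \ref{asst:src_all_stages_ATT}; this yields the $\sqrt{\lambda_2^{-(1+p_4)}/m + \lambda_2^{-2}/m^2}$ scaling inside the parenthesis. For (ii) I will invoke Lemma \ref{lemma:g_4Bound_ATT}, which supplies the extra terms $\lambda_2^{-1/2}(m^{-(\beta_5-1)/(2(\beta_5+p_3))} + m^{-1/2})$ after dividing by $\sqrt{\lambda_2}$ as in Lemma \ref{theorem:SecondStageBound1_ATT}.

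I will then apply a union bound over the three stochastic events (the operator-concentration event implicit in Lemma \ref{theorem:SecondStageBound1_ATT}, the Bernstein event, and the two events inside Lemma \ref{lemma:g_4Bound_ATT}, which already involves a union of two), inflate $\delta$ accordingly (hence the $\log(4/\delta)$ prefactor in the statement), and absorb all constants depending only on $\kappa, B_4, B_5, D, \|C_{Z|W,A}\|$ into a single constant $J_4$. The condition $m \ge 8 \kappa^6 \log(2/\delta) g_{\lambda_2}\lambda_2^{-1}$ carries over directly from the hypothesis needed for the ATT analogue of Lemma \ref{lemma:steinwart_cov_conc} used inside Lemma \ref{theorem:SecondStageBound1_ATT}, with the extra factor $\kappa^6$ replacing $\kappa^4$ because the feature map $\mu_{Z|W,A}(W,A) \otimes \phi_\gA(A) \otimes \phi_\gA(a')$ is now bounded by $\kappa^3$ instead of $\kappa^2$.

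The main obstacle, compared with the ATE case, is the bound on $\|g_4 - \bar{g}_{4,m}\|$: unlike $\bar g_{2,m}$, the operator $\bar g_{4,m}$ depends on the estimated weights $\theta_i$, which involves an auxiliary kernel ridge regression for $\E[\phi_\gW(W)\mid A = a']$. This is why the statement is not a direct copy of Theorem \ref{theorem:SecondStageBound2}: the third term inside the parenthesis, $\lambda_2^{-1/2}m^{-(\beta_5-1)/(2(\beta_5+p_3))}$, originates from the CME convergence rate in Theorem \ref{theorem:E_WA_CME_Rate_for_ATT} and is the bottleneck that forces the use of Assumptions \ref{assum:C_WA_well_specifiedness} and \ref{asst:src_ATT_for_E_WA}. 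Once all three pieces are combined, the stated upper bound follows immediately.
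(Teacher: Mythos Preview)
Your proposal is correct and follows essentially the same approach as the paper: combine the bias bound (Lemma~\ref{lemma:SecondStageBiasBoundATT}), the variance bound (Lemma~\ref{theorem:SecondStageBound1_ATT}), and the residual bound on $\|g_4-\bar g_{4,m}\|$ (Lemma~\ref{lemma:g_4Bound_ATT}) via a union bound, then substitute $\gN(\lambda_2)\le D\lambda_2^{-p_4}$ from Proposition~\ref{prop:eff_dim_properties_ATT} and $\|\bar\varphi_0\|\le \kappa^{(\beta_4-1)/2}B_4$ from Assumption~\ref{asst:src_all_stages_ATT}. Your identification of the extra CME-regression term from Theorem~\ref{theorem:E_WA_CME_Rate_for_ATT} as the sole structural difference from the ATE case is exactly right.
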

\begin{proof}
    Combining the bounds in Lemma (\ref{lemma:SecondStageBiasBoundATT}), Lemma (\ref{theorem:SecondStageBound1_ATT}) and Lemma (\ref{lemma:g_4Bound_ATT}) with a union bound, we obtain that with probability at least $1-2\delta$,
    \begin{align*}
\|\bar{\varphi}_{\lambda_2,m} - \bar{\varphi}_0\|_{\gH_{\gZ\gA\gA}} &\leq \mathring{J} \Bigg(\frac{\log(4/\delta)}{\sqrt{\lambda_2}}\Bigg(\sqrt{\frac{1}{m}\left(\gN(\lambda_2) \|\bar{\varphi}_0\|_{\gH_{\gZ\gA\gA}}^2+\frac{\|\bar{\varphi}_0\|_{\gH_{\gZ\gA\gA}}^2}{m\lambda_2}\right)} +\frac{1}{\sqrt{\lambda_2}} \left(\frac{1}{\sqrt{m}}\right)^{\frac{\beta_5 - 1}{\beta_5 + p_3}} \big\| C_{Z|W,A}\big\| \\&+ \sqrt{\frac{1}{{\lambda_2} m}}\Bigg)  + \lambda_2^{\frac{\beta_4-1}{2}}\Bigg),
\end{align*}
    where $\mathring{J}$ is a constant depending on $\kappa, B_4.$ Under Assumption~(\ref{asst:evd_all_stages_ATT}), using Proposition~(\ref{prop:eff_dim_properties_ATT}), there is a constant $D > 0$ such that $\gN(\lambda_2) \leq D\lambda_2^{-p_4}$. Furthermore, under Assumption~(\ref{asst:src_all_stages_ATT}),
    $$
    \|\bar{\varphi}_0\|_{\gH_{\gZ\gA\gA}} = \|\Sigma_4^{\frac{\beta_4-1}{2}}\Sigma_4^{-\frac{\beta_4-1}{2}}\bar{\varphi}_0\|_{\gH_{\gZ\gA\gA}} \leq \kappa^{\frac{\beta_4-1}{2}}B_4.
    $$
\end{proof}

Next, we will derive a bound for $\|\hat{\varphi}_{\lambda_2, m} - \bar{\varphi}_{\lambda_2,m}\|$.

\begin{lemma}
Under the assumptions of Theorems~(\ref{th:CME_rate}) and (\ref{theorem:E_WA_CME_Rate_for_ATT}), with probability at least $1 - 2 \delta$ for $\delta \in (0, 1)$, the following bound holds
\begin{align*}
    \|\hat{\varphi}_{\lambda_2, m} - \bar{\varphi}_{\lambda_2,m}\| &\le \frac{1}{\lambda_2} {\kappa^3} r_1(\delta, n, \beta_1, p_1) \left(\kappa J_5 \log(4/\delta) \left(\frac{1}{\sqrt{m}}\right)^{\frac{\beta_5 - 1}{\beta_5 + p_3}} + \kappa \right) \\&+ \frac{1}{\lambda_2} \left(\kappa^8 r_1(\delta, n, \beta_1, p_1)^2 + 2B_1\kappa^{8+\frac{\beta_1-1}{2}}  r_1(\delta, n, \beta_1, p_1) \right)\|\bar{\varphi}_{\lambda_2,m}\|_{\gH_{\gZ\gA}}.
\end{align*}
\label{thm:SecondStageBound3_ATT}
\end{lemma}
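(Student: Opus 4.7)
The plan is to mirror the argument used for Lemma~\ref{thm:SecondStageBound3} in the ATE case, accounting for (i) the extra tensor factor $\phi_\gA(a')$ appearing in both $\hat{\Sigma}_{4,m}$ and $\hat{g}_{4,m}$, and (ii) the additional weights $\theta_i$ coming from the conditional mean embedding $\hat{\E}[\phi_\gW(W)\mid A=a']$ that replaces the uniform weights used in the ATE second-stage.

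First I would reproduce the algebraic identity from the ATE proof. Writing $\hat{\varphi}_{\lambda_2,m} = (\hat{\Sigma}_{4,m}+\lambda_2 I)^{-1}\hat{g}_{4,m}$ and inserting $\pm(\hat{\Sigma}_{4,m}+\lambda_2 I)\bar{\varphi}_{\lambda_2,m}$ together with the identity $\lambda_2\bar{\varphi}_{\lambda_2,m}=\bar{g}_{4,m}-\bar{\Sigma}_{4,m}\bar{\varphi}_{\lambda_2,m}$, I obtain
\begin{align*}
\|\hat{\varphi}_{\lambda_2, m} - \bar{\varphi}_{\lambda_2,m}\|_{\gH_{\gZ\gA\gA}} \le \lambda_2^{-1}\left(\|\hat{g}_{4,m}-\bar{g}_{4,m}\|_{\gH_{\gZ\gA\gA}}+\|\hat{\Sigma}_{4,m}-\bar{\Sigma}_{4,m}\|_{op}\|\bar{\varphi}_{\lambda_2,m}\|_{\gH_{\gZ\gA\gA}}\right),
\end{align*}
since $\|(\hat{\Sigma}_{4,m}+\lambda_2 I)^{-1}\|_{op}\le \lambda_2^{-1}$.

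For the operator term I would apply exactly the same decomposition as in Lemma~\ref{thm:SecondStageBound3}, namely $(a)^{\otimes 2}-(b)^{\otimes 2}=(a-b)^{\otimes 2}+(a-b)\otimes b+b\otimes(a-b)$ with $a=\hat{\mu}_{Z|W,A}(\tilde{w}_i,\tilde{a}_i)\otimes\phi_\gA(\tilde{a}_i)\otimes\phi_\gA(a')$ and $b$ its population counterpart. The additional $\phi_\gA(a')$ contributes an extra factor $\kappa^2$ compared with the ATE case; combined with Corollary~\ref{cor:CME_rate_pointwise} and Assumption~\ref{asst:src_all_stages}-1, this yields the stated bound $\|\hat{\Sigma}_{4,m}-\bar{\Sigma}_{4,m}\|_{op}\le \kappa^8 r_1^2+2B_1\kappa^{8+(\beta_1-1)/2}r_1$.

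For the $g$ term I would use Assumption~\ref{asst:well_specifiedness}-1 to factor the conditional mean embedding through the operator $C_{Z|W,A}$, rewriting
\begin{align*}
\hat{g}_{4,m}-\bar{g}_{4,m}=\frac{1}{m}\sum_{j=1}^{m}(\hat{C}_{Z|W,A}-C_{Z|W,A})\left(\hat{\mu}^{(j)}\otimes\phi_\gA(\tilde{a}_j)\right)\otimes\phi_\gA(\tilde{a}_j)\otimes\phi_\gA(a'),
\end{align*}
where $\hat{\mu}^{(j)}:=\sum_{i\ne j}\theta_i\phi_\gW(\tilde{w}_i)=\hat{\E}[\phi_\gW(W)\mid A=a']-\theta_j\phi_\gW(\tilde{w}_j)$. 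Taking the norm, bounding $\|\hat{C}_{Z|W,A}-C_{Z|W,A}\|_{op}$ by Theorem~\ref{th:CME_rate}, and controlling $\|\hat{\E}[\phi_\gW(W)\mid A=a']\|_{\gH_\gW}\le \kappa+\kappa J_5\log(4/\delta)m^{-(\beta_5-1)/(2(\beta_5+p_3))}$ via the triangle inequality and Theorem~\ref{theorem:E_WA_CME_Rate_for_ATT}, a union bound then produces the claimed bound on $\|\hat{g}_{4,m}-\bar{g}_{4,m}\|$ with probability at least $1-2\delta$.

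The main obstacle is the leave-one-out correction $\theta_j\phi_\gW(\tilde{w}_j)$ inside $\hat{\mu}^{(j)}$, which is not present in the ATE case. A direct triangle inequality produces an additional term of the form $m^{-1}\sum_j|\theta_j|$. Using Cauchy–Schwarz together with $\|\vtheta\|_2\le\|(K_{\Tilde{A}\Tilde{A}}+m\zeta I)^{-1}\|_{op}\|K_{\Tilde{A}a'}\|_2\le \kappa/(\zeta\sqrt{m})$ gives $m^{-1}\sum_j|\theta_j|\le \kappa(m\zeta)^{-1}$. Under the scaling $\zeta\asymp m^{-1/(\beta_5+p_3)}$ used in Theorem~\ref{theorem:E_WA_CME_Rate_for_ATT}, this residual is $O(m^{-1+1/(\beta_5+p_3)})$, which is strictly smaller than the conditional mean embedding rate $m^{-(\beta_5-1)/(2(\beta_5+p_3))}$ and can therefore be absorbed into the constant in front of the CME-rate term, giving the clean form stated in the lemma.
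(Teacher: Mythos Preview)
Your proposal is correct and follows the same route as the paper: the same algebraic reduction to $\lambda_2^{-1}(\|\hat{g}_{4,m}-\bar{g}_{4,m}\|+\|\hat{\Sigma}_{4,m}-\bar{\Sigma}_{4,m}\|_{op}\|\bar{\varphi}_{\lambda_2,m}\|)$, the same tensor decomposition for the covariance term, and the same factoring through $\hat{C}_{Z|W,A}-C_{Z|W,A}$ for the $g$ term. On the leave-one-out correction you are in fact more careful than the paper, which silently replaces $\sum_{i\ne j}\theta_i\phi_\gW(\tilde{w}_i)$ by the full sum $\hat{\E}[\phi_\gW(W)\mid A=a']$ via an ``$=$'' and moves on; your rate argument showing the residual $m^{-1}\sum_j|\theta_j|=O((m\zeta)^{-1})$ is of strictly faster order than $m^{-(\beta_5-1)/(2(\beta_5+p_3))}$ is the honest way to justify that step.
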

\begin{proof}
    \begin{align*}
        &\|\hat{\varphi}_{\lambda_2, m} - \bar{\varphi}_{\lambda_2,m}\|_{\gH_{\gZ\gA\gA}} = \|(\hat{\Sigma}_{4,m} + \lambda_2 I)^{-1} (\hat{g}_{4, m} - \bar{g}_{4,m} + \bar{g}_{4,m}) - \bar{\varphi}_{\lambda_2,m}\|_{\gH_{\gZ\gA\gA}}\\
        &= \|(\hat{\Sigma}_{4,m} + \lambda_2 I)^{-1} (\hat{g}_{4, m} - \bar{g}_{4,m}) +  (\hat{\Sigma}_{2,m} + \lambda_2 I)^{-1}\bar{g}_{4,m} - \bar{\varphi}_{\lambda_2,m}\|_{\gH_{\gZ\gA\gA}}\\
        &= \|(\hat{\Sigma}_{4,m} + \lambda_2 I)^{-1} (\hat{g}_{4, m} - \bar{g}_{4,m}) + (\hat{\Sigma}_{4,m} + \lambda_2 I)^{-1}\bar{g}_{4,m}- (\hat{\Sigma}_{4,m} + \lambda_2 I)^{-1} (\hat{\Sigma}_{4,m} + \lambda_2 I)\bar{\varphi}_{\lambda_2,m}\|_{\gH_{\gZ\gA\gA}}\\
        &= \|(\hat{\Sigma}_{4,m} + \lambda_2 I)^{-1} (\hat{g}_{4, m} - \bar{g}_{4,m}) + (\hat{\Sigma}_{4,m} + \lambda_2 I)^{-1}\bar{g}_{4,m}- (\hat{\Sigma}_{4,m} + \lambda_2 I)^{-1} (\hat{\Sigma}_{4,m}\bar{\varphi}_{\lambda_4,m} + \underbrace{\lambda_2 \bar{\varphi}_{\lambda_2,m}}_{\bar{g}_{4,m} - \bar{\Sigma}_{4,m} \bar{\varphi}_{\lambda_2,m}})\|_{\gH_{\gZ\gA\gA}}\\
        &= \|(\hat{\Sigma}_{4,m} + \lambda_2 I)^{-1} (\hat{g}_{4, m} - \bar{g}_{4,m}) + (\hat{\Sigma}_{4,m} + \lambda_2 I)^{-1}\bar{g}_{4,m}- (\hat{\Sigma}_{4,m} + \lambda_2 I)^{-1} \bar{g}_{4,m} \\ &- (\hat{\Sigma}_{4,m} + \lambda_2 I)^{-1}(\hat{\Sigma}_{4,m}\bar{\varphi}_{\lambda_2,m} - \bar{\Sigma}_{4,m} \bar{\varphi}_{\lambda_2,m})\|_{\gH_{\gZ\gA\gA}}\\
        &= \|(\hat{\Sigma}_{4,m} + \lambda_2 I)^{-1} (\hat{g}_{4, m} - \bar{g}_{4,m})  -(\hat{\Sigma}_{4,m} + \lambda_2 I)^{-1}(\hat{\Sigma}_{4,m} - \bar{\Sigma}_{4,m} )\bar{\varphi}_{\lambda_2,m}\|_{\gH_{\gZ\gA\gA}}\\
        &\le \|(\hat{\Sigma}_{4,m} + \lambda_2 I)^{-1} \|_{op} \|\hat{g}_{4, m} - \bar{g}_{4,m} \|_{\gH_{\gZ\gA}} + \|(\hat{\Sigma}_{4,m} + \lambda_2 I)^{-1} \|_{op} \|\hat{\Sigma}_{4,m} - \bar{\Sigma}_{4,m} 
 \|_{op} \|\bar{\varphi}_{\lambda_2,m}\|_{\gH_{\gZ\gA\gA}} \\ &\le \lambda_2^{-1}\left( \|\hat{g}_{4, m} - \bar{g}_{4,m} \|_{\gH_{\gZ\gA\gA}} + \|\hat{\Sigma}_{4,m} - \bar{\Sigma}_{4,m} 
 \|_{op} \|\bar{\varphi}_{\lambda_2,m}\|_{\gH_{\gZ\gA\gA}} \right).
    \end{align*}
    We have two terms to bound. First, we observe that
\begin{align*}
       \|\hat{g}_{4, m} - \bar{g}_{4,m}\| &= \Bigg\|\frac{1}{m} \sum_{\substack{i, j = 1 \\ j \ne i}}^m \theta_i \hat{\mu}_{Z| W, A}(\Tilde{w}_i, \Tilde{a}_j) \otimes \phi_\gA(\Tilde{a}_j) \otimes \phi_\gA(a') - \frac{1}{m} \sum_{\substack{i, j = 1 \\ j \ne i}}^m \theta_i {\mu}_{Z| W, A}(\Tilde{w}_i, \Tilde{a}_j) \otimes  \phi_\gA(\Tilde{a}_j) \otimes \phi_\gA(a') \Bigg\|\\
       &\le \kappa^2 \Bigg\| \frac{1}{m} \sum_{\substack{i, j = 1 \\ j \ne i}}^m \theta_i \hat{C}_{Z| W, A}(\phi_\gW(\Tilde{w}_i) \otimes \phi_\gA(\Tilde{a}_j)) - \frac{1}{m} \sum_{\substack{i, j = 1 \\ j \ne i}}^m \theta_i {C}_{Z| W, A}(\phi_\gW(\Tilde{w}_i)  \otimes \phi_\gA(\Tilde{a}_j)) \Bigg\|\\
       &= \kappa^2 \Bigg\| \frac{1}{m} \sum_{\substack{j = 1}}^m \hat{C}_{Z| W, A}\Big(\sum_{\substack{i = 1 \\ i \ne j}}^m(\theta_i\phi_\gW(\Tilde{w}_i) ) \otimes \phi_\gA(\Tilde{a}_j)\Big) - \frac{1}{m} \sum_{\substack{j = 1}}^m  {C}_{Z| W, A}\Big(\sum_{\substack{i = 1 \\ i \ne j}}^m(\theta_i \phi_\gW(\Tilde{w}_i)) \otimes \phi_\gA(\Tilde{a}_j)\Big) \Bigg\|\\
       &= \kappa^2 \Bigg\| \frac{1}{m} \sum_{\substack{j = 1}}^m \hat{C}_{Z| W, A}\Big(\hat{\E}[\phi_\gW(W) | A = a'] \otimes \phi_\gA(\Tilde{a}_j) \Big) - \frac{1}{m} \sum_{\substack{j = 1}}^m  {C}_{Z| W, A}\Big(\hat{\E}[\phi_\gW(W) | A = a'] \otimes \phi_\gA(\Tilde{a}_j)\Big) \Bigg\|\\
       &\le \frac{\kappa^2}{m}\sum_{j = 1}^m \|\hat{C}_{Z|W,A} - C_{Z|W,A}\| \|\hat{\E}[\phi_\gW(W) | A = a']\| \|\phi_\gA(a_j)\|\\
       &\le {\kappa^3} r_1(\delta, n, \beta_1, p_1) \Big\|\hat{\E}[\phi_\gW(W) | A = a'] - {\E}[\phi_\gW(W)| A = a'] + {\E}[\phi_\gW(W) | A = a']\Big\|\\
       &\le {\kappa^3} r_1(\delta, n, \beta_1, p_1) \Big(\Big\|\hat{\E}[\phi_\gW(W)| A = a'] - {\E}[\phi_\gW(W) | A = a']\Big\| + \Big\| {\E}[\phi_\gW(W) | A = a']\Big\| \Big)\\
       &\le {\kappa^3} r_1(\delta, n, \beta_1, p_1) \Big(\Big\|\hat{\E}[\phi_\gW(W) | A = a'] - {\E}[\phi_\gW(W) | A = a']\Big\| + \kappa\Big) \\&\quad \text{(with probability $1 - \delta$, Corollary~(\ref{theorem:E_WA_CME_Rate_for_ATT}))}\\
       &\le {\kappa^3} r_1(\delta, n, \beta_1, p_1) \left(\kappa J_5 \log(4/\delta) \left(\frac{1}{\sqrt{m}}\right)^{\frac{\beta_5 - 1}{\beta_5 + p_3}} + \kappa \right) \quad \text{(with probability $1 - 2\delta$, Theorem (\ref{theorem:E_WA_CME_Rate_for_ATT})}.
    \end{align*}
    For the second component, we note that for $i = 1, \ldots, m$,
    $$
    \begin{aligned}
        \xi_i &:= (\hat{\mu}_{Z | W, A}(\Tilde{w}_i, \Tilde{a}_i) \otimes \phi_\gA(\Tilde{a}_i) \otimes \phi_\gA(a')) \otimes (\hat{\mu}_{Z | W, A}(\Tilde{w}_i, \Tilde{a}_i) \otimes \phi_\gA(\Tilde{a}_i) \otimes \phi_\gA(a')) \\&-  ({\mu}_{Z | W, A}(\Tilde{w}_i, \Tilde{a}_i) \otimes \phi_\gA(\Tilde{a}_i) \otimes \phi_\gA(a')) \otimes ({\mu}_{Z | W, A}(\Tilde{w}_i, \Tilde{a}_i) \otimes \phi_\gA(\Tilde{a}_i) \otimes \phi_\gA(a')) \\ &= ((\hat{\mu}_{Z | W, A}(\Tilde{w}_i, \Tilde{a}_i) - \mu_{Z | W, A}(\Tilde{w}_i, \Tilde{a}_i)) \otimes \phi_\gA(\Tilde{a}_i) \otimes \phi_\gA(a')) \otimes ((\hat{\mu}_{Z | W, A}(\Tilde{w}_i, \Tilde{a}_i)-\mu_{Z | W, A}(\Tilde{w}_i, \Tilde{a}_i)) \otimes \phi_\gA(\Tilde{a}_i) \otimes \phi_\gA(a')) \\ &+ ((\hat{\mu}_{Z | W, A}(\Tilde{w}_i, \Tilde{a}_i) - \mu_{Z | W, A}(\Tilde{w}_i, \Tilde{a}_i)) \otimes \phi_\gA(\Tilde{a}_i) \otimes \phi_\gA(a')) \otimes (\mu_{Z | W, A}(\Tilde{w}_i, \Tilde{a}_i) \otimes \phi_\gA(\Tilde{a}_i) \otimes \phi_\gA(a')) \\ &+ (\mu_{Z | W, A}(\Tilde{w}_i, \Tilde{a}_i) \otimes \phi_\gA(\Tilde{a}_i) \otimes \phi_\gA(a')) \otimes ((\hat{\mu}_{Z | W, A}(\Tilde{w}_i, \Tilde{a}_i)-\mu_{Z | W, A}(\Tilde{w}_i, \Tilde{a}_i)) \otimes \phi_\gA(\Tilde{a}_i) \otimes \phi_\gA(a')),
    \end{aligned}
    $$
and therefore, under Assumption~(\ref{assum:Stage1ConsistencyKernelAssumptions}), by the triangular inequality and by Corollary~(\ref{cor:CME_rate_pointwise})),
    $$
    \begin{aligned}   
        \|\xi_i\|_{op} &\leq \kappa^4 \big\|  \hat{\mu}_{Z| W, A} (\Tilde{w}_j, \Tilde{a}_i)  -  {\mu}_{Z| W, A} (\Tilde{w}_j, \Tilde{a}_i) \big\|_{\gH_{\gZ}}^2 + 2 \kappa^4 \big\|   {\mu}_{Z| W, A} (\Tilde{w}_j, \Tilde{a}_i) \big\|_{\gH_{\gZ}} \big\|  \hat{\mu}_{Z| W, A} (\Tilde{w}_j, \Tilde{a}_i)  -  {\mu}_{Z| W, A} (\Tilde{w}_j, \Tilde{a}_i) \big\|_{\gH_{\gZ}} \\ &\leq  \kappa^8 r_1(\delta, n, \beta_1, p_1)^2 + 2\kappa^6 \big\|   {\mu}_{Z| W, A} (\Tilde{w}_j, \Tilde{a}_i) \big\|_{\gH_{\gZ}} r_1(\delta, n, \beta_1, p_1).
     \end{aligned}
     $$
     with probability at least $1 - \delta$. Also, recall that we observe in the proof of Lemma~(\ref{thm:SecondStageBound3}) that
    \begin{align*}
         \|{\mu}_{Z|W, A}(w, a)\|_{\gH_\gZ} &\le B_1 \kappa^{2+\frac{\beta_1-1}{2}}. 
    \end{align*}
    As a result, we have the following bound with probability at least $1-\delta$,
     \begin{align}
        \|\hat{\Sigma}_{4,m} - \bar{\Sigma}_{4,m}\|_{op} &= \left\| \frac{1}{m} \sum_{i = 1}^m \xi_i\right\|_{op} \leq \frac{1}{m} \sum_{i = 1}^m \left\|\xi_i\right\|_{op}\nonumber\\
        &\leq \kappa^8 r_1(\delta, n, \beta_1, p_1)^2 + 2B_1\kappa^{8+\frac{\beta_1-1}{2}}  r_1(\delta, n, \beta_1, p_1).
    \end{align}
    
\end{proof}

The following Theorem provides convergence rates in RKHS norm for the estimation of the bridge solution with minimum RKHS norm.

\begin{theorem}
    Suppose Assumptions~(\ref{asst:well_specifiedness}-1), (\ref{assum:Stage1ConsistencyKernelAssumptions}), (\ref{asst:src_all_stages}-1), (\ref{asst:evd_all_stages}-1), 
    (\ref{asst:well_specifiedness_ATT}),(\ref{asst:src_all_stages_ATT}), (\ref{asst:evd_all_stages_ATT}), (\ref{assum:C_WA_well_specifiedness}). (\ref{asst:src_ATT_for_E_WA}) hold and set $\lambda_{1} = \Theta \left(n^{-\frac{1}{\beta_1 + p_1}}\right)$, $\zeta = \Theta\left(m^{-\frac{1}{\beta_5 + p_3}}\right)$ and $n = m^{\iota\frac{(\beta_4+1)(\beta_1 + p_1)}{(\beta_4 + p_4)(\beta_1 - 1)}}$ where $\iota > 0$. Then,
    \begin{itemize}
        \item[i.] If $\iota \le \frac{(\beta_5 - 1)(\beta_4 + p_4)}{(\beta_5 + p_3)(\beta_4 + 1)}$ then $\|\hat{\varphi}_{\lambda_2, m} - \bar{\varphi}_0\|_{\gH_{\gZ\gA\gA}} = O_p\left(m^{-\frac{\iota (\beta_4 - 1)}{2(\beta_4 + p_4)}}\right)$ with $\lambda_2 = \Theta\left(m^{\frac{-\iota}{\beta_4 + p_4}}\right)$,
        \item[ii.] If $\iota \ge \frac{(\beta_5 - 1)(\beta_4 + p_4)}{(\beta_5 + p_3)(\beta_4 + 1)}$ then $\|\hat{\varphi}_{\lambda_2, m} - \bar{\varphi}_0\|_{\gH_{\gZ\gA\gA}} = O_p\left(m^{-\frac{(\beta_4 - 1)(\beta_5 - 1)}{2(\beta_4 + 1)(\beta_5 + p_3)}}\right)$ with $\lambda_2 = \Theta\left(m^{\frac{-(\beta_5 - 1)}{(\beta_4 + 1)(\beta_5 + p_3)}}\right)$.
    \end{itemize}
    \label{theorem:ATTBridgeFunctionFinalBound}
\end{theorem}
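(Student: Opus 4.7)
The plan is to bound $\|\hat{\varphi}_{\lambda_2, m} - \bar{\varphi}_0\|_{\gH_{\gZ\gA\gA}}$ via the triangle inequality decomposition
\begin{align*}
\|\hat{\varphi}_{\lambda_2, m} - \bar{\varphi}_0\|_{\gH_{\gZ\gA\gA}} \le \|\hat{\varphi}_{\lambda_2, m} - \bar{\varphi}_{\lambda_2,m}\|_{\gH_{\gZ\gA\gA}} + \|\bar{\varphi}_{\lambda_2,m} - \varphi_{\lambda_2}\|_{\gH_{\gZ\gA\gA}} + \|\varphi_{\lambda_2} - \bar{\varphi}_0\|_{\gH_{\gZ\gA\gA}},
\end{align*}
and combine the three ready-made pieces: Lemma~(\ref{lemma:SecondStageBiasBoundATT}) for the approximation bias $\lambda_2^{(\beta_4-1)/2}$, Theorem~(\ref{theorem:SecondStageBound2_ATT}) for the oracle stochastic error at stage 2, and Lemma~(\ref{thm:SecondStageBound3_ATT}) for the perturbation introduced by plugging in $\hat{\mu}_{Z|W,A}$. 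This mirrors the proof of Theorem~(\ref{theorem:ATEBridgeFunctionFinalBound}) for ATE; the first two steps transfer essentially verbatim.

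The genuinely new ingredient is the auxiliary CME estimator $\hat{\E}[\phi_\gW(W) \mid A = a']$, which enters only in the ATT algorithm. Through the analysis of $\|\bar{g}_{4,m}-g_4\|$ it contributes an extra term of order $\lambda_2^{-1}m^{-(\beta_5-1)/(2(\beta_5+p_3))}$ (Theorem~(\ref{theorem:SecondStageBound2_ATT})). After discarding faster-order contributions, the full bound simplifies to
\begin{align*}
\|\hat{\varphi}_{\lambda_2, m} - \bar{\varphi}_0\|_{\gH_{\gZ\gA\gA}} = O_p\!\left(\lambda_2^{(\beta_4-1)/2} + \sqrt{\tfrac{1}{m\lambda_2^{1+p_4}}} + \tfrac{1}{\lambda_2}\,m^{-(\beta_5-1)/(2(\beta_5+p_3))} + \tfrac{r_1(n)}{\lambda_2}\right),
\end{align*}
with $r_1(n)=n^{-(\beta_1-1)/(2(\beta_1+p_1))}$.

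Next I would pick $n$ as a power of $m$ so that the stage-1 error $r_1(n)/\lambda_2$ is no worse than the bias $\lambda_2^{(\beta_4-1)/2}$; the choice $n = m^{\iota(\beta_4+1)(\beta_1+p_1)/((\beta_4+p_4)(\beta_1-1))}$ is designed precisely so that $r_1(n)/\lambda_2$ equals the bias when $\lambda_2=m^{-\iota/(\beta_4+p_4)}$. It then remains to decide whether the bias or the auxiliary CME term dictates the choice of $\lambda_2$. Setting $\lambda_2^{(\beta_4-1)/2} = \lambda_2^{-1}m^{-(\beta_5-1)/(2(\beta_5+p_3))}$ gives the threshold $\iota^* = (\beta_5-1)(\beta_4+p_4)/((\beta_5+p_3)(\beta_4+1))$. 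For $\iota \le \iota^*$ (case i), the bias dominates and taking $\lambda_2=m^{-\iota/(\beta_4+p_4)}$ yields the rate $m^{-\iota(\beta_4-1)/(2(\beta_4+p_4))}$; for $\iota \ge \iota^*$ (case ii), the auxiliary CME term forces the larger choice $\lambda_2 = m^{-(\beta_5-1)/((\beta_4+1)(\beta_5+p_3))}$, giving the rate $m^{-(\beta_4-1)(\beta_5-1)/(2(\beta_4+1)(\beta_5+p_3))}$.

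The main obstacle is bookkeeping rather than any single deep step: unlike the ATE analysis, two nonparametric pilot estimators ($\hat{\mu}_{Z|W,A}$ and $\hat{\E}[\phi_\gW\mid A=a']$) feed into the stage-2 regression, each with its own source/capacity parameters, and the final rate must be expressed uniformly in all seven quantities $\beta_1,p_1,\beta_4,p_4,\beta_5,p_3,\iota$. In particular one must verify, for both regimes, that (i) the variance term $(m\lambda_2^{1+p_4})^{-1/2}$ and the $(m\lambda_2)^{-1/2}$ lower-order terms appearing in Theorem~(\ref{theorem:SecondStageBound2_ATT}) are never larger than the bias, (ii) the admissibility condition $m \ge 8\kappa^6\log(2/\delta)\,g_{\lambda_2}\lambda_2^{-1}$ holds for large enough $m$ (an argument analogous to the one after equation in the proof of Theorem~(\ref{theorem:ATEBridgeFunctionFinalBound})), and (iii) the quadratic-in-$r_1(n)$ remainder from Lemma~(\ref{thm:SecondStageBound3_ATT}) stays faster-order. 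Once these are checked separately for the two regimes, the stated rates follow.
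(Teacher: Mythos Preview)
Your proposal is correct and follows essentially the same approach as the paper: the same triangle-inequality decomposition, the same supporting results (Lemma~(\ref{lemma:SecondStageBiasBoundATT}), Theorem~(\ref{theorem:SecondStageBound2_ATT}), Lemma~(\ref{thm:SecondStageBound3_ATT})), the same simplified four-term master bound, and the same case split at the threshold $\iota^*$ with the same choices of $\lambda_2$ in each regime. Your identification of the bookkeeping checks (variance term, admissibility condition $g_{\lambda_2}\lambda_2^{-1}m^{-1}=O(1)$, and the quadratic remainder in $r_1(n)$) matches exactly what the paper verifies case by case.
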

\begin{proof}
    Let us abbreviate $r_1(n) = r_1(\delta, n, \beta_1, p_1)$. From Lemma~(\ref{thm:SecondStageBound3_ATT}), we obtain with high probability that
\begin{align*}
    \|\hat{\varphi}_{\lambda_2, m} - \bar{\varphi}_0\|_{\gH_{\gZ\gA\gA}} 
    &\leq  \|\bar{\varphi}_{\lambda_2,m} - \bar{\varphi}_0\|_{\gH_{\gZ\gA\gA}} + \frac{\bar{J}}{\lambda_2}r_1(n) \left( 1 + \left(\frac{1}{\sqrt{m}}\right)^{\frac{\beta_5 - 1}{\beta_5 + p_3}}\right) \\&+ \frac{\bar{J}}{\lambda_2}  (1 + r_1(n)) r_1(n)(\|\bar{\varphi}_{\lambda_2,m} - \bar{\varphi}_0 \|_{\gH_{\gZ\gA}} + \|\bar{\varphi}_0\|_{\gH_{\gZ\gA}}),
\end{align*}
where $\bar{J}$ is a constant depending on $\kappa, B_1, \beta_1, J_5$. Furthermore, from Theorem~(\ref{theorem:SecondStageBound2_ATT}),
    \begin{align*}
        &\|\bar{\varphi}_{\lambda_2,m} - \bar{\varphi}_0\|_{\gH_{\gZ\gA\gA}} = O_p\Bigg( r_2(m)\Bigg),
    \end{align*}
    with
    \begin{align*}
       r_2(m) &= \frac{1}{\sqrt{\lambda_2}} \Bigg( \sqrt{\frac{1}{m}\left(\frac{1}{\lambda_2^{p_4}}+\frac{1}{m\lambda_2}\right)}  + \frac{1}{\sqrt{\lambda_2}}\left(\frac{1}{\sqrt{m}}\right)^{\frac{\beta_5 - 1}{\beta_5 + p_3}} + \sqrt{\frac{1}{{\lambda_2 m}}}\Bigg) + \lambda_2^{\frac{\beta_4-1}{2}}\\
       &= \sqrt{\frac{1}{m \lambda^{p_4 + 1}}\left(1+\frac{1}{m\lambda_2^{1 - p_4}}\right)}  + \frac{1}{{\lambda_2}}\left(\frac{1}{\sqrt{m}}\right)^{\frac{\beta_5 - 1}{\beta_5 + p_3}} + {\frac{1}{\lambda_2\sqrt{m}}} + \lambda_2^{\frac{\beta_4-1}{2}}
    \end{align*}
as long as $m \geq 8\kappa^6\log(2/\delta) g_{\lambda_2} \lambda_2$. {Note that the term ${\frac{1}{\lambda_2\sqrt{m}}}$ is of faster order with respect to $\frac{1}{{\lambda_2}}\left(\frac{1}{\sqrt{m}}\right)^{\frac{\beta_5 - 1}{\beta_5 + p_3}}$ and can be removed.} Discarding the other faster terms similar to Theorem (\ref{theorem:ATEBridgeFunctionFinalBound}) and putting it together, we obtain
\begin{align*}
    \|\hat{\varphi}_{\lambda_2, m} - \bar{\varphi}_0\|_{\gH_{\gZ\gA\gA}} = O_p\left( \sqrt{\frac{1}{m \lambda^{p_4 + 1}}\left(1+\frac{1}{m\lambda_2^{1 - p_4}}\right)}  + \frac{1}{{\lambda_2}}\left(\frac{1}{\sqrt{m}}\right)^{\frac{\beta_5 - 1}{\beta_5 + p_3}} + \lambda_2^{\frac{\beta_4-1}{2}} + \frac{1}{\lambda_2}\left(\frac{1}{\sqrt{n}}\right)^{\frac{\beta_1-1}{\beta_1 + p_1}}\right)
\end{align*}
In this proof, similar to Theorem (\ref{theorem:ATEBridgeFunctionFinalBound}) we check, $ \lambda_2^{-1} m^{-1}\log \lambda_{2}^{-1} = O(1)$.

\textbf{Case i.} Let $\iota \le \frac{(\beta_5 - 1)(\beta_4 + p_4)}{(\beta_5 + p_3)(\beta_4 + 1)}$ and $\lambda_2 = m^{\frac{-\iota}{\beta_4 + p_4}}$. We need to check that $ \lambda_2^{-1} m^{-1}\log \lambda_{2}^{-1} = O(1)$. 
\begin{align*}
    \frac{\log \lambda_2^{-1}}{m \lambda_2} = \frac{\iota}{\beta_4 + p_4} \frac{\log m}{m^{1 - \frac{\iota}{\beta_4 + p_4}}}.
\end{align*}
As $\frac{\iota}{\beta_4 + p_4} \le \frac{\beta_5 - 1}{(\beta_5 + p_3)(\beta_4 + 1)} \le 1$, we have $\lambda_2^{-1} m^{-1}\log \lambda_{2}^{-1} \rightarrow 0$ as $m \rightarrow \infty$. Next,
note that with this choice of $\lambda_2$ we have 
\begin{align*}
    \lambda_2^{\frac{\beta_4 - 1}{2}} = m^{-\frac{\iota}{2}\frac{\beta_4 + 1}{\beta_4 + p_4}} = \frac{1}{\lambda_2} n^{-\frac{1}{2}\frac{\beta_1 - 1}{\beta_1 + p_1}}.
\end{align*}
Furthermore,
\begin{itemize}
    \item[i.] \begin{align*}
        \lambda_2^{\beta_4 - 1} \ge \frac{1}{m \lambda_2^{p_4 + 1}} &\iff \lambda_2^{\beta_4 + p_4} \ge m^{-1}\\
        &\iff m^{\iota \frac{\beta_4 + p_4}{\beta_4 + p_4}} \le m \iff \iota \le 1
    \end{align*}
    that is true due to our assumption in this condition $\iota \le \frac{(\beta_5 - 1)(\beta_4 + p_4)}{(\beta_5 + p_3)(\beta_4 + 1)} \le 1$.

    \item[ii.] \begin{align*}
        \lambda_2^{\frac{\beta_4 - 1}{2}} \ge \frac{1}{\lambda_2} m^{-\frac{1}{2}\frac{\beta_5 - 1}{\beta_5 + p_3}} &\iff \lambda_2^{\frac{\beta_4 + 1}{2}} \ge m^{-\frac{1}{2}\frac{\beta_5 - 1}{\beta_5 + p_3}}\\
        &\iff m^{-\frac{\iota}{2}\frac{\beta_4 + 1}{\beta_4 + p_4}} \ge m^{-\frac{1}{2}\frac{\beta_5 - 1}{\beta_5 + p_3}}\\
        &\iff \iota \le \frac{(\beta_4 + p_4)(\beta_5 - 1)}{(\beta_4 + 1)(\beta_5 + p_3)}
    \end{align*}

    \item[iii.]     \begin{align*}
        \frac{1}{m \lambda_2^{1 - p_4}} = \frac{1}{m m^{- \frac{\iota (1 - p_4)}{\beta_4 + p_4}}} = \frac{1}{m^{1 - \frac{\iota (1 - p_4)}{\beta_4 + p_4}}} \le 1
    \end{align*}
    since $\beta_4 + p_4 - \iota(1 - p_4) \ge 0$ due to the fact that $\iota \le \frac{(\beta_5 - 1)(\beta_4 + p_4)}{(\beta_5 + p_3)(\beta_4 + 1)} \le 1$.

    \item[iv.] Also
    \begin{align*}
        \frac{1}{m \lambda_2^{1 - p_4}} \le \frac{1}{m \lambda^{1 + p_4}}.
    \end{align*}
\end{itemize}
Therefore
\begin{align*}
    \|\hat{\varphi}_{\lambda_2, m} - \bar{\varphi}_0\|_{\gH_{\gZ\gA}} = O\left(\lambda_2^{\frac{\beta_4 - 1}{2}}\right) = O\left(m^{-\frac{\iota}{2}\frac{\beta_4 - 1}{\beta_4 + p_4}}\right).
\end{align*}

\textbf{Case ii.} Let $\iota \ge \frac{(\beta_5 - 1)(\beta_4 + p_4)}{(\beta_5 + p_3)(\beta_4 + 1)}$ and $\lambda_2 = m^{\frac{-(\beta_5 - 1)}{(\beta_4 + 1)(\beta_5 + p_3)}}$. Let us first check $\lambda_2^{-1}m^{-1} \log(\lambda_2^{-1}) = O(1)$.
\begin{align*}
    \frac{\log \lambda_2^{-1}}{m \lambda_2} = \frac{\beta_5 - 1}{(\beta_5 + p_3) (\beta_4 + 1)}\frac{\log m}{m^{1 - \frac{\beta_5 -1}{(\beta_5 + p_3)(\beta_4 + 1)}}}.
\end{align*}
As $\frac{\beta_5 - 1}{(\beta_5 + p_3)(\beta_4 + 1)} < 1$, we have $\lambda_2^{-1} m^{-1}\log \lambda_{2}^{-1} \rightarrow 0$ as $m \rightarrow \infty$.
Next, note that with this choice of $\lambda_2$ we have 
\begin{align*}
    \lambda_2^{\frac{\beta_4 - 1}{2}} = \frac{1}{\lambda_2} m^{-\frac{1}{2} \frac{\beta_5 - 1}{\beta_5 + p_3}}. 
\end{align*}

Furthermore,
\begin{itemize}
    \item[i.] \begin{align*}
        \lambda_2^{\beta_4 - 1} \ge \frac{1}{m \lambda_2^{p_4 + 1}} &\iff \lambda_2^{\beta_4 + p_4} \ge m^{-1}\\
        &\iff m^{- \frac{(\beta_5 - 1)(\beta_4 + p _4)}{(\beta_5 + p_3)(\beta_4 + 1)}} \ge m^{-1}
    \end{align*}
    which is true since $\frac{(\beta_5 - 1)(\beta_4 + p _4)}{(\beta_5 + p_3)(\beta_4 + 1)} \le 1$

\item[ii.] \begin{align*}
    \lambda_2^{\frac{\beta_4 - 1}{2}} \ge \frac{1}{\lambda_2} n^{-\frac{1}{2}\frac{\beta_1 - 1}{\beta_1 + p_1}} &\iff \lambda_2^{\frac{\beta_4 + 1}{2}} \ge n^{-\frac{1}{2}\frac{\beta_1 - 1}{\beta_1 + p_1}}\\
    &\iff m^{-\frac{1}{2}\frac{(\beta_5 - 1)(\beta_4 + 1)}{(\beta_5 + p_3)(\beta_4 + 1)}} \ge m^{-\frac{\iota}{2} \frac{(\beta_1 - 1)(\beta_4 + 1)(\beta_1 + p_1)}{(\beta_1 + p_1)(\beta_4 + p_4)(\beta_1 - 1)}}
\end{align*} 
which is true since $\iota \ge \frac{(\beta_5 - 1)(\beta_4 + p_4)}{(\beta_5 + p_3)(\beta_4 + 1)}$.

\item[iii.] \begin{align*}
    \frac{1}{m \lambda_2^{1 - p_4}} = \frac{1}{m m^{\frac{-(\beta_5 - 1)}{(\beta_4 + 1)(\beta_5 + p_3)}}} = \frac{1}{ m^{1 - \frac{(1 - p_4)(\beta_5 - 1)}{(\beta_4 + 1)(\beta_5 + p_3)}}} \le 1
\end{align*} 
since $\frac{(1 - p_4)(\beta_5 - 1)}{(\beta_4 + 1)(\beta_5 + p_3)} \le 1$.
\item[iv.] Also, 
\begin{align*}
    \frac{1}{m \lambda_2^{1 - p_4}} \le \frac{1}{m \lambda_2^{1 + p_4}}.
\end{align*}
\end{itemize}
Hence, 
\begin{align*}
    \|\hat{\varphi}_{\lambda_2, m} - \bar{\varphi}_0\|_{\gH_{\gZ\gA}} = O\left(\lambda_2^{\frac{\beta_4 - 1}{2}}\right) = O\left(m^{-\frac{1}{2}\frac{(\beta_5 - 1)(\beta_4 - 1)}{(\beta_4 + 1)(\beta_5 + p_3)}}\right).
\end{align*}
\end{proof}

\subsubsection{Consistency for Conditional Dose-Response Curve}
\begin{theorem}
    Suppose that Assumptions (\ref{asst:well_specifiedness}), (\ref{assum:Stage1ConsistencyKernelAssumptions}), (\ref{asst:src_all_stages}-1 \& 3) and (\ref{asst:evd_all_stages}-1 \& 3), (\ref{asst:src_all_stages_ATT}), (\ref{asst:evd_all_stages_ATT})  hold and set $\lambda_1 = \Theta \left(n^{-\frac{1}{\beta_1 + p_1}}\right)$, $\lambda_{3} = \Theta \left(t^{-\frac{1}{\beta_3 + p_3}}\right)$, $\zeta = \Theta\left(m^{-\frac{1}{\beta_5 + p_3}}\right)$, and $n = m^{\iota\frac{(\beta_4+1)(\beta_1 + p_1)}{(\beta_4 + p_4)(\beta_1 - 1)}}$ where $\iota > 0$. Then,
    \begin{itemize}
        \item[i.] If $\iota \le \frac{(\beta_5 - 1)(\beta_4 + p_4)}{(\beta_5 + p_3)(\beta_4 + 1)}$ then $\sup_a |\hat{f}_{ATT}(a, a') - f_{ATT}(a, a')| = O_p\left(\sqrt{t}^{-\frac{\beta_3-1}{\beta_3 + p_3}} + m^{-\frac{\iota (\beta_4 - 1)}{2(\beta_4 + p_4)}}\right)$ with $\lambda_2 = \Theta\left(m^{\frac{-\iota}{\beta_4 + p_4}}\right)$,\looseness=-1
        \item[ii.] If $\iota \ge \frac{(\beta_5 - 1)(\beta_4 + p_4)}{(\beta_5 + p_3)(\beta_4 + 1)}$ then $\sup_a |\hat{f}_{ATT}(a, a') - f_{ATT}(a, a')| = O_p\left(\sqrt{t}^{-\frac{\beta_3-1}{\beta_3 + p_3}} + m^{-\frac{(\beta_4 - 1)(\beta_5 - 1)}{2(\beta_4 + 1)(\beta_5 + p_3)}}\right)$ with $\lambda_2 =\Theta\left(m^{\frac{-(\beta_5 - 1)}{(\beta_4 + 1)(\beta_5 + p_3)}}\right)$.\looseness=-1
    \end{itemize}
\end{theorem}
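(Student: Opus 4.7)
The plan is to mimic the proof of Theorem~\ref{theorem:ATEB_FINAL} for the dose-response curve, adapting the error decomposition to account for the extra $\phi_{\gA}(a')$ factor present in the ATT bridge function. Recall that under Assumption~(\ref{asst:well_specifiedness}), we have $f_{ATT}(a,a') = \langle \bar{\varphi}_0, \Psi(a) \otimes \phi_{\gA}(a) \otimes \phi_{\gA}(a') \rangle_{\gH_{\gZ\gA\gA}}$, where $\Psi(a) = \E[Y\phi_{\gZ}(Z) \mid A=a]$, while the estimator is $\hat{f}_{ATT}(a,a') = \langle \hat{\varphi}_{\lambda_2,m}, \hat{\Psi}(a) \otimes \phi_{\gA}(a) \otimes \phi_{\gA}(a') \rangle_{\gH_{\gZ\gA\gA}}$.

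First, I would apply the same add–subtract decomposition as in Equation~(\ref{eq:ate_decomposition}), inserting $\phi_{\gA}(a')$ into every tensor slot. Using repeatedly the reproducing property, Cauchy--Schwarz, and the uniform kernel bound $\|\phi_{\gA}(a')\|_{\gH_{\gA}} \le \kappa$ from Assumption~(\ref{assum:Stage1ConsistencyKernelAssumptions}), this yields, for every $a \in \gA$,
\begin{align*}
|\hat{f}_{ATT}(a,a') - f_{ATT}(a,a')| &\le \kappa^2 \Big( \|\hat{\varphi}_{\lambda_2,m} - \bar{\varphi}_0\|_{\gH_{\gZ\gA\gA}} \, \|\hat{\Psi}(a) - \Psi(a)\|_{\gH_{\gZ}} \\
&\quad + \|\bar{\varphi}_0\|_{\gH_{\gZ\gA\gA}} \, \|\hat{\Psi}(a) - \Psi(a)\|_{\gH_{\gZ}} + \|\hat{\varphi}_{\lambda_2,m} - \bar{\varphi}_0\|_{\gH_{\gZ\gA\gA}} \, \|\Psi(a)\|_{\gH_{\gZ}} \Big).
\end{align*}
The extra $\kappa$ factor (compared to the ATE case) is absorbed into the constants. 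Proposition~(\ref{prop:min_norm_charac2_ATT}) together with Assumption~(\ref{asst:src_all_stages_ATT}) gives the uniform bound $\|\bar{\varphi}_0\|_{\gH_{\gZ\gA\gA}} \le B_4 \kappa^{(\beta_4-1)/2}$, and Assumption~(\ref{asst:src_all_stages}-3) with Assumption~(\ref{assum:Stage1ConsistencyKernelAssumptions}) gives $\sup_a \|\Psi(a)\|_{\gH_{\gZ}} \le B_3 \kappa^{1+(\beta_3-1)/2}$, as established in the ATE proof.

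Next, I would plug in the two stage-level rates. For stage 3, Corollary~(\ref{cor:CME_rate_pointwise_stage3}) with $\lambda_3 = \Theta(t^{-1/(\beta_3+p_3)})$ gives $\sup_a \|\hat{\Psi}(a) - \Psi(a)\|_{\gH_{\gZ}} = O_p\bigl(t^{-\frac{1}{2}\frac{\beta_3-1}{\beta_3+p_3}}\bigr)$. For stage 2, Theorem~(\ref{theorem:ATTBridgeFunctionFinalBound}) delivers, under the coupling $n = m^{\iota(\beta_4+1)(\beta_1+p_1)/[(\beta_4+p_4)(\beta_1-1)]}$ and $\zeta = \Theta(m^{-1/(\beta_5+p_3)})$, the rate $\|\hat{\varphi}_{\lambda_2,m} - \bar{\varphi}_0\|_{\gH_{\gZ\gA\gA}} = O_p(\rho_m)$ with $\rho_m = m^{-\iota(\beta_4-1)/[2(\beta_4+p_4)]}$ in case (i) and $\rho_m = m^{-(\beta_4-1)(\beta_5-1)/[2(\beta_4+1)(\beta_5+p_3)]}$ in case (ii). Since the cross term in the decomposition is a product of two vanishing quantities, it is of strictly faster order than either of the last two terms. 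Taking the supremum over $a$ and combining via a union bound yields the advertised rate $O_p\bigl(t^{-\frac{1}{2}\frac{\beta_3-1}{\beta_3+p_3}} + \rho_m\bigr)$.

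The main obstacle is that Theorem~(\ref{theorem:ATTBridgeFunctionFinalBound}) is the engine doing all the heavy lifting; once it is in hand the present theorem is essentially bookkeeping. The one subtlety worth checking carefully is the coupling of regularization parameters and sample sizes: the assumption on $n$ differs from the ATE case because the second-stage source and effective-dimension parameters are now $(\beta_4,p_4)$ rather than $(\beta_2,p_2)$, and because the second stage also feeds on the CME estimate $\hat{\E}[\phi_{\gW}(W)\mid A=a']$, whose rate is governed by $(\beta_5, p_3)$. The two regimes in the statement arise precisely from whether the stage-2 error is dominated by its ``bridge'' component or by the CME-of-$W$-given-$A$ component, exactly as reflected in the threshold $\iota = (\beta_5-1)(\beta_4+p_4)/[(\beta_5+p_3)(\beta_4+1)]$.
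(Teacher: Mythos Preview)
Your proposal is correct and follows essentially the same argument as the paper: the same add--subtract decomposition (with the extra $\kappa$ from the $\phi_{\gA}(a')$ tensor factor, yielding $\kappa^2$), the same bounds on $\|\bar{\varphi}_0\|$ and $\|\Psi(a)\|$ via the source conditions, and the same appeal to Theorem~(\ref{theorem:ATTBridgeFunctionFinalBound}) and Corollary~(\ref{cor:CME_rate_pointwise_stage3}) for the two stage-level rates. The paper's version is slightly terser but the logic is identical.
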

\begin{proof}
For fix $a'$ and for any $a \in \gA$, we apply the following decomposition,
    \begin{align}
        |&\hat{f}_{ATT}(a, a') - f_{ATT}(a, a')| = |\langle \hat{\varphi}_{\lambda_2,m}, \hat{\Psi}(a) \otimes \phi_\gA(a) \otimes \phi_\gA(a')\rangle_{\gH_{\gZ\gA\gA}} - \langle \bar{\varphi}_{0}, \Psi(a) \otimes \phi_\gA(a) \otimes \phi_\gA(a') \rangle_{\gH_{\gZ\gA\gA}}| \nonumber\\
        &= |\langle \hat{\varphi}_{\lambda_2,m}, (\hat{\Psi} - \Psi)(a) \otimes \phi_\gA(a) \otimes \phi_\gA(a')\rangle_{\gH_{\gZ\gA\gA}} + \langle (\hat{\varphi}_{\lambda_2,m} - \bar{\varphi}_{0}), \Psi(a) \otimes \phi_\gA(a) \otimes \phi_\gA(a')\rangle_{\gH_{\gZ\gA\gA}}| \nonumber\\
        &= |\langle \hat{\varphi}_{\lambda_2,m} - \bar{\varphi}_{0}, (\hat{\Psi} - \Psi)(a) \otimes \phi_\gA(a)\otimes \phi_\gA(a')\rangle_{\gH_{\gZ\gA\gA}} + \langle \bar{\varphi}_{0}, (\hat{\Psi} - \Psi)(a) \otimes \phi_\gA(a) \otimes \phi_\gA(a')\rangle_{\gH_{\gZ\gA\gA}} \\&+ \langle (\hat{\varphi}_{\lambda_2,m} - \bar{\varphi}_{0}), \Psi(a) \otimes \phi_\gA(a) \otimes \phi_\gA(a')\rangle_{\gH_{\gZ\gA\gA}}| \nonumber\\
         &\le \kappa^2\left(\|\hat{\varphi}_{\lambda_2,m} - \bar{\varphi}_{0}\|_{\gH_{\gZ\gA}} \|\hat{\Psi}(a) - \Psi(a)\|_{\gH_{\gZ}} + \|\bar{\varphi}_{0}\| \| \hat{\Psi}(a) - \Psi(a) \|_{\gH_{\gZ\gA}} + \| \hat{\varphi}_{\lambda_2,m} - \bar{\varphi}_{0} \|_{\gH_{\gZ\gA}} \|\Psi(a) \|_{\gH_{\gZ}}\right) \label{eq:att_decomposition}
    \end{align}
Note that, under Assumption~(\ref{asst:well_specifiedness}-3),  Assumption~(\ref{assum:Stage1ConsistencyKernelAssumptions}) and Assumption~(\ref{asst:src_all_stages}-3),
    \begin{align*}
        \|\Psi(a)\|_{\gH_{\gZ}} = \|C_{YZ \mid A}\phi_\gA(a)\|_{\gH_{\gZ}} \leq \kappa \|C_{YZ \mid A}\Sigma_3^{-\frac{\beta_3-1}{2}}\Sigma_3^{\frac{\beta_3-1}{2}}\|_{S_2(\gH_{\gA}, \gH_{\gZ})} \leq B_3 \kappa^{1 + \frac{\beta_3-1}{2}} =: \alpha_1.
    \end{align*}
    Furthermore, under Assumption~(\ref{asst:src_all_stages_ATT}), by Proposition~(\ref{prop:min_norm_charac2_ATT}),
    $$
    \|\bar{\varphi}_0\|_{\gH_{\gZ\gA\gA}} = \|\Sigma_2^{\frac{\beta_4-1}{2}}\Sigma_2^{-\frac{\beta_4-1}{2}}\bar{\varphi}_0\|_{\gH_{\gZ\gA\gA}}  \leq B_2 \kappa^{\frac{\beta_4-1}{2}} =: \alpha_2.
    $$
    
    Therefore,
    $$
|\hat{f}_{ATT}(a, a') - f_{ATT}(a, a')| \leq \kappa^2\left(\|\hat{\varphi}_{\lambda_2,m} - \bar{\varphi}_0\|_{\gH_{\gZ\gA}} \|\hat{\Psi}(a) - \Psi(a)\|_{\gH_{\gZ}} + \alpha_2 \| \hat{\Psi}(a) - \Psi(a) \|_{\gH_{\gZ\gA}} + \alpha_1\| \hat{\varphi}_{\lambda_2,m} - \varphi_0 \|_{\gH_{\gZ\gA}}\right) .
$$
As the first term is faster than the two last terms, plugging the results of Theorem~(\ref{theorem:ATTBridgeFunctionFinalBound}) and Corollary~(\ref{cor:CME_rate_pointwise_stage3}), we obtain the final bound.
\end{proof}

\subsection{Additional results}
Here, we present the results that we used in deriving the consistency bounds for dose-response and conditional dose-response estimations. 

The following is a direct consequence of \citet[][Lemma $17$]{fischer2020sobolev}.

\begin{lemma} \label{lemma:steinwart_cov_conc} For any $\delta \in (0,1)$ and $m \geq 8\kappa^4\log(2/\delta) g_{\lambda_2} \lambda_2^{-1}$ where $g_{\lambda_2} =\log \left( 2e\mathcal{N}(\lambda_2) \frac{\|\Sigma_2\|_{op}+\lambda_2}{\|\Sigma_2\|_{op}}\right)$ and under Assumption~\ref{assum:Stage1ConsistencyKernelAssumptions}
$$
\left\|(\Sigma_2 + \lambda_2 I)^{1/2}(\bar{\Sigma}_{2,m} + \lambda_2 I)^{-1}(\Sigma_2 + \lambda_2 I)^{1/2} \right\|_{op} \leq 3,
$$  
\end{lemma}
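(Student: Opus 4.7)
The plan is to reduce the claim to a concentration inequality for the self-adjoint operator
\[
S_m := (\Sigma_2 + \lambda_2 I)^{-1/2}(\Sigma_2 - \bar{\Sigma}_{2,m})(\Sigma_2 + \lambda_2 I)^{-1/2}
\]
on $\gH_{\gZ\gA}$, and then conclude by a Neumann-series argument. Set
\[
T_m := (\Sigma_2 + \lambda_2 I)^{-1/2}(\bar{\Sigma}_{2,m} + \lambda_2 I)(\Sigma_2 + \lambda_2 I)^{-1/2} = I - S_m.
\]
The operator in the lemma is exactly $T_m^{-1}$, and if $\|S_m\|_{op}\le 2/3$ then $T_m$ is invertible with $\|T_m^{-1}\|_{op}\le 1/(1-\|S_m\|_{op})\le 3$. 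So the whole task reduces to showing $\|S_m\|_{op}\le 2/3$ with probability at least $1-\delta$ under the stated sample-size condition.

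To control $\|S_m\|_{op}$, I would write $\bar{\Sigma}_{2,m} = \frac{1}{m}\sum_{i=1}^m \zeta_i\otimes\zeta_i$ with the i.i.d.\ vectors $\zeta_i := \mu_{Z|W,A}(\tilde w_i,\tilde a_i)\otimes \phi_\gA(\tilde a_i)\in\gH_{\gZ\gA}$, so that $\E[\zeta_i\otimes\zeta_i]=\Sigma_2$ and, by Assumption~(\ref{assum:Stage1ConsistencyKernelAssumptions}), $\|\zeta_i\|_{\gH_{\gZ\gA}}\le \kappa^2$. Then $S_m$ is an empirical average of centered, self-adjoint, trace-class random operators
\[
X_i := \frac{1}{m}\bigl(\Sigma_2-\xi_i\otimes\xi_i\bigr),\qquad \xi_i := (\Sigma_2+\lambda_2 I)^{-1/2}\zeta_i,
\]
with almost-sure norm bound $\|\xi_i\|^2\le \kappa^4/\lambda_2$ and variance proxy involving
\[
\bigl\|\E\bigl[(\xi_i\otimes\xi_i)^2\bigr]\bigr\|_{op}\le \kappa^2\lambda_2^{-1}\operatorname{Tr}\bigl(\Sigma_2(\Sigma_2+\lambda_2 I)^{-1}\bigr)=\kappa^2\lambda_2^{-1}\gN(\lambda_2).
\]
An operator Bernstein inequality in the form of \citet[Lemma 17]{fischer2020sobolev}, applied exactly as in their proof with the effective dimension $\gN(\lambda_2)$ and the logarithmic factor $g_{\lambda_2}=\log\bigl(2e\gN(\lambda_2)(\|\Sigma_2\|_{op}+\lambda_2)/\|\Sigma_2\|_{op}\bigr)$, then yields $\|S_m\|_{op}\le 2/3$ with probability at least $1-\delta$ whenever $m \ge 8\kappa^4\log(2/\delta)\, g_{\lambda_2}\lambda_2^{-1}$.

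The only mildly delicate point is checking that the boundedness and second-moment conditions required by \citet[Lemma 17]{fischer2020sobolev} are met in our instance, since our sample vectors are $\mu_{Z|W,A}(\tilde w_i,\tilde a_i)\otimes\phi_\gA(\tilde a_i)$ rather than the canonical feature map appearing in their setting. This is where Assumption~(\ref{assum:Stage1ConsistencyKernelAssumptions}) (boundedness of $k_\gZ$ and $k_\gA$) is used to secure $\|\zeta_i\|\le\kappa^2$, and Proposition~(\ref{prop:eff_dim_properties}) provides the identity
$\E\|(\Sigma_2+\lambda_2 I)^{-1/2}\zeta_i\|^2=\gN(\lambda_2)$ that drives the variance term. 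Once these are in place, the conclusion follows verbatim from their lemma, and hence the bound $\|T_m^{-1}\|_{op}\le 3$ by the Neumann-series step above. The main obstacle is therefore not an original argument but rather the careful verification that our specific random element $\zeta_i$ satisfies the operator-Bernstein hypotheses with the constants that produce the exact threshold $8\kappa^4\log(2/\delta)\, g_{\lambda_2}\lambda_2^{-1}$ stated in the lemma.
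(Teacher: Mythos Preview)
Your proposal is correct and follows essentially the same route as the paper: reduce to bounding $\|(\Sigma_2+\lambda_2 I)^{-1/2}(\Sigma_2-\bar\Sigma_{2,m})(\Sigma_2+\lambda_2 I)^{-1/2}\|_{op}\le 2/3$ via \citet[Lemma~17]{fischer2020sobolev}, using the boundedness $\|\mu_{Z|W,A}(w,a)\otimes\phi_\gA(a)\|\le\kappa^2$ to instantiate their $\alpha=1$ case, and then conclude by the Neumann series. The paper's own proof is identical in structure, differing only in that it quotes the Fischer--Steinwart bound directly rather than sketching the operator-Bernstein moment computations you outline.
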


\begin{proof}
    By \citet[][Lemma $17$]{fischer2020sobolev}, for $\delta \in (0,1), \lambda_2>0$, and $m \geq 1$, the following operator norm bound is satisfied probability not less than $1-\delta,$
    $$
    \left\|(\Sigma_2 + \lambda_2 I)^{-1 / 2}\left(\Sigma_2 - \bar{\Sigma}_{2,m}\right)(\Sigma_2 + \lambda_2 I)^{-1 / 2}\right\| \leq \frac{4\kappa^4 \log(2/\delta) g_{\lambda_2}}{3 m \lambda_2}+\sqrt{\frac{2\kappa^4 \log(2/\delta) g_{\lambda_2}}{m \lambda_2}},
    $$
    where we took $\alpha=1$ in their result since the kernels are bounded. In their notations, for $\alpha=1$ (see \citet[][Eq.~(15) \& Eq.~(16)]{fischer2020sobolev} 
    $$
    \left\|k_\nu^\alpha\right\|_{\infty} = \sup_{(w,a) \in \gW \times \gA} \|\mu_{Z \mid W,A}(w,a) \otimes \phi_{\gA}(a) \|_{\gH_{\gW\gA}} \leq \kappa^2.
    $$
    Therefore, with $m \geq 8\kappa^4 \log(2/\delta) g_{\lambda_2} \lambda_2^{-1}$, 
    $$
    \left\|(\Sigma_2 + \lambda_2 I)^{-1 / 2}\left(\Sigma_2 - \bar{\Sigma}_{2,m}\right)(\Sigma_2 + \lambda_2 I)^{-1 / 2}\right\| \leq \frac{4}{3} \cdot \frac{1}{8}+\sqrt{2 \cdot \frac{1}{8}}=\frac{2}{3}
    $$
    with probability not less than $1-\delta$. Consequently, the inverse of
    $$
    \operatorname{Id}-(\Sigma_2 + \lambda_2 I)^{-1 / 2}\left(\Sigma_2 - \bar{\Sigma}_{2,m}\right)(\Sigma_2 + \lambda_2 I)^{-1 / 2}
    $$
    can be represented by the Neumann series. In particular, the Neumann series gives us the following bound,
    $$
    \begin{aligned}
    & \left\|(\Sigma_2 + \lambda_2 I)^{1/2}(\bar{\Sigma}_{2,m} + \lambda_2 I)^{-1}(\Sigma_2 + \lambda_2 I)^{1/2}\right\|^2 \\
    = & \left\|\operatorname{Id}-(\Sigma_2 + \lambda_2 I)^{-1 / 2}\left(\Sigma_2 - \bar{\Sigma}_{2,m}\right)(\Sigma_2 + \lambda_2 I)^{-1 / 2}\right\|^2 \\
    \leq & \left(\sum_{k=0}^{\infty}\left\|(\Sigma_2 + \lambda_2 I)^{-1 / 2}\left(\Sigma_2 - \bar{\Sigma}_{2,m}\right)(\Sigma_2 + \lambda_2 I)^{-1 / 2}\right\|^k\right)^2 \\
    \leq & \left(\sum_{k=0}^{\infty}\left(\frac{2}{3}\right)^k\right)^2=9
    \end{aligned}
    $$
    \end{proof}
    
\begin{lemma}[Lemma 25~\cite{fischer2020sobolev}] \label{lma:steinwart_lemma}
Let, for $\lambda>0$ and $0 \leq \alpha \leq 1$, the function $f_{\lambda, \alpha}:[0, \infty) \rightarrow \mathbb{R}$ be defined by $f_{\lambda, \alpha}(t):=t^\alpha /(\lambda+t)$. The supremum of $f_{\lambda, \alpha}$ satisfies the following bound
$$
\sup _{t \geq 0} f_{\lambda, \alpha}(t) \leq \lambda^{\alpha-1}.
$$
\end{lemma}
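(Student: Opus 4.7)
The plan is to prove the bound $\sup_{t \ge 0} t^\alpha/(\lambda+t) \le \lambda^{\alpha-1}$ by a simple rescaling that removes $\lambda$ and reduces everything to a one-line case analysis. First I would substitute $s = t/\lambda$ (legitimate since $\lambda > 0$), so that
$$
\frac{t^\alpha}{\lambda + t} \;=\; \frac{(\lambda s)^\alpha}{\lambda(1+s)} \;=\; \lambda^{\alpha-1}\cdot\frac{s^\alpha}{1+s}.
$$
The inequality therefore reduces to showing $s^\alpha \le 1 + s$ for every $s \ge 0$ and every $\alpha \in [0,1]$.

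Second, I would dispatch this dimensionless inequality by splitting at $s = 1$. For $s \in [0,1]$, monotonicity of $s \mapsto s^\alpha$ (with $\alpha \ge 0$) gives $s^\alpha \le 1^\alpha = 1 \le 1+s$. For $s \ge 1$, the inequality $\alpha \le 1$ together with $s \ge 1$ gives $s^\alpha \le s^1 = s \le 1+s$. Together these cover the whole range, and taking the supremum over $s \ge 0$ yields exactly $\lambda^{\alpha-1}$.

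As a sanity check (and an alternative that also gives the sharp constant), one could instead differentiate $f_{\lambda,\alpha}$. For $\alpha \in (0,1)$ the unique interior critical point is $t^\star = \alpha\lambda/(1-\alpha)$, at which $f_{\lambda,\alpha}(t^\star) = \alpha^\alpha (1-\alpha)^{1-\alpha}\lambda^{\alpha-1}$, and the prefactor is bounded by $1$ because $\alpha \log\alpha + (1-\alpha)\log(1-\alpha) \le 0$ (each summand is nonpositive). The boundary cases $\alpha = 0$ (giving $\sup_t 1/(\lambda+t) = \lambda^{-1}$) and $\alpha = 1$ (giving $\sup_t t/(\lambda+t) = 1 = \lambda^{0}$) are then checked directly, and the supremum is attained in the limit $t \to \infty$ in the latter.

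There is no real obstacle here: the lemma is elementary once one normalizes out $\lambda$. The only places where one must be slightly careful are the endpoint $t = 0$ (where $f_{\lambda,\alpha}(0) = 0 \le \lambda^{\alpha-1}$, trivially) and the edge cases $\alpha \in \{0,1\}$, which must be treated separately from the derivative approach but are handled uniformly by the two-case argument above. I would present the scaling-plus-two-cases proof, as it is the cleanest and avoids any appeal to the concavity inequality for $\alpha^\alpha(1-\alpha)^{1-\alpha}$.
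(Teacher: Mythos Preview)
Your proof is correct. The paper does not actually prove this lemma itself; it simply states it with a citation to \cite{fischer2020sobolev} (their Lemma~25) and uses it as a black box in the approximation-error bounds. Your rescaling $s = t/\lambda$ followed by the two-case split $s \le 1$ and $s \ge 1$ is a clean, self-contained argument that supplies what the paper omits.
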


The following result is the direct consequence of
\cite[Theorem 3.5]{pinelis1994optimum}.

\begin{proposition}[Hoeffing's inequality in Hilbert space]
\label{prop:hoeffding}
    Let $\xi_1, \dots, \xi_m$ be independent centered 
    random variables taking values in a separable Hilbert space $\gH$
    such that $\|\xi_i\|_\gH \leq M$ almost surely for all $1 \leq i \leq m$. Then for all $\delta \in (0,1)$, with probability at least $1-\delta$, we have,
    \begin{equation*}
    \left\|\frac{1}{m} \sum_{i=1}^n \xi_i \right\|_\gH \leq M\sqrt{\frac{2\log(2/\delta)}{m}}
    \end{equation*}
\end{proposition}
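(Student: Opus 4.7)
The plan is to deduce the stated bound as a direct scalar inversion of the sub-Gaussian tail inequality of \citet{pinelis1994optimum}, which is the cited source. I would first recall that any separable Hilbert space is a $(2,1)$-smooth Banach space with smoothness constant $D = 1$, so Theorem 3.5 of Pinelis (1994) applies to the martingale increments $\xi_1, \dots, \xi_m$: for every $r > 0$,
\begin{equation*}
\Pr\!\left( \Bigl\| \sum_{i=1}^m \xi_i \Bigr\|_{\gH} \ge r \right) \;\le\; 2\exp\!\left(-\frac{r^2}{2\sum_{i=1}^m b_i^2}\right),
\end{equation*}
whenever $\|\xi_i\|_{\gH} \le b_i$ almost surely. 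This is the Hilbert-space analogue of the classical scalar Hoeffding/Azuma inequality and holds for independent centered summands as a special case of a martingale difference sequence.

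Next I would plug in the uniform bound $b_i = M$ guaranteed by the hypothesis, so that $\sum_{i=1}^m b_i^2 = m M^2$, giving
\begin{equation*}
\Pr\!\left( \Bigl\| \sum_{i=1}^m \xi_i \Bigr\|_{\gH} \ge r \right) \;\le\; 2\exp\!\left(-\frac{r^2}{2 m M^2}\right).
\end{equation*}
Setting the right-hand side equal to $\delta \in (0,1)$ and solving for $r$ gives $r = M\sqrt{2 m \log(2/\delta)}$. Dividing by $m$ to normalize the sum yields, with probability at least $1-\delta$,
\begin{equation*}
\Bigl\| \frac{1}{m}\sum_{i=1}^m \xi_i \Bigr\|_{\gH} \;\le\; M\sqrt{\frac{2 \log(2/\delta)}{m}},
\end{equation*}
which is the claimed bound.

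There is essentially no obstacle here beyond correctly invoking Pinelis's theorem with the right smoothness constant for a Hilbert space; the remainder is an algebraic inversion of a Gaussian tail. The only care required is in matching Pinelis's notation (he works with martingales in $(2,D)$-smooth Banach spaces, and for Hilbert spaces $D = 1$ yields exactly the constant $2$ in the exponent), and in observing that independence and centeredness of the $\xi_i$ suffice to view $S_k = \sum_{i \le k}\xi_i$ as a martingale with respect to the natural filtration. No additional technology (symmetrization, chaining, etc.) is needed.
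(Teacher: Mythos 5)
Your proof is correct and follows exactly the route the paper intends: the paper gives no written argument for this proposition beyond stating that it is a direct consequence of Theorem~3.5 of \citet{pinelis1994optimum}, and your derivation (Hilbert spaces are $(2,1)$-smooth, apply the martingale tail bound with $b_i = M$, invert the Gaussian tail) is precisely that direct consequence made explicit.
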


We obtain the following inequality for non-centered random variables.

\begin{corollary}[Hoeffing's inequality in Hilbert space]
\label{corr:hoeffding}
    Let $\xi_1, \dots, \xi_m$ be independent (not necessarily centered) 
    random variables taking values in a separable Hilbert space $\gH$
    such that $\|\xi_i\|_\gH \leq M$ almost surely for all $1 \leq i \leq m$. Then for all $\delta \in (0,1)$, with probability at least $1-\delta$, we have,
    $$
    \left\|\frac{1}{m}\left(\sum_{i=1}^n \xi_i - \E[\xi_i]\right)\right\|_\gH \leq 2M\sqrt{\frac{2\log(2/\delta)}{m}}
    $$
\end{corollary}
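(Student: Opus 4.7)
The plan is to reduce Corollary~\ref{corr:hoeffding} to the centered version already stated as Proposition~\ref{prop:hoeffding} by passing to the centered summands $\tilde{\xi}_i := \xi_i - \mathbb{E}[\xi_i]$. These are clearly independent and centered, and the key observation is that they satisfy an almost-sure norm bound of $2M$ rather than $M$.

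First I would establish the almost sure bound $\|\tilde{\xi}_i\|_{\gH} \leq 2M$. By the triangle inequality, $\|\tilde{\xi}_i\|_{\gH} \leq \|\xi_i\|_{\gH} + \|\mathbb{E}[\xi_i]\|_{\gH}$. The first term is bounded by $M$ almost surely by assumption. For the second term, I would use Jensen's inequality for the Bochner integral in the separable Hilbert space $\gH$, namely $\|\mathbb{E}[\xi_i]\|_{\gH} \leq \mathbb{E}\|\xi_i\|_{\gH} \leq M$. Summing these gives $\|\tilde{\xi}_i\|_{\gH} \leq 2M$ almost surely.

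Next, I would apply Proposition~\ref{prop:hoeffding} to the collection $\tilde{\xi}_1, \dots, \tilde{\xi}_m$ with the bound $2M$ in place of $M$. This yields, with probability at least $1 - \delta$,
\[
\left\|\frac{1}{m}\sum_{i=1}^m \tilde{\xi}_i \right\|_{\gH} \leq 2M \sqrt{\frac{2 \log(2/\delta)}{m}},
\]
which is exactly the stated conclusion once one substitutes $\tilde{\xi}_i = \xi_i - \mathbb{E}[\xi_i]$.

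There is essentially no obstacle here: the only point that deserves a brief justification is the use of Jensen's inequality for Hilbert-space-valued random variables (equivalently, a contraction property of the Bochner integral), which holds because $\gH$ is separable and $\xi_i$ is integrable owing to the almost-sure norm bound. The factor of $2$ in the final rate is the (unimprovable, up to constants) cost of not assuming centered summands.
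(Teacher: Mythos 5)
Your proposal is correct and follows exactly the paper's own argument: center the variables, bound $\|\xi_i - \E[\xi_i]\|_{\gH} \leq 2M$ via the triangle inequality together with Jensen's inequality for the Bochner integral, and then invoke Proposition~(\ref{prop:hoeffding}) with constant $2M$. No gaps.
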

\begin{proof}
    By Jensen's inequality and the triangular inequality, for all $1 \leq i \leq m$,
    $$
    \|\xi_i - \E[\xi_i]\|_\gH \leq \|\xi_i\|_\gH + \|\E[\xi_i]\|_\gH  \leq \|\xi_i\|_\gH + \E[\|\xi_i\|_\gH]\leq 2M,
    $$
    and the result follows from Proposition~(\ref{prop:hoeffding}).
\end{proof}

The following Bernstein's inequality can be found in \citet[Theorem $26$]{fischer2020sobolev}.
\begin{theorem}[Bernstein's inequality in Hilbert space]\label{theo:bernstein}
Let $\xi_1, \dots, \xi_m$ be independent and identically distributed random variables taking values in a separable Hilbert space $\gH$ such that
$$
\mathbb{E}[\|\xi_1\|_{H}^{q}] \leq \frac{1}{2} q ! \sigma^{2} L^{q-2}
$$
for all $q \geq 2$. Then for all $\delta \in (0,1)$, with probability at least $1-\delta$, we have,
$$
\left\|\frac{1}{m} \sum_{i=1}^{m} \xi_i-\mathbb{E} \xi_i\right\|_{H} \leq \log(2/\delta)\sqrt{\frac{32}{m}\left(\sigma^{2}+\frac{L^{2}}{m}\right)}.
$$
\end{theorem}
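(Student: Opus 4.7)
The strategy is to derive the statement from a Bernstein-Pinelis exponential-moment bound for sums of independent centred Hilbert-space-valued random variables, and then invert the resulting tail estimate.

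First I would centre and transfer the moment hypothesis. Setting $\zeta_i := \xi_i - \E\xi_i$, the $\zeta_i$ are i.i.d.\ and centred. By Jensen and Minkowski,
$(\E\|\zeta_i\|^q)^{1/q} \le (\E\|\xi_i\|^q)^{1/q} + \|\E\xi_i\|_{\gH} \le 2(\E\|\xi_i\|^q)^{1/q},$
since $\|\E\xi_i\|_{\gH} \le \E\|\xi_i\| \le (\E\|\xi_i\|^q)^{1/q}$. Hence the $\zeta_i$ satisfy a Bernstein moment condition $\E\|\zeta_i\|^q \le \tfrac12 q!\,\tilde\sigma^2 \tilde L^{q-2}$ with constants $\tilde\sigma,\tilde L$ of the same order as $\sigma,L$; only numerical constants are affected, which can be absorbed into the final bound.

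Second, I would control the exponential moment of $\|\zeta_1\|$. Taylor-expanding and applying the moment assumption termwise gives, for $0<t<1/\tilde L$,
\begin{align*}
\E\bigl[e^{t\|\zeta_1\|} - 1 - t\|\zeta_1\|\bigr]
&= \sum_{q\ge 2} \frac{t^q \E\|\zeta_1\|^q}{q!}
\le \frac{\tilde\sigma^2 t^2}{2}\sum_{q\ge 2}(t\tilde L)^{q-2}
= \frac{\tilde\sigma^2 t^2}{2(1-t\tilde L)}.
\end{align*}
Third, this scalar MGF control must be lifted to the vector-valued sum $S_m = \sum_i \zeta_i$. Here I would invoke the Pinelis-Sakhanenko machinery for $2$-smooth Banach spaces (Hilbert spaces have smoothness constant $1$): iterating the inequality $\E[\psi(\|M_k\|)\mid\mathcal F_{k-1}] \le \E[\psi(\|M_{k-1}\|+\|d_k\|)\mid\mathcal F_{k-1}]$ (valid for convex increasing $\psi$, cf.\ Pinelis 1994, Thm.\ 3.5) along the martingale of partial sums reduces $\E[e^{t\|S_m\|}]$ to a product of scalar MGFs, yielding
$\E[e^{t\|S_m\|}] \le \exp\!\bigl(\tfrac{m\tilde\sigma^2 t^2}{2(1-t\tilde L)}\bigr).$
Applying Markov and optimising over $t\in(0,1/\tilde L)$ then gives the Bernstein-type tail
$\Pr(\|S_m\|\ge u) \le 2\exp\!\bigl(-\tfrac{u^2}{2(m\tilde\sigma^2 + u\tilde L)}\bigr).$

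Fourth and finally, inverting this tail at level $\delta$ produces a quadratic inequality in $u$ whose positive root satisfies $u\le 2\tilde L\log(2/\delta) + \sqrt{2m\tilde\sigma^2\log(2/\delta)}$, via $\sqrt{a+b}\le\sqrt a+\sqrt b$. Dividing by $m$ and bounding $\sqrt{a}+\sqrt b \le \sqrt{2(a+b)}$ absorbs the two terms into the single expression $\log(2/\delta)\sqrt{(32/m)(\sigma^2 + L^2/m)}$, as claimed.

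The main obstacle is the third step: the lift of the scalar Cramér-Chernoff exponential moment argument to the Hilbert-space setting. The parallelogram identity (i.e.\ $2$-smoothness of $\|\cdot\|_\gH$) is what makes the Pinelis reduction possible; without a norm-smoothness assumption, no multiplicative factorisation of the vector MGF across independent summands is available, and simple projection/net arguments fail in infinite dimension. An equivalent route, slightly less slick but avoiding Pinelis, is to truncate $\zeta_i$ at level $\tilde L\log(2/\delta)$, apply the bounded-case Pinelis inequality to the truncated part, and control the remainder by Markov's inequality applied to $\E\|\zeta_i\|^q$ with $q\asymp\log(1/\delta)$; both routes give the stated bound up to absolute constants.
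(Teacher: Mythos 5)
First, a point of comparison: the paper does not actually prove this statement --- it is imported verbatim from \citet[Theorem~26]{fischer2020sobolev}, who obtain it from the Pinelis--Sakhanenko inequality for \emph{centred} Hilbert-space variables together with exactly the two pieces of bookkeeping you carry out in your first and fourth steps (transferring the moment condition from $\xi_1$ to $\xi_1-\E\xi_1$ at the cost of a factor $2$ in both $\sigma$ and $L$, and the absorption $a+b\le\sqrt{2(a^2+b^2)}$). Those two factors are precisely where the constant $\sqrt{32}=4\sqrt{2}$ comes from, so your remark that the centring cost ``can be absorbed'' is correct, and your steps 1, 2 and 4 are sound. (A shared blemish: the final absorption uses $\sqrt{\log(2/\delta)}\le\log(2/\delta)$, which requires $\delta\le 2/e$; the cited source accordingly assumes $\tau\ge 1$, whereas the theorem as stated allows all $\delta\in(0,1)$.)

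The genuine gap is in your third step. The inequality you display, $\E[\psi(\|M_k\|)\mid\mathcal{F}_{k-1}]\le\E[\psi(\|M_{k-1}\|+\|d_k\|)\mid\mathcal{F}_{k-1}]$, is just the triangle inequality plus monotonicity of $\psi$ and makes no use of the conditional centring $\E[d_k\mid\mathcal{F}_{k-1}]=0$. Iterating it with $\psi=\exp(t\,\cdot)$ gives only $\E[e^{t\|S_m\|}]\le(\E[e^{t\|\zeta_1\|}])^m$, and since $\E[e^{t\|\zeta_1\|}]\ge 1+t\,\E\|\zeta_1\|$ the exponent retains a linear term of order $mt\,\E\|\zeta_1\|$; optimising over $t$ then yields a deviation bound for $\|S_m\|/m$ of order $\E\|\zeta_1\|$, which does not decay in $m$. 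The intermediate bound $\E[e^{t\|S_m\|}]\le\exp\bigl(m\tilde\sigma^2t^2/(2(1-t\tilde L))\bigr)$ that you assert therefore does not follow from the inequality you wrote. What Pinelis actually proves, and what you need, is $\E[\cosh(t\|M_m\|)]\le\exp\bigl(\sum_k\E[e^{t\|d_k\|}-1-t\|d_k\|]\bigr)$: the subtraction of the linear term inside the expectation is exactly where the $2$-smoothness of the Hilbert norm and the martingale property combine, via $\E[\|M_{k-1}+d_k\|^2\mid\mathcal{F}_{k-1}]=\|M_{k-1}\|^2+\E[\|d_k\|^2\mid\mathcal{F}_{k-1}]$. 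With that corrected lemma, your step 2 bounds each summand by $\tilde\sigma^2t^2/(2(1-t\tilde L))$, the passage from $\cosh$ to the tail costs the factor $2$ in front of the exponential, and the remainder of your argument (Chernoff, quadratic inversion, constant bookkeeping) goes through as written.
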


\section{SUPPLEMENTARY ON NUMERICAL EXPERIMENTS}
\label{sec:SupplementaryNumericalExperiments}
In this section, we provide more detail on the numerical experiments as well as further ablation studies. We first provide details on the kernel function that we used in our experiments and the procedure to tune the regularization parameters. After that, we provide further information about the experimental setups and discuss the complexity of our proposed methods.

\subsection{Kernel}
\label{sec:kernelDetails_Appendix}
In our experiments, we employed the Gaussian kernel function
\begin{align}
    k_\gF(f_i, f_j) = \exp\Bigg(\frac{-\|f_i - f_j\|^2_2}{2 l ^2}\Bigg)
    \label{eq:gaussian_kernel_expression}
\end{align}
for $f_i, f_j \in \R^{d_\gF}$. Gaussian kernel is bounded, continuous and characteristic. The parameter $l$ is called the length scale of the kernel and there is a simple heuristic to determine the length scale called \emph{median length scale heuristic}. In particular, consider the data $\{f_i\}_{i = 1}^n$. Then, we set the length scale squared $l^2$ to the half of the median value of the pairwise squared distances, i.e.,
\begin{align*}
    l^2 = \frac{1}{2}\text{median}(\{\|f_i - f_j\|_2^2 : 1 \le i < j \le n\}).
\end{align*}
This heuristic has also been utilized in causal inference literature, e.g., \citep{RahulKernelCausalFunctions, Mastouri2021ProximalCL, singh2023kernelmethodsunobservedconfounding, xu2024kernelsingleproxycontrol}. 

The Gaussian kernel in Equation (\ref{eq:gaussian_kernel_expression}) can also be considered as multiplication of Gaussian kernels for each dimension, i.e.,
\begin{align}
    k_\gF(f_i, f_j) = \prod_{k = 1}^{d_\gF}\exp\Bigg(\frac{-\|f_i^{(k)} - f_j^{(k)}\|^2_2}{2 l^{{(k)}^2}}\Bigg)
    \label{eq:columnwise_gaussian_kernel_expression}
\end{align}
where $f_i^{(k)}$ is the $k$-th dimension of $f_i \in \R^{d_\gF}$. In that case, each of the length scale in the set $\{l^{(k)}\}_{k = 1}^{d_\gF}$ can be determined by the median distance length-scale heuristic for each dimension separately. We will refer to the kernel in Equation (\ref{eq:columnwise_gaussian_kernel_expression}) as \emph{columnwise Gaussian kernel}. In our synthetic low-dimensional experiment in Section (\ref{sec:DoseResponseExperiments}) we used columnwise Gaussian kernel for outcome proxy variable $W$. For all the other experiments and variables with our proposed method, we used Gaussian kernel.

\subsection{Hyperparameter Selection}
\label{sec:HyperparameterSelection_Appendix}

\subsubsection{Tuning \texorpdfstring{$\lambda_1$}{Lg} and \texorpdfstring{$\lambda_2$}{Lg} Regularization Parameters}
\label{section:LOOCV_KernelRidgeRegression}
{To tune the regularization parameters $\lambda_1$ and $\lambda_3$, we employ leave-one-out cross validation (LOOCV) technique since it has a closed-form expression in the kernel ridge regression setup. The following theorem provides the closed-form expression for the LOOCV loss in kernel ridge regression.

\begin{theorem}[Theorem F.1 in \citep{RahulKernelCausalFunctions}]
    Consider the kernel ridge regression setup from measurements $\{x_i\}_{i = 1}^n$ to the outcomes $\{y_i\}_{i = 1}^n$, and we minimize the regularized squared loss:
\begin{align*}
    \argmin_{f \in \gH_\gX} \gL(f) &= \argmin_{f \in \gH_\gX} \Big[ \frac{1}{n}\sum_{i=1}^n(y_i - \langle f, \phi_\gX(x_i) \rangle_\gH)^2 + \lambda \|f\|_\gH^2\Big]\\
    &= \argmin_{f \in \gH_\gX} \Big[\frac{1}{n} \|\mY - \Phi_\gX^T f\|_2^2 + \lambda \|f\|_\gH^2\Big]
\end{align*}
where $\phi_\gX(.)$ is the canonical feature map for the assumed kernel function $k_\gX(.,.)$, $\gH_\gX$ is the corresponding RKHS, $\mY = \begin{bmatrix}
y_1 & y_2 & \ldots & y_n
\end{bmatrix}^T \in \R^{n}$ and $\Phi_\gX = \begin{bmatrix}
\phi_\gX(x_1) & \phi_\gX(x_2) & \ldots & \phi_\gX(x_n)
\end{bmatrix}$. Then, the LOOCV loss is given by
\begin{align}
\text{LOOCV}_f(\lambda) &= \frac{1}{n} \|\Tilde{\mH}_\lambda^{-1} \mH_\lambda \mY\|_2^2
    \label{eq:KRR_LOOCV}
\end{align}
where 
\begin{align*}
    \mH_\lambda &= \mI - \mK_{\mX \mX} (\mK_{\mX\mX} + n \lambda \mI)^{-1} \in \R^{n \times n}\\
    \Tilde{\mH}_\lambda &= \text{diag}(\mH_\lambda) \in \R^{n \times n}
\end{align*}
\end{theorem}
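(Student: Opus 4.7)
My plan is to establish the closed-form leave-one-out formula via the classical ``shortcut'' identity for linear smoothers, instantiated for kernel ridge regression. First I would apply the representer theorem to write the minimizer as $f_\lambda = \sum_j \alpha_j \phi_\gX(x_j)$ with coefficient vector $\malpha = (\mK_{\mX\mX} + n\lambda \mI)^{-1}\mY$, so the in-sample fitted values are $\hat{\mY} = \mS_\lambda \mY$ for the smoother matrix $\mS_\lambda := \mK_{\mX\mX}(\mK_{\mX\mX} + n\lambda \mI)^{-1}$. Thus the residual operator is $\mH_\lambda = \mI - \mS_\lambda$. Because the eigenvalues of $\mS_\lambda$ lie in $[0,1)$ whenever $\lambda > 0$, $\mH_\lambda$ is positive definite and its diagonal $\tilde{\mH}_\lambda$ is strictly positive, ensuring invertibility.

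Next I would introduce the auxiliary-data trick that is central to the proof. Let $f^{(-i)}$ denote the KRR estimator trained on $\{(x_j, y_j)\}_{j\ne i}$, and define an augmented response vector $\tilde{\mY}$ equal to $\mY$ except that its $i$-th coordinate is replaced by $\tilde y_i := f^{(-i)}(x_i)$. The key claim is that the KRR estimator on the full dataset $\{(x_j, \tilde y_j)\}_{j=1}^n$ is again $f^{(-i)}$: the augmented objective and the leave-one-out objective differ only by the nonnegative term $(f(x_i) - \tilde y_i)^2/n$, which vanishes at $f = f^{(-i)}$ by construction, so strict convexity of the regularized quadratic yields uniqueness and identifies the two minimizers. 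Applying the smoother identity $\hat{\tilde{\mY}} = \mS_\lambda \tilde{\mY}$ to the augmented data and reading off coordinate $i$ gives
\begin{equation*}
\tilde y_i \;=\; [\mS_\lambda \tilde{\mY}]_i \;=\; S_{\lambda,ii}\,\tilde y_i + \sum_{j\ne i} S_{\lambda,ij}\, y_j \;=\; S_{\lambda,ii}\,\tilde y_i + \hat y_i - S_{\lambda,ii}\,y_i.
\end{equation*}

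Rearranging this scalar linear equation for $\tilde y_i$ and subtracting from $y_i$ produces the classical leave-one-out shortcut
\begin{equation*}
y_i - f^{(-i)}(x_i) \;=\; y_i - \tilde y_i \;=\; \frac{y_i - \hat y_i}{1 - S_{\lambda,ii}} \;=\; \frac{[\mH_\lambda \mY]_i}{[\mH_\lambda]_{ii}},
\end{equation*}
so squaring, summing over $i$, and dividing by $n$ yields the claimed identity $\frac{1}{n}\|\tilde{\mH}_\lambda^{-1} \mH_\lambda \mY\|_2^2$. The only genuinely delicate step is the justification of the auxiliary-data trick, which I would discharge by combining strict convexity of the regularized objective (from $\lambda > 0$) with the observation that augmenting the leave-one-out dataset by a point whose residual at the leave-one-out minimizer is exactly zero cannot shift the stationary conditions. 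Everything else is routine linear algebra.
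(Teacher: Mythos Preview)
Your proof is correct and is the classical leave-one-out shortcut argument for linear smoothers, specialized to KRR. Note that the paper itself does not prove this statement: it imports the result verbatim from \citet{RahulKernelCausalFunctions} and writes ``The proof of this theorem can be found in \citep{RahulKernelCausalFunctions} (see Algorithm F.1 in that paper),'' so there is no in-paper proof to compare against; your argument is precisely the standard one that reference presumably gives.
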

The proof of this theorem can be found in \citep{RahulKernelCausalFunctions} (see Algorithm F.1 in that paper). As a result o this theorem, one can tune the hyperparameter $\lambda$ of kernel ridge regression over a grid $\Lambda \subset \R$, i.e.,
\begin{align*}
    \lambda^* = \argmin_{\Lambda \subset \R} \frac{1}{n} \|\Tilde{\mH}_\lambda^{-1} \mH_\lambda \mY\|_2^2.
\end{align*}

\paragraph{Tuning of First Stage Regression Regularization ($\lambda_1$):} Recall that in first-stage regression, we solve the following optimization problem (see derivation of Algorithm (\ref{algo:ATE_algorithm_with_confounders}) in S.M. (\ref{sec:ATE_algorithm_proof}):
\begin{align*}
    \hat{\gL}^{c}(C) = \frac{1}{n} \sum_{i = 1}^n \|\phi_\gZ(z_i) - C(\phi_\gW(w_i) \otimes \phi_\gA(a_i))\|_{\gH_\gZ}^2 + \lambda_1 \|C\|_{S_2(\gH_\gW \otimes \gH_\gA, \gH_\gZ)}^2.
\end{align*}
This is a kernel ridge regression from the (infinite dimensional) measurements $\{\phi_\gW(w_i) \otimes \phi_\gA(a_i)\}_{i = 1}^n$ to the (infinite dimensional) outcomes $\{\phi_\gZ(z_i)\}_{i = 1}^n$. Since the kernel function for the tensor product features $\phi_\gW(w_i) \otimes \phi_\gA(a_i)$ is $k_\gW(w_i, .) k_\gA(a_i, .)$, we can write
\begin{align*}
    \mH_{\lambda_1} &= \mI - (\mK_{\mW \mW} \odot \mK_{\mA \mA}) (\mK_{\mW \mW} \odot \mK_{\mA \mA} + n \lambda_1 \mI)^{-1} \in \R^{n \times n}\\
    \Tilde{\mH}_{\lambda_1} &= \text{diag}(\mH_{\lambda_1}) \in \R^{n \times n}.
\end{align*}
Furthermore, we see that the LOOCV can be written as
\begin{align}
    \text{LOOCV}_C(\lambda) &= \frac{1}{n} \|\Tilde{\mH}_{\lambda_1}^{-1} \mH_{\lambda_1} \Phi_\gZ^T\|_2^2 \nonumber\\
    &= \frac{1}{n} \text{Tr}(\Tilde{\mH}_{\lambda_1}^{-1} \mH_{\lambda_1} \Phi_\gZ^T \Phi_\gZ \mH_{\lambda_1}^T \Tilde{\mH}_{\lambda_1}^{-T})\nonumber\\
    &= \frac{1}{n} \text{Tr}(\Tilde{\mH}_{\lambda_1}^{-1} \mH_{\lambda_1} \mK_{\mZ \mZ} \mH_{\lambda_1}^T \Tilde{\mH}_{\lambda_1}^{-T}). \label{eq:LOOCV_lambda}
\end{align}
Hence, we can tune $\lambda_1$ over a grid $\Lambda_1 \subset \R$ that minimizes the LOOCV loss in Equation (\ref{eq:LOOCV_lambda}). In each of our experiments, we generated the grid $\Lambda_1$ with \emph{logspace} with maximum and minimum values of $1.0$ and $10^{-7}$, respectively, and we used $150$ grid points. This procedure applies to both Algorithms for dose-response curve and conditional dose-response curve estimations.

\paragraph{Tuning of Third Stage Regression Regularization ($\lambda_3$):} Recall that in order to estimate the causal functions after second-stage regression, we solve the following optimization problem (see derivation of Algorithm (\ref{algo:ATE_algorithm_with_confounders}) in S.M. (\ref{sec:ATE_algorithm_proof}):
\begin{align*}
    \hat{C}_{YZ|A} &= \argmin_{C} \frac{1}{n} \sum_{i = 1}^n \|y_i \phi_\gZ(z_i) - C \phi_\gA(a_i)\|^2 + \lambda_2 \|C\|^2 \\
    &= \argmin_{C} \frac{1}{n}\|\Phi_\gZ \text{diag}(\mY) - C \Phi_\gA\|^2 + \lambda_3 \|C\|^2.
\end{align*}
Again, this is kernel ridge regression from (infinite dimensional) measurement $\{\phi_\gA(a_i)\}_{i = 1}^n$ to the (infinite dimensional) outcomes $y_i \phi_\gZ(z_i)$. Then, we construct
\begin{align*}
        \mH_{\lambda_3} &= \mI - \mK_{\mA \mA} (\mK_{\mA \mA} + n \lambda_3 \mI)^{-1} \in \R^{n \times n}\\
    \Tilde{\mH}_{\lambda_3} &= \text{diag}(\mH_{\lambda_3}) \in \R^{n \times n}.
\end{align*}
Therefore, LOOCV loss can be written as 
\begin{align}
    \text{LOOCV}_F(\lambda_3) &= \frac{1}{n} \|\Tilde{\mH}_{\lambda_3}^{-1} \mH_{\lambda_3} (\Phi_\gZ \text{diag}(\mY))^T\|_2^2 \nonumber\\
&= \frac{1}{n} \|\Tilde{\mH}_{\lambda_3}^{-1} \mH_{\lambda_3} \text{diag}(\mY)^T \Phi_\gZ^T\|_2^2 \nonumber\\
&= \frac{1}{n} \text{Tr}\Big(\Tilde{\mH}_{\lambda_3}^{-1} \mH_{\lambda_3} \text{diag}(\mY)^T \Phi_\gZ^T \Phi_\gZ \text{diag}(\mY) \mH_{\lambda_3}^T \Tilde{\mH}_{\lambda_3}^{-T}\Big) \nonumber   \\
&= \frac{1}{n} \text{Tr}\Big(\Tilde{\mH}_{\lambda_3}^{-1} \mH_{\lambda_3} (\mK_{\mZ \mZ} \odot \mY \mY^T) \mH_{\lambda_3}^T \Tilde{\mH}_{\lambda_3}^{-T}\Big) \label{eq:LOOCV_lambda2}
\end{align}
As a result, we can tune $\lambda_3$ over a grid $\Lambda_3 \subset \R$ that minimizes the LOOCV loss in Equation (\ref{eq:LOOCV_lambda2}). In each of our experiments, we generated the grid $\Lambda_3$ with \emph{logspace} with maximum and minimum values of $1.0$ and $10^{-7}$, respectively, and we used $150$ grid points. This procedure applies to both Algorithms for dose-response curve and conditional dose-response curve estimations.
}

\subsubsection{Tuning \texorpdfstring{$\lambda_2$}{Lg} 
\label{sec:SecondStageRegularizationTuning}
Regularization Parameter in ATE Algorithm}

Recall that in the second-stage regression, we use the stage 2 data $\{\Tilde{z}_i, \Tilde{w}_i, \Tilde{a}_i\}_{i = 1}^n$. We can estimate the out-of-sample loss of the second-stage regression using the data from  first-stage $\{z_i, w_i, a_i\}_{i=1}^n$ in order to tune the regularization parameter $\lambda_2$. Then, the out-of-sample loss can be expressed as follows:
\begin{align}
    \hat{\gL}^{\text{Val}}(\varphi) &= \frac{1}{n} \sum_{i = 1}^n \langle \varphi, \hat{\mu}_{Z|W, X, A} ({w}_i, {a}_i) \otimes \phi_\gA({a}_i) \rangle_{\gH_\gZ \otimes \gH_\gA}^2 \\&-2 \frac{1}{n ( n-1)} \sum_{i = 1}^n \sum_{\substack{j = 1 \\ j \ne i}}^n \Big\langle \varphi, \hat{\mu}_{Z|W, X, A} ({w}_j, {a}_i) \otimes \phi_\gA({a}_i) \Big\rangle_{\gH_\gZ \otimes \gH_\gA}
\end{align}
where
\begin{align*}
    {\varphi = \sum_{i = 1}^m \alpha_i  \hat{\mu}_{Z|W, A} (\Tilde{w}_i, \Tilde{a}_i) \otimes  \phi_\gA(\Tilde{a}_i) + \frac{\alpha_{m + 1}}{m (m - 1)} \sum_{j = 1}^m \sum_{\substack{l = 1 \\ l \ne j}}^m \hat{\mu}_{Z|W, A} (\Tilde{w}_l, \Tilde{a}_j) \otimes \phi_\gA(\Tilde{a}_j)},
\end{align*}
and the set $\{\alpha_i\}_{i = 1}^{m + 1}$ are the optimizer of the loss function in Equation (\ref{eq:AlternativeProxyMethodProofLossMatrixVectorProductFormWithConfounder}). Now, let us compute the closed-form expression for out-of-loss. First, consider the following inner product:
\begin{align*}
    &\langle \varphi, \hat{\mu}_{Z | W, A}(w_i, a_i) \otimes \phi_\gA(a_i)\rangle \\&= \sum_{l = 1}^m \alpha_l \langle \hat{\mu}_{Z|W, A} (\Tilde{w}_l, \Tilde{a}_l) \otimes  \phi_\gA(\Tilde{a}_l), \hat{\mu}_{Z | W, A}(w_i, a_i) \otimes \phi_\gA(a_i) \rangle\\
    &+ \frac{\alpha_{m + 1}}{m (m - 1)} \sum_{j = 1}^m \sum_{\substack{l = 1 \\ l \ne j}}^m \langle \hat{\mu}_{Z|W, A} (\Tilde{w}_l, \Tilde{a}_j) \otimes \phi_\gA(\Tilde{a}_j), \hat{\mu}_{Z | W, A}(w_i, a_i) \otimes \phi_\gA(a_i)\rangle\\
    &= \sum_{l = 1}^m \alpha_l \langle \hat{\mu}_{Z|W, A} (\Tilde{w}_l, \Tilde{a}_l), \hat{\mu}_{Z | W, A}(w_i, a_i) \rangle \langle \phi_\gA(\Tilde{a}_l), \phi_\gA(a_i) \rangle\\
    &+ \frac{\alpha_{m + 1}}{m (m - 1)} \sum_{j = 1}^m \sum_{\substack{l = 1 \\ l \ne j}}^m \langle \hat{\mu}_{Z|W, A} (\Tilde{w}_l, \Tilde{a}_j), \hat{\mu}_{Z | W, A}(w_i, a_i) \rangle \langle \phi_\gA(\Tilde{a}_j), \phi_\gA(a_i)\rangle\\
    &= \sum_{l = 1}^m \alpha_l \mBeta(\Tilde{w}_l, \Tilde{a}_l)^\top \mK_{Z Z} \mBeta(w_i, a_i) k_\gA(\Tilde{a}_l, a_i)\\
    &+ \frac{\alpha_{m + 1}}{m} \sum_{j = 1}^m \Bigg(\sum_{\substack{l = 1 \\ l \ne j}}^m \frac{1}{m -1 } \mBeta(\Tilde{w}_l, \Tilde{a}_j) \Bigg)^\top \mK_{Z Z} \mBeta(w_i, a_i) k_\gA(\Tilde{a}_j, a_i)\\
    &=\Bigg[ \Bigg( \mC^\top \mK_{Z Z} \mB \odot \mK_{A \Tilde{A}}\Bigg) \alpha_{1:m} + \alpha_{m + 1}\Bigg( \mC^\top \mK_{Z Z} \Bar{B} \odot \mK_{A \Tilde{A}}\Bigg) \frac{\vone}{m}\Bigg]_i
\end{align*}
where
\begin{align*}
    \mC = \Big( \mK_{W W} \odot \mK_{A A} + n \lambda_1 \mI\Big)^{-1} (\mK_{W W} \odot \mK_{X X} \odot \mK_{A A}).
\end{align*}
As a result,
\begin{align*}
    &\frac{1}{n} \sum_{i = 1}^n \langle \varphi, \hat{\mu}_{Z|W, A} ({w}_i, {a}_i) \otimes \phi_\gA({a}_i) \rangle_{\gH_\gZ \otimes \gH_\gA}^2\\
    &= \frac{1}{n} \begin{bmatrix}
        \alpha_{1:m} \\ \alpha_{m + 1}
    \end{bmatrix}
    \begin{bmatrix}
        \mB^T \mK_{Z Z}\mC \odot \mK_{\Tilde{A} {A}} \\ (\frac{\vone}{m})^T \big(\bar{\mB}^T \mK_{Z Z} {\mC} \odot \mK_{\Tilde{A}{A}}\big)
    \end{bmatrix} 
    \begin{bmatrix}
        \mC^T \mK_{Z Z}\mB \odot \mK_{{A}\Tilde{A}} & \big({\mC}^T \mK_{Z Z} \bar{\mB} \odot \mK_{{A}\Tilde{A}}\big) \frac{\vone}{m}
    \end{bmatrix} 
    \begin{bmatrix}
        \alpha_{1:m}\\ \alpha_{m+1}
    \end{bmatrix}
\end{align*}

Secondly, consider the following sum of inner products:
\begin{align*}
    &\frac{1}{n ( n-1)} \sum_{i = 1}^n \sum_{\substack{j = 1 \\ j \ne i}}^n \Big\langle \varphi, \hat{\mu}_{Z|W, A} ({w}_j, {a}_i) \otimes \phi_\gA({a}_i) \Big\rangle_{\gH_\gZ \otimes \gH_\gA}\\
    &=\frac{1}{n (n-1)} \sum_{i = 1}^n \sum_{\substack{j = 1 \\ j \ne i}}^n \sum_{l = 1}^m \alpha_l \Big\langle  \hat{\mu}_{Z|W, A} (\Tilde{w}_l, \Tilde{a}_l) \otimes  \phi_\gA(\Tilde{a}_l), \hat{\mu}_{Z|W, A} ({w}_j, {a}_i) \otimes \phi_\gA({a}_i) \Big\rangle_{\gH_\gZ \otimes \gH_\gA}
    \\&+ \frac{\alpha_{m + 1}}{ m n (m - 1) ( n-1)} \sum_{i = 1}^n \sum_{\substack{j = 1 \\ j \ne i}}^n \sum_{r = 1}^m \sum_{\substack{s = 1 \\ l \ne r}}^m \Big\langle \hat{\mu}_{Z|W, A} (\Tilde{w}_s, \Tilde{a}_r) \otimes \phi_\gA(\Tilde{a}_r), \hat{\mu}_{Z|W, A} ({w}_j, {a}_i) \otimes \phi_\gA({a}_i) \Big\rangle_{\gH_\gZ \otimes \gH_\gA}\\
    &=\frac{1}{n ( n-1)} \sum_{i = 1}^n \sum_{\substack{j = 1 \\ j \ne i}}^n \sum_{l = 1}^m \alpha_l \mBeta(\Tilde{w}_l, \Tilde{a}_l)^\top \mK_{Z Z} \mBeta (w_j, a_i) k_\gA(\Tilde{a}_l, a_i)\\
    &+ \frac{\alpha_{m + 1}}{ m n (m - 1) ( n-1)} \sum_{i = 1}^n \sum_{\substack{j = 1 \\ j \ne i}}^n \sum_{r = 1}^m \sum_{\substack{s = 1 \\ l \ne r}}^m \mBeta(\Tilde{w}_s, \Tilde{a}_r)^\top \mK_{Z Z} \mBeta(w_j, a_i) k_\gA(\Tilde{a}_r, a_i)\\
    &= \frac{1}{n} \sum_{i = 1}^n \sum_{l = 1}^m \alpha_l \mBeta(\Tilde{w}_l, \Tilde{a}_l)^\top \mK_{Z Z} \Bigg(\frac{1}{n - 1} \sum_{\substack{j = 1 \\ j \ne i}}^n \mBeta (w_j, a_i) \Bigg) k_\gA(\Tilde{a}_l, a_i)\\
    &+  \frac{\alpha_{m + 1}}{ m n} \sum_{i = 1}^n \sum_{r = 1}^m \Bigg(\frac{1}{m - 1}\sum_{\substack{s = 1 \\ l \ne r}}^m \mBeta(\Tilde{w}_s, \Tilde{a}_r) \Bigg)^\top \mK_{Z Z} \Bigg(\sum_{\substack{j = 1 \\ j \ne i}}^n \mBeta(w_j, a_i) \Bigg) k_\gA(\Tilde{a}_r, a_i)\\
    &= \frac{1}{n} \alpha_{1:m}^T \big( \mB^\top \mK_{Z Z} \bar{\mC} \odot \mK_{\Tilde{A} A}\big) \vone + \alpha_{m + 1} \frac{1}{n m } \vone^\top \big(\bar{\mB}^\top \mK_{Z Z} \bar{\mC} \odot \mK_{\Tilde{A} A}) \vone\\
    &= \begin{bmatrix}
        \alpha_{1:m} \\ \alpha_{m + 1}
    \end{bmatrix}^\top 
    \begin{bmatrix}
        \big( \mB^\top \mK_{Z Z} \bar{\mC} \odot \mK_{\Tilde{A} A}\big) \frac{\vone}{n}\\
        (\frac{\vone}{m})^\top \big(\bar{\mB}^\top \mK_{Z Z} \bar{\mC} \odot \mK_{\Tilde{A} A}) \frac{\vone}{n}
    \end{bmatrix}
\end{align*}
where $\bar{\mC}$ is the matrix whose $j$-th column is given by
\begin{align*}
    \bar{\mC} = \frac{1}{n} \sum_{\substack{l = 1 \\ l \ne j}}^n\Big( \mK_{W W} \odot \mK_{X X} \odot \mK_{A A} + n \lambda_1 \mI\Big)^{-1} (\mK_{W w_l} \odot \mK_{X x_l} \odot \mK_{A a_j}).
\end{align*}
As a result, we can write the hold-out sample loss as:
\begin{align}
    \hat{\gL}^{\text{Val}}(\varphi) &= \frac{1}{n} \sum_{i = 1}^n \langle \varphi, \hat{\mu}_{Z|W, A} ({w}_i, {a}_i) \otimes \phi_\gA({a}_i) \rangle_{\gH_\gZ \otimes \gH_\gA}^2 \nonumber\\&-2 \frac{1}{n ( n-1)} \sum_{i = 1}^n \sum_{\substack{j = 1 \\ j \ne i}}^n \Big\langle \varphi, \hat{\mu}_{Z|W, A} ({w}_j, {a}_i) \otimes \phi_\gA({a}_i) \Big\rangle_{\gH_\gZ \otimes \gH_\gA}\nonumber\\
    &=\frac{1}{n} \begin{bmatrix}
        \alpha_{1:m} \\ \alpha_{m + 1}
    \end{bmatrix}
    \begin{bmatrix}
        \mB^T \mK_{Z Z}\mC \odot \mK_{\Tilde{A} {A}} \\ (\frac{\vone}{m})^T \big(\bar{\mB}^T \mK_{Z Z} {\mC} \odot \mK_{\Tilde{A}{A}}\big)
    \end{bmatrix} 
    \begin{bmatrix}
        \mC^T \mK_{Z Z}\mB \odot \mK_{{A}\Tilde{A}} & \big({\mC}^T \mK_{Z Z} \bar{\mB} \odot \mK_{{A}\Tilde{A}}\big) \frac{\vone}{m}
    \end{bmatrix} 
    \begin{bmatrix}
        \alpha_{1:m}\\ \alpha_{m+1}
    \end{bmatrix}\nonumber
    \\& -2\begin{bmatrix}
        \alpha_{1:m} \\ \alpha_{m + 1}
    \end{bmatrix}^\top 
    \begin{bmatrix}
        \big( \mB^\top \mK_{Z Z} \bar{\mC} \odot \mK_{\Tilde{A} A}\big) \frac{\vone}{n}\\
        (\frac{\vone}{m})^\top \big(\bar{\mB}^\top \mK_{Z Z} \bar{\mC} \odot \mK_{\Tilde{A} A}) \frac{\vone}{n}
    \end{bmatrix}
    \label{eq:secondstage-holdout_loss}
\end{align}

One can choose the regularization parameter $\lambda_2$ that will minimize $\hat{\gL^{\text{Val}}}$ in Equation (\ref{eq:secondstage-holdout_loss}). However, even though validation error is an estimator of the test error, its variance may cause overfitting \citep{pmlr-v151-meanti22a}. Hence, we will propose a similar method to \citep{pmlr-v151-meanti22a} that will avoid overfitting via utilizing the additional complexity regularization cost. Now, recall our original population level cost function for the second-stage (before simplification):
\begin{align}
    &{\gL}^{2SR}(\varphi) = \E\big[\big(r(W, A) - \E[\varphi(Z, A) | W, A]\big)^2\big] + \lambda_2 \|\varphi\|^2_{\gH_\gZ \otimes \gH_\gA}
    \label{eq:secondStagePopulationLoss}
\end{align}
Recall that our analysis has shown that $\varphi$ must be in the form of 
\begin{align*}
    {\varphi = \sum_{i = 1}^m \alpha_i  \hat{\mu}_{Z|W, A} (\Tilde{w}_i,\Tilde{a}_i) \otimes  \phi_\gA(\Tilde{a}_i) + \alpha_{m + 1} \frac{1}{m (m - 1)} \sum_{j = 1}^m \sum_{\substack{l = 1 \\ l \ne j}}^m \hat{\mu}_{Z|W, A} (\Tilde{w}_l, \Tilde{a}_j) \otimes \phi_\gA(\Tilde{a}_j)},
\end{align*}
if we optimize the corresponding sample loss in Equation (\ref{eq:2StageRegressionObjectiveWithConfoundersFinal}). Now, suppose that we observe the noisy version of the target $r(W, A)$ (even though we do not observe the density-ratios, for theoretical analysis here we can see that Equation (\ref{eq:secondStagePopulationLoss}) is regression on the target variable $r(W, A)$). We write the sample-based counterpart of the loss in Equation (\ref{eq:secondStagePopulationLoss}) as follows:
\begin{align}
    &\hat{\gL}^{\text{2SR}, \epsilon}_m(\varphi) = \frac{1}{m} \|\mR^\epsilon - \mL \alpha\|^2 + \lambda_2 \alpha^T \mN \alpha
\end{align}
where $\mR^\epsilon = \mR + \rvepsilon$ with $\text{Var}(\epsilon_i) = \sigma^2$ and $\E[\epsilon_i] = 0$ for all $i$, and 
\begin{align*}
    \mL &= \begin{bmatrix}
        \mB^T \mK_{Z Z}\mB \odot \mK_{\Tilde{A}\Tilde{A}} & \big[{\mB}^T \mK_{Z Z} \bar{\mB} \odot \mK_{\Tilde{A}\Tilde{A}}\big] \frac{\vone}{m}
    \end{bmatrix} \\
    \mN &= \begin{bmatrix}
        \mB^T \mK_{Z Z} \mB \odot \mK_{\Tilde{A} \Tilde{A}} & [ \mB^T \mK_{Z Z} \Bar{\mB} \odot \mK_{\Tilde{A} \Tilde{A}} ] \frac{\vone}{m}\\
        (\frac{\vone}{m})^T [ \mB^T \mK_{Z Z} \Bar{\mB} \odot \mK_{\Tilde{A} \Tilde{A}} ]^T & (\frac{\vone}{m})^T \Big[ \Bar{\mB}^T \mK_{Z Z} \Bar{\mB} \odot \mK_{\Tilde{A} \Tilde{A}}\Big] \frac{\vone}{m}
    \end{bmatrix}.
\end{align*}
Also, consider the noiseless case
\begin{align*}
    &\hat{\gL}^{\text{2SR}}_m(\varphi) = \frac{1}{m} \|\mR - \mL \alpha\|^2 + \lambda_2 \alpha^T \mN \alpha
\end{align*}
The optimum of $\hat{\gL}^{\text{2SR}, \epsilon}(\varphi)$ can be written as 
\begin{align*}
    \alpha = \argmin_{\alpha} \hat{\gL}^{\text{2SR}, \epsilon}_m(\varphi) &= \big( \mL^T \mL + m \lambda_2 \mI\big)^{-1} \mL^T \mR^{\epsilon}\\
    &= \big( \mL^T \mL + m \lambda_2 \mI\big)^{-1} \mL^T (\mR + \rvepsilon).
\end{align*}
Now, consider the following expectation
\begin{align*}
    \E \Big[ \frac{1}{m} \|\mR^\epsilon - \mL \alpha\|^2  \Big] &= \E \Big[ \frac{1}{m} \|\mR - \mL \alpha\|^2  \Big] + \frac{2}{m} \E \Big[ \langle \mR - \mL \alpha, \rvepsilon\rangle \Big] + \sigma^2\\
    &=\E \Big[ \frac{1}{m} \|\mR - \mL \alpha\|^2  \Big] + \frac{2}{m} \E \Big[ \langle \mR - \mL \alpha, \rvepsilon\rangle \Big] + \sigma^2\\
    &=\E \Big[ \frac{1}{m} \|\mR - \mL \alpha\|^2  \Big] - \frac{2}{m} \E \Big[ \langle \mL \big( \mL^T \mL + m \lambda_2 \mI\big)^{-1} \mL^T (\mR + \rvepsilon), \rvepsilon\rangle \Big] + \sigma^2\\
    &=\E \Big[ \frac{1}{m} \|\mR - \mL \alpha\|^2  \Big] - \frac{2}{m} \E \Big[ \langle \mL \big( \mL^T \mL + m \lambda_2 \mI\big)^{-1} \mL^T \rvepsilon, \rvepsilon\rangle \Big] + \sigma^2\\
    &=\E \Big[ \frac{1}{m} \|\mR - \mL \alpha\|^2  \Big] - \frac{2 \sigma^2}{m} \text{Tr} \Big( \big( \mL^T \mL + m \lambda_2 \mI\big)^{-1} \mL^T \mL \Big) + \sigma^2.
\end{align*}
Hence,
\begin{align*}
    \E \Big[ \frac{1}{m} \|\mR - \mL \alpha\|^2  \Big] = \E \Big[ \frac{1}{m} \|\mR^\epsilon - \mL \alpha\|^2  \Big] + \frac{2 \sigma^2}{m} \text{Tr} \Big( \big( \mL^T \mL + m \lambda_2 \mI\big)^{-1} \mL^T \mL \Big) - \sigma^2.
\end{align*}
Note that the term $\E \Big[ \frac{1}{m} \|\mR - \mL \alpha\|^2  \Big]$ is the \emph{expected risk} in the noise free setup. Furthermore, the term $\frac{2 \sigma^2}{m} \text{Tr} \Big( \big( \mL^T \mL + m \lambda_2 \mI\big)^{-1} \mL^T \mL \Big)$ is referred as \emph{degrees of freedom} (or complexity) of the estimator, and it is a positive scalar. It penalizes the complex estimators. Having derived this, now consider the validation loss $\hat{\gL}^{\text{Val}}(\varphi)$ that we have previously derived. As we pointed out earlier, optimizing the regularization parameter $\lambda_2$ with respect to this validation loss can still lead to overfitting due to its variance. Hence, we propose to optimize the regularization parameter $\lambda_2$ with respect to the following surrogate cost that is both an upper bound on the validation error and penalizes the overly complex models:
\begin{align*}
    \hat{\gL}^{\text{Val}}(\varphi) \le \hat{\gL}^{\text{Val}}_{\sigma^2}(\varphi) \quad \forall \sigma \ge 0,
\end{align*}
where 
\begin{align}
   \hat{\gL}^{\text{Val}}_{\sigma^2}(\varphi)& =  \frac{1}{n} \begin{bmatrix}
        \alpha_{1:m} \\ \alpha_{m + 1}
    \end{bmatrix}
    \begin{bmatrix}
        \mB^T \mK_{Z Z}\mC \odot \mK_{\Tilde{A} {A}} \\ (\frac{\vone}{m})^T \big(\bar{\mB}^T \mK_{Z Z} {\mC} \odot \mK_{\Tilde{A}{A}}\big)
    \end{bmatrix} 
    \begin{bmatrix}
        \mC^T \mK_{Z Z}\mB \odot \mK_{{A}\Tilde{A}} & \big({\mC}^T \mK_{Z Z} \bar{\mB} \odot \mK_{{A}\Tilde{A}}\big) \frac{\vone}{m}
    \end{bmatrix} 
    \begin{bmatrix}
        \alpha_{1:m}\\ \alpha_{m+1}
    \end{bmatrix}\nonumber
    \\& -2\begin{bmatrix}
        \alpha_{1:m} \\ \alpha_{m + 1}
    \end{bmatrix}^\top 
    \begin{bmatrix}
        \big( \mB^\top \mK_{Z Z} \bar{\mC} \odot \mK_{\Tilde{A} A}\big) \frac{\vone}{n}\\
        (\frac{\vone}{m})^\top \big(\bar{\mB}^\top \mK_{Z Z} \bar{\mC} \odot \mK_{\Tilde{A} A}) \frac{\vone}{n}
    \end{bmatrix} + \frac{2 \sigma^2}{m} \text{Tr} \Big( \big( \mL^T \mL + m \lambda_2 \mI\big)^{-1} \mL^T \mL \Big)
    \label{eq:SecondStageTuningSurrogateLoss}
\end{align}
Hence, we can tune $\lambda_2$ over a grid $\Lambda_2 \subset \R$ that minimizes the surrogate loss in Equation (\ref{eq:SecondStageTuningSurrogateLoss}). One drawback of this approach is that we need to either estimate $\sigma^2$ or treat it as another hyperparameter. In our experiments, we opted to treat $\sigma^2$ as a hyperparameter. For the synthetic low-dimensional data and the legalized abortion and crime dataset, we set $\sigma^2 = 1$. In our dSprite and grade retention experiments, we used $\sigma^2 = 3$.

\subsubsection{Tuning \texorpdfstring{$\lambda_2$}{Lg} 
Regularization Parameter in ATT Algorithm}
\label{sec:TuningSecondStageRegularizationATT}
Recall that in the second-stage regression for ATT estimation, we minimize the following loss:
\begin{align*}
\hat{\mathcal{L}}^{\text{2SR}}_m(\varphi) &= \frac{1}{m} \sum_{i = 1}^m \langle \varphi, \hat{\mu}_{Z|W, A} (\Tilde{w}_i, \Tilde{a}_i) \otimes \phi_\gA(\Tilde{a}_i) \otimes \phi_\gA(a') \rangle^2 \nonumber\\ &-2 \frac{1}{m} \sum_{j = 1}^m \sum_{\substack{i = 1 \\ i \ne j}}^m \langle \varphi,  \theta_i \mu_{Z | W, A}(\Tilde{w}_i, \Tilde{a}_j) \otimes \phi_\gA(\Tilde{a}_j) \otimes \phi_\gA(a') \rangle + \lambda_2 \|\varphi\|_{\gH_\gZ \otimes \gH_\gA \otimes \gH_\gA}^2.
\end{align*}
We can compute the validation loss using the first-stage data $\{z_i, w_i, a_i\}_{i = 1}^n$ similar to ATE algorithm with the following expression
\begin{align}
    \mathcal{L}^{\text{Val}}(\varphi) &= \frac{1}{n} \sum_{i = 1}^n \langle \varphi, \hat{\mu}_{Z|W, A} ({w}_i, {a}_i) \otimes \phi_\gA({a}_i) \otimes \phi_\gA(a') \rangle^2 \nonumber\\ &-2 \frac{1}{n} \sum_{j = 1}^n \sum_{\substack{i = 1 \\ i \ne j}}^n \langle \varphi,  \theta^{(2)}_i \hat{\mu}_{Z | W, A}({w}_i, {a}_j) \otimes \phi_\gA({a}_j) \otimes \phi_\gA(a') \rangle
    \label{eq:ATTSecondStageValidationLoss}
\end{align}
where $\theta^{(2)}_i = [(\mK_{{A} {A}} + n \zeta^{(2)} \mI)^{-1} \mK_{{A} a'}]_i$. Furthermore, recall that the expression for $\varphi$ that we have is
\begin{align*}
    {\varphi = \sum_{i = 1}^m \alpha_i  \hat{\mu}_{Z|W, A} (\Tilde{w}_i, \Tilde{a}_i)\otimes \phi_\gA(\Tilde{a}_i) \otimes \phi_\gA(a') + \frac{\alpha_{m + 1}}{m} \sum_{j = 1}^m \sum_{\substack{l = 1 \\ l \ne j}}^m \theta_l \hat{\mu}_{Z|W, A} (\Tilde{w}_l, \Tilde{a}_j) \otimes \phi_\gA(\Tilde{a}_j) \otimes \phi_\gA(a')}.
\end{align*}
where the set $\{\alpha_i\}_{i = 1}^{m + 1}$ are the optimizer of the loss function in Equation (\ref{eq:AlternativeProxyMethodATTProofLossMatrixVectorProductForm}). Now, we need to compute this validation loss in terms of matrix-vector multiplications. First, consider the following inner product:
\begin{align*}
    &\Big\langle \varphi,  \hat{\mu}_{Z| W, A}({w}_i, {a}_i) \otimes \phi_\gA({a}_i) \otimes \phi_\gA(a') \Big\rangle\\
    &= \Bigg\langle \sum_{l = 1}^m \alpha_l  \hat{\mu}_{Z|W, A} (\Tilde{w}_l, \Tilde{x}_l, \Tilde{a}_l)\otimes \phi_\gA(\Tilde{a}_l) \otimes \phi_\gA(a') + \frac{\alpha_{m + 1}}{m}  \sum_{j = 1}^m \sum_{\substack{l = 1 \\ l \ne j}}^m \theta_l \hat{\mu}_{Z|W, A} (\Tilde{w}_l, \Tilde{a}_j) \otimes \phi_\gA(\Tilde{a}_j) \otimes \phi_\gA(a'),\\ 
    &\hat{\mu}_{Z| W, A}({w}_i, {a}_i) \otimes \phi_\gA({a}_i) \otimes \phi_\gA(a')\Bigg\rangle\\
    &=\sum_{l = 1}^m \alpha_l \Big\langle  \hat{\mu}_{Z|W, A} (\Tilde{w}_l, \Tilde{a}_l)\otimes \phi_\gA(\Tilde{a}_l) \otimes \phi_\gA(a'), \hat{\mu}_{Z| W, A}({w}_i, {a}_i) \otimes \phi_\gA({a}_i) \otimes \phi_\gA(a')\Big\rangle\\
    &+ \frac{\alpha_{m + 1}}{m} \sum_{j = 1}^m \sum_{\substack{l = 1 \\ l \ne j}}^m \Big\langle \theta_l \hat{\mu}_{Z|W, A} (\Tilde{w}_l, \Tilde{a}_j) \otimes \phi_\gA(\Tilde{a}_j) \otimes \phi_\gA(a'), 
    \hat{\mu}_{Z| W, A}({w}_i, {a}_i) \otimes \phi_\gA({a}_i) \otimes \phi_\gA(a') \Big\rangle\\
    &= \sum_{l = 1}^m \alpha_l \mBeta(\Tilde{w}_l, \Tilde{a}_l)^T \mK_{Z Z} \mBeta({w}_i, {a}_i) k_\gA(\Tilde{a}_l, {a}_i) k_\gA(a', a') \\&+ \frac{\alpha_{m + 1}}{m} \sum_{j = 1}^m \Big(\sum_{\substack{l = 1 \\ l \ne j}}^m \theta_l \mBeta(\Tilde{w}_l, \Tilde{a}_j)\Big)^T \mK_{Z Z} \mBeta({w}_i, {a}_i) k_\gA(\Tilde{a}_j, {a}_i) k_\gA(a', a')\\
    &=\Big[\big[\mC^T \mK_{Z Z}\mB \odot \mK_{{A}\Tilde{A}}\big] \alpha_{1:m}\Big]_i k_\gA(a', a') + \alpha_{m+1}\Big[\big[{\mC}^T \mK_{Z Z}\Tilde{\mB} \odot \mK_{{A}\Tilde{A}}\big] (\vone / m)\Big]_i k_\gA(a', a')\\
    &= \Bigg[\begin{bmatrix}
        \mC^T \mK_{Z Z} \mB \odot \mK_{{A}\Tilde{A}} & \big[{\mC}^T \mK_{Z Z} \Tilde{\mB} \odot \mK_{{A}\Tilde{A}}\big] (\vone / m)
    \end{bmatrix} \begin{bmatrix}
        \alpha_{1:m}\\ \alpha_{m+1}
    \end{bmatrix} \Bigg]_i k_\gA(a', a')
\end{align*}
where
\begin{align*}
    \mC = \Big( \mK_{W W} \odot \mK_{A A} + n \lambda_1 \mI\Big)^{-1} (\mK_{W W} \odot \mK_{A A}).
\end{align*}

As a result, the first component in Equation (\ref{eq:ATTSecondStageValidationLoss}) is given by
\begin{align}
    &\frac{1}{n} \sum_{i = 1}^n \Big\langle \varphi,  \hat{\mu}_{Z| W, A}({w}_i, {a}_i) \otimes \phi_\gA({a}_i) \otimes \phi_\gA(a') \Big\rangle^2 \nonumber\\
    &= \frac{k_\gA(a', a')^2}{n} \begin{bmatrix}
        \alpha_{1:m} \\ \alpha_{m+1}
    \end{bmatrix}^T 
    \begin{bmatrix}
        \big[\mC^T \mK_{Z Z}\mB \odot \mK_{{A}\Tilde{A}}\big]^T \\ (\frac{\vone}{m})^T \big[{\mC}^T \mK_{Z Z} \Tilde{\mB} \odot \mK_{{A}\Tilde{A}}\big]^T
    \end{bmatrix} 
    \begin{bmatrix}
        \mC^T \mK_{Z Z}\mB \odot \mK_{{A}\Tilde{A}} & \big[{\mC}^T \mK_{Z Z} \Tilde{\mB} \odot \mK_{{A}\Tilde{A}}\big] \frac{\vone}{m}
    \end{bmatrix} 
    \begin{bmatrix}
        \alpha_{1:m}\\ \alpha_{m+1}
    \end{bmatrix}
\label{eq:AlternativeProxyMethodsATTValidationLossFirstTerm}
\end{align}
Next, for the second component in Equation (\ref{eq:ATTSecondStageValidationLoss}), we note that
\begin{align}
    &\frac{1}{n} \sum_{i = 1}^n \sum_{\substack{j = 1 \\ j \ne i}}^n \Big\langle \varphi, \theta_j^{(2)} \hat{\mu}_{Z|W, A} ({w}_j, {a}_i) \otimes \phi_\gA({a}_i) \otimes \phi_\gA(a') \Big\rangle \nonumber\\
    &= \frac{1}{n} \sum_{i = 1}^n \sum_{\substack{j = 1 \\ j \ne i}}^n \sum_{l = 1}^m \alpha_l \Big\langle 
     \hat{\mu}_{Z|W, A} (\Tilde{w}_l, \Tilde{a}_l)\otimes \phi_\gA(\Tilde{a}_l) \otimes \phi_\gA(a'), \theta_j^{(2)} \hat{\mu}_{Z|W, A} ({w}_j, {a}_i) \otimes \phi_\gA({a}_i) \otimes \phi_\gA(a')
    \Big\rangle \nonumber\\
    &+ \frac{\alpha_{m + 1}}{m n} \sum_{i = 1}^n \sum_{\substack{j = 1 \\ j \ne i}}^n \sum_{r = 1}^m \sum_{\substack{s = 1 \\ s \ne r}}^m  \Big\langle \theta_s \hat{\mu}_{Z|W, A} (\Tilde{w}_s, \Tilde{a}_r) \otimes \phi_\gA(\Tilde{a}_r) \otimes \phi_\gA(a'), 
    \theta_j^{(2)} \hat{\mu}_{Z|W, A} ({w}_j, {a}_i) \otimes \phi_\gA({a}_i) \otimes \phi_\gA(a') \Big\rangle \nonumber\\
    &= \frac{1}{n} \sum_{i = 1}^n \sum_{\substack{j = 1 \\ j \ne i}}^n \sum_{l = 1}^m \alpha_l \mBeta(\Tilde{w}_l, \Tilde{a}_l)^T\mK_{Z Z} \theta_j^{(2)} \mBeta({w}_j, {a}_i) k_\gA(\Tilde{a}_l, {a}_i) k_\gA(a', a') \nonumber\\
    &+ \frac{\alpha_{m + 1}}{mn} \sum_{i = 1}^n \sum_{\substack{j = 1 \\ j \ne i}}^n \sum_{r = 1}^m \sum_{\substack{s = 1 \\ s \ne r}}^m \theta_s \mBeta(\Tilde{w}_s, \Tilde{a}_r)^T \mK_{Z Z} \theta_j^{(2)} \mBeta({w}_j, {a}_i) k_\gA(\Tilde{a}_r, {a}_i) k_\gA(a', a') \nonumber\\
    &= \frac{1}{n} \sum_{i = 1}^n \sum_{l = 1}^m \alpha_l \mBeta(\Tilde{w}_l, \Tilde{a}_l)^T\mK_{Z Z}\Big(\sum_{\substack{j = 1 \\ j \ne i}}^n  \theta_j^{(2)} \mBeta({w}_j, {a}_i)\Big) k_\gA(\Tilde{a}_l, {a}_i) k_\gA(a', a') \nonumber\\
    &+ \frac{\alpha_{m + 1}}{mn} \sum_{i = 1}^n  \sum_{r = 1}^m \Big( \sum_{\substack{s = 1 \\ s \ne r}}^m \theta_s \mBeta(\Tilde{w}_s, \Tilde{a}_r) \Big)^T \mK_{Z Z} \Big( \sum_{\substack{j = 1 \\ j \ne i}}^n \theta_j^{(2)} \mBeta({w}_j, {a}_i) \Big) k_\gA(\Tilde{a}_r, {a}_i) k_\gA(a', a') \nonumber\\
    &=\frac{1}{n} \alpha_{1:m}^T \Big[ \mB^T \mK_{Z Z} \Tilde{\mC} \odot \mK_{\Tilde{A}{A}}\Big] \vone k_\gA(a', a') + \alpha_{m+1} \frac{1}{mn} \vone^T \Big[ \Tilde{\mB}^T \mK_{Z Z} \Tilde{\mC} \odot \mK_{\Tilde{A}{A}} \Big] \vone k_\gA(a', a') \nonumber \\
&=  \begin{bmatrix}
    \alpha_{1:m } \\ \alpha_{m + 1}
\end{bmatrix}^T \begin{bmatrix}
 [ \mB^T \mK_{Z Z} \Tilde{\mC} \odot \mK_{\Tilde{A} {A}} ] \frac{\vone}{n}\\
 (\frac{\vone}{m})^T \Big[ \Tilde{\mB}^T \mK_{Z Z} \Tilde{\mC} \odot \mK_{\Tilde{A} \Tilde{A}}\Big] \frac{\vone}{n}
\end{bmatrix} k_\gA(a', a') \label{eq:AlternativeProxyMethodsATTValidationLossSecondTerm}
\end{align}
where
\begin{align*}
    \Tilde{\mC}_{:, j} &=(\mK_{W W} \odot \mK_{A A} + n \lambda_1 \mI)^{-1}\Big( \sum_{\substack{l = 1 \\ l \ne j}}^n \theta^{(2)}_l \mK_{W {w}_l} \odot \mK_{A {a}_j} \Big)\\
\end{align*}

Now, we are ready to combine our findings and write the loss function in terms of matrix-vector multiplications. Using Equations (\ref{eq:AlternativeProxyMethodsATTValidationLossFirstTerm}) and (\ref{eq:AlternativeProxyMethodsATTValidationLossSecondTerm}), the loss function can be expressed as 

\begin{align}
    &\hat{\mathcal{L}}^{\text{Val}}(\varphi) = \frac{1}{n} \sum_{i = 1}^n \langle \varphi, \hat{\mu}_{Z|W, A} ({w}_i, {a}_i) \otimes \phi_\gA({a}_i) \otimes \phi_\gA(a') \rangle^2 \nonumber\\ &-2 \frac{1}{n} \sum_{j = 1}^n \sum_{\substack{i = 1 \\ i \ne j}}^n \langle \varphi,  \theta^{(2)}_i \hat{\mu}_{Z |W, A}({w}_i, {a}_j) \otimes \phi_\gA({a}_j) \otimes \phi_\gA(a') \rangle \nonumber\\
    &= \frac{k_\gA(a', a')^2}{n} \begin{bmatrix}
        \alpha_{1:m} \\ \alpha_{m+1}
    \end{bmatrix}^T 
    \begin{bmatrix}
        \mC^T \mK_{Z Z}\mB \odot \mK_{{A}\Tilde{A}} \\ (\frac{\vone}{m})^T \big[{\mC}^T \mK_{Z Z} \Tilde{\mB} \odot \mK_{{A}\Tilde{A}}\big]^T
    \end{bmatrix} 
    \begin{bmatrix}
        \mC^T \mK_{Z Z}\mB \odot \mK_{{A}\Tilde{A}} & \big[{\mC}^T \mK_{Z Z} \Tilde{\mB} \odot \mK_{{A}\Tilde{A}}\big] \frac{\vone}{m}
    \end{bmatrix} 
    \begin{bmatrix}
        \alpha_{1:m}\\ \alpha_{m+1}
    \end{bmatrix} \nonumber\\
    &-2 \begin{bmatrix}
    \alpha_{1:m } \\ \alpha_{m + 1}
\end{bmatrix}^T \begin{bmatrix}
 [ \mB^T \mK_{Z Z} \Tilde{\mC} \odot \mK_{\Tilde{A} {A}} ] \frac{\vone}{n}\\
 (\frac{\vone}{m})^T \Big[ \Tilde{\mB}^T \mK_{Z Z} \Tilde{\mC} \odot \mK_{\Tilde{A} {A}}\Big] \frac{\vone}{n}
\end{bmatrix} k_\gA(a', a')
\label{eq:AlternativeProxyMethodATTValidationLossMatrixVectorProductForm}
\end{align}
Similar to the tuning procedure of $\lambda_2$ in dose-response curve estimation, we can augment this validation loss with the complexity loss that will upper bound the hold-out loss while penalizing the overly complex models. As a result, we can write the final surrogate loss to tune $\lambda_2$ as:
\begin{align*}
    &\hat{\mathcal{L}}^{\text{Val}}(\varphi)\\
    &\le \frac{k_\gA(a', a')^2}{n} \begin{bmatrix}
        \alpha_{1:m} \\ \alpha_{m+1}
    \end{bmatrix}^T 
    \begin{bmatrix}
        \mC^T \mK_{Z Z}\mB \odot \mK_{{A}\Tilde{A}} \\ (\frac{\vone}{m})^T \big[{\mC}^T \mK_{Z Z} \Tilde{\mB} \odot \mK_{{A}\Tilde{A}}\big]^T
    \end{bmatrix} 
    \begin{bmatrix}
        \mC^T \mK_{Z Z}\mB \odot \mK_{{A}\Tilde{A}} & \big[{\mC}^T \mK_{Z Z} \Tilde{\mB} \odot \mK_{{A}\Tilde{A}}\big] \frac{\vone}{m}
    \end{bmatrix} 
    \begin{bmatrix}
        \alpha_{1:m}\\ \alpha_{m+1}
    \end{bmatrix} \nonumber\\
    &-2 \begin{bmatrix}
    \alpha_{1:m } \\ \alpha_{m + 1}
\end{bmatrix}^T \begin{bmatrix}
 [ \mB^T \mK_{Z Z} \Tilde{\mC} \odot \mK_{\Tilde{A} {A}} ] \frac{\vone}{n}\\
 (\frac{\vone}{m})^T \Big[ \Tilde{\mB}^T \mK_{Z Z} \Tilde{\mC} \odot \mK_{\Tilde{A} {A}}\Big] \frac{\vone}{n}
\end{bmatrix} k_\gA(a', a') + \frac{2 \sigma^2}{m} \text{Tr} \Big( \big( \mL^T \mL + m \lambda_2 \mI\big)^{-1} \mL^T \mL \Big)
\end{align*}

where the matrix $\mL$ is defined as
\begin{align*}
    \mL = \begin{bmatrix}
        \mB^T \mK_{Z Z}\mB \odot \mK_{\Tilde{A}\Tilde{A}} &  \big[{\mB}^T \mK_{Z Z} \Tilde{\mB} \odot \mK_{\Tilde{A}\Tilde{A}}\big] (\frac{\vone}{m})
    \end{bmatrix} 
\end{align*}
In our ATT experiment that is illustrated in Figure (\ref{fig:alternativeProxyMethodATTComparison}), we used $\sigma^2 = 1$.
\subsubsection{Tuning \texorpdfstring{$\zeta$}{Lg} 
\label{sec:ZetaRegularizationTuning}Regularization Parameter in ATT Algorithm}
In Algorithm (\ref{algo:ATT_algorithm_with_confounders}), we estimate the conditional mean embedding ${\E}[\phi_\gW(W)| A = a']$ by 
\begin{align*}
    \hat{\E}[\phi_\gW(W)| A = a'] = \Phi_{\gW}(\mK_{\Tilde{A} \Tilde{A}} + m \zeta \mI)^{-1} \mK_{\Tilde{A} a'} = \sum_{i = 1}^m \theta_i \phi_\gW(\Tilde{w}_i) = \Phi_{\gW} \vtheta
\end{align*}
where $\vtheta = (\mK_{\Tilde{A} \Tilde{A}} + m \zeta \mI)^{-1} \mK_{\Tilde{A} a'}$. This is kernel ridge regression solution for measurements $\{\phi_\gA(\Tilde{a}_i)\}_{i = 1}^m$ and targets $\{\phi_\gW(\Tilde{w}_i)\}_{i = 1}^m$. Hence, one can use the LOOCV procedure presented in S.M. (\ref{section:LOOCV_KernelRidgeRegression}) to tune the regularization parameter in this estimator. Here, we will present another method that can be used for conditional mean embeddings that is provided in \citep{singh2023kernelmethodsunobservedconfounding}. Its proof can be found in the derivation of Algorithm 7 of \citep{singh2023kernelmethodsunobservedconfounding}.

\begin{theorem}[Algorithm 7 in \citep{singh2023kernelmethodsunobservedconfounding}]
 Consider the conditional mean embedding
 \begin{align*}
     {\E}[\phi_\gW(W)| A = a'] = \mu_{W | A}(a') = \int \phi_\gW(w) p(w | a') d w.
 \end{align*}

Sample based estimation using data $\{\Tilde{w}_i, \Tilde{a}_i\}_{i = 1}^m$ for this conditional mean embedding is given by
\begin{align*}
     \hat{\mu}_{W|A}(a') = \Phi_\gW (\mK_{\Tilde{\mA} \Tilde{\mA}} + m \zeta \mI)^{-1} \mK_{\Tilde{\mA} a'}
 \end{align*}
 where $\Phi_\gW = \begin{bmatrix}
     \phi_\gW(\Tilde{w}_1) & \phi_\gW(\Tilde{w}_2) & \ldots & \phi_\gW(\Tilde{w}_m) 
 \end{bmatrix}$, $[\mK_{\Tilde{\mA} \Tilde{\mA}}]_{ij} = k_\gA(\Tilde{a}_i, \Tilde{a}_j)$, $[\mK_{\Tilde{\mA} a'}]_i = k_\gA(\Tilde{a}_i, a')$ and $\mI \in \R^{m \times m}$ is the identity matrix. The LOOCV loss for the conditional mean embedding estimation is given by
 \begin{align}
     \text{LOOCV}_{\mu_{W|A}}(\zeta) = \frac{1}{m} \text{Tr}\Big(\mS \big(\mK_{\Tilde{W}\Tilde{W}} - 2\mK_{\Tilde{W}\Tilde{W}} \mR^T + \mR \mK_{\Tilde{W}\Tilde{W}} \mR^T \big)  \Big)
\label{eq:ConditionalMeanEmbeddingLOOCVEquation}
 \end{align}
 where
 \begin{align*}
 [\mK_{\Tilde{W}\Tilde{W}}]_{ij} &= k_\gW(\Tilde{w}_i, \Tilde{w}_j)\\
     \mR &= \mK_{\Tilde{\mA} \Tilde{\mA}} (\mK_{\Tilde{\mA} \Tilde{\mA}} + m \zeta \mI)^{-1} \in \R^{m \times m}\\
     \mS &\in \R^{m \times m} \enspace \text{s.t.} \enspace [\mS]_{ij} = \mathbf{1}[i = j]\Bigg( \frac{1}{1 - [\mR]_{ij}}  \Bigg)^2.
 \end{align*}
 \label{thm:LOOCV_conditional_mean_embedding}
\end{theorem}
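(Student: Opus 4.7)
The plan is to view the conditional mean embedding estimator $\hat{\mu}_{W|A}$ as a vector-valued kernel ridge regression with inputs $\Tilde{a}_i$ and outputs $\phi_\gW(\Tilde{w}_i) \in \gH_\gW$, and then to lift the classical Craven--Wahba leave-one-out shortcut (residual of the leave-one-out fit equals the full-fit residual divided by $1-R_{ii}$) to this $\gH_\gW$-valued setting. Concretely, I would define $\hat{\mu}^{(-i)}_{W|A}(a') := \Phi_\gW^{(-i)} (\mK^{(-i)}_{\Tilde{\mA}\Tilde{\mA}} + (m-1)\zeta \mI)^{-1} \mK^{(-i)}_{\Tilde{\mA}a'}$ (the estimator trained without the $i$-th sample) and set the leave-one-out loss to
\begin{equation*}
\text{LOOCV}_{\mu_{W|A}}(\zeta) = \frac{1}{m}\sum_{i=1}^m \bigl\| \hat{\mu}^{(-i)}_{W|A}(\Tilde{a}_i) - \phi_\gW(\Tilde{w}_i) \bigr\|_{\gH_\gW}^2,
\end{equation*}
which is the RKHS-valued analogue of the usual squared LOOCV error.

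The first key step is to establish the shortcut identity
\begin{equation*}
\hat{\mu}^{(-i)}_{W|A}(\Tilde{a}_i) - \phi_\gW(\Tilde{w}_i) \;=\; \frac{1}{1 - [\mR]_{ii}}\bigl(\hat{\mu}_{W|A}(\Tilde{a}_i) - \phi_\gW(\Tilde{w}_i)\bigr),
\end{equation*}
with $\mR = \mK_{\Tilde{\mA}\Tilde{\mA}}(\mK_{\Tilde{\mA}\Tilde{\mA}} + m\zeta\mI)^{-1}$ (so that $\hat{\mu}_{W|A}(\Tilde{a}_j) = \sum_l [\mR]_{jl}\phi_\gW(\Tilde{w}_l)$ on the training inputs). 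To prove this I would use the representer theorem to write $\hat{\mu}^{(-i)}_{W|A}(\cdot) = \sum_{l\ne i}\alpha^{(-i)}_l\phi_\gW(\Tilde{w}_l) k_\gA(\Tilde{a}_l,\cdot)$ in the vector-valued RKHS, and then check by a direct manipulation of the KRR normal equations that augmenting the leave-one-out problem with the phantom output $\hat{\mu}^{(-i)}_{W|A}(\Tilde{a}_i)$ in place of $\phi_\gW(\Tilde{w}_i)$ yields the \emph{same} solution as the full-data fit on this enlarged dataset; comparing the $i$-th residual of the two representations gives the scalar factor $1/(1-[\mR]_{ii})$. This mimics the classical scalar argument (see e.g.\ Rifkin--Lippert) but executed coordinate-wise in $\gH_\gW$ --- this is the main obstacle, since one must verify that the Craven--Wahba manipulation survives the passage from $\R$-valued to $\gH_\gW$-valued outputs.

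Given the shortcut, the second step is purely algebraic. For each $i$, expand
\begin{equation*}
\bigl\|\hat{\mu}_{W|A}(\Tilde{a}_i) - \phi_\gW(\Tilde{w}_i)\bigr\|_{\gH_\gW}^2 = \langle \phi_\gW(\Tilde{w}_i),\phi_\gW(\Tilde{w}_i)\rangle - 2\langle \phi_\gW(\Tilde{w}_i),\hat{\mu}_{W|A}(\Tilde{a}_i)\rangle + \langle \hat{\mu}_{W|A}(\Tilde{a}_i),\hat{\mu}_{W|A}(\Tilde{a}_i)\rangle,
\end{equation*}
and identify the three inner products as the $i$-th diagonal entries of $\mK_{\Tilde{W}\Tilde{W}}$, $\mK_{\Tilde{W}\Tilde{W}}\mR^T$, and $\mR\mK_{\Tilde{W}\Tilde{W}}\mR^T$ respectively (using the reproducing property and $\hat{\mu}_{W|A}(\Tilde{a}_i) = \Phi_\gW \mR_{i,:}^T$).

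Finally, multiplying the $i$-th squared residual by $[\mS]_{ii} = (1-[\mR]_{ii})^{-2}$ from the shortcut, summing over $i$, and rewriting the resulting sum of diagonal entries as a trace against the diagonal matrix $\mS$ yields
\begin{equation*}
\text{LOOCV}_{\mu_{W|A}}(\zeta) = \frac{1}{m}\operatorname{Tr}\bigl(\mS(\mK_{\Tilde{W}\Tilde{W}} - 2\mK_{\Tilde{W}\Tilde{W}}\mR^T + \mR\mK_{\Tilde{W}\Tilde{W}}\mR^T)\bigr),
\end{equation*}
which is the stated identity. The only real work is the RKHS-valued shortcut; once that is in hand, the rest is bookkeeping with the reproducing property and the cyclic property of the trace.
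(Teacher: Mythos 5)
Your approach is correct and is essentially the standard one: the paper itself gives no proof of this statement (it defers to the derivation of Algorithm 7 in \citet{singh2023kernelmethodsunobservedconfounding}), and that derivation — like the scalar LOOCV formula quoted as Theorem F.1 of \citet{RahulKernelCausalFunctions} earlier in the supplement — is exactly the Craven--Wahba hat-matrix shortcut followed by a kernel-trick expansion of the $\gH_\gW$-valued squared residual. Your observation that the lift from $\R$-valued to $\gH_\gW$-valued outputs is painless is right, and for the cleanest reason: because the operator-valued kernel here is $k_\gA(\cdot,\cdot)\operatorname{Id}_{\gH_\gW}$, the smoother matrix $\mR$ depends only on the inputs $\Tilde{a}_i$, so the regression decouples along any orthonormal basis of $\gH_\gW$ and the scalar shortcut applies coordinate-wise. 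Your Step 2 identifications of the three inner products with the diagonal entries of $\mK_{\Tilde{W}\Tilde{W}}$, $\mK_{\Tilde{W}\Tilde{W}}\mR^T$ and $\mR\mK_{\Tilde{W}\Tilde{W}}\mR^T$ are correct ($\mR$ is symmetric, so the transposes are immaterial).

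One detail to fix: you define the leave-one-out estimator with regularization $(m-1)\zeta$ applied to the $(m-1)\times(m-1)$ kernel matrix, i.e.\ as the minimizer of $\frac{1}{m-1}\sum_{j\ne i}\|\cdot\|^2+\zeta\|\cdot\|^2$. The phantom-point argument yields the exact identity
\begin{equation*}
\hat{\mu}^{(-i)}_{W|A}(\Tilde{a}_i)-\phi_\gW(\Tilde{w}_i)=\frac{1}{1-[\mR]_{ii}}\bigl(\hat{\mu}_{W|A}(\Tilde{a}_i)-\phi_\gW(\Tilde{w}_i)\bigr)
\end{equation*}
only when the held-out fit keeps the \emph{same unnormalized} penalty as the full fit, i.e.\ $\hat{\mu}^{(-i)}_{W|A}(a')=\Phi_\gW^{(-i)}\bigl(\mK^{(-i)}_{\Tilde{\mA}\Tilde{\mA}}+m\zeta\mI\bigr)^{-1}\mK^{(-i)}_{\Tilde{\mA}a'}$, since inserting the phantom target $\hat{\mu}^{(-i)}_{W|A}(\Tilde{a}_i)$ back into the $m$-point normal equations must reproduce the matrix $(\mK_{\Tilde{\mA}\Tilde{\mA}}+m\zeta\mI)$. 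With $(m-1)\zeta$ the identity holds only up to an $O(1/m)$ correction. Redefine $\hat{\mu}^{(-i)}_{W|A}$ accordingly and the rest of your argument goes through verbatim to give the stated trace formula.
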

In our numerical experiments, we utilized Theorem (\ref{thm:LOOCV_conditional_mean_embedding}) to tune regularization parameters $\zeta$ in ATT algorithm. 
In particular, we picked the regularization parameter that minimizes the LOOCV loss given in Equation (\ref{eq:ConditionalMeanEmbeddingLOOCVEquation}) over a grid $\zeta \in \mathrm{Z}$ that is generated with a \emph{logspace} with maximum and minimum values of $1.0$ and $10^{-7}$, respectively. We used $150$ grid points in our numerical experiments.

\subsection{Discussion on the Time Complexity of the Proposed Methods}

Similar to kernel ridge regression, the complexity of our methods is governed by the matrix inversion. For simplicity, we consider the dose-response curve estimation in Algorithm (\ref{algo:ATE_algorithm_with_confounders}). In the first-stage regression, the following matrix must be inverted:
\begin{align*}
    \mK_{W W} \odot \mK_{A A} + n \lambda \mI \in \R^{n \times n}.
\end{align*}
This inversion operation has complexity of $O(n^3)$, making the first-stage sample size the limiting factor. Furthermore, to tune the regularization parameter $\lambda_1$ with LOOCV procedure, as discussed in S.M. (Sec. \ref{section:LOOCV_KernelRidgeRegression}), this inversion must be performed for each grid point of $\lambda_1 \in \Lambda_1$. 

In the second-stage, the following matrix needs to be inverted to obtain the optimizer coefficients $\{\alpha_i\}_{i=1}^{m+1}$ of Equation (\ref{eq:AlternativeProxyMethodProofLossMatrixVectorProductFormWithConfounder}):
\begin{align*}
    \frac{1}{m} \mL^T \mL + \lambda_2 \mN \in \R^{(m+1) \times (m + 1)}
\end{align*}
where
\begin{align*}
\mL &= \begin{bmatrix}
        \mB^T \mK_{Z Z}\mB \odot \mK_{\Tilde{A}\Tilde{A}} \nonumber\\ (\frac{\vone}{m})^T \big[{\mB}^T \mK_{Z Z} \bar{\mB} \odot \mK_{\Tilde{A}\Tilde{A}}\big]^T
    \end{bmatrix}^T \in \R^{m \times (m+1)},\\
\mN &= \begin{bmatrix}
        \mB^T \mK_{Z Z} \mB \odot \mK_{\Tilde{A} \Tilde{A}} & [ \mB^T \mK_{Z Z} \Bar{\mB} \odot \mK_{\Tilde{A} \Tilde{A}} ] \frac{\vone}{m}\\
        (\frac{\vone}{m})^T [ \mB^T \mK_{Z Z} \Bar{\mB} \odot \mK_{\Tilde{A} \Tilde{A}} ]^T & (\frac{\vone}{m})^T \Big[ \Bar{\mB}^T \mK_{Z Z} \Bar{\mB} \odot \mK_{\Tilde{A} \Tilde{A}}\Big] \frac{\vone}{m}
    \end{bmatrix} \in \R^{(m+1) \times (m + 1)},
\end{align*}
as given in Algorithm (\ref{algo:ATE_algorithm_with_confounders}). This inversion has complexity of $O((m+1)^3)$. Additionally, tuning the regularization parameter $\lambda_2$, as discussed in S.M. (Sec. \ref{sec:SecondStageRegularizationTuning}), requires performing this inversion for each grid point for $\lambda_2 \in \Lambda_2$. 

Finally, we note that the third-stage regression is another kernel ridge regression, where the matrix to be inverted (if only first-stage data is used) is
\begin{align*}
    \mK_{A A} + n \lambda_3 \mI \in \R^{n \times n}.
\end{align*}
As previously mentioned, data from either first-stage, second-stage, or a combination of both can be used in the third-stage regression. If the combination of first and second-stage data is used, the matrix to be inverted will have dimensions $(n + m) \times (n + m)$. Denoting by $t$ the number of samples used for third-stage regression, the required inversion have complexity of $O(t^3)$. Therefore, to tune the regularization parameter $\lambda_3$ with LOOCV, as outlined in S.M. (Sec. \ref{section:LOOCV_KernelRidgeRegression}), this inversion must be performed for each grid point of $\lambda_3 \in \Lambda_3$. 

Overall, assuming $t = m + n$, the time complexity of our proposed method scales as $O(t^3)$. A similar analysis applies to our algorithm for conditional dose-response curve estimation. 

\subsection{Further Notes on Numerical Experiments}
\label{sec:FurtherNotesOnNumericalExperiments}
Our implementation code is available on GitHub link\footnote{\url{https://github.com/BariscanBozkurt/Density-Ratio-Based-Proxy-Causal-Learning-without-Density-Ratios}}, which includes the instructions for reproducing the experiments presented in this paper. 

For the tuning of the regularization parameters $(\lambda_1, \lambda_2, \lambda_3)$ (and $\zeta$ for ATT), we followed the procedures described in S.M.  (Sec. \ref{section:LOOCV_KernelRidgeRegression}), (Sec. \ref{sec:SecondStageRegularizationTuning}), (Sec. \ref{sec:TuningSecondStageRegularizationATT}) and (Sec. \ref{sec:ZetaRegularizationTuning}). In all experiments, the variables are normalized by subtracting mean and dividing by the standard deviation unless otherwise stated. The data were split uniformly into equal-sized first-stage and second-stage sets for the experiments. In the third-stage regression of our proposed methods, we used the combination of first and second-stage data.

\subsection{Additional Numerical Experiments and Ablation Studies}
\label{sec:AdditionalNumericalExperiments}
\subsubsection{Comparison of Our Approach and Outcome Bridge Function-Based Methods}
\label{sec:ComparisonOfTreatmentVsOutcomeBridge_Appendix}
A key question arising from our experiments in Section (\ref{sec:NumericalExperiments}) is whether our method outperforms outcome bridge-based methods under specific conditions. To investigate this, we conducted synthetic experiments where one proxy variable was highly informative of confounders while the other was noisier. We adapted the data generation process from \citep[][Appendix D.5]{pmlr-v238-tsai24b} to construct various scenarios. We considered six scenarios where $U$ follows different Beta distributions, and proxies $Z$ and $W$ vary in informativeness. The treatment and outcome variables were generated as follows:

\textbf{Setting 1:} $U \sim \text{Beta}(5, 4)$, $W = g(U) + \mathcal{U}[0, 1]$ where the function $g(x) = 0.8 \frac{\exp(x)}{1 + \exp(x)} + 0.1$ is applied elementwise, $Z = (1 - U) \times Z_1 + U \times Z_2 + \mathcal{U}[0, 100]$ where $Z_1 = \mathcal{N}(-1, 0.1)$ and $Z_2 = \mathcal{N}(1, 0.1)$, $A = 0.1 U + 0.1 Z + \mathcal{U}[0, 1]$, and $Y = (2 U - 1) + \cos(1.5 A)$.

\textbf{Setting 2:} $U \sim \text{Beta}(5, 4)$, $Z = g(U) + \mathcal{U}[0, 1]$, $W = (1 - U) \times W_1 + U \times W_2 + \mathcal{U}[0, 100]$ where $W_1 = \mathcal{N}(-1, 0.1)$ and $W_2 = \mathcal{N}(1, 0.1)$, $A = 0.1 U + 0.1 Z + \mathcal{U}[0, 1]$, and $Y = (2 U - 1) + \cos(1.5 A)$.

\textbf{Setting 3:} $U \sim \text{Beta}(8, 4)$, $W = U + \mathcal{U}[0, 1]$, $Z = g((1 - U) \times Z_1 + U \times Z_2) + \mathcal{U}[0, 100]$ where $Z_1 = \mathcal{N}(-1, 0.1)$ and $Z_2 = \mathcal{N}(1, 0.1)$, $A = 0.1 U + 0.1 Z + \mathcal{U}[0, 1]$, and $Y = (2 U - 1) + \cos(1.5 A)$.

\textbf{Setting 4:} $U \sim \text{Beta}(8, 4)$, $Z = U + \mathcal{U}[0, 1]$, $W = g((1 - U) \times W_1 + U \times W_2) + \mathcal{U}[0, 100]$ where $W_1 = \mathcal{N}(-1, 0.1)$ and $W_2 = \mathcal{N}(1, 0.1)$, $A = 0.1 U + 0.1 Z + \mathcal{U}[0, 1]$, and $Y = (2 U - 1) + \cos(1.5 A)$.

\textbf{Setting 5:} $U \sim \text{Beta}(3, 5)$, $W = - U^2 + \mathcal{U}[0, 1]$, $Z = g((1 - U) \times Z_1 + U \times Z_2) + \mathcal{U}[0, 100]$ where $Z_1 = \mathcal{N}(-1, 0.1)$ and $Z_2 = \mathcal{N}(1, 0.1)$, $A = 0.25 \sqrt{|U|} - 0.2 Z + \mathcal{U}[0, 1]$, and $Y = 3 W - 0.1 A - \cos(0.5 A + 5 U)$.

\textbf{Setting 6:} $U \sim \text{Beta}(3, 5)$, $Z = - U^2 + \mathcal{U}[0, 1]$, $W = g((1 - U) \times W_1 + U \times W_2 + \mathcal{U}[0, 100])$ where $W_1 = \mathcal{N}(-1, 0.1)$ and $W_2 = \mathcal{N}(1, 0.1)$, $A = 0.25 \sqrt{|U|} - 0.2 Z + \mathcal{U}[0, 1]$, and $Y = 3 W - 2 A - \cos(10 A + 5 U)$.

For each setting, we generated $1000$ samples and evaluated our method against outcome bridge-based approaches over five runs, approximating the ground truth dose-response via Monte Carlo. Table (\ref{tab:ATE_TreatmentProxy_vs_Outcome_Proxy_Table1}) reports mean squared error and standard deviation across five independent realizations of each setting. Additionally, we compare the PCL algorithms to the oracle method Kernel-ATE, which directly uses the confounding variable $U$, in Table (\ref{tab:ATE_TreatmentProxy_vs_Outcome_Proxy_Table1}). Our method outperformed in odd-numbered settings where $W$ was more informative, while outcome bridge-based methods excelled in even-numbered settings where $Z$ was more informative. When the link between $Z$ and $U$ is highly noisy (or incomplete as in the next section), Assumption (\ref{assum:ExistenceCompletenessAssumption})—which ensures the existence of our treatment bridge function—is likely violated. Conversely, outcome bridge-based methods rely on this assumption for causal function identifiability. Thus, we hypothesize that our method is more robust when existence is challenged rather than identifiability. Meanwhile, we use Assumption (\ref{assum:AlternativeProxyAssumptionCompleteness1}) for identifiability, while KPV and KNC use it for outcome bridge function existence. This assumption is likely violated when the link between $W$ and $U$ is highly noisy (or incomplete), as in Settings 2, 4, and 6, where KPV and KNC outperform our method. We conjecture that violating our method’s identifiability condition impacts performance more than violating bridge function existence.

Experimental results in S.M. (Sec. \ref{sec:Appendix_JobCorpsExperiments}) also validates this hypothesis with the Job Corps dataset, where high-dimensional proxies were synthetically generated. Results reinforce the complementary strengths of treatment and outcome bridge-based methods. Understanding these trade-offs warrants deeper analysis, which we leave for future work. Nonetheless, our findings highlight the importance of further exploring treatment bridge-based approaches. 


\begin{table}[ht]
    \centering
\caption{Mean squared error for ablation studies with synthetic settings in S.M. (Sec. \ref{sec:ComparisonOfTreatmentVsOutcomeBridge_Appendix}). We report mean $\pm$ standard deviation from $n = 5$ independent realizations of each setting.}
    \label{tab:ATE_TreatmentProxy_vs_Outcome_Proxy_Table1}
    \begin{tabular}{c | c l l l}
    \hline
    {\bf} & {\bf Kernel Alternative Proxy} & {\bf KNC} & {\bf  KPV} & {\bf  Kernel-ATE} \\
    \hline 
    {\bf Setting 1} & $\mathbf{0.00553\pm0.00069}$  &  $0.29752\pm0.08545$  & $0.04184\pm0.02661$ & $0.00026\pm0.00019$\\
    \hline 
    {\bf Setting 2} & $0.00932\pm0.00529$  &  $0.01086\pm0.00373$  & $\mathbf{0.00541\pm0.00217}$ &$0.00018\pm0.00021$\\
    \hline 
    {\bf Setting 3} & $\mathbf{0.00347\pm0.00104}$  &  $0.25505\pm0.09238$  & $0.05122\pm0.04610$ & $0.00014\pm0.00009$\\
    \hline 
    {\bf Setting 4} & $0.01532\pm0.00548$  &  $0.01333\pm0.00437$  & $\mathbf{0.00899\pm0.00491}$ & $0.00004\pm	0.00003$\\
    \hline 
    {\bf Setting 5} & $\mathbf{0.01129\pm0.00823}$  &  $0.03312\pm0.02424$  & $	0.01982\pm0.00885$ & $0.01105\pm0.00701$\\
    \hline 
    {\bf Setting 6} & $0.18436\pm0.04889$  &  $0.09568\pm0.02510$  & $\mathbf{0.05394\pm0.01516}$ & $0.00330\pm0.00262$\\
    \hline 
    \end{tabular}
\end{table}

\subsubsection{Dose-Response and Conditional Dose-Response Estimations in Job Corps Dataset}
\label{sec:Appendix_JobCorpsExperiments}
In this section, we conduct a new set of semi-synthetic experiments to compare our proposed method with the outcome bridge function-based algorithm based on the US Job Corps dataset \citep{Schochet_JobCorps, Flores_JobCorps}, adapted for the Proxy Causal Learning (PCL) setting. The US Job Corps Program is an educational intervention targeting disadvantaged youth. In this context, the continuous treatment variable $A$ represents the total hours spent in academic or vocational training, while the continuous outcome variable $Y$ corresponds to the proportion of weeks employed during the second year of training. Consistent with the setup in \citep{RahulKernelCausalFunctions}, the covariates $U \in \R^{65}$ include factors such as gender, ethnicity, age, language proficiency, education level, marital status, household size, and others. We obtained the dataset from the publicly available code of \citep{RahulKernelCausalFunctions} (see \url{https://github.com/liyuan9988/KernelCausalFunction/tree/master}). To adapt this dataset to the PCL framework, we synthetically generated two proxy variables, $Z$ and $W$, from $U$ using the following settings:

\textbf{Setting 1:} $W = U + \epsilon$, $Z = g\left(U^{(1:20)} / \max\left(U^{(1:20)}\right)\right) + \nu$ where the function $g(x) = 0.8 \frac{\exp(x)}{1 + \exp(x)} + 0.1$ is the elementwise truncated logistic link function, $\epsilon^{(i)} \sim \gN(0, 1) \enspace \forall i = 1, \ldots, 65$, $\nu^{(i)} \sim \gU[-1, 1] \enspace \forall i = 1, \ldots, 20$, and $U^{(1:20)}$ indicates taking the first $20$ components of the vector $U$. The division operation and the max function are executed elementwise. 

\textbf{Setting 2:} $Z = U + \epsilon$, $W = g\left(U^{(1:20)} / \max\left(U^{(1:20)}\right)\right) + \nu$ where 
$\epsilon^{(i)} \sim \gN(0, 1) \enspace \forall i$, $\nu^{(i)} \sim \gU[-1, 1] \enspace \forall i$.

\textbf{Setting 3:} $W = U + \epsilon$, $Z = g\left(U^{(20:40)} / \max\left(U^{(20:40)}\right)\right) + \nu$ where 
$\epsilon^{(i)} \sim \gN(0, 1) \enspace \forall i$, $\nu^{(i)} \sim \gU[-1, 1] \enspace \forall i$.

\textbf{Setting 4:} $Z = U + \epsilon$, $W = g\left(U^{(20:40)} / \max\left(U^{(20:40)}\right)\right) + \nu$ where 
$\epsilon^{(i)} \sim \gN(0, 1) \enspace \forall i$, $\nu^{(i)} \sim \gU[-1, 1] \enspace \forall i$.

\textbf{Setting 5:} $W = U + \epsilon$, $Z = g\left(U^{(40:60)} / \max\left(U^{(40:60)}\right)\right) + \nu$ where 
$\epsilon^{(i)} \sim \gN(0, 1) \enspace \forall i$, $\nu^{(i)} \sim \gU[-1, 1] \enspace \forall i$.

\textbf{Setting 6:} $Z = U + \epsilon$, $W = g\left(U^{(40:60)} / \max\left(U^{(40:60)}\right)\right) + \nu$ where 
$\epsilon^{(i)} \sim \gN(0, 1) \enspace \forall i$, $\nu^{(i)} \sim \gU[-1, 1] \enspace \forall i$.

\textbf{Setting 7:} $W = U + \epsilon$, $Z^{(1:20)} = g\left(U^{(1:20)} / \max\left(U^{(1:20)}\right)\right) + \nu$, $Z^{(21:65)} \sim \gN(0, I)$, where 
$\epsilon^{(i)} \sim \gN(0, 1) \enspace \forall i$, $\nu^{(i)} \sim \gU[-1, 1] \enspace \forall i$.

\textbf{Setting 8:} $Z = U + \epsilon$, $W^{(1:20)} = g\left(U^{(1:20)} / \max\left(U^{(1:20)}\right)\right) + \nu$, $W^{(21:65)} \sim \gN(0, I)$, where 
$\epsilon^{(i)} \sim \gN(0, 1) \enspace \forall i$, $\nu^{(i)} \sim \gU[-1, 1] \enspace \forall i$.

\textbf{Setting 9:} $W = U + \epsilon$, $Z^{(46:65)} = g\left(U^{(46:65)} / \max\left(U^{(46:65)}\right)\right) + \nu$, $Z^{(1:45)} \sim \gN(0, I)$, where 
$\epsilon^{(i)} \sim \gN(0, 1) \enspace \forall i$, $\nu^{(i)} \sim \gU[-1, 1] \enspace \forall i$.

\textbf{Setting 10:} $Z = U + \epsilon$, $W^{(46:65)} = g\left(U^{(46:65)} / \max\left(U^{(46:65)}\right)\right) + \nu$, $W_{1:45} \sim \gN(0, I)$, where 
$\epsilon^{(i)} \sim \gN(0, 1) \enspace \forall i$, $\nu^{(i)} \sim \gU[-1, 1] \enspace \forall i$.

\textbf{Setting 11:} $W = U + \epsilon$, $Z^{(21:39)} = g\left(U^{(21:39)} / \max\left(U^{(21:39)}\right)\right) + \nu$, $Z^{(1:20)} \sim \gN(0, I)$, $Z^{(40:65)} \sim \gN(0, I)$, where 
$\epsilon^{(i)} \sim \gN(0, 1) \enspace \forall i$, $\nu^{(i)} \sim \gU[-1, 1] \enspace \forall i$.

\textbf{Setting 12:}  $Z = U + \epsilon$, $W^{(21:39)} = g\left(U^{(21:39)} / \max\left(U^{(21:39)}\right)\right) + \nu$, $W^{(1:20)} \sim \gN(0, I)$, $W^{(40:65)} \sim \gN(0, I)$, where 
$\epsilon^{(i)} \sim \gN(0, 1) \enspace \forall i$, $\nu^{(i)} \sim \gU[-1, 1] \enspace \forall i$.

Settings 1, 3, and 5 feature an incomplete link between the treatment proxy $Z$ and the confounding variable $U$, combined with a nonlinearity function. These setups are likely to violate Assumption (\ref{assum:ExistenceCompletenessAssumption}), which is crucial for the identifiability of KPV and KNC, as well as for the existence of the treatment bridge function in our method. Conversely, Settings 2, 4, and 6 feature an incomplete link between the outcome proxy $W$ and the confounding variable $U$, along with a nonlinearity function. These setups are likely to violate Assumption (\ref{assum:AlternativeProxyAssumptionCompleteness1}), which is essential for the identifiability of our method and plays a role in establishing the existence of the outcome bridge function for KPV/KNC algorithms \citep{xu2021deep}.

Figures (\ref{fig:JobCorpsComparison1})-(\ref{fig:JobCorpsComparison6}) illustrate the simulation results, averaged over five runs with different two-stage splits, for Settings 1-6. We also compare our method against the Kernel-ATE algorithm \citep{RahulKernelCausalFunctions}, which uses $U$ directly and serves as an oracle benchmark. For all algorithms, we employed Gaussian kernels with median length scale heuristics for all input variables. Each dimension of the input variables $Z$, $W$, and $A$ (as well as $U$) is normalized by subtracting the mean and dividing by the standard deviation before being fed into our method, KPV, KNC, and Kernel-ATE. In settings where there is an incomplete link between $Z$ and $U$, our method outperforms KPV and KNC, yielding results closer to the oracle method. In contrast, in settings where there is an incomplete link between $W$ and $U$, KPV and/or KNC outperform our method. Consistent with our hypothesis in S.M. (Sec. \ref{sec:ComparisonOfTreatmentVsOutcomeBridge_Appendix}), our method demonstrates better robustness when the existence of the bridge function is violated rather than when causal function identifiability is compromised, as seen in Settings 1, 3, and 5. Conversely, outcome bridge-based methods show better robustness when the existence of the outcome bridge function is violated rather than when causal function identifiability is compromised. These experimental results highlight the complementary strengths of treatment and outcome bridge-based methods under different assumptions.

We also investigate scenarios where the confounding variable $U$ has a noisy link with one of the proxies, as detailed in Settings 7-12. Specifically, we generate a proxy variable of the same dimension as the confounding variable, but some of its dimensions consist entirely of noise. In Settings 7, 9, and 11, the proxy variable $Z$ has a highly noisy link to the confounder $U$, in addition to nonlinearity, which is likely to violate Assumption (\ref{assum:ExistenceCompletenessAssumption}). Similarly, in Settings 8, 10, and 12, the proxy variable $W$ has a highly noisy link to the confounder $U$, which is likely to violate Assumption (\ref{assum:AlternativeProxyAssumptionCompleteness1}). Figures (\ref{fig:JobCorpsComparison7})-(\ref{fig:JobCorpsComparison12}) present the estimation results for each setting, comparing treatment and outcome bridge-based methods against the oracle Kernel-ATE method. Our findings indicate that our method performs better in settings where $Z$ has a noisy link with $U$, producing results closer to the oracle method. Conversely, outcome bridge-based algorithms perform better in settings where $W$ has a noisy link with $U$, yielding estimates closer to the oracle method. In these noisy link experiments, we constructed the input kernel as a product of separate kernels for noisy and non-noisy dimensions. For instance, in Setting 7, we used the kernel $k_\gZ(z_i, z_j) = k_\gZ^{(1)}(z_i^{(1:20)}, z_j^{(1:20)}) \times k_\gZ^{(2)}(z_i^{(21:65)}, z_j^{(21:65)})$ where $k_\gZ^{(1)}$ and $k_\gZ^{(2)}$ are Gaussian kernels with length scales determined using the median heuristic. Here, $z_i^{(1:20)}$ denotes the first $20$ dimension of the $i$-th training variable $z_i$.
 
\begin{figure*}[ht!]
\centering
\subfloat[]
{\includegraphics[trim = {0cm 0cm 0cm 0.0cm},clip,width=0.225\textwidth]{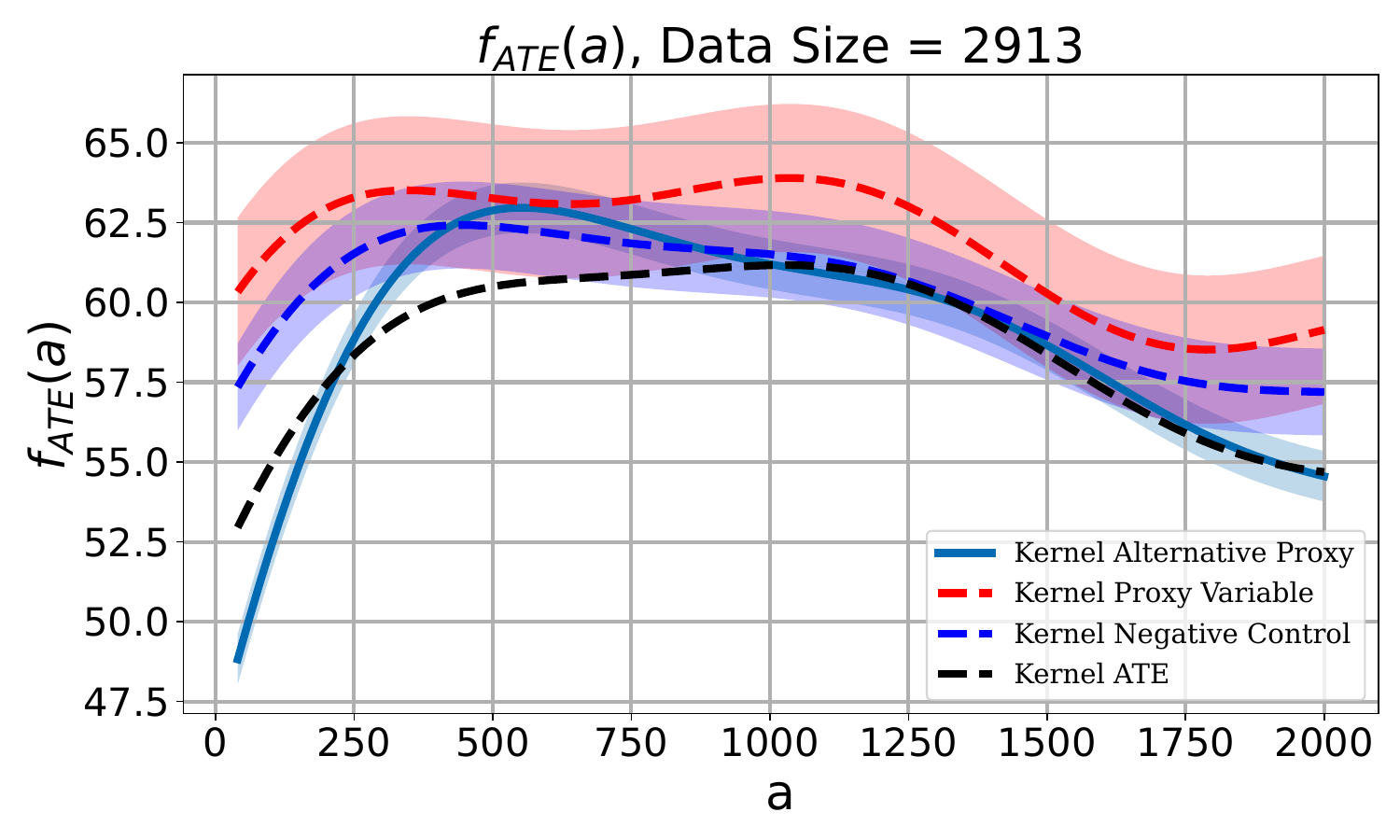}
\label{fig:JobCorpsComparison1}}
\subfloat[]
{\includegraphics[trim = {0cm 0cm 0cm 0.0cm},clip,width=0.225\textwidth]{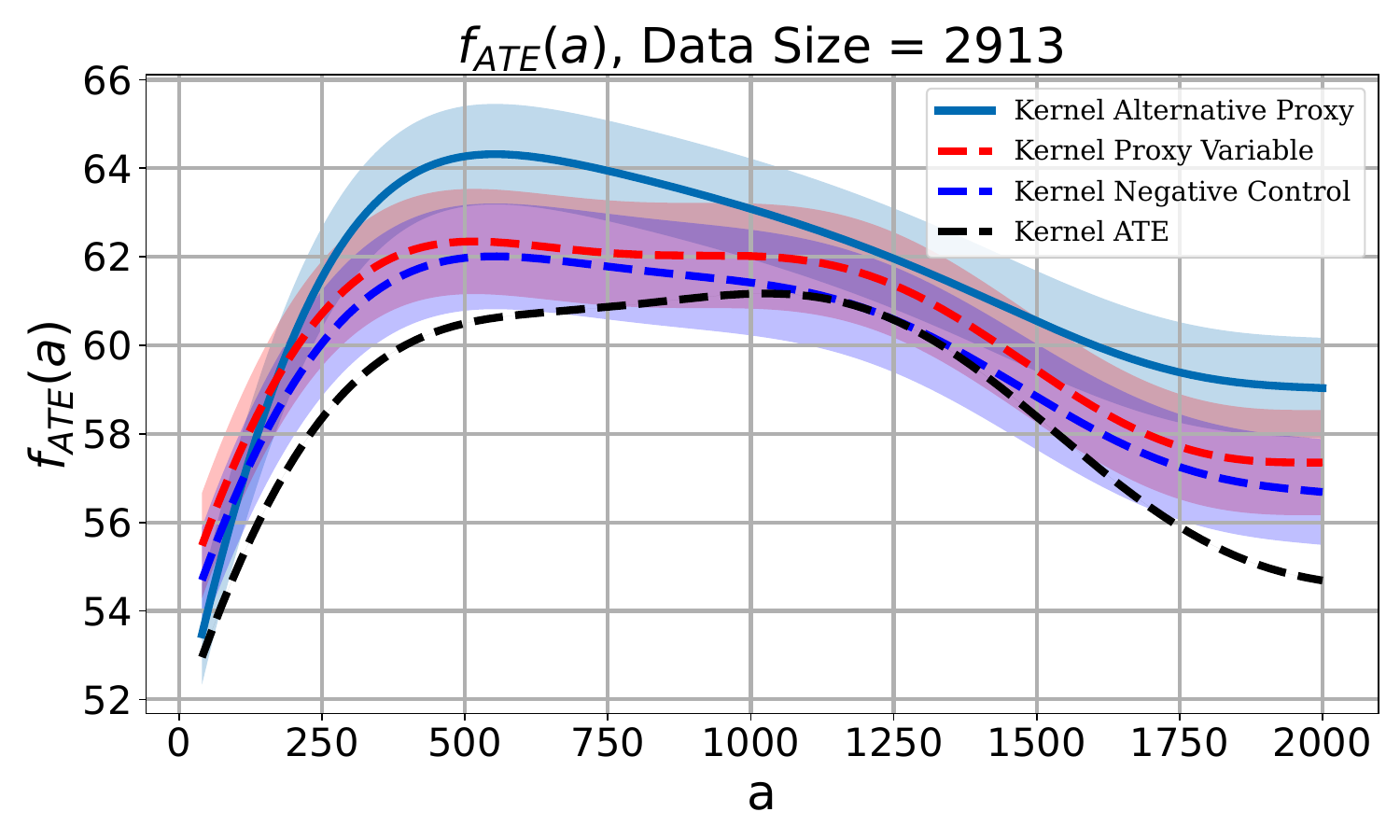}
\label{fig:JobCorpsComparison2}}
\subfloat[]
{\includegraphics[trim = {0cm 0cm 0cm 0.0cm},clip,width=0.225\textwidth]{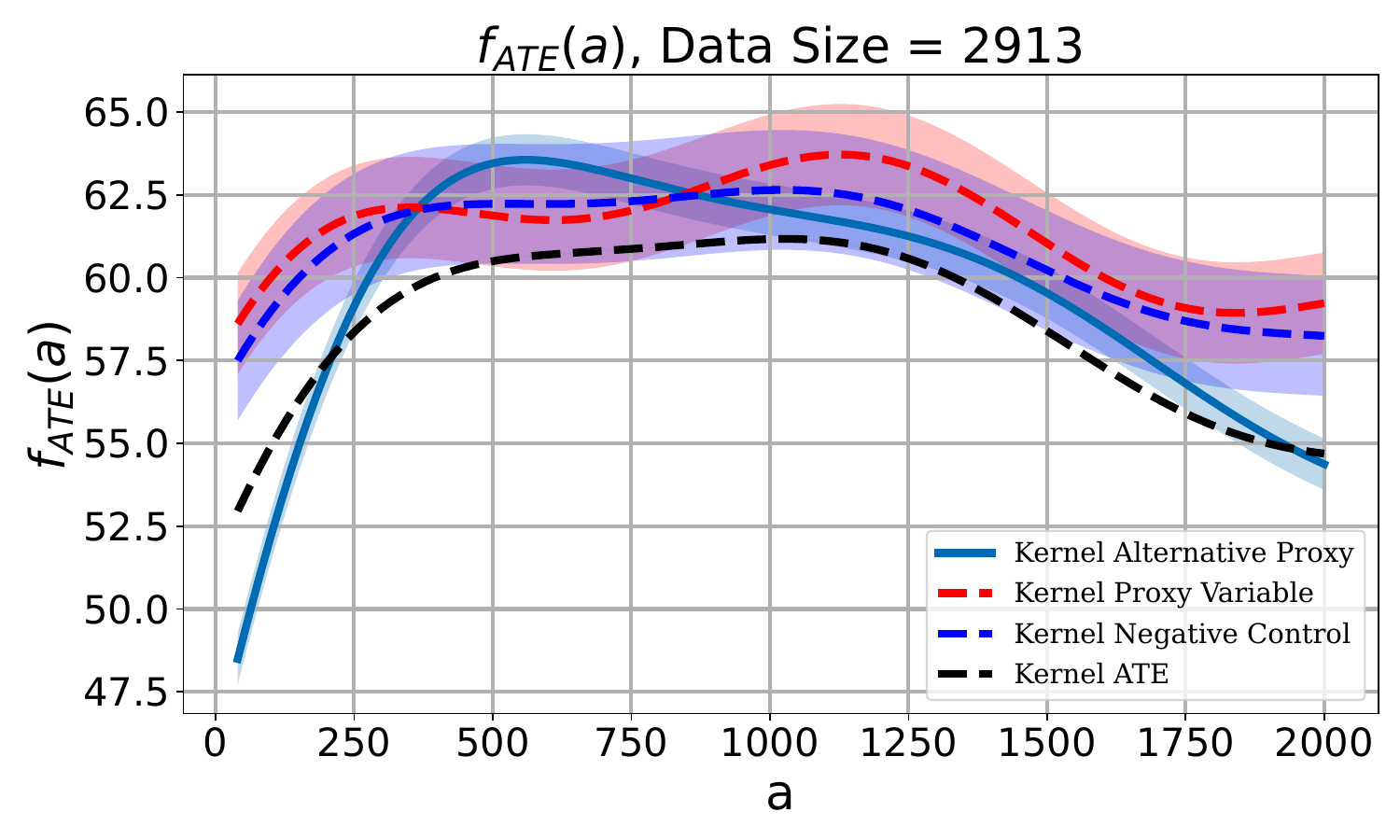}
\label{fig:JobCorpsComparison3}}
\subfloat[]
{\includegraphics[trim = {0cm 0cm 0cm 0.0cm},clip,width=0.225\textwidth]{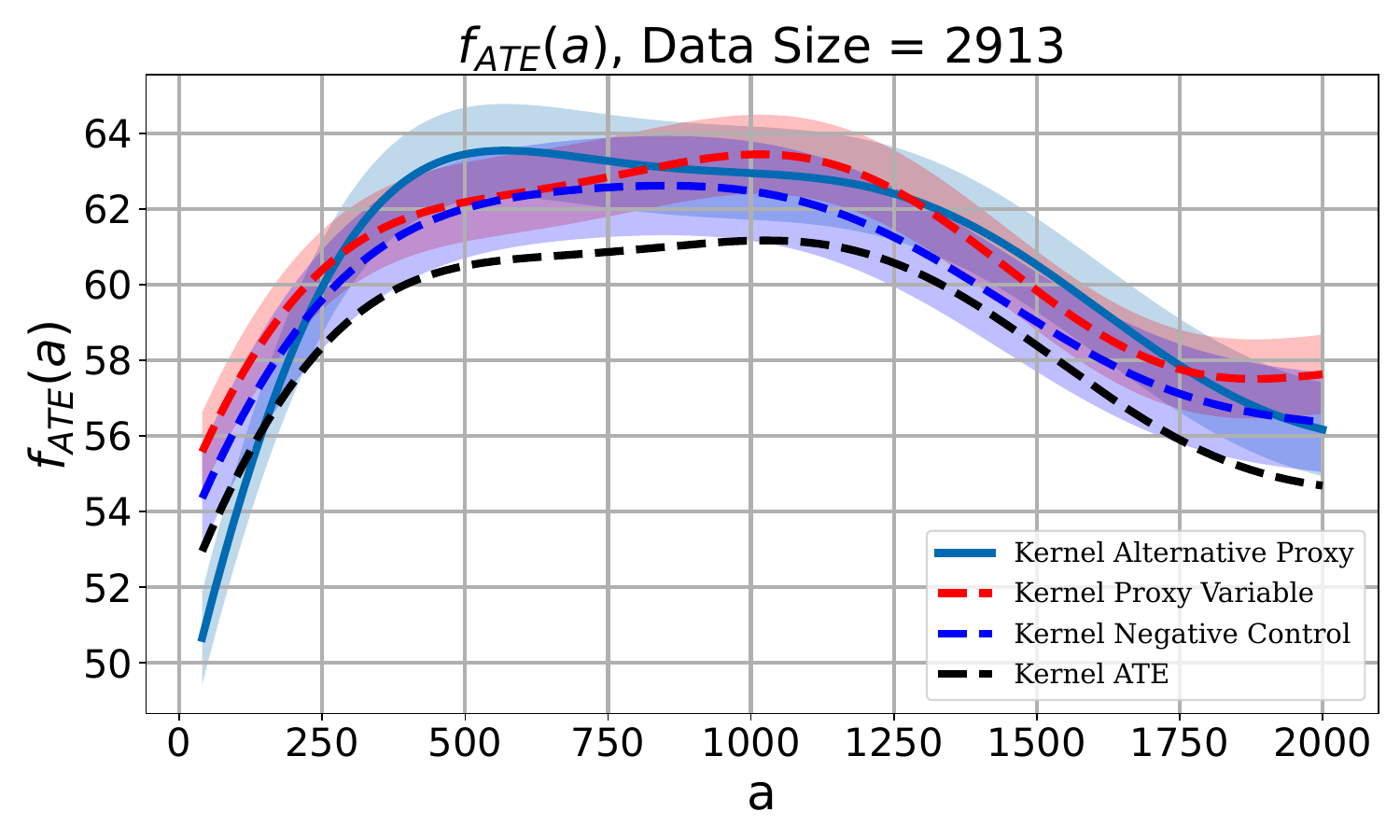}
\label{fig:JobCorpsComparison4}}

\centering
\subfloat[]
{\includegraphics[trim = {0cm 0cm 0cm 0.0cm},clip,width=0.225\textwidth]{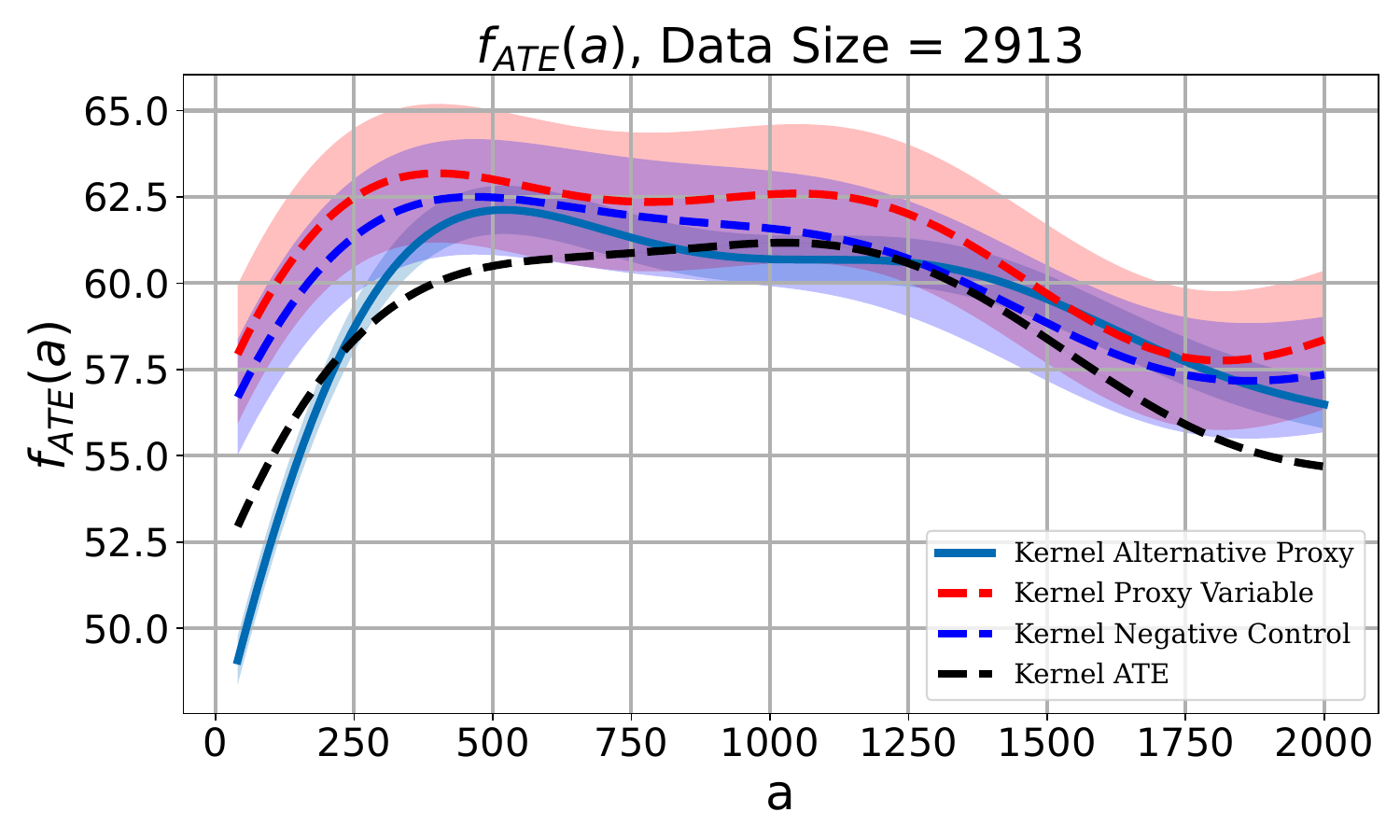}
\label{fig:JobCorpsComparison5}}
\subfloat[]
{\includegraphics[trim = {0cm 0cm 0cm 0.0cm},clip,width=0.225\textwidth]{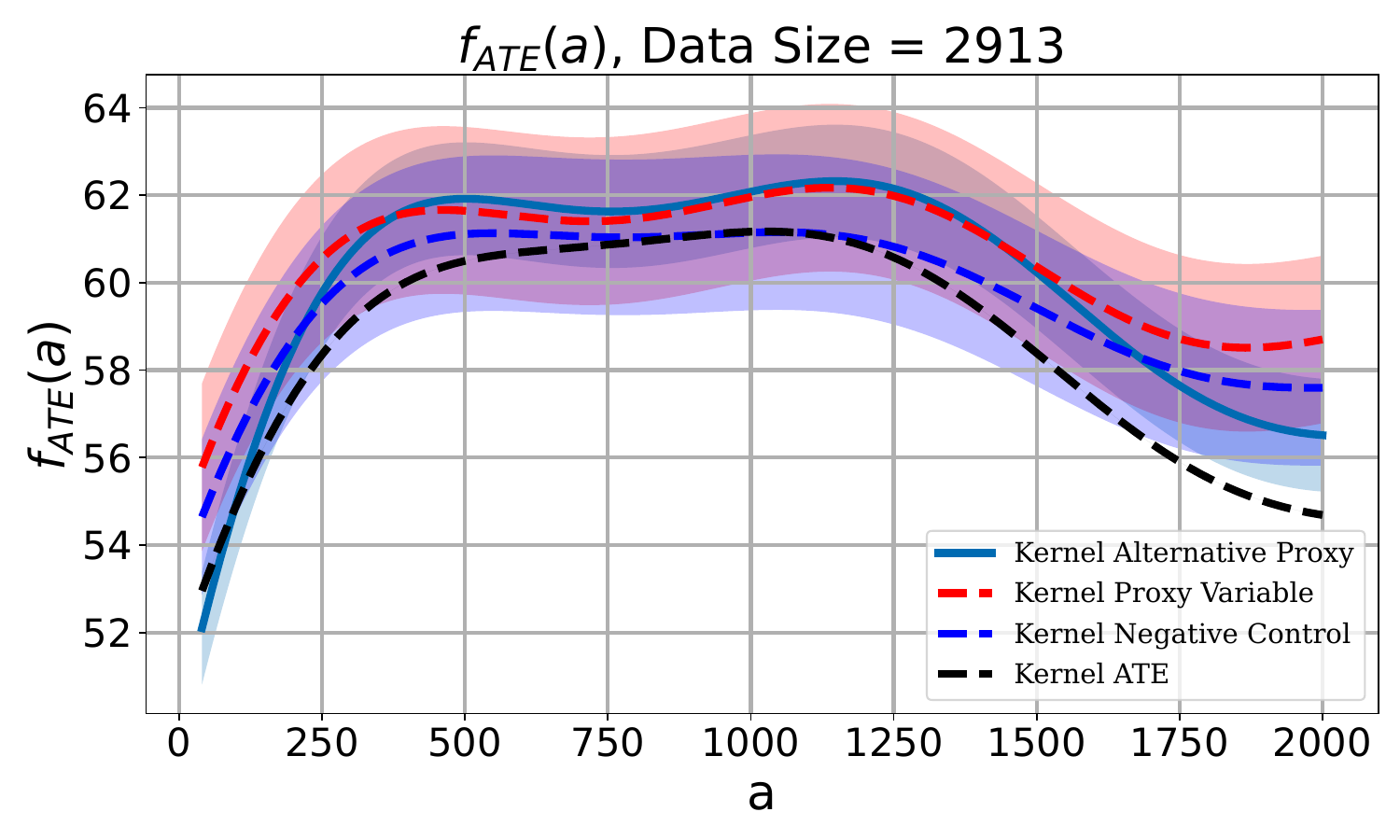}
\label{fig:JobCorpsComparison6}}
\subfloat[]
{\includegraphics[trim = {0cm 0cm 0cm 0.0cm},clip,width=0.225\textwidth]{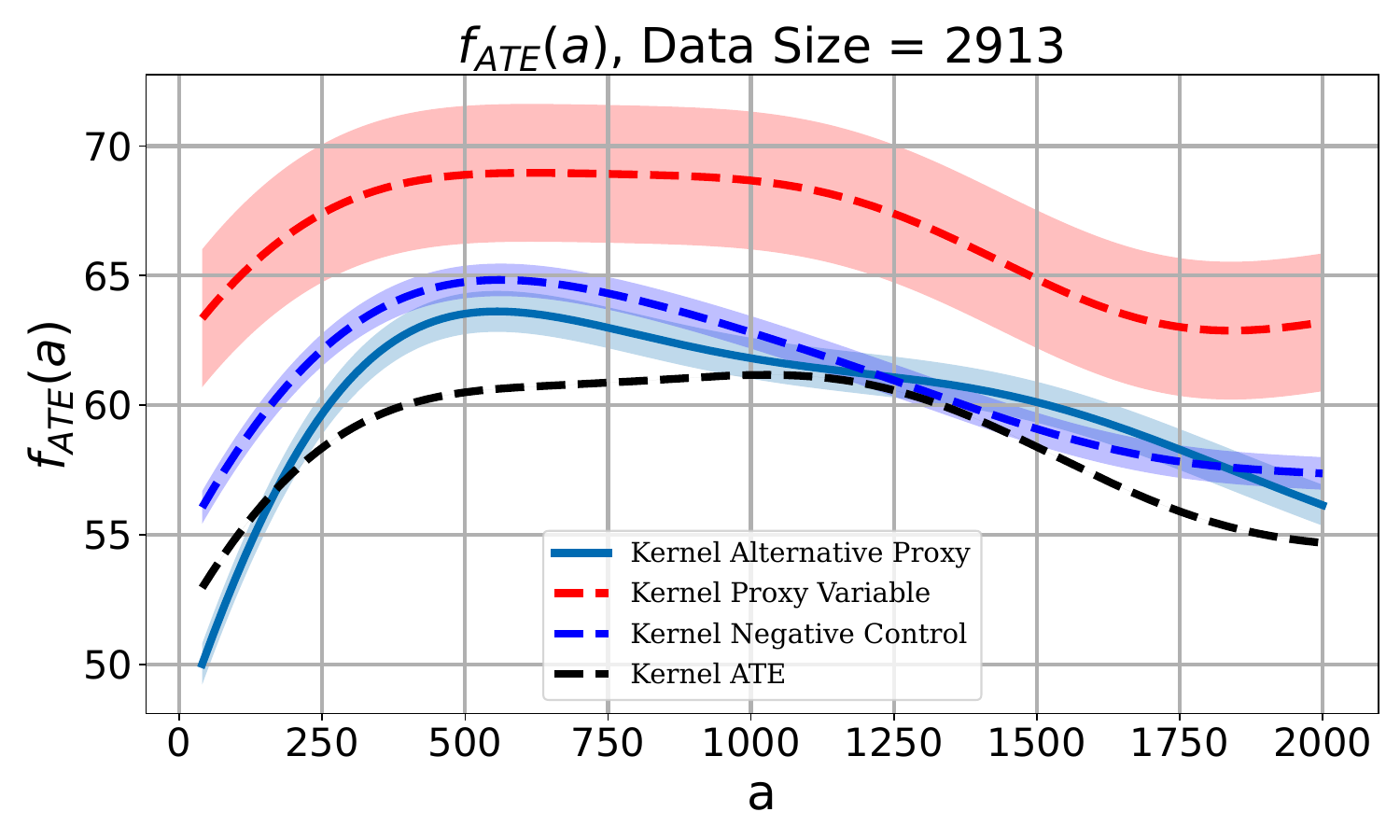}
\label{fig:JobCorpsComparison7}}
\subfloat[]
{\includegraphics[trim = {0cm 0cm 0cm 0.0cm},clip,width=0.225\textwidth]{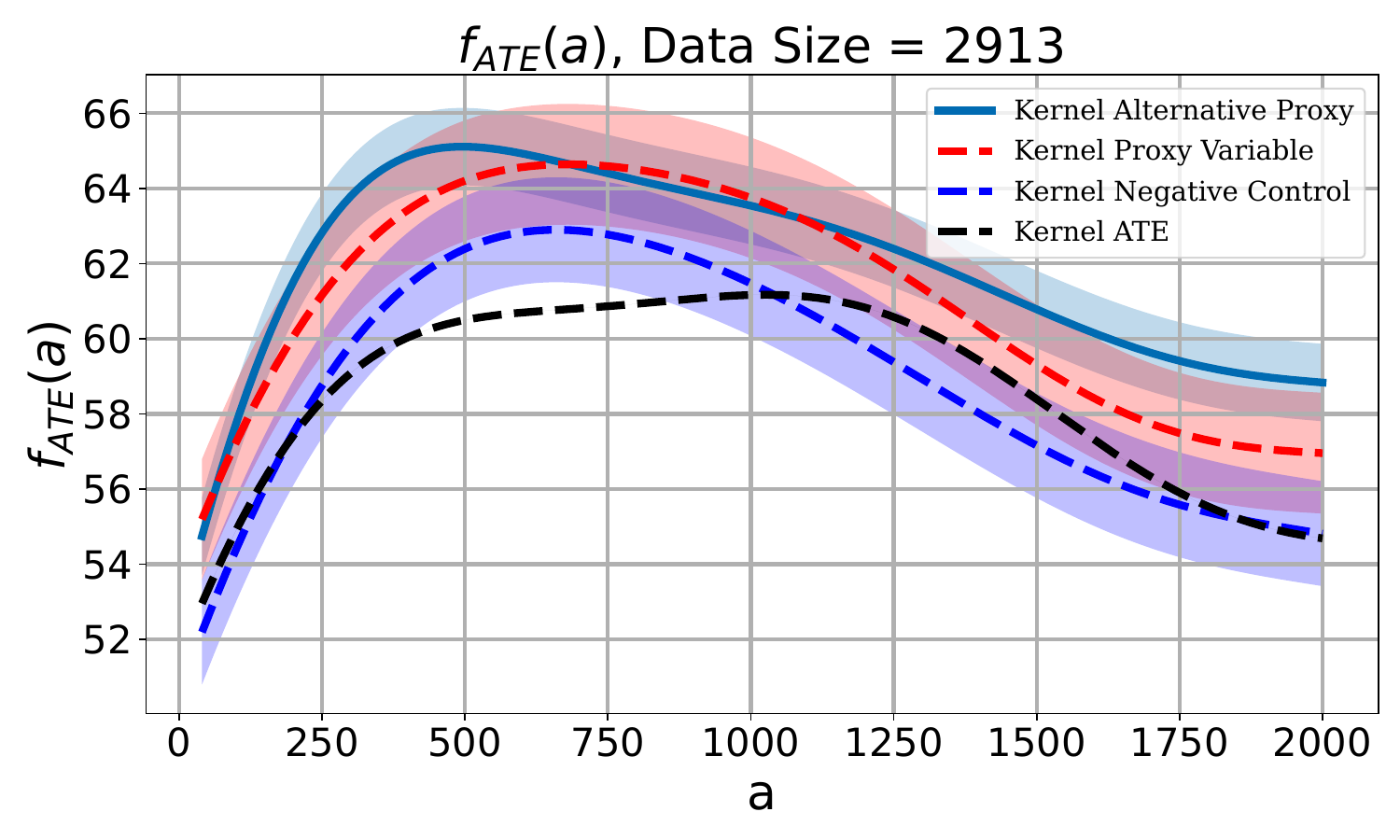}
\label{fig:JobCorpsComparison8}}

\centering
\hspace{0.25cm}
\subfloat[]
{\includegraphics[trim = {0cm 0cm 0cm 0.0cm},clip,width=0.225\textwidth]{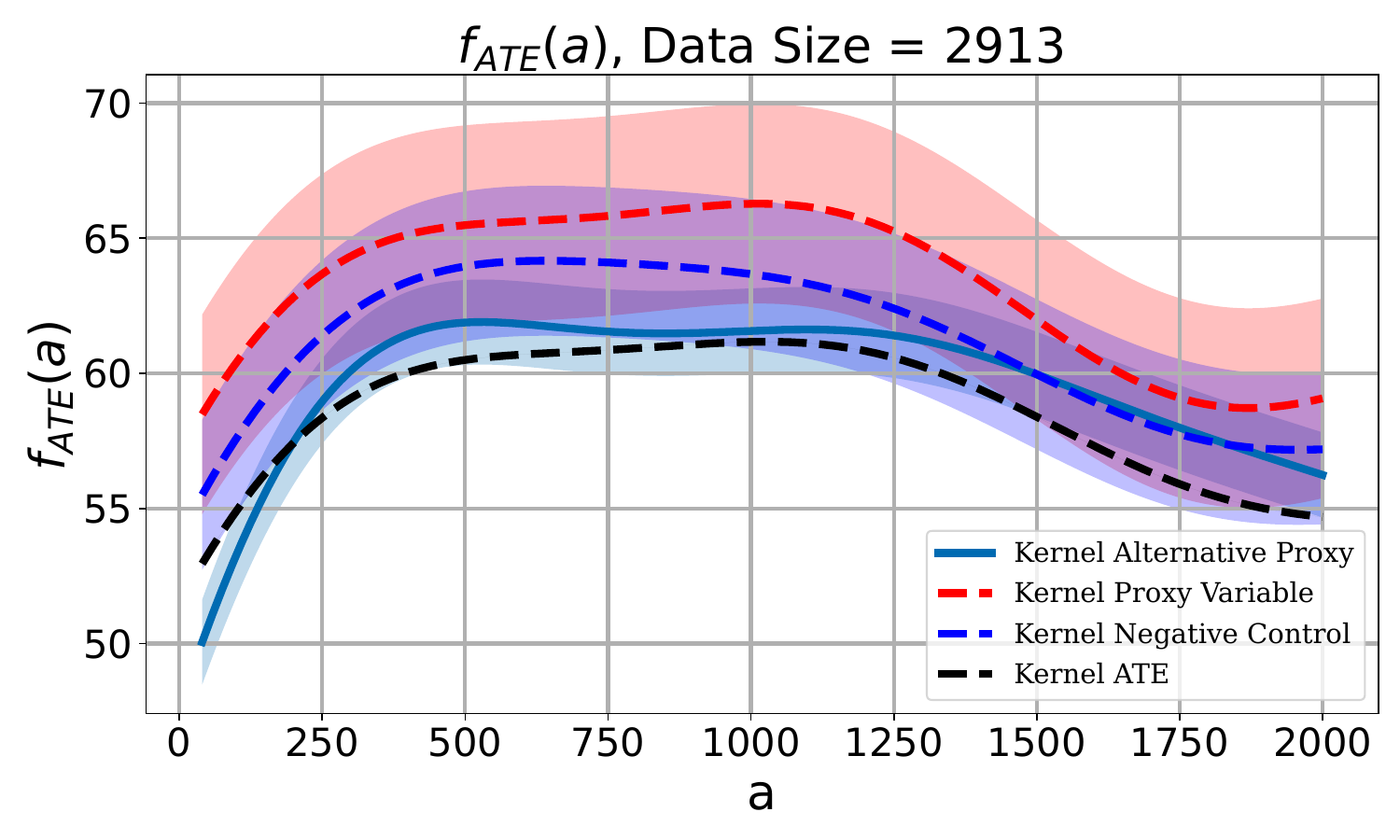}
\label{fig:JobCorpsComparison9}}
\subfloat[]
{\includegraphics[trim = {0cm 0cm 0cm 0.0cm},clip,width=0.225\textwidth]{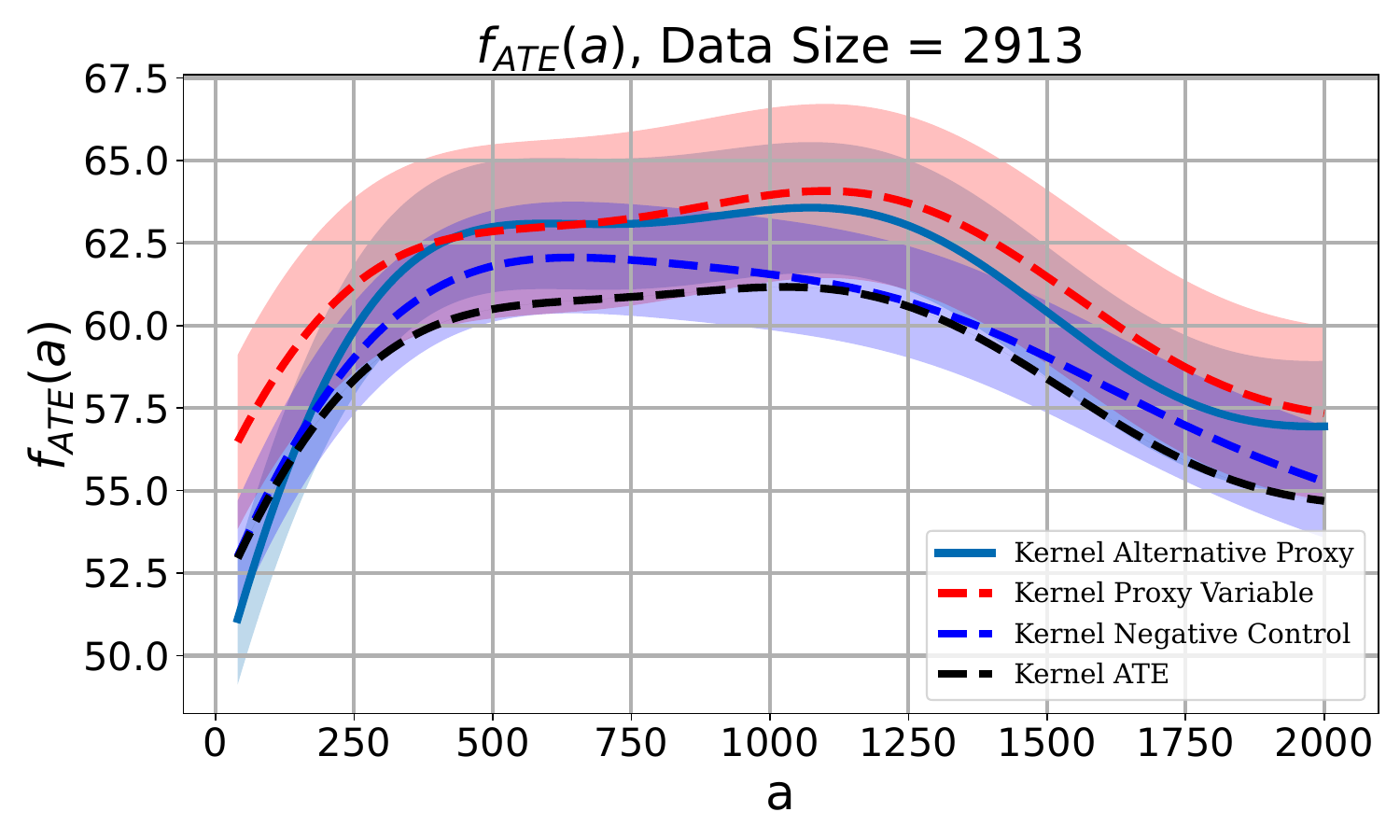}
\label{fig:JobCorpsComparison10}}
\subfloat[]
{\includegraphics[trim = {0cm 0cm 0cm 0.0cm},clip,width=0.225\textwidth]{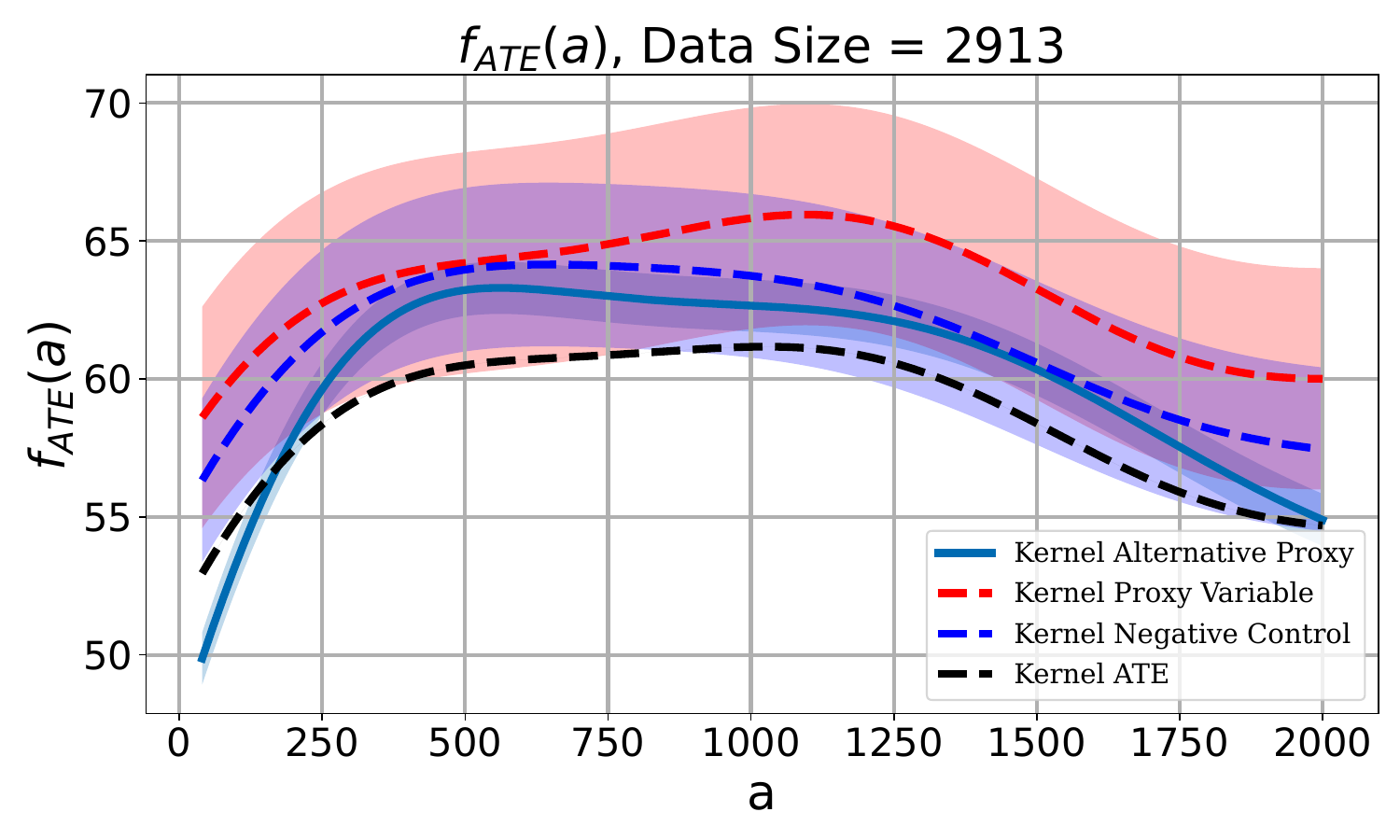}
\label{fig:JobCorpsComparison11}}
\subfloat[]
{\includegraphics[trim = {0cm 0cm 0cm 0.0cm},clip,width=0.225\textwidth]{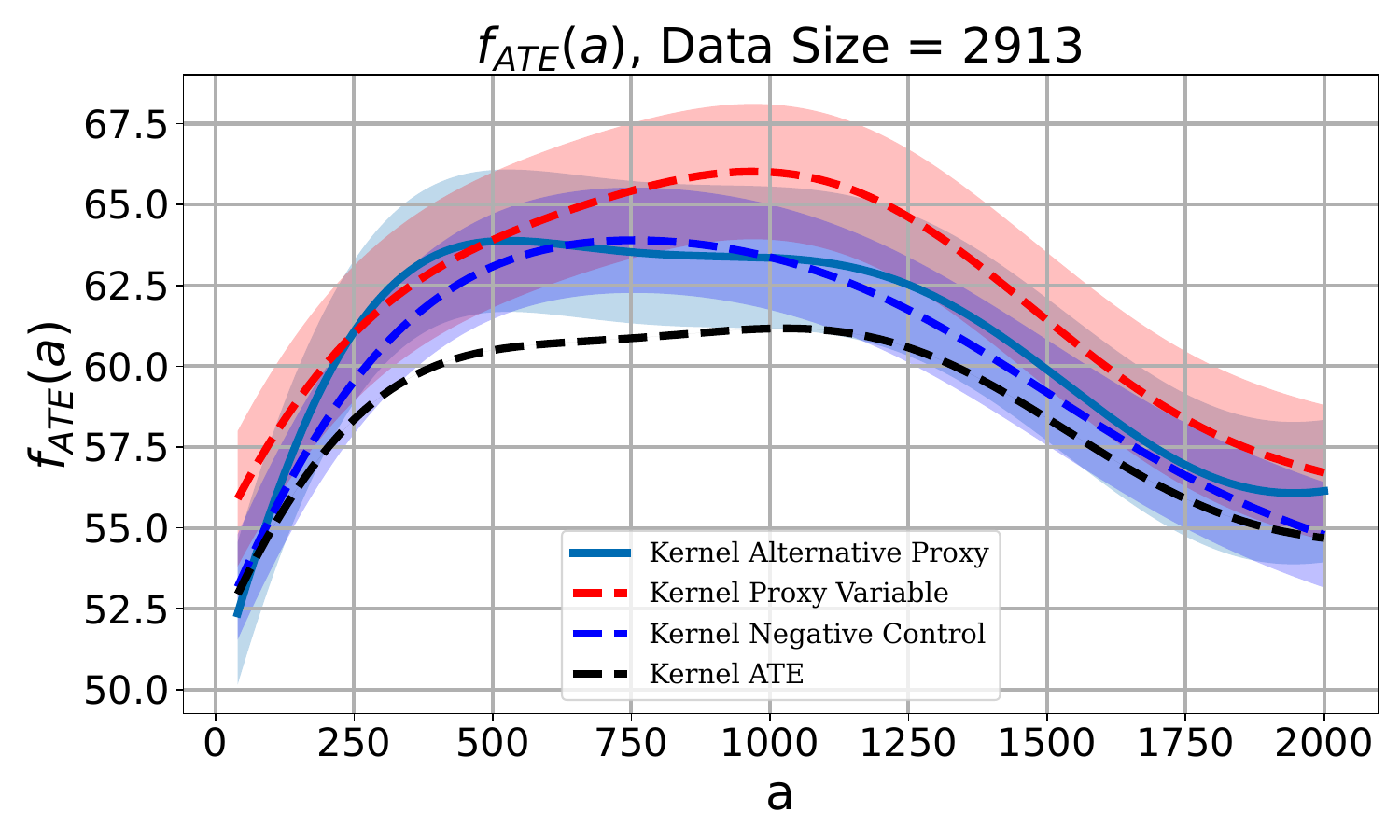}
\label{fig:JobCorpsComparison12}}
\newline\caption{Dose-response estimation curves for the Job Corps experimental settings that are introduced in S.M. (Sec. \ref{sec:Appendix_JobCorpsExperiments}). Panels (a)-(l) illustrate the estimation curves for our approach, KPV, KNC, and the oracle method Kernel-ATE across Settings 1-12, respectively.}
\label{fig:JobCorpsComparison}
\end{figure*}

To illustrate the conditional dose-response estimation capability of our proposed method in high dimensional settings, we also conduct experiments using the Job Corps dataset in Settings 1, 2, 5, and 6. Figures (\ref{fig:JobCorpsATTComparison1}) and (\ref{fig:JobCorpsATTComparison2}) show the ATT estimation results of our algorithm in comparison with the Kernel Negative Control method and the Kernel-ATT algorithm for $a' = 500$ and $a' = 1000$ in Setting 1, respectively. Figures (\ref{fig:JobCorpsATTComparison3}) and (\ref{fig:JobCorpsATTComparison4}) present the ATT estimation results for Setting 2. Figures (\ref{fig:JobCorpsATTComparison5}) and (\ref{fig:JobCorpsATTComparison6}) show the results for Setting 5, while Figures (\ref{fig:JobCorpsATTComparison7}) and (\ref{fig:JobCorpsATTComparison8}) provide results for Setting 6. 

\begin{figure*}[ht!]
\centering
\subfloat[]
{\includegraphics[trim = {0cm 0cm 0cm 0.0cm},clip,width=0.225\textwidth]{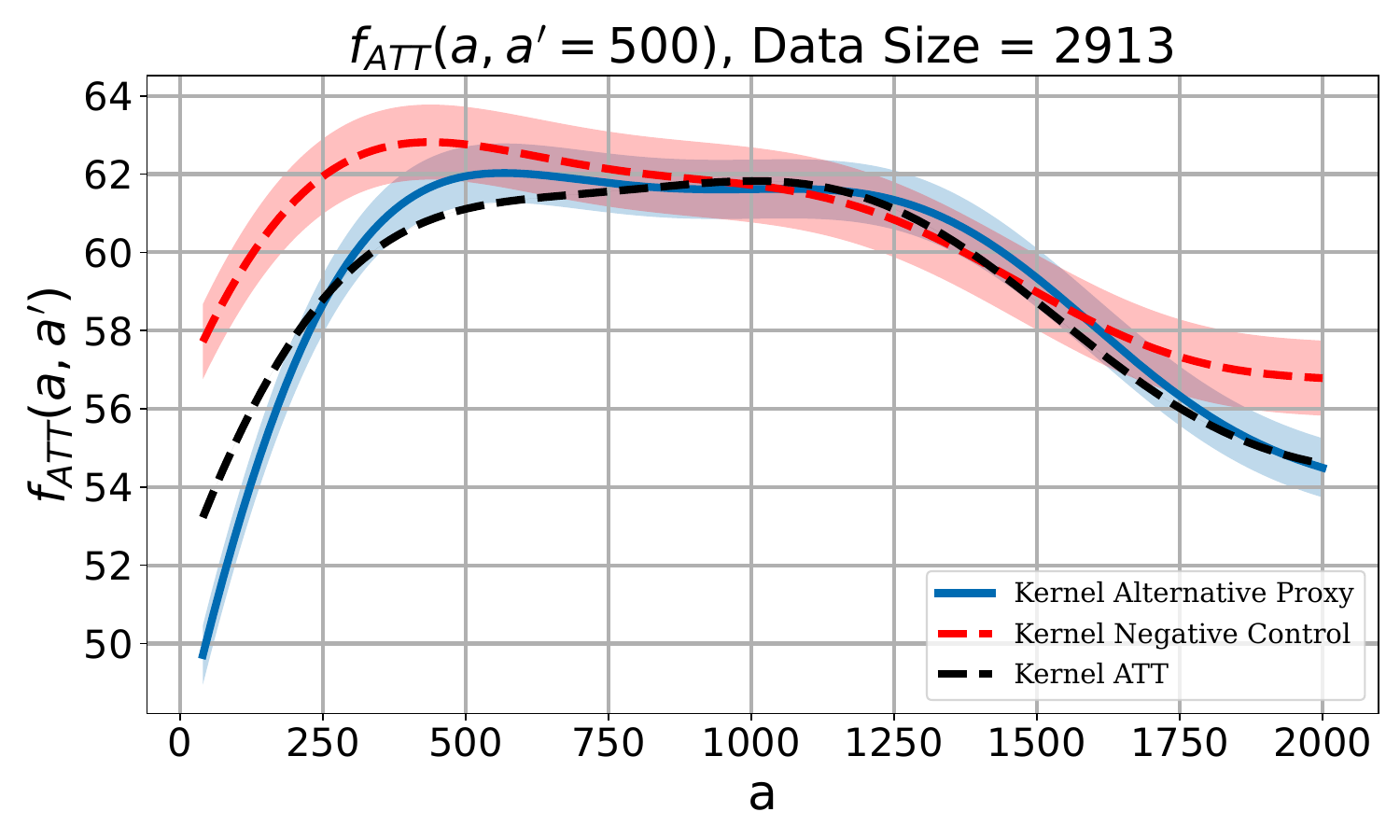}
\label{fig:JobCorpsATTComparison1}}
\subfloat[]
{\includegraphics[trim = {0cm 0cm 0cm 0.0cm},clip,width=0.225\textwidth]{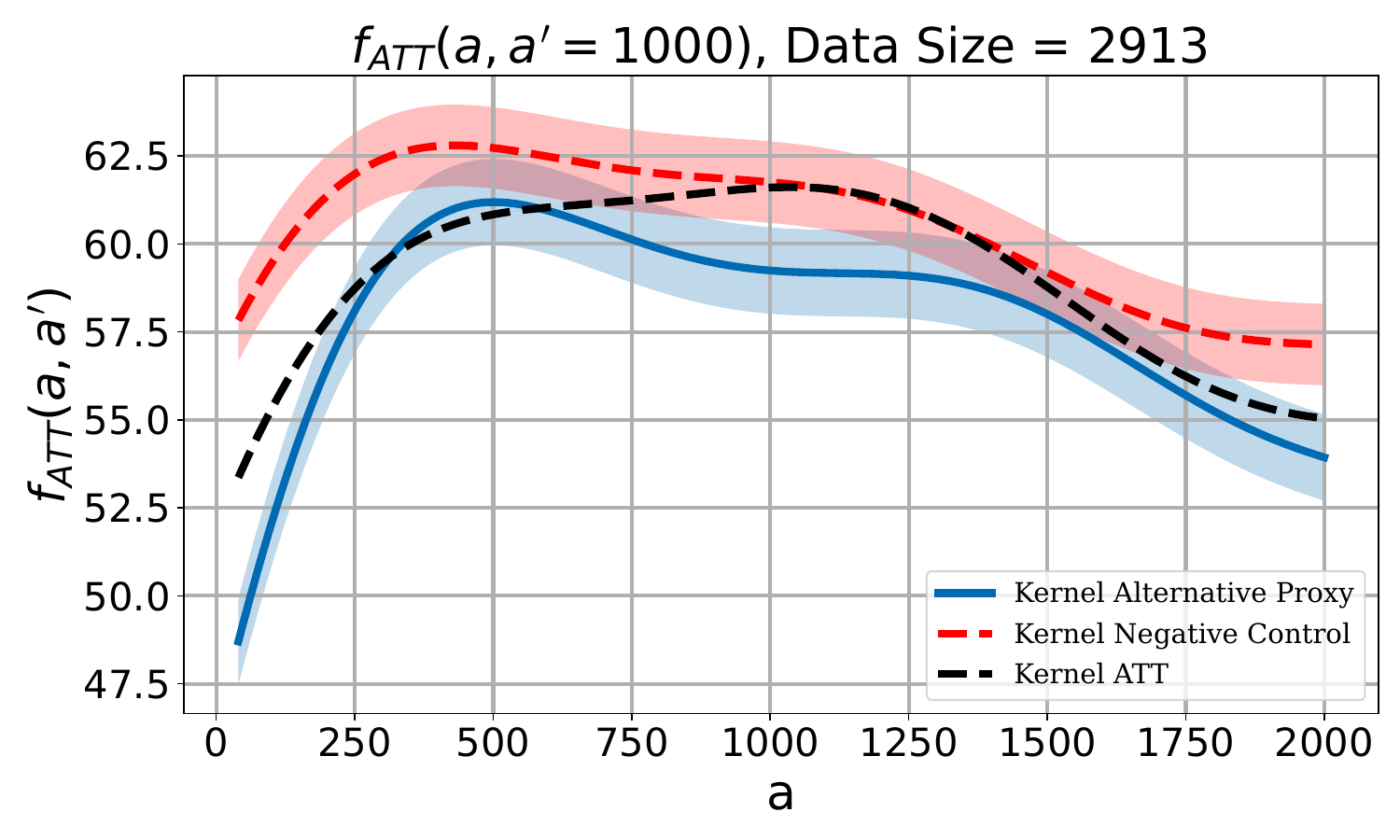}
\label{fig:JobCorpsATTComparison2}}
\subfloat[]
{\includegraphics[trim = {0cm 0cm 0cm 0.0cm},clip,width=0.225\textwidth]{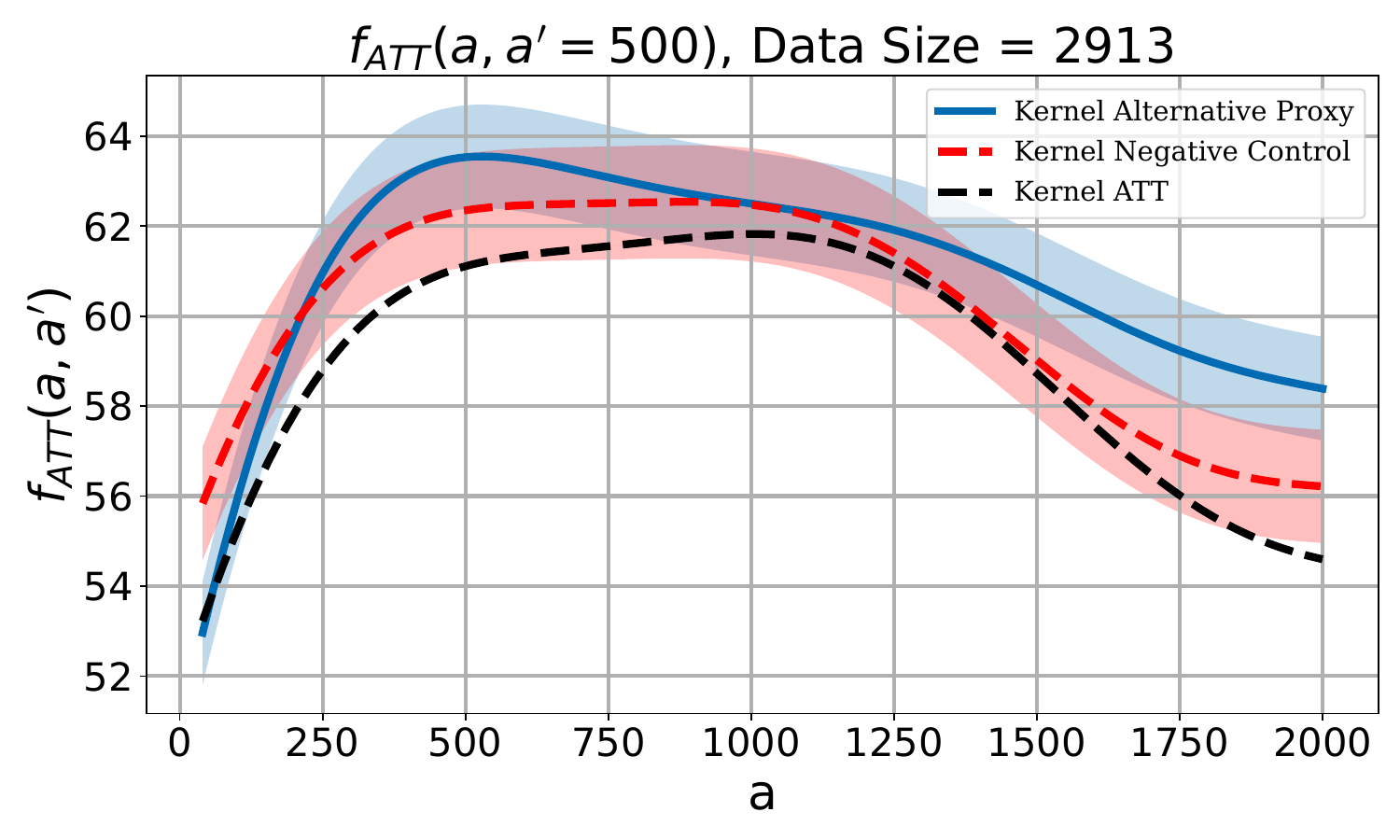}
\label{fig:JobCorpsATTComparison3}}
\subfloat[]
{\includegraphics[trim = {0cm 0cm 0cm 0.0cm},clip,width=0.225\textwidth]{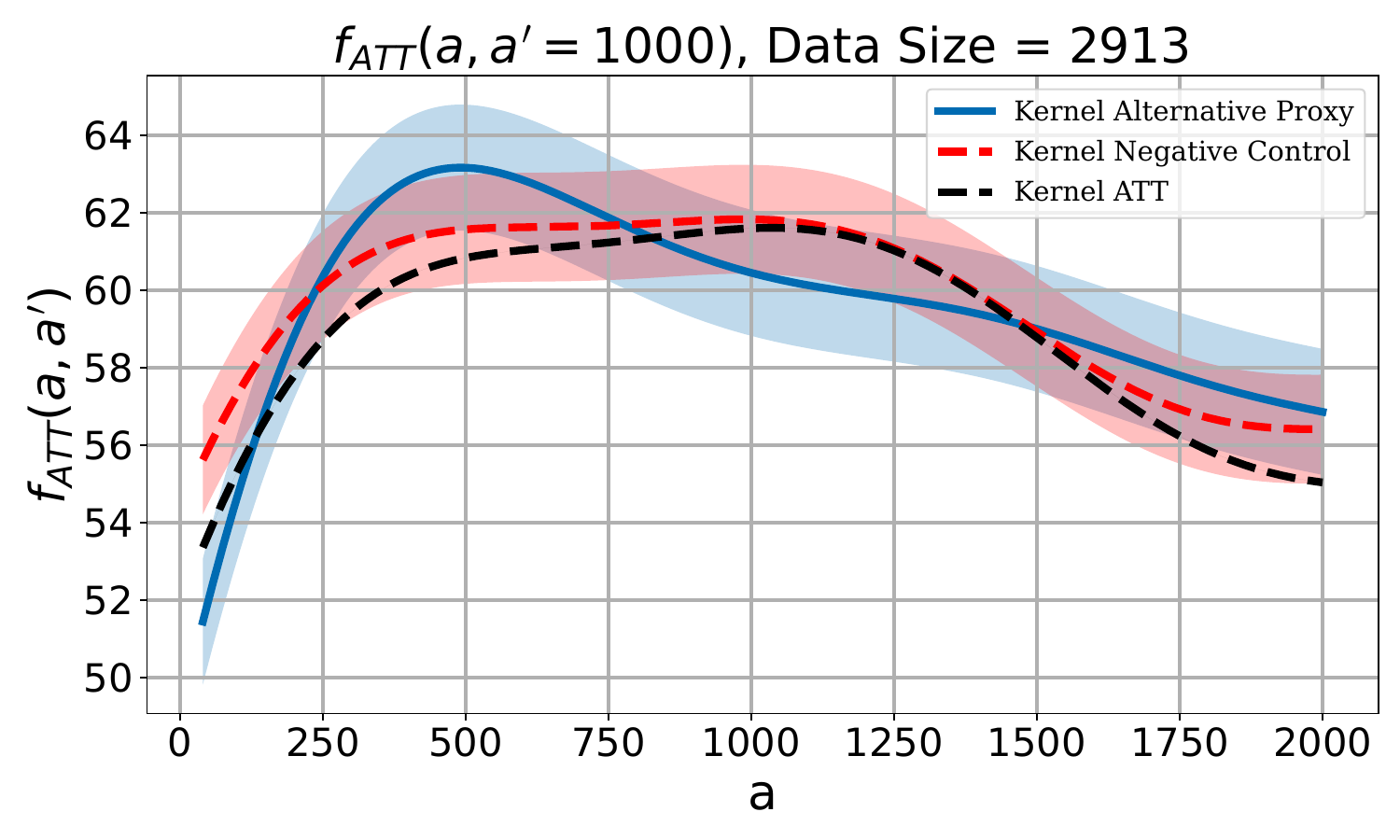}
\label{fig:JobCorpsATTComparison4}}

\centering
\hspace{0.2cm}
\subfloat[]
{\includegraphics[trim = {0cm 0cm 0cm 0.0cm},clip,width=0.225\textwidth]{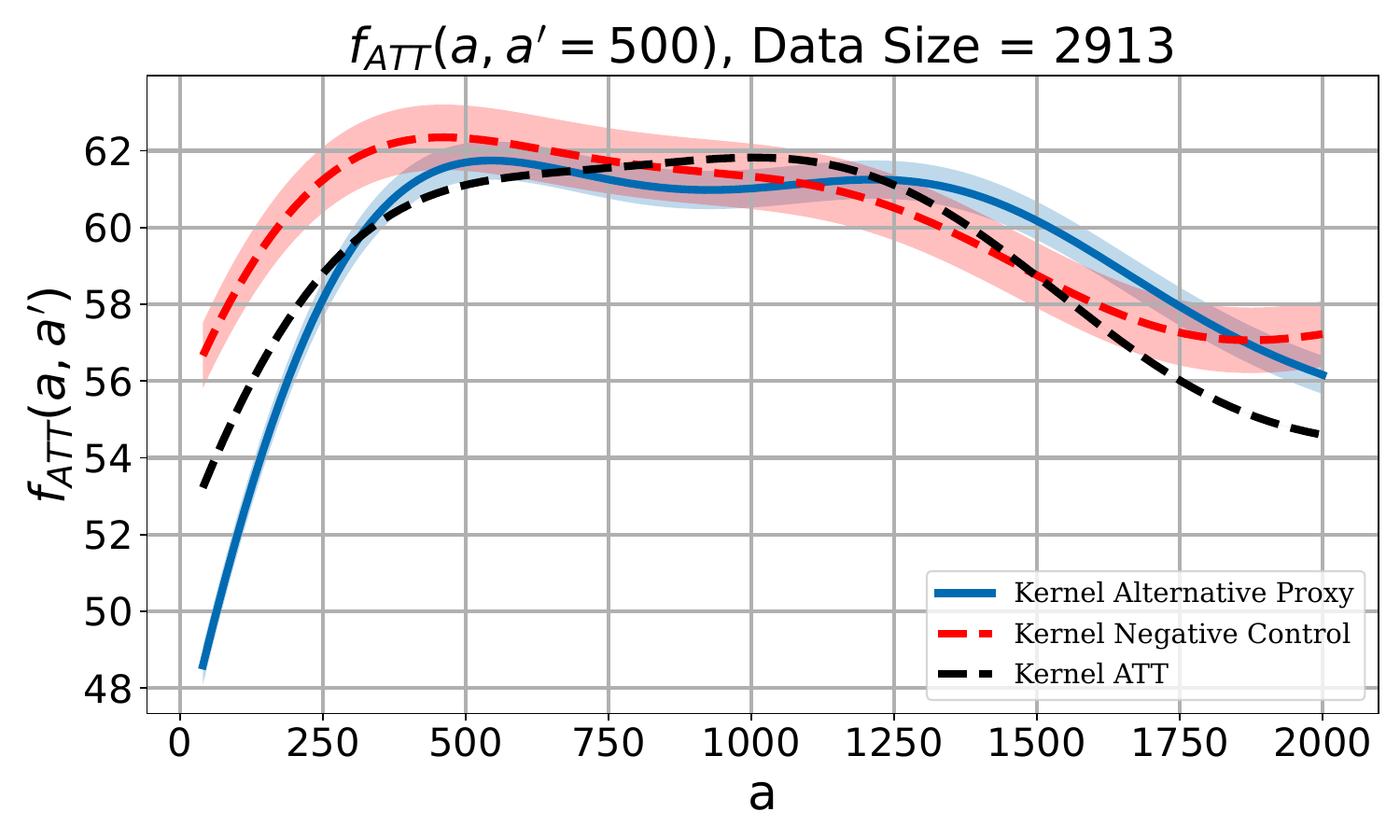}
\label{fig:JobCorpsATTComparison5}}
\subfloat[]
{\includegraphics[trim = {0cm 0cm 0cm 0.0cm},clip,width=0.225\textwidth]{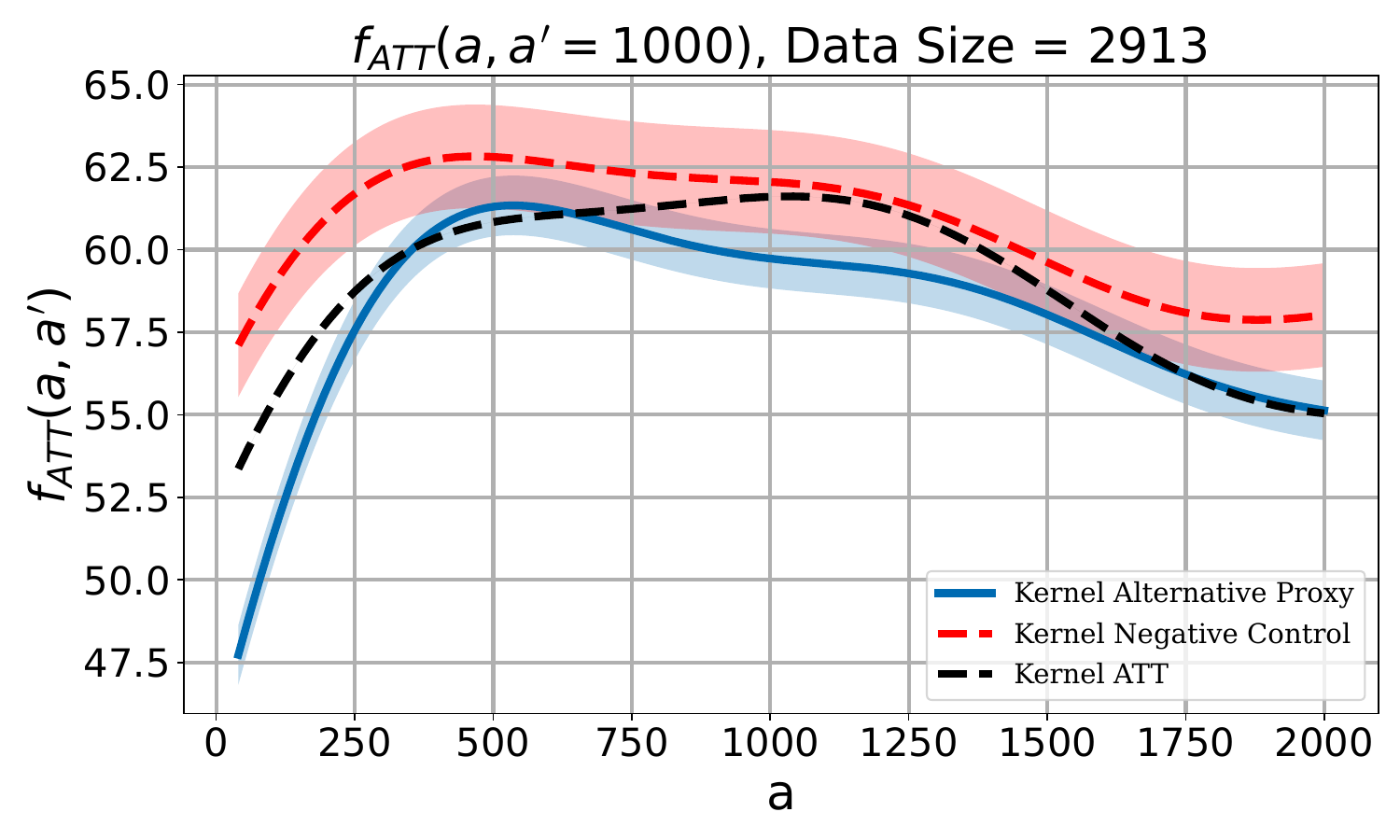}
\label{fig:JobCorpsATTComparison6}}
\subfloat[]
{\includegraphics[trim = {0cm 0cm 0cm 0.0cm},clip,width=0.225\textwidth]{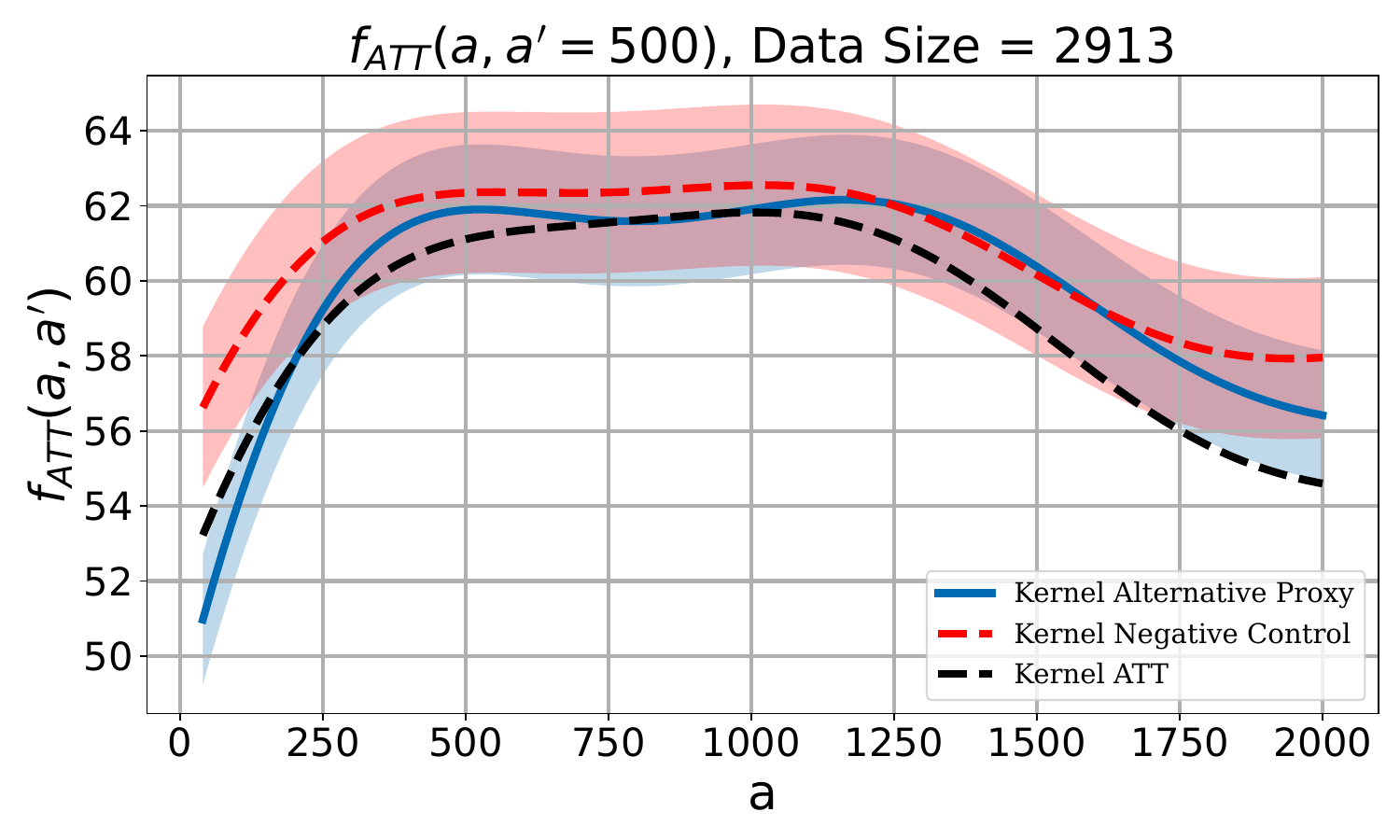}
\label{fig:JobCorpsATTComparison7}}
\subfloat[]
{\includegraphics[trim = {0cm 0cm 0cm 0.0cm},clip,width=0.225\textwidth]{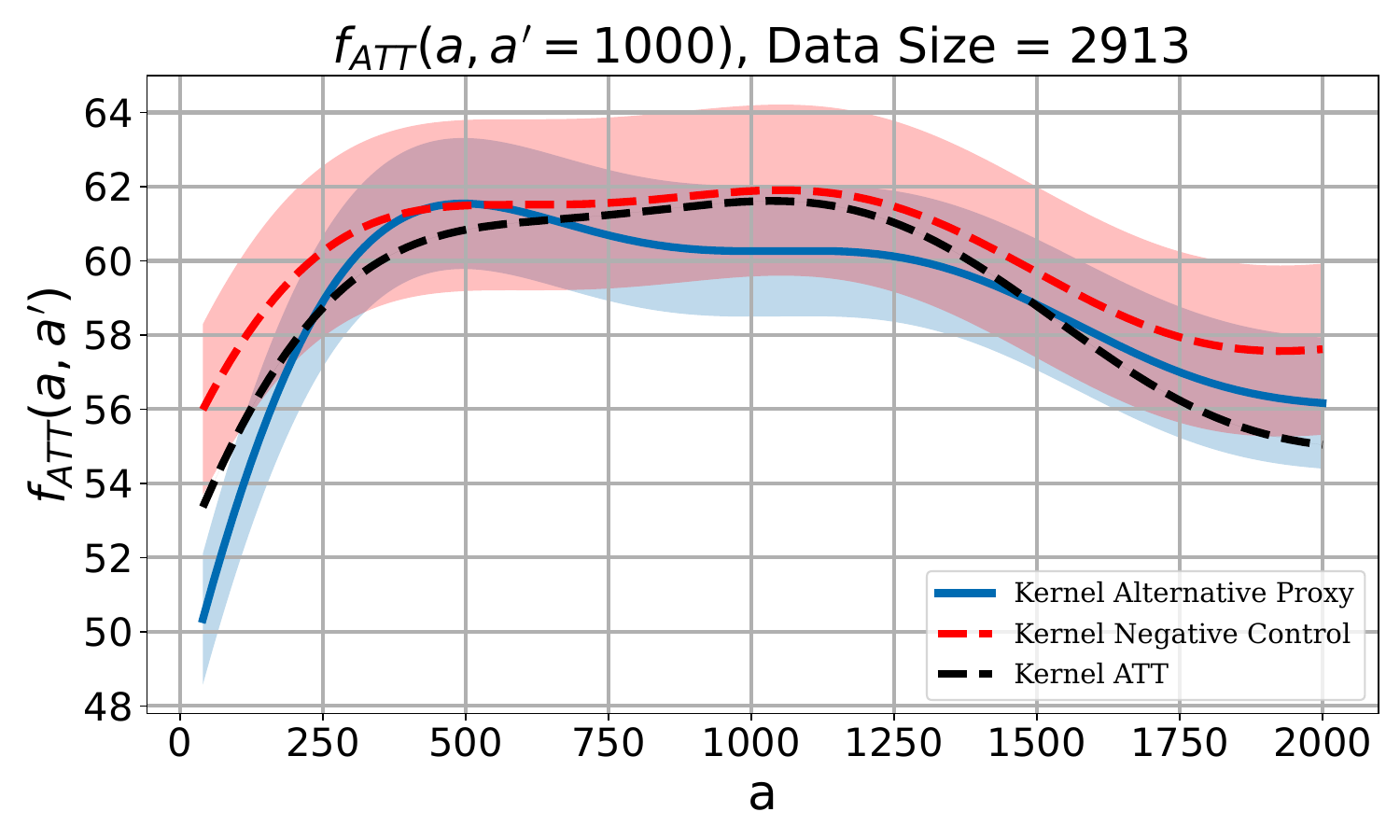}
\label{fig:JobCorpsATTComparison8}}
\newline\caption{Conditional dose-response estimation curves for Job Corps experimental settings $1$, $2$, $5$, and $6$ that are introduced in S.M. (Sec. \ref{sec:Appendix_JobCorpsExperiments}). Panels (a) and (b) show estimation curves for our approach, KNC, and the oracle method Kernel-ATT in Setting $1$ for $a' = 500$ and $a' = 1000$, respectively. Panels (c) and (d) display the corresponding curves for Setting $2$. Similarly, panels (e) and (f) illustrate the results for Setting $5$, while panels (g) and (h) present those for Setting $6$.}
\label{fig:JobCorpsATTComparison}
\end{figure*}

\subsubsection{Ablation Study on the Effect of Bandwidth Selection of Gaussian Kernel}

In this section, we present an ablation study to investigate the effect of kernel bandwidth selection on the performance of our dose-response curve estimation algorithm. Figure (\ref{fig:KernelBandwidthAblation}) illustrates the performance of our proposed method across different bandwidth selections for the kernels $k_\gA(.,.)$, $k_\gW(.,.)$, and $k_\gZ(.,.)$. Specifically, Figure (\ref{fig:KernelBandwidthAblation_kernelA}) shows the performance our dose-response curve estimation algorithm for various bandwidth values for $k_\gA(.,.)$ in the low-dimensional data generation setting (see Section (\ref{sec:NumericalExperiments})). The bandwidths of other kernels are set using median heuristic. Similarly, Figure (\ref{fig:KernelBandwidthAblation_kernelW}) and (\ref{fig:KernelBandwidthAblation_kernelZ}) depicts the performance our method across different bandwidth selection for the kernels $k_\gW(., .)$ and $k_\gZ(.,.)$, respectively.

We observe that while median heuristic does not always yield the best result, it generally produces robust or comparable results. Although one could perform a grid search on the kernel bandwidth to minimize the validation error in the second stage (see Equation (\ref{eq:secondstage-holdout_loss})), this procedure introduces additional search complexity. Therefore, for simplicity, we opted to use median heuristic in our experiments.  

\begin{figure*}[ht!]
\centering
\subfloat[]
{\includegraphics[trim = {0cm 0cm 0cm 0.0cm},clip,width=0.325\textwidth]{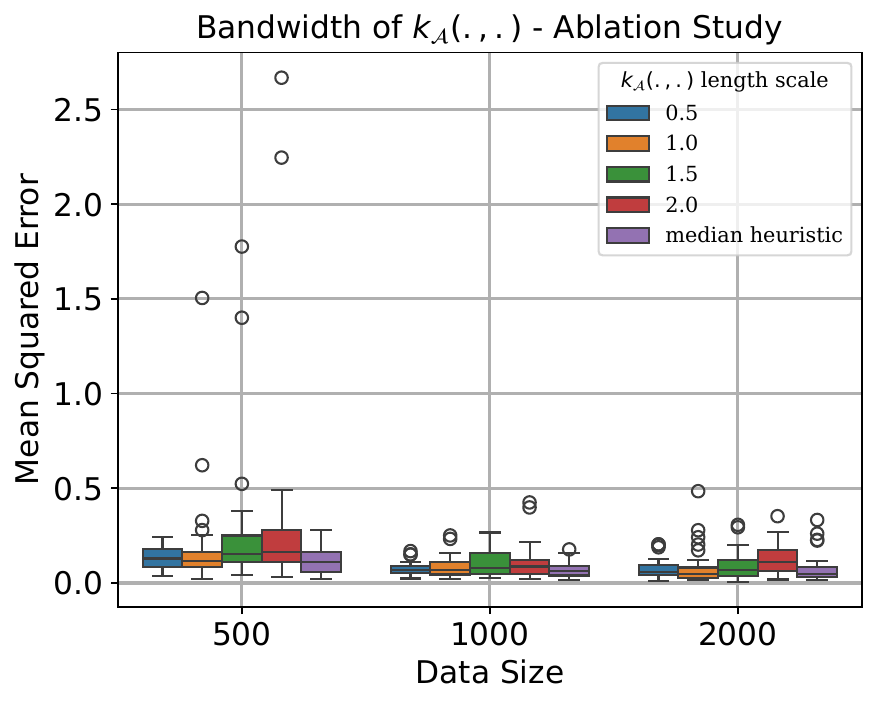}
\label{fig:KernelBandwidthAblation_kernelA}}
\subfloat[]
{\includegraphics[trim = {0cm 0cm 0cm 0.0cm},clip,width=0.325\textwidth]{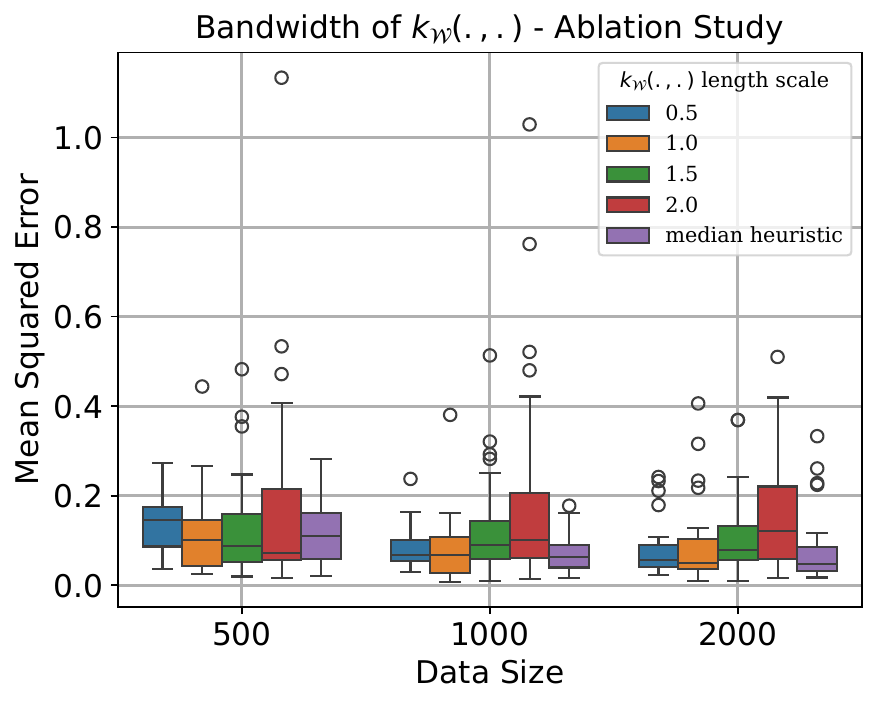}
\label{fig:KernelBandwidthAblation_kernelW}}
\subfloat[]
{\includegraphics[trim = {0cm 0cm 0cm 0.0cm},clip,width=0.325\textwidth]{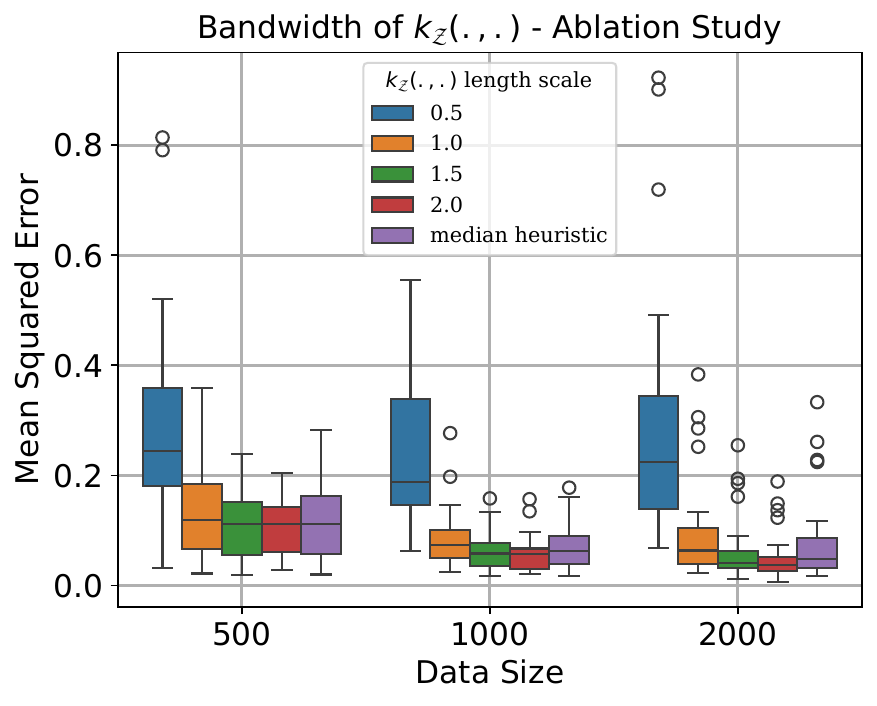}
\label{fig:KernelBandwidthAblation_kernelZ}}
\newline\caption{Ablation study on kernel bandwidth selection in the low-dimensional synthetic data experiment: (a) effect of the bandwidth of kernel $k_\gA(., .)$ on the performance, (b) effect of the bandwidth of kernel $k_\gW(., .)$ on the performance, (c) effect of the bandwidth of kernel $k_\gZ(., .)$ on the performance.}
\label{fig:KernelBandwidthAblation}
\end{figure*}

\section{IDENTIFIABILITY OF DOSE-RESPONSE IN DISCRETE CASE}
\label{sec:IdentifiabilityOfDoseResponseDiscreteCase}
Here, we additionally present the identification of dose-response curve in the discrete variable case. Assume that each space $\gF \in \{\gY, \gW, \gZ, \gU\}$ are discrete and $\gF = \{1, 2, \ldots, d_\gF\}$. Note that the dose-response can be written in terms of matrix-vector products in this case:
\begin{align}
    f_{\text{ATE}}(a) &= \E_U[\E[Y | U, A = a]] = \sum_{i = 1}^{d_\gU} \E[Y | U = i, A = a] p(U = i) \nonumber\\
    &= \sum_{i = 1}^{d_\gU} \sum_{j = 1}^{d_\gY} d_j p(Y = j | U = i) p(U = i) = \vd_\gY^T \mP(Y | U, A = \va) \mP(U) \label{eq:ATE_discereteCaseEquation}
\end{align}
where $\vd_\gY = \begin{bmatrix}
    1 & 2 & \ldots & d_\gY
\end{bmatrix}^T \in \R^{d_\gY}$, and $\mP(Y | U, A = a) \in \R^{d_\gY \times d_\gU}$, $\mP(U) \in \R^{d_\gU}$ are probability matrices with 
\begin{align*}
    [\mP(Y | U, A = \va)]_{ij} &= p(Y = i | U = j, A = \va)\\
    [\mP(U)]_i &= p(U = i).
\end{align*}
Equation (\ref{eq:ATE_discereteCaseEquation}) cannot be computed directly since it involves the distribution of the unobserved confounder $U$. However, we can determine $f_{\text{ATE}}$ by only the observable variables ${Y, W, Z}$. This approach is formalized in the following theorem. \looseness=-1

\begin{theorem}
Given the observed variables ${Y, W, Z}$ from their corresponding discrete sets  $\{\gY, \gW, \gZ\}$, the dose-response curve can be calculated by
\begin{align}
    f_{ATE}(a) &= \vd_\gY^T \mP(Y, Z | A = a) \mP^{-T}(Z | W, A = a) \frac{1}{\mP(A = a | W)^T} p(A = a)
    \label{eq:ATEFormulaDiscreteCaseIdentification}
\end{align}
where $\mP(Y, Z | A = a) \in \R^{d_\gY \times d_\gZ}$ and $\mP(Z | W, A = a)\in \R^{d_\gZ \times d_\gW}$ are the probability matrices defined as     
\begin{align*}
    [\mP(Y, Z | A = a)]_{ij} &= p(Y = i, Z = j | A = a),\\
    [\mP(Z | W, A = a)]_{ij} &= p(Z = i | W = j, A = a),
\end{align*} 
$\mP^{-T}(Z | W, A = a) $ is the transpose of the inverse of the probability matrix $\mP(Z | W, A = a)$, and 
\begin{align*}
    \frac{1}{\mP(A = a | W)^T} &= \begin{bmatrix}
        \frac{1}{p(A = a | W = 1)} & \ldots & \frac{1}{p(A = a | W = d_\gW)} 
    \end{bmatrix} \in \R^{1 \times d_\gW}.
\end{align*}
\label{thm:AlternativeProxyATEIdentificationDiscrete}
\end{theorem}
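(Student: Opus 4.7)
The plan is to derive this identity as a direct matrix-form specialization of the continuous identification result in Theorem~\ref{thm:AlternativeProxyATEandATTIdentification}. Concretely, I would first set up vector/matrix surrogates for the bridge function and the density-ratio target, then solve the resulting finite linear system, and finally plug the solution into the outer expectation $\E[Y \varphi_0(Z,a) \mid A = a]$.

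First, encode the discrete bridge function as a column vector $\vvarphi_a \in \R^{d_\gZ}$ with $[\vvarphi_a]_z = \varphi_0(z,a)$, and note that the right-hand side of the bridge equation (\ref{eq:AlternativeProxyATEBridgeFunction}) simplifies via Bayes' rule:
\begin{equation*}
\frac{p(w)\,p(a)}{p(w,a)} \;=\; \frac{p(a)}{p(A=a \mid W=w)}.
\end{equation*}
Collecting over $w \in \gW$ gives a column vector $\vec{r}_a \in \R^{d_\gW}$ whose $w$-th entry is $p(A=a)/p(A=a\mid W=w)$, which matches the factor $\frac{1}{\mP(A=a\mid W)^T}\,p(A=a)$ appearing in the statement. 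The bridge equation $\E[\varphi_0(Z,a) \mid W, A=a] = p(W)p(a)/p(W,a)$ then reads, one coordinate at a time,
\begin{equation*}
\sum_{z} \varphi_0(z,a)\, p(z \mid w, A=a) \;=\; [\vec r_a]_w,
\end{equation*}
i.e. $\mP(Z\mid W, A=a)^T \vvarphi_a = \vec r_a$.

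Second, assuming $\mP(Z\mid W, A=a)$ is invertible I obtain $\vvarphi_a = \mP^{-T}(Z\mid W,A=a)\,\vec r_a$. By Theorem~\ref{thm:AlternativeProxyATEandATTIdentification}(1), $f_{\mathrm{ATE}}(a) = \E[Y \varphi_0(Z,a) \mid A=a] = \sum_{y,z} y\, \varphi_0(z,a)\, p(Y=y, Z=z \mid A=a)$. Reading the inner sum over $z$ as a matrix-vector product and the outer sum over $y$ as an inner product with $\vd_\gY$ yields
\begin{equation*}
f_{\mathrm{ATE}}(a) \;=\; \vd_\gY^T\, \mP(Y,Z \mid A=a)\, \vvarphi_a,
\end{equation*}
and substituting the expression for $\vvarphi_a$ gives exactly the formula in the statement.

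The main obstacle is the invertibility of $\mP(Z\mid W, A=a)$, which implicitly requires $|\gZ|=|\gW|$ (or at least that the columns of $\mP(Z\mid W, A=a)$ are linearly independent so that a suitable pseudoinverse acts as an inverse on $\vec r_a$). I would justify this by pointing out that, in the discrete setting, the completeness condition in Assumption~\ref{assum:ExistenceCompletenessAssumption} (together with Assumption~\ref{assum:ProxyCausalAssumptions1}) forces the conditional expectation operator $E_a^*$ from the proof of existence in S.M.~Sec.~\ref{sec:existenceOfBridgeFunction} to be injective; in finite dimensions injectivity is equivalent to invertibility of the representing matrix, so $\mP(Z\mid W, A=a)$ has full column rank and the bridge system admits a unique solution $\vvarphi_a$. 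A minor additional check is that the discrete version of Assumption~\ref{assum:AlternativeProxyAssumptionCompleteness1} is what permits the passage from the conditional bridge equation to the pointwise identity used in deriving Theorem~\ref{thm:AlternativeProxyATEandATTIdentification}; this is already established in S.M.~Sec.~\ref{sec:Identification_Appendix}, so no new argument is needed for that step.
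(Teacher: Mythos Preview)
Your argument is correct and leads to the stated identity, but it proceeds along a genuinely different route than the paper. The paper's proof is self-contained: it starts from $f_{\mathrm{ATE}}(a)=\vd_\gY^T\mP(Y\mid U,A=a)\mP(U)$ and, over four steps, establishes matrix factorizations that explicitly involve the latent $U$ (e.g., $\mP(Z\mid W,A=a)=\mP(Z\mid U,A=a)\mP(U\mid W,A=a)$ and $\mP(Y\mid U,A=a)=\mP(Y,Z\mid A=a)\mP^{-T}(Z\mid U,A=a)\operatorname{diag}\{\mP(U\mid A=a)\}^{-1}$), then chains these to eliminate $U$. Your approach instead treats the theorem as a discrete specialization of Theorem~\ref{thm:AlternativeProxyATEandATTIdentification}: you write the bridge equation as the linear system $\mP(Z\mid W,A=a)^T\vvarphi_a=\vec r_a$, invert, and plug into $\vd_\gY^T\mP(Y,Z\mid A=a)\vvarphi_a$. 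This is considerably shorter and more modular, since the heavy lifting (passing from the $W$-conditional bridge equation to the $U$-conditional one via completeness, and then unwinding the expectations) has already been done in the general case. The paper's longer derivation buys an explicit picture of how the hidden $U$ enters and is cancelled, which is instructive but not logically necessary once Theorem~\ref{thm:AlternativeProxyATEandATTIdentification} is in hand.

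One small correction: your justification of invertibility misidentifies the relevant completeness. Assumption~\ref{assum:ExistenceCompletenessAssumption} concerns the operator $\ell(U)\mapsto\E[\ell(U)\mid Z,A=a]$, not $E_a^*:\ell(Z)\mapsto\E[\ell(Z)\mid W,A=a]$; injectivity of the latter would require the stronger Assumption~\ref{assum:PKIPW_CompletenessAssumptions2}, which the paper explicitly avoids. That said, the paper's own proof also uses $\mP^{-1}(Z\mid U,A=a)$ and $\mP^{-T}(Z\mid W,A=a)$ without justifying the needed rank conditions, so invertibility is effectively a standing hypothesis of the theorem statement rather than something either proof derives.
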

 Note that every component in the Equation (\ref{eq:ATEFormulaDiscreteCaseIdentification}) is in terms of the observed variables and they can be estimated from the data.

\begin{proof}




We will prove the theorem in four steps:

\textbf{{Step 1:}} We will first prove that 
\begin{align}
    \mP(Z| W, A = a) = \mP(Z | U, A = a) \mP(U| W, A = a).
\label{eq:AlternativeProxyDiscreteATEProofEquation1}
\end{align}
First consider the right-hand side of the Equation (\ref{eq:AlternativeProxyDiscreteATEProofEquation1}):
\begin{align*}
    [\mP(Z | U, A = a) \mP(U| W, A = a)]_{ij} &= \sum_{k = 1}^{d_\gU} p(Z = i | U = k, A = a) p(U = k| W = j, A = a)\\
    &= \sum_{k = 1}^{d_\gU} p(Z = i | U = k, A = a, {W = j}) p(U = k| W = j, A = a)\\
    &\text{(the above equality is due to Assumption (\ref{assum:ProxyCausalAssumptions1}))}\\
    &= \sum_{k = 1}^{d_\gU} p(Z = i, U = k | A = a, {W = j}) \quad \text{(by Baye's Rule)}\\
    &= p(Z = i | W = j, A = a) = [\mP(Z | W, A = a)]_{ij}
\end{align*}
and this verifies the Equation (\ref{eq:AlternativeProxyDiscreteATEProofEquation1}). It also implies that 
\begin{align}
    \underbrace{\mP(U| W, A = a)}_{\in \R^{d_\gU \times d_\gW}} = \underbrace{\mP^{-1}(Z | U, A = a)}_{\in \R^{d_\gU \times d_\gZ}} \underbrace{\mP(Z| W, A = a)}_{\in \R^{d_\gZ \times d_\gW}}
\label{eq:AlternativeProxyDiscreteATEProofEquation2}
\end{align}
where $\mP^{-1}(Z | U, A = a)$ is the (left) inverse of the probability matrix $\mP(Z | U, A = a)$.

\textbf{{Step 2:}} Secondly, we want to show that
\begin{align}
    \mP^{-T}(Z | U, A = a) \frac{p(A = a)}{\mP(A = a | U)^T} &= \mP^{-T}(Z| W, A = a) \frac{p(A = a)}{\mP(A = a | W)^T}
\label{eq:AlternativeProxyDiscreteATEProofEquation3}
\end{align}
Equivalently, we will show that 
\begin{align*}
    \frac{1}{\mP(A = a | U)} \mP^{-1}(Z | U, A = a) = \frac{1}{\mP(A = a | W)} \mP^{-1}(Z | W, A = a)
\end{align*}
which also implies that
\begin{align}
    \frac{1}{\mP(A = a | U)} \mP^{-1}(Z | U, A = a) \mP(Z | W, A = a) = \frac{1}{\mP(A = a | W)} .
\label{eq:AlternativeProxyDiscreteATEProofEquation4}
\end{align}
Next, consider the left-hand side of the equation (\ref{eq:AlternativeProxyDiscreteATEProofEquation4}):
\begin{align*}
    &\Big[\frac{1}{\mP(A = a | U)} \mP^{-1}(Z | U, A = a) \mP(Z | W, A = a)\Big]_{i} = \Big[\frac{1}{\mP(A = a | U)} \mP(U | W, A = a)\Big]_{i} \quad \text{(by Equation (\ref{eq:AlternativeProxyDiscreteATEProofEquation2}))}\\
    &= \sum_{k = 1}^{d_\gU} \frac{1}{p(A = a | U = u)} p(U = k| W = i, A = a)\\
    &= \sum_{k = 1}^{d_\gU} \frac{1}{p(A = a | U = u)} \frac{p(A = a| U = k, W = i) p(U = k| W = i)}{p(A = a| W = i)} \quad \text{(by Baye's Rule)}\\
    &= \sum_{k = 1}^{d_\gU} \frac{1}{p(A = a | U = u)} \frac{p(A = a| U = k) p(U = k| W = i)}{p(A = a| W = i)} \quad \text{(since $W \perp A | U$, Assumption (\ref{assum:ProxyCausalAssumptions1}))}\\
    &= \frac{1}{p(A = a | W = i)} \underbrace{\sum_{k = 1}^{d_\gU} p(U = k | W = i)}_{= 1} = \frac{1}{p(A = a | W = i)} \\
    &= \Bigg[\frac{1}{\mP(A = a | W)} \Bigg]_i,
\end{align*}
and that verifies the Equation (\ref{eq:AlternativeProxyDiscreteATEProofEquation4}). As a result, we proved that Equation (\ref{eq:AlternativeProxyDiscreteATEProofEquation3}) holds.

\textbf{{Step 3:}} We will further prove that 
\begin{align}
    \mP(Y| U, A = a) = \mP(Y, Z| A = a) \mP^{-T}(Z | U, A = a) \text{diag}\{\mP(U| A = a)\}^{-1}
\label{eq:AlternativeProxyDiscreteATEProofEquation5}
\end{align}

where 
\begin{align}
    \text{diag}\{\mP(U| A = a)\}^{-1} &= \begin{bmatrix}
        \frac{1}{p(U = 1| A = a)} & & &\text{\huge0}\\
         & \frac{1}{p(U = 2| A = a)} & &\\
         &   &   \ddots&            \\
        \text{\huge0} & & & \frac{1}{p(U = d_\gU| A = a)}
    \end{bmatrix} \in \R^{d_\gU \times d_\gU} \nonumber\\
    &= p(A = a)\begin{bmatrix}
        \frac{1}{p(A = a| U = 1) p(U = 1)} & & &\text{\huge0}\\
         & \frac{1}{p(A = a| U = 2) p(U = 2)} & &\\
         &   &   \ddots&            \\
        \text{\huge0} & & & \frac{1}{p(A = a| U = d_\gU) p(U = d_\gU)}
    \end{bmatrix}\nonumber\\
    &= p(A = a) \text{diag} \Big\{ \frac{1}{\mP(A = a | U)}\Big\}  \text{diag} \Big\{ \frac{1}{\mP(U)}\Big\}
\label{eq:AlternativeProxyDiscreteATEProofEquation6}\\
    &= p(A = a) \text{diag} \Big\{ {\mP(A = a | U)}\Big\}^{-1}  \text{diag} \Big\{ {\mP(U)}\Big\}^{-1}
\label{eq:AlternativeProxyDiscreteATEProofEquation7}
\end{align}

Next, consider 
\begin{align*}
    &\Big[ \text{diag}\{\mP(U| A = a)\} \mP^{T}(Z | U, A = a) \Big]_{ij} = \sum_{k = 1}^{d_\gU} \Big[ \text{diag}\{\mP(U| A = a)\}\Big]_{ik} p(Z = j| U = k, A = a)\\
    &= p(U = i| A = a)p(Z = j| U = i, A = a) \quad \text{(since $\Big[ \text{diag}\{\mP(U| A = a)\}\Big]_{ik}$ is nonzero iff $k  = i$)}\\
    &= p(Z = j, U = i| A = a) \quad \text{(by Baye's Rule)}
\end{align*}

Hence, this illustrates that 
\begin{align}
    {\text{diag}\{\mP(U| A = a)\} \mP^{T}(Z | U, A = a) = \mP^T(Z, U| A = a)} \label{eq:AlternativeProxyDiscreteATEProofEquation8}
\end{align}
where ${[\mP(Z, U| A = a)]}_{ij} = p(Z = i, U = j| A = a)$.

Furthermore, note that
\begin{align*}
    &\Big[ \mP(Y | U, A = a) \text{diag}\{\mP(U| A = a)\} \mP^{T}(Z | U, A = a)\Big]_{ij} = \Big[ \mP(Y | U, A = a) \mP^T(Z, U| A = a) \Big]_{ij} \quad \text{(by Eq. (\ref{eq:AlternativeProxyDiscreteATEProofEquation8}))}\\
    &= \sum_{k = 1}^{d_\gU} p(Y = i | U = k, A = a) p(Z = j, U = k| A = a)\\
    &= \sum_{k = 1}^{d_\gU} p(Y = i | U = k, A = a) p(Z = j|U = k, A = a) p(U = k| A = a)\\
    &= \sum_{k = 1}^{d_\gU} p(Y = i, Z = j | U = k, A = a)  p(U = k| A = a) \quad \text{(since $Y \perp Z | U, A$, Assumption (\ref{assum:ProxyCausalAssumptions1}))}\\
    &= \sum_{k = 1}^{d_\gU} p(Y = i, Z = j, U = k | A = a) = p(Y = i, Z = j, U = k | A = a) \\
    &= \Big[\mP(Y, Z| A = a)\Big]_{ij}.
\end{align*}

Thus, we showed that
\begin{align}
    \mP(Y | U, A = a) \text{diag}\{\mP(U| A = a)\} \mP^{T}(Z | U, A = a) = \mP(Y, Z| A = a)
\label{eq:AlternativeProxyDiscreteATEProofEquation9}
\end{align}
which also implies Equation (\ref{eq:AlternativeProxyDiscreteATEProofEquation5}). That is also equivalent to (due to Equation (\ref{eq:AlternativeProxyDiscreteATEProofEquation7}))
\begin{align}
    \mP(Y| U, A = a) = \mP(Y, Z| A = a) \mP^{-T}(Z | U, A = a) p(A = a)\text{diag}\Big\{\frac{1}{\mP(A = a |U)}\Big\} \text{diag}\Big\{\frac{1}{\mP(U)}\Big\}
\label{eq:AlternativeProxyDiscreteATEProofEquation10}
\end{align}

\textbf{{Step 4 (Combining all the steps above and finishing the proof):}} 

Finally, consider the dose-response
\begin{align*}
    f_{\text{ATE}}(a) &= \E_U[\E[Y | U, A = a]] = \vd_\gY^T \mP(Y | U, A = a) \mP(U)\\
    &= \vd_\gY^T \mP(Y, Z| A = a) \mP^{-T}(Z | U, A = a) p(A = a)\text{diag}\Big\{\frac{1}{\mP(A = a |U)}\Big\} \underbrace{\text{diag}\Big\{\frac{1}{\mP(U)}\Big\} \mP(U)}_{\vone}\\
    &\text{(the above equality is due to Equation (\ref{eq:AlternativeProxyDiscreteATEProofEquation10}))}\\
    &= \vd_\gY^T \mP(Y, Z| A = a) \mP^{-T}(Z | U, A = a) p(A = a)\text{diag}\Big\{\frac{1}{\mP(A = a |U)}\Big\} {\vone}\\
    &= \vd_\gY^T \mP(Y, Z| A = a) \underbrace{\mP^{-T}(Z | U, A = a) p(A = a)\frac{1}{\mP(A = a |U)^T}}_{\mP^{-T}(Z | W, A = a) p(A = a)\frac{1}{\mP(A = a |W)^T}}\\
    &= \vd_\gY^T \mP(Y, Z| A = a) {\mP^{-T}(Z | W, A = a) p(A = a)\frac{1}{\mP(A = a |W)^T}} \quad \text{(by Equation (\ref{eq:AlternativeProxyDiscreteATEProofEquation3}))}
\end{align*}
As a result, the ATE functions can be calculated by 
\begin{align*}
    {f_{\text{ATE}}(a) = \vd_\gY^T \mP(Y, Z| A = a) {\mP^{-T}(Z | W, A = a) p(A = a)\frac{1}{\mP(A = a | W)^T}}}
\end{align*}
where every component in the above equation is observed and the corresponding probability matrices can be estimated from the data.
\end{proof}

\section{KERNEL ALTERNATIVE PROXY METHOD WITH OBSERVABLE CONFOUNDERS}
\label{sec:KernelAlternativeProxyWithAdditionalCovariates}
In the section, we formulate the identifiability our proposed method when there exists observable confounding variables. In this setting, we consider the causal graph shown in Figure (\ref{fig:ProxyCausalDAGFigureWithX}). In addition to the variables $(A, Y, Z, W)$, we also assume that there exists observable confounding variables $X$. In this case, the structural functions of interest are defined as follows:

\begin{enumerate}
    \item[i-)] {Dose-response:} $f_{\text{ATE}}({a}) = \E[\E[Y | A = a, U, X]]$ 
    
    \item[ii-)] {Conditional dose-response:} $f_{\text{ATT}}(a, a') = \E[\E[Y | A = a, U, X] | A = a']$ 
\end{enumerate}

The conditional independence and completeness assumptions can be stated as follows:

\begin{assumption}
    We assume the following conditional independence statements: i-) $Y \perp Z  | U, X, A$ (Conditional Independence for $Y$), ii-) $ W \perp Z  | U, X, A = a$ and $W \perp A | U, X$ (Conditional Independence for $W$).
\label{assumption:proxy_with_X}
\end{assumption}



\begin{assumption}
    Let $\ell : \gU \rightarrow \R$ be any square integrable function. We assume that the following conditions hold for all $a \in \gA$, $x \in \gX$:
    \begin{itemize}
        \item $\E[\ell(U) | W = w, X = x, A = a] = 0 \quad \forall w \in \gW$ if and only if $\ell(U) = 0 \quad p(U) - $almost everywhere
        \item $\E[\ell(U) | Z = z, X = x, A = a] = 0 \quad \forall z \in \gZ$ if and only if $\ell(U) = 0 \quad p(U) - $almost everywhere
    \end{itemize}
    \label{assum:AlternativeProxyAssumptionCompleteness_with_X}
\end{assumption}

\begin{figure}[ht!]
\centering
\includegraphics[trim = {0cm 0cm 0cm 0.0cm},clip,width=0.55\textwidth]{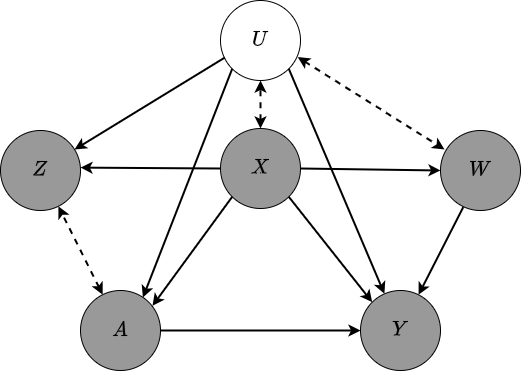}
\caption{An instance of a Directed Acyclic Graph (DAG) for the PCL setting, which satisfies the required Assumption (\ref{assumption:proxy_with_X}). In this graph, the gray circles denote the observed variables: $A$ denotes the treatment, $Y$ denotes the outcome, $X$ denotes the additional observable confounding variables, $Z$ denotes the treatment proxy, and $W$ denotes the outcome proxy. The white circle denotes the unobserved confounding variable $U$.
Bi-directional dotted arrows indicate that either direction in the DAG is possible, or that both variables may share a common ancestor.}
\hfill
\label{fig:ProxyCausalDAGFigureWithX}
\end{figure}

Next, we show the identifiability results of our proposed framework when there exist observable confounding variables $X \in \gX$ for both dose-response and conditional dose-response case.
\begin{theorem}
    Assume there exists a \emph{bridge function} $\varphi_0(z, x, a)$ that satisfies:
\begin{align*}
    \E[\varphi_0(Z, X, a) | W, X, A = a] = \frac{p(W | X) p(a)}{p(W, a | X)}.
\end{align*}
Given the Assumptions (\ref{assumption:proxy_with_X}) and (\ref{assum:AlternativeProxyAssumptionCompleteness_with_X}), the dose-response curve is given by
\begin{align*}
    f_{\text{ATE}}(a) = \E[Y \varphi_0(Z, X, a) | A = a]
\end{align*}
\end{theorem}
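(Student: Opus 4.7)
The plan is to mirror the proof of Theorem~(\ref{thm:AlternativeProxyATEandATTIdentification}-1) from S.M.~(Sec.~\ref{sec:IdentificationATE_Appendix}), carrying the observable confounder $X$ through each step as an extra conditioning variable. I would carry out four steps, parallel to those in the $X$-free case.

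First, I would rewrite the bridge equation at the level of the latent confounder $U$. By the law of total expectation and the conditional independence $Z \perp W \mid U, X, A$ from Assumption~(\ref{assumption:proxy_with_X}),
\begin{equation*}
\E[\varphi_0(Z, X, a) \mid W, X, A = a] = \E_{U \mid W, X, A = a}\bigl[\E[\varphi_0(Z, X, a) \mid U, X, A = a]\bigr].
\end{equation*}

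Second, I would derive the density-ratio identity
\begin{equation*}
\frac{p(W \mid X) p(a)}{p(W, a \mid X)} = \E\!\left[\frac{p(U \mid X) p(a)}{p(U, a \mid X)} \,\Big|\, W, X, A = a\right],
\end{equation*}
following the same computation as Eq.~(\ref{eq:bridge_W_to_U}) but with every conditional density further conditioned on $X$, and using the second part of Assumption~(\ref{assumption:proxy_with_X}) that $W \perp A \mid U, X$ to replace $p(W \mid U, X)$ by $p(W \mid U, X, a)$ inside the integral. Combining this with the first step and applying the first bullet of the completeness Assumption~(\ref{assum:AlternativeProxyAssumptionCompleteness_with_X}) to the function $u \mapsto \E[\varphi_0(Z, X, a) \mid U = u, X, A = a] - p(u \mid X)p(a)/p(u, a \mid X)$ (for each fixed $X$, on the support of $X$), I conclude that, for almost every $(u, x)$,
\begin{equation*}
\E[\varphi_0(Z, X, a) \mid U = u, X = x, A = a] = \frac{p(u \mid x) p(a)}{p(u, a \mid x)}.
\end{equation*}

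Third, I would expand $f_{\text{ATE}}(a) = \E\bigl[\E[Y \mid A=a, U, X]\bigr]$ by writing the outer expectation as an integral $\iint \E[Y \mid a, u, x] \, p(u, x)\, du\, dx$, multiplying and dividing by $p(u, a \mid x)/(p(u \mid x)p(a))$ to introduce the density ratio from the previous display, and then replacing that ratio by $\E[\varphi_0(Z, X, a) \mid U, X, A = a]$. Next, I use $Y \perp Z \mid U, X, A$ from Assumption~(\ref{assumption:proxy_with_X}) to pull $\int y\, p(y \mid a, U, X)\, dy$ inside as $\int y \,p(y \mid a, U, X, z)\, dy$ and combine with $\varphi_0(z, X, a) p(z \mid U, X, a)$ into the joint density $p(y, z \mid a, U, X)$. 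Marginalizing out $U$ against $p(U \mid a, X) p(X \mid a)$ gives the joint $p(y, z, X \mid a)$, so the whole expression collapses to $\E[Y \varphi_0(Z, X, a) \mid A = a]$.

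The main obstacle I anticipate is the bookkeeping in the density-ratio identity of step two: carrying $X$ through Bayes' rule applications requires being careful that the Assumption $W \perp A \mid U, X$ is invoked to convert $p(w \mid u, X)$ into $p(w \mid u, X, a)$ at the right point, and that the factor $p(X)$ cancels correctly so that only conditional densities given $X$ appear in the final ratio. Everything else is a direct transcription of the $X$-free argument with $U$ replaced by $(U, X)$ in each conditioning set.
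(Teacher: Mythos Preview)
Your proposal is correct and follows essentially the same route as the paper's proof: both rewrite the bridge equation at the level of $U$ via $Z \perp W \mid U, X, A$, derive the conditional density-ratio identity $\frac{p(W\mid X)p(a)}{p(W,a\mid X)} = \E\!\left[\frac{p(U\mid X)p(a)}{p(U,a\mid X)} \,\middle|\, W, X, A=a\right]$ using $W \perp A \mid U, X$, invoke the first completeness condition in Assumption~(\ref{assum:AlternativeProxyAssumptionCompleteness_with_X}) for each fixed $x$, and then expand $f_{\text{ATE}}(a)$ by inserting the ratio, using $Y \perp Z \mid U, X, A$, and marginalizing out $U$. The bookkeeping concern you flag is exactly the only subtlety the paper's argument has to manage, and it resolves precisely as you anticipate.
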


\begin{proof}
Suppose that 
\begin{align*}
    \E[\varphi_0(Z, X, a) | W, X, A = a] = \frac{p(W | X) p(a)}{p(W, a | X)}.
\end{align*}
Then, note the following, 

\begin{align}
\E[\varphi_0(Z, X, a) | W, X, A = a] &= \E_{U|W, X, A = a}[\E[\varphi_0(Z, X, a) | U, W, X, A = a]]  \quad \text{(by Law of Total Expectations)}\nonumber\\
&= \E_{U|W, X, A = a}[\E[\varphi_0(Z, X, a) | U, X, A = a]] \quad \text{(since $Z \perp W | U, X$, Assumption (\ref{assumption:proxy_with_X}))}\label{eq:ATEProofExpectationOfBridgeWRT_U_withConfounders}
\end{align}

Furthermore, note that
\begin{align*}
    p(w | x) &= \int p(w| u, x) p(u | x) d u\\
    &= \int p(w| a, u, x) p(u | x) d u \quad \text{(since $W \perp A | U, X$, Assumption (\ref{assumption:proxy_with_X}))}\\
    &= \int \frac{p(u | w, a, x) p(w | a, x)}{p(u | a, x)} p(u | x) d u \quad (\text{Baye's Rule})\\
    &= \int \frac{p(u | w, a, x) p(w , a | x)}{p(u , a | x)} p(u | x) d u\\
    &= p(w, a | x) \int \frac{p(u | x)}{p(u, a | x)} p(u | w, a, x) d u\\
    &= p(w, a | x) \E \Bigg[  \frac{p(U | x)}{p(U, a | x)} \Bigg| W = w, A = a, X = x \Bigg]
\end{align*}

As a result,
\begin{align*}
    \frac{p(w | X)}{p(w, a | X)} = \E \Bigg[  \frac{p(U | X)}{p(U, a | X)} \Bigg| W = w, A = a, X \Bigg].
\end{align*}
Hence,
\begin{align*}
    \frac{p(w | X) p(a)}{p(w, a | X)} = \E \Bigg[  \frac{p(U | X) p(a)}{p(U, a | X)} \Bigg| W = w, A = a, X \Bigg].
\end{align*}

Using the assumption of the Theorem and Equation (\ref{eq:ATEProofExpectationOfBridgeWRT_U_withConfounders}), we see that
\begin{align*}
   \E_{U|W = w, X, A = a}[\E[\varphi_0(Z, X, a) | U, X, A = a]] = \E_{U |W = w, X, A = a } \Bigg[  \frac{p(U | X) p(a)}{p(U, a | X)}\Bigg]
\end{align*}
Therefore, due to the Assumption (\ref{assum:AlternativeProxyAssumptionCompleteness_with_X}), we have
\begin{align}
    \E[\varphi_0(Z, X, a) | U, X, A = a ] = \frac{p(U | X) p(a)}{p(U, a | X)} \quad \text{almost surely.}
\label{eq:AlternativeProxyBridgeFuncProposition3}
\end{align}

Next, we observe that
\begin{align*}
    &\E[ \E[Y | A = a, U, X]] = \int \E[Y | A = a, u, x] p(u, x) d u d x\\
    &= \int \E[Y | A = a, u, x] \frac{p(u, x) p(a)}{p(u, a, x)} \frac{p(u, a, x)}{p(a)} d u dx\\
    &= \int \E[Y | A = a, u, x] \frac{p(u, x) p(a)}{p(u, a, x)} p(u, x| a) d u dx\\
    &= \E_{X, U | A = a} \Bigg[ \E[Y | A = a, U, X] \frac{p(U, X) p(a)}{p(U, a, X)} \Bigg]\\
    &= \E_{X, U | A = a} \Bigg[ \E[Y | A = a, U, X] \E[\varphi_0(Z, X, a) | U, X, A = a ] \Bigg] \quad \text{(by Equation \ref{eq:AlternativeProxyBridgeFuncProposition3})}\\
    &= \E_{X, U | A = a} \Bigg[ \int y p(y | A = a, U, X) d y \int \varphi_0(z, X, a) p(z| U, X, A = a) d z \Bigg]\\
    &= \E_{X, U | A = a} \Bigg[\int \int\varphi_0(z, X, a)  y \underbrace{p(y | A = a, U, X, z)   p(z| U, X, A = a)}_{p(y, z| A = a, U, X)} d y  d z \Bigg] \text{(since $Y \perp Z | A = a, U, X $, Assump. (\ref{assumption:proxy_with_X}))}\\
    &= \int \int \int \int \varphi_0(z, x, a)  y \underbrace{p(y, z| A = a, u, x) p(x, u | A = a)}_{p(u,x, y, z | A = a)} d y  d z du dx\\
    &= \int \int \int \varphi_0(z, x, a)  y \underbrace{\int p(u, x, y, z | A = a) d u}_{p( y, z, x | A = a)} d y  d z dx\\
    &= \int \int \int \varphi_0(z, x, a)  y p( y, z, x | A = a) d y  d z dx = \E[Y \varphi_0(Z, X, a) | A = a].
\end{align*}
As a result, we obtained
\begin{align*}
    \E[Y\varphi_0(Z, X, a) | A = a] = \E_{X, U}[ \E[Y | A = a, X, U]],
\end{align*}
which indicates that $f_{\text{ATE}}(a) = \E[Y \varphi_0(Z, X, a) | A = a]$ and finishes the proof.

\end{proof}

\begin{theorem}
    Assume there exists a \emph{bridge function} $\varphi_0(z, x, a, a')$ that satisfies:
\begin{align*}
    \E[\varphi_0(Z, X, a, a') | W, X, A = a] = \frac{p(W, a' | X) p(a)}{p(W, a | X) p(a')}.
\end{align*}
Given the Assumptions (\ref{assumption:proxy_with_X}) and (\ref{assum:AlternativeProxyAssumptionCompleteness_with_X}), the conditional dose-response curve is given by
\begin{align*}
    f_{\text{ATT}}(a, a') = \E[Y \varphi_0(Z, X, a, a') | A = a]
\end{align*}
\end{theorem}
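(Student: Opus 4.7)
The plan is to mirror the proof of Theorem~\ref{thm:AlternativeProxyATEandATTIdentification}-2 (the confounder-free ATT result in S.M.~\ref{sec:IdentificationATT_Appendix}) while carrying the extra observed covariate $X$ through each conditioning set, exactly as was done in the ATE-with-$X$ proof just above. The key observation is that the only places where the structural assumptions enter are (i) the conditional-independence statements in Assumption~\ref{assumption:proxy_with_X}, which need to be applied conditionally on $(U,X)$ or $(U,X,A)$ rather than on $(U)$ or $(U,A)$, and (ii) the completeness in Assumption~\ref{assum:AlternativeProxyAssumptionCompleteness_with_X}, which is likewise phrased conditionally on $X$.

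First I would apply the law of total expectation to introduce $U$ and then use $Z \perp W \mid U,X,A$ from Assumption~\ref{assumption:proxy_with_X} to obtain
\begin{align*}
\E[\varphi_0(Z,X,a,a') \mid W,X,A=a] = \E_{U\mid W,X,A=a}\bigl[\E[\varphi_0(Z,X,a,a')\mid U,X,A=a]\bigr].
\end{align*}
Second, I would derive the density-ratio identity
\begin{align*}
\frac{p(W,a'\mid X)\,p(a)}{p(W,a\mid X)\,p(a')} \;=\; \E\!\left[\frac{p(U,a'\mid X)\,p(a)}{p(U,a\mid X)\,p(a')} \,\Big|\, W,X,A=a\right],
\end{align*}
by the same Bayes-rule manipulation used in Equation~(\ref{eq:AlternativeProxyATTPropositionImportantEquation2}), replacing every unconditional density by its $X$-conditional version and invoking $W \perp A \mid U,X$. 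Combining these with the hypothesis on $\varphi_0$ and the $X$-conditional completeness (Assumption~\ref{assum:AlternativeProxyAssumptionCompleteness_with_X}) yields the key pointwise identity
\begin{align*}
\E[\varphi_0(Z,X,a,a') \mid U,X,A=a] \;=\; \frac{p(U,a'\mid X)\,p(a)}{p(U,a\mid X)\,p(a')} \quad p(U,X)\text{-a.e.}
\end{align*}

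Third, I would expand the target $f_{\mathrm{ATT}}(a,a') = \E_{U,X\mid A=a'}[\E[Y\mid A=a,U,X]]$ as an integral against $p(u,x\mid a')$, multiply and divide by $p(u,a\mid x)p(a')/[p(u,a'\mid x)p(a)]$, substitute the identity above for the resulting density ratio, and rewrite the inner expectation $\E[Y\mid A=a,U,X]$ as $\int y\,p(y\mid a,U,X)\,dy$. The conditional independence $Y \perp Z \mid A,U,X$ (again from Assumption~\ref{assumption:proxy_with_X}) then lets me fuse $p(y\mid a,U,X)\,p(z\mid a,U,X) = p(y,z\mid a,U,X)$, and integrating out $U$ against $p(u\mid a,x)$ collapses everything to $\E[Y\varphi_0(Z,X,a,a')\mid A=a]$.

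I do not anticipate a serious obstacle: the derivation is a line-by-line translation of the $X$-free ATT proof, and the ATE-with-$X$ proof already demonstrates the correct bookkeeping for the observed covariate. The only mildly delicate point is making sure that the completeness assumption is being applied in its correct $X$-conditional form at the step that eliminates the outer $\E_{U\mid W,X,A=a}[\cdot]$; care must be taken that the identity it produces holds $p(U,X)$-almost everywhere (and in particular $p(U\mid X)$-a.e.\ for $p(X)$-a.e.\ $x$), which is exactly what is needed when we later integrate against $p(u,x\mid a')$ in the final step.
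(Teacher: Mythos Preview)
Your proposal is correct and follows essentially the same approach as the paper: the paper's proof also proceeds by (i) introducing $U$ via the tower property and using $Z\perp W\mid U,X,A$, (ii) deriving the $X$-conditional analogue of Equation~(\ref{eq:AlternativeProxyATTPropositionImportantEquation2}) via $W\perp A\mid U,X$ and Bayes' rule, (iii) applying the $X$-conditional completeness to obtain the pointwise identity for $\E[\varphi_0(Z,X,a,a')\mid U,X,A=a]$, and (iv) expanding $f_{\mathrm{ATT}}$, substituting the density ratio, and using $Y\perp Z\mid A,U,X$ to collapse to the desired conditional expectation.
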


\begin{proof}
First, observe the following
\begin{align}
    &\E[\varphi_0(Z, X, a, a') | W, X, A = a] = \E_{U| W, X, A = a}\big[\E[\varphi_0(Z, X, a, a') | U, W, X, A = a] \big] \quad \text{(by Law of Total Expectations)} \nonumber\\
    &= \E_{U| W, X, A = a}\big[\E[\varphi_0(Z, X, a, a') | U, X, A = a] \big] \quad \text{(since $Z \perp W | U, X, A = a$, Assumption (\ref{assumption:proxy_with_X}))}. \label{eq:AlternativeProxyATTPropositionImportantEquation4}
\end{align}

Furthermore, note that
\begin{align*}
    p(w, a' | x) &= \int p(w, a' | u, x) p(u | x) d u = \int p(w | u, x) p(a' | u, x) p(u | x) d u \quad \text{(since $W \perp A | U, X$, Assumption (\ref{assumption:proxy_with_X}))}\\
    &= \int p(w | a, u, x) p(a' | u, x) p(u | x) d u \quad \text{(again due to $W \perp A | U, X$, Assumption (\ref{assumption:proxy_with_X}))}\\
    &= \int\frac{ p(u | w, x, a) p(w | a, x)}{p(u | a, x)} p(a' | u, x) p(u | x) d u \quad \text{(Baye's Rule)}\\
    &= \int\frac{ p(u | w, x, a) p(w , a | x)}{p(u , a | x)} p(u, a' | x)  d u  = p(w , a | x) \int \frac{p(u, a' | x)}{p(u, a | x)} p(u | w, x, a) d u.
\end{align*}

As a result,
\begin{align*}
    \frac{p(w, a' | x)}{p(w , a | x)} = \int \frac{p(u, a' | x)}{p(u, a | x)} p(u | w, x, a) d u.
\end{align*}

Hence
\begin{align}
    \frac{p(w, a' | x) p(a)}{p(w , a | x) p(a')} = \E_{U | W = w, X = x, A = a} \Bigg[ \frac{p(U, a' | x) p(a)}{p(U, a | x) p(a')} \Bigg].
\label{eq:AlternativeProxyATTPropositionImportantEquation5}
\end{align}   

Recall that our assumption was
\begin{align*}
\E[\varphi_0(Z, X, a, a') | W, X, A = a] = \frac{p(W, a' | X) p(a)}{p(W, a | X) p(a)}
\end{align*}
Thus, combining Equation (\ref{eq:AlternativeProxyATTPropositionImportantEquation4}) and Equation (\ref{eq:AlternativeProxyATTPropositionImportantEquation5}) yields
\begin{align*}
    \E_{U | W = w, X, A = a} \Big[ \E[\varphi_0(Z, X, a, a') | U, X, A = a] \Big] = \E_{U | W = w, X, A = a} \Bigg[ \frac{p(U, a' | X) p(a)}{p(U, a | X) p(a')} \Bigg].
\end{align*}

Using the completeness Assumption (\ref{assum:AlternativeProxyAssumptionCompleteness_with_X}), we obtain
\begin{align}
    \E[\varphi_0(Z, X, a, a') | U, X, A = a] = \frac{p(U, a' | X) p(a)}{p(U, a | X) p(a')} \quad \text{almost surely} \label{eq:AlternativeProxyATTPropositionImportantEquation6}
\end{align}

Next, to obtain the ATT function, consider
\begin{align*}
    &f_{\text{ATT}}(a, a') = \E_{U, X | A = a'}[\E[Y | A = a, U, X ]] =\int \E[Y | A = a, u, x ] p(u, x | A = a') d u\\
    &=\int \E[Y | A = a, u, x ] \frac{p(u, x, A = a')}{p(a')} \frac{p(a)}{p(u, x, A = a)} \frac{p(u, x, A = a)}{p(a)} d u\\
    &=\int \E[Y | A = a, u, x ] \frac{p(u, a' | x) p(a)}{p(u, a | x) p(a')}p(u, x | A = a) d u\\
    &= \E_{U, X | A = a} \Big[ \E[Y | A = a, U, X] \E[\varphi_0(Z, X, a, a') | U, X, A = a] \Big] \quad \text{(by Equation (\ref{eq:AlternativeProxyATTPropositionImportantEquation6}))}\\
    &= \E_{U, X | A = a} \Bigg[ \int y p(y| A = a, U, X) d y \int \varphi_0(z, X, a, a') p(z | A = a, U, X) d z
 \Bigg]\\
  &= \E_{U, X | A = a} \Bigg[ \int \int y \varphi_0(z, X, a, a') \underbrace{p(y| A = a, U, X, z)  p(z | A = a, U, X)}_{p(y, z|A = a, U, X)} d y d z
 \Bigg]  \\
 &= \E_{U, X | A = a} \Bigg[ \int \int y \varphi_0(z, X, a, a') p(y, z|A = a, U, X) d y d z
 \Bigg] \quad \text{($Y \perp Z | U, X ,A$, Assumption (\ref{assumption:proxy_with_X}) )}\\
 &= \int \int \int \int y \varphi_0(z, x, a, a') \underbrace{p(y, z|A = a, U = u, X = x) p(u, x | A = a)}_{p(u, x, y, z | A = a)}  d y d z d u d x\\
 &= \int \int \int  y \varphi_0(z, x, a, a') \underbrace{\int p(u, x, y, z | A = a) d u }_{p(y, z, x|A = a)} dx d y d z \\
 &=\int \int y \varphi_0(z, x, a, a') p(y, z, x|A = a) dx d y d z = \E [Y \varphi_0(Z, X, a, a') | A = a].
\end{align*}
Hence, we have shown that $f_{\text{ATT}}(a, a') = \E_{U, X | A = a'}[\E[Y | A = a, U, X ]] = \E [Y \varphi_0(Z, X, a, a') | A = a]$.
\end{proof}

\end{document}